\newcommand*{\newletterthm@internal}{}%Dummy definition to use \renewcommand later
\newcommand*{\newletterthm}[1]{%
  \def\newletterthm@name{#1}%
  \renewcommand*{\newletterthm@internal}[1][]{%
    \def\param@{##1}
    \ifx\param@\empty
    \expandafter\expandafter\expandafter\newtheorem%
    \expandafter\expandafter\expandafter{%
      \expandafter\newletterthm@name%
      \expandafter}%
    \expandafter{%
      \newletterthm@text}%
    \expandafter\renewcommand%
    \expandafter*%
    \expandafter{%
      \csname the#1\endcsname}{\Alph{#1}}%
    \else%
    \expandafter\expandafter\expandafter\newtheorem%
    \expandafter\expandafter\expandafter{%
      \expandafter\newletterthm@name%
      \expandafter}%
    \expandafter{%
      \newletterthm@text}[##1]%
    \expandafter\renewcommand%
    \expandafter*%
    \expandafter{%
      \csname the#1\endcsname}{\csname the##1\endcsname.\Alph{#1}}%
    \fi%
  }%
  \newletterthm@newthm%
}
\newcommand*{\newletterthm@newthm}[2][]{%
  \def\param@{#1}
  \ifx\param@\empty
  \def\newletterthm@text{#2}%
  \newletterthm@internal%
  \else%
  \expandafter\newtheorem\expandafter{\newletterthm@name}[#1]{#2}%
  \fi%
}
\newtheoremstyle{thmstyle}%style name
  {\medskipamount}%space before
  {\smallskipamount}%space after
  {\slshape}%font used
  {0pt}%indentation
  {\bfseries}%modifier theorem head
  {.}%punctuation between theorem head and body
  { }%space after punctuation
  {\thmname{#1}\thmnumber{ #2}{\normalfont\thmnote{ (#3)}}}%theorem specifier
\newtheoremstyle{plainstyle}%style name
  {\medskipamount}%space before
  {\smallskipamount}%space after
  {\rmfamily}%font used
  {0pt}%indentation
  {\bfseries}%modifier theorem head
  {.}%punctuation between theorem head and body
  { }%space after punctuation
  {\thmname{#1}\thmnumber{ #2}{\normalfont\thmnote{ (#3)}}}%theorem specifier
\theoremstyle{thmstyle}
\newtheorem{theorem}{Theorem}[section]
\newtheorem{lemma}[theorem]{Lemma}
\newtheorem{corollary}[theorem]{Corollary}
\newtheorem{proposition}[theorem]{Proposition}
\newtheorem{claim}[theorem]{Claim}
\theoremstyle{plainstyle}
\newtheorem{definition}[theorem]{Definition}
\newtheorem{convention}[theorem]{Convention}
\newtheorem{remark}[theorem]{Remark}
\newtheorem{example}[theorem]{Example}
\newenvironment{proofof}[1]{\begin{proof}[Proof of #1.]}{\end{proof}}
\newenvironment{proofint}[1][Proof intuition.]{\begin{proof}[#1]}{\end{proof}}
\def\refdescformat#1{%
  \phantomsection%
  \let\oldlabel\label%
  \let\label\@gobble%
  \edef\@currentlabel{#1}% Label format (in \ref).
  \let\label\oldlabel%
  #1:% Item format (in description).
}
\newlist{refdesc}{description}{1}
\setlist[refdesc]{format={\refdescformat}}
\newlist{enumdef}{enumerate}{1}
\setlist[enumdef]{before={\leavevmode}, label={\arabic*.}, ref={\thetheorem.\arabic*}}
\setlist[enumerate]{label={\roman*.}, ref={(\roman*)}} % Provides default enumitem arguments for enumerate
\newcommand\notoc@internal[2][]{}%Dummy definition to use \renewcommand later
\newcommand{\notoc}[1]{%Executes the next command (such as \section, \subsection, etc.) without adding it to toc.
  \renewcommand{\notoc@internal}[2][]{%
    \let\old@addtocontents\addtocontents%
    \let\addtocontents\@gobbletwo%
    \def\param@{##1}
    \ifx\param@\empty
    #1{##2}%
    \else
    #1[##1]{##2}%
    \fi
    \let\addtocontents\old@addtocontents%
  }%
  \notoc@internal%
}
\numberwithin{equation}{section} % Changes equation numbering to be within section.
\let\epsilon\varepsilon
\newcommand{\rn}{\bm}
\newcommand{\df}{\stackrel{\text{def}}{=}}
\newcommand{\place}{\mathord{-}}
\newcommand{\comp}{\mathbin{\circ}}
\newcommand{\rest}{\mathord{\vert}}
\newcommand{\floor}[1]{\ensuremath{\left\lfloor#1\right\rfloor}}
\newcommand{\ceil}[1]{\ensuremath{\left\lceil#1\right\rceil}}
\newcommand{\given}[1][]{\mathrel{#1\vert}}
\newcommand{\down}{\mathord{\downarrow}}
\DeclareMathOperator{\im}{im}
\DeclareMathOperator{\id}{id}
\DeclareMathOperator{\rk}{rk}
\DeclareMathOperator{\ag}{ag}
\DeclareMathOperator{\PAC}{PAC}
\DeclareMathOperator{\agPAC}{agPAC}
\DeclareMathOperator{\PACr}{PACr}
\DeclareMathOperator{\agPACr}{agPACr}
\DeclareMathOperator{\UC}{UC}
\DeclareMathOperator{\VC}{VC}
\DeclareMathOperator{\Nat}{Nat}
\DeclareMathOperator{\VCN}{VCN}
\DeclareMathOperator{\DS}{DS}
\DeclareMathOperator{\gVC}{gVC}
\DeclareMathOperator{\dom}{dom}
\newcommand{\kpart}[1][k]{#1\operatorname{-part}}
\newcommand{\kflat}[1][k]{#1\operatorname{-flat}}
\newcommand{\knonlocal}[1][k]{#1\operatorname{-non-local}}
\DeclareMathOperator{\argmin}{argmin}
\DeclareMathOperator{\Bi}{Bi}
\DeclareMathOperator{\ev}{ev}
\DeclareMathOperator{\Var}{Var}
\DeclareMathOperator{\dist}{dist}
\DeclareMathOperator{\bip}{bipartite}
\newcommand{\EE}{\mathbb{E}}
\newcommand{\NN}{\mathbb{N}}
\newcommand{\PP}{\mathbb{P}}
\newcommand{\QQ}{\mathbb{Q}}
\newcommand{\RR}{\mathbb{R}}
\newcommand{\ZZ}{\mathbb{Z}}
\newcommand{\One}{\mathbbm{1}}
\newcommand{\cA}{\mathcal{A}}
\newcommand{\cB}{\mathcal{B}}
\newcommand{\cC}{\mathcal{C}}
\newcommand{\cE}{\mathcal{E}}
\newcommand{\cF}{\mathcal{F}}
\newcommand{\cG}{\mathcal{G}}
\newcommand{\cH}{\mathcal{H}}
\newcommand{\cJ}{\mathcal{J}}
\newcommand{\cL}{\mathcal{L}}
\newcommand{\cM}{\mathcal{M}}
\newcommand{\cN}{\mathcal{N}}
\newcommand{\cP}{\mathcal{P}}
\newcommand{\cT}{\mathcal{T}}
\newcommand{\cU}{\mathcal{U}}
\def\Caratheodory{Carath\'{e}odory}
\def\Erdos{Erd\H{o}s}
\def\Renyi{R\'{e}nyi}
\def\Szemeredi{Szemer\'{e}di}
\title{High-arity PAC learning via exchangeability}
\author{%
  Leonardo N.~Coregliano\thanks{Part of this work carried out while the first author was at
    the Institute for Advanced Study, supported by the National Science Foundation, and by the IAS
    School of Mathematics.} \and%
  Maryanthe Malliaris\thanks{Research partially supported by NSF-BSF 2051825.}%
}
\date{\today}
\begin{document}
\maketitle

\begin{abstract}
  We develop a theory of high-arity PAC learning, which is statistical learning in the presence of
  ``structured correlation''. In this theory, hypotheses are either graphs, hypergraphs or, more
  generally, structures in finite relational languages, and i.i.d.\ sampling is replaced by sampling
  an induced substructure, producing an exchangeable distribution. Our main theorems establish a
  high-arity (agnostic) version of the fundamental theorem of statistical learning.
\end{abstract}

Readers who prefer may start with Section~\ref{sec:exp}, which is a self-contained exposition of the
manuscript.

\notoc\section{Introduction}

Pick a function $F\colon X\to Y$ and a measure $\mu$ over $X$, sample $m$ points
$\rn{x}_1,\ldots,\rn{x}_m$ independently from $\mu$, show me the values
$(\rn{x}_i,F(\rn{x}_i))_{i=1}^m$ and (despite not knowing $\mu$) I will guess your function
$F$. Without being a magician or incredibly lucky, this task is completely impossible. However, the
theory of PAC (probably approximately correct) learning~\cite{Val84} provides a framework in which
the feasibility of this learning task is mathematically interesting (see also the book~\cite{SB14}
for a more thorough introduction to the topic). First, in the original setup of PAC learning, the
adversary is not allowed to play any function, but rather a function within some known family
$\cH\subseteq Y^X$, in which case we refer to \emph{learnability of the family $\cH$}. Second, we
only require our guess to be approximately correct in the sense of having small $L_1(\mu)$-distance
to $F$, and further only require such small distance to happen with high probability over the
revealed sample $\rn{x}_1,\ldots,\rn{x}_m$ (assuming $m$ is large enough). Let us stress that in PAC
learning, it does not suffice to know statistics about the hidden function (which would be deducible
from a simple law of large numbers), e.g., it does not suffice to know that the function $F\colon
X\to\{0,1\}$ assigns value $1$ to about half of the space $X$, but rather we would like to know
which points of $X$ are assigned value $1$.

From a mathematical point of view, the Fundamental Theorem of Statistical Learning, summarized as
Theorem~\ref{thm:FTSL} below, says that the learnability (or not) of a family $\cH$ is a strong
indicator of simplicity or complexity of $\cH$ in the sense that it has independent
characterizations in terms of an appropriate notion of uniform convergence and in terms of a purely
combinatorial dimension. Namely, PAC learnability of $\cH$ is equivalent to finite
\emph{Vapnik--Chervonenkis ($\VC$) dimension}~\cite{VC71,BEH89,VC15} of $\cH$ when $\lvert
Y\rvert=2$, equivalent to finite \emph{Natarajan dimension}~\cite{Nat89} of $\cH$ when $Y$ is finite
and equivalent to finite \emph{Daniely--Shalev-Shwartz ($\DS$) dimension}~\cite{DS14,BCDMY22} in the
general case (see Definition~\ref{def:Natdim} for the first two dimensions; the $\DS$-dimension is
out of the scope of this paper). The appropriate uniform convergence property~\cite{VC15,VC71,Nat89}
(see Definition~\ref{def:UC} below) can be seen as a uniform labeled law of large numbers.

About a decade after Valiant's paper, Haussler~\cite{Hau92} gave a very interesting extension of the
PAC framework (and proved a corresponding theorem) that increases its scope. In Haussler's
extension, called \emph{agnostic PAC learning}, there is a family of functions $\cH$ chosen in
advance, but the adversary is not required to play one of them; rather the adversary picks a
distribution $\nu$ on $X\times Y$, fixed but unknown to us.  In the learning task, the adversary
provides i.i.d.\ samples $(\rn{x}_i,\rn{y}_i)$ directly from $\nu$ (which need not even describe a
partial function) and we must produce some $G\in\cH$ such that
$\EE_{(\rn{x},\rn{y})\sim\nu}[G(\rn{x})\neq\rn{y}]\leq\EE_{(\rn{x},\rn{y})\sim\nu}[H(\rn{x})\neq\rn{y}]
+ \epsilon$ for every $H\in\cH$. Informally, we are choosing an element of $\cH$ which is as close
to $\nu$ as possible. Haussler then extended the equivalence in the Fundamental
Theorem\footnote{This was shown when $Y=\{0,1\}$, but easily extends to the case when $Y$ is
  finite.} to the case of agnostic learnability.

We now informally state the Fundamental Theorem of Statistical Learning.
\begin{theoremLet}[\cite{VC71,BEH89,Nat89,VC15}]\label{thm:FTSL}
  The following are equivalent for a family $\cH\subseteq Y^X$ of functions $X\to Y$ with $Y$
  finite:
  \begin{enumerate}
  \item The Natarajan dimension of $\cH$ is finite (when $\lvert Y\rvert=2$, the more familiar
    statement is that the $\VC$-dimension of $\cH$ is finite).
  \item $\cH$ has the uniform convergence property.
  \item $\cH$ is agnostically PAC learnable.
  \item $\cH$ is PAC learnable.
  \end{enumerate}
\end{theoremLet}
Note that when $\lvert Y\rvert=2$, the more familiar hypothesis is that $\cH$ is a family of subsets
of $X$ (allowing $\lvert Y\rvert > 2$ can be thought of as the natural extension to finite colorings
of $X$).

Looking at Theorem~\ref{thm:FTSL}, there are two directions that the classical theory does not
directly address. The first is how to handle PAC learnability when the sample is not i.i.d. The
second is how to PAC learn more complicated structures such as graphs or hypergraphs (as opposed to
just sets). There has been some work in the first direction over the
years~\cite{HL94,AV95,Gam03,ZZX09,SW10,ZXC12,ZLX12,ZXX14,BGS18,SS23}, mostly by considering
different kinds of Markov processes under reasonable ergodicity assumptions. The second direction
has seen much less activity. In this work, we take up both directions by developing a high-arity
(fully agnostic) PAC learning theory, which allows for learning of more complicated structures and
allows for ``structured'' correlation. Notably, in our setup, correlation is seen as a friend, not a
foe.

To both motivate high-arity PAC and to illustrate why its characterization requires new methods,
consider the problem of learning a graph over a set $X$. By encoding such a graph by its adjacency
matrix $F\colon X\times X\to\{0,1\}$, we could naively apply classic PAC theory, but one quickly
sees that this yields a rather unnatural setup: the adversary picks a measure $\mu$ over $X\times X$
and reveals several pairs of vertices $(\rn{x}_i,\rn{x'}_i)$ drawn i.i.d.\ from $\mu$ along with
whether they are adjacent in $F$ or not: $F(\rn{x}_i, \rn{x'}_i)$. In this naive approach, very few
families of graphs will be learnable, e.g., if the marginals of the measure $\mu$ are atomless, we
cannot even detect whether $F$ has a vertex of degree $2$ or not.

Instead, in our high-arity PAC learning framework, the adversary picks a measure $\mu$ over $X$,
samples $m$ points $\rn{x}_1,\ldots,\rn{x}_m$ independently and reveals the subgraph induced by $F$
on these $m$ points: $\rn{y}\df(F(\rn{x}_i,\rn{x}_j))_{i,j=1}^m$. Note that even though the entries
of $\rn{x}$ are i.i.d., the entries of $\rn{y}$ are potentially correlated when they correspond to
pairs sharing endpoints. This ``structured'' correlation changes the picture in an interesting
way. On one hand, it opens the door for learning a much wider array of classes than in the naive
framework (see Examples~\ref{ex:matching} and~\ref{ex:boundeddegree}) and on the other hand, it
means that methods and techniques of classic PAC which rely on independence do not lift
straightforwardly to high-arity. This is a good place to point out that in analogy to classic PAC, it
does not suffice to know statistics about the hidden graph (which would be deducible from
applications of property testing or \Szemeredi's Regularity Lemma), e.g., it does not suffice to
know that the graph is a clique on about half of the vertices of $X$, but rather we would like to
know exactly which points of $X$ belong to the clique.

Correlation is not the only hurdle of high-arity: already in the case of graphs, it is a challenge
to provide an appropriate definition of agnostic learning. If we aim to produce a high-arity sample
that has structured correlation just as in the non-agnostic case, we cannot simply say that the
adversary should pick a distribution over, say, $X\times X\times\{0,1\}$. Instead, we observe that
in the non-agnostic case, a sample $((\rn{x}_i)_{i=1}^m,(F(\rn{x}_i,\rn{x}_j))_{i,j=1}^m)$ from
$(F,\mu)$ can be seen as a finite marginal of an infinite sample
$((\rn{x}_i)_{i\in\NN_+},(F(\rn{x}_i,\rn{x}_j))_{i,j\in\NN_+})$, which although not i.i.d., has two
key properties: it is \emph{local} in the sense that marginals on disjoint subsets of $\NN_+$ are
independent and it is \emph{exchangeable} in the sense that its distribution is invariant under the
natural action of $S_{\NN_+}$. In this work, we propose that the correct framework for agnostic
high-arity PAC learning is for the adversary to pick a distribution over
$X^{\NN_+}\times\{0,1\}^{\NN_+\times\NN_+}$ that is local and exchangeable and provide us a finite
marginal of it; our main theorems give strong evidence for the correctness of this
definition. Although such a definition seems quite abstract, a normal form for such distributions is
provided by the Aldous--Hoover Theorem from exchangeability theory~\cite{Hoo79,Ald85,Ald81} (see
also~\cite[Theorem~7.22 and Lemma~7.35]{Kal05}).

Before informally stating our main theorems, we make a few comments. First, for simplicity, so far
we only discussed the case of graphs, but our theory is developed for $k$-hypergraphs and even in
the rather more general setting of functions $X^k\to Y$ with $Y$ finite\footnote{This is what a
  logician would call a structure in a finite relational language.}, so we give the informal
statement in this more general language. Second, the reader will notice that in the statement of the
theorem we use the name $k$-PAC learning instead of high-arity learning to emphasize the arity $k$
of the problem. Third, the theorems include a high-arity combinatorial dimension (that we call
$\VCN_k$) whose formal definition is deferred to the main text (for an informal description, see
the expository Section~\ref{subsec:VCNk}). Finally, we have not yet covered an interesting
mathematical phenomenon that does not have an analogue in classic PAC: there is a natural partite
version of high-arity PAC and its interplay with the non-partite is crucial for our theory; we refer
the reader to the expository Section~\ref{subsec:partite}.

Our first main result is:
\begin{theorem}[Informal version of Theorem~\ref{thm:kPAC}]\label{thm:kPACinformal}
  Let $k\geq 1$ and $Y$ be a finite set. The following are equivalent for a family $\cH\subseteq
  Y^{X^k}$ of functions $X^k\to Y$ and its partite version $\cH^{\kpart}$:
  \begin{enumerate}
  \item $\VCN_k(\cH) < \infty$.
  \item $\VCN_k(\cH^{\kpart}) < \infty$.
  \item $\cH^{\kpart}$ has the uniform convergence property.
  \item $\cH$ is agnostically $k$-PAC learnable.
  \item $\cH^{\kpart}$ is partite agnostically $k$-PAC learnable.
  \item $\cH^{\kpart}$ is partite $k$-PAC learnable.
  \end{enumerate}

  Furthermore, any of the items above implies the following:
  \begin{enumerate}[resume]
  \item $\cH$ is $k$-PAC learnable.
  \end{enumerate}
\end{theorem}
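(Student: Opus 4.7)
\begin{proofint}
The plan is to mirror the classical FTSL proof cycle ($\VCN$ finite $\Rightarrow$ UC $\Rightarrow$ agnostic PAC $\Rightarrow$ realizable PAC $\Rightarrow$ $\VCN$ finite) but organized so that the partite version $\cH^{\kpart}$ carries the analytic weight, and the non-partite items (4) and (7) are obtained by reduction from their partite counterparts. Concretely, I will close an equivalence cycle among (1)--(6) and then derive (7) as a trivial consequence of (4).

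First I would establish (1) $\Leftrightarrow$ (2) combinatorially: a configuration shattered by $\cH$ converts into one shattered by $\cH^{\kpart}$ by sending each coordinate to a disjoint copy in its own part, while a configuration shattered by $\cH^{\kpart}$ folds back to one shattered by $\cH$ at the cost of a multiplicative constant in the dimension depending on $k$. The heart of the argument is then (2) $\Rightarrow$ (3): finite $\VCN_k(\cH^{\kpart})$ implies uniform convergence for $\cH^{\kpart}$. Here the partite structure is essential, because the Aldous--Hoover representation of a local exchangeable distribution specializes, in the partite case, to $k$ independent sequences of i.i.d.\ coordinate samples together with a per-tuple source of randomness. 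Conditioning on all but one part's samples reduces the remaining empirical process to a VC-type problem, and an inductive peeling across the $k$ parts combined with a symmetrization argument transfers the classical VC/Rademacher bound to the high-arity setting. I expect this to be the main obstacle: naive symmetrization breaks in the presence of shared-coordinate correlations, and only the partite normal form restores enough independence for the induction to close.

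The remaining steps are more routine. The implication (3) $\Rightarrow$ (5) follows by empirical risk minimization over $\cH^{\kpart}$, and (5) $\Rightarrow$ (6) is immediate since the realizable case is a special case of the agnostic one. To return to (2), I would prove (6) $\Rightarrow$ (2) as a high-arity partite no-free-lunch: if $\VCN_k(\cH^{\kpart})$ is infinite, any shattered sequence supports a locally exchangeable distribution that randomizes over arbitrarily many hypotheses in $\cH^{\kpart}$, so no learner with a bounded sample can distinguish them. The non-partite item (4) enters the cycle via (5) $\Rightarrow$ (4), obtained by slicing a non-partite sample on $m$ points into partite samples (randomly partitioning the $m$ points into $k$ blocks), running the partite learner, and converting the resulting partite excess-risk bound into a non-partite one with a loss of order $1/m$ coming from non-injective tuples; and via (4) $\Rightarrow$ (1), an analogous non-partite no-free-lunch. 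Finally, (4) $\Rightarrow$ (7) is the trivial realizable specialization; item (7) remains only an implication because no obvious reduction recovers a shattered configuration of $\cH^{\kpart}$ from a realizable non-partite learner alone.
\end{proofint}
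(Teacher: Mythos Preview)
Your cycle among the partite items (2)$\Rightarrow$(3)$\Rightarrow$(5)$\Rightarrow$(6)$\Rightarrow$(2) is the same as the paper's, and your (5)$\Rightarrow$(4) by slicing a non-partite sample into $k$ blocks is exactly the paper's Proposition~\ref{prop:kpart}. The peeling/symmetrization idea for (2)$\Rightarrow$(3) matches the partially empirical losses of Lemma~\ref{lem:partemp}. So most of the plan is sound.

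The gap is at (4)$\Rightarrow$(1). You write this off as ``an analogous non-partite no-free-lunch'', but the paper explains in Section~\ref{sec:nonlearnnonpart} why this direct approach fails in general. In the partite no-free-lunch (Proposition~\ref{prop:partkPAC->VCN}), the adversary can put a Dirac mass on a single vertex $v$ in one part and a uniform measure on a large shattered set in another part; the learner only sees edges crossing the partition, so the problem really reduces to classical PAC for $\cH(v)$. In the non-partite setting you only have \emph{one} measure $\mu$, and if $\mu$ is supported on $\{v\}\cup U$ for a large shattered set $U$, the learner also sees the edges \emph{within} $U$, which can leak the neighborhood of $v$ (Example~\ref{ex:distgraphs}: for distance graphs on $\ZZ$, seeing $0,n,n+1,\ldots,2n$ already determines all relevant distances). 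The paper only manages a direct non-partite no-free-lunch for the restricted class of partition families (Proposition~\ref{prop:pf}), via a Ramsey argument that finds a shattered set whose internal edges are uninformative; it does not prove it in general, and this is why the non-agnostic non-partite item (7) is left outside the equivalence.

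The paper instead closes the loop at (4) by the \emph{reverse} partization: it proves (4)$\Rightarrow$(5) (with randomness), which is Proposition~\ref{prop:kpart3}. This is the main technical novelty: a randomized algorithm converts a partite $([m],\ldots,[m])$-sample into a non-partite $[m]$-sample by randomly assigning each position a part (so missing entries are filled with a neutral symbol $\bot$), and one checks that the resulting distribution is the marginal of an honest exchangeable local distribution to which the non-partite agnostic learner can be applied. Flexibility of the loss (Definition~\ref{def:flexible}) is what lets you synthesize $\bot$ when it is not already present. Your proposal does not touch this mechanism, and without it (or a genuine non-partite no-free-lunch, which the paper does not have) item (4) is not shown equivalent to the rest.
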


Our second main result is:
\begin{theorem}[Informal version of Theorem~\ref{thm:kPACkpart}]\label{thm:kPACkpartinformal}
  Let $k\geq 1$ and $Y$ be a finite set. The following are equivalent for a family
  $\cH\subseteq Y^{X_1\times\cdots\times X_k}$ of functions $X_1\times\cdots\times X_k\to Y$:
  \begin{enumerate}
  \item $\VCN_k(\cH) < \infty$.
  \item $\cH$ has the uniform convergence property.
  \item $\cH$ is partite agnostically $k$-PAC learnable.
  \item $\cH$ is partite $k$-PAC learnable.
  \end{enumerate}
\end{theorem}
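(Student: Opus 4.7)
The plan is to close the cycle $(1)\Rightarrow(2)\Rightarrow(3)\Rightarrow(4)\Rightarrow(1)$. The partite setting is conceptually cleaner than the non-partite one because a partite $k$-ary sample is drawn by taking independent i.i.d.\ samples of size $m$ from each of $\mu_1,\ldots,\mu_k$ and forming the full grid, so the only source of correlation among the labels is the sharing of indices within parts; no embedding between partite and non-partite versions is needed, in contrast to Theorem~\ref{thm:kPACinformal}.

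For $(1)\Rightarrow(2)$, I would establish partite uniform convergence by a double-sample symmetrization adapted to the $k$-ary partite setup. Given a sample indexed by $[m]^k$, I introduce an independent ghost sample and bound the deviation $\sup_{H\in\cH}\lvert L_\mu(H)-L_S(H)\rvert$ by a randomized process in which random coordinate-swaps within each part play the role of Rademacher signs. The calculation then reduces to controlling the partite growth function --- the number of distinct labelings that $\cH$ induces on a fixed grid of side $2m$ in each part --- for which one needs a Sauer--Shelah-type polynomial bound in $\VCN_k(\cH)$. Formulating the right notion of partite shattering and proving this polynomial growth bound is the heart of the implication.

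The implication $(2)\Rightarrow(3)$ is then the standard empirical risk minimization argument: choose $H\in\cH$ minimizing empirical error on the sampled grid, and use uniform convergence to transfer near-optimality from the empirical loss to the true loss. The implication $(3)\Rightarrow(4)$ is immediate, since partite $k$-PAC is the realizable specialization of partite agnostic $k$-PAC: when the adversary's distribution is generated by some $H^*\in\cH$ applied to $\mu_1\otimes\cdots\otimes\mu_k$, the best hypothesis in $\cH$ achieves zero error and the agnostic guarantee reduces to the realizable one.

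For $(4)\Rightarrow(1)$, I would prove the contrapositive by a no-free-lunch construction. Assuming $\VCN_k(\cH)=\infty$, for each $N$ I pick a shattered partite configuration of size $N$ in each part and use it to build an adversary that places uniform measure on the $2N$ witnessing points in each part and randomizes labels on the unseen portion of the shattered sub-grid. A standard averaging argument then forces any learner using $m\ll N$ samples to have expected error bounded below by a constant. The main obstacle is that $(4)$ requires the adversary to present a local, exchangeable infinite distribution, not merely a finite hard instance, so the shattered configuration must be extended to a valid distribution on $(X_1\times\cdots\times X_k)^{\NN_+}$ by padding with uniformly random indices drawn from a countable pool, in order to preserve both locality and exchangeability while still activating the shattering labels only with the right probability. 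I expect this construction, together with the partite Sauer--Shelah bound feeding $(1)\Rightarrow(2)$, to be the two most delicate steps.
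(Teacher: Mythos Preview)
Your cycle $(1)\Rightarrow(2)\Rightarrow(3)\Rightarrow(4)\Rightarrow(1)$ matches the paper's exactly, and the two middle implications are as you describe. Both technical steps need refinement, however.

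For $(1)\Rightarrow(2)$, the growth quantity you name---the number of labelings $\cH$ induces on the full $(2m)^k$ grid---is \emph{not} polynomially bounded by $\VCN_k(\cH)$. Already for $k=2$, take $X_1=X_2=\NN$ and $\cH=\{H_A:A\subseteq\NN\}$ with $H_A(i,j)=\One[i=j\land i\in A]$; then $\VCN_2(\cH)=1$ but $\cH$ realizes $2^m$ distinct patterns on $[m]\times[m]$. A one-shot symmetrization over all parts also fails Hoeffding, since terms sharing an index in some part share a swap sign. What the paper does instead (Lemma~\ref{lem:partemp} and Proposition~\ref{prop:VCNdim->UC}) is interpolate one part at a time: define the $p$-partially empirical loss $L^p$ that empirically averages over parts $1,\ldots,p$ and integrates over the rest, and bound $|L^p-L^{p-1}|$ via symmetrization in part $p$ alone. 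For each fixed $\beta\in[m]^{[k]\setminus\{p\}}$ this gives $m$ genuinely independent Rademacher terms, and the union bound is over the \emph{slice} growth function $\tau^k_\cH(2m)=\sup_{A,x}\lvert\{H(x,\cdot)\rest_V:H\in\cH\}\rvert$ (Definition~\ref{def:growth}), which \emph{is} polynomial by the Natarajan version of Sauer--Shelah (Lemma~\ref{lem:VCNk}). Triangle inequality over the $k$ steps finishes.

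For $(4)\Rightarrow(1)$, the adversary is simpler than you anticipate and no padding is needed. Infinite $\VCN_k$-dimension gives, for each $d$, some $A\in\binom{[k]}{k-1}$, a \emph{single fixed point} $z^0$ in the parts indexed by $A$, and a set $\widehat X$ of size $d$ in the one remaining part such that the slice $\cH(z^0)$ shatters $\widehat X$---there is no ``configuration of size $N$ in each part''. The paper (Proposition~\ref{prop:partkPAC->VCN}) takes $\mu$ to be Dirac at $z^0$ on the $k-1$ parts and uniform on $\widehat X$ on the remaining part; any partite learner then collapses to a classical learner for $\cH(z^0)$, and the standard no-free-lunch (Lemma~\ref{lem:nonlearn}) applies directly. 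In the realizable setting of $(4)$ the adversary specifies only $\mu\in\Pr(\Omega)$ and a realizable $F\in\cH$, and sampling from $(\mu,F)$ is automatically local and separately exchangeable---there is no separate ``infinite distribution'' to construct.
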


Finally, we point out that our framework naturally extends beyond functions $X^k\to Y$ to also make
sense of high-arity PAC learning of Aldous--Hoover representations of local exchangeable
distributions and some of our results will extend naturally as well; see the expository
Section~\ref{subsec:highervar}, which discusses the related topic of higher-order variables.

\notoc\subsection{A brief description of the landscape}
\label{subsec:related}

The $\VC$-dimension, which characterizes classical PAC learning in the case of sets, has found some
mathematical meaning in a range of areas. In developing the combinatorial dimension $\VCN_k$, which
characterizes high-arity learning in the present manuscript, we were guided simply by the
constraints of our problem, but it is quite interesting a posteriori that the literature is already
quite rich in high-arity versions of the different phenomena that in the unary are captured by the
$\VC$-dimension and related concepts.

\notoc\subsubsection{Exchangeability}
\label{subsubsec:exch}

We already mentioned that the backbone of our theory relies on the movement to high-arity of
exchangeability theory. The unary version of exchangeability theory is de~Finetti's Theorem,
characterizing local exchangeable distributions of sequences as precisely those that are
i.i.d.\footnote{The more familiar statement is that exchangeable distributions of sequences are
  mixtures of i.i.d.\ sequences.} The high-arity version of this is the Aldous--Hoover
Theorem~\cite{Hoo79,Ald85,Ald81} characterizing exchangeable and separately exchangeable
distributions. As mentioned before, these are fundamental in the definition of agnostic non-partite
and partite high-arity PAC, respectively and the interplay between the non-partite and partite will
be central in our theory. In turn, this interplay unearths an interesting problem in probability
theory of how to appropriately encode a separately exchangeable distribution by an exchangeable one,
for details, see Remark~\ref{rmk:sepexch->exch}.

\notoc\subsubsection{$k$-dependence}

To our knowledge, the first work towards understanding high-arity versions of phenomena related to
the $\VC$-dimension is by Shelah in the form of ``$k$-dependence'' about a decade ago in an
exploratory section~\cite[\S~(H)]{She14}. Shelah was working in the context of model theory,
studying the class of dependent theories (an analogue of $\VC$-dimension). Shelah's $k$-dependence
gives a higher version of dependence, for formulas, and he established some consequences of
$2$-dependence for groups in~\cite{She17}. It is also worth repeating that the theory of what is now
known as $\VC$-dimension was helped by independent, central concerns in Shelah's model theory in the
seventies, see~\cite{Las92}: Shelah originally developed dependent theories and proved both an
infinitary and a finitary counting argument for them, see~\cite[Chapter~II]{She23}; the finitary
part is now generally referred to in combinatorics and computer science as the Sauer--Shelah--Perles
Lemma for finite $\VC$-dimension. An example of why a complete dependent theory is ``structurally
simpler'' than an arbitrary theory is that all indiscernible sequences in one free variable have an
average type, which can be seen as a model theoretic notion of convergence.

\notoc\subsubsection{Hypergraph regularity lemmas}

The classic $\VC$-dimension also plays an important role in combinatorics. For example, to a graph
$G$ on a vertex set $X$, we can associate a (unary) hypothesis class $\cH_G$ on $X$ as the
collection of all subsets of $X$ that are the neighborhood of some vertex of $G$:
\begin{align*}
  \cH_G & \df \{N_G(x) \mid x\in X\}.
\end{align*}
The $\VC$-dimension of $G$ is then defined as $\VC(G)\df\VC(\cH_G)$ (note the drop in arity: graphs,
which are binary, lead to families of neighborhoods, which are unary). A similar movement from a
single model to a family of sets is performed when defining dependence in model theory.

Under these transformations, classes of graphs (or more generally models) with a bound on the
corresponding $\VC$ dimension are structurally simpler than general classes. For example, for
bounded $\VC$-dimension classes of graphs, a stronger version of \Szemeredi's Regularity Lemma
holds~\cite{AFN07,LS10,FPS19} in which all regular pairs have density close to $0$ or $1$ and where
the number of parts is only polynomial in $\epsilon^{-1}$.

Regarding extensions of $\VC$ to the setting of hypergraphs, we already mentioned~\cite{She14}. More
recently, a series of combinatorics papers informed by model theory extend these notions to capture
the subtlety that is present in regularity lemmas for $n$-ary hypergraphs, $n\geq 3$.  These relate
to our present $\VCN_k$ as follows: the dimension notion obtained by applying our $\VCN_k$-dimension
to neighborhoods is known as $\VC_1$-dimension in~\cite{CT20}, weak $\VC$-dimension in~\cite{TW22}
and slicewise $\VC$-dimension in~\cite{Ter24} and is responsible for yielding stronger versions of
the Hypergraph Regularity Lemma. However, as previously mentioned, these are not sufficient for the
PAC learning task.

\notoc\subsubsection{Graph-based discriminators}

A similar combinatorial dimension has also appeared in~\cite{LM19a,LM19b} in the study of
(hyper)graph-based discriminators, which considers the problem of using samples to distinguish
between distributions. Their dimension is called ``graph $\VC$-dimension'' ($\gVC$), which can be
seen as our $\VCN_k$ specialized to the case of symmetric hypergraphs, and their work includes a
uniform convergence statement appropriate to their dimension. However, we cannot quote their result
out of the box, nor does it cover much of our paper as uniform convergence is only one of the more
straightforward building blocks of high-arity PAC theory.

\notoc\subsubsection{Property testing}
\label{subsubsec:proptest}

A framework related to $2$-PAC learning of families of graphs is that of graph and graphon property
testing (see~\cite{AS06} for a survey on the graph case and~\cite[Chapter~15.3]{Lov12} for the
graphon case), which can be described very informally as probably approximately learning the graph
from a large enough sample, but only up to (approximate) isomorphism. Under this interpretation,
property testing theory says that every hypothesis class is learnable up to approximate isomorphism.

On the other hand, high-arity PAC learning is concerned with learnability \emph{not} up to
isomorphism, e.g., it is not enough to know that the graph approximately looks like a clique on half
of its vertices, we want to know which vertices are in the clique. Naturally, this harder task is
not possible for every hypothesis class.

\notoc\subsection{Our contributions}

To better appreciate what is required to obtain all implications of Theorems~\ref{thm:kPACinformal}
and~\ref{thm:kPACkpartinformal}, we summarize below what we believe to be our main contributions in
this work:
\begin{enumerate}[wide, label={\arabic*.}]
\item Framing the question of high-arity learning, and the long-deferred question of learning under
  structured correlation, in a way that allowed for an interesting theory to be developed (and
  developing what we believe to be such a theory).

  We are encouraged by the fact that the dimension we find specializes to ones studied in several
  different areas of mathematics, albeit currently much less than $\VC$-dimension. It is reasonable
  to hope that high-arity learning would correspond also to a combinatorial class with wide
  applicability, as was the case with $\VC$ classes from the classical theorem.

\item Suggesting the right definition of high-arity agnostic (and proving theorems that support the
  definition). This question was probably not posed earlier because it requires the right framework
  for high-arity learning to make sense.

\item With the important exception of ``agnostic'', the items already present in
  Theorem~\ref{thm:FTSL} are not the difficult ones to generalize: the difficulty is rather that to
  obtain the equivalence in our main theorems, one needs to build a theory having no analogue in the
  classical case (as one would indeed hope if the word ``generalization'' is to carry its
  weight). This will be a main topic of Section~\ref{sec:exp}.

  Rather, the combinatorial core of the challenge in our main theorems has to do with the
  interaction of partite and non-partite, for which we can find no analogue in the mathematical or
  computer science literature, beyond perhaps the connection to exchangeability and separate
  exchangeability mentioned in Section~\ref{subsubsec:exch}, or the very rough model theoretic idea
  of the difference between the random graph and the independence property.
\end{enumerate}

\subsection*{Acknowledgments}

The authors would like to thank Alexander Razborov, Avi Wigderson and Shay Moran for insightful
conversations about an initial version of the manuscript. We would also like to thank Al Baraa Abd
Aldaim for finding an imprecision in the proof of Lemma~\ref{lem:flexibility} of an earlier version
of the paper that lead to the current more general definition of flexibility (see
Footnote~\ref{ftn:flexibility}).

\bigskip
%\clearpage

Below is the structure of the paper; a more detailed overview can be found in
Section~\ref{subsec:org}. Figure~\ref{fig:roadmap} contains a pictorial road map for the different
implications proved in this article.

\tableofcontents

\begingroup
\def\bigbend{45}
\def\smallbend{25}
\def\bendshift{10}
\begin{landscape}
  \vfill
  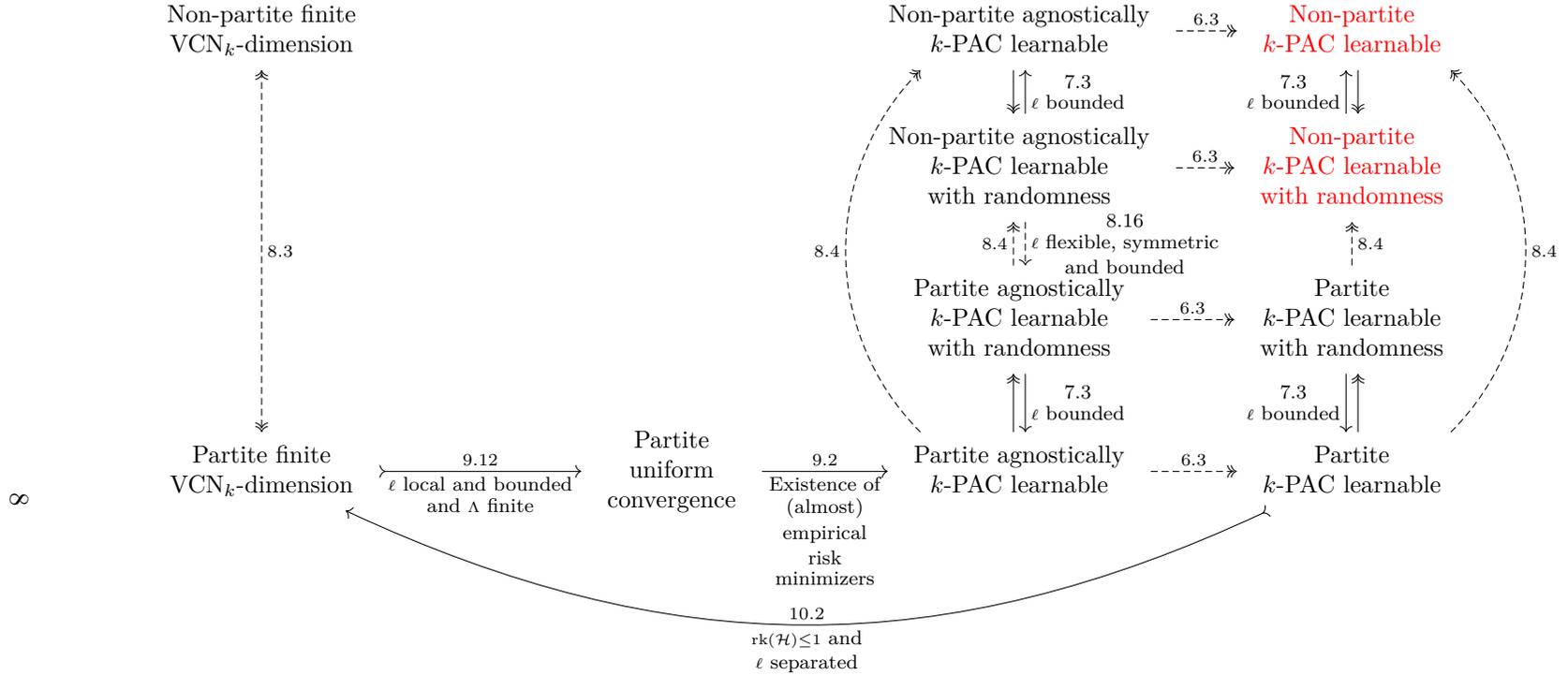
\begin{figure}[p]
    \centering
    \begin{small}
      \begin{tikzcd}
        \begin{tabular}{c}
          Non-partite finite\\
          $\VCN_k$-dimension
        \end{tabular}
        \arrow[ddd, twoheadleftarrow, two heads, dashed, "\ref{prop:kpartVCN}"]
        &[2.0cm] &[0.6cm] 
        \begin{tabular}{c}
          Non-partite agnostically\\
          $k$-PAC learnable
        \end{tabular}
        \arrow[r, two heads, dashed, "\ref{prop:agPAC->PAC}"]
        \arrow[d, two heads, shift right]
        &
              {
                \color{red}
                \begin{tabular}{c}
                  Non-partite\\
                  $k$-PAC learnable
                \end{tabular}
              }
              \arrow[d, two heads, shift left]
              \\
              & &
              \begin{tabular}{c}
                Non-partite agnostically\\
                $k$-PAC learnable\\
                with randomness
              \end{tabular}
              \arrow[r, two heads, dashed, "\ref{prop:agPAC->PAC}"]
              \arrow[u, "{\scriptsize\shortstack{\ref{prop:derand}\\$\ell$ bounded}}"', shift right]
              \arrow[d, dashed,
                "{\scriptsize\shortstack{\ref{prop:kpart3}\\$\ell$ flexible, symmetric\\ and bounded}}", shift left]
              &
                    {\color{red}
                      \begin{tabular}{c}
                        Non-partite\\
                        $k$-PAC learnable\\
                        with randomness
                      \end{tabular}
                    }
                    \arrow[u, "{\scriptsize\shortstack{\ref{prop:derand}\\$\ell$ bounded}}", shift left]
                    \\
                    & &
                    \begin{tabular}{c}
                      Partite agnostically\\
                      $k$-PAC learnable\\
                      with randomness
                    \end{tabular}
                    \arrow[r, two heads, dashed, "\ref{prop:agPAC->PAC}"]
                    \arrow[d, "{\scriptsize\shortstack{\ref{prop:derand}\\$\ell$ bounded}}", shift left]
                    \arrow[u, two heads, dashed, "\ref{prop:kpart}", shift left]
                    &
                    \begin{tabular}{c}
                      Partite\\
                      $k$-PAC learnable\\
                      with randomness
                    \end{tabular}
                    \arrow[d, "{\scriptsize\shortstack{\ref{prop:derand}\\$\ell$ bounded}}"', shift right]
                    \arrow[u, two heads, dashed, "\ref{prop:kpart}"']
                    \\
                    \begin{tabular}{c}
                      Partite finite\\
                      $\VCN_k$-dimension
                    \end{tabular}
                    \arrow[r, tail, "\ref{prop:VCNdim->UC}",
                      "{\scriptsize\shortstack{$\ell$ local and bounded\\and $\Lambda$ finite}}"']
                    &
                    \begin{tabular}{c}
                      Partite\\
                      uniform\\
                      convergence
                    \end{tabular}
                    \arrow[r, "\ref{prop:partUC->partagPAC}",
                      "{\scriptsize\shortstack{Existence of\\(almost)\\empirical\\risk\\minimizers}}"']
                    & 
                    \begin{tabular}{c}
                      Partite agnostically\\
                      $k$-PAC learnable
                    \end{tabular}
                    \arrow[r, two heads, dashed, "\ref{prop:agPAC->PAC}"]
                    \arrow[u, two heads, shift left]
                    \arrow[uuu, two heads, dashed, "\ref{prop:kpart}",
                      bend left={\bigbend}, shift left={\bendshift}]
                    &
                    \begin{tabular}{c}
                      Partite\\
                      $k$-PAC learnable
                    \end{tabular}
                    \arrow[u, two heads, shift right]
                    \arrow[uuu, two heads, dashed, "\ref{prop:kpart}"',
                      bend right={\bigbend}, shift right={\bendshift}]
                    \arrow[lll, "\ref{prop:partkPAC->VCN}"', tail,
                      "{\scriptsize \shortstack{$\rk(\cH)\leq 1$ and\\$\ell$ separated}}",
                      bend left={\smallbend}]
      \end{tikzcd}
      \captionof{figure}{Diagram of results proved in this document. Labels on arrows contain the
        number of the proposition that contains the proof of the implication and extra hypotheses
        needed. Unlabeled arrows are obvious implications. Arrows with two heads
        ($\twoheadrightarrow$) are tight in some sense with an obvious proof of tightness. Dashed
        arrows involve a construction (meaning that either the hypothesis class changes and/or the
        loss function changes) due to being in different settings; this also means that objects in one
        of the sides of the implication might not be completely general (as they are required to be in
        the image of the construction). Arrows with tails ($\rightarrowtail$) mean that exactly one of
        the sides involves a loss function (so when composing a solid arrow with a tailed arrow, the
        result might involve a construction that changes the loss function and thus be a dashed
        arrow). Under appropriate hypotheses, all items are proved equivalent except for the two top
        items on the rightmost column (in {\color{red}red}): non-partite $k$-learnability and
        non-partite $k$-learnability with randomness (see Section~\ref{sec:equivthms}).}
      \label{fig:roadmap}
    \end{small}
  \end{figure}
  \vfill
\end{landscape}
\endgroup

%% Local Variables:
%% mode: latex
%% End:

\section{An exposition for the case of graphs}
\label{sec:exp}

The main objective of this expository section is to be a self-contained high-level view of the
central definitions, theorems and methods of the paper in the comparatively simple case of
graphs. As such, some overlap and repetition of topics of the introduction and future sections is
inevitable. (The reader who prefers to start with the main proofs is free to skip this section and
to return as needed.) Let us first review the motivation.

\medskip

Probably approximately correct (PAC) learning deals with the question of when we may have a good
chance of identifying one subset of $X$, or something close to it, from among a family $\cH$ of
subsets given in advance, based on information received from a small i.i.d.\ sample. This is
obviously extremely useful.

However, there are many classes of structures that might intuitively seem tractable, but are not
learnable in the PAC sense. Part of the reason, as we shall see, is that higher arity information
isn't really visible to i.i.d.\ sampling.

In order to statistically learn higher-arity structures (such as graphs, hypergraphs, or more
generally models in a finite relational language) within a family of similar structures, it is very
natural to ask for a weakening of i.i.d.\ sampling that reveals finite induced substructures. This
would extend learnability, and all that it entails (covering, approximation, etc.) to a different
class of phenomena; but it requires rebuilding the superstructure to allow for all aspects of the
Fundamental Theorem of Statistical Learning to be recovered in this setting.

We shall develop a framework in which this much more informative sampling is available but a uniform
convergence law can still be proved, and establish a combinatorial characterization of what we call
``high-arity'' agnostic learnability in both the partite and non-partite settings. Even though the
new fundamental theorem has analogies to the classical one, it turns out that handling high-arity
statistical learning is a problem at a different scale, and it reveals phenomena of independent
interest, such as the role of higher-order variables, the full generality of the agnostic adversary
and the interplay between partite and non-partite objects.

As the title of the paper suggests, we will also draw on a certain understanding from
exchangeability, and from model theory, whose relation in our previous papers lays some foundations
for the present work.

\smallskip

\emph{Note:} Since everything is done in full detail in later sections, the discussion here is less
formal, and there is of necessity some lingering over notation. We will have to say what we mean by
hypothesis class, by sampling, by loss function, by learning, crucially by \emph{agnostic} learning,
and what is the right notion of combinatorial dimension. We use more generality than is needed just
for graphs in order to smooth the transition to the main text for the reader who begins here.

\notoc\subsection{Description of a problem}

Let us start with a very simple example showing why there is something to do: classical PAC learning
can fail on very simple tasks the moment the complexity of the structures involved goes from arity
one (subsets) to arity two (say, graphs).

Suppose we have a family of graphs $\cG\df\{G_i\mid i\in I\}$ all on the same vertex set $X_1$, so
each $G_i = (X_1, E_i)$. To make the graph problem fit classical PAC, we may consider each $G_i$ as
the ``unary'' hypothesis $U_i$ by setting $X\df X_1\times X_1$ and $U_i\df E_i$, the set of ordered
pairs that are edges.\footnote{Since each $G_i$ is a graph, $(a,b)\in E_i$ if and only if $(b,a)\in E_i$.}

Let $\cU\df\{U_i\mid i\in I\}$. Attempting to learn $\cU$ according to classical PAC, we receive
elements sampled i.i.d.\ from $X$ with the adversary's label, that is, we receive the data of
various pairs $(a,b)$ along with whether or not there is an edge from $a$ to $b$. (These different
pairs likely have disjoint endpoints, so indeed, from the sample we may not even see if the edge is
symmetric.) Per classical PAC, $\cU$ is learnable if and only if $\cU$ has finite
$\VC$-dimension. However, for a set of size $n$ to be shattered in $\cU$, it suffices that there be
distinct $a_1,b_1,a_2,b_2,\ldots,a_n,b_n$ from $X_1$ so that for each $A\subseteq\{1,\ldots,n\}$,
for some $i=i(A)$, for each $\ell\in\{1,\ldots,n\}$, $(a_\ell, b_\ell)\in E_i$ if and only if
$\ell\in A$. Thus arbitrary ``unary'' shattering can arise in some $\cU$ built from a very simple
family of graphs, call it $\cG_*$ for future reference, where the degree of each vertex is at most
one.\footnote{$\cG_*$ may even have degree one \emph{across the family}, meaning that for each
  vertex $v$, as the neighborhood $N(v)\subseteq\NN$ varies as the graphs in $\cG_*$ vary, it is
  only ever $\{w(v)\}$ or $\emptyset$.} Whatever our notion of complexity for families of graphs, we
might not feel that degree one should meet it.

Now, if we were not restricted to i.i.d.\ sampling, a very natural idea (the idea of the present
paper) would be to approach the task of learning for $\cG$ by sampling vertices $x_1,\ldots,x_n$
from $X_1$ and receiving the information about the adversary's \emph{induced subgraph} on these
points. The notation we introduce next will allow us to do this in a quite robust way.\footnote{It
 will follow from our first main theorem (specialized to graphs) that under this more powerful
 sampling and the corresponding new notion of high-arity PAC learning given below, the $\cG_*$
 mentioned above is learnable in the new sense.}

\smallskip

This example is just one indication that there is something to do. As logicians know from
diagonalization arguments, noticing that one case was not covered may not adequately represent the
size of the gap.

\notoc\subsection{The learning we would like to do}

Now we set up how we \emph{would} like to learn, and the sorts of classes whose learnability we
would like to characterize.

The theory of \emph{high-arity statistical learning} set out in this paper extends the framework of
PAC learning, both its definitions and its main theorems, from learning subsets of a set to learning
much richer structures over a set. Informally, we shall sample a set of points and receive all
information about the structure induced on those points. This is a much more informative notion of
sampling (which requires rebuilding the probabilistic framework supporting the learning). A priori
this is quite beyond classical PAC, as it is far from independent; and a posteriori our main
theorems confirm it.

Here are some classes of ``more general hypotheses'' we might wish to learn:

\begin{itemize}
\item We are given a set $V$, and a family $\cG$ of graphs on the vertex set $V$, meaning that each
  $G_i = (V_i, E_i)$, where $V_i = V$.
\item Instead of a family of graphs, where the edge relation is symmetric, suppose we are given a
  set $X$, and a family $\cT$ of tournaments on the set $X$.
\item In greater generality, suppose we are given a set $X$, not necessarily finite, and a family
  $\cM$ of models each with domain $X$, in an arbitrary finite relational language.
\end{itemize}

In applications, it may be even more natural to consider partite structures, in which there are one
or more sets, which are formally distinct, and predicates may hold between elements of different
sets, for instance:
\begin{itemize}
\item We are given sets $V_1,V_2,V_3$, and a family of tripartite structures (i.e., all on the
  same vertex set with this same tripartition), where each element of the family has some instances
  of a ternary relation $R\subseteq V_1\times V_2\times V_3$.
\end{itemize}
Some venerable learning problems, such as trying to predict which books or movies a person will
like, are more naturally modeled in a partite way. The paper will also handle such cases, for
arbitrary finite partitions and arbitrary finite relational languages of the appropriate arity. In
fact the partite setting will be central to the paper, and as we will see, there is an interplay
between the non-partite and partite cases in the proofs.

The classes of structures, partite and non-partite, in finite relational languages we have described
so far are the \emph{rank at most $1$ classes} for which our main theorems hold.

An interesting, rather subtle point is that our framework allows for the definition of classes that
go beyond graphs, structures, and partite structures in allowing for higher-order variables (note:
as distinct from higher-arity predicates). Our main theorems are for rank at most one classes, those
without higher-order variables. However, certain key results do go through in greater generality,
which suggests directions for future work. Also notable is that our definition of agnostic learning
always allows for the \emph{adversary} to play higher-order variables even in the rank at most one
setting of our main theorems. \emph{See Section~\ref{subsec:highervar} below}.

Let us look at the main definitions, focusing on the case of graphs.

\notoc\subsection{The definition of hypothesis}
\label{subsec:hypothesis}

We now introduce a more probabilistic language for describing hypotheses, and for maintaining finer
control of sampling. The scope encompassed by this notation may indeed seem excessive for graphs,
but will be justified by the characterizations in the main text.

Let $\NN_+\df\NN\setminus\{0\}$. We first associate a (standard) Borel space to each nonzero natural
number, and a measure to each space. A priori, the spaces need not have anything to do with each
other.\footnote{We will equip each space with a measure, and will deal with random variables indexed
  by nonempty finite subsets $A$ of some background set $V$, with, say, $\rn{x}_A$ drawn from
  $X_{\lvert A\rvert}$ via $\mu_{\lvert A\rvert}$. Similar notation has a history in limit theory
  and exchangeability, but is generalized and updated here to fit our context, as will be explained
  in the main text.}

\begin{definition}
  Call ${\Omega} = (\Omega_i)_{i\in\NN_+}$ a \emph{Borel template} if each $\Omega_i = (X_i,\cB_i)$
  is a (standard) Borel space; here $X_i$ is a set and $\cB_i$ is a $\sigma$-algebra on $X_i$. Call
  ${\mu} = (\mu_i)_{i\in\NN_+}$ a \emph{probability template} on ${\Omega}$ when for each $i\in\NN_+$,
  $\mu_i$ is a probability measure on $\Omega_i$.
\end{definition}

For the case of graphs, we can simply take $\Omega_1$ to be the interesting space (we will think of
$X_1$ as being the vertex set for our family of graphs), and we can take $X_i$ for $i\geq 2$ to be
trivial: say, the singleton set $\{e\}$.

\begin{convention}
  For the purposes of this section, let $X_1$ be whatever set is desired as the vertex set for the
  family of graphs, and let $X_i\df\{e\}$ for all $i\geq 2$.
\end{convention}

Next we introduce notation that will be used for sampling from these spaces. Let $V$ be any finite
or countable set, not necessarily related to $\NN_+$. We will heavily use the
notation:\footnote{Conveniently, any element of $r(V)$ has a size in $\NN_+$.}
\begin{align*}
  r(V) & \df \{A\subseteq V\mid A\text{ is finite and nonempty}\}.
\end{align*}
We will also use the shorthand ``$r(k)$'' in the case $V=[k]\df\{1,\ldots,k\}$.

In full generality, given a finite or countable $V$ and a Borel template ${\Omega}$, define
\begin{align*}
  \cE_V(\Omega) & \df \prod_{A\in r(V)} X_{\lvert A\rvert}
\end{align*}
equipped with the corresponding product $\sigma$-algebra. A probability template $\mu$ on $\Omega$
naturally gives a product measure $\mu^V$ on $\cE_V({\Omega})$ defined by $\bigotimes_{A\in
  r(V)}\mu_{\lvert A\rvert}$. We will use $\cE_k(\Omega)$ and $\mu^k$ as shorthands in the case
$V=[k]\df\{1,\ldots,k\}$.

In the case of graphs, $k = 2$, so
\begin{align*}
  \cE_2(\Omega)
  & \df
  \cE_{\{1,2\}}(\Omega)
  \df
  X_{\lvert\{1\}\rvert}\times X_{\lvert\{2\}\rvert}\times X_{\lvert\{1,2\}\rvert}
  =
  X_1\times X_1\times X_2
\end{align*}
equipped with the corresponding product $\sigma$-algebra. (It will be important that in the product
the copies of each $X_n$ are indexed by subsets of size $n$.) If we have a probability template $\mu$ on
$\Omega$, we have the natural product measure on $\cE_2(\Omega)$, namely
\begin{align*}
  \mu^2 & \df \mu_1\otimes\mu_1\otimes\mu_2
\end{align*}
or when we want to remember the indexing,
$\mu_{\lvert\{1\}\rvert}\otimes\mu_{\lvert\{2\}\rvert}\otimes\mu_{\lvert\{1,2\}\rvert}$. In the case
of graphs, having stipulated that $X_2$ is trivial, we can think about $\cE_2(\Omega) = X_1\times
X_1\times\{e\}$ as ``really'' being $X_1\times X_1$ with the measure given by $\mu_1
\otimes\mu_1$. But we will continue to write the vestigial coordinate to make a bridge to the full
text.\footnote{Readers curious about this coordinate may look ahead to
  Section~\ref{subsec:highervar}.}

\begin{convention}
  In this section, we let $\Lambda$ denote the Borel space $(Y,\cB')$, where $Y\df\{0,1\}$ is
  equipped with the discrete $\sigma$-algebra.
\end{convention}

We will think of $\Lambda$ as denoting true or false, yes or no.

\begin{definition}
  A \emph{hypothesis} in the case of graphs is a measurable function
  $H\colon\cE_2(\Omega)\to\Lambda$.
\end{definition}

A hypothesis takes in an element $(u,v,e)$ of $X_1\times X_1\times\{e\}$ and outputs $1$ or $0$:
this amounts to saying it takes in an ordered pair of elements of $X_1$ and tells us whether or not
there should be an edge between them. Notice that, analogously to the interpretation function in
model theory, $H$ tells us all the ``directed'' edges and to figure out if it is describing a
symmetric binary relation we will need presently to reorganize this information (say, to compare
what $H$ outputs on $(u,v,e)$ and on $(v,u,e)$). As an aside, this definition of hypothesis entails
that even if we are trying to learn a family of graphs, in the improper case\footnote{i.e., when the
  algorithm output is not required to be an element of the hypothesis class.} there is a priori no
reason why the hypothesis we output must be a graph (with symmetric edges).

\begin{definition}
  Let $\cF_2(\Omega,\Lambda)$ denote the measurable functions from $\cE_2(\Omega)$ to $\Lambda$, so
  we can write briefly that a hypothesis class is $\cH\subseteq\cF_2(\Omega,\Lambda)$.
\end{definition}

Some notation will be very helpful in what follows.\footnote{What will we need to describe the
  structure on a $V$-indexed sample of vertices?} Let $(V)_2$ be the set of all injective functions
$\alpha\colon[2]\to V$ (recall $[k]\df\{1,\ldots,k\}$). Recall that $\cE_m(\Omega)$ is a product
having $m$ factors of $X_1$, each indexed by some $\{i\}$; $\binom{m}{2}$ factors of $X_2$, each
indexed by some $\{i,j\}$; and so on up to one factor of $X_m$ indexed by $[m]$ (and in the case of
graphs we have set the $X_i$'s for $i\geq 2$ all equal to $\{e\}$). Observe that any injection
$\alpha\colon[2]\to[m]$ gives a natural map $\alpha^*\colon\cE_m(\Omega)\to\cE_2(\Omega)$ defined as
follows: if $\alpha(1) = i$, $\alpha(2)=j\neq i$, our $\alpha^*$ takes in an element of
$\cE_m(\Omega)$ and returns the triple in $\cE_2(\Omega)$ consisting of the coordinate that came
from $X_{\lvert\{i\}\rvert}$, the coordinate that came from $X_{\lvert\{j\}\rvert}$, and the
coordinate that came from $X_{\lvert\{i,j\}\rvert}$. For us, the third coordinate is invariably $e$.

Given a hypothesis $H\colon\cE_2(\Omega)\to\Lambda$ describing a graph on $X_1$, and a sequence
$(a_1,\ldots,a_m)$ of elements of $X_1$, to describe the induced subgraph on these $m$ vertices it
suffices to know what $H$ does on every $(a_i,a_j,e)$. The following slightly more general notation
will be our way to produce the induced subgraph. Define
\begin{equation*}
  \begin{array}{rrcl}
    H^*_m\colon & \cE_m(\Omega) & \longrightarrow & \Lambda^{(V)_2}\\
    & x & \longmapsto & (H(\alpha^*(x)))_{\alpha\in([m])_2}.
  \end{array}
\end{equation*}
In other words: for each injection $\alpha\colon[2]\to[m]$, and each element $(a_1,\ldots,a_m)$ of
the product, evaluate the hypothesis $H$ on $(a_{\alpha(1)},a_{\alpha(2)},e)$ to obtain an element
of $\Lambda$ (``edge or no edge''). The output of $H^*_m$ on $(a_1,\ldots,a_m)$ is the sequence of
these decisions about edges. For reference we summarize:
\begin{quotation}
  \noindent $H^*_m(a_1,\ldots,a_m)$ returns the (possibly directed) graph on $(a_1,\ldots,a_m)$
induced by $H$.
\end{quotation}
In model theoretic language, $H^*_m$ collects from $H$ all data about the labeled quantifier-free
diagram on $\{a_1,\ldots,a_m\}$, i.e., supposing $a_1,\ldots,a_m$ are named by constants. Soon,
when we remember the measures on $X_1$ and can obtain $(a_1,\ldots,a_m)$ via sampling then this
notation will give us a way of referring to the graph induced on the sample by a given hypothesis.

Also, conveniently, when $m=2$, our $H^*_2$ takes in some $(a,b,e)$ and tells us whether $H$ puts an
edge from $a$ to $b$ \emph{and} whether from $b$ to $a$. We may save ourselves some notation here
(or whenever $k=m$) by describing the range of $H^*_2$ as $\Lambda^{S_2}$, where $S_2$ is the
symmetric group on two elements, rather than using the language of injections. Notice there are also
natural actions of the symmetric group with respect to which $H^*_2$ is $S_2$-equivariant; this
explains how $H^*_2(a,b,e)$ and $H^*_2(b,a,e)$ present the same information: if, say
$H^*_2(a,b,e)=(0,1)$, meaning that $H$ says non-edge from $a$ to $b$ and edge from $b$ to $a$, then
$H^*_2(b,a,e)=(1,0)$, i.e., $H$ says edge from $b$ to $a$ and non-edge from $a$ to $b$.

\notoc\subsection{The basic definitions of loss}

There are at least three crucial definitions in learning. First, what it means to sample. Second,
how we quantify loss. And third, what the adversary is allowed to do. (This will be the subject of
Sections~\ref{subsec:agnostic} and~\ref{subsec:agnosticdef} on agnostic learning, below.)

The job of a basic, as opposed to agnostic, $k$-ary loss function\footnote{The agnostic case will be
  sufficiently distinguished in our present setting to justify, in our view, calling the
  non-agnostic case ``basic'' when we want to single it out.} $\ell$ is to quantify how different
are two hypotheses (two graphs) on a given finite sequence of vertices. For graphs, since $k=2$, the
unit of information is how different are two hypotheses $H$, $F$ on a single
$(a,b,e)\in\cE_2(\Omega)$: and for this it suffices to compare the information returned by
$H^*_2(a,b,e)$ and $F^*_2(a,b,e)$. When $k=2$, then, we require simply that:
\begin{definition}
  A basic \emph{loss function} is a measurable function
  \begin{align*}
    \ell\colon\cE_2(\Omega)\times\Lambda^{S_2}\times\Lambda^{S_2}\to\RR_{\geq 0}.
  \end{align*}
\end{definition}
Writing the loss this way, rather than, say, as a function that takes in $H$, $F$, and $(a,b,e)$
and makes the analogous computations, emphasizes that the \emph{only} information it uses from $H$
and $F$ are these sequences of values. For graphs:

\begin{definition}
  Given further a probability template $\mu$, thus a way of sampling $(\rn{a},\rn{b},\rn{e})$ via
  $\mu_1\otimes\mu_1\otimes\mu_2$, the \emph{total loss} $L_{\mu,F,\ell}(H)$ is the expected value
  of $\ell((\rn{a},\rn{b},\rn{e}), H^*_2((\rn{a},\rn{b},\rn{e})), F^*_2(\rn{a},\rn{b},\rn{e}))$.
\end{definition}

Informally, we are \emph{sampling} a pair of vertices $a,b$ and asking both $H$ and $F$ whether they
put an edge each way, then calculating the penalty for the difference.

A choice of loss function will be an input to various lemmas and theorems. Some desirable properties
of such functions are discussed in the text:
\begin{enumerate}
\item Observe that a basic loss function can receive the same information in multiple ways: per the
  last paragraph of Section~\ref{subsec:hypothesis}, asking the loss function to compare
  $H^*_2(a,b,e)$, $F^*_2(a,b,e)$, or asking it to compare $H^*_2(b,a,e)$, $F^*_2(b,a,e)$ is
  essentially the same question. A priori the loss function need not give the same answer. Those
  that do are called \emph{symmetric}.\footnote{In particular, symmetric is \emph{not} about whether
    $\ell(x,r,s)=\ell(x,s,r)$, but rather whether $\ell$ is $S_2$-invariant with respect to the diagonal
    action.}
\item The loss is \emph{separated} if it evaluates to $0$ when the second and third inputs (in
  $\Lambda^{S_2}$) are equal, and otherwise is bounded away from $0$, i.e., we incur zero penalty if
  we guessed correctly and incur penalty bounded away from $0$ when we guessed incorrectly.
\item The loss is \emph{bounded} if it has a uniform finite upper bound.
\end{enumerate}

A (proper) learning algorithm for $\cH$ is a measurable function
\begin{align*}
  \cA\colon\bigcup_{m\in\NN} (\cE_m(\Omega)\times\Lambda^{([m])_k})\to\cH
\end{align*}
meaning informally in our case that it receives the data of an induced subgraph on some finite
sequence of vertices and outputs some hypothesis in the class. Fix a binary loss function
$\ell\colon\cE_2(\Omega)\times\Lambda^{S_2}\times\Lambda^{S_2}\to\RR_{\geq 0}$. The adversary's
input will be to choose some probability template $\mu$ along with some $F\in\cF_2(\Omega,\Lambda)$. As
in classical PAC, $F$ need not be a hypothesis belonging to $\cH$, but should be arbitrarily well
approximated by $\cH$ (in the sense of total loss):
\begin{align*}
  \inf\{L_{\mu,F,\ell}(H)\mid H\in\cH\} & = 0.
\end{align*}
Such an $F$ is called \emph{realizable} in $\cH$.

\notoc\subsection{The definition of basic (non-agnostic) high-arity learning}
\label{subsec:basicPAC}

We now summarize basic (non-agnostic) $k$-PAC learning for our running example of graphs. We may
call this ``high-arity learning,'' or ``$k$-PAC learning'' when we want to emphasize the value of
$k$, here~2. (We will also often continue to say ``classical'' or ``classic'' PAC for the existing
definition of PAC.)

\begin{quotation}
  \noindent\textbf{Informal version of basic (non-agnostic) graph learning:}

  \noindent
  Given $\Omega$ and a family $\cH$ of graphs over $X_1$, we say $\cH$ is $k$-PAC learnable (for
  $k=2$, and loss $\ell$) if for some learning algorithm $\cA$, for every $\epsilon,\delta>0$ there
  exists $m^{\PAC}_{\cH,\ell,\cA}(\epsilon,\delta)$ so that the following holds:\footnote{For
    technical reasons, we allow this number to be a real; then the minimum sample size is the next
    largest integer.} The adversary fixes any probability template $\mu$ and hypothesis $F$ (realizable
  in the sense of $\mu$), both unknown to us. Our $\cA$ receives a sample of $m\geq
  m^{\PAC}_{\cH,\ell,\cA}$ vertices plus the graph induced on them by $F$. (Formally, it receives an
  element of $\cE_m(\Omega)$ sampled via $\mu^m$, labeled by $F^*_m$.) Our $\cA$ then outputs some
  $H\in\cH$. Then with probability at least $1-\delta$, the total loss of $H$ against $F$ is at most
  $\epsilon$.
\end{quotation}

Note that when $k=1$ we ``morally'' recover classical PAC learning. We might also give an:
\begin{quotation}
  \noindent\textbf{Extremely informal version of basic graph learning:}

  \noindent
  We are given a family $\cH$ of graphs over the same vertex set $X$. The adversary chooses a
  measure over $X$, and some $G$ that is, or looks very much like, a graph in $\cH$; these are both
  unknown to us. We receive a finite sample of vertices from $X$, along with the subgraph induced
  by $G$ on these vertices. Our algorithm $\cA$ outputs some $H\in\cH$. The outcome is then
  judged by sampling a new pair of points from $X$, still according to the adversary's measure, and
  computing the loss (i.e., comparing whether $H$ and $G$ think there should be an edge).
\end{quotation}

\notoc\subsection{Motivation: agnostic learning}
\label{subsec:agnostic}

In the classic PAC setting, the fully general case of learning is Haussler's definition of
\emph{agnostic} learning, which upgrades PAC in several ways:

\begin{itemize}
\item First, an agnostic loss function takes as input not only the two labelings of a point (that of
  the algorithm and the adversary), but also the hypothesis that the algorithm uses to do the
  labeling.\footnote{Maybe it levies a higher penalty if the algorithm $\cA$ labels using a
    high-degree polynomial than to the same labeling arising from a low-degree polynomial. This is
    the topic of regularization.}
\item Second, rather than choosing a labeling of $X$ by $\{0,1\}$, the adversary is allowed to
  choose any distribution over $X\times\{0,1\}$; the algorithm then receives pairs $(a_i,t_i)\in
  X\times\{0,1\}$ sampled i.i.d.\ from the adversary's distribution. This data need not bear any
  relation to the graph of a function, and need not even be realizable.
\end{itemize}
One can still compute the total loss of the hypothesis outputted by $\cA$, as well as the total
loss of any other hypothesis $H\in\cH$, using the given agnostic loss function. This means our
learning task also changes:
\begin{itemize}
\item In (classical) agnostic learning, $\cA$ is successful if its total loss is essentially that
  of the best $H\in\cH$.
\end{itemize}
To emphasize the difference, in agnostic learning the adversary is stronger because it can use
randomness, but our task is slightly simpler because we just have to do as well as could be
expected. In the classical PAC case, the Fundamental Theorem of Statistical Learning characterizes
both non-agnostic PAC learnability and agnostic PAC learnability in the same way.

Agnostic learning will be our main case below. In our present program, in changing the notion of
sampling, the question of what is the right notion of agnostic adversary for high-arity learning was
a test of the framework -- but there is a beautiful answer.

\notoc\subsection{Definition of agnostic learning}
\label{subsec:agnosticdef}

To start with the punchline, the high-arity agnostic case will essentially allow the agnostic
adversary to play distributions that are local and exchangeable. We just motivate this simply
here.

Suppose we are given a finite or infinite graph $G=(V,E)$, considered as a model-theoretic structure
in the full language: the vertices are named by constants $\{c_i\mid i\in V\}$, and we have a binary
edge relation $E\subseteq V\times V$. Suppose we have some measure over $V$, which allows us to
sample points. After taking such a sample, we look at the information received about the constants
$c_i$ that were named in the sample, and about the subgraph induced on them by $E$. Whether we saw
that $E(c_i, c_j)$ doesn't depend on the order in which they appeared, nor on any other information
in the sample.

Thus, in considering a framework where ``sample'' means ``induced substructure,'' and in which we
might want to let the adversary play arbitrary models, naturally leads beyond trying to find some
literal analogue of the ``distribution on $X\times\{0,1\}$'' framework, to local and exchangeable
distributions. A worry, a priori, is that these could be extremely complex. Here we call on theorems
from exchangeability theory, which give a normal form for such distributions. This interfaces with
our setup in a very nice way. It is stated in the main text, and proved in an appendix, that it
suffices to let $\Omega'$ be another Borel template, $\mu'$ a probability template, and then we can
recover full generality by allowing the adversary to play some $F\in
\cF_k(\Omega\otimes\Omega',\Lambda)$, i.e., a $k$-ary hypothesis over the product template.

To summarize, in the central definition of agnostic high-arity learning, the main upgrades are:

\begin{itemize}
\item we allow the hypothesis to appear in the loss function;
\item we allow the adversary to play some $F\in\cF_k(\Omega\otimes\Omega',\Lambda)$;
\item our task is to be as good as possible, measured in the sense of total loss (informally, our
  loss against $F$ when a new pair of points is sampled is at most $\epsilon$-worse than that of the
  best hypothesis in the class).
\end{itemize}

\notoc\subsection{The combinatorial dimension $\VCN_k$}
\label{subsec:VCNk}

We now explain, for the case of graphs, the notion of dimension whose finiteness will characterize
agnostic learnability in our high-arity main theorem. Very informally:

\begin{quotation}
  Given the task of learning a graph from among a family of graphs, it is reasonable to think that
  it would suffice to approximately learn most neighborhoods of single vertices, with the same
  efficiency.
\end{quotation}
In some sense that is what the $\VCN_2$-dimension says, though note that is not exactly how the
learning in our algorithm happens. For graphs, since the edge relation is binary and symmetric,
finite $\VCN_2$-dimension has a simple description. When $\cH$ is a family of graphs on the vertex
set $X_1$, fix each individual vertex and look at the $\VC$-dimension of its set of neighborhoods
varying across the graphs in the family (this is a set of subsets of $X_1$); for each $x$ the
$\VC$-dimension should be finite, and moreover there should be a uniform finite bound as $x$
varies. (If there is no uniform finite bound, the $\VCN_2$-dimension is $\infty$.)

For a family of directed graphs, once one fixes a vertex $x$, every other vertex $y$ can be classified
into one of four classes: no edges with $x$; a single edge from $x$ to $y$; a single edge from $y$
to $x$; or both edges from $x$ to $y$ and from $y$ to $x$. Thus, we can naturally compute the
Natarajan dimension of this family of functions classifying the vertices $y$ into these four classes
and we take $\VCN_2$-dimension as the supremum over $x$ of all these Natarajan dimension (this
explains the ``N'').

For a family of $3$-uniform hypergraphs all on the same vertex set, finite $\VCN_3$-dimension can be
described as follows. Each vertex $u$ induces a (binary) graph on its vertex set, in which we put an
edge between $v$, $w$ if and only if $(u,v,w)$ is a hyperedge. To compute $\VCN_3$, fix each
individual vertex $u$ and look at the graph family described by its neighborhoods. This family
should have finite $\VCN_2$-dimension, and this number should be uniformly finite across the choice
of $u$. Of course, this is the same as picking two vertices $u,v$, computing the $\VC$-dimension of
the resulting family of sets of $w$ such that $(u,v,w)$ is a hyperedge and taking supremum over the
choice of $(u,v)$.

In the general case, the definition of $\VCN_k$-dimension is more intricate as it allows for up to
finitely many not necessarily symmetric relations, larger $k$ and rank greater than $1$ (that is, it
also has to take into account higher-order variables).

\notoc\subsection{The advent of the partite case}
\label{subsec:partite}

A priori, it might be more difficult to learn bipartite graphs than to learn graphs, and the graph
problem does not obviously allow us to simulate the bipartite graph problem: one reason is that a
theory of learning bipartite graphs should allow for the choice of different measures over the two
sides of the vertex partition (each chosen by the adversary and unknown to us), which may be hard to
simulate in a graph equipped with a single measure. Another is that sampling from bipartite graphs
may a priori reveal less information. Compare learning a graph $G$ to learning its natural
bipartization $G^{\bip}$, formed by doubling the vertex set of $G$ to $U$, $W$ and connecting $a\in
U$ with $b\in W$ if and only if $(a,b)$ is an edge in $G$. A priori $G$ and $H\df G^{\bip}$ have the
same data. But even if we assume the measure on both sides of $H$ is the same, sampling may a priori
not reveal the same kind of information.\footnote{If we sample from the vertex sets $U, W$ of $H$
  and obtain $\{a_1,\ldots,a_r\}\in U$, $\{b_1,\ldots,b_\ell\}$ in $W$, then according to our new
  notion of bipartite sampling and learning we will receive information about the neighborhood of
  each $a_i$ in $\{b_1,\ldots,b_\ell\}$. If we sample from the vertex set of $G$ and obtain
  $\{c_1,\ldots,c_m\}$ in $V$, according to our new notion of non-partite graph sampling and
  learning we will receive information about the neighborhood of each $c_i$ in $\{c_1,\ldots,c_m\}$
  along with how all the $c_j$'s and $c_k$'s interact amongst themselves. A priori, this might make
  learning $G$ easier than learning $H$.}

There are two reasons we nonetheless deal with partite structures.

First, because they are very natural for applications (and for model theorists\footnote{because
  inputs to the different variables in formulas need not be drawn from the same place.}); e.g., the
``Netflix problem'' of learning which movies a person likes is naturally modeled in a partite way.
So it is desirable to understand them, and one of our main theorems is a characterization of
high-arity learning for partite structures.

Second, because in our analysis of ordinary (non-partite) structures we will use their ``partite''
versions in order to prove certain arrows in the main theorem, and so partite structures become, in
our approach, a necessary part of the theory.

We now give the main high-arity PAC definitions for bipartite graphs.\footnote{Again, note that
  there will be two ways in which bipartite graphs (or generally, $k$-partite structures) appear in
  our proofs. We may want to learn structures given \emph{as} partite structures, or to learn
  structures that arise as the ``partization'' of some other structure we hope to learn.} We won't
give the same level of detail but will mainly indicate subtleties and differences.

\notoc\subsection{The definition of hypothesis in the partite case}

We make a few natural changes to the graph setup, or what we shall henceforth often refer to as the
\emph{non-partite} case (to distinguish it from the partite one\footnote{This terminology alone says
  something about the centrality of the partite case in the paper.}).

\begin{definition}
  A bipartite Borel template has the form
  \begin{align*}
    \Omega & = (\Omega_A)_{A\in r(2)} = (\Omega_{\{1\}},\Omega_{\{2\}},\Omega_{\{1,2\}}).
  \end{align*}
  A probability bipartite template on $\Omega$ is $\mu = (\mu_A)_{A\in r(2)} = (\mu_{\{1\}},
  \mu_{\{2\}}, \mu_{\{1,2\}})$, where each $\mu_A$ is a probability measure on $\Omega_A$.
\end{definition}

Parallel to the case of graphs, in our bipartite setup we will think of our bipartite graph as
having vertex set $X_{\{1\}}\cup X_{\{2\}}$ (from $\Omega_{\{1\}}$, $\Omega_{\{2\}}$,
respectively), whereas the items indexed by larger subsets won't play a role:

\begin{convention}
  In this section, we assume $\Omega_{\{1,2\}}$ is trivial, meaning its $X_{\{1,2\}}=\{e\}$.
\end{convention}

\begin{convention}
  In this section, $\Lambda$ is still $(Y,\cB')$, where $Y\df\{0,1\}$ equipped with the discrete
  $\sigma$-algebra.
\end{convention}

In the bipartite case, the parallel to sampling $m$ points from the graph via $\cE_m(\Omega)$ is to
sample $m$ points from \emph{each side} of the bipartition. Define\footnote{In the main text the
  expression $\cE_m(\Omega)$ is used for both the usual and the partite case; there should be no
  confusion since $\Omega$ is a formally different object in the two cases, and it is convenient
  there to be able to compactly state theorems that apply to both cases. However, for expositional
  purposes in this section, we shall use $\cE_m$ for the graph case and $\cE_{m,m}$ for the
  bipartite case.} $\cE_{m,m}(\Omega)$ to have $m$ factors of $X_{\{1\}}$, $m$ factors of
$X_{\{2\}}$, and $m\times m$ factors of $X_{\{1,2\}}=\{e\}$. In particular
\begin{align*}
  \cE_{1,1}(\Omega) & = X_{\{1\}}\times X_{\{2\}}\times\{e\}.
\end{align*}
So for bipartite graphs we can define:
\begin{definition}
  Given $\Omega$ a bipartite Borel template, a bipartite hypothesis from $\Omega$ to $\Lambda$ is a
  measurable function $H\colon\cE_{1,1} (\Omega)\to\Lambda$.
\end{definition}
\noindent Note that a bipartite graph hypothesis simply specifies ``edge or non-edge'' on $(x,z,e)
\in X_{\{1\}}\times X_{\{2\}}\times\{e\}$; it never opines on elements of $X_{\{2\}}\times
X_{\{1\}}\times\{e\}$. This reflects that once the vertex set is partitioned we won't assume
relations stay well defined when the provenance of their inputs changes.\footnote{Indeed in the case
  of $k > 2$, or non-graphs, the more complex notation in the main text still only ever asks the
  hypothesis questions about tuples in increasing order across the partition. In terms of
  representing various structures, there is no real loss of generality in assuming that any relation
  symbol in the language specifies the provenance of each of its coordinates in the partition, and
  that those coordinates are given in increasing order.}

\begin{convention}
  Let $\cF_2(\Omega,\Lambda)$ denote the measurable functions from $\cE_{1,1}(\Omega)$ to $\Lambda$,
  so that a bipartite hypothesis class is $\cH\subseteq\cF_2(\Omega,\Lambda)$.\footnote{We will
    keep the same notation for $\cF_2$ for the partite and non-partite case, but the $\Omega$ being
    used will make it clear which case we are in.}
\end{convention}

Finally, parallel to the graph case, we want a function that takes in a hypothesis $H$ and an
element of $\cE_{m,m}(\Omega)$ (note: since for us the remaining coordinates all have value $e$,
this amounts to receiving $(x_1,\ldots,x_m)$ from $X_{\{1\}}\times\cdots\times X_{\{1\}}$ and
$(z_1,\ldots,z_m) $ from $X_{\{2\}}\times\cdots\times X_{\{2\}}$) and returns the collected data of
$H$ on each $(x_i,z_j,e)$ from $X_{\{1\}}\times X_{\{2\}}\times\{e\}$. This function is
\begin{align*}
  H^*_{m,m}\colon\cE_{m,m}(\Omega)\to\Lambda^{m\times m},
\end{align*}
defined in the natural way.

\notoc\subsection{The definition of loss and learning in the partite case}

Parallel to the graph case, in the bipartite case the basic unit of information for the loss
function is how two hypotheses may differ on their labeling of $(a,b,e)\in\cE_{1,1}(\Omega)$. Thus:
\begin{definition}
  A bipartite loss function is a measurable
  $\ell\colon\cE_{1,1}(\Omega)\times\Lambda\times\Lambda\to\RR_{\geq 0}$.
\end{definition}
A fine point: comparing to the loss function for graphs defined earlier, the exponent of $S_k$
disappears. This reflects what was just discussed about inputting the vertices in order of the piece
of the partition: the only question we have for a bipartite $H$ on $(a,b,e)$ is the single value
$H(a,b,e)$. And ``symmetric'' is not a property of partite loss functions since the spaces are
different.

\smallskip
Now the main differences in learning, in the case of bipartite graphs, may be presented at the level
of the ``extremely informal version'' of Section~\ref{subsec:basicPAC}. In particular, notice that
samples have size $2m$ in which $m$ points are chosen from each side.

\begin{quotation}
\noindent\textbf{Extremely informal version of basic \emph{bipartite} learning:}

\noindent
We are given a family $\cH$ of bipartite graphs, each over the same vertex set with the same
bipartition into $X_{\{1\}}$ and $X_{\{2\}}$. The adversary chooses possibly different measures
$\mu_{\{1\}}$, $\mu_{\{2\}}$ over $X_{\{1\}}$, $X_{\{2\}}$, respectively, along with some $G$ that
is, or looks very much like, a bipartite graph in $\cH$; these are both unknown to us. We receive a
finite sample of $2m$ vertices, consisting of $a_1,\ldots,a_m$ drawn from $X_{\{1\}}$ according to
$\mu_{\{1\}}$, and $b_1,\ldots,b_m$ drawn from $X_{\{2\}}$ according to $\mu_{\{2\}}$, along with
all information about the bipartite graph induced between the $a$'s and the $b$'s by $G$. Our
algorithm $\cA$ outputs some $H\in\cH$. The outcome is then judged by sampling a new pair of points,
one from $X_{\{1\}}$ and one from $X_{\{2\}}$ (still according to the respective measures) and
computing the loss (i.e., comparing whether $H$ and $G$ think there should be an edge).
\end{quotation}

\notoc\subsection{Agnostic learning in the partite case}

Analogously to the non-partite case, first, there is a more general notion of loss that also takes
in the hypothesis doing the labeling. Second, there arises the central question in the case of
partite (high-arity) agnostic learning: what can the agnostic adversary do.

In the partite case, the punchline is to allow such an adversary to play local \emph{separately}
exchangeable distributions (which arise naturally in sampling from partite structures where all of
the elements are labeled by constants, and receiving information about the edges between constants
that \emph{cross the partition}). Again, there is an appeal to a normal form for such
distributions. The satisfying conclusion is that the agnostic adversary in the partite case simply
plays some $F\in\cF_k(\Omega\otimes\Omega',\Lambda)$. This is the same notation as in the
non-partite agnostic case, but of course notice that $\Omega$, $\Omega'$ in this case are partite.

Finally, just as in the non-partite case, the outcome of the agnostic learning task is judged by
computing the total loss of our $H$ against the adversary's $F$, compared to the total loss of the
best $H'\in\cH$ against the adversary's $F$.

\notoc\subsection{The definition of dimension in the partite case}

We need formally different, though closely related, definitions of dimension for the non-partite and
the partite cases. For a family of bipartite graphs on a vertex set given with the same bipartition
into $X_{\{1\}}$ and $X_{\{2\}}$, the bipartite $\VCN_2$-dimension also has a simple
description. First, for each vertex $a\in X_{\{1\}}$, compute the $\VC$-dimension of its set of
neighborhoods in $X_{\{2\}}$ (looking across the family of bipartite graphs). Take the supremum of
this number over all $a\in X_{\{1\}}$. Second, for each vertex $b\in X_{\{2\}}$, compute the
$\VC$-dimension of its set of neighborhoods in $X_{\{1\}}$ (looking across the family of bipartite
graphs). Take the supremum of \emph{this} number over all $b\in X_{\{2\}}$. The dimension is the
larger of the two if both are finite, and $\infty$ otherwise (and for high-arity learnability, we
would like the dimension to be finite).\footnote{From this it should be clear that if $\cH$ is a
  family of graphs, and $\cH^{\bip}$ is its bipartization in the sense of
  Section~\ref{subsec:partite}, then the bipartite $\VCN_2$-dimension of $\cH^{\bip}$ should be
  equal to the $\VCN_2$-dimension of $\cH$.}

\medskip
\begin{center}
  \noindent\emph{This completes our discussion of the main definitions. \\ Next we discuss the
    structure of the argument.}
\end{center}

\notoc\subsection{The structure of the argument}
\label{subsec:org}

Here we give a short reader's guide to the text. Again, since we will be dealing both with graphs
and bipartite graphs, we follow the convention in the text of using ``non-partite'' for the first
case and ``partite'' for the second. Here ``learnable'' always means in the new high-arity sense
unless otherwise stated.

\begin{description}[wide]
\item[Sections~\ref{sec:defs} and~\ref{sec:partdefs}] contain the main definitions (hypothesis,
  loss, agnostic, learning, etc.) for the non-partite and partite cases, respectively. See in
  particular agnostic learning (Definitions~\ref{def:agkPAC} and~\ref{def:partagkPAC}).
\item[Section~\ref{sec:mainthms}] contains the statements of the main theorems (repeated below for
  graphs).
\item[Section~\ref{sec:initialred}] lays out the more expected reductions and operations. For
  example, agnostic learnability implies learnability and the fact that in the non-agnostic case,
  $\VCN_k$-dimension and learnability (with randomness or not) are not affected by adding more
  symbols to $\Lambda$ (in the agnostic case, a priori learnability may become harder with more
  symbols, but it does not become easier).
\item[Section~\ref{sec:derand}] is devoted to derandomization. In order to complete the cycle in our
  main theorems, it will be very useful to allow the algorithms to use finitely many bits of
  randomness, e.g., coin flips or other finitely-valued outcomes. (For instance, we will want to
  receive samples from a partite learning problem, and transform them into samples that are
  intelligible as input to a non-partite learning problem, and randomness will help a lot.) But
  since we ultimately want to close the loop, it is important to prove that this doesn't change
  learnability, at the cost of increasing the sample size.

  A posteriori, derandomization means we could have given non-random versions of all of the proofs,
  but this would have made reasoning about distributions unintelligible. (It is also desirable to
  establish that allowing finite randomness doesn't change the main results.) The proofs in
  Section~\ref{sec:derand} apply to derandomize both partite and non-partite, and both agnostic and
  non-agnostic, high-arity learners.

  The first step is Lemma~\ref{lem:empconc} on the concentration of the empirical loss around the
  actual loss. The key technical result is Proposition~\ref{prop:derand}. Informally, suppose we
  have $b$ random bits, and we want a deterministic algorithm that learns the same problem, at the
  cost of increasing the sample size. We take both in a large sample and a large auxiliary
  sample. Using the first sample, we run our algorithm on this sample once for each of the possible
  values of the $b$ bits, and in each case we get the algorithm's output. Now it remains to choose
  the best one. To do so, we evaluate them all on the auxiliary sample. By the concentration of loss
  lemma, this should be sufficiently informative to allow us to choose (and the fact that the
  original algorithm was a PAC learner tells us one should work).
\item[Section~\ref{sec:part}] is a central section of the text, and builds up to what (by its
  proof) is one of the most interesting parts of the paper, namely the result that if we can learn a
  class $\cH$ of graphs then we can learn its bipartization $\cH^{\bip}$.

  Basic results are first established in Lemma~\ref{lem:kpartbasics} and
  Propositions~\ref{prop:kpartrk} and~\ref{prop:kpartVCN}: going from $\cH$ to $\cH^{\bip}$ doesn't
  affect rank or $\VCN_k$-dimension, though of course the dimension must be computed in the
  non-partite sense and the partite sense respectively.

  Learning is a priori a different matter. First consider the easier direction:
  Proposition~\ref{prop:kpart} says that if $\cH^{\bip}$ is learnable, then $\cH$ is learnable,
  whether the learner is agnostic or not, random or not. (As noted in Section~\ref{subsec:partite}
  above, in moving to the bipartization, even though we have not technically lost information, we
  have changed, a priori made weaker, the information that is available by finite sampling. If the
  algorithm can succeed even in the weaker case, it can also learn the original.)

  The main work of the section is to prove the converse, Proposition~\ref{prop:kpart3}. To set this
  up, let us compare the learning problems. In the non-partite case, we receive a sample of $m$
  points with the induced substructure; in the partite case, we receive $k\cdot m$ points, i.e., $m$
  in each of the $k$ pieces (here: $k=2$) given with the partition and only the edges crossing the
  partition. So in the partite case, it may appear that there is a lot of missing
  information. Moreover, in the partite case, the measures on each of the pieces are allowed to be
  different. To obtain the converse, we will find a randomized algorithm that converts an element of
  $\cE_m(\Omega^{\bip})$ into an element of $\cE_m(\Omega)$ (using a finite amount of randomness
  that only depends on $m, k$) and has the property that for any choice of partite measure $\nu$
  there exists a non-partite measure $\rho$, possibly not realizable, that is simulated by first
  sampling from $\nu$ and then applying the algorithm, and that has the property that the losses are
  comparable. Given such an algorithm, we learn the partization by converting our sample to one
  intelligible to the non-partite learner.\footnote{A posteriori, then, learning the partization is
    not harder, except that the sample size must become larger.} (Of course, a trivial way to
  produce something intelligible to the non-partite learner is to ignore the sample and do something
  else entirely, but this is not helpful; we also need, in the appropriate sense, to retain much of
  the original information.)

  One idea in the proof that may be of independent interest is that of a flexible loss function,
  which arises to solve the following problem. In order to distinguish the information we hope to
  transfer to the non-partite learner from the information that the partite structure has hidden
  from us, we would like to introduce a new symbol $\bot$ to $\Lambda$, interpreted to mean ``I
  don't know.'' A priori, in the case of agnostic learning, we can't increase $\Lambda$ without
  possibly changing the learning problem. This is solved in two steps. Define a \emph{neutral
    symbol} to be an element of $\Lambda$ that can in some precise sense be freely used as a symbol
  that does not affect the learning task (Definition~\ref{def:neutsymb}). A loss function need not
  have a neutral symbol; the $0/1$ loss does not. Define a weaker property a loss function may have,
  called \emph{flexible}, which allows us to simulate a neutral symbol, via randomness, if one
  doesn't already exist (Definitions~\ref{def:flexible} and~\ref{def:flexibilitypart} and
  Proposition~\ref{prop:neutsymb}). The $0/1$ loss \emph{is} flexible (and most reasonable loss
  functions should also be flexible).

  We can now communicate the intuition of Proposition~\ref{prop:kpart2} that uses the neutral symbol
  to show that non-partite learnability implies partite learnability via the following
  thought-experiment. One way to get a local exchangeable distribution is to sample from a structure
  $M$, say a graph. Now consider its bipartization. Sampling from the bipartization, and receiving
  exactly $m$ elements from each of the pieces, is itself not local and exchangeable. But suppose we
  consider the result of sampling, call it $N_m$, to be the structure on these $2\cdot m$ elements,
  along with all the edges crossing the partition, and the rest of the edges labeled ``I don't
  know''. \emph{This} is a structure, and sampling from \emph{it} is local and exchangeable. Now a
  priori each such $N_m$ may not ``cohere'' in the sense that we might not expect there would be an
  infinite $N_\infty$, suitably defined, such that the $N_m$'s are restrictions of it. It would
  suffice to show that the distribution obtained from sampling from $N_m$ is the marginal of that
  obtained from sampling from some $N_\infty$. But indeed something close to this is true. (The
  subtlety is that ``agnostic'' is introduced.) In the bipartite case, we really have some
  distribution over $X_1\otimes X'_1$ along with $F\colon\cE_2(\Omega\otimes\Omega')\to\Lambda$, and
  if we take an infinite sample from this, restrict to its first $m$ vertices, and forget everything
  about $X'_1$, then ``morally'' the result is the distribution corresponding to sampling from
  $N_m$. Note that a constant fraction of the tuples will not be labeled ``I don't know'' since we
  receive good information whenever our tuple crosses the partition; and this probability of
  crossing, call it $p$, only depends on $2$ (or in general, on $k$), not on $m$. So with
  probability $1-p$ we are computing the loss against the ``I don't know'' symbol, and with
  probability $p$ we are computing the $k$-partite loss.

  (Alternately, this proof can be understood as converting a certain kind of separately exchangeable
  local distribution to an exchangeable and local one that retains much of the same information.)
\item[Section~\ref{sec:UC}] deals with uniform convergence. Recall that the uniform law of large
  numbers for finite $\VC$-dimension says that the actual and empirical losses converge uniformly,
  using the polynomial/exponential dichotomy in the Sauer--Shelah--Perles Lemma; and from this
  uniform convergence, classical PAC learning follows. This section takes place in the $k$-partite
  setting.

  The first step, Lemma~\ref{lem:VCNk}, is to observe that from the definition of $\VCN_k$-dimension
  there is an analogue of the Sauer--Shelah--Perles Lemma that controls a growth function related to
  fixing $k-1$ vertices and letting the remaining one vary, and that yields a polynomial/exponential
  dichotomy.

  In the section's main technical result (see Section~\ref{subsec:growth}), we would like to
  leverage the polynomial/exponential dichotomy for $\VCN_k$-dimension to obtain uniform
  convergence. In the $k$-partite setting, of course, the domain of our structures comes with a
  partition into $k$ parts (and a priori sampling involves receiving $m$ points from each part).
  The central idea is to appeal to a notion of $p$-partially empirical loss (for $p=0,\ldots,k$)
  that will interpolate between total loss and empirical loss in that $p$ parts are fixed and $k-p$
  are sampled. Note that $0$-partially empirical loss is total loss, and $k$-partially empirical
  loss is empirical loss. (Since the $k$-partite setting for $k=1$ is analogous to the classical
  unary case, in this language the classical uniform law of large numbers for finite $\VC$-dimension
  can be seen as comparing the $0$-partially empirical loss with the $1$-partially empirical loss.)
  In the present $k$-partite setting, the proof proceeds step by step, comparing $p$-partially
  empirical loss with $(p+1)$-partially empirical loss for $p=0,\ldots,k-1$. Linearity of
  expectation along with coupling are applied to show that when comparing the $p$-partially
  empirical loss with the $(p+1)$-partially empirical loss, we can focus on the $(p+1)$th part so
  the growth function is relevant: even though both loss functions are sampling all vertices from
  the $(p+2)$th part onward, we can in some sense couple them together and fix them. Once this is
  addressed, the argument is completely analogous to the law of large numbers in classical PAC (with
  some bookkeeping related to higher-order variables).

  Having established partite uniform convergence, it is then straightforward to derive partite
  agnostic learnability, analogously to the case of classical PAC, in
  Proposition~\ref{prop:VCNdim->UC}. (Very informally, if samples are likely to be uniformly
  accurate, then the task of learning from a sample becomes reasonable.)
\item[Section~\ref{sec:nonlearn}] deals with non-learnability. Similarly to the ``infinite
  $\VC$-dimension implies not PAC learnable'' direction of the classical PAC theorem, known as the
  no-free-lunch Theorem, we would like to show that (say) a family of bipartite graphs with infinite
  $\VCN_2$-dimension, or in general a family of $k$-partite structures with infinite
  $\VCN_k$-dimension (in the bipartite sense), can confound a learning algorithm.

  This proof uses partite in a key way. Lemma~\ref{lem:nonlearn} first reworks a slightly tighter
  quantitative version of the no-free-lunch theorem. Proposition~\ref{prop:partkPAC->VCN}, the main
  result of the section, gives the non-learnability result. In the language of bipartite graphs,
  suppose that there were a vertex on one side of the partition, call it $v$, whose neighborhoods
  had unbounded $\VC$-dimension considered as unary subsets of the other side. Since we are in the
  partite case, we can choose different measures on each side of the vertex partition, and so
  confound the learning algorithm in a way analogous to classical PAC by essentially concentrating
  on $v$ on one side, and its set of neighborhoods on the other. This is a non-trivial advantage of
  the partite setup, and a reason for its role in the paper, as explained in
  Section~\ref{subsec:partite} above.
\item[Section~\ref{sec:equivthms}] assembles the proofs of the main theorems (see
  Sections~\ref{subsec:thmkPAC} and~\ref{subsec:thmkPACkpart} below).
\item[Section~\ref{sec:highorder}] gives a counterexample showing the hypothesis of ``rank at most 1''
  is necessary. More precisely, the ``finite dimension implies learnable'' direction of the main
  theorems, see below, continues to hold even when the hypotheses can contain higher-order variables
  (again, as distinct from high arity predicates). The counterexample shows that when this stronger
  information is available, it may happen that learning is possible even if the dimension is
  infinite. For more on these higher-order variables see Section~\ref{subsec:highervar} below.
\item[Section~\ref{sec:nonlearnnonpart}] gives evidence that the basic (non-agnostic)
  non-partite case fits into the cycle of the first main theorem, by showing that for a very
  structured type of hypothesis class of graphs, the $\VCN_2$-dimension does characterize the basic
  learning as well as the agnostic learning. It also indicates the scope of work needed to sort this
  out in general, so we feel justified in deferring this to future work.
\item[Section~\ref{sec:final}] contains concluding remarks and many open problems.
\item[Appendix~\ref{sec:agexch}] works out the normal forms needed for the definitions of
  agnostic learners. In particular~\ref{subsec:exch} deals with exchangeability,
  and~\ref{subsec:sepexch} with separate exchangeability.
\item[Appendix~\ref{sec:Bayes}] addresses the high-arity version of Bayes predictors.
\end{description}

\begin{center}
\emph{We now turn to the paper's main theorems, in the case of graphs.}
\end{center}

\notoc\subsection{Theorem~\ref{thm:kPAC} for graphs}
\label{subsec:thmkPAC}

Theorem~\ref{thm:kPAC}, in the case of our running example, completely characterizes \emph{agnostic}
$2$-PAC learnability when $\cH$ is a family of graphs. For the reasons discussed, $\cH^{\bip}$, the
bipartite version of $\cH$, will play a leading role.\footnote{To anticipate a question, certainly
  not all bipartite graphs need arise as the bipartization of some graph;
  Theorem~\ref{thm:kPACkpart} will deal with the general bipartite case.} We now very briefly
describe the structure of the theorem and proof.

Some disclaimers: The main text proves quite a bit more beyond these arrows. In this section we
select some arrows that are enough to obtain the equivalence. Furthermore, ``learnable'' in each
case means the appropriate notion of high-arity PAC with the appropriate loss
function.\footnote{Which notion of learning, i.e., partite or not, and which loss function are
  appropriate changes in the different items (though in a natural way); we refer the reader to the
  full statement of the theorem in the main text for details.} Finally, the item numbering is the
same as in Theorem~\ref{thm:kPAC}.

\begin{theorem}[Theorem~\ref{thm:kPAC} for graphs]\label{thm:kPACgraph}
 Given a hypothesis class $\cH$ that is a family of graphs and its bipartization $\cH^{\bip}$, along
 with a symmetric, separated, flexible and bounded loss function $\ell$, the following implications
 hold between the properties defined below:
\begin{gather*}
  \ref{thm:kPACgraph:VCN}
  \iff
  \ref{thm:kPACgraph:VCNpart},
  \\
  \ref{thm:kPACgraph:VCNpart}
  \implies
  \ref{thm:kPACgraph:UC}
  \implies
  \ref{thm:kPACgraph:agPACpart}
  \implies
  \ref{thm:kPACgraph:agPAC}
  \implies
  \ref{thm:kPACgraph:agPACr}
  \implies
  \ref{thm:kPACgraph:agPACrpart}
  \implies
  \ref{thm:kPACgraph:PACrpart}
  \implies
  \ref{thm:kPACgraph:PACpart}
  \implies
  \ref{thm:kPACgraph:VCNpart},
  \\
  \ref{thm:kPACgraph:PAC}
  \iff
  \ref{thm:kPACgraph:PACr},
  \\
  \ref{thm:kPACgraph:PACrpart}
  \implies
  \ref{thm:kPACgraph:PACr}.
\end{gather*}
In particular, properties~\ref{thm:kPACgraph:VCN} through~\ref{thm:kPACgraph:PACrpart} are
equivalent, and properties~\ref{thm:kPACgraph:PAC} and~\ref{thm:kPACgraph:PACr} are equivalent and
implied by any of~\ref{thm:kPACgraph:VCN} through~\ref{thm:kPACgraph:PACrpart}.
\end{theorem}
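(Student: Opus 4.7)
The plan is to assemble the cycle of implications pictured in Figure~\ref{fig:roadmap}, specialized to $k=2$ and to $\cH$ a family of graphs. The equivalence \ref{thm:kPACgraph:VCN} $\iff$ \ref{thm:kPACgraph:VCNpart} is the graph case of Proposition~\ref{prop:kpartVCN}, which asserts that bipartization preserves the $\VCN_k$-dimension. The forward arc from finite partite $\VCN_2$-dimension to partite agnostic learnability passes through uniform convergence: Proposition~\ref{prop:VCNdim->UC} upgrades finite $\VCN_k$-dimension to the partite uniform convergence property (applicable since $\ell$ is bounded and local and $\Lambda=\{0,1\}$ is finite), after which Proposition~\ref{prop:partUC->partagPAC} produces a partite agnostic learner via an (almost) empirical risk minimizer.

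To cross from the partite to the non-partite side, Proposition~\ref{prop:kpart} shows \ref{thm:kPACgraph:agPACpart} $\Rightarrow$ \ref{thm:kPACgraph:agPAC}: any non-partite sample can be re-read as a partite one by relabeling, and running a partite learner on it yields a non-partite learner (this also supplies the return arrow \ref{thm:kPACgraph:PACrpart} $\Rightarrow$ \ref{thm:kPACgraph:PACr} in the randomized setting). The inclusion \ref{thm:kPACgraph:agPAC} $\Rightarrow$ \ref{thm:kPACgraph:agPACr} is trivial by ignoring the random bits. The crucial return trip \ref{thm:kPACgraph:agPACr} $\Rightarrow$ \ref{thm:kPACgraph:agPACrpart} is Proposition~\ref{prop:kpart3}, and I expect this to be the main obstacle. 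The idea is to extend a partite sample of $m$ points per side to a non-partite sample on $2m$ points by filling the tuples \emph{within} one side (the ``missing'' ones) with a symbol $\bot$ that acts neutrally for $\ell$; when $\ell$ does not literally admit a neutral symbol the flexibility hypothesis (together with symmetric and bounded) allows one to simulate it with extra randomness, and then the partite loss of the output of the non-partite agnostic learner can be bounded by its non-partite loss up to the constant factor accounting for the probability that a random ordered pair crosses the partition.

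To close the cycle, chain Proposition~\ref{prop:agPAC->PAC} for \ref{thm:kPACgraph:agPACrpart} $\Rightarrow$ \ref{thm:kPACgraph:PACrpart} (agnostic specializes to basic by restricting to realizable distributions), then Proposition~\ref{prop:derand} for \ref{thm:kPACgraph:PACrpart} $\Rightarrow$ \ref{thm:kPACgraph:PACpart} (discharging finite randomness via a large auxiliary sample and concentration of empirical loss, which is where boundedness of $\ell$ is essential), and finally Proposition~\ref{prop:partkPAC->VCN} for \ref{thm:kPACgraph:PACpart} $\Rightarrow$ \ref{thm:kPACgraph:VCNpart}. The last step is the high-arity no-free-lunch argument that exploits the freedom the \emph{partite} adversary has to set very different measures on the two sides of the bipartition, concentrating one side on a single vertex $v$ and the other on a set shattered by the neighborhoods of $v$; this is where $\rk(\cH)\leq 1$ (automatic for graphs) and the separated hypothesis on $\ell$ are used.

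Finally, \ref{thm:kPACgraph:PAC} $\iff$ \ref{thm:kPACgraph:PACr} is a direct application of Proposition~\ref{prop:derand} in the non-partite setting together with the trivial reverse inclusion, completing the theorem. The entire argument is thus bookkeeping over Figure~\ref{fig:roadmap}, with the substantive mathematical content concentrated in Propositions~\ref{prop:VCNdim->UC}, \ref{prop:kpart3}, and \ref{prop:partkPAC->VCN}; among these, the neutral-symbol / flexibility construction of Proposition~\ref{prop:kpart3} is the principal novelty that has no analogue in classical PAC.
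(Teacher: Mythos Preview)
Your proposal is correct and follows essentially the same assembly of implications as the paper's proof sketch, citing the same propositions for each arrow. One minor imprecision worth flagging: your description of the mechanism behind Proposition~\ref{prop:kpart3} (building a non-partite sample on $2m$ points by filling within-side pairs with $\bot$) is the naive approach that the paper's proof intuition explicitly shows fails to be local and exchangeable; the actual construction randomly assigns each of $m$ points to a part (via coin flips) and then randomly permutes, producing an $[m]$-sample rather than a $[2m]$-sample.
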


\begin{proof}[Proof Sketch]
  Remember that here we are given $\cH$, and in this proof $\cH^{\bip}$ always refers to its
  bipartization, not an arbitrary bipartite hypothesis class. The case of arbitrary bipartite
  hypothesis classes will be handled in Theorem~\ref{thm:kPACkpart}.

  First we have the two notions of dimension.
  \begin{refdesc}
  \item[\ref*{thm:kPAC:VCN}\label{thm:kPACgraph:VCN}] $\VCN_2(\cH) < \infty$.
  \item[\ref*{thm:kPAC:VCNkpart}\label{thm:kPACgraph:VCNpart}] $\VCN_2(\cH^{\bip}) < \infty$.
  \end{refdesc}

  \begin{itemize}
  \item\ref{thm:kPACgraph:VCN} and~\ref{thm:kPACgraph:VCNpart} are shown to be equivalent in
    Proposition~\ref{prop:kpartVCN}.
  \end{itemize}

  Next we start on the long loop.
  \begin{refdesc}
  \item[\ref*{thm:kPAC:UC}\label{thm:kPACgraph:UC}] $\cH^{\bip}$ has the bipartite uniform
    convergence property.
  \end{refdesc}

  \begin{itemize}
  \item\ref{thm:kPACgraph:VCNpart}$\implies$\ref{thm:kPACgraph:UC}, that is, finite bipartite
    $\VCN_2$-dimension implies bipartite uniform convergence, is proved in
    Proposition~\ref{prop:VCNdim->UC}. Recall that uniform convergence says that samples are likely
    to be representative. The work of Section~\ref{sec:UC} is to derive bipartite uniform
    convergence from the bipartite dimension bound via its growth function.
  \end{itemize}

  \begin{refdesc}
  \item[\ref*{thm:kPAC:agPACkpart}\label{thm:kPACgraph:agPACpart}] $\cH^{\bip}$ is agnostically
    learnable.
  \end{refdesc}

  \begin{itemize}
  \item\ref{thm:kPACgraph:UC}$\implies$\ref{thm:kPACgraph:agPACpart}, that is, bipartite uniform
    convergence implies $\cH^{\bip}$ agnostically learnable, is proved in
    Proposition~\ref{prop:partUC->partagPAC}. The heavy lifting was done in the previous step:
    bipartite uniform convergence means samples are likely to be representative, from which learning
    can naturally follow.
  \end{itemize}

  \begin{refdesc}
  \item[\ref*{thm:kPAC:agPAC}\label{thm:kPACgraph:agPAC}] $\cH$ is agnostically learnable.
  \end{refdesc}

  \begin{itemize}
  \item\ref{thm:kPACgraph:agPACpart}$\implies$\ref{thm:kPACgraph:agPAC}, that is, $\cH^{\bip}$
    agnostic learnable implies $\cH$ agnostic learnable, is proved in
    Proposition~\ref{prop:kpart}. Recall our intuition that from the learning point of view, the
    bipartization of a graph has less information: if we are able to learn the bipartization,
    learning the original should be even easier.
  \end{itemize}

  \begin{refdesc}
  \item[\ref*{thm:kPAC:agPACr}\label{thm:kPACgraph:agPACr}] $\cH$ is agnostically learnable
    with randomness.
  \end{refdesc}

  \begin{itemize}
  \item\ref{thm:kPACgraph:agPAC}$\implies$\ref{thm:kPACgraph:agPACr}, that is, $\cH$ agnostically
    learnable implies $\cH$ agnostically learnable with randomness, follows from the definitions.
  \end{itemize}

  \begin{refdesc}
  \item[\ref*{thm:kPAC:agPACrkpart}\label{thm:kPACgraph:agPACrpart}] $\cH^{\bip}$ is
    agnostically learnable with randomness.
  \end{refdesc}

  \begin{itemize}
  \item\ref{thm:kPACgraph:agPACr}$\implies$\ref{thm:kPACgraph:agPACrpart}, that is, $\cH$
    agnostically learnable with randomness implies $\cH^{\bip}$ is agnostically learnable with
    randomness, is a main proof of the paper, Propositions~\ref{prop:kpart2} and~\ref{prop:kpart3}.
  \end{itemize}

  \begin{refdesc}
  \item[\ref*{thm:kPAC:PACrkpart}\label{thm:kPACgraph:PACrpart}] $\cH^{\bip}$ is learnable with
    randomness.
  \end{refdesc}

  \begin{itemize}
  \item\ref{thm:kPACgraph:agPACrpart}$\implies$\ref{thm:kPACgraph:PACrpart}, that is, $\cH^{\bip}$
    agnostically learnable with randomness implies $\cH^{\bip}$ learnable with randomness, is
    verified in Proposition~\ref{prop:agPAC->PAC}.
  \end{itemize}

  \begin{refdesc}
  \item[\ref*{thm:kPAC:PACkpart}\label{thm:kPACgraph:PACpart}] $\cH^{\bip}$ is learnable.
  \end{refdesc}

  \begin{itemize}
  \item\ref{thm:kPACgraph:PACrpart}$\implies$\ref{thm:kPACgraph:PACpart}, that is, $\cH^{\bip}$
    learnable with randomness implies $\cH^{\bip}$ learnable, is proved via derandomization
    (Proposition~\ref{prop:derand}): we can replace a learning algorithm using finitely many
    bits of randomness with another learning algorithm using no randomness whose sample size is
    larger.
  \end{itemize}

  \begin{itemize}
  \item\ref{thm:kPACgraph:PACpart} $\implies$\ref{thm:kPACgraph:VCNpart}, that is, if $\cH^{\bip}$
    is learnable then its bipartite $\VCN_2$-dimension must be finite, is proved in
    Proposition~\ref{prop:partkPAC->VCN}.
  \end{itemize}

  All together, the elements~\ref{thm:kPACgraph:VCN} up to~\ref{thm:kPACgraph:PACrpart} give a
  characterization of agnostic learning for $\cH$: quotably we may say $\cH$ is agnostically
  learnable if and only if $\cH^{\bip}$ is agnostically learnable if and only if either,
  equivalently both, have finite $\VCN_2$-dimension.

  Items~\ref{thm:kPACgraph:PAC} and~\ref{thm:kPACgraph:PACr} ask for learnability of $\cH$ without the
  word ``agnostic''. For now, we simply record that they are implied by the main loop. We plan to
  address their relation to the main (non-partite) equivalence in future work.

  \smallskip

  We have a few arrows remaining:

  \begin{refdesc}
  \item[\ref*{thm:kPAC:PAC}\label{thm:kPACgraph:PAC}] $\cH$ is learnable.
  \item[\ref*{thm:kPAC:PACr}\label{thm:kPACgraph:PACr}] $\cH$ is learnable with randomness.
  \end{refdesc}

  \begin{itemize}
  \item\ref{thm:kPACgraph:PACrpart}$\implies$\ref{thm:kPACgraph:PACr}, that is, $\cH^{\bip}$
    learnable with randomness implies $\cH$ learnable with randomness, is analogous to
    \ref{thm:kPACgraph:agPACpart}$\implies$\ref{thm:kPACgraph:agPAC} in the long loop.
  \end{itemize}

  \begin{itemize}
  \item\ref{thm:kPACgraph:PAC}$\implies$\ref{thm:kPACgraph:PACr}, that is, learnable implies
    learnable with randomness, is by definition.
  \end{itemize}

  \begin{itemize}
  \item\ref{thm:kPACgraph:PACr}$\implies$\ref{thm:kPACgraph:PAC}, that is, $\cH$ learnable with
    randomness implies $\cH$ is learnable, is again derandomization (Proposition~\ref{prop:derand}).
  \end{itemize}

  This completes our sketch of the proof.
\end{proof}

\notoc\subsection{Theorem~\ref{thm:kPACkpart} for bipartite graphs}
\label{subsec:thmkPACkpart}

In the case of bipartite graphs (not necessarily arising as the bipartization of a graph, as is
reflected by simply using the notation $\cH$), Theorem~\ref{thm:kPACkpart} gives a full
equivalence, characterizing both agnostic and non-agnostic learning, that is:

\begin{theorem}[Theorem~\ref{thm:kPACkpart} for bipartite graphs]\label{thm:kPACgraphpart}
   Suppose $\cH$ is a hypothesis class of bipartite graphs, and $\ell$ is a separated and bounded
   bipartite loss function. Then the following are equivalent:
   \begin{refdesc}
   \item[\ref*{thm:kPACkpart:VCN}\label{thm:kPACgraphpart:VCN}] $\cH$ has finite $\VCN_2$-dimension.
   \item[\ref*{thm:kPACkpart:UC}] $\cH$ has the bipartite uniform convergence property.
   \item[\ref*{thm:kPACkpart:agPAC}] $\cH$ is agnostically learnable.
   \item[\ref*{thm:kPACkpart:agPACr}] $\cH$ is agnostically learnable with randomness.
   \item[\ref*{thm:kPACkpart:PAC}\label{thm:kPACgraphpart:PAC}] $\cH$ is learnable.
   \item[\ref*{thm:kPACkpart:PACr}] $\cH$ is learnable with randomness.
   \end{refdesc}
 \end{theorem}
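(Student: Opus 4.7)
The plan is to close a cycle of implications among the six items by chaining the propositions already established in the paper, adapted to the $k=2$ bipartite case. Concretely, I would prove \ref{thm:kPACgraphpart:VCN} $\implies$ bipartite uniform convergence $\implies$ agnostic learnability $\implies$ agnostic learnability with randomness $\implies$ learnability with randomness $\implies$ \ref{thm:kPACgraphpart:PAC} $\implies$ \ref{thm:kPACgraphpart:VCN}, together with the trivial implication \ref{thm:kPACgraphpart:PAC} $\implies$ learnability with randomness. Unlike Theorem~\ref{thm:kPACgraph}, no bipartization construction is required, since we are already in the partite setting; this is also why $\ell$ only needs to be separated and bounded, not additionally symmetric or flexible.

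From finite $\VCN_2(\cH)$, Proposition~\ref{prop:VCNdim->UC} delivers bipartite uniform convergence, the required hypotheses (bounded and local $\ell$, finite $\Lambda$) all being satisfied, locality being automatic for binary losses. From uniform convergence, Proposition~\ref{prop:partUC->partagPAC} gives partite agnostic learnability via an (almost) empirical risk minimizer, whose output is competitive with the best hypothesis in $\cH$ up to small additive error with high probability. Agnostic learnability trivially implies agnostic learnability with randomness (use zero random bits). Proposition~\ref{prop:agPAC->PAC} moves from the agnostic to the non-agnostic (realizable) regime while preserving randomness, using that the infimum of total loss over $\cH$ is zero when $F$ is realizable. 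Finally, Proposition~\ref{prop:derand}, combined with the concentration-of-empirical-loss Lemma~\ref{lem:empconc} (which uses boundedness of $\ell$), derandomizes by running the algorithm on all $2^b$ settings of the $b$ random bits on the sample and then selecting the output of smallest empirical loss on an auxiliary sample.

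To close the cycle I would invoke Proposition~\ref{prop:partkPAC->VCN}: deterministic non-agnostic partite learnability with separated $\ell$ forces finite bipartite $\VCN_2$-dimension; its required hypothesis $\rk(\cH)\leq 1$ is automatic for bipartite graphs. This closing step is the conceptual heart of the argument and the main obstacle: it is where the partite setup earns its keep. If $\VCN_2(\cH)=\infty$ in the bipartite sense, there is a witness vertex on one side of the partition whose neighborhoods shatter arbitrarily large finite sets on the other side; by choosing independent marginals $\mu_{\{1\}}$ and $\mu_{\{2\}}$ to concentrate on this vertex and on a shattered set respectively, the learning problem collapses to a classical unary no-free-lunch obstruction (Lemma~\ref{lem:nonlearn}). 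This freedom to tune the two marginals independently is precisely what is unavailable in the non-partite graph case, which is why Theorem~\ref{thm:kPACgraph} gives only a one-way arrow from the main loop to \ref{thm:kPACgraph:PAC}, whereas here the full cycle closes and all six items become equivalent. Beyond invoking this obstruction, the remainder of the argument is routine assembly of the propositions cited above.
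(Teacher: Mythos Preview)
Your proposal is correct and follows essentially the same approach as the paper: the paper's proof of Theorem~\ref{thm:kPACkpart} assembles exactly the same chain of propositions (Proposition~\ref{prop:VCNdim->UC}, Proposition~\ref{prop:partUC->partagPAC}, Proposition~\ref{prop:agPAC->PAC}, Proposition~\ref{prop:derand}, Proposition~\ref{prop:partkPAC->VCN}) to close the cycle, and the expository section makes the same observation that Proposition~\ref{prop:partkPAC->VCN} is what allows the non-agnostic item to join the equivalence here, in contrast to Theorem~\ref{thm:kPACgraph}. One small terminological point: locality of the agnostic loss is not ``automatic for binary losses'' but rather follows from the form $\ell^{\ag}(H,x,y)=\ell(x,H(x),y)$ via Proposition~\ref{prop:agPAC->PAC}\ref{prop:agPAC->PAC:local}; the paper makes this explicit at the start of its proof.
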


 The main steps in the proof can be inferred from the sketch of the proof of
 Theorem~\ref{thm:kPAC} just given. The difference in the bipartite case is being able already
 to include the non-agnostic case in the equivalence, via~\ref{thm:kPACgraphpart:PAC}
 implies~\ref{thm:kPACgraphpart:VCN}, which is Proposition~\ref{prop:partkPAC->VCN}.

 \medskip
 \noindent\textbf{A justification of the assertion that this ``generalizes'' classical PAC:}
 \begin{quotation}
   Since Theorem~\ref{thm:kPACkpart} gives the full equivalence between the different aspects,
   including basic and agnostic learnable, it allows us to recover the classical PAC theorem as a
   corollary of this theorem in the case $k=1$. This is because when $k=1$ the non-partite and
   partite cases coincide and are the same as the classical unary case.
\end{quotation}

\notoc\subsection{On higher-order variables}
\label{subsec:highervar}

We arrive to a forward-looking feature of the setup presented here. So far in this section the
entries indexed by subsets of size greater than one have been present only in the adversary's
choices, not in the hypothesis classes themselves.\footnote{However one direction of the main
  theorems already includes such cases, see below.}

Indeed, the rank at most $1$ hypothesis classes include essentially anything presented in the
language of model theory. We can describe hypothesis classes of graphs, directed graphs,
hypergraphs, and indeed structures in any fixed finite relational language (in their multipartite
versions as well) using only variables of rank 1. Their learnability is studied in the text's main
theorems. We may wish to extend these ideas to study the ``learnability of randomness'' by allowing
higher-order variables.

Let us work out an example of a class with non-trivial higher-order variables. Note that this
example is chosen to illustrate a building block; just by itself, this example won't be very
interesting for learning, since it will be clear from the construction that any pair is independent
from any other, so high-arity PAC essentially reduces to PAC learning, and $\VCN_2$-dimension
matches $\VC$-dimension. (A more involved example in which learning is affected is in
Section~\ref{sec:highorder} of the main text.)

We keep $k = 2$, but we fix a new Borel template $\Omega$ in which\footnote{The choice of $\QQ$ here
  isn't important; it's comfortable to have a countably infinite set, and easier for notational
  purposes to distinguish it from $\NN$ since we will draw graphs over $\NN$.} $X_1=\{e\}$,
$X_2=\QQ$ with $\cP(\QQ)$ as its $\sigma$-algebra, and $X_n=\{e\}$ for $n>2$. In this case
\begin{align*}
  \cE_2(\Omega)
  & =
  X_{\lvert\{1\}\rvert}\times X_{\lvert\{2\}\rvert}\times X_{\lvert\{1,2\}\rvert}
  =
  \{e\}\times\{e\}\times\QQ.
\end{align*}
As before let $Y\df\{0,1\}$. A hypothesis is still $H\colon\cE_2(\Omega)\to\Lambda$. For each $A
\subseteq\QQ$, let $H_A\colon\cE_2(\Omega)\to\Lambda$ be the hypothesis that on $(e,e,q)$ returns
$1$ if $q\in A$, and $0$ otherwise. Let
\begin{align*}
  \cH & \df \{H_A\mid A\subseteq\QQ\}.
\end{align*}
Suppose now we are also given a probability template $\mu$ for $\Omega$, whose important entry will be
$\mu_2$. Let us sample a structure with vertex set $[m]$ using some $H_A$. According to the
instructions of the measure $\mu$ we draw an element of $\cE_m(\Omega)$, recalling that the
Cartesian product $\cE_m(\Omega)$ has $m$ factors of $X_1=\{e\}$, followed by $\binom{m}{2}$ factors
of $X_2$, followed by a factor of $\{e\}$ for each $A\in r(m)$ with $\lvert A\rvert > 2$. So an
element of $\cE_m(\Omega)$ is of the form
\begin{align*}
  (\mathop{\underbrace{e,e,\ldots,e}}
  \limits_{\mathclap{\shortstack{indexed by\\$\{1\},\ldots,\{m\}$,\\respectively}}},
  \;
  p_{\{1,2\}},\ldots,p_{\{i,j\}},\ldots,p_{\{m-1, m\}},
  \;
  \mathop{\underbrace{e,e,\ldots,e}}
  \limits_{\mathclap{\shortstack{indexed by\\sets $A\in r(m)$\\with $\lvert A\rvert\geq 3$}}}
  ).
\end{align*}
From this we'll describe a structure on vertices $v_1,\ldots,v_m$: for each pair $v_i,v_j$, $i\neq
j$, we ask $H_A$ about $(e,e,p_{\{i,j\}})$, and it puts an edge\footnote{More precisely, since the
  $p$ coordinate is indexed by a set $\{i,j\}=\{j,i\}$, we have a directed edge from $v_i$ to $v_j$
  if and only if $H_A(e,e,p_{\{i, j\}})=1$, and a directed edge from $v_j$ to $v_i$ if and only if
  $H_A(e,e,p_{\{i, j\}})=1$. Since by our choice of $H_A$ both answers depend only on whether
  $p=p_{\{i, j\}}=p_{\{j, i\}}\in A$, we end up with a symmetric edge from $v_i$ to $v_j$ if and
  only if this $p\in A$.}  between $v_i$ and $v_j$ if and only if $p_{\{i,j\}}\in A$.

As $m\to\infty$, or just in the case $V=\NN$, what we see is the \Erdos--\Renyi\ random graph with
edge probability $\mu_2(A)$, recalling that $\mu_2$ is a measure on $X_2=\QQ$.

This $\cH$ is an example of a hypothesis class of rank 2.

Again, one direction of the main theorem holds for hypothesis classes of any rank: namely,
$\VCN_k(\cH) < \infty$ implies $\cH$ is $k$-PAC learnable.

The direction ``$k$-PAC learnable implies finite $\VCN_k$-dimension'' is where we use rank at most 1
in an essential way, as we now explain. For even when the $\VCN_k$-dimension is infinite, the more
powerful information contained in higher rank variables might still make learning possible. As
noted, in Section~\ref{sec:highorder} we give an example showing this can occur. To motivate that
section, even though the example just worked out is a bit too simple to show the complexities of
high-arity learning, consider that with both vertex and pair variables in play, the learning picture
may not be as clear-cut. Finding a characterization of high-arity PAC learning that handles
hypothesis classes of arbitrarily large finite rank (again, we emphasize, as distinct from large
finite arity) may turn out to require a corresponding higher understanding of dimension.

\medskip

This concludes this expository section.

\section{Fundamentals in the non-partite setting}
\label{sec:defs}

In this section, we make most of the definitions related to $k$-PAC learning in the non-partite
setting. We note that when $k=1$, these definitions retrieve the usual notions of classic PAC
learning (except for the presence of dummy variables, see also Section~\ref{subsec:highvar}) and we
attempt to use terminology similar to that of~\cite{SB14}.

We start with some general notation. We denote the set of non-negative integers by $\NN$ and the set
of positive integers by $\NN_+\df\NN\setminus\{0\}$. For $k\in\NN$, we let
$[k]\df\{1,\ldots,k\}$. The set of non-negative reals is denoted by $\RR_{\geq 0}$. The space of
injective functions from a set $U$ to a set $V$ is denoted $(V)_U$ and we use the shorthand
$(V)_k\df (V)_{[k]}$. The symmetric group on a set $V$ is denoted by $S_V$ and again we use the
shorthand $S_k\df S_{[k]}$ (note that this is also the same as $([k])_k$). For $n,k\in\NN$, we also
let $(n)_k\df n(n-1)\cdots(n-k+1)$ denote the falling factorial (so $\lvert ([n])_k\rvert =
(n)_k$). For a set $V$ and $k\in\NN$, we let $\binom{V}{k}\df\{A\subset V \mid \lvert A\rvert=k\}$
be the set of all subsets of $V$ of size $k$. When we consider a Cartesian product of the form
$\prod_{v\in V} X_v$, we view it as the space of functions $x\colon V\to\bigcup_{v\in V} X_v$ such
that $x_v\df x(v)\in X_v$ for every $v\in V$, so even though there is a natural bijection between
$\prod_{v\in\{1,2\}} X_v$ and $\prod_{v\in\{2,3\}} X_{v-1}$, these are inherently different sets
because of the domains of the functions. Furthermore, our random variables will always be typeset in
bold font.

\begin{definition}[Borel templates]\label{def:Boreltemplate}
  By a Borel space, we mean a standard Borel space, i.e., a measurable space that is
  Borel-isomorphic to a Polish space when equipped with the $\sigma$-algebra of Borel sets. The
  space of probability measures on a Borel space $\Lambda$ is denoted $\Pr(\Lambda)$.
  \begin{enumdef}
  \item A \emph{Borel template} is a sequence $\Omega=(\Omega_i)_{i\in\NN_+}$, where
    $\Omega_i=(X_i,\cB_i)$ is a non-empty Borel space.
  \item A \emph{probability template} on a Borel template $\Omega$ is a sequence
    $\mu=(\mu_i)_{i\in\NN_+}$, where $\mu_i$ is a probability measure on $\Omega_i$. With a small
    abuse of notation, the space of probability templates on a Borel template $\Omega$ is denoted
    $\Pr(\Omega)$.
  \item\label{def:Boreltemplate:product} For two Borel templates $\Omega=(\Omega_i)_{i\in\NN_+}$ and
    $\Omega'=(\Omega'_i)_{i\in\NN_+}$, we let $\Omega\otimes\Omega'$ be the Borel template given by
    taking products of the Borel spaces: $(\Omega\otimes\Omega')_i\df\Omega_i\otimes\Omega'_i$
    ($i\in\NN_+$). Similarly, if $\mu\in\Pr(\Omega)$ and $\mu'\in\Pr(\Omega')$, we let
    $\mu\otimes\mu'$ be given by $(\mu\otimes\mu')_i\df\mu_i\otimes\mu'_i$ ($i\in\NN_+$).
  \item For a (finite or) countable set $V$ and a Borel template $\Omega$, we define\footnote{This
    notation is borrowed from~\cite{CR20}, but these spaces have a long history in limit
    theory~\cite{Aus08,ES12,AC14} and exchangeability theory~\cite{Hoo79,Ald81,Ald85,Hoo79,Kal05},
    except that since these theories only need to keep track of pushforward distributions generated
    from $\cE_V(\Omega)$ instead of combinatorics of $\cE_V(\Omega)$, they only consider the case
    when all $\Omega_i$ are the same standard Borel space, and typically the space is $[0,1]$,
    equipped with the Borel $\sigma$-algebra.}
    \begin{align*}
      \cE_V(\Omega) & \df \prod_{A\in r(V)} X_{\lvert A\rvert},
    \end{align*}
    equipping it with the product $\sigma$-algebra, where
    \begin{align*}
      r(V) & \df \{A\subseteq V \mid A\text{ finite non-empty}\}.
    \end{align*}

    If $\mu\in\Pr(\Omega)$ is a probability template on $\Omega$, we let
    $\mu^V\df\bigotimes_{A\in r(V)}\mu_{\lvert A\rvert}$ be the product measure.
    
    We will use the shorthands $r(m)\df r([m])$, $\cE_m(\Omega)\df\cE_{[m]}(\Omega)$ and
    $\mu^m\df\mu^{[m]}$ when $m\in\NN$.
  \item\label{def:Boreltemplate:action} For an injective function $\alpha\colon U\to V$ between
    countable sets, we contra-variantly define the map $\alpha^*\colon\cE_V(\Omega)\to\cE_U(\Omega)$
    by
    \begin{align*}
      \alpha^*(x)_A & \df x_{\alpha(A)}\qquad (x\in\cE_V(\Omega), A\in r(U)).
    \end{align*}
  \item\label{def:Boreltemplate:identification} For Borel templates $\Omega^1$ and $\Omega^2$, we
    will also use the natural identification of $\cE_V(\Omega^1\otimes\Omega^2)$ with
    $\cE_V(\Omega^1)\times\cE_V(\Omega^2)$ in which $x\in\cE_V(\Omega^1\otimes\Omega^2)$ corresponds
    to $(z^1,z^2)\in\cE_V(\Omega^1)\times\cE_V(\Omega^2)$ given by $z^i_A\df\pi_{\lvert
      A\rvert,i}(x_A)$, where $\pi_{a,i}\colon\Omega^1_a\otimes\Omega^2_a\to\Omega^i_a$ is the
    projection map. It is straightforward to check that under this identification, if $\alpha\colon
    U\to V$ is an injection between countable sets, then
    $\alpha^*\colon\cE_V(\Omega^1\otimes\Omega^2)\to\cE_U(\Omega^1\otimes\Omega^2)$ is simply the
    product of the $\alpha^*$ corresponding to $\Omega^1$ and $\Omega^2$ and for
    $\mu^i\in\Pr(\Omega^i)$, we have $(\mu^1\otimes\mu^2)^V = (\mu^1)^V\otimes(\mu^2)^V$.
  \end{enumdef}
\end{definition}

\begin{definition}[Hypotheses]\label{def:hypotheses}
  Let $\Omega$ be a Borel template, $\Lambda=(Y,\cB')$ be a non-empty Borel space and $k\in\NN_+$.
  \begin{enumdef}
  \item The set of \emph{$k$-ary hypotheses} from $\Omega$ to $\Lambda$, denoted
    $\cF_k(\Omega,\Lambda)$, is the set of (Borel) measurable functions from $\cE_k(\Omega)$ to
    $\Lambda$.
  \item A \emph{$k$-ary hypothesis class} is a subset $\cH$ of $\cF_k(\Omega,\Lambda)$, equipped
    with a $\sigma$-algebra such that:
    \begin{enumerate}
    \item the evaluation map $\ev\colon\cH\times\cE_k(\Omega)\to\Lambda$ given by $\ev(H,x)\df H(x)$
      is measurable;
    \item for every $H\in\cH$, the set $\{H\}$ is measurable.
    \end{enumerate}
  \item Given $F\in\cF_k(\Omega,\Lambda)$ and a countable set $V$, we define the function
    $F^*_V\colon\cE_V(\Omega)\to\Lambda^{(V)_k}$ by
    \begin{align*}
      F^*_V(x)_\alpha & \df F(\alpha^*(x)) \qquad (x\in\cE_V(\Omega), \alpha\in (V)_k).
    \end{align*}
    Again, we use the shorthand $F^*_m\df F^*_{[m]}$. In particular, $F^*_k$ is a function of the form
    $\cE_k(\Omega)\to\Lambda^{S_k}$.

    Informally, if we think of $F$ as an infinite structure on $\Omega_1$ and ignore higher-order
    variables for a moment, then $F^*_V(x)$ collects all information present on the quantifier-free
    diagram induced by $F$ on $\{x_v \mid v\in V\}$ supposing each of these vertices are named by constants.
  \item For an injective function $\alpha\colon U\to V$ between countable sets, we contra-variantly define
    the map $\alpha^*\colon\Lambda^{(V)_k}\to\Lambda^{(U)_k}$ by
    \begin{align*}
      \alpha^*(y)_\beta & \df y_{\alpha\comp\beta}
      \qquad (y\in\Lambda^{(V)_k}, \beta\in(U)_k).
    \end{align*}
    (This intentionally uses the same notation as the map defined in
    Definition~\ref{def:Boreltemplate:action} to resemble group action notation, see also
    Lemma~\ref{lem:F*Vequiv} below.)
  \end{enumdef}
\end{definition}

The next lemma is a standard observation in exchangeability theory but is made explicit here in case
it is unfamiliar.

\begin{lemma}[Equivariance of $F^*_V$]\label{lem:F*Vequiv}
  For $F\in\cF_k(\Omega,\Lambda)$, the definition $F^*_V$ is equivariant in the sense that for every
  $\beta\colon U\to V$ injective, the diagram
  \begin{equation*}
    \begin{tikzcd}
      \cE_V(\Omega)
      \arrow[r, "F^*_V"]
      \arrow[d, "\beta^*"']
      &
      \Lambda^{(V)_k}
      \arrow[d, "\beta^*"]
      \\
      \cE_U(\Omega)
      \arrow[r, "F^*_U"]
      &
      \Lambda^{(U)_k}
    \end{tikzcd}
  \end{equation*}
  is commutative.
\end{lemma}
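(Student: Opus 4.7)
The plan is to verify commutativity pointwise by chasing a single element $x \in \cE_V(\Omega)$ around the diagram, evaluating the two resulting elements of $\Lambda^{(U)_k}$ at an arbitrary $\alpha \in (U)_k$, and reducing the claim to the contravariant functoriality $(\beta \comp \alpha)^* = \alpha^* \comp \beta^*$ of the $\cE$-side maps from Definition~\ref{def:Boreltemplate:action}.

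Concretely, the first step is to unfold the two routes. Along the top then right, $(\beta^* \comp F^*_V)(x)$ has $\alpha$-coordinate $F^*_V(x)_{\beta \comp \alpha} = F((\beta \comp \alpha)^*(x))$ by the two definitions. Along the left then bottom, $(F^*_U \comp \beta^*)(x)$ has $\alpha$-coordinate $F^*_U(\beta^*(x))_\alpha = F(\alpha^*(\beta^*(x)))$. So commutativity reduces to the identity $(\beta \comp \alpha)^*(x) = \alpha^*(\beta^*(x))$ in $\cE_{[k]}(\Omega)$.

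The second step is to check this identity coordinate by coordinate: for any $A \in r([k])$,
\[
(\beta \comp \alpha)^*(x)_A = x_{(\beta \comp \alpha)(A)} = x_{\beta(\alpha(A))} = \beta^*(x)_{\alpha(A)} = \alpha^*(\beta^*(x))_A,
\]
where each equality is a direct application of Definition~\ref{def:Boreltemplate:action}. This is the only computation and it is essentially tautological; no measurability or probabilistic input is required, since the claim is about functions of sets.

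There is no real obstacle here: the lemma is a formal consequence of how $\alpha^*$ is defined on both sides so as to be contravariantly functorial, and the notation deliberately conflates the two $\alpha^*$'s precisely because they intertwine correctly through $F^*_V$. The only point deserving a sentence in the write-up is to note that $\beta \comp \alpha$ is indeed injective (as a composition of injections) and hence lies in $(V)_k$, so that $F^*_V(x)_{\beta \comp \alpha}$ is well-defined.
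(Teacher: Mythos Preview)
Your proof is correct and follows the same approach as the paper's: both chase an element $x$ and an index $\alpha \in (U)_k$ around the diagram, reducing commutativity to the identity $(\beta \comp \alpha)^*(x) = \alpha^*(\beta^*(x))$. Your version is slightly more explicit in verifying this contravariant functoriality coordinatewise, whereas the paper uses it as a single step, but the argument is otherwise identical.
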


\begin{proof}
  Let $x\in\cE_V(\Omega)$ and note that for $\alpha\in (U)_k$, we have
  \begin{align*}
    \beta^*(F^*_V(x))_\alpha
    & =
    F^*_V(x)_{\beta\comp\alpha}
    =
    F((\beta\comp\alpha)^*(x))
    =
    F(\alpha^*(\beta^*(x)))
    =
    F^*_V(\beta^*(x))_\alpha,
  \end{align*}
  so the diagram commutes.
\end{proof}

\begin{remark}\label{rmk:exchangeable}
  As mentioned in the introduction, $k$-ary hypotheses are the combinatorial structures that can
  generate local exchangeable distributions once we equip $\Omega$ with a probability template
  $\mu\in\Pr(\Omega)$. In fact, the Aldous--Hoover Theorem~\cite{Hoo79,Ald85,Ald81} (see
  also~\cite[Theorem~7.22 and Lemma~7.35]{Kal05}) says that any local exchangeable distribution on
  $\Lambda^{(\NN_+)_k}$ is of the form $F^*_{\NN_+}(\rn{x})$ for some $F\in\cF_k(\Omega,\Lambda)$,
  where $\rn{x}\sim\mu^{\NN_+}$ for some $\mu\in\Pr(\Omega)$ and some $\Omega$.

  We also note that an even stronger local exchangeability holds here: if
  $\rn{x}\sim\mu^{\NN_+}$, the distribution of $(\rn{x},F^*_{\NN_+}(\rn{x}))$ is local and
  exchangeable. Locality is obvious: if $U,V\subseteq\NN_+$ are disjoint, then $((\rn{x}_A)_{A\in
    r(U)}, (F^*_{\NN_+}(\rn{x})_\alpha)_{\alpha\in(U)_k})$ is independent from $((\rn{x}_A)_{A\in
    r(V)},(F^*_{\NN_+}(\rn{x})_\alpha)_{\alpha\in(V)_k})$. Exchangeability follows from
  Lemma~\ref{lem:F*Vequiv}: for $\sigma\in S_{\NN_+}$, we have
  \begin{align*}
    \sigma^*(\rn{x},F^*_{\NN_+}(\rn{x}))
    & \df
    (\sigma^*(\rn{x}),\sigma^*(F^*_{\NN_+}(\rn{x})))
    =
    (\sigma^*(\rn{x}), F^*_{\NN_+}(\sigma^*(\rn{x})))
    \sim
    (\rn{x}, F^*_{\NN_+}(\rn{x})).
  \end{align*}
\end{remark}

\begin{definition}[Rank]
  Let $\Omega$ be a Borel template, $\Lambda=(Y,\cB')$ be a non-empty Borel space and $k\in\NN_+$.
  \begin{enumdef}
  \item The \emph{rank} of a $k$-ary hypothesis $F\in\cF_k(\Omega,\Lambda)$, denoted $\rk(F)$ is the
    minimum $r\in\NN$ such that $F$ factors as
    \begin{align*}
      F(x) & \df F'((x_A)_{A\in r(k), \lvert A\rvert\leq r}) \qquad (x\in\cE_k(\Omega))
    \end{align*}
    for some function $F'\colon\prod_{A\in r(k),\lvert A\rvert\leq r} X_A\to\Lambda$.
  \item The \emph{rank} of a $k$-ary hypothesis class $\cH\subseteq\cF_k(\Omega,\Lambda)$ is defined
    as
    \begin{align*}
      \rk(\cH) & \df \sup_{F\in\cH} \rk(F).
    \end{align*}
  \end{enumdef}
\end{definition}

Note that rank is not the same as arity (see Remark~\ref{rmk:rk1} below): instead, high rank is
related to dependence on higher-order variables, which in turn can informally be seen as the
difference between a structure and a random structure (see Section~\ref{subsec:highervar} for
further intuition).

\begin{remark}\label{rmk:rk1}
  As mentioned in the introduction, the setup of Definition~\ref{def:hypotheses} in particular should
  cover the cases of $k$-hypergraphs and structures in a finite relational language, so we formalize
  this here.

  For (measurable) $k$-hypergraphs over some Borel space $\Upsilon$, we simply take $\Omega$ as the
  Borel template given by letting $\Omega_1\df\Upsilon$ and letting each $\Omega_i$ with $i\geq 2$
  be the Borel space with only one point, and we take $\Lambda$ to be $\{0,1\}$, equipped with the
  discrete $\sigma$-algebra. Then there is a natural one-to-one correspondence between measurable
  $k$-hypergraphs over $\Upsilon$ and $k$-ary hypotheses $F\in\cF_k(\Omega,\Lambda)$ satisfying:
  \begin{itemize}
  \item irreflexivity: for every $x\in\cE_k(\Omega)$, if $x_{\{i\}}=x_{\{j\}}$ for some $i\neq j$,
    then $F(x) = 0$;
  \item symmetry: $F$ is $S_k$-invariant in the sense that $F(\sigma^*(x)) = F(x)$ for every
    $x\in\cE_k(\Omega)$ and every $\sigma\in S_k$.
  \end{itemize}
  In this correspondence, a $k$-hypergraph $H$ corresponds to $F_H$ given by
  \begin{align*}
    F_H(x) & \df \One[(x_{\{1\}},\ldots,x_{\{k\}})\in E^H] \qquad (x\in\cE_k(\Omega)),
  \end{align*}
  that is, $F_H$ is morally the adjacency tensor of $H$.

  More generally, we can capture (measurable) structures in a finite relational language $\cL$ over
  some Borel space $\Upsilon$ in a similar way: first, we define $\Omega$ as before and since $\cL$
  is finite, we can let $k\in\NN_+$ be the maximum arity of the predicates of $\cL$ and let
  $\Lambda$ be the (finite) set of all structures in $\cL$ over $[k]$, equipped with the discrete
  $\sigma$-algebra. Then there is a natural injection from the set of measurable structures in $\cL$
  over $\Upsilon$ to the set $\cF_k(\Omega,\Lambda)$ of $k$-ary hypotheses that maps $M$ to
  $F_M$ defined by letting $F_M(x)$ be the unique structure over $[k]$ such
  that $[k]\ni i\mapsto x_{\{i\}}\in\Upsilon$ is an embedding of $F_M(x)$ into $M$.

  Note that these constructions produce $k$-ary hypotheses $F$ with two very important properties:
  $\Lambda$ is finite and $\rk(F)\leq 1$ (the latter follows since $\Omega_i$ has a single point for
  every $i\geq 2$).

  Finally, one may wonder if there is no advantage in picking $k$ to be strictly larger than the
  maximum arity of the predicate symbols in $\cL$. This will be explored in a future work.
\end{remark}

In the introduction, all notions of PAC learnability used what is called the $0/1$-loss, that is, if
the adversary picked some $F$ and our learner outputted $H$, then whenever $F(x)\neq H(x)$, we were
penalized by one unit. The general setup of classic PAC learnability allows for more general loss
functions and in the definition below we allow the same generality.

\begin{definition}[$k$-ary loss functions]
  Let $\Omega$ be a Borel template, $\Lambda$ be a non-empty Borel space and $k\in\NN_+$.
  \begin{enumdef}
  \item A \emph{$k$-ary loss function} over $\Lambda$ is a measurable function
    $\ell\colon\cE_k(\Omega)\times\Lambda^{S_k}\times\Lambda^{S_k}\to\RR_{\geq 0}$.
  \item For a $k$-ary loss function $\ell$, we define
    \begin{align*}
      \lVert\ell\rVert_\infty
      & \df
      \sup_{\substack{x\in\cE_k(\Omega)\\y,y'\in\Lambda^{S_k}}} \ell(x,y,y'),
      &
      s(\ell)
      & \df
      \inf_{\substack{x\in\cE_k(\Omega)\\y,y'\in\Lambda^{S_k}\\y\neq y'}} \ell(x,y,y').
    \end{align*}
  \item A $k$-ary loss function $\ell$ is:
    \begin{description}[format={\normalfont\textit}]
    \item[bounded] if $\lVert\ell\rVert_\infty < \infty$.
    \item[separated] if $s(\ell) > 0$ and $\ell(x,y,y) = 0$ for every $x\in\cE_k(\Omega)$ and every
      $y\in\Lambda^{S_k}$.
    \item[symmetric] if it is $S_k$-invariant in the sense that
      \begin{align*}
        \ell(\sigma^*(x),\sigma^*(y), \sigma^*(y')) & = \ell(x,y,y')
      \end{align*}
      for every $y,y'\in\Lambda^{S_k}$ and every $\sigma\in S_k$.
    \end{description}
  \item For a $k$-ary loss function $\ell$, hypotheses $F,H\in\cF_k(\Omega,\Lambda)$ and a
    probability template $\mu\in\Pr(\Omega)$, the \emph{total loss} of $H$ with respect to $\mu$,
    $F$ and $\ell$ is
    \begin{align*}
      L_{\mu,F,\ell}(H) & \df \EE_{\rn{x}\sim\mu^k}[\ell(\rn{x},H^*_k(\rn{x}),F^*_k(\rn{x}))].
    \end{align*}
  \item We say that $F\in\cF_k(\Omega,\Lambda)$ is \emph{realizable} in
    $\cH\subseteq\cF_k(\Omega,\Lambda)$ with respect to a $k$-ary loss function $\ell$ and
    $\mu\in\Pr(\Omega)$ if $\inf_{H\in\cH} L_{\mu,F,\ell}(H) = 0$, i.e., $F$ can be approximated by
    elements of $\cH$ in terms of total loss.
  \item The \emph{$k$-ary $0/1$-loss function} over $\Lambda$ is defined as
    $\ell_{0/1}(x,y,y')\df\One[y\neq y']$.
  \end{enumdef}
\end{definition}

The value of a $k$-ary loss function $\ell(x,y,y')$ should be interpreted as the penalty we incur if
the point was $x$, we guessed a function $H$ with $H^*_k(x)=y$ and the adversary's function $F$ had
$F^*_k(x)=y'$. Note that a priori a $k$-ary loss function can be extremely nonsensical for learning
purposes. For example, we could incur penalties even when we guess correctly ($\ell(x,y,y) > 0$) or
even worse we could incur no penalty when guessing incorrectly ($\ell(x,y,y')=0$ for some $y\neq
y'$). Of course, if $\ell$ is separated, these problems cannot happen.

Another non-sense that can arise is exclusive to high-arity and concerns the orientation of our loss
computation, namely, if we consider an element
$(x,y,y')\in\cE_k(\Omega)\times\Lambda^{S_k}\times\Lambda^{S_k}$ and its permutation
$(\sigma^*(x),\sigma^*(y),\sigma^*(y'))$ by some $\sigma\in S_k$, there is a priori no guarantee
that $\ell$ assigns the same penalty to them. Loss functions that do are called symmetric. Note also
that symmetry of $\ell$ is only with respect to the order of the tuple and \emph{not} about symmetry
between the last two arguments of $\ell$ (i.e., it says nothing about how $\ell(x,y,y')$ compares
with $\ell(x,y',y)$).

\begin{definition}[Learning algorithms and $k$-PAC learnability]
  Let $\Omega$ be a Borel template, $\Lambda$ be a non-empty Borel space and
  $\cH\subseteq\cF_k(\Omega,\Lambda)$ be a $k$-ary hypothesis class.
  \begin{enumdef}
  \item A \emph{($k$-ary) $V$-sample} with respect to $\Omega$ and $\Lambda$ is an element of
    $\cE_V(\Omega)\times\Lambda^{(V)_k}$.
  \item A \emph{($k$-ary) learning algorithm} for $\cH$ is a measurable function
    \begin{align*}
      \cA\colon
      \bigcup_{m\in\NN} (\cE_m(\Omega)\times\Lambda^{([m])_k})
      \to
      \cH.
    \end{align*}

    We want to interpret $\cA$ as receiving a $k$-ary $[m]$-sample and outputting an element of
    $\cH$ that ought to have small total loss with high probability.\footnote{Since we enforce our
      algorithms to output elements of $\cH$ as opposed to an arbitrary element of
      $\cF_k(\Omega,\Lambda)$, the learning notion that we define is called \emph{proper}, even
      though we will omit this qualifier from the terminology. However, we point out that our
      non-learnability result, Proposition~\ref{prop:partkPAC->VCN}, covers even improper learning
      where the algorithm can output an arbitrary element of $\cF_k(\Omega,\Lambda)$. Thus, the
      entire chain of equivalences in our main theorems also involves the improper versions of all
      PAC learning notions.}
  \item We say that $\cH$ is \emph{$k$-PAC learnable} with respect to a $k$-ary loss function
    $\ell\colon\cE_k(\Omega)\times\Lambda^{S_k}\times\Lambda^{S_k}\to\RR_{\geq 0}$ if there exist a
    learning algorithm $\cA$ for $\cH$ and a function
    $m^{\PAC}_{\cH,\ell,\cA}\colon(0,1)^2\to\RR_{\geq 0}$ such that for every
    $\epsilon,\delta\in(0,1)$, every $\mu\in\Pr(\Omega)$ and every $F\in\cF_k(\Omega,\Lambda)$ that
    is realizable in $\cH$ with respect to $\ell$ and $\mu$, we have
    \begin{align}\label{eq:kPAC}
      \PP_{\rn{x}\sim\mu^m}[L_{\mu,F,\ell}(\cA(\rn{x}, F^*_m(\rn{x})))\leq\epsilon] & \geq 1 - \delta
    \end{align}
    for every integer $m\geq m^{\PAC}_{\cH,\ell,\cA}(\epsilon,\delta)$. A learning algorithm $\cA$ satisfying the
    above is called a \emph{$k$-PAC learner} for $\cH$ with respect to $\ell$.
  \item A \emph{randomized ($k$-ary) learning algorithm} for $\cH$ is a measurable function
    \begin{align*}
      \cA\colon
      \bigcup_{m\in\NN} (\cE_m(\Omega)\times\Lambda^{([m])_k}\times [R_\cA(m)])
      \to
      \cH,
    \end{align*}
    where $R_\cA\colon\NN\to\NN_+$ and $[t]$ is equipped with the discrete $\sigma$-algebra. (The extra
    parameter in $\cA$ should be thought of as a source of randomness.)
  \item We define the notions of \emph{$k$-PAC learnability with randomness}, $m^{\PACr}_{\cH,\ell,\cA}$ and
    of a \emph{randomized $k$-PAC learner} analogously to the non-random case, by requiring instead
    $\cA$ to be a randomized learning algorithm and instead of~\eqref{eq:kPAC}, we require\footnote{Even
      though $\cA$ might not be computable, note that our setup imposes a bound in the complexity of the
      randomness of $\cA$, namely, on a fixed $[m]$-sample $(x,y)$, the distribution of the random
      output of $\cA$ must be the pushforward of a uniform distribution on $[R_\cA(m)]$. In particular,
      this means that it must have finite support of size at most $R_\cA(m)$ and the probability of
      each point must be an integer multiple of $1/R_\cA(m)$. While the latter property will not be
      important to us, the former property of a uniform bound on the size of the support of the
      distribution will be crucial for derandomization in Proposition~\ref{prop:derand}.}
    \begin{align*}
      \PP_{\rn{x}\sim\mu^m}[
        \EE_{\rn{b}\sim U(R_\cA(m))}[L_{\mu,F,\ell}(\cA(\rn{x}, F^*_m(\rn{x}),\rn{b}))]
        \leq\epsilon]
      & \geq
      1 - \delta,
    \end{align*}
    for every integer $m\geq m^{\PACr}_{\cH,\ell,\cA}(\epsilon,\delta)$, where $U(t)$ is the uniform
    probability measure on $[t]$. Note that if $R_\cA$ is the constant $1$ function (which we denote
    by $R_\cA\equiv 1$), then we retrieve the non-random version of $k$-PAC learnability.
  \end{enumdef}
\end{definition}

Similarly to~\cite{SB14}, even though we use the term ``algorithm'', we make no assumptions about
its complexity nor even about its computability. However, we point out that in all of our reductions
between different notions of PAC learnability that require producing an algorithm $\cA$ given that
another algorithm $\cA'$ exists, if $\cA'$ is computable, then so will be $\cA$. In fact, with the
notable exception of the reductions of Section~\ref{sec:derand} (see Remark~\ref{rmk:complexity}),
all reductions will have time-complexity polynomial in $m$.

\medskip

Let us now discuss agnostic PAC learnability. In classic PAC theory, agnostic PAC learning differs
from usual PAC learning in two ways:
\begin{description}[wide]
\item[Agnostic loss functions are more general:] instead of the penalty $\ell(x,H(x),F(x))$ when we
  observed the point $x$, made the guess $H$ and the correct function was $F$, agnostic functions in
  classic PAC are of the form $\ell\colon\cH\times X\times Y\to\RR_{\geq 0}$, that is, in the same
  situation above, we incur penalty $\ell(H,x,F(x))$.
\item[The adversary plays distributions on $X\times Y$:] instead of the adversary picking $F\in\cH$
  and $\mu\in\Pr(X)$ and providing i.i.d.\ samples of the form $(\rn{x},F(\rn{x}))$ with
  $\rn{x}\sim\mu$, they pick instead $\nu\in\Pr(X\times Y)$ and provide i.i.d.\ samples directly
  $(\rn{x},\rn{y})\sim\nu$; of course, since now $\nu$ does not come from a function, it might not
  be realizable: it could be that no single function $H\in\cH$ has small total loss, so our agnostic
  PAC learning task is instead to get very close to the best possible $H\in\cH$. More precisely, in
  the classic PAC setting, $\cH\subseteq Y^X$ is agnostically PAC learnable if there exists a
  function $\cA\colon\bigcup_{m\in\NN} (X^m\times Y^m)\to\cH$ such that for every
  $\epsilon,\delta\in(0,1)$, there exists $m^{\agPAC}_{\cH,\ell,\cA}(\epsilon,\delta)$ such that for
  every $\nu\in\Pr(X\times Y)$, we have
  \begin{align*}
    \PP_{(\rn{x},\rn{y})\sim\nu^m}[
      L_{\nu,\ell}(\cA(\rn{x}, F(\rn{x}_i)_{i=1}^m))
      \leq \inf_{H\in\cH} L_{\nu,\ell}(H) + \epsilon
    ] & \geq 1 - \delta,
  \end{align*}
  for every integer $m\geq m^{\agPAC}_{\cH,\ell,\cA}(\epsilon,\delta)$, where
  \begin{align*}
    L_{\nu,\ell}(H) & \df \EE_{(\rn{x},\rn{y})\sim\nu}[\ell(H,\rn{x},\rn{y})] \qquad(H\in\cH).
  \end{align*}
  The ``realizable agnostic'' setting, i.e., when the distribution $\nu\in\Pr(X\times Y)$ is
  required to satisfy $\inf_{H\in\cH} L_{\nu,\ell}(H)=0$, is sometimes called ``PAC learning noisy
  data''.

  Obviously, if $\ell,\mu,F$ come from the non-agnostic setting, then by defining the agnostic loss
  function $\ell^{\ag}(H,x,y)\df\ell(x,H(x),F(y))$ and defining $\nu$ as the distribution of
  $(\rn{x},F(\rn{x}))$ shows that the non-agnostic setting is a particular case of the agnostic
  setting (since realizability is equivalent to $\inf_{H\in\cH} L_{\nu,\ell^{\ag}}(H)=0$).

  When we move to the $k$-ary case, we have to ask ourselves: what is the correct analogue of the
  distribution $\nu$ on $X\times Y$? Two very naive guesses would be a distribution $\nu$ over
  $\cE_k(\Omega)\times\Lambda$ or $\cE_k(\Omega)\times\Lambda^{S_k}$, but then how do we pick an
  $[m]$-sample from $\nu$? Picking i.i.d.\ copies does not make sense here as the whole point of
  high-arity PAC learning is to use correlations to improve learning.

  Instead, a more instructive line of thought is to focus on an agnostic framework that generalizes
  the non-agnostic one: recall that we can see our $[m]$-samples as the restriction of an
  $\NN_+$-sample to $[m]$ and by Remark~\ref{rmk:exchangeable}, if $\rn{x}\sim\mu^{\NN_+}$, then the
  distribution of $(\rn{x},F^*_{\NN_+}(\rn{x}))$ is local and exchangeable. This inspires us to
  define the agnostic $k$-ary case by letting the adversary pick a distribution $\nu$ over
  $\cE_{\NN_+}(\Omega)\times\Lambda^{(\NN_+)_k}$ that is local and exchangeable and let our learning
  algorithm receive the restriction of an $\NN_+$-sample from $\nu$ to $[m]$.

  By construction and Remark~\ref{rmk:exchangeable}, this will capture the non-agnostic $k$-PAC
  learning setting (see Section~\ref{subsec:agnonag}), but how wild can such a distribution $\nu$
  be? For example, we can leverage Remark~\ref{rmk:exchangeable} itself to produce a more exotic
  $\nu$ with the property above: let $\Omega'$ be another Borel template, let $\mu\in\Pr(\Omega)$
  and $\mu'\in\Pr(\Omega')$ and let $F\in\cF_k(\Omega\otimes\Omega',\Lambda)$ be a $k$-ary
  hypothesis over the product Borel template $\Omega\otimes\Omega'$, then by
  Remark~\ref{rmk:exchangeable}, for $(\rn{x},\rn{x'})\sim(\mu\otimes\mu')^{\NN_+}$ the distribution
  of $(\rn{x},\rn{x'},F^*_{\NN_+}(\rn{x},\rn{x'}))$ is local and exchangeable, which means that if
  $\nu$ is the marginal distribution $(\rn{x},F^*_{\NN_+}(\rn{x},\rn{x'}))$, then $\nu$ is also
  local and exchangeable.

  Similarly to our use of the Aldous--Hoover Theorem~\cite{Hoo79,Ald85,Ald81} to setup non-agnostic
  $k$-PAC learnability, exchangeability theory says that every distribution $\nu$ over
  $\cE_{\NN_+}(\Omega)\times\Lambda^{(\NN_+)_k}$ that is local and exchangeable is of the form above
  for some $(\Omega',\mu,\mu',F)$:
  \begin{proposition}[Informal version of Proposition~\ref{prop:agexch}]
    Let $k\in\NN_+$, let $\Omega$ be a Borel template, let $\Lambda$ be a non-empty Borel space and
    let $\rn{x}$ and $\rn{y}$ be random elements of $\cE_{\NN_+}(\Omega)$ and $\Lambda^{(\NN_+)_k}$,
    respectively such that $(\rn{x},\rn{y})$ is local and exchangeable. Suppose further that
    $\rn{x}\sim\mu^{\NN_+}$ for some $\mu\in\Pr(\Omega)$.

    Then there exist a Borel template $\Omega'$, $\mu'\in\Pr(\Omega')$ and
    $F\in\cF_k(\Omega\otimes\Omega',\Lambda)$ such that for $\rn{x'}\sim(\mu')^{\NN_+}$ picked
    independently from $\rn{x}$, we have
    \begin{align*}
      (\rn{x},\rn{y}) & \sim (\rn{x}, F^*_{\NN_+}(\rn{x},\rn{x'})).
    \end{align*}
  \end{proposition}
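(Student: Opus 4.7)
The plan is to combine the joint Aldous--Hoover representation theorem for local exchangeable arrays with a standard noise-outsourcing argument, in order to encode the conditional distribution of $\rn{y}$ given $\rn{x}$ through an independent auxiliary product template. First I would apply the full Aldous--Hoover theorem (\cite{Hoo79,Ald85,Ald81}; see also \cite[Theorem~7.22]{Kal05}) to the combined process $(\rn{x},\rn{y})$, regarded as a single local exchangeable array indexed by $r(\NN_+)$ together with $(\NN_+)_k$. This yields an i.i.d.\ family $(\rn{z}_A)_{A\in r(\NN_+)}$ of $[0,1]$-valued uniforms and measurable equivariant functions $g_i$ and $\widetilde{F}$ so that, up to joint distribution, $\rn{x}_A = g_{\lvert A\rvert}((\rn{z}_B)_{B\subseteq A,\,B\neq\emptyset})$ and $\rn{y}_\alpha = \widetilde{F}((\rn{z}_B)_{B\subseteq\im\alpha,\,B\neq\emptyset})$. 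Because $\rn{x}\sim\mu^{\NN_+}$, the $\rn{x}_A$ are actually independent across $A$ with $\rn{x}_A\sim\mu_{\lvert A\rvert}$; using that each $\Omega_i$ is standard Borel, I can pick measurable maps $\phi_i\colon[0,1]\to\Omega_i$ pushing Lebesgue measure to $\mu_i$ and replace the representation of $\rn{x}$ by $\rn{x}_A = \phi_{\lvert A\rvert}(\rn{u}_A)$ for independent uniforms $\rn{u}_A$.

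Next, a noise-outsourcing argument (e.g.\ \cite[Theorem~6.10]{Kal05}) provides independent uniform auxiliary variables $\rn{v}_A$ on $[0,1]$, one per $A\in r(\NN_+)$, together with a measurable reconstruction that recovers each $\rn{z}_A$ from $(\rn{u}_B,\rn{v}_B)_{B\subseteq A,\,B\neq\emptyset}$ while preserving the joint distribution of $(\rn{x},\rn{y})$. Setting $\Omega'_i\df[0,1]$ equipped with Lebesgue measure $\mu'_i$ makes $\rn{x'}\df(\rn{v}_A)_{A\in r(\NN_+)}\sim(\mu')^{\NN_+}$ independent of $\rn{x}$, and under the identification of Definition~\ref{def:Boreltemplate:identification}, an element of $\cE_k(\Omega\otimes\Omega')$ is precisely a pair $(x,x')$. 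I then define $F\in\cF_k(\Omega\otimes\Omega',\Lambda)$ by composing the reconstruction map with $\widetilde{F}$, so that $F^*_{\NN_+}(\rn{x},\rn{x'})\sim\rn{y}$ jointly with $\rn{x}$, which is the desired conclusion.

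The main obstacle is carrying out the outsourcing in a way that is simultaneously local, $S_{\NN_+}$-equivariant, and factorized by subset size, so that the resulting $F$ is a genuine $k$-ary hypothesis and $\mu'$ is genuinely a product template on $\Omega'$. This is best handled inductively on $\lvert A\rvert$: at each size, a canonical regular-conditional choice of outsourcing map can be arranged to commute with the symmetric group action, exploiting that the $\mu$-marginals of $\rn{x}$ already factorize across $A$ with distribution depending only on $\lvert A\rvert$, and that the joint exchangeability of $(\rn{x},\rn{y})$ under $S_{\NN_+}$ transfers to the auxiliary randomness. Measurability, $S_k$-equivariance of $F$, and locality of the dependence are then routine to verify, corresponding to the technical content deferred by the authors to Appendix~\ref{sec:agexch}.
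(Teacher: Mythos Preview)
Your approach is sound but takes a genuinely different route from the paper's. You use locality \emph{upfront}, applying the dissociated form of Aldous--Hoover to obtain a representation with no $\varnothing$-indexed variable, and then carry out an equivariant, hierarchical noise-outsourcing to split each latent $\rn{z}_A$ into an $\rn{x}$-measurable part and an independent auxiliary $\rn{v}_A$. The paper instead black-boxes Kallenberg's Lemma~7.26, which already delivers the independent auxiliary template $\rn{\widetilde{x'}}$ but in the \emph{non-local} setting (so with extra coordinates $\rn{\widetilde{x}}_\varnothing,\rn{\widetilde{x'}}_\varnothing$); the paper's actual work is then an ad~hoc variance argument showing that locality of $(\rn{x},\rn{y})$ forces $\rn{y}$ to be independent of these $\varnothing$-indexed coordinates, after which they can be frozen to constants and dropped.

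The trade-off is that the paper's route is shorter because the equivariant outsourcing you flag as ``the main obstacle'' is exactly the content of Lemma~7.26, so the paper avoids reproving it; your route is more self-contained but essentially re-derives that lemma in the local case. One minor wrinkle in your writeup: introducing separate uniforms $\rn{u}_A$ with $\rn{x}_A=\phi_{\lvert A\rvert}(\rn{u}_A)$ is unnecessary and slightly confusing, since the $\rn{u}_A$ bear no a~priori relation to the Aldous--Hoover latents $\rn{z}_A$; the outsourcing should be done directly against $(\rn{x}_B)_{B\subseteq A}$, inducting on $\lvert A\rvert$ and using that $\rn{z}_A$ is independent of $(\rn{z}_B)_{B\subsetneq A}$.
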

  This justifies why we use this representation directly in Definitions~\ref{def:agloss}
  and~\ref{def:agkPAC} below.\footnote{The reader tempted to sanity check this fact using the case
    $k=1$ by trying to describe any distribution $\nu$ on $X\times Y$ as
    $(\rn{x},F(\rn{x},\rn{x'}))$ for some $\rn{x}\sim\mu$, some $\rn{x'}\sim\mu'$ and some $F\colon
    X\times X'\to Y$ will find it necessary to make sense of the conditional distribution of
    $(\rn{x},\rn{y})\sim\nu$ given a potentially zero-measure event $\rn{x}=x$; the formalization of
    such a concept requires the Disintegration Theorem, see Theorem~\ref{thm:disintegration}.}
\end{description}

\begin{definition}[$k$-ary agnostic loss functions]\label{def:agloss}
  Let $\Omega$ be a Borel template, let $\Lambda$ be a non-empty Borel space, let $k\in\NN_+$ and
  let $\cH\subseteq\cF_k(\Omega,\Lambda)$ be a $k$-ary hypothesis class.
  \begin{enumdef}
  \item A \emph{$k$-ary agnostic loss function} over $\Lambda$ with respect to $\cH$ is a measurable
    function $\ell\colon\cH\times\cE_k(\Omega)\times\Lambda^{S_k}\to\RR_{\geq 0}$.
  \item For a $k$-ary agnostic loss function $\ell$, we define
    \begin{align*}
      \lVert\ell\rVert_\infty
      & \df
      \sup_{\substack{H\in\cH\\x\in\cE_k(\Omega)\\y\in\Lambda^{S_k}}} \ell(H,x,y).
    \end{align*}
  \item A $k$-ary agnostic loss function $\ell$ is:
    \begin{description}[format={\normalfont\textit}]
    \item[bounded] if $\lVert\ell\rVert_\infty < \infty$.
    \item[symmetric] if it is $S_k$-invariant in the sense that
      \begin{align*}
        \ell(H,\sigma^*(x),\sigma^*(y)) & = \ell(H,x,y)
      \end{align*}
      for every $H\in\cH$, every $x\in\cE_k(\Omega)$ and every $y\in\Lambda^{S_k}$.
    \end{description}
  \item For a $k$-ary agnostic loss function $\ell$, a hypothesis $H\in\cH$, a Borel template
    $\Omega'$, probability templates $\mu\in\Pr(\Omega)$ and $\mu'\in\Pr(\Omega')$ and a hypothesis
    $F\in\cF_k(\Omega\otimes\Omega',\Lambda)$ over the product space, the \emph{total loss} of $H$
    with respect to $\mu$, $\mu'$, $F$ and $\ell$ is
    \begin{align*}
      L_{\mu,\mu',F,\ell}(H)
      & \df
      \EE_{(\rn{x},\rn{x'})\sim(\mu\otimes\mu')^k}[\ell(H, \rn{x}, F^*_k(\rn{x},\rn{x'}))].
    \end{align*}
  \item The \emph{$k$-ary agnostic $0/1$-loss function} over $\Lambda$ with respect to $\cH$ is defined as
    \begin{align*}
      \ell_{0/1}(H,x,y) & \df \One[H^*_k(x)\neq y].
    \end{align*}
  \end{enumdef}
\end{definition}

\begin{definition}[Agnostic $k$-PAC learnability]\label{def:agkPAC}
  Let $\Omega$ be a Borel template, let $\Lambda$ be a non-empty Borel space, let
  $\cH\subseteq\cF_k(\Omega,\Lambda)$ be a $k$-ary hypothesis class and let
  $\ell\colon\cH\times\cE_k(\Omega)\times\Lambda^{S_k}\to\RR_{\geq 0}$ be a $k$-ary agnostic loss
  function.
  \begin{enumdef}
  \item We say that $\cH$ is \emph{agnostically $k$-PAC learnable} with respect to $\ell$ if there
    exist a learning algorithm $\cA$ for $\cH$ and a function
    $m^{\agPAC}_{\cH,\ell,\cA}\colon(0,1)^2\to\RR_{\geq 0}$ such that for every
    $\epsilon,\delta\in(0,1)$, every $\mu\in\Pr(\Omega)$, every Borel template $\Omega'$, every
    $\mu'\in\Pr(\Omega')$ and every $F\in\cF_k(\Omega\otimes\Omega',\Lambda)$, we have
    \begin{align}\label{eq:agkPAC}
      \PP_{(\rn{x},\rn{x'})\sim(\mu\otimes\mu')^m}[
        L_{\mu,\mu',F,\ell}(\cA(\rn{x}, F^*_m(\rn{x},\rn{x'})))
        \leq
        \inf_{H\in\cH} L_{\mu,\mu',F,\ell}(H) + \epsilon
      ]
      & \geq 1 - \delta
    \end{align}
    for every integer $m\geq m^{\agPAC}_{\cH,\ell,\cA}(\epsilon,\delta)$. A learning algorithm $\cA$
    satisfying the above is called an \emph{agnostic $k$-PAC learner} for $\cH$ with respect to
    $\ell$.
  \item We define the notions of \emph{agnostic $k$-PAC learnability with randomness},
    $m^{\agPACr}_{\cH,\ell,\cA}$ and of a \emph{randomized agnostic $k$-PAC learner} in the same manner,
    except that $\cA$ is instead a randomized learning algorithm and instead of~\eqref{eq:agkPAC}, we
    require
    \begin{align*}
      \PP_{(\rn{x},\rn{x'})\sim(\mu\otimes\mu')^m}[
        \EE_{\rn{b}\sim U(R_\cA(m))}[
          L_{\mu,\mu',F,\ell}(\cA(\rn{x}, F^*_m(\rn{x},\rn{x'}),\rn{b}))
        ]
        \leq
        \inf_{H\in\cH} L_{\mu,\mu',F,\ell}(H) + \epsilon
      ]
      & \geq 1 - \delta,
    \end{align*}
    for every integer $m\geq m^{\agPACr}_{\cH,\ell,\cA}(\epsilon,\delta)$.
  \end{enumdef}
\end{definition}

Similarly to classic agnostic PAC learning, the best possible hypotheses, which are not necessarily
an element of $\cH$, are the Bayes predictors; since these are not the main focus of the paper, we
will elaborate on the topic in Appendix~\ref{sec:Bayes}.

Both in our setting and classic PAC the agnostic loss function has an additional feature: namely, we
could have $\ell(H,x,y)\neq\ell(H',x,y)$ even when $H(x)=H'(x)$, that is, we could incur different
penalties based solely on which hypothesis we answered, even though they classify the point $x$ in
the same way. Intuitively, this is some kind of ``non-locality'': $\ell(H,x,y)$ can depend on $H$ in
a more complex way than just via the value $H(x)$. However, in applications of classic PAC learning,
this could be a desired feature: maybe one of $H$ or $H'$ is a more complex hypothesis and we want
to penalize it to prevent overfitting and one way to perform this is to use an agnostic loss
function of the form
\begin{align*}
  \ell(H,x,y) & \df \ell_r(x,H(x),y) + r(H),
\end{align*}
where $\ell_r$ is a non-agnostic loss function and $r\colon\cH\to\RR$ is a regularization term
(see~\cite[Chapter~13]{SB14}). The actual definition of locality below is a happy compromise that
says that except for regularization, $\ell(H,x,y)$ can only depend on $H$ via the value $H(x)$.

\begin{definition}[Locality and regularization]
  A $k$-ary agnostic loss function $\ell\colon\cH\times\cE_k(\Omega)\times\Lambda^{S_k}\to\RR_{\geq
    0}$ is \emph{local} if there exists a function $r\colon\cH\to\RR$ such that for every
  $F,H\in\cH$, every $x\in\cE_k(\Omega)$ and every $y\in\Lambda^{S_k}$, we have
  \begin{align*}
    F^*_k(x) = H^*_k(x) & \implies \ell(F,x,y) - r(F) = \ell(H,x,y) - r(H) \geq 0.
  \end{align*}
  A function $r$ satisfying the above is called a \emph{regularization term} of $\ell$.

  Equivalently, $\ell$ is local if and only if it can be factored as
  \begin{align*}
    \ell(H,x,y) & = \ell_r(x,H^*_k(x),y) + r(H)
  \end{align*}
  for some (non-agnostic) $k$-ary loss function
  $\ell_r\colon\cE_k(\Omega)\times\Lambda^{S_k}\times\Lambda^{S_k}\to\RR_{\geq 0}$ and some
  regularization term $r\colon\cH\to\RR$.
\end{definition}

With the aim of getting to the definition of $\VCN_k$-dimension, we now recall the definition of
Natarajan dimension.

\begin{definition}[Natarajan dimension~\cite{Nat89}]\label{def:Natdim}
  Let $\cF$ be a collection of functions of the form $X\to Y$ and let $A\subseteq X$.
  \begin{enumdef}
  \item We say that $\cF$ \emph{(Natarajan-)shatters} $A$ if there exist functions $f_0,f_1\colon
    A\to Y$ such that
    \begin{enumerate}
    \item for every $a\in A$, $f_0(a)\neq f_1(a)$;
    \item for every $U\subseteq A$, there exists $F_U\in\cF$ such that
      \begin{align*}
        F_U(a) & = f_{\One[a\in U]}(a)
      \end{align*}
      for every $a\in A$.
    \end{enumerate}
  \item The \emph{Natarajan dimension} of $\cF$ is defined as
    \begin{align*}
      \Nat(\cF) & \df \sup\{\lvert A\rvert \mid A\subseteq X\land\cF\text{ Natarajan-shatters } A\}.
    \end{align*}
  \end{enumdef}
\end{definition}

When $\lvert Y\rvert=2$, the definitions of shattering and dimension above are equivalent to those
of the \emph{Vapnik--Chervonenkis ($\VC$) dimension} ($\VC(\cF)$)~\cite{VC71}.

\begin{definition}[$\VCN_k$-dimension]
  Let $\Omega$ be a Borel template, $\Lambda$ be a non-empty Borel space, $k\in\NN_+$ and
  $\cH\subseteq\cF_k(\Omega,\Lambda)$ be a $k$-ary hypothesis class.
  \begin{enumdef}
  \item For $H\in\cF_k(\Omega,\Lambda)$ and $x\in\cE_{k-1}(\Omega)$, let
    \begin{align*}
      H^*_k(x,\place)\colon \prod_{A\in r(k)\setminus r(k-1)} X_{\lvert A\rvert}\to \Lambda^{S_k}
    \end{align*}
    be the function obtained from $H^*_k$ by fixing its $\cE_{k-1}(\Omega)$ arguments to be $x$ and
    let
    \begin{align}\label{eq:cHx}
      \cH(x) & \df \{H^*_k(x,\place) \mid H\in\cH\}.
    \end{align}
  \item The \emph{Vapnik--Chervonenkis--Natarajan $k$-dimension} of $\cH$ (\emph{$\VCN_k$-dimension}) is
    defined as
    \begin{align*}
      \VCN_k(\cH) & \df \sup_{x\in\cE_{k-1}(\Omega)} \Nat(\cH(x)).
    \end{align*}
  \end{enumdef}
\end{definition}

Note that $\VCN_k(\cH)$ is not the same as the Natarajan dimension of $\cH$ as a family of functions
$\cE_k(\Omega)\to\Lambda$. By definition, $\VCN_k(\cH)$ is always at most the Natarajan dimension of
$\cH$, but the next two examples show that the latter can be infinite while the former is finite.

\begin{example}\label{ex:matching}
  For $\Omega_1\df\NN$, $\Omega_i$ being a singleton for every $i\geq 2$ and $\Lambda=\{0,1\}$, the
  family of spanning subgraphs of a countably infinite matching
  \begin{align*}
    \cH & \df \{F_A \mid A\subseteq\NN\},
  \end{align*}
  where
  \begin{align*}
    F_A(x) & \df \One[\exists i\in A, \{x_{\{1\}},x_{\{2\}}\}=\{2i,2i+1\}]
    \qquad (A\subseteq\NN)
  \end{align*}
  satisfies $\VCN_2(\cH)=1$ and $\VC(\cH)=\infty$.

  The former holds since $\lvert\cH(z)\rvert=2$ for every $z\in\NN$ and the latter holds because the
  infinite set $\{x\in\cE_2(\Omega) \mid \exists i\in\NN, (x_{\{1\}}=2i\land x_{\{2\}}=2i+1)\}$ is
  shattered by $\cH$ as a family of functions $\cE_2(\Omega)\to\Lambda$.
\end{example}

\begin{example}\label{ex:boundeddegree}
  Let $\Omega_1$ be some Borel space, let $\Omega_i$ be a singleton for every $i\geq 2$ and let
  $\Lambda=\{0,1\}$. Let $d\in\NN$ and let $\cH\in\cF_2(\Omega,\Lambda)$ the set of all graphs on
  $\Omega_1$ whose maximum degree is at most $d$.

  It is straightforward to check that if $\Omega_1$ is infinite, then $\VC(\cH)=\infty$ (this is
  because any infinite collection of pairs that is pairwise disjoint is shattered by $\cH$). On the
  other hand, since the maximum degree of any graph in $\cH$ is at most $d$, we conclude that
  $\cH(x)$ never shatters any set of size $d+1$ for any $x\in\Omega_1$. Thus, we have
  $\VCN_2(\cH)\leq d$ (in fact, it is straightforward to check that we in fact have
  $\VCN_2(\cH)=\min\{d,\lvert\Omega_1\rvert-1\}$).
\end{example}

We conclude the section with the definition of flexibility of a loss function. Unfortunately, we
will only be able to provide some intuition on this definition in
Remark~\ref{rmk:flexibilityintuition}. At this point, we can only say that flexibility has two main
components: first, it will be used to produce what we call ``neutral symbols'' (see
Definition~\ref{def:neutsymb}), which intuitively are symbols that do not affect the learning task
(this is the role of~\eqref{eq:EEindepy} and~\eqref{eq:EEindepH}); second, the production of such
symbols can be done using only a finite amount of randomness (this is the role of the
conditions~\eqref{eq:GsimcN} and~\eqref{eq:GsimcNag} on $\cN$).

\begin{definition}[Flexibility]\label{def:flexible}
  Let $\Omega$ be a Borel template, $\Lambda$ be a non-empty Borel space, $k\in\NN_+$ and
  $\cH\subseteq\cF_k(\Omega,\Lambda)$ be a $k$-ary hypothesis class.
  \begin{enumdef}  
  \item A $k$-ary loss function
    $\ell\colon\cE_k(\Omega)\times\Lambda^{S_k}\times\Lambda^{S_k}\to\RR_{\geq 0}$ is
    \emph{flexible} if there exist a Borel template $\Sigma$, a probability template
    $\nu\in\Pr(\Sigma)$, a $k$-ary hypothesis $G\in\cF_k(\Omega\otimes\Sigma,\Lambda)$ and a
    measurable function
    \begin{align*}
      \cN\colon\bigcup_{m\in\NN} (\cE_m(\Omega)\times [R_\cN(m)])
      \to
      \Lambda^{([m])_k},
    \end{align*}
    where $R_\cN\colon\NN\to\NN_+$, such that for every $x\in\cE_k(\Omega)$ and every
    $y,y'\in\Lambda^{S_k}$, we have
    \begin{align}\label{eq:EEindepy}
      \EE_{\rn{z}\sim\nu^k}[\ell(x,y,G^*_k(x,\rn{z}))]
      & =
      \EE_{\rn{z}\sim\nu^k}[\ell(x,y',G^*_k(x,\rn{z}))]
    \end{align}
    and for every $m\in\NN$ and every $x\in\cE_m(\Omega)$, if $\rn{z}\sim\nu^m$ and $\rn{b}$ is
    picked uniformly in $[R_\cN(m)]$, then
    \begin{align}\label{eq:GsimcN}
      G^*_m(x,\rn{z}) & \sim \cN(x,\rn{b}).
    \end{align}
    In this case, we say that $(\Sigma,\nu,G,\cN)$ is a \emph{witness of flexibility} of $\ell$.

    Note that when $\ell$ is flexible with witness $(\Sigma,\nu,G,\cN)$, we can define a function
    $\ell^{\Sigma,\nu,G}\colon\cE_k(\Omega)\to\RR_{\geq 0}$ by
    \begin{align*}
      \ell^{\Sigma,\nu,G}(x)
      & \df
      \EE_{\rn{z}\sim\nu^k}[\ell(x,y,G^*_k(x,\rn{z}))]
      \qquad (x\in\cE_k(\Omega)),
    \end{align*}
    where $y$ is any point in $\Lambda^{S_k}$.
  \item A $k$-ary agnostic loss function
    $\ell\colon\cH\times\cE_k(\Omega)\times\Lambda^{S_k}\to\RR_{\geq 0}$ is \emph{flexible} if there
    exist a Borel template $\Sigma$, a probability template $\nu\in\Pr(\Sigma)$, a $k$-ary hypothesis
    $G\in\cF_k(\Omega\otimes\Sigma,\Lambda)$ and a measurable function
    \begin{align*}
      \cN\colon\bigcup_{m\in\NN} (\cE_m(\Omega)\times [R_\cN(m)])
      \to
      \Lambda^{([m])_k},
    \end{align*}
    where $R_\cN\colon\NN\to\NN_+$, such that for every $H,H'\in\cH$ and every $x\in\cE_k(\Omega)$,
    we have
    \begin{align}\label{eq:EEindepH}
      \EE_{\rn{z}\sim\nu^k}[\ell(H,x,G^*_k(x,\rn{z}))]
      & =
      \EE_{\rn{z}\sim\nu^k}[\ell(H',x,G^*_k(x,\rn{z}))]
    \end{align}
    and for every $m\in\NN$ and every $x\in\cE_m(\Omega)$, if $\rn{z}\sim\nu^m$ and $\rn{b}$ is
    picked uniformly in $[R_\cN(m)]$, then
    \begin{align}\label{eq:GsimcNag}
      G^*_m(x,\rn{z}) & \sim \cN(x,\rn{b}).
    \end{align}
    In this case, we say that $(\Sigma,\nu,G,\cN)$ is a \emph{witness of flexibility} of $\ell$.

    Similarly, when $\ell$ is flexible with witness $(\Sigma,\nu,G,\cN)$, we can define a function
    $\ell^{\Sigma,\nu,G}\colon\cE_k(\Omega)\to\RR_{\geq 0}$ by
    \begin{align}\label{eq:flexibleellSigmanuG}
      \ell^{\Sigma,\nu,G}(x)
      & \df
      \EE_{\rn{z}\sim\nu^k}[\ell(H,x,G^*_k(x,\rn{z}))]
      \qquad (x\in\cE_k(\Omega)),
    \end{align}
    where $H$ is any hypothesis in $\cH$.
  \end{enumdef}
\end{definition}

\begin{remark}\label{rmk:flexibilityfinite}
  An easy way of obtaining the $\cN$ required in the definition of flexibility of $\ell$ is
  when~\eqref{eq:EEindepy} (\eqref{eq:EEindepH}, respectively) holds with some $(\Sigma,\nu,G)$ such
  that all $\Sigma_i$ are finite and all $\nu_i$ are uniform measures. This is
  because~\eqref{eq:GsimcN} (\eqref{eq:GsimcNag}, respectively) follows by taking
  \begin{align*}
    R_\cN(m)
    & \df
    \prod_{A\in r(m)}\lvert\Sigma_{\lvert A\rvert}\rvert
    =
    \prod_{i=1}^k\lvert\Sigma_i\rvert^{\binom{m}{i}},
    &
    \cN(x,b) & \df G(x,\phi_m(b)), 
  \end{align*}
  where $\phi_m\colon[R_\cN(m)]\to\prod_{A\in r(m)}\Sigma_{\lvert A\rvert}$ is any fixed
  bijection\footnote{In an older version of the manuscript, the definition of flexibility instead
    asked for this stronger property that each $\Sigma_i$ is finite and each $\nu_i$ is the uniform
    measure. However, this made proving flexibility of the $0/1$-loss difficult in
    Lemma~\ref{lem:flexibility} below (in fact, the proof in the older version was incorrect). The
    slightly weaker definition of flexibility as we have now makes the $0/1$-loss flexible while
    still being a definition strong enough to create neutral symbols in
    Proposition~\ref{prop:neutsymb} later.\label{ftn:flexibility}}.
\end{remark}

\begin{lemma}[Flexibility of $\ell_{0/1}$]\label{lem:flexibility}
  Both the agnostic and non-agnostic $0/1$-loss functions are flexible when $\Lambda$ is finite.
\end{lemma}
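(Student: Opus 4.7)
The plan is to exhibit a single witness $(\Sigma, \nu, G, \cN)$ of flexibility that works simultaneously for both the agnostic and non-agnostic $0/1$-loss. Since $\ell_{0/1}$ only checks whether its last two arguments agree, it suffices to arrange $G$ so that for every $x\in\cE_k(\Omega)$ the vector $G^*_k(x, \rn{z})$ is uniformly distributed on $\Lambda^{S_k}$: then $\PP[G^*_k(x, \rn{z}) = y]$ equals $|\Lambda|^{-k!}$ independent of $y$, which immediately yields \eqref{eq:EEindepy}, and specializing to $y = H^*_k(x)$, $y' = (H')^*_k(x)$ also gives \eqref{eq:EEindepH}.

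The main obstacle is that the $S_k$-action on $\cE_k(\Omega\otimes\Sigma)$ fixes the $[k]$-coordinate, so $G$ cannot depend on $\alpha$ through $y_{[k]}$ alone; a lower-order tiebreaker is needed to route $G$ to a different entry of a top-level random source for each $\alpha \in S_k$. To this end, set $\Sigma_1\df[0,1]$ with Lebesgue measure, $\Sigma_k\df\Lambda^{S_k}$ with the uniform measure (finite as $\Lambda$ is), and $\Sigma_i$ a singleton otherwise. Define $G(y)\df z_{[k]}[\sigma_y]$, where $\sigma_y\in S_k$ is the almost-surely-defined permutation satisfying $z_{\{\sigma_y(1)\}}<\cdots<z_{\{\sigma_y(k)\}}$ (with any fixed value on the null event of ties). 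A direct check combining Lemma~\ref{lem:F*Vequiv} with the identity $\sigma_{\alpha^* y}=\alpha^{-1}\sigma_y$ gives $G(\alpha^* y)=z_{[k]}[\alpha^{-1}\sigma_y]$ almost surely. Since $\alpha\mapsto\alpha^{-1}\sigma_y$ is a bijection of $S_k$, the vector $G^*_k(x,\rn{z})$ is $z_{[k]}$ precomposed with a random bijection of $S_k$ that is independent of $z_{[k]}$; hence $G^*_k(x,\rn{z})$ is uniform on $\Lambda^{S_k}$, as desired.

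It remains to construct $\cN$. The distribution of $G^*_m(x,\rn{z})$ depends on the singleton coordinates only through their joint rank order $\rn{\pi}\in S_m$ (which is uniform on $S_m$) together with the independent uniform values $(z_A)_{A\in\binom{[m]}{k}}\in(\Lambda^{S_k})^{\binom{[m]}{k}}$. Set $R_\cN(m)\df m!\cdot|\Lambda|^{k!\binom{m}{k}}$, fix a bijection $[R_\cN(m)]\to S_m\times(\Lambda^{S_k})^{\binom{[m]}{k}}$, and let $\cN(x,b)_\alpha$ read off the coordinate of the corresponding $\lambda_{\alpha([k])}\in\Lambda^{S_k}$ indexed by the sorting permutation of the restriction of $\pi$ to $\alpha([k])$. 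By construction the distribution of $\cN(x,\rn{b})$ for $\rn{b}\sim U(R_\cN(m))$ equals that of $G^*_m(x,\rn{z})$, establishing \eqref{eq:GsimcN} in the non-agnostic setting and \eqref{eq:GsimcNag} in the agnostic setting, and completing the witness.
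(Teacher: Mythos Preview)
Your witness $(\Sigma,\nu,G)$ and the sorting-permutation argument showing $G^*_k(x,\rn z)$ is uniform on $\Lambda^{S_k}$ are exactly what the paper does. There is one small gap: when $k=1$ your prescription ``$\Sigma_1\df[0,1]$, $\Sigma_k\df\Lambda^{S_k}$'' assigns two incompatible spaces to the same index, and indeed the tiebreaker is superfluous there; the paper handles $k=1$ as a separate trivial case with $\Sigma_1=\Lambda$ and $G(x,z)=z_{[1]}$.

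For $\cN$ the paper takes a slightly sharper route: it observes that $G^*_m(x,\rn z)$ is in fact uniform on $\Lambda^{([m])_k}$ for \emph{every} $m$ (conditioning on the singleton coordinates fixes all the sorting permutations, after which the independent uniform top-level variables $z_A$ yield independent uniform blocks), so one can take $R_\cN(m)=\lvert\Lambda\rvert^{(m)_k}$ and let $\cN(x,b)$ be a fixed bijection onto $\Lambda^{([m])_k}$. Your construction, which simulates the rank order $\pi\in S_m$ together with the top-level values, is also correct but uses $R_\cN(m)=m!\cdot\lvert\Lambda\rvert^{(m)_k}$, carrying an unnecessary factor of $m!$.
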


\begin{proof}
  We consider two cases.

  When $k=1$, we let $\Sigma_1\df\Lambda$, equipped with discrete $\sigma$-algebra, let all other
  $\Sigma_i$ have a single point, let each $\nu_i$ be the uniform measure on $\Sigma_i$ and let
  $G\in\cF_k(\Omega\otimes\Sigma,\Lambda)$ be given by
  \begin{align*}
    G(x,z) & \df z_{[1]}
    \qquad (x\in\cE_1(\Omega), z\in\cE_1(\Sigma)).
  \end{align*}
  Then for every $x\in\cE_1(\Omega)$, if $\rn{z}\sim\nu^1$, then $G^*_1(x,\rn{z})$ is uniformly
  distributed in $\Lambda^{S_1}$, so for the non-agnostic $0/1$-loss function, we have
  \begin{align*}
    \EE_{\rn{z}\sim\nu^1}[\ell_{0/1}(x,y,G^*_1(x,\rn{z}))]
    & =
    1 - \frac{1}{\lvert\Lambda\rvert}
    \qquad (x\in\cE_1(\Omega),y\in\Lambda^{S_1})
  \end{align*}
  and for the agnostic $0/1$-loss function with respect to $\cH\subseteq\cF_1(\Omega,\Lambda)$, we
  have
  \begin{align*}
    \EE_{\rn{z}\sim\nu^1}[\ell_{0/1}(H,x,G^*_1(x,\rn{z}))]
    & =
    1 - \frac{1}{\lvert\Lambda\rvert}
    \qquad (H\in\cH, x\in\cE_1(\Omega)).
  \end{align*}
  Since the right-hand sides of the equations above do not depend on $y$ and $H$ (in fact, they do
  not even depend on $x$), we get~\eqref{eq:EEindepy} and~\eqref{eq:EEindepH}. The existence of the
  corresponding $\cN$ then follows by Remark~\ref{rmk:flexibilityfinite}.

  \medskip

  We now consider the case $k\geq 2$. Let $\Sigma_1\df[0,1]$, equipped with the usual Borel
  $\sigma$-algebra, let $\Sigma_k\df\Lambda^{S_k}$ and let all other $\Sigma_i$ have a single
  point. We also let $\nu_1\df\lambda$ be the Lebesgue measure and let all other $\nu_i$ be the
  uniform measure on $\Sigma_i$.

  For $z\in\cE_k(\Sigma)$ such that $z_{\{1\}},\ldots,z_{\{k\}}$ are pairwise distinct,
  let $\sigma(z)\in S_k$ be the unique permutation such that
  \begin{align*}
    z_{\{\sigma(z)(1)\}} < z_{\{\sigma(z)(2)\}} < \cdots < z_{\{\sigma(z)(k)\}}
  \end{align*}
  and define $\sigma(z)\in S_k$ arbitrarily (but measurably) when there are repetitions among
  $z_{\{1\}},\ldots,z_{\{k\}}$.

  We now define $G\in\cF_k(\Omega\otimes\Sigma,\Lambda)$ by
  \begin{align*}
    G(x,z) & \df (z_{[k]})_{\sigma(z)}
    \qquad (x\in\cE_k(\Omega), z\in\cE_k(\Sigma)).
  \end{align*}

  It is straightforward to check that if $z\in\cE_m(\Sigma)$ ($m\in\NN$) is such that
  $z_{\{1\}},\ldots,z_{\{m\}}$ are pairwise distinct, then for every $\alpha,\beta\in ([m])_k$ with
  $\im(\alpha)=\im(\beta)$, we have
  \begin{align*}
    \sigma(\alpha^*(z)) = \sigma(\beta^*(z))
    & \iff
    \alpha = \beta.
  \end{align*}
  Since $\nu_1$ is the Lebesgue measure and $\nu_k$ is the uniform measure on
  $\Sigma_k=\Lambda^{S_k}$, we conclude that for every $x\in\cE_m(\Omega)$, if $\rn{z}\sim\nu^m$,
  then $G^*_m(x,\rn{z})$ is uniformly distributed in $\Lambda^{([m])_k}$. In particular, when $m=k$,
  we get
  \begin{align*}
    \EE_{\rn{z}\sim\nu^k}[\ell_{0/1}(x,y,G^*_k(x,\rn{z}))]
    & =
    \left(1 - \frac{1}{\lvert\Lambda\rvert}\right)^{k!}
    \qquad (x\in\cE_k(\Omega),y\in\Lambda^{S_k})
  \end{align*}
  for the non-agnostic $0/1$-loss function and
  \begin{align*}
    \EE_{\rn{z}\sim\nu^k}[\ell_{0/1}(H,x,G^*_k(x,\rn{z}))]
    & =
    \left(1 - \frac{1}{\lvert\Lambda\rvert}\right)^{k!}
    \qquad (H\in\cH, x\in\cE_k(\Omega))
  \end{align*}
  for the agnostic $0/1$-loss function with respect to $\cH\subseteq\cF_k(\Omega,\Lambda)$.

  Finally, by taking
  \begin{align*}
    R_\cN(m) & \df \lvert\Lambda\rvert^{(m)_k},
    &
    \cN(x,b) & \df \phi_m(b),
  \end{align*}
  where $\phi_m\colon[R_\cN(m)]\to\Lambda^{(m)_k}$ is any fixed bijection, it is clear that for
  every $x\in\cE_m(\Omega)$, if $\rn{b}$ is picked uniformly in $[R_\cN(m)]$, then $\cN(x,\rn{b})$
  is uniformly distributed in $\Lambda^{(m)_k}$, hence has the same distribution as
  $G^*_m(x,\rn{z})$ when $\rn{z}\sim\nu^m$.
\end{proof}

\section{Fundamentals in the partite setting}
\label{sec:partdefs}

This section is the partite analogue of Section~\ref{sec:defs}, containing most of the definitions
related to $k$-PAC learning in the partite setting. We initially follow the same order of
Section~\ref{sec:defs}, but in Section~\ref{subsec:partitespecific}, we arrive at some new and
important definitions.

\begin{definition}[Borel $k$-partite templates]
  Let $k\in\NN_+$.
  \begin{enumdef}
  \item A \emph{Borel $k$-partite template} is a sequence $\Omega=(\Omega_A)_{A\in r(k)}$, where
    $\Omega_A=(X_A,\cB_A)$ is a non-empty (standard) Borel space.
  \item A \emph{probability $k$-partite template} on a Borel $k$-partite template $\Omega$ is a
    sequence $\mu=(\mu_A)_{A\in r(k)}$, where $\mu_A$ is a probability measure on $\Omega_A$. The
    space of probability $k$-partite templates on $\Omega$ is denoted $\Pr(\Omega)$.
  \item Analogously to Definition~\ref{def:Boreltemplate:product}, if $\Omega$ and $\Omega'$ are
    Borel $k$-partite templates and $\mu\in\Pr(\Omega)$ and $\mu'\in\Pr(\Omega')$ are probability
    $k$-partite templates, we define the product Borel $k$-partite template $\Omega\otimes\Omega'$
    by $(\Omega\otimes\Omega')_A\df\Omega_A\otimes\Omega'_A$ and define the product probability
    $k$-partite template $\mu\otimes\mu'\in\Pr(\Omega\otimes\Omega')$ by
    $(\mu\otimes\mu')_A\df\mu_A\otimes\mu'_A$.
  \item For a Borel $k$-partite template $\Omega$, (finite or) countable sets $V_1,\ldots,V_k$ and a
    non-empty Borel space $\Lambda$, we define\footnote{Except in the concluding section,
      Section~\ref{sec:final}, all partite concepts defined for $V_1,\ldots,V_k$ will only be used
      when $V_1=\cdots=V_k$ (and in fact they will either be $\NN_+$ or $[m]$ for some $m\in\NN$).}
    \begin{align*}
      \cE_{V_1,\ldots,V_k}(\Omega) & \df \prod_{f\in r_k(V_1,\ldots,V_k)} X_{\dom(f)}
    \end{align*}
    equipping it with the product $\sigma$-algebra, where
    \begin{align*}
      r_k(V_1,\ldots,V_k)
      & \df
      \left\{f\colon A\to\bigcup_{i=1}^k V_i
      \;\middle\vert\;
      A\in r(k)\land\forall i\in A, f(i)\in V_i
      \right\}.
    \end{align*}

    If $\mu\in\Pr(\Omega)$ is a probability $k$-partite template on $\Omega$, we let
    \begin{align*}
      \mu^{V_1,\ldots,V_k}
      & \df
      \bigotimes_{A\in r_k(V_1,\ldots,V_k)} \mu_A
    \end{align*}
    be the product measure.

    For $m\in\NN$, we will use the shorthands $r_k(m)\df r_k([m],\ldots,[m])$,
    $\cE_m(\Omega)\df\cE_{[m],\ldots,[m]}(\Omega)$ and $\mu^m\df\mu^{[m],\ldots,[m]}$.
  \item For $\alpha\in\prod_{i=1}^k V_i$, we define the map
    $\alpha^*\colon\cE_{V_1,\ldots,V_k}(\Omega)\to\cE_1(\Omega)$ by
    \begin{align*}
      \alpha^*(x)_f & \df x_{\alpha\rest_{\dom(f)}} \qquad (x\in\cE_{V_1,\ldots,V_k}, f\in r_k(1)),
    \end{align*}
    where in the above we interpret $\alpha$ as a function $[k]\to\bigcup_{i=1}^k V_i$.
  \item Similarly to Definition~\ref{def:Boreltemplate:identification}, we naturally identify
    $\cE_{V_1,\ldots,V_k}(\Omega^1\otimes\Omega^2)$ with
    $\cE_{V_1,\ldots,V_k}(\Omega^1)\times\cE_{V_1,\ldots,V_k}(\Omega^2)$.
  \end{enumdef}
\end{definition}

\begin{definition}[$k$-partite hypotheses]
  Let $k\in\NN_+$, let $\Omega$ be a Borel $k$-partite template and let $\Lambda=(Y,\cB')$ be a
  non-empty Borel space.
  \begin{enumdef}
  \item The set of \emph{$k$-partite hypotheses} from $\Omega$ to $\Lambda$, denoted
    $\cF_k(\Omega,\Lambda)$, is the set of (Borel) measurable functions from $\cE_1(\Omega)$ to
    $\Lambda$ (note that here we use $\cE_1(\Omega)$ instead of $\cE_k(\Omega)$ as in the
    non-partite case).
  \item A \emph{$k$-partite hypothesis class} is a subset $\cH$ of $\cF_k(\Omega,\Lambda)$, equipped
    with a $\sigma$-algebra such that:
    \begin{enumerate}
    \item the evaluation map $\ev\colon\cH\times\cE_1(\Omega)\to\Lambda$ given by $\ev(H,x)\df H(x)$
      is measurable;
    \item for every $H\in\cH$, the set $\{H\}$ is measurable.
    \end{enumerate}
  \item For a $k$-partite hypothesis $F\in\cF_k(\Omega,\Lambda)$, we let
    $F^*_{V_1,\ldots,V_k}\colon\cE_{V_1,\ldots,V_k}(\Omega)\to(\Lambda)^{\prod_{i=1}^k V_i}$ be
    given by
    \begin{align*}
      F^*_{V_1,\ldots,V_k}(x)_\alpha & \df F(\alpha^*(x)).
    \end{align*}
    For $m\in\NN$, we use the shorthand $F^*_m\df F^*_{[m],\ldots,[m]}$. Note that $F^*_1 = F$ once we
    naturally identify $\Lambda^{[1]^k}$ with $\Lambda$ (as $[1]^k$ has a single point).
  \end{enumdef}
\end{definition}

The next lemma is the partite analogue of Lemma~\ref{lem:F*Vequiv}.

\begin{lemma}[Equivariance of $F^*_{V_1,\ldots,V_k}$]\label{lem:partiteF*Vequiv}
  For $F\in\cF_k(\Omega,\Lambda)$, the definition of $F^*_{V_1,\ldots,V_k}$ is equivariant in the
  sense that for every $\beta_i\colon U_i\to V_i$ ($i\in[k]$), the diagram
  \begin{equation*}
    \begin{tikzcd}
      \cE_{V_1,\ldots,V_k}(\Omega)
      \arrow[r, "F^*_{V_1,\ldots,V_k}"]
      \arrow[d, "\beta^{\#}"']
      &
      \Lambda^{\prod_{i=1}^k V_i}
      \arrow[d, "\beta^{\#}"]
      \\
      \cE_{U_1,\ldots,U_k}(\Omega)
      \arrow[r, "F^*_{U_1,\ldots,U_k}"]
      &
      \Lambda^{\prod_{i=1}^k U_i}
    \end{tikzcd}
  \end{equation*}
  is commutative, where vertical arrows are given by
  \begin{gather}
    \beta^{\#}(x)_f \df x_{\beta_{\#}(f)}
    \qquad (x\in\cE_{V_1,\ldots,V_k}(\Omega), f\in r_k(U_1,\ldots,U_k)),
    \label{eq:betax}
    \\
    \beta_{\#}(f)\colon\dom(f)\ni i \mapsto\beta_i(f(i))\in\bigcup_{i=1}^k V_i
    \qquad (f\in r_k(U_1,\ldots,U_k)),
    \label{eq:betaf}
    \\
    \beta^{\#}(y)_\alpha \df y_{\beta_1(\alpha_1),\ldots,\beta_k(\alpha_k)}
    \qquad \left(y\in\Lambda^{\prod_{i=1}^k U_i}, \alpha\in\prod_{i=1}^k U_i\right).
    \label{eq:betay}
  \end{gather}
\end{lemma}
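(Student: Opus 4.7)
The plan is to reduce commutativity of the diagram to a coordinatewise check, which ultimately boils down to a single identity about how $\alpha^*$ and $\beta^{\#}$ compose. Since both $\beta^{\#} \comp F^*_{V_1,\ldots,V_k}$ and $F^*_{U_1,\ldots,U_k}\comp\beta^{\#}$ land in $\Lambda^{\prod_{i=1}^k U_i}$, it suffices to fix an arbitrary $x\in\cE_{V_1,\ldots,V_k}(\Omega)$ and $\alpha\in\prod_{i=1}^k U_i$ and check equality of the two resulting elements of $\Lambda$. Unraveling the definitions, the left-hand side evaluates to
\begin{align*}
  \beta^{\#}(F^*_{V_1,\ldots,V_k}(x))_\alpha
  & =
  F^*_{V_1,\ldots,V_k}(x)_{(\beta_1(\alpha_1),\ldots,\beta_k(\alpha_k))}
  =
  F\bigl((\beta_1(\alpha_1),\ldots,\beta_k(\alpha_k))^*(x)\bigr),
\end{align*}
while the right-hand side evaluates to $F^*_{U_1,\ldots,U_k}(\beta^{\#}(x))_\alpha = F(\alpha^*(\beta^{\#}(x)))$.

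With this reduction in hand, the next step is to prove the pointwise identity
\begin{align*}
  \alpha^*(\beta^{\#}(x))
  & =
  (\beta_1(\alpha_1),\ldots,\beta_k(\alpha_k))^*(x)
\end{align*}
in $\cE_1(\Omega)$, from which the equality of the two expressions for $F$ follows immediately. Since $\cE_1(\Omega) = \prod_{f\in r_k(1)} X_{\dom(f)}$, this in turn reduces to a check at an arbitrary $f\in r_k(1)$, whose domain is some $A\in r(k)$. By the definition of $\alpha^*$ applied to $\beta^{\#}(x)$, the $f$-coordinate of the left-hand side is $\beta^{\#}(x)_{\alpha\rest_A}$, which by~\eqref{eq:betax} and~\eqref{eq:betaf} equals $x_g$ where $g\colon A\to\bigcup_{i=1}^k V_i$ sends $i\mapsto\beta_i(\alpha_i)$. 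By definition of $\alpha^*$ for the tuple $(\beta_1(\alpha_1),\ldots,\beta_k(\alpha_k))\in\prod_{i=1}^k V_i$, the $f$-coordinate of the right-hand side is also $x_g$ with this same $g$, completing the verification.

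The only real obstacle is purely notational: keeping track of four different ``restriction/relabelling'' operations ($\alpha^*$, $\beta^{\#}$ on $\cE$-spaces, $\beta^{\#}$ on $\Lambda$-spaces, and the auxiliary $\beta_{\#}$ on functions) while moving between tuples in $\prod_{i=1}^k V_i$ and functions in $r_k(V_1,\ldots,V_k)$. To avoid confusion, I would adopt the convention used earlier in Section~\ref{sec:defs} of identifying a tuple $\alpha\in\prod_{i=1}^k U_i$ with the function $[k]\to\bigcup_i U_i$ sending $i\mapsto\alpha_i$, so that both $\alpha\rest_A$ and $\beta_{\#}$ admit uniform interpretations as compositions and restrictions of functions. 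Once this bookkeeping is set up carefully, the argument collapses to the two lines of definitional unpacking outlined above; no deeper probabilistic or combinatorial input is required, in exact parallel with the proof of Lemma~\ref{lem:F*Vequiv} in the non-partite setting.
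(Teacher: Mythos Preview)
Your proposal is correct and follows essentially the same approach as the paper's proof: both reduce to verifying the coordinatewise identity $\alpha^*(\beta^{\#}(x))_f = \gamma^*(x)_f$ for all $f\in r_k(1)$, where $\gamma\df(\beta_1(\alpha_1),\ldots,\beta_k(\alpha_k))$, and then conclude commutativity of the diagram by applying $F$. The only difference is cosmetic---the paper introduces the shorthand $\gamma$ and proves the key identity first before unwinding the two paths around the square, whereas you unwind first and then verify the identity; the underlying chain of equalities is identical.
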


\begin{proof}
  Let $x\in\cE_{V_1,\ldots,V_k}$, let $\alpha\in\prod_{i=1}^k U_i$, let $\gamma\in\prod_{i=1}^k V_i$
  be given by $\gamma_i\df\beta_i(\alpha_i)$ ($i\in[k]$) and note that for $f\in r_k(1)$, we have
  \begin{align*}
    \alpha^*(\beta^{\#}(x))_f
    & =
    \beta^{\#}(x)_{\alpha\rest_{\dom(f)}}
    =
    x_{\beta_{\#}(\alpha\rest_{\dom(f)})}
    =
    x_{\gamma\rest_{\dom(f)}}
    =
    \gamma^*(x)_f,
  \end{align*}
  hence $\alpha^*(\beta^{\#}(x))=\gamma^*(x)$, so we conclude that
  \begin{align*}
    \beta^{\#}(F^*_{V_1,\ldots,V_k}(x))_\alpha
    & =
    F^*_{V_1,\ldots,V_k}(x)_\gamma
    =
    F(\gamma^*(x))
    =
    F(\alpha^*(\beta^{\#}(x)))
    =
    F^*_{U_1,\ldots,U_k}(\beta^{\#}(x))_\alpha,
  \end{align*}
  so the diagram commutes.
\end{proof}

\begin{remark}\label{rmk:partiteexchangeable}
  Analogously to Remark~\ref{rmk:exchangeable}, $k$-partite hypotheses are combinatorial structures
  that can generate what is called a \emph{local separately exchangeable distribution} once we equip
  $\Omega$ with a probability $k$-partite template $\mu\in\Pr(\Omega)$. Namely, if
  $\rn{x}\sim\mu^{\NN_+,\ldots,\NN_+}$, then the distribution of $(\rn{x},F^*_{\NN_+,\ldots,\NN_+}(\rn{x}))$ is:
  \begin{description}
  \item[Local:] If $(U_1,\ldots,U_k)$ and $(V_1,\ldots,V_k)$ are $k$-tuples of subsets of $\NN_+$
    such that $U_i\cap V_i = \varnothing$ for every $i\in[k]$, then
    \begin{align*}
      ((\rn{x}_f)_{f\in r_k(U_1,\ldots,U_k)}, &
      (F^*_{\NN_+,\ldots,\NN_+}(\rn{x})_\alpha)_{\alpha\in\prod_{i=1}^k U_i})
      \intertext{is independent from}
      ((\rn{x}_f)_{f\in r_k(V_1,\ldots,V_k)}, &
      (F^*_{\NN_+,\ldots,\NN_+}(\rn{x})_\alpha)_{\alpha\in\prod_{i=1}^k V_i}).
    \end{align*}
  \item[Separately exchangeable:] For every $(\beta_1,\ldots,\beta_k)\in S_{\NN_+}^k$, we have
    \begin{align*}
      \beta^{\#}(\rn{x},F^*_{\NN_+,\ldots,\NN_+}(\rn{x}))
      & \df
      (\beta^{\#}(\rn{x}), \beta^{\#}(F^*_{\NN_+,\ldots,\NN_+}(\rn{x})))
      \\
      & =
      (\beta^{\#}(\rn{x}), F^*_{\NN_+,\ldots,\NN_+}(\beta^{\#}(\rn{x})))
      \sim
      (\rn{x},F^*_{\NN_+,\ldots,\NN_+}(\rn{x})),
    \end{align*}
    where the two $\beta^{\#}$ are given by~\eqref{eq:betax} and~\eqref{eq:betay}, the
    equality follows from Lemma~\ref{lem:partiteF*Vequiv} and the distributional equality follows from
    $\rn{x}\sim\beta^{\#}(\rn{x})$ since $\rn{x}$ has a product distribution.
  \end{description}

  The version of the Aldous--Hoover Theorem on separate exchangeability due to Hoover~\cite{Hoo79}
  (see also~\cite[Corollary~7.23 and Lemma~7.35]{Kal05}) says that every separately exchangeable
  distribution on $\Lambda^{\NN_+^k}$ is of the form $F^*_{\NN_+,\ldots,\NN_+}(\rn{x})$ for some
  $F\in\cF_k(\Omega,\Lambda)$, where $\rn{x}\sim\mu^{\NN_+,\ldots,\NN_+}$ for some
  $\mu\in\Pr(\Omega)$ and some $\Omega$.
\end{remark}

\begin{definition}[Rank]
  Let $k\in\NN_+$, let $\Omega$ be a Borel $k$-partite template and let $\Lambda$ be a non-empty
  Borel space.
  \begin{enumdef}
  \item The \emph{rank} of a $k$-partite hypothesis $F\in\cF_k(\Omega,\Lambda)$, denoted $\rk(F)$ is
    the minimum $r\in\NN$ such that $F$ factors as
    \begin{align*}
      F(x) & \df F'((x_f)_{f\in r_k(1), \lvert\dom(f)\rvert\leq r}) \qquad (x\in\cE_1(\Omega))
    \end{align*}
    for some function $F'\colon\prod_{f\in r_k(1),\lvert\dom(f)\rvert\leq r} X_{\dom(f)}\to\Lambda$.
  \item The \emph{rank} of $\cH$ is defined as
    \begin{align*}
      \rk(\cH) & \df \sup_{F\in\cH} \rk(F).
    \end{align*}
  \end{enumdef}
\end{definition}

\begin{remark}\label{rmk:partiterk1}
  Similarly to Remark~\ref{rmk:rk1}, the partite setting covers the cases of $k$-partite
  $k$-hypergraphs and $k$-partite structures in finite relational languages (i.e., structures in
  which predicates can only hold on tuples that cross a given $k$-partition in the sense that they
  contain at most one vertex of each part).

  For (measurable) $k$-partite $k$-hypergraphs with $k$-partition $(\Upsilon_1,\ldots,\Upsilon_k)$,
  we let $\Omega_{\{i\}}\df\Upsilon_i$ for every $i\in[k]$, let all other $\Omega_A$ have a single
  point and take $\Lambda=\{0,1\}$, so there is a natural one-to-one correspondence in which a
  measurable $k$-partite $k$-hypergraph $H$ corresponds to $F_H\in\cF_k(\Omega,\Lambda)$ given by
  \begin{align*}
    F_H(x) & \df \One[(x_{1^{\{1\}}},\ldots,x_{1^{\{k\}}})\in E^H] \qquad (x\in\cE_1(\Omega)),
  \end{align*}
  where $1^B$ is the unique function $B\to[1]$, that is, $F_H$ is morally the $k$-partite adjacency
  tensor of $H$.

  For (measurable) $k$-partite structures in a finite relational language $\cL$ with a given
  $k$-partition $(\Upsilon_1,\ldots,\Upsilon_k)$, we define $\Omega_{\{i\}}\df\Upsilon_{\{i\}}$, let
  all other $\Omega_A$ have a single point and let $\Lambda$ be the (finite) set of all $k$-partite
  structures in $\cL$ with $k$-partition $(\{1\},\ldots,\{k\})$, equipped with the discrete
  $\sigma$-algebra. Then there is a natural injection from the set of measurable $k$-partite
  structures in $\cL$ with $k$-partition $(\Upsilon_1,\ldots,\Upsilon_k)$ to the set
  $\cF_k(\Omega,\Lambda)$ that maps $M$ to $F_M$ defined by letting $F_M(x)$ be the unique
  $k$-partite structure with $k$-partition $(\{1\},\ldots,\{k\})$ such that $[k]\ni i\mapsto
  x_{1^{\{i\}}}\in\bigcup_{j=1}^k\Upsilon_j$ is an embedding of $F_M(x)$ into $M$.

  Again, these constructions produce $k$-partite hypotheses $F$ with $\Lambda$ finite and
  $\rk(F)\leq 1$.
\end{remark}

\begin{definition}[$k$-partite loss functions]
  Let $k\in\NN_+$, let $\Omega$ be a Borel $k$-partite template and let $\Lambda$ be a non-empty
  Borel space.
  \begin{enumdef}
  \item A \emph{$k$-partite loss function} over $\Lambda$ is a measurable function
    $\ell\colon\cE_1(\Omega)\times\Lambda\times\Lambda\to\RR_{\geq 0}$. (It is important to note
    that, differently from $k$-ary loss functions, $k$-partite loss functions use $\cE_1(\Omega)$
    and $\Lambda$ instead of $\cE_k(\Omega)$ and $\Lambda^{S_k}$, respectively.)
  \item For a $k$-partite loss function $\ell$, we define
    \begin{align*}
      \lVert\ell\rVert_\infty
      & \df
      \sup_{\substack{x\in\cE_1(\Omega)\\y,y'\in\Lambda}} \ell(x,y,y'),
      &
      s(\ell)
      & \df
      \inf_{\substack{x\in\cE_1(\Omega)\\y,y'\in\Lambda\\y\neq y'}} \ell(x,y,y').
    \end{align*}
  \item A $k$-partite loss function is:
    \begin{description}[format={\normalfont\textit}]
    \item[bounded] if $\lVert\ell\rVert_\infty < \infty$.
    \item[separated] if $s(\ell) > 0$ and $\ell(x,y,y) = 0$ for every $x\in\cE_1(\Omega)$ and every
      $y\in\Lambda$. 
    \end{description}
  \item For a $k$-partite loss function $\ell$, hypotheses $F,H\in\cF_k(\Omega,\Lambda)$ and a
    probability $k$-partite template $\mu\in\Pr(\Omega)$, the \emph{total loss} of $H$ with respect
    to $\mu$, $F$ and $\ell$ is
    \begin{align*}
      L_{\mu,F,\ell}(H) & \df \EE_{\rn{x}\sim\mu^1}[\ell(\rn{x},H(\rn{x}),F(\rn{x}))].
    \end{align*}
  \item We say that $F\in\cF_k(\Omega,\Lambda)$ is \emph{realizable} in
  $\cH\subseteq\cF_k(\Omega,\Lambda)$ with respect to a $k$-partite loss function
  $\ell\colon\cE_1(\Omega)\times\Lambda\times\Lambda\to\RR_{\geq 0}$ and $\mu\in\Pr(\Omega)$ if
    $\inf_{H\in\cH} L_{\mu,F,\ell}(H) = 0$, i.e., $F$ can be approximated by elements of $\cH$ in
    terms of total loss.
  \item The \emph{$k$-partite $0/1$-loss function} over $\Lambda$ is defined as
    $\ell_{0/1}(x,y,y')\df\One[y\neq y']$.
  \end{enumdef}
\end{definition}

Differently from the non-partite setting, it does not make much sense to define the notion of a
symmetric loss function in the partite setting as there is no natural action of $S_k$ on
$\cE_1(\Omega)$ (as we may potentially have $\Omega_{\{i\}}\neq\Omega_{\{j\}}$ for $i\neq
j$). However, a natural $S_k$-action will arise when $\Omega$ is the $k$-partite version of a
(non-partite) Borel template (see Definition~\ref{def:kpart:Omega} and
Lemma~\ref{lem:kpartbasics}\ref{lem:kpartbasics:action} below).

\begin{definition}[Learning algorithms and partite $k$-PAC learnability]
  Let $k\in\NN_+$, let $\Omega$ be a Borel $k$-partite template, let $\Lambda$ be a non-empty Borel
  space and let $\cH\subseteq\cF_k(\Omega,\Lambda)$ be a $k$-partite hypothesis class.
  \begin{enumdef}  
  \item A \emph{($k$-partite) $(V_1,\ldots,V_k)$-sample} with respect to $\Omega$ and $\Lambda$ is
    an element of $\cE_{V_1,\ldots,V_k}(\Omega)\times\Lambda^{\prod_{i=1}^k V_i}$.
  \item A \emph{($k$-partite) learning algorithm} for $\cH$ is a measurable function
    \begin{align*}
      \cA\colon
      \bigcup_{m\in\NN} (\cE_m(\Omega)\times\Lambda^{[m]^k})
      \to
      \cH.
    \end{align*}
  \item We say that $\cH$ is \emph{$k$-PAC learnable} with respect to a $k$-partite loss function
    $\ell\colon\cE_1(\Omega)\times\Lambda\times\Lambda\to\RR_{\geq 0}$ if there exist a learning
    algorithm $\cA$ for $\cH$ and a function $m^{\PAC}_{\cH,\ell,\cA}\colon(0,1)^2\to\RR_{\geq 0}$
    such that for every $\epsilon,\delta\in(0,1)$, every $\mu\in\Pr(\Omega)$ and every
    $F\in\cF_k(\Omega,\Lambda)$ that is realizable in $\cH$ with respect to $\ell$ and $\mu$, we
    have
    \begin{align}\label{eq:partkPAC}
      \PP_{\rn{x}\sim\mu^m}[
        L_{\mu,F,\ell}(\cA(\rn{x}, F^*_m(\rn{x})))\leq\epsilon
      ]
      & \geq 1 - \delta
    \end{align}
    for every integer $m\geq m^{\PAC}_{\cH,\ell,\cA}(\epsilon,\delta)$. A learning algorithm $\cA$
    satisfying the above is called a \emph{$k$-PAC learner} for $\cH$ with respect to $\ell$.
  \item A \emph{randomized ($k$-partite) learning algorithm} for $\cH$ is a measurable function
    \begin{align*}
      \cA\colon
      \bigcup_{m\in\NN} (\cE_m(\Omega)\times\Lambda^{[m]^k}\times [R_\cA(m)])
      \to
      \cH,
    \end{align*}
    where $R_\cA\colon\NN\to\NN_+$ and $[t]$ is equipped with the discrete $\sigma$-algebra.
  \item We define the notions of \emph{$k$-PAC learnability with randomness}, $m^{\PACr}_{\cH,\ell,\cA}$ and
    of a \emph{randomized $k$-PAC learner} in the same manner, except that $\cA$ is instead a randomized
    learning algorithm and instead of~\eqref{eq:partkPAC}, we require
    \begin{align*}
      \PP_{\rn{x}\sim\mu^m}[
        \EE_{\rn{b}\sim U(R_\cA(m))}[L_{\mu,F,\ell}(\cA(\rn{x}, F^*_m(\rn{x}), \rn{b}))]
        \leq\epsilon]
      & \geq
      1 - \delta,
    \end{align*}
    for every integer $m\geq m^{\PACr}_{\cH,\ell,\cA}(\epsilon,\delta)$, where $U(t)$ is the uniform
    probability measure on $[t]$.
  \end{enumdef}
\end{definition}

Analogously to the non-partite case, in the agnostic setting, inspired by the analogue of
Remark~\ref{rmk:exchangeable}, Remark~\ref{rmk:partiteexchangeable}, we allow our adversary to play
a distribution $\nu$ over $\cE_{\NN_+,\ldots,\NN_+}(\Omega)\times\Lambda^{\NN_+^k}$ that is local
and separately exchangeable and exchangeability theory says that every such a $\nu$ is of the
expected form:
\begin{proposition}[Informal version of Proposition~\ref{prop:agsepexch}]
  Let $k\in\NN_+$, let $\Omega$ be a Borel $k$-partite template, let $\Lambda$ be a non-empty
  Borel space and let $\rn{x}$ and $\rn{y}$ be random elements of $\cE_{\NN_+}(\Omega)$ and
  $\Lambda^{\NN_+^k}$, respectively such that $(\rn{x},\rn{y})$ is local and separately
  exchangeable. Suppose further that $\rn{x}\sim\mu^{\NN_+,\ldots,\NN_+}$ for some $\mu\in\Pr(\Omega)$.

  Then there exist a Borel $k$-partite template $\Omega'$, $\mu'\in\Pr(\Omega')$ and
  $F\in\cF_k(\Omega\otimes\Omega',\Lambda)$ such that for $\rn{x'}\sim(\mu')^{\NN_+,\ldots,\NN_+}$
  picked independently from $\rn{x}$, we have
  \begin{align*}
    (\rn{x},\rn{y}) & \sim (\rn{x}, F^*_{\NN_+,\ldots,\NN_+}(\rn{x},\rn{x'})).
  \end{align*}
\end{proposition}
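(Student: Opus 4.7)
The plan is to reduce the claim to the Hoover representation theorem for separately exchangeable arrays (the cited \cite[Corollary~7.23 and Lemma~7.35]{Kal05}) via a disintegration argument, where the key technical point is to factor out the prescribed marginal $\mu^{\NN_+,\ldots,\NN_+}$ of $\rn{x}$ into an independent noise variable $\rn{x'}$. This is the partite analogue of Proposition~\ref{prop:agexch}, so the structure of the argument should parallel that one with the roles of $r(k)$ and $r_k(V_1,\ldots,V_k)$ suitably updated.

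First I would apply Hoover's theorem directly to the joint array $(\rn{x},\rn{y})$, viewed as a local separately exchangeable array valued in the composite standard Borel space built from $\Omega$ and $\Lambda$. This produces an auxiliary Borel $k$-partite template $\Xi$, a probability template $\lambda \in \Pr(\Xi)$, and measurable maps at each rank that jointly realize $(\rn{x},\rn{y})$ as the image of some $\rn{w}\sim\lambda^{\NN_+,\ldots,\NN_+}$. Concretely, one obtains $G\in\cF_k(\Xi,\Lambda)$ together with a measurable $\Phi\colon\cE_{\NN_+,\ldots,\NN_+}(\Xi)\to\cE_{\NN_+,\ldots,\NN_+}(\Omega)$ that is compatible with the partite structure, such that $(\rn{x},\rn{y})\sim(\Phi(\rn{w}),G^*_{\NN_+,\ldots,\NN_+}(\rn{w}))$.

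Next I would disintegrate level-by-level to replace the representation of $\rn{x}$ coming from $\Phi(\rn{w})$ by the originally given $\rn{x}$. For each $A\in r(k)$, Theorem~\ref{thm:disintegration} provides a regular conditional distribution of $\rn{w}_A$ given its image in $\Omega_A$, which is a measurable kernel from $\Omega_A$ to $\Xi_A$. I would then apply the standard noise outsourcing principle to realize each such conditional kernel as a deterministic measurable function of an $\Omega_A$-valued input together with an auxiliary independent sample $\rn{x'}_A$ drawn from some probability measure $\mu'_A$ on a standard Borel space $\Omega'_A$. Doing this uniformly across every level yields the desired $\Omega'=(\Omega'_A)_{A\in r(k)}$ and $\mu'=(\mu'_A)_{A\in r(k)}$, and composing with $G$ delivers the target $F\in\cF_k(\Omega\otimes\Omega',\Lambda)$ satisfying the distributional equality in the conclusion.

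The main obstacle I anticipate is establishing that the noise outsourcing, performed position-by-position, is globally consistent with locality and separate exchangeability: done naively, the auxiliary variables $\rn{x'}_f$ could acquire correlations incompatible with the product measure $(\mu')^{\NN_+,\ldots,\NN_+}$. The remedy should be to choose the representing kernels canonically, depending only on $\lvert A\rvert$ rather than on the specific placement inside $V_1,\ldots,V_k$, and then invoke separate exchangeability of the original $(\rn{x},\rn{y})$ together with the equivariance of Lemma~\ref{lem:partiteF*Vequiv} to transport the construction across all positions and to verify that the induced joint distribution of the $\rn{x'}_f$'s is exactly the product $(\mu')^{\NN_+,\ldots,\NN_+}$ while keeping $\rn{x'}$ independent of $\rn{x}$.
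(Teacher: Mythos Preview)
Your high-level strategy---apply a Hoover-type representation to the joint array, then invert to factor out the prescribed $\rn{x}$-marginal---matches the skeleton of the paper's Lemma~\ref{lem:sepKallenberg}, but the paper structures the full proof of Proposition~\ref{prop:agsepexch} differently. It first passes to non-local templates by adjoining a coordinate indexed by $\varnothing$, applies the non-local representation Lemma~\ref{lem:sepKallenberg}, and then spends the bulk of the argument on a variance computation: assuming $\rn{y}$ depends nontrivially on $(\rn{\widetilde{x}}_\varnothing,\rn{\widetilde{x'}}_\varnothing)$, one uses separate exchangeability to produce two copies $E_U,E_V$ of a finite-marginal event over disjoint index sets and shows they have nonzero covariance, contradicting locality. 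Once this independence is established, the $\varnothing$-coordinates are simply fixed to obtain $F$. You instead invoke the local form of Hoover directly; that is legitimate, but it is not the route the paper takes.

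The genuine gap in your proposal is the inversion step. The Hoover representation is hierarchical: for $f\in r_k(\NN_+,\ldots,\NN_+)$ with $\dom(f)=A$, the coordinate $\rn{x}_f$ is a function of all $\rn{w}_g$ with $g$ a restriction of $f$ to some $B\subseteq A$, not of $\rn{w}_f$ alone. One therefore cannot disintegrate $\rn{w}_f$ against ``its image in $\Omega_A$'' independently level by level; the inversion must proceed inductively on $\lvert A\rvert$, at each stage conditioning on both the current-level $\rn{x}$-values and the already-reconstructed lower-level $\rn{w}$-values, and it is this inductive structure that forces the resulting noise variables to be jointly independent. This is precisely the content of Kallenberg's inversion lemma (\cite[Lemma~7.14]{Kal05}, restated in the paper as Lemma~\ref{lem:inversion}), which the paper invokes as a black box inside the proof of Lemma~\ref{lem:sepKallenberg}. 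Your proposed remedy---choosing kernels canonically depending only on $\lvert A\rvert$---does not address this hierarchical dependence, and in the partite setting the spaces $\Omega_A$ genuinely depend on $A\in r(k)$, not merely on $\lvert A\rvert$.
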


\begin{definition}[$k$-partite agnostic loss functions]\label{def:partagloss}
  Let $k\in\NN_+$, let $\Omega$ be a Borel $k$-partite template, let $\Lambda$ be a non-empty Borel
  space and let $\cH\subseteq\cF_k(\Omega,\Lambda)$ be a $k$-partite hypothesis class.
  \begin{enumdef}
  \item A \emph{$k$-partite agnostic loss function} over $\Lambda$ with respect to $\cH$ is a
    measurable function $\ell\colon\cH\times\cE_1(\Omega)\times\Lambda\to\RR_{\geq 0}$. (Again, the
    $k$-partite version uses $\cE_1(\Omega)$ and $\Lambda$ instead of $\cE_k(\Omega)$ and
    $\Lambda^{S_k}$, respectively.)
  \item For a $k$-partite agnostic loss function $\ell$, we define
    \begin{align*}
      \lVert\ell\rVert_\infty
      & \df
      \sup_{\substack{H\in\cH\\x\in\cE_1(\Omega)\\y\in\Lambda}} \ell(H,x,y).
    \end{align*}
  \item A $k$-partite agnostic loss function $\ell$ is \emph{bounded} if $\lVert\ell\rVert_\infty <
    \infty$.
  \item For a $k$-partite agnostic loss function $\ell$, a hypothesis $H\in\cH$, a Borel $k$-partite
    template $\Omega'$, probability $k$-partite templates $\mu\in\Pr(\Omega)$ and
    $\mu'\in\Pr(\Omega')$ and a hypothesis $F\in\cF_k(\Omega\otimes\Omega',\Lambda)$ over the
    product space, the \emph{total loss} of $H$ with respect to $\mu$, $\mu'$, $F$ and $\ell$ is
    \begin{align*}
      L_{\mu,\mu',F,\ell}(H)
      & \df
      \EE_{(\rn{x},\rn{x'})\sim(\mu\otimes\mu')^1}[\ell(H, \rn{x}, F(\rn{x},\rn{x'}))].
    \end{align*}
  \item The \emph{$k$-partite agnostic $0/1$-loss function} over $\Lambda$ with respect to $\cH$ is defined
    as
    \begin{align*}
      \ell_{0/1}(H,x,y) & \df \One[H(x)\neq y].
    \end{align*}
  \end{enumdef}
\end{definition}

\begin{definition}[Partite agnostic $k$-PAC learnability]\label{def:partagkPAC}
  Let $k\in\NN_+$, let $\Omega$ be a Borel $k$-partite template, let $\Lambda$ be a non-empty Borel
  space, let $\cH\subseteq\cF_k(\Omega,\Lambda)$ be a $k$-partite hypothesis class and let
  $\ell\colon\cH\times\cE_1(\Omega)\times\Lambda\to\RR_{\geq 0}$ be a $k$-partite agnostic loss
  function.
  \begin{enumdef}
  \item We say that $\cH$ is \emph{agnostically $k$-PAC learnable} with respect to a $k$-partite
    agnostic loss function $\ell\colon\cH\times\cE_1(\Omega)\times\Lambda\to\RR_{\geq 0}$ if there
    exist a learning algorithm $\cA$ for $\cH$ and a function
    $m^{\agPAC}_{\cH,\ell,\cA}\colon(0,1)^2\to\RR_{\geq 0}$ such that for every
    $\epsilon,\delta\in(0,1)$, every $\mu\in\Pr(\Omega)$, every Borel $k$-partite template
    $\Omega'$, every $\mu'\in\Pr(\Omega')$ and every $F\in\cF_k(\Omega\otimes\Omega',\Lambda)$, we
    have
    \begin{align}\label{eq:partagkPAC}
      \PP_{(\rn{x},\rn{x'})\sim(\mu\otimes\mu')^m}
         [L_{\mu,\mu',F,\ell}(\cA(\rn{x}, F^*_m(\rn{x},\rn{x'})))
           \leq
           \inf_{H\in\cH} L_{\mu,\mu',F,\ell}(H) + \epsilon
         ]
         & \geq 1 - \delta
    \end{align}
    for every integer $m\geq m^{\agPAC}_{\cH,\ell,\cA}(\epsilon,\delta)$. A learning algorithm $\cA$
    satisfying the above is called an \emph{agnostic $k$-PAC learner} for $\cH$ with respect to
    $\ell$.
  \item We define the notions of \emph{agnostic $k$-PAC learnability with randomness},
    $m^{\agPACr}_{\cH,\ell,\cA}$ and of a \emph{randomized agnostic $k$-PAC learner} in the same
    manner, except that $\cA$ is instead a randomized learning algorithm and instead
    of~\eqref{eq:partagkPAC}, we require
    \begin{align*}
      \PP_{(\rn{x},\rn{x'})\sim(\mu\otimes\mu')^m}[
        \EE_{\rn{b}\sim U(R_\cA(m))}[
          L_{\mu,\mu',F,\ell}(\cA(\rn{x}, F^*_m(\rn{x},\rn{x'}), \rn{b}))
        ]
        \leq
        \inf_{H\in\cH} L_{\mu,\mu',F,\ell}(H) + \epsilon
      ]
      & \geq 1 - \delta,
    \end{align*}
    for every integer $m\geq m^{\agPACr}_{\cH,\ell,\cA}(\epsilon,\delta)$
  \end{enumdef}
\end{definition}

\begin{definition}[Locality and regularization]
  A $k$-partite agnostic loss function $\ell\colon\cH\times\cE_1(\Omega)\times\Lambda\to\RR_{\geq
    0}$ is \emph{local} if there exists a function $r\colon\cH\to\RR$ such that for every
  $F,H\in\cH$, every $x\in\cE_1(\Omega)$ and every $y\in\Lambda$, we have
  \begin{align*}
    F(x) = H(x) & \implies \ell(F,x,y) - r(F) = \ell(H,x,y) - r(H) \geq 0.
  \end{align*}
  A function $r$ satisfying the above is called a \emph{regularization term} of $\ell$.

  Equivalently, $\ell$ is local if and only if it can be factored as
  \begin{align*}
    \ell(H,x,y) & = \ell_r(x,H(x),y) + r(H)
  \end{align*}
  for some (non-agnostic) $k$-partite loss function
  $\ell_r\colon\cE_1(\Omega)\times\Lambda\times\Lambda\to\RR_{\geq 0}$ and some regularization term
  $r\colon\cH\to\RR$.
\end{definition}

\begin{definition}[$\VCN_k$-dimension]
  Let $k\in\NN_+$, let $\Omega$ be a Borel $k$-partite template, let $\Lambda$ be a non-empty Borel
  space and let $\cH\subseteq\cF_k(\Omega,\Lambda)$ be a $k$-partite hypothesis class.
  \begin{enumdef}
  \item For $A\in\binom{[k]}{k-1}$, let
    \begin{align}\label{eq:rkA}
      r_{k,A} & \df \{f\in r_k(1) \mid \dom(f)\subseteq A\},
    \end{align}
    and for $x\in\prod_{f\in r_{k,A}} X_{\dom(f)}$ and $H\in\cF_k(\Omega,\Lambda)$, let
    \begin{align*}
      H(x,\place)\colon\prod_{f\in r_k(1)\setminus r_{k,A}} X_{\dom(f)}\to
      \Lambda
    \end{align*}
    be the function obtained from $H$ by fixing its arguments in $\prod_{f\in r_{k,A}} X_{\dom(f)}$ to
    be $x$ and let
    \begin{align}\label{eq:partitecHx}
      \cH(x) & \df \{H(x,\place) \mid H\in\cH\}.
    \end{align}
    \item The \emph{Vapnik--Chervonenkis--Natarajan $k$-dimension} of $\cH$
      (\emph{$\VCN_k$-dimension}) is defined as
      \begin{align*}
        \VCN_k(\cH)
        & \df
        \sup_{\substack{A\in\binom{[k]}{k-1}\\x\in\prod_{f\in r_{k,A}} X_{\dom(f)}}} \Nat(\cH(x)).
      \end{align*}
  \end{enumdef}
\end{definition}

\begin{definition}[Flexibility]\label{def:flexibilitypart}
  Let $k\in\NN_+$, let $\Omega$ be a Borel $k$-partite template, let $\Lambda$ be a non-empty Borel
  space and let $\cH\subseteq\cF_k(\Omega,\Lambda)$ be a $k$-partite hypothesis class.
  \begin{enumdef}
  \item A $k$-partite loss function $\ell\colon\cE_1(\Omega)\times\Lambda\times\Lambda\to\RR_{\geq
    0}$ is \emph{flexible} if there exist a Borel $k$-partite template $\Sigma$, a probability
    $k$-partite template $\nu\in\Pr(\Sigma)$, a $k$-partite hypothesis
    $G\in\cF_k(\Omega\otimes\Sigma,\Lambda)$ and a measurable function
    \begin{align*}
      \cN\colon\bigcup_{m\in\NN} (\cE_m(\Omega)\times [R_\cN(m)])
      \to
      \Lambda^{[m]^k},
    \end{align*}
    where $R_\cN\colon\NN\to\NN_+$, such that for every $x\in\cE_1(\Omega)$ and every
    $y,y'\in\Lambda$, we have
    \begin{align}\label{eq:partiteEEindepy}
      \EE_{\rn{z}\sim\nu^1}[\ell(x,y,G(x,\rn{z}))]
      & =
      \EE_{\rn{z}\sim\nu^1}[\ell(x,y',G(x,\rn{z}))]
    \end{align}
    and for every $m\in\NN$ and every $x\in\cE_m(\Omega)$, if $\rn{z}\sim\nu^m$ and $\rn{b}$ is
    picked uniformly in $[R_\cN(m)]$, then
    \begin{align}\label{eq:partiteGsimcN}
      G^*_m(x,\rn{z}) & \sim \cN(x,\rn{b}).
    \end{align}
    In this case, we say that $(\Sigma,\nu,G,\cN)$ is a \emph{witness of flexibility} of $\ell$.

    When $\ell$ is flexible with witness $(\Sigma,\nu,G,\cN)$, we define the function
    $\ell^{\Sigma,\nu,G}\colon\cE_1(\Omega)\to\RR_{\geq 0}$ by
    \begin{align*}
      \ell^{\Sigma,\nu,G}(x)
      & \df
      \EE_{\rn{z}\sim\nu^1}[\ell(x,y,G(x,\rn{z}))]
      \qquad (x\in\cE_1(\Omega)),
    \end{align*}
    where $y$ is any point in $\Lambda$.
  \item A $k$-partite agnostic loss function
    $\ell\colon\cH\times\cE_1(\Omega)\times\Lambda\to\RR_{\geq 0}$ is \emph{flexible} if there
    exist a Borel $k$-partite template $\Sigma$, a probability $k$-partite template, a $k$-partite
    hypothesis $G\in\cF_k(\Omega\otimes\Sigma,\Lambda)$ and a measurable function
    \begin{align*}
      \cN\colon\bigcup_{m\in\NN} (\cE_m(\Omega)\times [R_\cN(m)])
      \to
      \Lambda^{[m]^k},
    \end{align*}
    where $R_\cN\colon\NN\to\NN_+$, such that for every for every $H,H'\in\cH$ and every
    $x\in\cE_1(\Omega)$, we have
    \begin{align}\label{eq:partiteEEindepH}
      \EE_{\rn{z}\sim\nu^1}[\ell(H,x,G(x,\rn{z}))]
      & =
      \EE_{\rn{z}\sim\nu^1}[\ell(H',x,G(x,\rn{z}))]
    \end{align}
    and for every $m\in\NN$ and every $x\in\cE_m(\Omega)$, if $\rn{z}\sim\nu^m$ and $\rn{b}$ is
    picked uniformly in $[R_\cN(m)]$, then
    \begin{align}\label{eq:partiteGsimcNag}
      G^*_m(x,\rn{z}) & \sim \cN(x,\rn{b}).
    \end{align}
    In this case, we say that $(\Sigma,\nu,G,\cN)$ is a \emph{witness of flexibility} of $\ell$.

    When $\ell$ is flexible with witness $(\Sigma,\nu,G,\cN)$, we define the function
    $\ell^{\Sigma,\nu,G}\colon\cE_1(\Omega)\to\RR_{\geq 0}$ by
    \begin{align}\label{eq:partiteflexibleellSigmanuG}
      \ell^{\Sigma,\nu,G}(x)
      & \df
      \EE_{\rn{z}\sim\nu^1}[\ell(H,x,G(x,\rn{z}))]
      \qquad (x\in\cE_1(\Omega)),
    \end{align}
    where $H$ is any hypothesis in $\cH$.
  \end{enumdef}
\end{definition}

\begin{remark}\label{rmk:partiteflexibilityfinite}
  Similarly to Remark~\ref{rmk:flexibilityfinite}, an easy way of obtaining the $\cN$ required in the
  definition of flexibility of $\ell$ is when~\eqref{eq:partiteEEindepy} (\eqref{eq:partiteEEindepH},
  respectively) holds with some $(\Sigma,\nu,G)$ such that all $\Sigma_A$ are finite and all $\nu_A$
  are uniform measures. Again, this is because~\eqref{eq:partiteGsimcN} (\eqref{eq:partiteGsimcNag},
  respectively) follows by taking
  \begin{align*}
    R_\cN(m)
    & \df
    \prod_{f\in r_k(m)}\lvert\Sigma_{\dom(f)}\rvert
    =
    \prod_{A\in r(k)} \lvert\Sigma_A\rvert^{m^{\lvert A\rvert}},
    &
    \cN(x,b) & \df G(x,\phi_m(b)), 
  \end{align*}
  where $\phi_m\colon[R_\cN(m)]\to\prod_{f\in r_k(m)}\Sigma_{\dom(f)}$ is any fixed bijection.
\end{remark}

The lemma below, which is the partite analogue of Lemma~\ref{lem:flexibility}, turns out to be much
easier to prove.

\begin{lemma}[Flexibility of $\ell_{0/1}$]
  Both the agnostic and the non-agnostic $k$-partite $0/1$-loss functions are flexible when
  $\Lambda$ is finite.
\end{lemma}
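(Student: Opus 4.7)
The plan is to mimic the non-partite proof of Lemma~\ref{lem:flexibility}, but exploit the crucial simplification that the $k$-partite $0/1$-loss evaluates on $\Lambda$ rather than on $\Lambda^{S_k}$. Because there is no symmetric group action to contend with on the target side, the awkward case split between $k = 1$ and $k \geq 2$ of Lemma~\ref{lem:flexibility} collapses into a single uniform construction.

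Concretely, I would define the Borel $k$-partite template $\Sigma$ by setting $\Sigma_{[k]} \df \Lambda$ equipped with its discrete $\sigma$-algebra, and $\Sigma_A$ a singleton Borel space for every $A \in r(k)$ with $A \neq [k]$. Let $\nu \in \Pr(\Sigma)$ be the uniform probability template (so $\nu_{[k]}$ is uniform on $\Lambda$, while the remaining $\nu_A$ are forced). Define the $k$-partite hypothesis $G \in \cF_k(\Omega \otimes \Sigma, \Lambda)$ by
\[
  G(x, z) \df z_{1^{[k]}} \qquad (x \in \cE_1(\Omega),\ z \in \cE_1(\Sigma)),
\]
where $1^{[k]}$ is the unique function $[k] \to [1]$. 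Then for any $\alpha \in [m]^k$, unwinding the definitions of $\alpha^*$ and of the product template gives $G^*_m(x, z)_\alpha = z_\alpha$, where on the right we view $\alpha$ as an element of $r_k(m)$ with domain $[k]$.

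From here, the flexibility conditions are immediate. For every $x \in \cE_1(\Omega)$, if $\rn{z} \sim \nu^1$ then $G(x, \rn{z}) = \rn{z}_{1^{[k]}}$ is uniformly distributed on $\Lambda$; hence for every $y, y' \in \Lambda$ and every $H, H' \in \cH$,
\[
  \EE_{\rn{z} \sim \nu^1}[\ell_{0/1}(x, y, G(x, \rn{z}))]
  = 1 - \tfrac{1}{\lvert\Lambda\rvert}
  = \EE_{\rn{z} \sim \nu^1}[\ell_{0/1}(x, y', G(x, \rn{z}))],
\]
and the same identity holds when $\ell_{0/1}$ is replaced by its agnostic version $\ell_{0/1}(H, x, G(x, \rn{z})) = \One[H(x) \neq G(x, \rn{z})]$. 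This verifies~\eqref{eq:partiteEEindepy} and~\eqref{eq:partiteEEindepH} at once.

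Finally, the auxiliary function $\cN$ is produced by appealing directly to Remark~\ref{rmk:partiteflexibilityfinite}: since each $\Sigma_A$ is finite and each $\nu_A$ is uniform, setting
\[
  R_\cN(m) \df \lvert\Lambda\rvert^{m^k}
  \qquad\text{and}\qquad
  \cN(x, b) \df G(x, \phi_m(b))
\]
for any fixed bijection $\phi_m \colon [R_\cN(m)] \to \prod_{f \in r_k(m)} \Sigma_{\dom(f)}$ yields~\eqref{eq:partiteGsimcN} and~\eqref{eq:partiteGsimcNag}. There is no genuine obstacle here; the entire content of the lemma is that working partitely removes the symmetrization issue that forced the more delicate construction of Lemma~\ref{lem:flexibility}, so the ``hard part'' is really just the bookkeeping needed to see that $G^*_m$ produces i.i.d.~uniform labels on $[m]^k$.
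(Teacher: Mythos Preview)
Your proposal is correct and essentially identical to the paper's proof: the same choice of $\Sigma$ (with $\Sigma_{[k]} = \Lambda$ and all other $\Sigma_A$ singletons), the same $G(x,z) = z_{1^{[k]}}$, the same computation yielding $1 - 1/\lvert\Lambda\rvert$, and the same appeal to Remark~\ref{rmk:partiteflexibilityfinite} for $\cN$. Your added remark that the partite setting avoids the $S_k$-symmetrization difficulty of Lemma~\ref{lem:flexibility} is exactly the point.
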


\begin{proof}
  Let $\Sigma_{[k]}\df\Lambda$, equipped with discrete $\sigma$-algebra, let all other $\Sigma_A$
  have a single point, let each $\nu_A$ be the uniform measure on $\Sigma_A$ and let
  $G\in\cF_k(\Omega\otimes\Sigma,\Lambda)$ be given by
  \begin{align*}
    G(x,z) & \df z_{1^{[k]}}
    \qquad (x\in\cE_1(\Omega), z\in\cE_1(\Sigma)),
  \end{align*}
  where $1^{[k]}$ is the unique function $[k]\to[1]$. Then for every $x\in\cE_1(\Omega)$, if
  $\rn{z}\sim\nu^1$, then $G(x,\rn{z})$ is uniformly distributed in $\Lambda$, so for the
  non-agnostic $k$-partite $0/1$-loss function, we have
  \begin{align*}
    \EE_{\rn{z}\sim\nu^1}[\ell_{0/1}(x,y,G(x,\rn{z}))]
    & =
    1 - \frac{1}{\lvert\Lambda\rvert}
    \qquad (x\in\cE_1(\Omega), y\in\Lambda)
  \end{align*}
  and for the agnostic $k$-partite $0/1$-loss function with respect to
  $\cH\subseteq\cF_k(\Omega,\Lambda)$, we have
  \begin{align*}
    \EE_{\rn{z}\sim\nu^1}[\ell_{0/1}(H,x,G(x,\rn{z}))]
    & =
    1 - \frac{1}{\lvert\Lambda\rvert}
    \qquad (H\in\cH, x\in\cE_1(\Omega)).
  \end{align*}
  Since the right-hand sides of the equations above do not depend on $y$ and $H$ (in fact, they do
  not even depend on $x$), we get~\eqref{eq:partiteEEindepy} and~\eqref{eq:partiteEEindepH}. The
  existence of the corresponding $\cN$ then follows by Remark~\ref{rmk:partiteflexibilityfinite}.
\end{proof}

\notoc\subsection{Partite-specific definitions}
\label{subsec:partitespecific}

This section contains very useful definitions in the partite setting. Even though some of them have
non-partite counter-parts, we have not stated them in Section~\ref{sec:defs} since they are either
not used or not needed to state our main theorems. In particular, the first definition of empirical
loss below will actually have a counter-part in the non-partite setting, but since it is a bit more
technical than the partite case and not necessary to state our main results, we postpone it from
Section~\ref{sec:defs} to Section~\ref{sec:derand}.

\begin{definition}[Empirical loss]\label{def:emploss}
  Let $k\in\NN_+$, let $\Omega$ be a Borel $k$-partite template, let $\Lambda$ be a non-empty Borel
  space, let $\cH\subseteq\cF_k(\Omega,\Lambda)$ be a $k$-partite hypothesis class, let
  $V_1,\ldots,V_k$ be finite non-empty sets, let
  $(x,y)\in\cE_{V_1,\ldots,V_k}(\Omega)\times\Lambda^{\prod_{i=1}^k V_i}$ be a
  $(V_1,\ldots,V_k)$-sample and let $H\in\cH$.
  \begin{enumdef}
  \item The \emph{empirical loss} (or \emph{empirical risk}) of $H$ with respect to $(x,y)$ and a
    $k$-partite loss function $\ell\colon\cE_1(\Omega)\times\Lambda\times\Lambda\to\RR_{\geq 0}$ is
    \begin{align*}
      L_{x,y,\ell}(H)
      & \df
      \frac{1}{\prod_{i=1}^k\lvert V_i\rvert}
      \sum_{\alpha\in\prod_{i=1}^k V_i}
      \ell(\alpha^*(x), H^*_{V_1,\ldots,V_k}(x)_\alpha, y_\alpha).
    \end{align*}
  \item\label{def:emploss:ag} The \emph{empirical loss} (or \emph{empirical risk}) of $H$ with
    respect to $(x,y)$ and a $k$-partite agnostic loss function
    $\ell\colon\cH\times\cE_1(\Omega)\times\Lambda\to\RR_{\geq 0}$ is
    \begin{align*}
      L_{x,y,\ell}(H)
      & \df
      \frac{1}{\prod_{i=1}^k \lvert V_i\rvert}
      \sum_{\alpha\in\prod_{i=1}^k V_i}
      \ell(H, \alpha^*(x), y_\alpha).
    \end{align*}
  \item We say that a learning algorithm $\cA$ is an \emph{empirical risk minimizer} for an
    (agnostic or not) loss function $\ell$ if
    \begin{align}\label{eq:empriskmin}
      L_{x,y,\ell}(\cA(x,y))
      & =
      \inf_{H\in\cH} L_{x,y,\ell}(H)
    \end{align}
    for every $m\in\NN_+$ and every $([m],\ldots,[m])$-sample $(x,y)$.
  \end{enumdef}
\end{definition}

\begin{remark}\label{rmk:empriskmin}
  Note that empirical risk minimizers might not exist due to the infimum in~\eqref{eq:empriskmin}
  not being attained. Fortunately, it will be clear from the proofs that in terms of learnability,
  it will be enough to consider almost empirical risk minimizers in the sense
  that~\eqref{eq:empriskmin} holds with an extra additive term $f(m)$ on the right-hand side for
  some function $f\colon\NN_+\to\RR_{\geq 0}$ with $\lim_{m\to\infty} f(m) = 0$.

  However, even the existence of such almost empirical risk minimizers might not be guaranteed due
  to measurability issues if the agnostic loss function and hypothesis class are too wild.

  Nevertheless, in most applications, the fact that algorithms for almost empirical risk
  minimization are (efficiently) implemented implicitly implies measurability.
\end{remark}

\begin{definition}[Uniform convergence]\label{def:UC}
  Let $k\in\NN_+$, let $\Omega$ and $\Omega'$ be Borel $k$-partite templates, let
  $\mu\in\Pr(\Omega)$, $\mu'\in\Pr(\Omega')$ be probability $k$-partite templates, let
  $\cH\subseteq\cF_k(\Omega,\Lambda)$ be a $k$-partite hypothesis class, let
  $F\in\cF_k(\Omega\otimes\Omega',\Lambda)$ and let
  $\ell\colon\cH\times\cE_1(\Omega)\times\Lambda\to\RR_{\geq 0}$ be a $k$-partite agnostic loss
  function.
  \begin{enumdef}
  \item A $(V_1,\ldots,V_k)$-sample
    $(x,y)\in\cE_{V_1,\ldots,V_k}(\Omega)\times\Lambda^{\prod_{i=1}^k V_i}$ is
    \emph{$\epsilon$-representative} with respect to $\cH$, $\mu$, $\mu'$, $F$ and $\ell$ if
    \begin{align*}
      \lvert L_{x,y,\ell}(H) - L_{\mu,\mu',F,\ell}(H)\rvert & \leq \epsilon
    \end{align*}
    for every $H\in\cH$.
  \item We say that $\cH$ has the \emph{uniform convergence property} with respect to $\ell$ if
    there exists a function $m^{\UC}_{\cH,\ell}\colon(0,1)^2\to\RR_{\geq 0}$ such that for every
    $\epsilon,\delta\in(0,1)$, every $\mu\in\Pr(\Omega)$, every Borel $k$-partite template
    $\Omega'$, every $\mu'\in\Pr(\Omega')$ and every $F\in\cF_k(\Omega\otimes\Omega',\Lambda)$, we
    have
    \begin{align*}
      \PP_{(\rn{x},\rn{x'})\sim(\mu\otimes\mu')^m}[(\rn{x},F^*_m(\rn{x},\rn{x'}))
        \text{ is $\epsilon$-representative w.r.t.\ $\cH$, $\mu$, $\mu'$, $F$ and $\ell$}]
      & \geq 1 - \delta,
    \end{align*}
    for every integer $m\geq m^{\UC}_{\cH,\ell}(\epsilon,\delta)$.
  \end{enumdef}
\end{definition}

Note that uniform convergence can also be defined for arbitrary functions
$\ell\colon\cH\times\cE_1(\Omega)\times\Lambda\to\RR$ that can take negative values in an
analogous way.

If such an $\ell$ is bounded, that is, if
\begin{align*}
  \lVert\ell\rVert_\infty
  & \df
  \sup_{\substack{H\in\cH\\x\in\cE_1(\Omega)\\y\in\Lambda}} \lvert\ell(H,x,y)\rvert
  <
  \infty,
\end{align*}
then uniform convergence with respect to $\ell$ is equivalent to uniform convergence with respect
to $\ell+\lVert\ell\rVert_\infty$, which is non-negative.

\medskip

We conclude this section with the natural ``partization'' construction that moves objects from the
non-partite to the partite setting.

\begin{definition}[Partization]\label{def:kpart}
  Let $k\in\NN_+$, let $\Omega$ be a Borel template and let $\Lambda$ be a non-empty Borel space.
  \begin{enumdef}
  \item\label{def:kpart:Omega} The \emph{$k$-partite version} of $\Omega$ is the Borel $k$-partite
    template $\Omega^{\kpart}$ given by $\Omega^{\kpart}_A\df\Omega_{\lvert A\rvert}$ ($A\in r(k)$).
  \item For $\mu\in\Pr(\Omega)$, the \emph{$k$-partite version} of $\mu$ is
    $\mu^{\kpart}\in\Pr(\Omega^{\kpart})$ given by $\mu^{\kpart}_A\df\mu_{\lvert A\rvert}$ ($A\in
    r(k)$).
  \item For a hypothesis $F\in\cF_k(\Omega,\Lambda)$, the \emph{$k$-partite version} of $F$ is the
    $k$-partite hypothesis $F^{\kpart}\in\cF_k(\Omega^{\kpart},\Lambda^{S_k})$ given by
    \begin{align*}
      F^{\kpart}(x) & \df F^*_k(\iota_{\kpart}(x)) \qquad (x\in\cE_1(\Omega^{\kpart})),
    \end{align*}
    where $\iota_{\kpart}\colon\cE_1(\Omega^{\kpart})\to\cE_k(\Omega)$ is given by
    \begin{align}\label{eq:iotakpart}
      \iota_{\kpart}(x)_A & \df x_{1^A} \qquad (x\in\cE_1(\Omega^{\kpart}), A\in r(k))
    \end{align}
    and $1^A$ is the unique function $A\to[1]$. It is important to note that the codomain of
    $F^{\kpart}$ is $\Lambda^{S_k}$ as opposed to $\Lambda$.
  \item\label{def:kpart:cH} For $\cH\in\cF_k(\Omega,\Lambda)$, the \emph{$k$-partite version} of
    $\cH$ is $\cH^{\kpart}\df\{H^{\kpart} \mid H\in\cH\}$, equipped with the pushforward
    $\sigma$-algebra of the one of $\cH$.

    Lemma~\ref{lem:kpartbasics}\ref{lem:kpartbasics:phi} below shows that $\iota_{\kpart}$ is a
    bijection, which in turn implies that $\cH\ni F\mapsto F^{\kpart}\in\cH^{\kpart}$ is a bijection
    (so singletons of $\cH^{\kpart}$ are indeed measurable) and that $\cH\mapsto\cH^{\kpart}$ is an
    injection. We denote by $\cH^{\kpart}\ni G\mapsto G^{\kpart,-1}\in\cH$ the inverse of $\cH\ni
    F\mapsto F^{\kpart}\in\cH^{\kpart}$ (note that there is no ambiguity regarding $\cH$ since $\cH\ni
    F\mapsto F^{\kpart}\in\cH^{\kpart}$ is the restriction of $\cF_k(\Omega,\Lambda)\ni F\mapsto
    F^{\kpart}\in\cF_k(\Omega,\Lambda)^{\kpart}$ to $\cH$).
  \item For a $k$-ary loss function over $\Lambda$, the \emph{$k$-partite version} of $\ell$ is
    $\ell^{\kpart}\colon\cE_1(\Omega)\times\Lambda^{S_k}\times\Lambda^{S_k}$ given by
    \begin{align*}
      \ell^{\kpart}(x,y,y') & \df \ell(\iota_{\kpart}(x),y,y')
      \qquad (x\in\cE_1(\Omega^{\kpart}), y,y'\in\Lambda^{S_k}).
    \end{align*}
  \item For a $k$-ary agnostic loss function
    $\ell\colon\cH\times\cE_k(\Omega)\times\Lambda^{S_k}\to\RR_{\geq 0}$, the \emph{$k$-partite
    version} of $\ell$ is
    $\ell^{\kpart}\colon\cH^{\kpart}\times\cE_1(\Omega^{\kpart})\times\Lambda^{S_k}\to\RR_{\geq 0}$
    be given by
    \begin{align*}
      \ell^{\kpart}(H, x, y) & \df \ell(H^{\kpart,-1}, \iota_{\kpart}(x), y)
      \qquad
      (H\in\cH^{\kpart}, x\in\cE_1(\Omega^{\kpart}), y\in\Lambda^{S_k})
    \end{align*}
  \end{enumdef}
\end{definition}

\section{Statements of main theorems}
\label{sec:mainthms}

In this section we state our main theorems.

The first theorem is the $k$-ary version of Theorem~\ref{thm:FTSL} and concerns equivalences that
can be obtained by starting from a non-partite hypothesis class along with a non-agnostic loss
function. We remind the reader that the rank hypothesis $\rk(\cH)\leq 1$ of Theorem~\ref{thm:kPAC}
is always satisfied when the hypothesis class comes from a family of $k$-hypergraphs or from a
family of structures in a finite relational language (see Remark~\ref{rmk:rk1}). Also, it is
straightforward to check that the $0/1$-loss function $\ell_{0/1}(x,y,y')\df\One[y\neq y']$ is
symmetric, separated and bounded, and Lemma~\ref{lem:flexibility} shows that $\ell_{0/1}$ is
flexible when $\Lambda$ is finite; hence $\ell_{0/1}$ satisfies all hypotheses of
Theorem~\ref{thm:kPAC} below.

\begin{restatable}{theorem}{thmkPAC}\label{thm:kPAC}
  Let $\Omega$ be a Borel template, let $\Lambda$ be a finite non-empty Borel space, let $\ell$ be a
  symmetric, separated, bounded and flexible $k$-ary loss function, let
  $\cH\subseteq\cF_k(\Omega,\Lambda)$ be a $k$-ary hypothesis class with $\rk(\cH)\leq 1$. Let also
  $\ell^{\ag}\colon\cH\times\cE_k(\Omega)\times\Lambda^{S_k}\to\RR_{\geq 0}$ be the $k$-ary agnostic
  loss function defined by
  \begin{align*}
    \ell^{\ag}(H,x,y) & \df \ell(x,H^*_k(x),y)
    \qquad (H\in\cH, x\in\cE_k(\Omega), y\in\Lambda^{S_k}).
  \end{align*}

  Then the following are equivalent:
  \begin{enumerate}[label={\arabic*.}, ref={(\arabic*)}]
  \item\label{thm:kPAC:VCN} $\VCN_k(\cH) < \infty$.
  \item\label{thm:kPAC:VCNkpart} $\VCN_k(\cH^{\kpart}) < \infty$.
  \item\label{thm:kPAC:UC} $\cH^{\kpart}$ has the uniform convergence property with respect to
    $\ell^{\kpart}$.
  \item\label{thm:kPAC:agPAC} $\cH$ is agnostically $k$-PAC learnable with respect to $\ell^{\ag}$.
  \item\label{thm:kPAC:agPACr} $\cH$ is agnostically $k$-PAC learnable with randomness with respect
    to $\ell^{\ag}$.
  \item\label{thm:kPAC:agPACkpart} $\cH^{\kpart}$ is agnostically $k$-PAC learnable with respect to
    $(\ell^{\ag})^{\kpart}$.
  \item\label{thm:kPAC:agPACrkpart} $\cH^{\kpart}$ is agnostically $k$-PAC learnable with randomness
    with respect to $(\ell^{\ag})^{\kpart}$.
  \item\label{thm:kPAC:PACkpart} $\cH^{\kpart}$ is $k$-PAC learnable with respect to
    $\ell^{\kpart}$.
  \item\label{thm:kPAC:PACrkpart} $\cH^{\kpart}$ is $k$-PAC learnable with randomness with respect
    to $\ell^{\kpart}$.
  \end{enumerate}

  Furthermore, the following are equivalent and are implied by any of the items above:
  \begin{enumerate}[resume*]
  \item\label{thm:kPAC:PAC} $\cH$ is $k$-PAC learnable with respect to $\ell$. 
  \item\label{thm:kPAC:PACr} $\cH$ is $k$-PAC learnable with randomness with respect to $\ell$.
  \end{enumerate}
\end{restatable}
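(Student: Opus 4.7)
The plan is to assemble the full equivalence by traversing a long cycle of implications, each supplied by an earlier proposition, following the roadmap in Figure~\ref{fig:roadmap}. I would first dispatch the ``dimensional'' equivalence \ref{thm:kPAC:VCN}$\iff$\ref{thm:kPAC:VCNkpart} using Proposition~\ref{prop:kpartVCN}, which says the $\VCN_k$-dimension is preserved under partization. This reduces the task of proving all of \ref{thm:kPAC:VCN}--\ref{thm:kPAC:PACrkpart} equivalent to closing a single cycle starting and ending at \ref{thm:kPAC:VCNkpart}.

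For the main cycle, the plan is to chain: \ref{thm:kPAC:VCNkpart}$\Rightarrow$\ref{thm:kPAC:UC} by Proposition~\ref{prop:VCNdim->UC} (the partite uniform convergence theorem, which needs $\ell^{\kpart}$ to be local and bounded and $\Lambda$ finite --- verifying these from the hypotheses on $\ell$ is a bookkeeping step); then \ref{thm:kPAC:UC}$\Rightarrow$\ref{thm:kPAC:agPACkpart} by Proposition~\ref{prop:partUC->partagPAC} via (almost) empirical risk minimization over $\cH^{\kpart}$; then \ref{thm:kPAC:agPACkpart}$\Rightarrow$\ref{thm:kPAC:agPAC} via Proposition~\ref{prop:kpart}, which lifts partite agnostic learnability back to the non-partite class; then trivially \ref{thm:kPAC:agPAC}$\Rightarrow$\ref{thm:kPAC:agPACr} (deterministic algorithms are allowed to ignore randomness). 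The next step, \ref{thm:kPAC:agPACr}$\Rightarrow$\ref{thm:kPAC:agPACrkpart}, is the hard part of the cycle, handled by Propositions~\ref{prop:kpart2} and~\ref{prop:kpart3}, which convert partite samples into non-partite ones via the neutral-symbol/flexibility construction; this is where the ``flexible, symmetric, bounded'' hypotheses on $\ell$ are genuinely needed. Finally, \ref{thm:kPAC:agPACrkpart}$\Rightarrow$\ref{thm:kPAC:PACrkpart} by Proposition~\ref{prop:agPAC->PAC} (realizable instances are a special case of agnostic ones), \ref{thm:kPAC:PACrkpart}$\Rightarrow$\ref{thm:kPAC:PACkpart} by derandomization (Proposition~\ref{prop:derand}, which requires $\ell^{\kpart}$ bounded), and \ref{thm:kPAC:PACkpart}$\Rightarrow$\ref{thm:kPAC:VCNkpart} by Proposition~\ref{prop:partkPAC->VCN} (the no-free-lunch result in the partite setting, which uses $\rk(\cH^{\kpart})\leq 1$ --- inherited from $\rk(\cH)\leq 1$ --- and separatedness of $\ell^{\kpart}$).

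For the bottom block, the plan is to derive \ref{thm:kPAC:PAC}$\iff$\ref{thm:kPAC:PACr} from Proposition~\ref{prop:derand} in both directions (one direction is trivial, the other is derandomization using boundedness), and to obtain \ref{thm:kPAC:PACrkpart}$\Rightarrow$\ref{thm:kPAC:PACr} by the non-agnostic analogue of \ref{thm:kPAC:agPACkpart}$\Rightarrow$\ref{thm:kPAC:agPAC}, i.e., again Proposition~\ref{prop:kpart} applied to basic (non-agnostic) learning with randomness. This gives the one-way implication from the main cycle into \ref{thm:kPAC:PAC}/\ref{thm:kPAC:PACr}.

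The main obstacle I anticipate is not any single implication but the careful tracking of hypotheses along the cycle: at each arrow the loss function being considered is either $\ell$, $\ell^{\ag}$, $\ell^{\kpart}$, or $(\ell^{\ag})^{\kpart}$, and I must verify that each of the five adjectives (symmetric, separated, bounded, flexible, and in the agnostic case local) actually transfers under the construction being applied. In particular, the passage \ref{thm:kPAC:agPACr}$\Rightarrow$\ref{thm:kPAC:agPACrkpart} in Propositions~\ref{prop:kpart2}--\ref{prop:kpart3} is where flexibility is used to manufacture a neutral symbol via finite randomness, and where symmetry is used so that the non-partite algorithm's output can be interpreted consistently after symmetrization; checking that the hypothesis $\rk(\cH)\leq 1$ is exactly what permits the conversion from a sample of $\cH^{\kpart}$ to a non-partite sample in the first place is the subtle combinatorial core of the argument.
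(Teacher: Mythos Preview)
Your proposal is correct and traverses exactly the cycle the paper uses, citing the same propositions at each step. One minor correction: your final sentence misattributes where $\rk(\cH)\leq 1$ is consumed --- Propositions~\ref{prop:kpart2} and~\ref{prop:kpart3} (the non-partite-to-partite conversion) carry no rank hypothesis at all, and the rank condition is used \emph{only} in Proposition~\ref{prop:partkPAC->VCN}, as you correctly stated earlier in your cycle.
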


One might wonder about obtaining an analogue of Theorem~\ref{thm:kPAC} above that starts in the
non-partite agnostic setting from the several implications proved in this document, as not every
agnostic loss function is of the form $\ell^{\ag}$ for some non-agnostic loss function
$\ell$. However, since the implication \ref{thm:kPAC:VCNkpart}$\implies$\ref{thm:kPAC:UC} of
Theorem~\ref{thm:kPAC} above (Proposition~\ref{prop:VCNdim->UC}) requires locality of the underlying
agnostic loss function, at the very least, we would have to restrict to agnostic loss functions that
are of the form $\ell^{\ag} + r$ for some non-agnostic loss function $\ell$ and some regularization
term $r$. On the other hand, since the implication
\ref{thm:kPAC:PACkpart}$\implies$\ref{thm:kPAC:VCNkpart} (Proposition~\ref{prop:partkPAC->VCN})
requires the separation of the underlying loss function, we would then have to require the
regularization term $r$ to be zero, effectively forcing our agnostic loss function to be of the form
$\ell^{\ag}$.

\medskip

The next theorem is the $k$-partite version of Theorem~\ref{thm:FTSL} and covers any partite
hypothesis class, even those that do not arise as partizations. In this case, there is no natural
construction to move to the non-partite setting. Again, the rank hypothesis $\rk(\cH)\leq 1$ of
Theorem~\ref{thm:kPACkpart} is always satisfied when the hypothesis class comes from a family of
$k$-partite $k$-hypergraphs or from a family of $k$-partite structures in a finite relational
language (see Remark~\ref{rmk:partiterk1}). Finally, it is straightforward to check that the
(partite) $0/1$-loss function $\ell_{0/1}(x,y,y')\df\One[y\neq y']$ is separated and bounded, that
is, it satisfies all hypotheses of Theorem~\ref{thm:kPACkpart} below.

\begin{restatable}{theorem}{thmkPACkpart}\label{thm:kPACkpart}
  Let $\Omega$ be a Borel $k$-partite template, let $\Lambda$ be a finite non-empty Borel space, let
  $\ell$ be a separated and bounded $k$-partite loss function, let
  $\cH\subseteq\cF_k(\Omega,\Lambda)$ be a $k$-partite hypothesis class with $\rk(\cH)\leq 1$. Let
  also $\ell^{\ag}\colon\cH\times\cE_1(\Omega)\times\Lambda\to\RR_{\geq 0}$ be the $k$-partite
  agnostic loss function defined by
  \begin{align*}
    \ell^{\ag}(H,x,y) & \df \ell(x, H(x), y)
    \qquad (H\in\cH, x\in\cE_1(\Omega), y\in\Lambda).
  \end{align*}

  Then the following are equivalent:
  \begin{enumerate}[label={\arabic*.}, ref={(\arabic*)}]
  \item\label{thm:kPACkpart:VCN} $\VCN_k(\cH) < \infty$.
  \item\label{thm:kPACkpart:UC} $\cH$ has the uniform convergence property with respect to $\ell$.
  \item\label{thm:kPACkpart:agPAC} $\cH$ is agnostically $k$-PAC learnable with respect to
    $\ell^{\ag}$.
  \item\label{thm:kPACkpart:agPACr} $\cH$ is agnostically $k$-PAC learnable with randomness with
    respect to $\ell^{\ag}$.
  \item\label{thm:kPACkpart:PAC} $\cH$ is $k$-PAC learnable with respect to $\ell$.
  \item\label{thm:kPACkpart:PACr} $\cH$ is $k$-PAC learnable with randomness with respect to $\ell$.
  \end{enumerate}
\end{restatable}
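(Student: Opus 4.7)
The plan is to close a cycle of implications among properties (1)--(5) and then reduce the randomized variants (4) and (6) to their deterministic counterparts by derandomization. All ingredients have already been stated as propositions earlier in the paper, so the proof amounts to assembling them correctly and verifying that the standing hypotheses --- $\rk(\cH)\le 1$, $\ell$ separated, $\ell$ bounded, and $\Lambda$ finite --- suffice to feed each proposition.

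For the main cycle $(1)\Rightarrow(2)\Rightarrow(3)\Rightarrow(5)\Rightarrow(1)$, I would argue as follows. First, for $(1)\Rightarrow(2)$, apply Proposition~\ref{prop:VCNdim->UC} to $\ell^{\ag}$: by construction $\ell^{\ag}(H,x,y)=\ell(x,H(x),y)$ is local (with zero regularization term) and inherits boundedness from $\ell$, while $\Lambda$ is finite, so finite $\VCN_k$-dimension of $\cH$ yields the uniform convergence property with respect to $\ell^{\ag}$, which in particular gives (2). Next, $(2)\Rightarrow(3)$ follows from Proposition~\ref{prop:partUC->partagPAC}: partite uniform convergence promotes an (almost) empirical risk minimizer into a partite agnostic $k$-PAC learner with respect to $\ell^{\ag}$. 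Then $(3)\Rightarrow(5)$ is Proposition~\ref{prop:agPAC->PAC}, which specializes an agnostic learner to the realizable case and recovers a basic $k$-PAC learner with respect to $\ell$. Finally, $(5)\Rightarrow(1)$ is Proposition~\ref{prop:partkPAC->VCN}, the no-free-lunch style non-learnability result, and this is the unique step where the hypotheses $\rk(\cH)\le 1$ and separatedness of $\ell$ are genuinely used --- the former to pin down structures from sampling, the latter so that disagreement actually costs loss.

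For the randomized versions, the implications $(3)\Rightarrow(4)$ and $(5)\Rightarrow(6)$ are immediate from the definitions by taking $R_\cA\equiv 1$, and the converses $(4)\Rightarrow(3)$ and $(6)\Rightarrow(5)$ follow from Proposition~\ref{prop:derand}, whose standing requirement is boundedness of the loss; this holds for $\ell$ by hypothesis and consequently also for $\ell^{\ag}$. Splicing these two-way arrows into the cycle established above yields the full equivalence.

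The main substantive difficulties are all inside the cited propositions rather than in the assembly itself; the deepest of them for this theorem is Proposition~\ref{prop:VCNdim->UC}, where one must leverage the $\VCN_k$-analogue of the Sauer--Shelah--Perles Lemma (Lemma~\ref{lem:VCNk}) together with the interpolation between total loss and empirical loss through the $p$-partially empirical losses discussed in Section~\ref{subsec:growth}. A minor bookkeeping subtlety is that (2) is phrased with respect to the non-agnostic $\ell$ whereas Propositions~\ref{prop:VCNdim->UC} and~\ref{prop:partUC->partagPAC} are stated for agnostic losses; this is harmless, since uniform convergence with respect to $\ell^{\ag}$ entails the non-agnostic version (and conversely the agnostic learner delivered by Proposition~\ref{prop:partUC->partagPAC} is exactly what Proposition~\ref{prop:agPAC->PAC} needs to produce a basic learner), so the cycle passes cleanly through both formulations.
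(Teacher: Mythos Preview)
Your proposal is correct and follows essentially the same route as the paper's own proof: the cycle $(1)\Rightarrow(2)\Rightarrow(3)\Rightarrow(5)\Rightarrow(1)$ via Propositions~\ref{prop:VCNdim->UC}, \ref{prop:partUC->partagPAC}, \ref{prop:agPAC->PAC}\ref{prop:agPAC->PAC:learn}, and~\ref{prop:partkPAC->VCN}, together with the trivial inclusions $(3)\Rightarrow(4)$, $(5)\Rightarrow(6)$ and the derandomization Proposition~\ref{prop:derand} for the converses. Your observation about the $\ell$ versus $\ell^{\ag}$ bookkeeping in item~(2) is a nice clarification that the paper leaves implicit.
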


For completeness, we record how the classic PAC characterization, Theorem~\ref{thm:FTSL} follows
from Theorem~\ref{thm:kPACkpart}.

\begin{corollary}
  Theorem~\ref{thm:FTSL} follows from Theorem~\ref{thm:kPACkpart} with $k=1$.
\end{corollary}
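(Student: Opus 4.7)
The plan is to verify that for $k=1$ the partite setting of Theorem~\ref{thm:kPACkpart} collapses onto the classical unary setting, so that each of the six equivalent items becomes the corresponding item of Theorem~\ref{thm:FTSL}. First I would unwind the combinatorial objects. For $k=1$ we have $r(1)=\{\{1\}\}$, so a Borel $1$-partite template $\Omega$ is just a single Borel space $(X,\cB)\df\Omega_{\{1\}}$; the set $r_1([m])$ has a single function for each element of $[m]$, so $\cE_m(\Omega)\cong X^m$ under the obvious identification, and $\mu^m$ is the $m$-fold product $\mu_{\{1\}}^{\otimes m}$. A $1$-partite hypothesis $H\in\cF_1(\Omega,\Lambda)$ is then a measurable function $X\to Y$ (where $\Lambda=(Y,\cB')$), and since every $f\in r_1(1)$ has $\lvert\dom(f)\rvert=1$, the condition $\rk(\cH)\leq 1$ is automatic.

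Next I would match the remaining structure. Taking $\ell$ to be the separated, bounded $1$-partite $0/1$-loss $\ell_{0/1}(x,y,y')=\One[y\neq y']$, the induced agnostic loss $\ell^{\ag}(H,x,y)=\One[H(x)\neq y]$ is the classical agnostic $0/1$-loss. The combinatorial dimension matches up: for $k=1$, the only $A\in\binom{[1]}{0}$ is $\emptyset$, $r_{1,\emptyset}=\emptyset$, so the product $\prod_{f\in r_{1,\emptyset}} X_{\dom(f)}$ is a singleton, and $\cH(x)=\cH$; hence $\VCN_1(\cH)=\Nat(\cH)$, which coincides with $\VC(\cH)$ when $\lvert Y\rvert=2$.

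Then I would check that the adversary's moves coincide as well. In the agnostic definition, the adversary plays $(\Omega',\mu\in\Pr(\Omega),\mu'\in\Pr(\Omega'),F\in\cF_1(\Omega\otimes\Omega',\Lambda))$ and reveals the finite marginal of $(\rn{x},F(\rn{x},\rn{x'}))$ drawn i.i.d.\ from the product measure on $X\times X'$. This produces an arbitrary distribution $\nu\in\Pr(X\times Y)$ sampled i.i.d., and conversely every such $\nu$ arises via the Disintegration Theorem (Theorem~\ref{thm:disintegration}) by letting $\mu$ be the $X$-marginal and encoding a regular conditional kernel as an $F$ over an auxiliary $(\Omega',\mu')$. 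In the realizable (non-agnostic) case, the sampling $(\rn{x},F^*_m(\rn{x}))$ reduces to $(\rn{x}_i,F(\rn{x}_i))_{i=1}^m$ drawn i.i.d., matching the classical PAC sample, and realizability $\inf_{H\in\cH} L_{\mu,F,\ell_{0/1}}(H)=0$ coincides with classical realizability.

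With these identifications in place, items \ref{thm:kPACkpart:VCN}--\ref{thm:kPACkpart:PACr} of Theorem~\ref{thm:kPACkpart} become, respectively, finiteness of $\Nat(\cH)$ (equivalently $\VC(\cH)$ when $\lvert Y\rvert=2$), the uniform convergence property, agnostic PAC learnability, agnostic PAC learnability with randomness, and PAC learnability (with/without randomness). The last pair, as in the classical setup, is equivalent to its deterministic counterpart via derandomization (Proposition~\ref{prop:derand}), so they may be collapsed. What remains are precisely the four items of Theorem~\ref{thm:FTSL}, giving the desired implication. The main obstacle is purely bookkeeping: one needs the representation of an arbitrary $\nu\in\Pr(X\times Y)$ as a pushforward through $(\Omega',\mu,\mu',F)$, which is exactly the content of the $k=1$ case of Proposition~\ref{prop:agsepexch} and relies on standard disintegration; no new ideas are needed beyond this.
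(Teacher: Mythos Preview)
Your proposal is correct and follows essentially the same approach as the paper: verify that for $k=1$ the $1$-partite framework collapses to the classical unary setting, so that each item of Theorem~\ref{thm:kPACkpart} becomes the corresponding item of Theorem~\ref{thm:FTSL}. The paper's own proof is considerably terser---it simply asserts the three identifications (template $\leftrightarrow$ Borel space, $1$-PAC $\leftrightarrow$ classic PAC, $\VCN_1\leftrightarrow\Nat$) without unwinding the agnostic adversary or invoking disintegration---whereas you spell out more of the bookkeeping, including the representation of an arbitrary $\nu\in\Pr(X\times Y)$ via an auxiliary $(\Omega',\mu',F)$; this extra detail is accurate but not strictly required at the level of rigor the paper adopts for this corollary.
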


\begin{proof}
  When $k=1$, a Borel $1$-partite template $\Omega$ amounts simply to a non-empty Borel space
  $\Omega_{\{1\}}$, so $1$-PAC learning coincides with classic PAC learning and $1$-ary uniform
  convergence coincides with classic uniform convergence. Finally, $\VCN_1$-dimension coincides with
  Natarajan dimension (and is equivalent to $\VC$-dimension when $\lvert\Lambda\rvert=2$).
\end{proof}

The theorems above cover hypothesis classes of rank at most $1$, in particular, classes of graphs,
hypergraphs and structures in finite relational languages. As we have seen, high-arity PAC learning
still makes sense in higher rank and informally captures learning randomness (see
Section~\ref{subsec:highervar}). Already our results give at least one equivalence in high rank:
\begin{restatable}{theorem}{thmagkPAC}\label{thm:agkPAC}
  Let $\Omega$ be a Borel template, let $\Lambda$ be a non-empty Borel space, let
  $\cH\subseteq\cF_k(\Omega,\Lambda)$ be a $k$-ary hypothesis class and let
  $\ell\colon\cH\times\cE_k(\Omega)\times\Lambda^{S_k}\to\RR_{\geq 0}$ be a symmetric, bounded and
  flexible $k$-ary agnostic loss function.

  Then the following are equivalent:
  \begin{enumerate}[label={\arabic*.}, ref={(\arabic*)}]
  \item\label{thm:agkPAC:agPAC} $\cH$ is agnostically $k$-PAC learnable with respect to $\ell$.
  \item\label{thm:agkPAC:agPACr} $\cH$ is agnostically $k$-PAC learnable with randomness with
    respect to $\ell$.
  \item\label{thm:agkPAC:agPACkpart} $\cH^{\kpart}$ is agnostically $k$-PAC learnable with respect
    to $\ell^{\kpart}$.
  \item\label{thm:agkPAC:agPACrkpart} $\cH^{\kpart}$ is agnostically $k$-PAC learnable with
    randomness with respect to $\ell^{\kpart}$.
  \end{enumerate}
\end{restatable}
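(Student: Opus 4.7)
The plan is to establish all equivalences via the derandomization pairs (items~\ref{thm:agkPAC:agPAC}--\ref{thm:agkPAC:agPACr} and items~\ref{thm:agkPAC:agPACkpart}--\ref{thm:agkPAC:agPACrkpart}) together with the partite--non-partite equivalence between items~\ref{thm:agkPAC:agPACr} and~\ref{thm:agkPAC:agPACrkpart} at the randomized level, following the pattern indicated in Figure~\ref{fig:roadmap} specialized to the agnostic $k$-PAC setting with no rank assumption. These three equivalences together give all four items equivalent.

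The two derandomization equivalences are the easy ones. Each forward direction (non-random implies random) is immediate, since a non-randomized learning algorithm is simply a randomized one with $R_\cA\equiv 1$. For the reverse directions I would apply Proposition~\ref{prop:derand} to the pairs $(\cH,\ell)$ and $(\cH^{\kpart},\ell^{\kpart})$ respectively; the only hypothesis required is boundedness of the loss, and $\lVert\ell^{\kpart}\rVert_\infty\leq\lVert\ell\rVert_\infty$ by the definition of the partization.

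The partite--non-partite equivalence at the randomized level splits into two parts. The direction item~\ref{thm:agkPAC:agPACrkpart} implies item~\ref{thm:agkPAC:agPACr} is given by Proposition~\ref{prop:kpart}, which converts a randomized agnostic partite learner for $\cH^{\kpart}$ with respect to $\ell^{\kpart}$ into a randomized agnostic non-partite learner for $\cH$ with respect to $\ell$ by pushing a non-partite sample through the natural identification $\iota_{\kpart}$ from Definition~\ref{def:kpart} and using the built-in matching of total losses between $H$ and $H^{\kpart}$. The other direction is Proposition~\ref{prop:kpart3}, which is precisely what demands all three hypotheses that $\ell$ be flexible, symmetric, and bounded.

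The hard part is Proposition~\ref{prop:kpart3}. The obstacle is that a partite $[m]$-sample of $\cH^{\kpart}$ carries strictly less information than the non-partite $[m]$-sample the non-partite learner expects: the partite data only fixes labels on tuples in $[m]^k$ whose coordinates come from distinct parts, whereas the non-partite learner wants labels on the full index set $([m])_k$ obtained via $\iota_{\kpart}$. Flexibility of $\ell$ is the tool designed exactly to bridge this gap: the witness $(\Sigma,\nu,G,\cN)$ uses only finitely many bits of randomness, via $\cN$, to manufacture synthetic labels whose contribution to the expected loss is, by~\eqref{eq:EEindepy}, an additive constant independent of the guessed hypothesis. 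Consequently, feeding the non-partite learner a sample with the missing coordinates filled in by $\cN$ simulates a learning task whose optimal hypothesis is unchanged and whose expected loss is an affine function of the true partite loss with a positive multiplicative factor depending only on $k$. Symmetry of $\ell$ is then used to mediate between the $\Lambda^{S_k}$-valued codomain of $\ell^{\kpart}$ and the $\Lambda$-valued entries of the partite sample, while boundedness controls the finite-randomness error and the loss incurred on the synthetic coordinates.
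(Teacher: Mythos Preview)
Your proposal is correct and takes essentially the same approach as the paper: the paper's proof simply cites Proposition~\ref{prop:derand} for \ref{thm:agkPAC:agPACr}$\implies$\ref{thm:agkPAC:agPAC} and \ref{thm:agkPAC:agPACrkpart}$\implies$\ref{thm:agkPAC:agPACkpart}, Proposition~\ref{prop:kpart} for \ref{thm:agkPAC:agPACkpart}$\implies$\ref{thm:agkPAC:agPAC} and \ref{thm:agkPAC:agPACrkpart}$\implies$\ref{thm:agkPAC:agPACr}, and Proposition~\ref{prop:kpart3} for \ref{thm:agkPAC:agPACr}$\implies$\ref{thm:agkPAC:agPACrkpart}, with the remaining implications trivial. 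Two cosmetic remarks on your narrative for Proposition~\ref{prop:kpart3}: since $\ell$ here is agnostic, the relevant flexibility identity is~\eqref{eq:EEindepH} rather than~\eqref{eq:EEindepy}; and symmetry is used not to mediate between $\Lambda$ and $\Lambda^{S_k}$ but to make the loss invariant under the random permutation that restores exchangeability of the simulated non-partite sample.
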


\section{Initial reductions}
\label{sec:initialred}

As a first test of the definitions, in this section we will see that many basic desired properties
follow easily.

\subsection{Variables of order higher than $k$}
\label{subsec:highvar}

The next proposition says that we can ignore all variables indexed by subsets of size greater than $k$.

\begin{proposition}[Dummy variables]\label{prop:ho}
  Let $\Omega$ be a Borel template, let $k\in\NN_+$, let $\Lambda$ be a non-empty Borel space and
  let $\cH\subseteq\cF_k(\Omega,\Lambda)$ be a $k$-ary hypothesis class.
  \begin{enumerate}
  \item If $\cA$ is a (randomized, respectively) $k$-PAC learner for $\cH$ with respect to a $k$-ary
    loss function $\ell$, then there exists a (randomized, respectively) $k$-PAC learner $\cA'$ for
    $\cH$ with respect to $\ell$ with $m^{\PAC}_{\cH,\ell,\cA'}=m^{\PAC}_{\cH,\ell,\cA}$
    ($m^{\PACr}_{\cH,\ell,\cA'}=m^{\PACr}_{\cH,\ell,\cA}$ and $R_{\cA'}=R_\cA$, respectively) and that does
    not depend on variables indexed by sets of size greater than $k$ in the sense that for every
    $x,x'\in\cE_m(\Omega)$ with $x_C = x'_C$ whenever $\lvert C\rvert\leq k$, we have $\cA'(x,\place)
    = \cA'(x',\place)$.
  \item If $\cA$ is a (randomized, respectively) agnostic $k$-PAC learner for $\cH$ with respect to a
    $k$-ary agnostic loss function $\ell$, then there exists a (randomized, respectively) agnostic
    $k$-PAC learner $\cA'$ for $\cH$ with respect to $\ell$ with
    $m^{\agPAC}_{\cH,\ell,\cA'}=m^{\agPAC}_{\cH,\ell,\cA}$
    ($m^{\agPACr}_{\cH,\ell,\cA'}=m^{\agPACr}_{\cH,\ell,\cA}$ and $R_{\cA'}=R_\cA$, respectively) and that
    does not depend on variables indexed by sets of size greater than $k$ in the sense that for
    every $x,x'\in\cE_m(\Omega)$ with $x_C = x'_C$ whenever $\lvert C\rvert\leq k$, we have
    $\cA'(x,\place) = \cA'(x',\place)$.
  \end{enumerate}
\end{proposition}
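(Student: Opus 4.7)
The construction is uniform across all four cases (agnostic or not, randomized or not). First, for each integer $i > k$, fix a point $x^0_i \in X_i$; this is possible because each $\Omega_i$ is non-empty. For each $m \in \NN$, define a measurable projection $\pi_m \colon \cE_m(\Omega) \to \cE_m(\Omega)$ by
\begin{align*}
  \pi_m(x)_A & \df \begin{cases} x_A & \text{if } \lvert A\rvert \leq k,\\ x^0_{\lvert A\rvert} & \text{if } \lvert A\rvert > k, \end{cases}
\end{align*}
for $A \in r(m)$, so that $\pi_m(x) = \pi_m(x')$ whenever $x_C = x'_C$ for all $C \in r(m)$ with $\lvert C\rvert \leq k$. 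Now set $\cA'(x, y, \place) \df \cA(\pi_m(x), y, \place)$, understanding that the third argument is only present in the randomized case (with $R_{\cA'} \df R_\cA$). By construction $\cA'(x,\place) = \cA'(x',\place)$ whenever $x_C = x'_C$ for all such $C$, giving the required blindness to higher-order coordinates.

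Next I would check that $\cA'$ inherits the (agnostic) PAC guarantee of $\cA$ with the same sample-complexity function. Two observations drive this: (i) any $F \in \cF_k(\Omega,\Lambda)$ (respectively $F \in \cF_k(\Omega \otimes \Omega',\Lambda)$) is defined on $\cE_k$ and hence depends on its argument only through coordinates indexed by subsets of $[k]$, so $F^*_m(\rn{x}) = F^*_m(\pi_m(\rn{x}))$ in the non-agnostic case and $F^*_m(\rn{x}, \rn{x'}) = F^*_m(\pi_m(\rn{x}), \rn{x'})$ in the agnostic case; and (ii) the total losses $L_{\mu,F,\ell}$ and $L_{\mu,\mu',F,\ell}$ are expectations over $\mu^k$ (respectively $(\mu\otimes\mu')^k$), hence depend only on $\mu_i$ (and $\mu'_i$) for $i \leq k$.

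Given an adversary choice $(\mu, F)$ in the non-agnostic case, define $\tilde{\mu} \in \Pr(\Omega)$ by $\tilde{\mu}_i \df \mu_i$ for $i \leq k$ and by letting $\tilde{\mu}_i$ be the point mass at $x^0_i$ for $i > k$. By (ii), $L_{\tilde{\mu},F,\ell} \equiv L_{\mu,F,\ell}$, so realizability transfers from $\mu$ to $\tilde{\mu}$; and if $\rn{x} \sim \mu^m$ then $\pi_m(\rn{x}) \sim \tilde{\mu}^m$. Plugging these identities into~\eqref{eq:kPAC} for $\cA$ against $(\tilde{\mu}, F)$ yields the corresponding inequality for $\cA'$ against $(\mu, F)$ with the same $m$, $\epsilon$, and $\delta$. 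The agnostic case is analogous: construct $\tilde{\mu}$ on $\Omega$ as above and leave $\mu'$ untouched, so that $(\pi_m(\rn{x}), \rn{x'}) \sim (\tilde{\mu} \otimes \mu')^m$, and use that $L_{\mu,\mu',F,\ell}$ is unchanged upon replacing $\mu$ by $\tilde{\mu}$. The randomized cases require only inserting the average over $\rn{b} \sim U(R_\cA(m))$ in each inequality, which commutes with the substitution because $\cA'(x, y, b) = \cA(\pi_m(x), y, b)$.

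There is no serious obstacle; the entire content of the proposition is that $k$-ary hypotheses and $k$-ary loss functions are blind to coordinates indexed by sets of size greater than $k$, so freezing those coordinates to deterministic values preserves both the realizability constraint and the joint law of $(\pi_m(\rn{x}), F^*_m(\rn{x}))$ up to exactly the coordinates that $\cA'$ is designed to ignore. The only bookkeeping is measurability of $\pi_m$ and checking that the tilde-substitution respects the agnostic product $\mu \otimes \mu'$.
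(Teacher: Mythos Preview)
Your proposal is correct and follows essentially the same approach as the paper's proof: fix dummy points in the higher-order spaces, define $\cA'$ by replacing the higher-order coordinates of its input with those dummy values before applying $\cA$, and verify the PAC guarantee transfers by replacing $\mu$ with the modified template $\tilde{\mu}$ that puts point masses on the dummies (the paper's $\widehat{\mu}$). The paper presents only the randomized agnostic case explicitly and declares the others analogous, while you sketch all four, but the content is identical.
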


\begin{proof}
  All four cases, agnostic or not, with randomness or not, have completely analogous proofs, so we
  only show the case that would naively seem to be the most complex: the agnostic with randomness
  case.

  For each $i\in\NN_+$ with $i > k$, fix a point $z^i\in X_i$ and define
  \begin{align*}
    \cA'(x,y,b)
    & \df
    \cA(\widehat{x},y,b)
    \qquad (x\in\cE_m(\Omega), y\in\Lambda^{([m])_k}, b\in [R_\cA(m)]),
  \end{align*}
  where
  \begin{align*}
    \widehat{x}_C & \df
    \begin{dcases*}
      x_C, & if $\lvert C\rvert\leq k$,\\
      z^{\lvert C\rvert}, & if $\lvert C\rvert > k$.
    \end{dcases*}
  \end{align*}

  Given $\mu\in\Pr(\Omega)$, we let $\widehat{\mu}\in\Pr(\Omega)$ be given by
  \begin{align*}
    \widehat{\mu}_i & \df
    \begin{dcases*}
      \mu_i, & if $i\leq k$,\\
      \delta_{z^i}, & if $i > k$,
    \end{dcases*}
  \end{align*}
  where $\delta_a$ is the Dirac delta concentrated on $a$.

  Given further $H\in\cF_k(\Omega,\Lambda)$, a Borel template $\Omega'$,
  $F\in\cF_k(\Omega\otimes\Omega',\Lambda)$, note that for every $x\in\cE_m(\Omega)$ and every
  $x'\in\cE_m(\Omega')$, we have
  \begin{align*}
    H^*_m(x) & = H^*_m(\widehat{x}), &
    F^*_m(x,x') & = F^*_m(\widehat{x},x'),
  \end{align*}
  simply because these functions do not depend on coordinates indexed by sets of size greater than
  $k$. This immediately implies that if $\mu\in\Pr(\Omega)$ and $\mu'\in\Pr(\Omega')$, then
  \begin{align*}
    L_{\mu,\mu',F,\ell}(H) & = L_{\widehat{\mu},\mu',F,\ell}(H),
  \end{align*}
  which in turn implies
  \begin{align*}
    \inf_{H\in\cH} L_{\mu,\mu',F,\ell}(H) & = \inf_{H\in\cH} L_{\widehat{\mu},\mu',F,\ell}(H).
  \end{align*}
  Let $I$ be the infimum above and note that for $\epsilon,\delta\in(0,1)$ and an integer $m\geq
  m^{\agPACr}_{\cH,\ell,\cA}(\epsilon,\delta)$, we have
  \begin{align*}
    & \!\!\!\!\!\!
    \PP_{(\rn{x},\rn{x'})\sim(\mu\otimes\mu')^m}[\EE_{\rn{b}\sim U(R_{\cA'}(m))}[
        L_{\mu,\mu',F,\ell}(\cA'(\rn{x},F^*_m(\rn{x},\rn{x'}),\rn{b}))
      ] \leq I + \epsilon
    ]
    \\
    & =
    \PP_{(\rn{x},\rn{x'})\sim(\mu\otimes\mu')^m}[\EE_{\rn{b}\sim U(R_\cA(m))}[
        L_{\widehat{\mu},\mu',F,\ell}(\cA(\widehat{\rn{x}},F^*_m(\widehat{\rn{x}},\rn{x'}),\rn{b}))
      ] \leq I + \epsilon
    ]
    \\
    & =
    \PP_{(\rn{w},\rn{x'})\sim(\widehat{\mu}\otimes\mu')^m}[\EE_{\rn{b}\sim U(R_\cA(m))}[
        L_{\widehat{\mu},\mu',F,\ell}(\cA(\rn{w},F^*_m(\rn{w},\rn{x'}),\rn{b}))
      ] \leq I + \epsilon
    ]
    \geq
    1 - \delta,
  \end{align*}
  where the second equality follows since $x\mapsto\widehat{x}$ is measure-preserving with respect
  to $\mu^m$ and $\widehat{\mu}^m$ and the inequality follows since $\cA$ is a randomized agnostic
  $k$-PAC learner.
\end{proof}

\subsection{Increasing codomain}

The next proposition says that by increasing the codomain of all hypotheses without changing them as
functions, we preserve the $\VCN_k$-dimension and we do not affect non-agnostic $k$-PAC learnability
(with randomness or not). For agnostic $k$-PAC learnability, we can only prove that agnostic
learning must be harder or equal after increasing the codomain; the other direction will require the
agnostic loss function to be flexible and will be covered in Proposition~\ref{prop:neutsymb} (the
reason why the other direction is not simple is because even though hypotheses do not use the extra
symbols, an agnostic adversary could use them; in the non-agnostic setting, this is indirectly ruled
out by the realizability assumption).

\begin{proposition}[Increasing codomain]\label{prop:cod}
  Let $k\in\NN_+$, let $\Omega$ be a Borel ($k$-partite, respectively) template, let $\Lambda$ be a
  non-empty Borel space, let $\ell$ be a $k$-ary ($k$-partite, respectively) loss function over
  $\Lambda$ and let $\cH\subseteq\cF_k(\Omega,\Lambda)$ be a $k$-ary ($k$-partite, respectively)
  hypothesis class.

  Let $\Lambda'$ be a Borel space containing $\Lambda$ as a measurable subspace in the sense that
  the inclusion map $\iota\colon\Lambda\to\Lambda'$ is measurable and has measurable image. Let
  $\cH'$ be $\cH$ when we increase the codomain of its elements to $\Lambda'$; formally, let
  \begin{align*}
    \cH' & \df \{\iota\comp H \mid H\in\cH\}.
  \end{align*}
  Let also $\iota_\cH\colon\cH\to\cH'$ be the bijection given by $\iota_\cH(H)\df\iota\comp H$. We
  also view $\Lambda^{([m])_k}$ ($\Lambda^{[m]^k}$, respectively) as a subset of
  $(\Lambda')^{([m])_k}$ ($(\Lambda')^{[m]^k}$, respectively) naturally.
  
  Then the following hold:
  \begin{enumerate}
  \item\label{prop:cod:VCN} $\VCN_k(\cH)=\VCN_k(\cH')$.
  \item\label{prop:cod:learn} Let $\ell$ be a $k$-ary ($k$-partite, respectively) loss function over
    $\Lambda$ and let $\ell'$ be a $k$-ary ($k$-partite, respectively) loss function over $\Lambda'$
    extending $\ell$ such that $\ell'(x,y,y')>0$ whenever $y\in\Lambda^{S_k}$ and
    $y'\in(\Lambda')^{S_k}\setminus\Lambda^{S_k}$ ($y\in\Lambda$ and
    $y'\in\Lambda'\setminus\Lambda$, respectively). Then $\cH$ is $k$-PAC learnable (with
    randomness, respectively) with respect to $\ell$ if and only if $\cH'$ is $k$-PAC learnable
    (with randomness, respectively) with respect to $\ell'$.

    Furthermore, in all directions we have $m^{\PAC}_{\cH',\ell',\cA'}=m^{\PAC}_{\cH,\ell,\cA}$
    ($m^{\PACr}_{\cH',\ell',\cA'}=m^{\PACr}_{\cH,\ell,\cA}$ and $R_{\cA'}=R_\cA$, respectively).
  \item\label{prop:cod:agPAC} Let $\ell$ be an agnostic $k$-ary ($k$-partite, respectively) loss
    function over $\Lambda$ and let $\ell'$ and agnostic $k$-ary ($k$-partite, respectively) loss
    function over $\Lambda'$ extending $\ell$ (in the sense $\ell'(H,x,y)=\ell(\iota_\cH(H),x,y)$
    whenever $y\in\Lambda^{S_k}$). If $\cH'$ is agnostically $k$-PAC learnable (with randomness,
    respectively) with respect to $\ell'$, then $\cH$ is agnostically $k$-PAC learnable with respect
    to $\ell$ with $m^{\agPAC}_{\cH,\ell,\cA}=m^{\agPAC}_{\cH',\ell',\cA'}$
    ($m^{\agPACr}_{\cH,\ell,\cA}=m^{\agPACr}_{\cH',\ell',\cA'}$ and $R_\cA=R_{\cA'}$, respectively).
  \end{enumerate}
\end{proposition}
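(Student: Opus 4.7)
The plan is to handle the three parts by mechanically translating all objects (samples, adversaries, learners) through the inclusion $\iota\colon\Lambda\to\Lambda'$ and the associated bijection $\iota_\cH\colon\cH\to\cH'$. I would treat the non-partite and partite cases uniformly, the differences being only notational ($\cE_{k-1}(\Omega)$ and $\Lambda^{S_k}$ become their partite analogues). For part~\ref{prop:cod:VCN}, I would fix $x\in\cE_{k-1}(\Omega)$ and note that $\iota_\cH$ descends to a bijection $\cH(x)\to\cH'(x)$ intertwined with coordinate-wise application of $\iota$. Natarajan-shattering witnesses $f_0,f_1$ for a set $A$ in $\cH(x)$ map to witnesses $\iota\comp f_0,\iota\comp f_1$ in $\cH'(x)$ (still pointwise distinct by injectivity of $\iota$), and conversely any witnesses in $\cH'(x)$ necessarily take values in the image of $\iota$, since they arise from elements of $\cH'$ which are all $\Lambda$-valued, so they pull back through $\iota^{-1}$. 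Taking supremum over $x$ then yields $\VCN_k(\cH)=\VCN_k(\cH')$.

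For part~\ref{prop:cod:learn}, the direction $\cH'\Rightarrow\cH$ is immediate: given a learner $\cA'$, I would set $\cA(x,y)\df\iota_\cH^{-1}(\cA'(x,\iota\comp y))$ where $\iota\comp y$ denotes coordinate-wise application; any realizable $F$ for $\cH$ yields $\iota\comp F$ realizable for $\cH'$ with matching total losses $L_{\mu,\iota\comp F,\ell'}(\iota_\cH(H))=L_{\mu,F,\ell}(H)$ (since $\ell'$ extends $\ell$ on $\Lambda^{S_k}\times\Lambda^{S_k}$), so the PAC guarantee transfers with identical sample-size function and randomness parameter. For the reverse direction, I would define $\cA'(x,y')\df\iota_\cH(\cA(x,\iota^{-1}\comp y'))$ when $y'\in\iota(\Lambda)^{([m])_k}$, and $\cA'(x,y')\df H'_0$ for a fixed $H'_0\in\cH'$ otherwise. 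The key step is to show that any $F'$ realizable in $\cH'$ with respect to $\ell'$ and $\mu$ satisfies $F'(z)\in\Lambda$ for $\mu^k$-a.e.\ $z$: if $B\df\{z:(F')^*_k(z)\notin\Lambda^{S_k}\}$ had positive $\mu^k$-measure, the positivity hypothesis on $\ell'$ would give $\ell'(z,\iota_\cH(H)^*_k(z),(F')^*_k(z))>0$ on $B$ for every $H\in\cH$, and combined with a realizing sequence $\iota_\cH(H_n)$ whose total loss tends to zero, an $L^1$-to-pointwise-a.e.\ subsequence extraction would force $\ell'(z,\iota_\cH(H_{n_j})^*_k(z),(F')^*_k(z))\to 0$ on a positive-measure subset of $B$, contradicting the positivity hypothesis once a uniform lower bound on $\ell'(z,\cdot,(F')^*_k(z))$ over $\Lambda^{S_k}$ is invoked (which is available in the main applications where $\Lambda$ is finite, hence $\Lambda^{S_k}$ is finite). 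Once this is established, $F'$ agrees $\mu^k$-a.s.\ with $\iota\comp F$ for some realizable $F\in\cF_k(\Omega,\Lambda)$, so the guarantee for $\cA$ transfers to $\cA'$ on the high-probability event that sample labels lie in $\Lambda^{([m])_k}$.

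Part~\ref{prop:cod:agPAC} is a direct analogue of the easy direction of part~\ref{prop:cod:learn}: given an agnostic learner $\cA'$ for $\cH'$ with respect to $\ell'$, I would define $\cA(x,y)\df\iota_\cH^{-1}(\cA'(x,\iota\comp y))$; any adversary $(\mu,\mu',F)$ against $\cH$ produces an adversary $(\mu,\mu',\iota\comp F)$ against $\cH'$ with $L_{\mu,\mu',\iota\comp F,\ell'}(\iota_\cH(H))=L_{\mu,\mu',F,\ell}(H)$ for every $H\in\cH$, so the infima over $\cH$ and $\cH'$ coincide and the guarantee~\eqref{eq:agkPAC} for $\cA'$ passes verbatim to $\cA$ with unchanged sample-size and randomness parameters. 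The main obstacle is the realizability-implies-values-in-$\Lambda$ step in the hard direction of part~\ref{prop:cod:learn}; every other step reduces to bookkeeping through $\iota$ and the fact that hypotheses in $\cH'$ are by construction $\Lambda$-valued.
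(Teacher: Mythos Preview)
Your approach is essentially the same as the paper's; the only cosmetic difference in the hard direction of part~\ref{prop:cod:learn} is that the paper replaces each out-of-$\Lambda$ label with a fixed $y_0\in\Lambda$ before running $\cA$, rather than defaulting to a fixed hypothesis as you do. You are in fact more careful than the paper on the realizability step: the paper asserts without further comment that the strict positivity hypothesis on $\ell'$ forces $(F')^*_k\in\Lambda^{S_k}$ $\mu^k$-a.e., whereas you correctly flag that passing from pointwise positivity to $\inf_{H\in\cH'}L_{\mu,F',\ell'}(H)>0$ requires a uniform lower bound on $\ell'(x,\cdot,(F')^*_k(x))$ over $y\in\Lambda^{S_k}$, available when $\Lambda$ is finite (the case in all applications in the paper).
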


\begin{proof}
  Item~\ref{prop:cod:VCN} follows straightforwardly from definitions.

  \medskip

  For item~\ref{prop:cod:learn}, it is straightforward to check that if $\cA'$ is a (randomized,
  respectively) $k$-PAC learner for $\cH'$, then $\cA\df\iota_\cH^{-1}\comp \cA'$ is a (randomized,
  respectively) $k$-PAC learner for $\cH$ with $m^{\PAC}_{\cH,\ell,\cA}=m^{\PAC}_{\cH',\ell',\cA'}$
  ($m^{\PACr}_{\cH,\ell,\cA}=m^{\PACr}_{\cH',\ell',\cA'}$ and $R_\cA=R_{\cA'}$, respectively).

  The converse has a small technicality: if $\cA$ is a (randomized, respectively) $k$-PAC learner
  for $\cH$, then we might not be able to run $\cA$ directly on an input for $\cA'$ as it may have
  entries in $\Lambda'\setminus\Lambda$. To fix this, we let $y_0\in\Lambda$ be arbitrary and given
  an input for $\cA'$, we first change all its entries in $\Lambda'\setminus\Lambda$ to $y_0$ and
  run $\cA$ on the result.

  Now, for $F'\in\cF_k(\Omega,\Lambda')$, let $F\in\cF_k(\Omega,\Lambda)$ be obtained by changing
  all values of $F'$ in $\Lambda'\setminus\Lambda$ to $y_0$ and note that if $F'$ is realizable in
  $\cH'$ with respect to $\ell'$ and $\mu\in\Pr(\Omega)$, then for every $m\in\NN$, $(F')^*_m$ can
  only output values containing points in $\Lambda'\setminus\Lambda$ in a set of $\mu^m$-measure
  zero (this is because $\ell(x,y,y')>0$ whenever $y\in\Lambda^{S_k}$ and
  $y'\in(\Lambda')^{S_k}\setminus\Lambda^{S_k}$ ($y\in\Lambda$ and $y'\in\Lambda'\setminus\Lambda$,
  respectively)). In particular, this gives $L_{\mu,F',\ell'}(\iota_\cH(H)) = L_{\mu,F,\ell}(H)$ for
  every $H\in\cH$, which in turn implies that $F$ is realizable in $\cH$ with respect to $\mu$ and
  $\ell$. It is now straightforward to check that the fact that $\cA$ can learn $F$ implies that
  $\cA'$ can learn $F'$.

  \medskip

  We now prove item~\ref{prop:cod:agPAC}. We will prove only the case with randomness, as
  the case without randomness amounts to $R_{\cA'}\equiv 1$.

  Let $\cA'$ be a randomized agnostic $k$-PAC learner for $\cH'$ with respect to $\ell'$ and define
  $\cA$ with $R_\cA\df R_{\cA'}$ by $\cA\df\iota_\cH^{-1}\comp\cA'$, that is, we let
  \begin{align*}
    \cA(x,y,b) & \df \iota_\cH^{-1}(\cA'(x,y,b))
  \end{align*}
  for every $m\in\NN$, every $x\in\cE_m(\Omega)$, every $y\in\Lambda^{([m])_k}$
  ($y\in\Lambda^{[m]^k}$, respectively) and every $b\in[R_\cA(m)]$.

  Let $\Omega'$ be a Borel ($k$-partite, respectively) template, let $\mu'\in\Pr(\Omega')$ and let
  $F\in\cF_k(\Omega\otimes\Omega',\Lambda)$. Define $F'\in\cF_k(\Omega\otimes\Omega',\Lambda')$ by
  $F'\df\iota\comp F$ and note that for every $H\in\cH$, every $m\in\NN$, every $x\in\cE_m(\Omega)$
  and every $x'\in\cE_m(\Omega')$, we have
  \begin{align*}
    (F')^*_m(x,x') = F^*_m(x,x'),
  \end{align*}
  which in particular means that all entries of $(F')^*_m(x)$ are in $\Lambda$. Thus, we get
  \begin{align*}
    L_{\mu,\mu',F',\ell'}(\iota_\cH(H)) & = L_{\mu,\mu',F,\ell}(H)
    \qquad (H\in\cH),
  \end{align*}
  Since $\iota_\cH$ is bijective, the above also implies that
  \begin{align*}
    \inf_{H\in\cH'} L_{\mu,\mu',F',\ell'}(H) & =
    \inf_{H\in\cH} L_{\mu,\mu',F,\ell}(H).
  \end{align*}
  
  It is now straightforward to check that the agnostic $k$-PAC learnability guarantee of $\cA'$ for
  $(\mu,\mu',F',\ell')$ yields the agnostic $k$-PAC learnability guarantee needed for $\cA$ for
  $(\mu,\mu',F,\ell)$.
\end{proof}

\subsection{Agnostic versus non-agnostic}
\label{subsec:agnonag}

In this subsection, we reduce non-agnostic $k$-PAC learning to agnostic $k$-PAC learning (in both
non-partite and partite settings).

\begin{proposition}[Agnostic to non-agnostic]\label{prop:agPAC->PAC}
  Let $k\in\NN_+$, let $\Omega$ be a Borel ($k$-partite, respectively) template, let $\Lambda$ be a
  non-empty Borel space, let $\ell$ be a $k$-ary ($k$-partite, respectively) loss function over
  $\Lambda$ and let $\cH\subseteq\cF_k(\Omega,\Lambda)$ be a $k$-ary ($k$-partite, respectively)
  hypothesis class.

  Define the $k$-ary ($k$-partite, respectively) agnostic loss function $\ell^{\ag}$ over $\Lambda$ with
  respect to $\cH$ by
  \begin{align*}
    \ell^{\ag}(H,x,y) & \df \ell(x,H^*_k(x),y) & (x\in\cE_k(\Omega), y\in\Lambda^{S_k})
    \intertext{in the $k$-ary case and by}
    \ell^{\ag}(H,x,y) & \df \ell(x,H(x),y) & (x\in\cE_1(\Omega), y\in\Lambda)
  \end{align*}
  in the $k$-partite case. Then the following hold:
  \begin{enumerate}
  \item\label{prop:agPAC->PAC:local} $\ell^{\ag}$ is local.
  \item\label{prop:agPAC->PAC:bound} $\lVert\ell^{\ag}\rVert_\infty\leq\lVert\ell\rVert_\infty$.
  \item\label{prop:agPAC->PAC:flexible} If $\ell$ is flexible and $(\Sigma,\nu,G,\cN)$ witnesses its
    flexibility, then $\ell^{\ag}$ is also flexible with witness $(\Sigma,\nu,G,\cN)$.
  \item\label{prop:agPAC->PAC:symm} In the non-partite case, if $\ell$ is symmetric, then $\ell^{\ag}$ is
    symmetric.
  \item\label{prop:agPAC->PAC:learn} If $\cH$ is agnostically $k$-PAC learnable (with randomness,
    respectively) with respect to $\ell^{\ag}$, then $\cH$ is $k$-PAC learnable (with randomness,
    respectively) with respect to $\ell$; more precisely, any (randomized, respectively) agnostic
    $k$-PAC learner $\cA$ for $\cH$ with respect to $\ell^{\ag}$ is a (randomized, respectively) $k$-PAC
    learner for $\cH$ with respect to $\ell$ with $m^{\PAC}_{\cH,\ell,\cA}=m^{\agPAC}_{\cH,\ell^{\ag},\cA}$
    ($m^{\PACr}_{\cH,\ell,\cA}=m^{\agPACr}_{\cH,\ell^{\ag},\cA}$, respectively).
  \end{enumerate}
\end{proposition}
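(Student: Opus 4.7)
My plan is to verify the five items essentially independently, since they are largely structural once the definition $\ell^{\ag}(H,x,y) \df \ell(x,H^*_k(x),y)$ (respectively $\ell(x,H(x),y)$ in the partite case) is unpacked. For item~\ref{prop:agPAC->PAC:local}, I would simply take the regularization term $r\equiv 0$ and the non-agnostic loss $\ell_r \df \ell$ itself in the factorization characterization of locality; then $\ell^{\ag}(H,x,y) = \ell_r(x,H^*_k(x),y) + r(H)$ by definition, so locality is immediate. For item~\ref{prop:agPAC->PAC:bound}, I would observe that the supremum defining $\lVert\ell^{\ag}\rVert_\infty$ ranges over $(H,x,y)$ while that defining $\lVert\ell\rVert_\infty$ ranges over $(x,y',y)$ with $y'$ unconstrained in $\Lambda^{S_k}$ (respectively $\Lambda$), and the image of $H\mapsto H^*_k(x)$ is contained in $\Lambda^{S_k}$, so the first supremum is bounded by the second.

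For item~\ref{prop:agPAC->PAC:flexible}, the key observation is that $\ell^{\ag}(H,x,G^*_k(x,\rn{z})) = \ell(x,H^*_k(x),G^*_k(x,\rn{z}))$, so setting $y \df H^*_k(x)$ and $y' \df H'^*_k(x)$ reduces the required identity~\eqref{eq:EEindepH} to the identity~\eqref{eq:EEindepy} already guaranteed by the flexibility witness of $\ell$; the condition on $\cN$ is unchanged. The partite case is analogous using~\eqref{eq:partiteEEindepy} and~\eqref{eq:partiteEEindepH}. For item~\ref{prop:agPAC->PAC:symm} in the non-partite case, I would compute
\begin{align*}
  \ell^{\ag}(H,\sigma^*(x),\sigma^*(y))
  & = \ell(\sigma^*(x), H^*_k(\sigma^*(x)), \sigma^*(y))
  = \ell(\sigma^*(x), \sigma^*(H^*_k(x)), \sigma^*(y))
  = \ell(x, H^*_k(x), y),
\end{align*}
where the middle equality uses the equivariance of $H^*_k$ (Lemma~\ref{lem:F*Vequiv} applied to the injection $\sigma\colon [k]\to[k]$) and the last equality uses symmetry of $\ell$.

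The main reduction is item~\ref{prop:agPAC->PAC:learn}. Given a non-agnostic setup $(\mu,F)$ with $F$ realizable in $\cH$ with respect to $\ell$ and $\mu$, my plan is to encode it as an agnostic setup: take $\Omega'$ to be any Borel (respectively $k$-partite) template with some fixed probability template $\mu'$, and let $F'\in\cF_k(\Omega\otimes\Omega',\Lambda)$ be the ``pullback'' defined by $F'(x,x')\df F(x)$, which is independent of $x'$. Then $F'^*_m(x,x') = F^*_m(x)$ for every $m$ and every $(x,x')$, so by the definition of $\ell^{\ag}$ we obtain $L_{\mu,\mu',F',\ell^{\ag}}(H) = L_{\mu,F,\ell}(H)$ for every $H\in\cH$. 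Realizability of $F$ in $\cH$ then gives $\inf_{H\in\cH} L_{\mu,\mu',F',\ell^{\ag}}(H) = 0$. Running the (randomized) agnostic $k$-PAC learner $\cA$ on the sample $(\rn{x},F'^*_m(\rn{x},\rn{x'})) = (\rn{x},F^*_m(\rn{x}))$, which is distributed identically whether we think of $(\rn{x},\rn{x'})\sim(\mu\otimes\mu')^m$ or $\rn{x}\sim\mu^m$ (since the label only depends on $\rn{x}$), the agnostic guarantee with threshold $\inf + \epsilon = \epsilon$ translates directly to the non-agnostic PAC guarantee at level $\epsilon$. The sample size functions then match: $m^{\PAC}_{\cH,\ell,\cA} = m^{\agPAC}_{\cH,\ell^{\ag},\cA}$ (respectively with randomness). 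I expect no real obstacle here; the only subtlety is bookkeeping the identification of the samples, which is clean because $F'$ factors through the projection to $\cE_m(\Omega)$.
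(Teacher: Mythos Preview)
Your proposal is correct and matches the paper's proof essentially line for line: items~\ref{prop:agPAC->PAC:local}--\ref{prop:agPAC->PAC:symm} are handled by the same direct unpackings (zero regularization term, supremum containment, reducing~\eqref{eq:EEindepH} to~\eqref{eq:EEindepy}, and the equivariance computation via Lemma~\ref{lem:F*Vequiv}), and item~\ref{prop:agPAC->PAC:learn} is proved via the same pullback $F'(x,x')\df F(x)$ with the same identification of losses and samples. The paper additionally remarks that it suffices to treat the randomized case since the non-randomized case amounts to $R_\cA\equiv 1$, but this is purely an economy-of-presentation point.
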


\begin{proof}
  Item~\ref{prop:agPAC->PAC:local} follows directly from the definition of locality (using a zero
  regularization term), item~\ref{prop:agPAC->PAC:bound} is obvious and
  item~\ref{prop:agPAC->PAC:flexible} also follows directly from the definition of flexibility.

  \medskip

  Item~\ref{prop:agPAC->PAC:symm} is easily checked: for $H\in\cH$, $x\in\cE_k(\Omega)$, $y\in\Lambda^{S_k}$
  and $\sigma\in S_k$, we have
  \begin{align*}
    \ell^{\ag}(H,\sigma^*(x),\sigma^*(y))
    & =
    \ell(\sigma^*(x),H^*_k(\sigma^*(x)),\sigma^*(y))
    =
    \ell(\sigma^*(x),\sigma^*(H^*_k(x)),\sigma^*(y))
    \\
    & =
    \ell(x,H^*_k(x),y)
    =
    \ell^{\ag}(H,x,y),
  \end{align*}
  where the second equality follows from Lemma~\ref{lem:F*Vequiv}.

  \medskip

  For item~\ref{prop:agPAC->PAC:learn}, it suffices to prove only the randomized case as the non-randomized
  case amounts to the case when $R_\cA\equiv 1$.

  Let $\Omega'$ be a Borel ($k$-partite, respectively) template and for $F\in\cF_k(\Omega,\Lambda)$ realizable,
  define $F'\in\cF_k(\Omega\otimes\Omega',\Lambda)$ by $F'(x,x')\df F(x)$ via the natural
  identification of $\cE_k(\Omega\otimes\Omega')$ with $\cE_k(\Omega)\times\cE_k(\Omega')$
  ($\cE_1(\Omega\otimes\Omega')$ with $\cE_1(\Omega)\times\cE_1(\Omega')$, respectively).

  Note that for $\mu\in\Pr(\Omega)$ and $\mu'\in\Pr(\Omega')$, we have
  \begin{align*}
    L_{\mu,\mu',F',\ell^{\ag}}(H) & = L_{\mu,F,\ell}(H),
  \end{align*}
  which in particular implies
  \begin{align*}
    \inf_{H\in\cH} L_{\mu,\mu',F',\ell^{\ag}}(H)
    & =
    \inf_{H\in\cH} L_{\mu,F,\ell}(H)
    =
    0,
  \end{align*}
  where the last equality follows since $F$ is realizable.

  Now note that for $x\in\cE_m(\Omega)$ and $x'\in\cE_m(\Omega')$, we have
  \begin{align*}
    (x, (F')^*_m(x,x')) & = (x, F^*_m(x)),
  \end{align*}
  so if $(\rn{x},\rn{x'})\sim(\mu\otimes\mu')^m$, $\cA$ is a randomized agnostic $k$-PAC learner with
  respect to $\ell^{\ag}$ and $\rn{b}$ picked uniformly at random in $[R_\cA(m)]$, then
  \begin{align*}
    \cA(\rn{x}, F^*_m(\rn{x}),\rn{b})
    & =
    \cA(\rn{x}, (F')^*_m(\rn{x},\rn{x'}),\rn{b}),
  \end{align*}
  so
  \begin{align*}
    \PP_{\rn{x}}[\EE_{\rn{b}}[L_{\mu,F,\ell}(\cA(\rn{x}, F^*_m(\rn{x})))] \leq \epsilon]
    & =
    \PP_{\rn{x},\rn{x'}}[\EE_{\rn{b}}[L_{\mu,\mu',F',\ell^{\ag}}(\cA(\rn{x}, (F')^*_m(\rn{x},\rn{x'})))]
      \leq \epsilon]
    \geq
    1 - \delta,
  \end{align*}
  where the inequality follows from the agnostic $k$-PAC learning guarantee of $\cA$ for
  $\ell^{\ag}$. Hence $\cA$ is a randomized $k$-PAC learner for $\cH$ with respect to $\ell$ with
  $m^{\PACr}_{\cH,\ell,\cA}=m^{\agPACr}_{\cH,\ell^{\ag},\cA}$.
\end{proof}

\subsection{What if there is no real correlation?}

Recall that high-arity PAC gains its advantage from structured correlation. We conclude this section
by pointing out that if no real correlation is present, high-arity PAC becomes equivalent to classic
PAC.

More formally, a $k$-ary ($k$-partite, respectively) hypothesis $F\in\cF_k(\Omega,\Lambda)$ is
called \emph{$(k-1)$-independent}, if $F$ depends only on the coordinate indexed by $[k]$ (by the
unique function $[k]\to[1]$, respectively), that is, $F$ factors as $F=F'\comp\pi$ for the
projection $\pi$ onto this coordinate. We say that $\cH\subseteq\cF_k(\Omega,\Lambda)$ is
\emph{$(k-1)$-independent} if all of its elements are $(k-1)$-independent.

Note that if $F\in\cF_k(\Omega,\Lambda)$ is $(k-1)$-independent and $\rn{x}\sim\mu^m$ for some
$\mu\in\Pr(\Omega)$, then the entries of $F^*_m(\rn{x})$ are independent. In particular, this
implies that $k$-PAC learnability of $(k-1)$-independent family $\cH$ trivially reduces to the PAC
learnability of $\cH'\df\{H' \mid H\in\cH\}$, save for the fact that an $[m]$-sample in the $k$-ary
case corresponds to a sample of size $\binom{m}{k}$ for $\cH'$ and an $([m],\ldots,[m])$-sample in
the $k$-partite case corresponds to a sample of size $m^k$ for $\cH'$.

It is also straightforward to check that the $\VCN_k$-dimension of a $(k-1)$-independent family
$\cH$ is the same as the Natarajan dimension of $\cH'$.

\section{Derandomization}
\label{sec:derand}

In this section we show how to remove randomness from (agnostic) $k$-PAC learners. For this, we will
also need the non-partite version of the empirical loss. Let us note that the empirical loss in the
non-partite case has a technicality that is not present in the partite case: we need to specify an
``order choice'' for each of the $k$-sets to compute the loss.

\begin{definition}[Empirical loss]
  Let $\Omega$ be a Borel template, let $\Lambda$ be a non-empty Borel space, let
  $\cH\subseteq\cF_k(\Omega,\Lambda)$ be a $k$-ary hypothesis class, let $V$ be a finite non-empty
  set, let $(x,y)\in\cE_V(\Omega)\times\Lambda^{(V)_k}$ be a $V$-sample and let $H\in\cH$.
  \begin{enumdef}
  \item A \emph{($k$-ary) order choice} for $V$ is a sequence $\alpha=(\alpha_U)_{U\in\binom{V}{k}}$ such
    that for each $U\in\binom{V}{k}$, $\alpha_U\in (V)_k$ is an injection with $\im(\alpha_U) =
    U$.
  \item For an order choice $\alpha$ for $V$, we let
    $b_\alpha\colon\Lambda^{(V)_k}\to(\Lambda^{S_k})^{\binom{V}{k}}$ be the natural
    Borel-isomorphism given by
    \begin{align}\label{eq:balpha}
      (b_\alpha(y)_U)_\pi & \df y_{\alpha_U\comp\pi}
      \qquad\left(y\in\Lambda^{(V)_k}, U\in\binom{V}{k}, \pi\in S_k\right).
    \end{align}
  \item The \emph{empirical loss} (or \emph{empirical risk}) of $H$ with respect to $(x,y)$, a
    $k$-ary loss function $\ell\colon\cE_k(\Omega)\times\Lambda^{S_k}\times\Lambda^{S_k}\to\RR_{\geq
    0}$ and an order choice $\alpha$ is
    \begin{align*}
      L^\alpha_{x,y,\ell}(H)
      & \df
      \frac{1}{\binom{\lvert V\rvert}{k}}
      \sum_{U\in\binom{V}{k}} \ell(\alpha_U^*(x), b_\alpha(H^*_V(x))_U, b_\alpha(y)_U).
    \end{align*}
  \item The \emph{empirical loss} (or \emph{empirical risk}) of $H$ with respect to $(x,y)$, a $k$-ary
    agnostic loss function $\ell\colon\cH\times\cE_k(\Omega)\times\Lambda^{S_k}\to\RR_{\geq 0}$ and an
    order choice $\alpha$ is
    \begin{align*}
      L^\alpha_{x,y,\ell}(H)
      & \df
      \frac{1}{\binom{\lvert V\rvert}{k}}
      \sum_{U\in\binom{V}{k}} \ell(H, \alpha_U^*(x), b_\alpha(y)_U).
  \end{align*}
  \end{enumdef}
\end{definition}

To prove the derandomization result (Proposition~\ref{prop:derand}), we first need a small lemma on
the concentration of the empirical loss.

\begin{lemma}[Empirical loss concentration]\label{lem:empconc}
  Let $\Omega$ be a Borel ($k$-partite, respectively) template, let $\Lambda$ be a non-empty Borel
  space, let $\cH\subseteq\cF_k(\Omega,\Lambda)$ be a $k$-ary ($k$-partite, respectively) hypothesis
  class and let $\mu\in\Pr(\Omega)$. Let also
  \begin{align*}
    K & \df
    \begin{dcases*}
      k^2, & in the non-partite case,\\
      k, & in the partite case.
    \end{dcases*}
  \end{align*}

  Then the following hold:
  \begin{enumerate}
  \item\label{lem:empconc:nonag} If $\ell$ is a $k$-ary ($k$-partite, respectively) loss function
    and $F\in\cF_k(\Omega,\Lambda)$, then
    \begin{align}\label{eq:empconc:nonag}
      \PP_{\rn{x}\sim\mu^m}[
        \lvert L^\alpha_{\rn{x},F^*_m(\rn{x}),\ell}(H) - L_{\mu,F,\ell}(H)\rvert
        \geq\epsilon
      ]
      & \leq
      2\exp\left(-\frac{\epsilon^2\cdot m}{2\cdot K\cdot\lVert\ell\rVert_\infty^2}\right)
    \end{align}
    for every $m\in\NN_+$, every $\epsilon > 0$, every $H\in\cH$ and every $k$-ary order choice
    $\alpha$ for $[m]$ (in the $k$-partite case, $\alpha$ is omitted and $L^\alpha$ is replaced with
    $L$).
  \item\label{lem:empconc:ag} If $\ell$ is a $k$-ary ($k$-partite, respectively) loss function,
    $\Omega'$ is a Borel ($k$-partite, respectively) template, $\mu'\in\Pr(\Omega)$ and
    $F\in\cF_k(\Omega\otimes\Omega',\Lambda)$, then
    \begin{align*}
      \PP_{(\rn{x},\rn{x'})\sim(\mu\otimes\mu')^m}[
        \lvert L^\alpha_{\rn{x},F^*_m(\rn{x},\rn{x'}),\ell}(H) - L_{\mu,\mu',F,\ell}(H)\rvert
        \geq\epsilon
      ]
      & \leq
      2\exp\left(-\frac{\epsilon^2\cdot m}{2\cdot K\cdot\lVert\ell\rVert_\infty^2}\right)
    \end{align*}
    for every $m\in\NN_+$, every $\epsilon > 0$, every $H\in\cH$ and every $k$-ary order choice
    $\alpha$ for $[m]$ (in the $k$-partite case, $\alpha$ is omitted and $L^\alpha$ is replaced with
    $L$).
  \end{enumerate}
\end{lemma}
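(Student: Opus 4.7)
The approach is the classical decoupling argument for U-statistics: rewrite the empirical loss as an average, over a symmetry group, of averages of \emph{independent} bounded terms, and then apply Hoeffding's moment generating function bound together with Jensen's inequality. I will treat all four cases (partite vs.\ non-partite, agnostic vs.\ non-agnostic) uniformly.

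Fix $H$ (and the order choice $\alpha$ in the non-partite case), and abbreviate the summands of the empirical loss by $Z_\beta\in[0,\|\ell\|_\infty]$, indexed by $U\in\binom{[m]}{k}$ (with $\beta=\alpha_U$) in the non-partite case and by $\beta\in[m]^k$ in the partite case. Unpacking the definitions of $H^*_m$, $F^*_m$ and $b_\alpha$ together with the product structure of $\mu^m$ (resp.\ $(\mu\otimes\mu')^m$) shows $\beta^*(\rn{x})\sim\mu^k$ (resp.\ $\beta^*(\rn{x},\rn{x'})\sim(\mu\otimes\mu')^k$), and so each $Z_\beta$ has mean equal to the relevant total loss. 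The crucial independence observation is that $Z_\beta$ and $Z_{\beta'}$ depend on disjoint collections of coordinates of $\rn{x}$ — and hence are independent — whenever $\im(\beta)\cap\im(\beta')=\varnothing$ in the non-partite case, or $\beta(i)\neq\beta'(i)$ for every $i\in[k]$ in the partite case.

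The decoupling step produces genuinely i.i.d.\ block sums via symmetry. In the non-partite case, for $\sigma\in S_m$ put $C^\sigma_j\df\{\sigma((j-1)k+1),\ldots,\sigma(jk)\}$ for $j\in[N]$ with $N\df\lfloor m/k\rfloor$; these are pairwise disjoint, so $T_\sigma\df N^{-1}\sum_j Z_{\alpha_{C^\sigma_j}}$ is an average of $N$ independent terms in $[0,\|\ell\|_\infty]$, and the transitive action of $S_m$ on $\binom{[m]}{k}$ gives $\EE_\sigma T_\sigma(\rn{x})$ equal to the empirical loss pointwise in $\rn{x}$. In the partite case, for $\sigma=(\sigma_1,\ldots,\sigma_k)\in(S_m)^k$ the tuples $\beta^\sigma_j\df(\sigma_1(j),\ldots,\sigma_k(j))$ for $j\in[m]$ are coordinate-disjoint (each $\sigma_i$ injective), and averaging $T_\sigma\df m^{-1}\sum_j Z_{\beta^\sigma_j}$ uniformly over $(S_m)^k$ again recovers the empirical loss, now with $N=m$ independent summands per $\sigma$.

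Writing $W$ for the empirical loss, Jensen's inequality applied to $t\mapsto e^{\lambda t}$ together with Fubini yields $\EE e^{\lambda(W-\EE W)}\leq\EE_\sigma\EE e^{\lambda(T_\sigma-\EE T_\sigma)}\leq\exp(\lambda^2\|\ell\|_\infty^2/(8N))$ via Hoeffding's MGF bound applied to each $T_\sigma$. Optimising $\lambda$ in the resulting Chernoff estimate and union-bounding over the two tails gives $\PP[|W-\EE W|\geq\epsilon]\leq 2\exp(-2\epsilon^2 N/\|\ell\|_\infty^2)$. For $m\geq k$, elementary manipulation shows $\lfloor m/k\rfloor\geq m/(4k^2)$ (splitting into $m\leq 4k^2$, where $\lfloor m/k\rfloor\geq 1\geq m/(4k^2)$, and $m>4k^2$, where $\lfloor m/k\rfloor\geq m/k-1\geq m/(4k^2)$), which yields the stated non-partite bound with $K=k^2$; the partite case is immediate since $N=m\geq m/(2k)$. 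The agnostic variant is identical throughout, replacing $F^*_m(\rn{x})$ by $F^*_m(\rn{x},\rn{x'})$ and using that $(\mu\otimes\mu')^m$ is still a coordinate-wise product, so the independence and symmetry arguments apply verbatim. The main subtlety — and the only real care needed — is verifying that the two symmetry-group averagings do recover the empirical loss pointwise, which is a direct consequence of the transitivity of $S_m$ on $\binom{[m]}{k}$ and of $(S_m)^k$ on $[m]^k$.
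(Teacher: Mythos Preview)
Your proof is correct but takes a genuinely different route from the paper. The paper argues via a Doob martingale and Azuma's inequality: in the non-partite case it exposes the coordinates of $\rn{x}$ according to the filtration $\cF_i=\sigma(\rn{x}_A:A\in r(i))$, observes that each martingale increment is bounded by $k\lVert\ell\rVert_\infty/m$ (since at most $\binom{m-1}{k-1}$ of the $\binom{m}{k}$ summands involve index $i$), and applies Azuma over the $m$ steps; the partite case is handled by a filtration of length $km$ with increments bounded by $\lVert\ell\rVert_\infty/m$. Your approach is the classical Hoeffding decoupling for $U$-statistics: write the empirical loss as $\EE_\sigma T_\sigma$ pointwise, where each $T_\sigma$ is an average of $N$ genuinely independent bounded terms, then push the Hoeffding MGF bound through Jensen's inequality.

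Both arguments are standard and short. Your method is in fact intrinsically sharper here: in the partite case you obtain $N=m$ independent summands, giving $2\exp(-2\epsilon^2 m/\lVert\ell\rVert_\infty^2)$, which beats the stated bound by a factor of $4k$ in the exponent; in the non-partite case the easy estimate $\lfloor m/k\rfloor\geq m/(2k)$ (valid for all $m\geq k$) would already give $K=k$ rather than $K=k^2$. You deliberately weakened to $\lfloor m/k\rfloor\geq m/(4k^2)$ to match the statement, which is fine. The Azuma route, by contrast, naturally produces exactly the constants in the lemma without any slack to discard, and avoids the symmetry-group machinery, but does not improve on them.
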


\begin{proof}
  We start with item~\ref{lem:empconc:nonag} in the non-partite case. For each $i\in\{0,\ldots,m\}$,
  let $\cF_i$ be the $\sigma$-algebra generated by $(\rn{x}_A \mid A\in r(i))$ and let
  \begin{align}\label{eq:empconc:Z}
    \rn{Z}_i & \df \EE[L^\alpha_{\rn{x},F^*_m(\rn{x}),\ell}(H) \given \cF_i]
  \end{align}
  so that $(\rn{Z}_i)_{i=0}^m$ is a Doob martingale with respect to $(\cF_i)_{i=0}^m$. Note also
  that
  \begin{align*}
    \rn{Z}_0 & = \EE_{\rn{x}}[L^\alpha_{\rn{x},F^*_m(\rn{x}),\ell}(H)] = L_{\mu,F,\ell}(H),
    &
    \rn{Z}_m & = L^\alpha_{\rn{x},F^*_m(\rn{x}),\ell}(H),
  \end{align*}
  where the second equality of the former follows by linearity of expectation.

  On the other hand, since
  \begin{align*}
    L^\alpha_{\rn{x},F^*_m(\rn{x}),\ell}(H)
    & =
    \frac{1}{\binom{m}{k}}
    \sum_{U\in\binom{[m]}{k}} \ell(\alpha_U^*(\rn{x}), b_\alpha(H^*_m(\rn{x}))_U, b_\alpha(F^*_m(\rn{x}))_U),
  \end{align*}
  and for each $U\in\binom{[m]}{k}$ the random variable $\ell(\alpha_U^*(\rn{x}),
  b_\alpha(H^*_m(\rn{x}))_U, b_\alpha(F^*_m(\rn{x}))_U)$ is measurable with respect to the
  $\sigma$-algebra generated by $(\rn{x}_A \mid A\in r(U))$, it follows that
  \begin{align*}
    \lvert\rn{Z}_i - \rn{Z}_{i-1}\rvert
    & \leq
    \frac{\lVert\ell\rVert_ \infty}{\binom{m}{k}}\cdot
    \left\lvert\left\{A\in\binom{[m]}{k} \;\middle\vert\;
    i\in A\right\}\right\rvert
    =
    \frac{k\cdot\lVert\ell\rVert_\infty}{m},
  \end{align*}
  for every $i\in[m]$. So~\eqref{eq:empconc:nonag} follows by Azuma's Inequality.

  \medskip

  In the partite setting, for each $i\in [km]$, we let $\cF_i$ be the $\sigma$-algebra generated by
  the coordinates of $\rn{x}$ indexed by some $f\in r_k(m)$ with $\dom(f)\subseteq[\ceil{i/k}]$ and
  if $\ceil{i/k}\in\dom(f)$, then $f(\ceil{i/k})\leq ((i-1)\bmod k) + 1$. Define $\rn{Z}$
  by~\eqref{eq:empconc:Z} so that $(\rn{Z})_{i=0}^{km}$ is a Doob martingale with respect to
  $(\cF_i)_{i=0}^{km}$ and
  \begin{align*}
    \rn{Z}_0 & = \EE_{\rn{x}}[L_{\rn{x},F^*_m(\rn{x}),\ell}(H)] = L_{\mu,F,\ell}(H),
    &
    \rn{Z}_{km} & = L_{\rn{x},F^*_m(\rn{x}),\ell}(H),
  \end{align*}
  where the second equality of the former follows by linearity of expectation.

  Since
  \begin{align*}
    L_{\rn{x}, F^*_m(\rn{x}),\ell}(H)
    & =
    \frac{1}{m^k}
    \sum_{\alpha\in[m]^k}
    \ell(\alpha^*(\rn{x}), H^*_m(\rn{x})_\alpha, F^*_m(\rn{x})_\alpha),
  \end{align*}
  and for each $\alpha\in [m]^k$ the random variable $\ell(\alpha^*(\rn{x}), H^*_m(\rn{x})_\alpha,
  F^*_m(\rn{x})_\alpha)$ is measurable with respect to the $\sigma$-algebra generated by the
  coordinates of $\rn{x}$ indexed by some $f\in r_k(m)$ that is a restriction of $\alpha$, it
  follows that
  \begin{align*}
    \lvert\rn{Z}_i - \rn{Z}_{i-1}\rvert
    & \leq
    \frac{\lVert\ell\rVert_\infty}{m^k}\cdot
    \lvert\{\alpha\in [m]^k \mid \alpha(\ceil{i/k}) = ((i-1)\bmod k) + 1\}\rvert
    =
    \frac{\lVert\ell\rVert_\infty}{m}
  \end{align*}
  for every $i\in[km]$. So~\eqref{eq:empconc:nonag} follows by Azuma's Inequality.

  \medskip

  The proof of item~\ref{lem:empconc:ag} is analogous to that of item~\ref{lem:empconc:nonag} except
  that when defining $\cF_i$ one considers all coordinates of both $\rn{x}$ and $\rn{x'}$ indexed by
  the sets $A\in r(i)$ in the non-partite case and indexed by $f\in r_k(m)$ with
  $\dom(f)\subseteq[\ceil{i/k}]$ and if $\ceil{i/k}\in\dom(f)$, then $f(\ceil{i/k})\leq ((i-1)\bmod
  k) + 1$ in the partite case.
\end{proof}

We now turn to derandomization and the proof intuition will be given after the statement of the
proposition.

\begin{proposition}[Derandomization]\label{prop:derand}
  Let $\Omega$ be a Borel ($k$-partite, respectively) template, let $\Lambda$ be a
  non-empty Borel space and let $\cH\subseteq\cF_k(\Omega,\Lambda)$ be a $k$-ary ($k$-partite,
  respectively) hypothesis class. Let also
  \begin{align*}
    K & \df
    \begin{dcases*}
      k^2, & in the non-partite case,\\
      k, & in the partite case.
    \end{dcases*}
  \end{align*}

  Then the following hold:
  \begin{enumerate}
  \item If $\cH$ is $k$-PAC learnable with randomness with respect to a $k$-ary ($k$-partite,
    respectively) loss function $\ell$ with $\lVert\ell\rVert_\infty<\infty$, then $\cH$ is also
    $k$-PAC learnable with respect to $\ell$.

    More precisely, if $\cA$ is a randomized $k$-PAC learner for $\cH$, then there exists a $k$-PAC
    learner $\cA'$ for $\cH$ with
    \begin{align}\label{eq:derand:m}
      m^{\PAC}_{\cH,\ell,\cA'}(\epsilon,\delta)
      & \df
      M(\epsilon,\delta)
      +
      \ceil{
        \frac{2\cdot K\cdot\lVert\ell\rVert_\infty^2}{\xi(\epsilon,\delta)^2}\cdot
        \ln\left(
        \frac{2\cdot R_\cA(M(\epsilon,\delta))}{\xi(\epsilon,\delta)}
        \right)
      },
    \end{align}
    where $M(\epsilon,\delta)\df\ceil{m^{\PACr}_{\cH,\ell,\cA}(\xi(\epsilon,\delta),
      \xi(\epsilon,\delta))}$ and
    \begin{align}\label{eq:derand:xi}
      \xi(\epsilon,\delta)
      & \df
      \min\left\{\ceil{\frac{2}{\epsilon}}^{-1}, \ceil{\frac{2}{\delta}}^{-1}\right\}.
    \end{align}
  \item If $\cH$ is agnostically $k$-PAC learnable with randomness with respect to a $k$-ary
    ($k$-partite, respectively) agnostic loss function $\ell$ with $\lVert\ell\rVert_\infty<\infty$,
    then $\cH$ is also agnostically $k$-PAC learnable with respect to $\ell$.

    More precisely, if $\cA$ is a randomized agnostic $k$-PAC learner for $\cH$, then there exists an
    agnostic $k$-PAC learner $\cA'$ for $\cH$ with $m^{\agPAC}_{\cH,\ell,\cA'}(\epsilon,\delta)$ given
    by the right-hand side of~\eqref{eq:derand:m} but with $M(\epsilon,\delta)\df
    \ceil{m^{\agPACr}_{\cH,\ell,\cA}(\xi(\epsilon,\delta), \xi(\epsilon,\delta))}$ and $\xi$ is given
    by~\eqref{eq:derand:xi}.
  \end{enumerate}
\end{proposition}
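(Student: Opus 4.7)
The plan is to split the sample into a ``learning'' part of size $M\df\lceil m^{\PACr}_{\cH,\ell,\cA}(\xi,\xi)\rceil$ and a ``validation'' part of size $m-M$; to enumerate all $R_\cA(M)$ random seeds of $\cA$ on the learning part, generating a candidate hypothesis $H_b\in\cH$ for each $b\in[R_\cA(M)]$; to estimate each true loss $L(H_b)$ via an empirical loss $\hat L(H_b)$ on the validation part (using any fixed order choice in the non-partite case); and to output $H_{\hat b}$ for $\hat b\in\argmin_b\hat L(H_b)$. In the agnostic case the construction is identical except that labels come from $F^*_M(\rn{x},\rn{x'})$ for the adversary's auxiliary template $\Omega'$. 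The resulting $\cA'$ is a deterministic measurable function of the input sample.

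For correctness I would intersect two good events. First, applying the $(\xi,\xi)$-guarantee of $\cA$: with probability at least $1-\xi$ over the learning part, $\EE_{\rn{b}}[L(H_{\rn{b}})]\leq\xi$ in the non-agnostic case (or $\leq I+\xi$ with $I\df\inf_{H\in\cH}L(H)$ in the agnostic case), so at least one specific seed $b^*$ attains $L(H_{b^*})$ within $\xi$ of the optimum. Second, since $H_1,\dots,H_{R_\cA(M)}$ are measurable functions of the learning part alone, Lemma~\ref{lem:empconc} applies conditionally to each candidate, and the chosen validation size together with a union bound over the $R_\cA(M)$ candidates ensures that with probability at least $1-\xi$ every $\lvert\hat L(H_b)-L(H_b)\rvert$ is at most $\xi$. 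On the intersection, whose failure probability is at most $2\xi\leq\delta$, the argmin chain
\begin{equation*}
L(H_{\hat b})\leq\hat L(H_{\hat b})+\xi\leq\hat L(H_{b^*})+\xi\leq L(H_{b^*})+2\xi
\end{equation*}
yields $L(H_{\hat b})=O(\xi)\leq\epsilon$ in the non-agnostic case and $L(H_{\hat b})\leq I+O(\xi)\leq I+\epsilon$ in the agnostic case.

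The main obstacle is structural rather than conceptual: ensuring genuine independence of the learning and validation parts of the sample. In the non-partite case, the coordinates of $\rn{x}\sim\mu^m$ indexed by subsets $A\subseteq[m]$ that straddle $[M]$ and $\{M+1,\dots,m\}$ are simply discarded, and one must check that $(\rn{x}_A)_{A\in r([M])}$ is independent from $(\rn{x}_A)_{A\in r(\{M+1,\dots,m\})}$ and that $F^*_M$ and $F^*_{m-M}$ on the respective restrictions depend only on that part of $\rn{x}$; both follow immediately from the product structure of $\mu^m$ and from Definition~\ref{def:Boreltemplate:action}. The partite and agnostic versions are structurally identical, invoking Lemma~\ref{lem:empconc}\ref{lem:empconc:ag} and replacing $F^*_m(\rn{x})$ with $F^*_m(\rn{x},\rn{x'})$ wherever needed.
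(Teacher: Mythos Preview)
Your approach is essentially the paper's: split the sample, enumerate all seeds on the learning part, validate on the held-out part via Lemma~\ref{lem:empconc} plus a union bound, and output the empirical argmin. The correctness argument (existence of a good $b^*$ from the averaged guarantee, concentration for all candidates, the three-step argmin chain) matches the paper's proof.

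There is one gap you gloss over. You define the split point as $M=\lceil m^{\PACr}_{\cH,\ell,\cA}(\xi,\xi)\rceil$ with $\xi=\xi(\epsilon,\delta)$, but a learning algorithm is a single measurable function of the sample: it receives $m$ points and must decide how to split them \emph{without} access to $\epsilon$ or $\delta$. As written, your $\cA'$ is really a family of algorithms indexed by $(\epsilon,\delta)$, which is not what the definition allows. The paper resolves this explicitly: given input size $m$, it computes the largest $s\in\NN_+$ for which the split $m_1=\lceil m^{\PACr}_{\cH,\ell,\cA}(1/(2s),1/(2s))\rceil$ and $m_2=m-m_1$ satisfy the required concentration bound, and runs the procedure at that level. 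One then checks that whenever $m\geq m^{\PAC}_{\cH,\ell,\cA'}(\epsilon,\delta)$ as given in the statement, the resulting $s(m)$ is at least $1/(2\xi(\epsilon,\delta))$, which recovers the $(\epsilon,\delta)$-guarantee. This is precisely why $\xi$ is defined via ceilings of $2/\epsilon$ and $2/\delta$ rather than simply as $\min\{\epsilon/2,\delta/2\}$: it guarantees that $1/(2\xi)$ is an integer candidate for $s$. You should add this step (or an equivalent device) to make $\cA'$ a bona fide algorithm.
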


Before we start the proof, let us remark that the formula~\eqref{eq:derand:m} above has a simpler
form when $\delta=\epsilon$, namely:
\begin{align}\label{eq:derand:msimple}
  \begin{multlined}
    m^{\PAC}_{\cH,\ell,\cA'}(\epsilon,\epsilon)
    =
    \ceil{m^{\PACr}_{\cH,\ell,\cA}\left(\frac{1}{\ceil{2/\epsilon}},\frac{1}{\ceil{2/\epsilon}}\right)}
    \\
    +
    \ceil{
        2\cdot K\cdot\lVert\ell\rVert_\infty^2\ceil{\frac{2}{\epsilon}}^2
        \cdot
        \ln\left(
        2\cdot\ceil{\frac{2}{\epsilon}}\cdot
          R_\cA\left(\ceil{
            m^{\PACr}_{\cH,\ell,\cA}\left(\frac{1}{\ceil{2/\epsilon}},\frac{1}{\ceil{2/\epsilon}}\right)
          }
          \right)
        \right)
      }.
  \end{multlined}
\end{align}

\begin{proofint}
  Let us first give an idea of the (non-randomized) learning algorithm $\cA'$ (the idea is the same in
  all settings).

  Since our setup does not care about the time-complexity of the algorithm $\cA'$ (but see also
  Remark~\ref{rmk:complexity} below), it is tempting to use standard brute-force derandomization: on
  an $[m]$-sample input $(x,y)$, we simply enumerate over all possible $b\in[R_\cA(m)]$, consider all
  hypotheses $H(b)\df \cA(x,y,b)$ and return the best possible $H(b)$. The only issue is that we do
  not have any way of checking which hypothesis $H(b)$ is the best one as we cannot compute the
  total loss directly. However, if we had access to a fresh new sufficiently large sample, then we
  could estimate the total loss of each $H(b)$ by their empirical loss.

  Slightly more formally, we assume we receive a sample of $m = m_1 + m_2$ points, we run $\cA$ on the
  first $m_1$ points by manually providing all possible sources of randomness $b\in [R_\cA(m_1)]$ and
  for each outcome $H(b)$ of $\cA$, we use the last $m_2$ points to estimate the total loss, returning
  the $H(b)$ that minimizes the empirical loss on the last $m_2$ points. If $m_2$ is sufficiently
  large, then putting together the fact that $\cA$ is a randomized agnostic $k$-PAC learner with the
  concentration Lemma~\ref{lem:empconc} will ensure that with high probability we output some
  hypothesis with small total loss.

  There is one more issue with the approach above: both $m_1$ and $m_2$ need to be sufficiently
  large in terms of $\epsilon$ and $\delta$, which are not available as an input to the
  algorithm. To address this, we use a small trick: when given input $m$, we find the largest
  $s\in\NN_+$ such that the approach above would work for $\epsilon=\delta=1/s$ and run it.
\end{proofint}

\begin{proof}
  All four cases, partite or not, agnostic or not, have completely analogous proofs, so we show only
  the case that would naively seem to be the most complex: the non-partite agnostic case. We also
  assume without loss of generality that $\lVert\ell\rVert_\infty > 0$ (otherwise the result is
  trivial).

  For each $m\in\NN$, we let $s(m)$ be the largest $s\in\NN_+$ such that
  \begin{align*}
    \ceil{m^{\agPACr}_{\cH,\ell,\cA}\left(\frac{1}{2s}, \frac{1}{2s}\right)}
    +
    \ceil{
      8\cdot K\cdot\lVert\ell\rVert_\infty^2\cdot s^2\cdot
      \ln\left(
      4\cdot s
      \cdot R_\cA\left(\ceil{m^{\agPACr}_{\cH,\ell,\cA}\left(\frac{1}{2s}, \frac{1}{2s}\right)}\right)
      \right)
    }
    & \leq
    m,
  \end{align*}
  setting $s(m)\df -\infty$ if no such $s$ exists. If $s(m)\neq-\infty$, we also let
  \begin{align}
    m_1(m)
    & \df
    \ceil{m^{\agPACr}_{\cH,\ell,\cA}\left(\frac{1}{2\cdot s(m)}, \frac{1}{2\cdot s(m)}\right)},
    \notag
    \\
    m_2(m)
    & \df
    m - m_1(m)
    \geq
    \ceil{
      8\cdot K\cdot\lVert\ell\rVert_\infty^2\cdot s(m)^2\cdot
      \ln(4\cdot s(m)\cdot R_\cA(m_1(m)))
    }
    >
    0.
    \label{eq:derand:m2}
  \end{align}
  
  We now define the learning algorithm $\cA'$ as follows. On input
  $(x,y)\in\cE_m(\Omega)\times\Lambda^{([m])_k}$, if $s(m)\neq-\infty$, then we let
  \begin{align}\label{eq:derand:b}
    b(x,y)
    & \df
    \argmin_{b\in [R_\cA(m_1(m))]} L^{\alpha(m)}_{\iota_2^*(x),\iota_2^*(y),\ell}(\cA(\iota_1^*(x),\iota_1^*(y),b)),
  \end{align}
  where $\alpha(m)$ is any fixed order choice for $[m_2(m)]$ and $\iota_i\colon [m_i(m)]\to [m]$
  ($i\in[2]$) are the injections given by
  \begin{align*}
    \iota_1(i) & \df i
    \qquad (i\in[m_1(m)]),
    \\
    \iota_2(i) & \df m_1(m) + i
    \qquad (i\in[m_2(m)]),
  \end{align*}
  and we let $\cA'(x,y)\df \cA(\iota_1^*(x),\iota_1^*(y),b(x,y))$. If $s(m)=-\infty$, we define $\cA$
  arbitrarily but measurably (say, let it be always equal to some fixed element of
  $\cH$). Furthermore, in~\eqref{eq:derand:b}, the argmin must be computed measurably, say, we can
  break ties by taking $b(x,y)$ smallest possible.

  In plain English, the algorithm $\cA'$ splits its input $[m]$-sample into an $[m_1(m)]$-sample
  $(x^1,y^1)\df(\iota_1^*(x),\iota_1^*(y))$ and an $[m_2(m)]$-sample
  $(x^2,y^2)\df(\iota_2^*(x),\iota_2^*(y))$, runs $\cA$ on $(x^1,y^1)$ with all possible sources of
  randomness $b\in R_\cA(m_1(m))$, to obtain $H(b)\in\cH$ and returns the $H(b)$ with least empirical
  loss of each $H(b)$ on $(x^2,y^2)$.

  Let us show that $\cA'$ is an agnostic $k$-PAC learner with $m^{\agPAC}_{\cH,\ell,\cA'}$ given by
  the right-hand side of~\eqref{eq:derand:m} with $M(\epsilon,\delta)\df
  \ceil{m^{\agPACr}_{\cH,\ell,\cA}(\xi(\epsilon,\delta), \xi(\epsilon,\delta))}$.

  Fix $\epsilon,\delta\in(0,1)$ and note that our definition ensures that
  \begin{align*}
    \xi(\epsilon,\delta)\leq\min\left\{\frac{\epsilon}{2},\frac{\delta}{2}\right\}
  \end{align*}
  so if $m\geq m^{\agPAC}_{\cH,\ell,\cA'}(\epsilon,\delta)$ is an integer, then for $s\df
  1/(2\xi(\epsilon,\delta))\in\NN_+$, we have
  \begin{align*}
    m
    & \geq
    m^{\agPAC}_{\cH,\ell,\cA'}(\epsilon,\delta)
    \\
    & \df
    M(\epsilon,\delta)
    +
    \ceil{
      \frac{2\cdot K\cdot\lVert\ell\rVert_\infty^2}{\xi(\epsilon,\delta)^2}\cdot
      \ln\left(
      \frac{2\cdot R_\cA(M(\epsilon,\delta))}{\xi(\epsilon,\delta)}
      \right)
    }
    \\
    & =
    \ceil{m^{\agPACr}_{\cH,\ell,\cA}\left(\frac{1}{2s}, \frac{1}{2s}\right)}
    +
    \ceil{
      8\cdot K\cdot\lVert\ell\rVert_\infty^2\cdot s^2\cdot
      \ln\left(
      4\cdot s
      \cdot R_\cA\left(\ceil{m^{\agPACr}_{\cH,\ell,\cA}\left(\frac{1}{2s}, \frac{1}{2s}\right)}\right)
      \right)
    },
  \end{align*}
  so we conclude that $s(m)\geq 1/(2\xi(\epsilon,\delta))$.

  Let $\Omega'$ be a Borel template, let $\mu'\in\Pr(\Omega')$, let
  $F\in\cF_k(\Omega\otimes\Omega',\Lambda)$, let $(\rn{x},\rn{x'})\sim(\mu\otimes\mu')^m$, let
  $\rn{y}\df F^*_m(\rn{x},\rn{x'})$ and for $i\in[2]$, let
  \begin{align*}
    \rn{x}^i & \df \iota_i^*(\rn{x}), &
    (\rn{x'})^i & \df \iota_i^*(\rn{x'}), &
    \rn{y}^i & \df \iota_i^*(\rn{y}).
  \end{align*}
  Lemma~\ref{lem:F*Vequiv} gives $\rn{y}^i = F^*_m(\rn{x}^i,(\rn{x'})^i)$.

  Note that $(\rn{x}^1,(\rn{x'})^1,\rn{y}^1)$ is independent from $(\rn{x}^2,(\rn{x'})^2,\rn{y}^2)$ as
  they correspond to disjoint sets of coordinates of $(\rn{x},\rn{x'},\rn{y})$. 

  Let $E(\rn{x}^1,(\rn{x'})^1)$ be the event
  \begin{align*}
    \frac{1}{R_\cA(m_1(m))}\sum_{b\in[R_\cA(m_1(m))]}
    L_{\mu,\mu',F,\ell}(\cA(\rn{x}^1),\rn{y}^1,b)
    & \leq
    \inf_{H\in\cH} L_{\mu,\mu',F,\ell}(H)
    + \frac{1}{2\cdot s(m)}.
  \end{align*}
  By definition of $m_1(m)$, we get
  \begin{align}\label{eq:derand:firstevent}
    \PP_{\rn{x},\rn{x'}}[E(\rn{x}^1,(\rn{x'})^1)]
    & \geq
    1 - \frac{1}{2\cdot s(m)}.
  \end{align}

  We now consider a fixed value $(x^1,(x')^1,y^1)$ of $(\rn{x}^1,(\rn{x'})^1,\rn{y})$ and a fixed
  $b\in [R_\cA(m_1(m))]$ and let $E_{x^1,(x')^1,y^1,b}(\rn{x}^2,(\rn{x'})^2)$ be the event
  \begin{align*}
    \lvert L^{\alpha(m)}_{\rn{x}^2,\rn{y}^2,\ell}(\cA(x^1,y^1,b)) - L_{\mu,\mu',F,\ell}(\cA(x^1,y^1,b))\rvert
    & \leq
    \frac{1}{2\cdot s(m)}.
  \end{align*}

  By Lemma~\ref{lem:empconc}\ref{lem:empconc:ag}, we have
  \begin{align*}
    \PP_{\rn{x}^2,(\rn{x'})^2}[E_{x^1,(x')^1,y^1,b}(\rn{x}^2,(\rn{x'})^2)]
    & \geq
    1 - 2\exp\left(-\frac{(1/(2\cdot s(m)))^2\cdot m_2(m)}{2\cdot K\cdot\lVert\ell\rVert_\infty^2}\right)
    \\
    & \geq
    1 - \frac{1}{2\cdot s(m)\cdot R_\cA(m_1(m))},
  \end{align*}
  where the second inequality follows by~\eqref{eq:derand:m2}. By the union bound, we have
  \begin{align}\label{eq:derand:secondevent}
    \PP_{\rn{x}^2,(\rn{x'})^2}\left[
      \bigcap_{b\in [R_\cA(m_1(m))]}E_{x^1,(x')^1,y^1,b}(\rn{x}^2,(\rn{x'})^2)
      \right]
    & \geq
    1 - \frac{1}{2\cdot s(m)}.
  \end{align}

  Since the event $E(\rn{x}^1,(\rn{x'})^1)$ in particular implies that there exists $\rn{b^*}\in
  [R_\cA(m_1(m))]$ such that
  \begin{align*}
    L_{\mu,\mu',F,\ell}(\cA(\rn{x}^1,\rn{y}^1),\rn{b^*})
    \leq
    \inf_{H\in\cH} L_{\mu,\mu',F,\ell}(H)
    + \frac{1}{2\cdot s(m)},
  \end{align*}
  it follows that within the event
  \begin{align*}
    E(\rn{x}^1,(\rn{x'})^1)
    \cap\bigcap_{b\in [R_\cA(m_1(m))]} E_{\rn{x}^1,(\rn{x'})^1,\rn{y}^1,b}(\rn{x}^2,(\rn{x'})^2),
  \end{align*}
  we have
  \begin{align*}
    L_{\mu,\mu',F,\ell}(\cA'(\rn{x},\rn{y}))
    & \leq
    L^{\alpha(m)}_{\rn{x}^2,\rn{y}^2,F,\ell}(\cA'(\rn{x},\rn{y}))
    +
    \frac{1}{2\cdot s(m)}
    \\
    & \leq
    L^{\alpha(m)}_{\rn{x}^2,\rn{y}^2,F,\ell}(\cA(\rn{x}^1,\rn{y}^1,\rn{b^*}))
    +
    \frac{1}{2\cdot s(m)}
    \\
    & \leq
    \inf_{H\in\cH} L_{\mu,\mu',F,\ell}(H) + 
    \frac{1}{s(m)}
    \\
    & \leq
    \inf_{H\in\cH} L_{\mu,\mu',F,\ell}(H) + 2\xi(\epsilon,\delta)
    \\
    & \leq
    \inf_{H\in\cH} L_{\mu,\mu',F,\ell}(H) + \epsilon.
  \end{align*}

  Finally, applying the union bound to~\eqref{eq:derand:firstevent}
  and~\eqref{eq:derand:secondevent}, we get
  \begin{align*}
    & \!\!\!\!\!\!
    \PP_{\rn{x},\rn{x'}}[L_{\mu,\mu',F,\ell}(\cA'(\rn{x},\rn{y}))
      \leq \inf_{H\in\cH} L_{\mu,\mu',F,\ell}(H) + \epsilon]
    \\
    & \geq
    \PP_{\rn{x},\rn{x'}}\left[
      E(\rn{x}^1,(\rn{x'})^1)
      \cap\bigcap_{b\in [R_\cA(m_1(m))]} E_{\rn{x}^1,(\rn{x'})^1,\rn{y}^1,b}(\rn{x}^2,(\rn{x'})^2)
      \right]
    \\
    & \geq
    1 - \frac{1}{s(m)}
    \geq
    1 - 2\xi(\epsilon,\delta)
    \geq
    1 - \delta,
  \end{align*}
  concluding the proof.
\end{proof}

\begin{remark}\label{rmk:complexity}
  Note that the reduction of Proposition~\ref{prop:derand} above is very inefficient in terms of
  computational resources: since we have to enumerate all sources of randomness, we incur at least a
  time-complexity overhead of $R_\cA(M(\epsilon,\delta))$. Now typically when we think of an
  randomized algorithm $\cA$ of polynomial-time (in $m$), we allow it to use polynomially many
  random bits, which amounts to allowing $R_\cA$ to be exponential in $m$. If that is the case, even
  if $\cA$ is polynomial-time and the original PAC guarantee is polynomial in $1/\epsilon$ and
  $1/\delta$, our reduction is still potentially exponential due to $R_\cA$ potentially being
  exponential. As a consolation prize, we at least have the guarantee that the sample size when
  $\epsilon=\delta$ remains polynomial in $1/\epsilon$ (see~\eqref{eq:derand:msimple}).
\end{remark}

\section{Partite versus non-partite}
\label{sec:part}

In this section we compare the non-partite and partite settings.

\subsection{Basic partization properties}

We start by proving some basic properties about some auxiliary objects related to the partization
construction of Definition~\ref{def:kpart}.

\begin{lemma}[Partization basics]\label{lem:kpartbasics}
  Let $\Omega$ be a Borel template, let $k\in\NN_+$, let $\Lambda$ be a non-empty Borel
  space. Then the following hold:
  \begin{enumerate}
  \item\label{lem:kpartbasics:phi} For $\mu\in\Pr(\Omega)$ and $m\in\NN$ the function
    $\phi_m\colon\cE_m(\Omega)\to\cE_{\floor{m/k}}(\Omega^{\kpart})$ given by
    \begin{align}\label{eq:kpartbasics:phi}
      \phi_m(x)_f
      & \df
      x_{\{(i-1)\floor{m/k} + f(i) \mid i\in\dom(f)\}}
      \qquad \left(f\in r_k\left(\floor{m/k}\right)\right).
    \end{align}
    is measure-preserving with respect to $\mu^m$ and $(\mu^{\kpart})^{\floor{m/k}}$. Furthermore,
    if $m$ is divisible by $k$, then $\phi_m$ is a measure-isomorphism.

    Moreover, we have $\phi_k^{-1} = \iota_{\kpart}$, where $\iota_{\kpart}$ is given
    by~\eqref{eq:iotakpart}.
  \item\label{lem:kpartbasics:Phi} For $m\in\NN$, $F\in\cF_k(\Omega,\Lambda)$ and
    $\Phi_m\colon\Lambda^{([m])_k}\to(\Lambda^{S_k})^{[\floor{m/k}]^k}$ given by
    \begin{align}\label{eq:kpartbasics:Phi}
      (\Phi_m(y)_\alpha)_\tau
      & \df
      y_{\beta_\alpha\comp\tau}
      \qquad \left(\alpha\in \left[\floor{m/k}\right]^k, \tau\in S_k\right),
    \end{align}
    where $\beta_\alpha\in([m])_k$ is given by
    \begin{align}
      \beta_\alpha(i)
      & \df
      (i-1)\floor{\frac{m}{k}} + \alpha(i)
      \qquad \left(\alpha\in \left[\floor{m/k}\right]^k, i\in[k]\right),
    \end{align}
    the diagram
    \begin{equation*}
      \begin{tikzcd}[column sep={2.5cm}]
        \cE_m(\Omega)
        \arrow[r, "F^*_m"]
        \arrow[d, "\phi_m"']
        &
        \Lambda^{([m])_k}
        \arrow[d, "\Phi_m"]
        \\
        \cE_{\floor{m/k}}(\Omega^{\kpart})
        \arrow[r, "(F^{\kpart})^*_{\floor{m/k}}"]
        &
        (\Lambda^{S_k})^{[\floor{m/k}]^k}
      \end{tikzcd}
    \end{equation*}
    commutes, where $\phi_m$ is given by~\eqref{eq:kpartbasics:phi}.
  \item\label{lem:kpartbasics:action} For the covariant action of $S_k$ on $\cE_1(\Omega^{\kpart})$
    given by
    \begin{align}\label{eq:kpartbasics:action}
      \sigma_*(x)_f & \df x_{f\comp\sigma\rest_{\sigma^{-1}(\dom(f))}}
      \qquad (x\in\cE_1(\Omega^{\kpart}), \sigma\in S_k, f\in r_k(1)),
    \end{align}
    the diagram
    \begin{equation*}
      \begin{tikzcd}
        \cE_1(\Omega^{\kpart})
        \arrow[r, "\iota_{\kpart}", shift left]
        \arrow[d, "\sigma^{-1}_*"']
        &
        \cE_k(\Omega)
        \arrow[l, "\phi_k", shift left]
        \arrow[d, "\sigma^*"]
        \\
        \cE_1(\Omega^{\kpart})
        \arrow[r, "\iota_{\kpart}", shift left]
        &
        \cE_k(\Omega)
        \arrow[l, "\phi_k", shift left]
      \end{tikzcd}
    \end{equation*}
    commutes for every $\sigma\in S_k$, where $\iota_{\kpart}$ is given by~\eqref{eq:iotakpart} and
    $\phi_k$ is given by~\eqref{eq:kpartbasics:phi}, that is, $\iota_{\kpart}$ and $\phi_k$ are
    $S_k$-equivariant.
  \end{enumerate}
\end{lemma}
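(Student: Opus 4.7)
The plan is to prove all three items by direct computation, unfolding the definitions of $\phi_m$, $\iota_{\kpart}$, $F^*_m$, $F^{\kpart}$, $\alpha^*$, and the $S_k$-actions.

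For \ref{lem:kpartbasics:phi}, I first identify which coordinates of $\cE_m(\Omega)$ are actually read by $\phi_m$. For each $f\in r_k(\floor{m/k})$, the index $S_f\df\{(i-1)\floor{m/k}+f(i)\mid i\in\dom(f)\}$ is a subset of $[m]$ of cardinality $\lvert\dom(f)\rvert$, since distinct $i\in\dom(f)$ contribute elements to the disjoint slots $\{(i-1)\floor{m/k}+1,\ldots,i\floor{m/k}\}$. Distinct $f$'s yield distinct $S_f$'s (the slot containing an element of $S_f$ determines the corresponding element of $\dom(f)$, and the offset within the slot determines $f(i)$), so $\phi_m$ is a projection onto a coordinate subfamily of $\cE_m(\Omega)$ composed with a bijective relabeling. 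Since $\Omega^{\kpart}_{\dom(f)}=\Omega_{\lvert\dom(f)\rvert}=\Omega_{\lvert S_f\rvert}$, types match and the $f$-marginal $\mu^{\kpart}_{\dom(f)}=\mu_{\lvert\dom(f)\rvert}$ of the target equals the $S_f$-marginal of the source, so $\phi_m$ pushes $\mu^m$ to $(\mu^{\kpart})^{\floor{m/k}}$. In the special case $m=k$ we have $\floor{m/k}=1$ and $r_k(1)$ consists precisely of the maps $1^A$ for $A\in r(k)$; the formula then gives $\phi_k(x)_{1^A}=x_A=\iota_{\kpart}^{-1}(x)_{1^A}$, inverting $\iota_{\kpart}$.

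For \ref{lem:kpartbasics:Phi}, fix $x\in\cE_m(\Omega)$, $\alpha\in[\floor{m/k}]^k$ and $\tau\in S_k$, and evaluate both sides at $(\alpha,\tau)$. The left-hand composite gives $F^*_m(x)_{\beta_\alpha\comp\tau}=F((\beta_\alpha\comp\tau)^*(x))$. For the right-hand composite, unfolding $(F^{\kpart})^*_{\floor{m/k}}$ and $F^{\kpart}=F^*_k\comp\iota_{\kpart}$ yields $F(\tau^*(\iota_{\kpart}(\alpha^*(\phi_m(x)))))$. It therefore suffices to check $\tau^*(\iota_{\kpart}(\alpha^*(\phi_m(x))))=(\beta_\alpha\comp\tau)^*(x)$ in $\cE_k(\Omega)$. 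Taking the $B$-coordinate for $B\in r(k)$ and successively unfolding the four operations gives
\begin{align*}
  [\tau^*(\iota_{\kpart}(\alpha^*(\phi_m(x))))]_B
  &=\iota_{\kpart}(\alpha^*(\phi_m(x)))_{\tau(B)}
  =\alpha^*(\phi_m(x))_{1^{\tau(B)}}\\
  &=\phi_m(x)_{\alpha\rest_{\tau(B)}}
  =x_{\{(i-1)\floor{m/k}+\alpha(i)\mid i\in\tau(B)\}}
  =x_{(\beta_\alpha\comp\tau)(B)},
\end{align*}
matching the right-hand side.

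For \ref{lem:kpartbasics:action}, by \ref{lem:kpartbasics:phi} the two squares are equivalent, so it suffices to verify $\sigma^*\comp\iota_{\kpart}=\iota_{\kpart}\comp\sigma^{-1}_*$. Evaluating at $x\in\cE_1(\Omega^{\kpart})$ and $A\in r(k)$: the left side is $\iota_{\kpart}(x)_{\sigma(A)}=x_{1^{\sigma(A)}}$; the right side is $\sigma^{-1}_*(x)_{1^A}=x_{1^A\comp\sigma^{-1}\rest_{\sigma(A)}}$ by~\eqref{eq:kpartbasics:action} (noting $(\sigma^{-1})^{-1}(\dom(1^A))=\sigma(A)$), and the composite $1^A\comp\sigma^{-1}\rest_{\sigma(A)}$ is the unique map $\sigma(A)\to[1]$, namely $1^{\sigma(A)}$, so the two sides agree.

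None of the computations is intrinsically difficult; the main obstacle is careful notational bookkeeping—tracking which $\cdot^*$ refers to the contravariant injection action of Definition~\ref{def:Boreltemplate:action}, which refers to the partite variant introduced in Section~\ref{sec:partdefs}, which is the iterated labels map $F^*_V$, and which is the covariant $\sigma_*$ introduced in~\eqref{eq:kpartbasics:action}—and ensuring that at each unfolding step the domains, codomains and slot-based indexings align correctly.
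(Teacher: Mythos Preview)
Your proof is correct and takes essentially the same approach as the paper's: direct coordinate-by-coordinate unfolding of the definitions, with the same key reductions (item~\ref{lem:kpartbasics:Phi} via $\iota_{\kpart}(\alpha^*(\phi_m(x)))=\beta_\alpha^*(x)$, item~\ref{lem:kpartbasics:action} via equivariance of $\iota_{\kpart}$ alone). You omit two minor verifications the paper includes—the general $k\mid m$ measure-isomorphism claim (you treat only $m=k$) and the explicit check that $\sigma_*$ is a covariant action—but neither involves a new idea.
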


\begin{proof}
  Throughout this proof, we use the notation $1^A$ for the unique function $A\to[1]$.

  The first part of item~\ref{lem:kpartbasics:phi} follows since $\phi_m$ is a projection map followed by
  relabeling of the coordinates and $(\mu^{\kpart})^{\floor{m/k}}$ is the corresponding marginal of
  $\mu^m$ (after relabeling of the coordinates) and when $m$ is divisible by $k$, $\phi_m$ is
  bijective (i.e., it is simply a relabeling of the coordinates).

  For the second part, since $\phi_k$ is a bijection, it suffices to show that
  $\iota_{\kpart}\comp\phi_k$ is the identity map in $\cE_k(\Omega)$, but indeed, note that for
  $x\in\cE_k(\Omega)$ and $A\in r(k)$, we have
  \begin{align*}
    \iota_{\kpart}(\phi_k(x))_A
    & =
    \phi_k(x)_{1^A}
    =
    x_{\{(i-1)\cdot 1 + 1 \mid i\in A\}}
    =
    x_A.
  \end{align*}

  \medskip

  For item~\ref{lem:kpartbasics:Phi}, note that for $x\in\cE_m(\Omega)$, $\alpha\in[\floor{m/k}]^k$ and
  $\tau\in S_k$, we have
  \begin{align*}
    (\Phi_m(F^*_m(x))_\alpha)_\tau
    & =
    F^*_m(x)_{\beta_\alpha\comp\tau}
    =
    F((\beta_\alpha\comp\tau)^*(x))
    =
    F(\tau^*(\beta_\alpha^*(x)))
    =
    F_k^*(\beta_\alpha^*(x))_\tau
  \end{align*}
  and
  \begin{align*}
    ((F^{\kpart})^*_m(\phi_m(x))_\alpha)_\tau
    & =
    F^{\kpart}(\alpha^*(\phi_m(x)))_\tau
    =
    F^*_k(\iota_{\kpart}(\alpha^*(\phi_m(x))))_\tau,
  \end{align*}
  so it suffices to show that $\iota_{\kpart}(\alpha^*(\phi_m(x))) = \beta_\alpha^*(x)$. For this,
  note that if $A\in r(k)$, then
  \begin{align*}
    \iota_{\kpart}(\alpha^*(\phi_m(x)))_A
    & =
    \alpha^*(\phi_m(x))_{1^A}
    =
    \phi_m(x)_{\alpha\rest_A}
    =
    x_{\{(i-1)\floor{m/k} + \alpha\rest_A(i) \mid i\in A\}}
    =
    x_{\beta_\alpha(A)}
    =
    \beta_\alpha^*(x)_A,
  \end{align*}
  as desired.

  \medskip

  Let us now prove item~\ref{lem:kpartbasics:action}. First note that since $\Omega^{\kpart}_A =
  \Omega_{\lvert A\rvert}$, the formula~\eqref{eq:kpartbasics:action} is well-defined (as
  $\lvert\dom(f\comp\sigma\rest_{\sigma^{-1}(\dom(f))})\rvert = \lvert\dom(f)\rvert$). Now
  covariance is easily checked: for $x\in\cE_1(\Omega^{\kpart})$ and $\sigma,\tau\in S_k$, we have
  \begin{align*}
    \tau_*(\sigma_*(x))_f
    & =
    \sigma_*(x)_{f\comp\tau\rest_{\tau^{-1}(\dom(f))}}
    =
    x_{f\comp\tau\rest_{\tau^{-1}(\dom(f))}\comp\sigma\rest_{\sigma^{-1}(\tau^{-1}(\dom(f)))}}
    \\
    & =
    x_{f\comp\tau\comp\sigma\rest_{(\tau\comp\sigma)^{-1}(\dom(f))}}
    =
    (\tau\comp\sigma)_*(x)_f.
  \end{align*}

  Since $\iota_{\kpart}$ and $\phi_k$ are inverses of each other by item~\ref{lem:kpartbasics:phi},
  it suffices to show only $S_k$-equivariance of $\iota_{\kpart}$. Again, this is easily checked:
  for $x\in\cE_1(\Omega^{\kpart})$, $\sigma\in S_k$ and $A\in r(k)$, we have
  \begin{align*}
    \sigma^*(\iota_{\kpart}(x))_A
    & =
    \iota_{\kpart}(x)_{\sigma(A)}
    =
    x_{1^{\sigma(A)}}
    =
    \sigma^{-1}_*(x)_{1^A}
    =
    \iota_{\kpart}(\sigma^{-1}_*(x))_A.
    \qedhere
  \end{align*}
\end{proof}

The next proposition says that rank is invariant under the injection $\cH\mapsto\cH^{\kpart}$.

\begin{proposition}[Rank invariance]\label{prop:kpartrk}
  Let $\Omega$ be a Borel template, let $k\in\NN_+$, let $\Lambda$ be a non-empty Borel
  space, let $H\in\cF_k(\Omega,\Lambda)$ be a $k$-ary hypothesis and let
  $\cH\subseteq\cF_k(\Omega,\Lambda)$ be a $k$-ary hypothesis class. Then the following hold:
  \begin{enumerate}
  \item We have $\rk(H)=\rk(H^{\kpart})$.
  \item We have $\rk(\cH)=\rk(\cH^{\kpart})$.
  \end{enumerate}
\end{proposition}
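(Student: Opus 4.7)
The second statement will follow immediately from the first by taking supremum over $H\in\cH$ (using the bijection $H\mapsto H^{\kpart}$ from Definition~\ref{def:kpart}\ref{def:kpart:cH}), so the whole task reduces to proving $\rk(H)=\rk(H^{\kpart})$ for an individual hypothesis. The basic observation I will exploit is that every $f\in r_k(1)$ is the constant function $1^{\dom(f)}$ (since the codomain is $[1]$), so $f\mapsto\dom(f)$ is a bijection $r_k(1)\to r(k)$. In particular, the map $\iota_{\kpart}\colon\cE_1(\Omega^{\kpart})\to\cE_k(\Omega)$ given by $\iota_{\kpart}(x)_A=x_{1^A}$ is a bijection (which also follows from Lemma~\ref{lem:kpartbasics}\ref{lem:kpartbasics:phi}, since $\iota_{\kpart}=\phi_k^{-1}$) and it identifies, for each $r$, the subproduct of coordinates indexed by $\{A\in r(k):\lvert A\rvert\leq r\}$ with the subproduct indexed by $\{f\in r_k(1):\lvert\dom(f)\rvert\leq r\}$.

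For the inequality $\rk(H^{\kpart})\leq\rk(H)$, suppose $\rk(H)=r$ so that $H$ factors as $H(y)=H'((y_A)_{\lvert A\rvert\leq r})$. By definition, $H^{\kpart}(x)_\tau=H(\tau^*(\iota_{\kpart}(x)))$ for $\tau\in S_k$, and $\tau^*(\iota_{\kpart}(x))_A=\iota_{\kpart}(x)_{\tau(A)}=x_{1^{\tau(A)}}$. Since $\tau$ is a permutation, $\lvert\tau(A)\rvert=\lvert A\rvert$, so the expression on the right depends only on those coordinates $x_f$ with $\lvert\dom(f)\rvert\leq r$. As this holds for every $\tau\in S_k$, I conclude $\rk(H^{\kpart})\leq r$.

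For the reverse inequality $\rk(H)\leq\rk(H^{\kpart})$, suppose $\rk(H^{\kpart})=r$. By definition of $F^*_k$, we have $H(y)=H^*_k(y)_{\mathrm{id}}$, and since $\iota_{\kpart}$ is a bijection we may write $H(\iota_{\kpart}(x))=H^*_k(\iota_{\kpart}(x))_{\mathrm{id}}=H^{\kpart}(x)_{\mathrm{id}}$. The rank hypothesis says that $H^{\kpart}(x)_{\mathrm{id}}$ depends only on the coordinates $x_f$ with $\lvert\dom(f)\rvert\leq r$, equivalently on $x_{1^B}=\iota_{\kpart}(x)_B$ for $\lvert B\rvert\leq r$. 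Substituting $y=\iota_{\kpart}(x)$, this says exactly that $H(y)$ depends only on $y_B$ for $\lvert B\rvert\leq r$, so $\rk(H)\leq r$.

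The main ``obstacle'' is really just unpacking the notation: once one verifies that $\iota_{\kpart}$ matches up the two relevant families of coordinates via the bijection $f\leftrightarrow\dom(f)$, both inequalities are immediate from chasing definitions. Finally, taking suprema over $F\in\cH$ yields $\rk(\cH)=\rk(\cH^{\kpart})$, completing the proof.
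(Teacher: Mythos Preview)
Your proof is correct and uses essentially the same idea as the paper: the bijection $f\mapsto\dom(f)$ between $r_k(1)$ and $r(k)$ makes $\iota_{\kpart}$ a size-preserving relabeling of coordinates, from which rank invariance follows. The paper packages this slightly differently, decomposing $\iota_{\kpart}$ level-by-level as $\iota_{\kpart}(x)=(\iota_{\kpart}^1(x^1),\ldots,\iota_{\kpart}^k(x^k))$ with each $\iota_{\kpart}^t$ a bijection on the size-$t$ coordinates, and then appeals directly to the definition of $H^*_k$; your version instead spells out the two inequalities explicitly, which is arguably clearer.
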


\begin{proof}
  Clearly, the assertion for $\cH$ follow from the assertion for $H$.
  
  For each $t\in[k]$, let $D_t\df\{f\in r_k(1) \mid \lvert\dom(f)\rvert=t\}$ and
  $\iota_{\kpart}^t\colon X_t^{D_t}\to X_t^{\binom{[k]}{t}}$ be given by
  \begin{align*}
    \iota_{\kpart}^t(x)_A & \df x_{1^A} \qquad \left(x\in X_t^{D_t}, A\in\binom{[k]}{t}\right),
  \end{align*}
  where $1^A$ is the unique function $A\to[1]$. Note that $\iota_{\kpart}$ can be decomposed as
  \begin{align*}
    \iota_{\kpart}(x)
    & =
    (\iota_{\kpart}^1(x^1),\ldots,\iota_{\kpart}^k(x^k))
    \qquad (x\in\cE_1(\Omega^{\kpart})),
  \end{align*}
  where $x^t$ is the projection of $x$ onto the coordinates indexed by $D_t$. Since
  \begin{align*}
    H^{\kpart}(x)
    & \df
    H^*_k(\iota_{\kpart}(x))
    =
    H^*_k(\iota_{\kpart}^1(x^1),\ldots,\iota_{\kpart}^k(x^k))
    \qquad
    (x\in\cE_1(\Omega^{\kpart})),
  \end{align*}
  the assertion on the rank of $H$ and $H^{\kpart}$ follows from the definition of $H^*_k$.
\end{proof}

The next proposition shows that the $\VCN_k$-dimension is also invariant under the injection
$\cH\mapsto\cH^{\kpart}$.

\begin{proposition}[$\VCN_k$-dimension invariance]\label{prop:kpartVCN}
  Let $\Omega$ be a Borel template, let $k\in\NN_+$, let $\Lambda$ be a non-empty Borel space and
  let $\cH\subseteq\cF_k(\Omega,\Lambda)$ be a $k$-ary hypothesis class. Then $\VCN_k(\cH) =
  \VCN_k(\cH^{\kpart})$.
\end{proposition}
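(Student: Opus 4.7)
The plan is to prove the equality by establishing that the sup in the partite definition, $\sup_{A\in\binom{[k]}{k-1}, x'}\Nat(\cH^{\kpart}(x'))$, coincides with the sup in the non-partite definition, $\sup_{x\in\cE_{k-1}(\Omega)}\Nat(\cH(x))$. Two tools from what precedes are critical: the bijection $\iota_{\kpart}\colon\cE_1(\Omega^{\kpart})\to\cE_k(\Omega)$ together with its inverse $\phi_k$ (Lemma~\ref{lem:kpartbasics}\ref{lem:kpartbasics:phi}), and the $S_k$-equivariance relating $\iota_{\kpart}$ with the covariant action $\sigma_*$ on $\cE_1(\Omega^{\kpart})$ and the contravariant $\sigma^*$ on $\cE_k(\Omega)$ (Lemma~\ref{lem:kpartbasics}\ref{lem:kpartbasics:action}, combined with Lemma~\ref{lem:F*Vequiv}).

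First, I would handle the privileged case $A=[k-1]$. Via the canonical identification $r(k-1)\leftrightarrow r_{k,[k-1]}$ (each $B\in r(k-1)$ corresponding to the unique $1^B\colon B\to [1]$), the fixed-coordinate space $\prod_{f\in r_{k,[k-1]}} X^{\kpart}_{\dom(f)}$ becomes literally $\cE_{k-1}(\Omega)$, and the free-coordinate space $\prod_{f\in r_k(1)\setminus r_{k,[k-1]}} X^{\kpart}_{\dom(f)}$ becomes literally $\prod_{B\in r(k)\setminus r(k-1)} X_{\lvert B\rvert}$. Since $H^{\kpart}(y)=H^*_k(\iota_{\kpart}(y))$ by definition, the functions $H^{\kpart}(x',\place)$ and $H^*_k(x,\place)$ are the same function once $x$ and $x'$ are identified this way. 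Hence $\cH^{\kpart}(x')=\cH(x)$ as families of functions $\prod\cdots\to\Lambda^{S_k}$, so $\Nat(\cH^{\kpart}(x'))=\Nat(\cH(x))$. Taking suprema yields $\VCN_k(\cH^{\kpart})\geq\VCN_k(\cH)$.

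To get the reverse inequality, I would show that any choice $A\in\binom{[k]}{k-1}$ can be reduced to the case $A=[k-1]$ via $S_k$-equivariance. Given $A$, pick $\sigma\in S_k$ with $\sigma([k-1])=A$. Combining $\iota_{\kpart}\comp\sigma^{-1}_*=\sigma^*\comp\iota_{\kpart}$ (Lemma~\ref{lem:kpartbasics}\ref{lem:kpartbasics:action}) with $H^*_k\comp\sigma^*=\sigma^*\comp H^*_k$ (Lemma~\ref{lem:F*Vequiv}), one obtains
\begin{align*}
  H^{\kpart}(\sigma^{-1}_*(y)) & = \sigma^*(H^{\kpart}(y)) \qquad (H\in\cH,\ y\in\cE_1(\Omega^{\kpart})).
\end{align*}
A direct check of $\sigma^{-1}_*(y)_f=y_{f\comp\sigma^{-1}\rest_{\sigma(\dom(f))}}$ shows that $\sigma^{-1}_*$ bijectively maps the decomposition of $\cE_1(\Omega^{\kpart})$ along $(r_{k,A},\,r_k(1)\setminus r_{k,A})$ onto the decomposition along $(r_{k,[k-1]},\,r_k(1)\setminus r_{k,[k-1]})$. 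Consequently, if $\cH^{\kpart}(x')$ Natarajan-shatters some set $S$ in the free-coordinate space for $A$, then letting $(x'',\xi')\df\sigma^{-1}_*(x',\xi)$, the family $\cH^{\kpart}(x'')$ Natarajan-shatters the transported set via the bijection $\xi\mapsto\xi'$; the output values are relabeled by the bijection $\sigma^*\colon\Lambda^{S_k}\to\Lambda^{S_k}$, which preserves the Natarajan property (shattering is invariant under bijections of both input and output). Hence $\Nat(\cH^{\kpart}(x'))=\Nat(\cH^{\kpart}(x''))$ with $x''$ belonging to the $A=[k-1]$ slot, completing $\VCN_k(\cH^{\kpart})\leq\VCN_k(\cH)$.

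The main obstacle is purely notational: keeping straight the identifications $r(k-1)\leftrightarrow r_{k,[k-1]}$, the interplay between the covariant action $\sigma_*$ on $\cE_1(\Omega^{\kpart})$ and the contravariant $\sigma^*$ on $\cE_k(\Omega)$, and the fact that outputs live in $\Lambda^{S_k}$ and carry their own $S_k$-action. Once these are aligned via the cited lemmas, no substantive combinatorics is required beyond the elementary invariance of Natarajan dimension under bijective relabeling.
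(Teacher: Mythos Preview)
Your proposal is correct and follows essentially the same route as the paper: identify the slot $A=[k-1]$ on the partite side with $\cE_{k-1}(\Omega)$ on the non-partite side via the bijection $\iota_{\kpart}$ (the paper writes this as a factorization $\iota_{\kpart}=(\iota_{\kpart}^1,\iota_{\kpart}^2)$), and then use $S_k$-equivariance of $\iota_{\kpart}$ and $H^*_k$ (Lemmas~\ref{lem:kpartbasics}\ref{lem:kpartbasics:action} and~\ref{lem:F*Vequiv}) to reduce an arbitrary $A\in\binom{[k]}{k-1}$ to $[k-1]$. The only cosmetic difference is that the paper chooses $\sigma$ with $\sigma(A)=[k-1]$ rather than $\sigma([k-1])=A$, and organizes the two inequalities in the opposite order.
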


\begin{proof}
  First note that the function $\iota_{\kpart}$ given by~\eqref{eq:iotakpart} can be decomposed as
  \begin{align*}
    \iota_{\kpart}(x,x') & = (\iota_{\kpart}^1(x),\iota_{\kpart}^2(x')),
  \end{align*}
  where
  \begin{align*}
    \iota_{\kpart}^1\colon & \prod_{f\in r_{k,[k-1]}} X_{\lvert\dom(f)\rvert}\to\cE_{k-1}(\Omega),
    \\
    \iota_{\kpart}^2\colon & \prod_{f\in r_k(1)\setminus r_{k,[k-1]}} X_{\lvert\dom(f)\rvert}
    \to\prod_{A\in r(k)\setminus r(k-1)} X_{\lvert A\rvert}
  \end{align*}
  are given by
  \begin{align*}
    \iota_{\kpart}^1(x)_A & \df x_{1^A}
    \qquad (A\in r(k-1)),
    \\
    \iota_{\kpart}^2(x')_A & \df x_{1^A}
    \qquad (A\in r(k)\setminus r(k-1)),
  \end{align*}
  $r_{k,A}\df\{f\in r_k(1) \mid \dom(f)\subseteq A\}$ is as in~\eqref{eq:rkA} and $1^A$ is the
  unique function $A\to[1]$.

  Note that both $\iota_{\kpart}^1$ and $\iota_{\kpart}^2$ are bijections and for every
  $x\in\cE_{k-1}$, we have
  \begin{align*}
    \cH(x)
    & =
    \{H^*_k(x,\place) \mid H\in\cH\}
    =
    \{H^{\kpart}((\iota_{\kpart}^1)^{-1}(x), (\iota_{\kpart}^2)^{-1}(\place)) \mid H\in\cH\}
    \\
    & =
    \{F\comp(\iota_{\kpart}^2)^{-1} \mid F\in\cH^{\kpart}((\iota_{\kpart}^1)^{-1}(x))\}.
  \end{align*}
  Since $\iota_{\kpart}^2$ is bijective, we conclude that
  \begin{align*}
    \Nat(\cH(x)) & = \Nat(\cH^{\kpart}((\iota_{\kpart}^1)^{-1}(x))) \leq \VCN_k(\cH^{\kpart}),
  \end{align*}
  from which we get $\VCN_k(\cH)\leq\VCN_k(\cH^{\kpart})$ by taking supremum over $x$.

  \medskip

  For the other inequality, first recall that $S_k$ has a covariant action on
  $\cE_1(\Omega^{\kpart})$ given by~\eqref{eq:kpartbasics:action}. Let $A\in\binom{[k]}{k-1}$ and
  let $\sigma\in S_k$ be such that $\sigma(A)=[k-1]$ and note that for $x\in\prod_{f\in r_{k,A}}
  X_{\lvert\dom(f)\rvert}$, we have
  \begin{align*}
    \cH^{\kpart}(x)
    & =
    \{H^{\kpart}(x,\place) \mid H\in\cH\}
    \\
    & =
    \{H^*_k(\iota_{\kpart}(x,\place)) \mid H\in\cH\}
    \\
    & =
    \{\sigma^*(H^*_k(\iota_{\kpart}(\sigma_*(x,\place)))) \mid H\in\cH\}
    \\
    & =
    \{\sigma^*(H^*_k(\iota_{\kpart}^1(x'), \iota_{\kpart}^2(\sigma_*(\place)))) \mid H\in\cH\}
    \\
    & =
    \{\sigma^*\comp F\comp\iota_{\kpart}^2\comp\sigma_* \mid F\in\cH(\iota_{\kpart}^1(x'))\},
  \end{align*}
  where the third equality follows since both $H^*_k$ and $\iota_{\kpart}$ are $S_k$-equivariant (by
  Lemmas~\ref{lem:F*Vequiv} and~\ref{lem:kpartbasics}\ref{lem:kpartbasics:action}, respectively) and
  $x'\in\prod_{f\in r_{k,[k-1]}} X_{\lvert\dom(f)\rvert}$ is given by
  \begin{align*}
    x'_f & \df x_{f\comp\sigma\rest_{\sigma^{-1}}(\dom(f))}
    \qquad (f\in r_{k,[k-1]}).
  \end{align*}

  Since $\sigma^*$, $\iota_{\kpart}^2$ and $\sigma_*$ are bijective, we conclude that
  \begin{align*}
    \Nat(\cH^{\kpart}(x))
    & =
    \Nat(\cH(\iota_{\kpart}^1(x')))
    \leq
    \VCN_k(\cH),
  \end{align*}
  from which we get $\VCN_k(\cH^{\kpart})\leq\VCN_k(\cH)$ by taking supremum over $x$ and $A$.
\end{proof}

\subsection{Partite learnability implies non-partite learnability}

The next proposition says that the injection $\cH\mapsto\cH^{\kpart}$ reduces (agnostic) $k$-PAC
learnability (with randomness) to its partite counterpart. However, it does not say anything about
the map \emph{preserving} learnability; this a priori is a much harder problem and will be
(partially) covered by Propositions~\ref{prop:kpart2} and~\ref{prop:kpart3}.

\begin{proposition}[Partite to non-partite]\label{prop:kpart}
  Let $\Omega$ be a Borel template, let $k\in\NN_+$, let $\Lambda$ be a non-empty Borel space
  and let $\cH\subseteq\cF_k(\Omega,\Lambda)$ be a $k$-ary hypothesis class. Then the following
  hold:
  \begin{enumerate}
  \item\label{prop:kpart:partPAC->PAC} If
    $\ell\colon\cE_k(\Omega)\times\Lambda^{S_k}\times\Lambda^{S_k}\to\RR_{\geq 0}$ is a $k$-ary loss
    function and $\cH^{\kpart}$ is $k$-PAC learnable (with randomness, respectively) with respect to
    $\ell^{\kpart}$, then $\cH$ is $k$-PAC learnable (with randomness, respectively) with respect to
    $\ell$.

    More precisely, if $\cA'$ is a $k$-PAC learner for $\cH^{\kpart}$, then $\cA$ defined by
    \begin{align}\label{eq:kpart:cA}
      \cA(x,y)
      & \df
      \cA'(\phi_m(x), \Phi_m(y))^{\kpart,-1}
      \qquad (x\in\cE_m(\Omega), y\in\Lambda^{([m])_k}, m\in\NN)
    \end{align}
    is a $k$-PAC learner for $\cH$ with $m^{\PAC}_{\cH,\ell,\cA} = k\cdot\ceil{m^{\PAC}_{\cH^{\kpart},\ell^{\kpart},\cA'}}$,
    where $\phi_m$ and $\Phi_m$ are given by~\eqref{eq:kpartbasics:phi}
    and~\eqref{eq:kpartbasics:Phi}, respectively.

    In the randomized case, if $\cA'$ is a randomized $k$-PAC learner for $\cH^{\kpart}$, then $\cA$
    defined by $R_\cA\df R_{\cA'}$ and
    \begin{align}\label{eq:kpart:cAr}
      \cA(x,y,b)
      & \df
      \cA'(\phi_m(x), \Phi_m(y), b)^{\kpart,-1}
      \qquad (x\in\cE_m(\Omega), y\in\Lambda^{([m])_k}, m\in\NN)
    \end{align}
    is a randomized $k$-PAC learner for $\cH$ with $m^{\PACr}_{\cH,\ell,\cA} =
    k\cdot\ceil{m^{\PACr}_{\cH^{\kpart},\ell^{\kpart},\cA'}}$.
  \item\label{prop:kpart:partagPAC->agPAC} If
    $\ell\colon\cH\times\cE_k(\Omega)\times\Lambda^{S_k}\to\RR_{\geq 0}$ is a $k$-ary agnostic loss
    function and $\cH^{\kpart}$ is agnostically $k$-PAC learnable (with randomness, respectively)
    with respect to $\ell^{\kpart}$, then $\cH$ is agnostically $k$-PAC learnable (with randomness,
    respectively) with respect to $\ell$.

    More precisely, if $\cA'$ is an agnostic $k$-PAC learner for
    $\cH^{\kpart}$, then $\cA$ defined by~\eqref{eq:kpart:cA} is an agnostic $k$-PAC learner for $\cH$
    with $m^{\agPAC}_{\cH,\ell,\cA} = k\cdot\ceil{m^{\agPAC}_{\cH^{\kpart},\ell^{\kpart},\cA'}}$.

    And in the randomized case, if $\cA'$ is a randomized agnostic $k$-PAC learner for $\cH^{\kpart}$,
    then $\cA$ defined by~\eqref{eq:kpart:cAr} is a randomized agnostic $k$-PAC learner for $\cH$ with
    $m^{\agPACr}_{\cH,\ell,\cA} = k\cdot\ceil{m^{\agPACr}_{\cH^{\kpart},\ell^{\kpart},\cA'}}$.
  \end{enumerate}
\end{proposition}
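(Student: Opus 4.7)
The plan is to use the partization construction to convert non-partite samples into partite samples, invoke the assumed partite learner on the transformed input, and then pull back its output via the bijection $G\mapsto G^{\kpart,-1}$. All four sub-cases (agnostic or not, with randomness or not) are structurally the same, so I will outline the strategy on the non-agnostic deterministic case first and then indicate the tweaks.

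First I would observe that the defining formula~\eqref{eq:kpart:cA} is literally the composition of three building blocks: (i) convert the raw sample $(x,y)\in\cE_m(\Omega)\times\Lambda^{([m])_k}$ into a partite sample $(\phi_m(x),\Phi_m(y))\in\cE_{\floor{m/k}}(\Omega^{\kpart})\times(\Lambda^{S_k})^{[\floor{m/k}]^k}$; (ii) run the assumed partite learner $\cA'$ on this partite sample to get an element of $\cH^{\kpart}$; (iii) apply the inverse of the bijection $\cH\ni F\mapsto F^{\kpart}\in\cH^{\kpart}$ from Definition~\ref{def:kpart}\ref{def:kpart:cH}. Measurability of $\cA$ follows since $\phi_m$, $\Phi_m$ and the inversion map are all measurable by construction.

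The core computation is a loss identification. Let $\mu\in\Pr(\Omega)$ and $F\in\cF_k(\Omega,\Lambda)$. Since $\iota_{\kpart}=\phi_k^{-1}$ by Lemma~\ref{lem:kpartbasics}\ref{lem:kpartbasics:phi} is a measure-isomorphism between $\mu^k$ and $(\mu^{\kpart})^1$ (the latter equipped with the shuffling induced by $\phi_k$), and since by definition $F^{\kpart}(z)=F^*_k(\iota_{\kpart}(z))$ and $\ell^{\kpart}(z,y,y')=\ell(\iota_{\kpart}(z),y,y')$, a change of variables yields
\begin{align*}
  L_{\mu,F,\ell}(H) \;=\; L_{\mu^{\kpart},F^{\kpart},\ell^{\kpart}}(H^{\kpart})
\end{align*}
for every $H\in\cH$. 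In particular, $F$ realizable in $\cH$ w.r.t.\ $\mu,\ell$ implies $F^{\kpart}$ realizable in $\cH^{\kpart}$ w.r.t.\ $\mu^{\kpart},\ell^{\kpart}$. The next step is to transport samples: by Lemma~\ref{lem:kpartbasics}\ref{lem:kpartbasics:phi}, $\phi_m$ pushes $\mu^m$ forward to $(\mu^{\kpart})^{\floor{m/k}}$, and by Lemma~\ref{lem:kpartbasics}\ref{lem:kpartbasics:Phi} we have the commuting identity $\Phi_m\comp F^*_m=(F^{\kpart})^*_{\floor{m/k}}\comp\phi_m$. Hence, if $\rn{x}\sim\mu^m$ and we set $\rn{z}\df\phi_m(\rn{x})\sim(\mu^{\kpart})^{\floor{m/k}}$, then $(\phi_m(\rn{x}),\Phi_m(F^*_m(\rn{x})))\sim(\rn{z},(F^{\kpart})^*_{\floor{m/k}}(\rn{z}))$. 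Picking $m\geq k\cdot\ceil{m^{\PAC}_{\cH^{\kpart},\ell^{\kpart},\cA'}(\epsilon,\delta)}$ ensures $\floor{m/k}\geq m^{\PAC}_{\cH^{\kpart},\ell^{\kpart},\cA'}(\epsilon,\delta)$, so the PAC guarantee for $\cA'$ applied to $(\mu^{\kpart},F^{\kpart})$ says $L_{\mu^{\kpart},F^{\kpart},\ell^{\kpart}}(\cA'(\phi_m(\rn{x}),\Phi_m(F^*_m(\rn{x}))))\leq\epsilon$ with probability $\geq 1-\delta$, and the loss identity above translates this into the PAC guarantee for $\cA$ on $(\mu,F)$.

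For the randomized case the same argument applies with an extra coordinate $\rn{b}\sim U(R_{\cA'}(\floor{m/k}))$ carried through; nothing interacts with the probabilistic bookkeeping other than expectations over $\rn{b}$, so the partite randomized guarantee transports verbatim. For the agnostic cases the adversary now plays $(\Omega',\mu',F)$ with $F\in\cF_k(\Omega\otimes\Omega',\Lambda)$; one passes to $(\Omega^{\kpart}\otimes(\Omega')^{\kpart},\mu^{\kpart}\otimes(\mu')^{\kpart},F^{\kpart})$ using the natural identification $(\Omega\otimes\Omega')^{\kpart}\cong\Omega^{\kpart}\otimes(\Omega')^{\kpart}$ and the partite agnostic loss $\ell^{\kpart}$ from Definition~\ref{def:kpart}. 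The analogous loss identity $L_{\mu,\mu',F,\ell}(H)=L_{\mu^{\kpart},(\mu')^{\kpart},F^{\kpart},\ell^{\kpart}}(H^{\kpart})$ implies in particular $\inf_{H\in\cH}L_{\mu,\mu',F,\ell}(H)=\inf_{G\in\cH^{\kpart}}L_{\mu^{\kpart},(\mu')^{\kpart},F^{\kpart},\ell^{\kpart}}(G)$, so the $\epsilon$-suboptimality delivered by $\cA'$ on the partite instance translates to $\epsilon$-suboptimality of $\cA$ on the original. I do not foresee a real obstacle: the whole argument is a measure-preserving change of variables combined with the commutativity statement already packaged in Lemma~\ref{lem:kpartbasics}; the only bookkeeping care needed is the factor of $k$ in the sample sizes coming from $\floor{m/k}$, which is why the proposition states $m^{\PAC}_{\cH,\ell,\cA}=k\cdot\ceil{m^{\PAC}_{\cH^{\kpart},\ell^{\kpart},\cA'}}$ (and analogously for the other three variants).
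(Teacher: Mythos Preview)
Your proposal is correct and follows essentially the same approach as the paper: establish the loss identity $L_{\mu,F,\ell}(H)=L_{\mu^{\kpart},F^{\kpart},\ell^{\kpart}}(H^{\kpart})$ via the measure-isomorphism $\phi_k$ from Lemma~\ref{lem:kpartbasics}\ref{lem:kpartbasics:phi}, transfer realizability, use the commutativity of Lemma~\ref{lem:kpartbasics}\ref{lem:kpartbasics:Phi} together with the measure-preserving property of $\phi_m$ to transport samples, and invoke the partite guarantee; the agnostic variant additionally uses the product-template identification and the resulting equality of infima, exactly as you outline. The only cosmetic discrepancy is that the statement sets $R_\cA\df R_{\cA'}$ (so $\rn{b}\in[R_{\cA'}(m)]$) whereas you write $\rn{b}\sim U(R_{\cA'}(\floor{m/k}))$, but this does not affect the argument.
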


\begin{proof}
  We start with item~\ref{prop:kpart:partPAC->PAC}. It suffices to prove only the randomized case as
  the non-randomized case amounts to the case when $R_{\cA'}\equiv 1$.

  First note that for $F,H\in\cF_k(\Omega,\Lambda)$
  and $\mu\in\Pr(\Omega)$, by Lemma~\ref{lem:kpartbasics}\ref{lem:kpartbasics:phi}, we have
  \begin{equation}\label{eq:kpart:L}
    \begin{aligned}
      L_{\mu,F,\ell}(H)
      & =
      \EE_{\rn{x}\sim\mu^k}[\ell(\rn{x}, H^*_k(\rn{x}), F^*_k(\rn{x}))]
      \\
      & =
      \EE_{\rn{x}\sim\mu^k}[\ell^{\kpart}(\phi_k(\rn{x}), H^{\kpart}(\phi_k(\rn{x})), F^{\kpart}(\phi_k(\rn{x})))]
      \\
      & =
      \EE_{\rn{z}\sim(\mu^{\kpart})^1}[\ell^{\kpart}(\rn{z}, H^{\kpart}(\rn{z}), F^{\kpart}(\rn{z}))]
      \\
      & =
      L_{\mu^{\kpart},F^{\kpart},\ell^{\kpart}}(H^{\kpart}),
    \end{aligned}
  \end{equation}
  which in particular implies that if $F$ is realizable in $\cH$ with respect to $\ell$ and $\mu$,
  then $F^{\kpart}$ is realizable in $\cH^{\kpart}$ with respect to $\ell^{\kpart}$ and
  $\mu^{\kpart}$.

  Note now that for $\epsilon,\delta\in(0,1)$ and an integer $m\geq
  k\cdot\ceil{m^{\PAC}_{\cH^{\kpart},\ell^{\kpart},\cA'}(\epsilon,\delta)}$, we have
  \begin{align*}
    & \!\!\!\!\!\!
    \PP_{\rn{x}\sim\mu^m}[
      \EE_{\rn{b}\sim U(R_\cA(m))}[L_{\mu,F,\ell}(\cA(\rn{x}, F^*_m(\rn{x}),\rn{b}))]
      \leq\epsilon]
    \\
    & =
    \PP_{\rn{x}\sim\mu^m}[
      \EE_{\rn{b}\sim U(R_{\cA'}(m))}[
        L_{\mu^{\kpart},F^{\kpart},\ell^{\kpart}}(\cA'(\phi_m(\rn{x}),\Phi_m(F^*_m(\rn{x})),\rn{b}))
      ]
      \leq\epsilon]
    \\
    & =
    \PP_{\rn{x}\sim\mu^m}[
      \EE_{\rn{b}\sim U(R_{\cA'}(m))}[
        L_{\mu^{\kpart},F^{\kpart},\ell^{\kpart}}(\cA'(\phi_m(\rn{x}),(F^{\kpart})^*_{\floor{m/k}}(\phi_m(\rn{x})),\rn{b}))
      ]
      \leq\epsilon]
    \\
    & =
    \PP_{\rn{z}\sim(\mu^{\kpart})^{\floor{m/k}}}[
      \EE_{\rn{b}\sim U(R_{\cA'}(m))}[
        L_{\mu^{\kpart},F^{\kpart},\ell^{\kpart}}(\cA'(\rn{z},(F^{\kpart})^*_{\floor{m/k}}(\rn{z}),\rn{b}))
      ]
      \leq\epsilon]
    \\
    & \geq 1-\delta,
  \end{align*}
  where the first equality follows from the definition of $\cA$ and~\eqref{eq:kpart:L}, the second
  equality follows from Lemma~\ref{lem:kpartbasics}\ref{lem:kpartbasics:Phi}, the third equality
  follows since $\phi_m(\rn{x})\sim\rn{z}$ by Lemma~\ref{lem:kpartbasics}\ref{lem:kpartbasics:phi}
  and the inequality follows since $\floor{m/k}\geq
  m^{\PAC}_{\cH^{\kpart},\ell^{\kpart},\cA'}(\epsilon,\delta)$ and $\cA'$ is a randomized $k$-PAC
  learner for $\cH^{\kpart}$.

  Therefore $\cA$ is a randomized $k$-PAC learner for $\cH$ with $m^{\PACr}_{\cH,\ell,\cA} =
  k\cdot\ceil{m^{\PACr}_{\cH^{\kpart},\ell^{\kpart},\cA'}}$.

  \medskip

  We proceed to item~\ref{prop:kpart:partagPAC->agPAC}, whose proof is similar to the one of the
  previous item. Again, it suffices to show the randomized case.

  First, we note that for a Borel template $\Omega'$ and probability templates $\mu\in\Pr(\Omega)$
  and $\mu'\in\Pr(\Omega')$, we have $(\Omega\otimes\Omega')^{\kpart} =
  \Omega^{\kpart}\otimes(\Omega')^{\kpart}$, and $(\mu\otimes\mu')^{\kpart} =
  \mu^{\kpart}\otimes(\mu')^{\kpart}$.

  Let $\iota_{\kpart}^\Omega$, $\iota_{\kpart}^{\Omega'}$ and
  $\iota_{\kpart}^{\Omega\otimes\Omega'}$ be defined by~\eqref{eq:iotakpart} for the spaces
  $\Omega$, $\Omega'$ and $\Omega\otimes\Omega'$, respectively and note that
  \begin{align*}
    \iota_{\kpart}^{\Omega\otimes\Omega'}(x,x') & = (\iota_{\kpart}^\Omega(x),\iota_{\kpart}^{\Omega'}(x')).
  \end{align*}

  Similarly, letting $\phi_m^\Omega$, $\phi_m^{\Omega'}$ and $\phi_m^{\Omega\otimes\Omega'}$ be
  given by~\eqref{eq:kpartbasics:phi} for $\Omega$, $\Omega'$ and $\Omega\otimes\Omega'$, respectively, we
  have
  \begin{align*}
    \phi_m^{\Omega\otimes\Omega'}(x,x') & = (\phi_m^\Omega(x),\phi_m^{\Omega'}(x')).
  \end{align*}

  Note that if $F\in\cF_k(\Omega\otimes\Omega',\Lambda)$, then by
  Lemma~\ref{lem:kpartbasics}\ref{lem:kpartbasics:phi}, we have
  \begin{equation}\label{eq:kpart:agL}
    \begin{aligned}
      L_{\mu,\mu',F,\ell}(H)
      & =
      \EE_{(\rn{x},\rn{x'})\sim(\mu\otimes\mu')^k}[\ell(H,\rn{x},F_k^*(\rn{x},\rn{x'}))]
      \\
      & =
      \EE_{(\rn{x},\rn{x'})\sim(\mu\otimes\mu')^k}[
        \ell^{\kpart}(H^{\kpart},
        \phi_k^\Omega(\rn{x}),
        F^{\kpart}(\phi_k^\Omega(\rn{x}),\phi_k^{\Omega'}(\rn{x'})))
      ]
      \\
      & =
      \EE_{(\rn{z},\rn{z'})\sim((\mu\otimes\mu')^{\kpart})^1}[\ell^{\kpart}(H^{\kpart},\rn{z},F^{\kpart}(\rn{z},\rn{z'}))]
      \\
      & =
      L_{\mu^{\kpart},(\mu')^{\kpart},F^{\kpart},\ell^{\kpart}}(H^{\kpart}).
    \end{aligned}
  \end{equation}
  In particular, since $H\mapsto H^{\kpart}$ is a bijection between $\cH$ and $\cH^{\kpart}$, we
  have
  \begin{align*}
    \inf_{H\in\cH} L_{\mu,\mu',F,\ell}(H) & = \inf_{H\in\cH^{\kpart}} L_{\mu^{\kpart},(\mu')^{\kpart},F^{\kpart},\ell^{\kpart}}(H).
  \end{align*}
  Let $I$ be the infimum above, let $\epsilon,\delta\in(0,1)$ and let $m\geq
  k\cdot\ceil{m^{\agPAC}_{\cH^{\kpart},\ell^{\kpart},\cA'}(\epsilon,\delta)}$ be an integer and note
  that
  \begin{align*}
    & \!\!\!\!\!\!
    \PP_{(\rn{x},\rn{x'})\sim(\mu\otimes\mu')^m}[
      \EE_{\rn{b}}[
        L_{\mu,\mu',F,\ell}(\cA(\rn{x}, F^*_m(\rn{x},\rn{x'}),\rn{b}))
      ]
      \leq I + \epsilon
    ]
    \\
    & =
    \PP_{(\rn{x},\rn{x'})\sim(\mu\otimes\mu')^m}[
      \EE_{\rn{b}}[
        L_{\mu^{\kpart},(\mu')^{\kpart},F^{\kpart},\ell^{\kpart}}(\cA'(\phi_m^{\Omega}(\rn{x}),\Phi_m(F^*_m(\rn{x},\rn{x'})),\rn{b}))
      ]
      \leq I + \epsilon
    ]
    \\
    & =
    \begin{multlined}[t]
      \PP_{(\rn{x},\rn{x'})\sim(\mu\otimes\mu')^m}[
        \EE_{\rn{b}}[
          L_{\mu^{\kpart},(\mu')^{\kpart},F^{\kpart},\ell^{\kpart}}(
          \cA'(\phi_m^\Omega(\rn{x}),F^*_m(\phi_m^\Omega(\rn{x}),\phi_m^{\Omega'}(\rn{x'})),\rn{b}))
        ]
        \\
        \leq I + \epsilon
      ]
    \end{multlined}
    \\
    & =
    \PP_{(\rn{z},\rn{z'})\sim((\mu\otimes\mu')^{\kpart})^{\floor{m/k}}}[
      \EE_{\rn{b}}[
        L_{\mu^{\kpart},(\mu')^{\kpart},F^{\kpart},\ell^{\kpart}}(
        \cA'(\rn{z},F^*_m(\rn{x},\rn{z'}),\rn{b}))
      ]
      \leq I + \epsilon
    ]
    \\
    & \geq
    1 - \delta,
  \end{align*}
  where all $\rn{b}$ are picked uniformly at random in $[R_\cA(m)]=[R_{\cA'}(m)]$, the first equality
  follows from the definition of $\cA$ and~\eqref{eq:kpart:agL}, the second equality follows from
  Lemma~\ref{lem:kpartbasics}\ref{lem:kpartbasics:Phi}, the third equality follows since
  \begin{align*}
    (\phi_m^\Omega(\rn{x}),\phi_m^{\Omega'}(\rn{x'}))
    =
    \phi_m^{\Omega\otimes\Omega'}(\rn{x},\rn{x'})
    \sim
    (\rn{z},\rn{z'})
  \end{align*}
  by Lemma~\ref{lem:kpartbasics}\ref{lem:kpartbasics:phi} and the inequality follows since
  $\floor{m/k}\geq m^{\agPAC}_{\cH^{\kpart},\ell^{\kpart},\cA'}(\epsilon,\delta)$ and $\cA'$ is a
  randomized agnostic $k$-PAC learner for $\cH^{\kpart}$.

  Therefore $\cA$ is a randomized agnostic $k$-PAC learner for $\cH$ w.r.t.\ $\ell$ with
  $m^{\agPACr}_{\cH,\ell,\cA} = k\cdot\ceil{m^{\agPACr}_{\cH^{\kpart},\ell^{\kpart},\cA'}}$.
\end{proof}

\subsection{Non-partite agnostic learnability implies partite agnostic learnability}
\label{subsec:nonpart->part}

Before we proceed to the (partial) converses of Proposition~\ref{prop:kpart},
Propositions~\ref{prop:kpart2} and~\ref{prop:kpart3} below, we need the definition below of neutral
symbols and we need to recall the concept of a Markov kernel from probability theory.

\begin{definition}[Neutral symbols]\label{def:neutsymb}
  Let $k\in\NN_+$.
  \begin{enumdef}
  \item\label{def:neutsymb:nonpart} For a $k$-ary agnostic loss function
    $\ell\colon\cH\times\cE_k(\Omega)\times\Lambda^{S_k}\to\RR_{\geq 0}$, we say that
    $\bot\in\Lambda$ is a \emph{neutral symbol} for $\ell$ if for every $H,H'\in\cH$, every
    $x\in\cE_k(\Omega)$ and every $y,y'\in\Lambda^{S_k}$ with $\bot\in\im(y)$ and $\bot\in\im(y')$,
    we have
    \begin{align*}
      \ell(H,x,y) & = \ell(H',x,y').
    \end{align*}

    Note that when $\bot$ is a neutral symbol for $\ell$, we can define a function
    $\ell_\bot\colon\cE_k(\Omega)\to\RR_{\geq 0}$ by
    \begin{align}\label{eq:neutsymbellbot}
      \ell_\bot(x)
      & \df
      \ell(H,x,y)
      \qquad (x\in\cE_k(\Omega)),
    \end{align}
    where $H$ is any hypothesis in $\cH$ and $y$ is any point in $\Lambda^{S_k}$ with $\bot\in\im(y)$.
  \item For a $k$-partite agnostic loss function
    $\ell\colon\cH\times\cE_1(\Omega)\times\Lambda\to\RR_{\geq 0}$, we say that $\bot\in\Lambda$ is a
    \emph{neutral symbol} for $\ell$ if for every $H,H'\in\cH$ and every $x\in\cE_1(\Omega)$, we have
    \begin{align*}
      \ell(H,x,\bot) & = \ell(H',x,\bot)
    \end{align*}

    When $\bot$ is a neutral symbol for $\ell$, we can define a function
    $\ell_\bot\colon\cE_1(\Omega)\to\RR_{\geq 0}$ by
    \begin{align}\label{eq:partiteneutsymbellbot}
      \ell_\bot(x)
      & \df
      \ell(H,x,\bot)
      \qquad (x\in\cE_1(\Omega)),
    \end{align}
    where $H$ is any hypothesis in $\cH$.
  \end{enumdef}
\end{definition}

The intuition of a neutral symbol $\bot$ for $\ell$ is that if $\bot$ appears in the sample, then all
possible guesses for $H\in\cH$ give the same penalty under $\ell$, so $\bot$ does not interfere in the
learning task. Even though we could define the notion of a neutral symbol for non-agnostic loss
functions analogously, we skip it since we do not currently have any application for such a concept.

\begin{remark}\label{rmk:neutsymb}
  Let us note that if $\ell$ (partite or not) has a neutral symbol $\bot$, then $\ell$ is
  flexible. Indeed, by letting $\Sigma$ be such that every space of $\Sigma$ has a single point and
  letting $G\in\cF_k(\Omega\otimes\Sigma,\Lambda)$ be the function that is constant and equal to
  $\bot$, we get
  \begin{align*}
    \EE_{\rn{z}\sim\nu^k}[\ell(H,x,G^*_k(x,\rn{z}))]
    & =
    \ell_\bot(x)
    \qquad (H\in\cH, x\in\cE_k(\Omega))
  \end{align*}
  in the non-partite case and
  \begin{align*}
    \EE_{\rn{z}\sim\nu^1}[\ell(H,x,G(x,\rn{z}))]
    & =
    \ell_\bot(x)
    \qquad (H\in\cH, x\in\cE_1(\Omega))
  \end{align*}
  in the partite case. Since the right-hand sides of the equations above do not depend on $H$, we
  get~\eqref{eq:EEindepH} (\eqref{eq:partiteEEindepH}, in the partite case). The existence of the
  corresponding $\cN$ then follow by Remarks~\ref{rmk:flexibilityfinite}
  and~\ref{rmk:partiteflexibilityfinite}.

  We will see in Proposition~\ref{prop:neutsymb} below that when an agnostic loss function is
  bounded and flexible, we can add neutral symbols to it (i.e., increase $\Lambda$) without
  disrupting agnostic $k$-PAC learnability. Thus, for bounded loss functions, having a neutral
  symbol is morally equivalent to flexibility.
\end{remark}

Even though the definition of a neutral symbol will be extremely convenient for the proof of
Proposition~\ref{prop:kpart2} below, it has a major flaw: except in the trivial cases when $\Lambda$
has size $1$ or $\cH$ is empty, the agnostic $0/1$-loss does not have neutral symbols. However the
agnostic $0/1$-loss is bounded and flexible when $\Lambda$ is finite (see
Lemma~\ref{lem:flexibility}), which means that we will be able to produce a neutral symbol for it
with Proposition~\ref{prop:neutsymb} below.

\medskip

Recall that a \emph{Markov kernel} from a Borel space $\Omega=(X,\cB)$ to a Borel space
$\Lambda=(Y,\cB')$ is a function $\rho\colon\Omega\to\Pr(\Lambda)$ such that for every measurable set
$B'\in\cB'$, the function $X\ni x\mapsto\rho(x)(B')\in[0,1]$ is measurable (with respect to
$\cB$). Given further a probability measure $\mu\in\Pr(\Omega)$, \Caratheodory's Extension Theorem
implies that there exists a unique measure $\rho[\mu]$ on $\Omega\otimes\Lambda$ such that
\begin{align*}
  \rho[\mu](B\times B') & = \int_B \rho(x)(B')\ d\mu(x)
\end{align*}
for all measurable sets $B\in\cB$ and $B'\in\cB'$. In particular, if $(\rn{x},\rn{y})\sim\rho[\mu]$, then
\begin{align*}
  \PP[\rn{y}\in B'\given \rn{x}\in B]
  & =
  \frac{1}{\mu(B)}\int_B \rho(x)(B')\ d\mu(x).
\end{align*}
If we take $B$ to be smaller and smaller sets converging to $\{x\}$ and hope that some version of
Lebesgue Differentiation Theorem holds, we can intuitively interpret $\rho(x)(B')$ as the conditional
probability that $\rn{y}\in B'$ given that $\rn{x}=x$ (this justifies another name used for Markov
kernels: regular conditional probabilities). This is formalized by the Disintegration Theorem below
(for a proof, see e.g.~\cite[Theorem~10.6.6 and Corollary~10.6.7]{Bog07}).

\begin{theorem}[Disintegration Theorem]
  \label{thm:disintegration}
  Let $\Omega=(X,\cB)$ and $\Lambda=(Y,\cB')$ be Borel spaces, let $\nu\in\Pr(\Lambda)$, let
  $\pi\colon\Lambda\to\Omega$ be a measurable function and let $\mu\df\pi_*(\nu)\in\Pr(\Omega)$ be
  the pushforward measure. Then there exists a Markov kernel $\rho\colon\Omega\to\Pr(\Lambda)$ such
  that the set
  \begin{align*}
    R & \df \{x\in\Omega \mid \rho(x)(\pi^{-1}(x)) = 1\}
  \end{align*}
  has $\mu$-measure $1$ and for every measurable function $\psi\colon\Lambda\to [0,\infty]$, we have
  \begin{align*}
    \int_Y \psi(y)\ d\nu(y)
    & =
    \int_R \int_{\pi^{-1}(x)} \psi(y)\ d\rho(x)(y) \ d\mu(x).
  \end{align*}

  In particular, the marginal of $\rho[\mu]$ on $\Lambda$ is $\nu$, that is, we have
  \begin{align*}
    \nu(B') & = \rho[\mu](X\times B') = \int_X \rho(x)(B')\ d\mu(x)
  \end{align*}
  for every $B'\in\cB'$.
\end{theorem}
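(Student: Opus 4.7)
My plan is to prove the Disintegration Theorem by reducing to a countable generating algebra and piecing together conditional probabilities via the Radon--Nikodym theorem. Since both $\Omega$ and $\Lambda$ are standard Borel, I may assume by Borel isomorphism that $\Lambda$ is Polish, so its Borel $\sigma$-algebra $\cB'$ is countably generated by an algebra $\cA=\{B_n\}_{n\in\NN}$ that is closed under finite unions and complements.

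For each $n$, the map $B\mapsto\nu(\pi^{-1}(B)\cap\pi^{-1}(\place))$ followed by Radon--Nikodym against $\mu=\pi_*\nu$ produces a $\mu$-a.e.\ defined measurable function $\rho_n\colon\Omega\to[0,1]$ satisfying $\int_C \rho_n(x)\,d\mu(x)=\nu(\pi^{-1}(C)\cap B_n)$ for all measurable $C\subseteq\Omega$. Because $\cA$ is countable, I can find a single $\mu$-null set $N_0\subseteq\Omega$ off of which the family $\{\rho_n\}_{n\in\NN}$ is simultaneously normalized ($\rho_{\Lambda}(x)=1$), takes values in $[0,1]$, and is finitely additive on $\cA$ (this last item uses that finite additivity is a countable collection of almost-everywhere identities, and countable unions of null sets are null). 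For $x\notin N_0$, the set function $B_n\mapsto\rho_n(x)$ is a finitely additive probability on $\cA$; to upgrade this to countable additivity I would apply the standard inner-regularity / compact class argument available in Polish spaces (using, e.g., that in $\Lambda$ one has a countable family of compact sets approximating each $B_n$ from within), then extend via \Caratheodory\ to a measure $\rho(x)$ on $\cB'$. On $N_0$ I define $\rho(x)$ to be any fixed probability measure.

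Measurability of $x\mapsto\rho(x)(B')$ for an arbitrary $B'\in\cB'$ then follows by a monotone class argument starting from $\cA$, and the integral decomposition is checked first for $\psi=\One_{B_n}$ (by definition of $\rho_n$ as a conditional expectation), extended to $\psi=\One_{B'}$ for all $B'\in\cB'$ by the monotone class theorem, and then to general nonnegative measurable $\psi$ by monotone convergence. To obtain concentration on fibers, I apply the decomposition formula to $\psi(y)=\One[\pi(y)\notin C]$ for each $C$ in a countable algebra $\cC$ on $\Omega$ that separates points (such a $\cC$ exists since $\Omega$ is standard Borel): this yields, off a null set, $\rho(x)(\pi^{-1}(\Omega\setminus C))=0$ whenever $x\in C$. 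Intersecting over the countable family $\cC$ and taking the union of these exceptional null sets with $N_0$ gives the set $R$ of full $\mu$-measure on which $\rho(x)$ is concentrated on $\pi^{-1}(x)$.

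The hardest step in this plan is the simultaneous selection of versions: Radon--Nikodym only gives each $\rho_n$ up to a null set, so the finite-additivity identities among the $\rho_n$ only hold almost everywhere, and the subtlety is that one must use countability of $\cA$ (and of the separating family $\cC$) crucially to collect all these exceptional sets into a single $\mu$-null set. After that, the passage from finite to countable additivity of $\rho(x)$ on $\cA$ is the next most delicate point, and this is exactly where the Polish (or at least standard Borel) hypothesis on $\Lambda$ is indispensable, since on a general measurable space there is no compact class to force countable additivity of a finitely additive set function.
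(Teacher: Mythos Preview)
The paper does not prove this theorem; it quotes it as a standard result and points the reader to \cite[Theorem~10.6.6 and Corollary~10.6.7]{Bog07} for a proof. Your outline is one of the standard routes (build conditional probabilities on a countable generating algebra via Radon--Nikodym, upgrade to countable additivity using the Polish structure of $\Lambda$, then deduce fiber concentration from a countable separating family on $\Omega$), and you have correctly flagged the two genuinely delicate points: the simultaneous selection of versions across the countable algebra, and the passage from finite to countable additivity. There is nothing to compare against in the paper itself.
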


The next proposition says that the map $\cH\mapsto\cH^{\kpart}$ preserves agnostic $k$-PAC
learnability with randomness as long as the agnostic loss function is symmetric, bounded and has a
neutral symbol. We will see in Proposition~\ref{prop:neutsymb} below that if $\ell$ does not have
neutral symbols, then they can be artificially produced as long as $\ell$ is at least flexible.

\begin{proposition}[Non-partite to partite with neutral symbols]\label{prop:kpart2}
  Let $\Omega$ be a Borel template, let $k\in\NN_+$, let $\Lambda$ be a non-empty Borel space, let
  $\cH\subseteq\cF_k(\Omega,\Lambda)$ be a $k$-ary hypothesis class and let
  $\ell\colon\cH\times\cE_k(\Omega)\times\Lambda^{S_k}\to\RR_{\geq 0}$ be a symmetric $k$-ary
  agnostic loss function with $\lVert\ell\rVert_\infty < \infty$. Suppose further that $\ell$ has a
  neutral symbol.
  
  If $\cH$ is agnostically $k$-PAC learnable with randomness with respect to $\ell$, then
  $\cH^{\kpart}$ is agnostically $k$-PAC learnable with randomness with respect to $\ell^{\kpart}$;
  more precisely, if $\cA$ is a randomized agnostic $k$-PAC learner for $\cH$, then there exists a
  randomized $k$-PAC learner $\cA'$ for $\cH^{\kpart}$ with
  \begin{gather*}
    R_{\cA'}(m)
    \df
    R_\cA(m)\cdot m!\cdot\prod_{i=1}^k \binom{k}{i}^{2\binom{m}{i}}
    \leq
    R_\cA(m)\cdot m!\cdot k^{2\cdot k\cdot m^k},
    \\
    m^{\agPACr}_{\cH^{\kpart},\ell^{\kpart},\cA'}(\epsilon,\delta)
    \df
    m^{\agPACr}_{\cH,\ell,\cA}\left(\frac{p\cdot\epsilon}{2},\widetilde{\delta}_\ell(\epsilon,\delta)\right),
  \end{gather*}
  where
  \begin{align}
    \widetilde{\delta}_\ell(\epsilon,\delta)
    & \df
    \min\left\{\frac{\epsilon\delta}{2\lVert\ell\rVert_\infty}, \frac{1}{2}\right\},
    \label{eq:kpart2:wdelta}
    \\
    p & \df \prod_{i=1}^k \binom{k}{i}^{-2\binom{k}{i}} \geq \frac{1}{2^{k\cdot 2^k}}.
    \label{eq:kpart2:p}
  \end{align}
\end{proposition}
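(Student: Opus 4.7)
The plan is to build a randomized agnostic partite learner $\cA'$ for $\cH^{\kpart}$ by calling the given non-partite learner $\cA$ for $\cH$ as a subroutine on a converted non-partite sample. On an input partite sample $(x,y)$ of per-side size $m$ (so $km$ partite vertices in total), I would identify the partite position $(i,j)\in[k]\times[m]$ with the non-partite position $(i-1)m+j\in[km]$, and call a subset $A\subseteq[km]$ of size $\le k$ (respectively an injection $\alpha\colon[k]\to[km]$) \emph{crossing} if it hits each side at most once (respectively exactly once). The converted sample $(\tilde x,\tilde y)\in\cE_{km}(\Omega)\times\Lambda^{([km])_k}$ is then defined as follows. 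For crossing $A$, $\tilde x_A$ is copied from $x$ at the corresponding partite coordinate $f\in r_k(m)$. For a crossing $\alpha$ with side-order $\sigma\in S_k$ and vertex tuple $(v_1,\ldots,v_k)$, $\tilde y_\alpha$ is the $\sigma$-component of the $\Lambda^{S_k}$-valued partite label $y_{v_1,\ldots,v_k}$, where symmetry of $\ell$ is essential so the choice of $\sigma$ does not distort the loss. For non-crossing $A$ of size $i\le k$, $\cA'$ uses random bits to pick a compatible partite slot---one of $\binom{k}{i}$ size-$i$ subsets of $[k]$, done independently for both the $\tilde x$-coordinate and a shadow copy used later by the hidden non-partite function, accounting for the $\prod_i\binom{k}{i}^{2\binom{m}{i}}$ factor in $R_{\cA'}$---whose value is copied into $\tilde x_A$; an initial uniform permutation of the $m$ vertices within each side consumes the $m!$ factor. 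For non-crossing $\alpha$, $\tilde y_\alpha$ is set to the neutral symbol $\bot$ of $\ell$. Finally $\cA'$ runs $\cA$ on $(\tilde x,\tilde y)$ to get $H\in\cH$ and returns $H^{\kpart}\in\cH^{\kpart}$.

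The key claim is that for every partite adversary $(\mu,\mu',F)$ with $F\in\cF_k(\Omega^{\kpart}\otimes\Omega',\Lambda^{S_k})$, the distribution of the converted $(\tilde x,\tilde y)$ (over both the partite sample and $\cA'$'s random bits) coincides with the distribution of a bona fide non-partite sample drawn from some non-partite adversary $(\mu^{NP},\mu^{NP,\prime},F^{NP})$. The construction is manifestly local (disjoint non-partite subsets draw on disjoint partite data and disjoint fresh randomness), and thanks to the within-side shuffle and the i.i.d.\ side-label coding it is also exchangeable, so the existence of the required non-partite adversary follows from the agnostic Aldous--Hoover theorem (Proposition~\ref{prop:agexch}). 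Concretely, $\mu^{NP,\prime}$ stores both all per-side partite data and an i.i.d.\ uniform side-label in $[k]$ per vertex, and $F^{NP}$ inspects the $k$ input side-labels and returns $\bot$ unless they form a permutation of $[k]$, in which case it returns the appropriate value of $F$. Granted this, a direct computation using symmetry of $\ell$ and neutrality of $\bot$ gives the loss identity
\[
  L_{\mu^{NP},\mu^{NP,\prime},F^{NP},\ell}(H)
  \;=\;
  p\cdot L_{\mu,\mu',F,\ell^{\kpart}}(H^{\kpart})
  \;+\;
  (1-p)\cdot c,
\]
where $p$ is the constant of~\eqref{eq:kpart2:p} (the probability a uniform $k$-tuple of side-labels is a permutation of $[k]$) and $c$ is independent of $H$.

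To conclude I would invoke the guarantee of $\cA$ at parameters $(p\epsilon/2,\widetilde\delta_\ell)$: with probability at least $1-\widetilde\delta_\ell$ over the converted sample, the expected loss (over $\cA$'s internal randomness) of $\cA$'s output is within $p\epsilon/2$ of $\inf_H L^{NP}(H)$. A Markov-inequality exchange of the outer probability and the conversion randomness, using $\lVert\ell\rVert_\infty<\infty$, upgrades this to a high-probability bound on the expected loss (now over both $\cA$'s randomness and the conversion) for most input partite samples; the definition of $\widetilde\delta_\ell$ in~\eqref{eq:kpart2:wdelta} is exactly what is needed to absorb the factor $\lVert\ell\rVert_\infty$ in this Markov step while still giving failure probability $\le\delta$. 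Translating back through the loss identity---the constant $c$ cancels when passing to infima on both sides, and the multiplicative $p$ is what rescales the $\epsilon$---yields the desired partite agnostic PAC guarantee for $\cA'$.

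The main obstacle is the distributional equivalence when the partite marginals $\mu_{\{1\}},\ldots,\mu_{\{k\}}$ on $\Omega_1$ are asymmetric: a non-partite sample draws $\tilde x$ from a single product measure $(\mu^{NP})^{km}$, so all its vertex marginals must agree, whereas the partite sides can carry genuinely different measures. The resolution is to relocate all per-side vertex data onto the auxiliary template $\Omega^{NP,\prime}$ and let $F^{NP}$ access them via the side-labels, rather than trying to embed the asymmetric partite structure into $\mu^{NP}$ directly; the Disintegration Theorem (Theorem~\ref{thm:disintegration}) is what produces an explicit joint distribution on $\mu^{NP,\prime}$ compatible with this scheme. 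The combinatorial bookkeeping of random choices---among $\binom{k}{i}$ partite slots for each non-crossing subset of size $i$, both for the visible $\tilde x$ and the shadow copy used by $F^{NP}$, plus the within-side shuffle---accounts for the precise expression for $R_{\cA'}(m)$.
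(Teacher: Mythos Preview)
Your high-level shape matches the paper---convert the partite sample to a non-partite one, fill missing labels with the neutral symbol, construct a matching non-partite adversary, derive the affine loss identity, and finish with a Markov step on the conversion randomness---and you correctly flag the asymmetric-marginal obstacle. But the specific conversion you describe does not produce a local exchangeable distribution, and the paper's fix is not the one you propose.

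The problem is the size-$km$ sample with deterministic position-to-side identification. After a within-side shuffle, the marginal of $\tilde x$ at position $(i-1)m+j$ is still $\mu_{\{i\}}$, so $S_{km}$-exchangeability fails whenever the $\mu_{\{i\}}$ differ, and you cannot invoke Proposition~\ref{prop:agexch}. Patching this with a full $S_{km}$ shuffle restores exchangeability but breaks locality: the resulting vertex distribution is a uniform random matching of $km$ positions to $k$ blocks of size $m$, not a product measure, so e.g.\ the event that positions $1,\ldots,m(k-1)$ all land in sides $1,\ldots,k-1$ forces the remaining positions into side $k$. This is exactly the obstruction spelled out in the proof intuition, items~\ref{it:permute}--\ref{it:coinflip}. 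Your proposed resolution of relocating all vertex data to the auxiliary template $\Omega^{NP,\prime}$ cannot help: the learner $\cA$ receives $\tilde x\in\cE_{km}(\Omega)$, not the auxiliary data, so if the real $\Omega$-content lives only in the hidden template then $\cA$ has nothing on which to evaluate its hypotheses. Finally, your randomness accounting is inconsistent with a size-$km$ conversion: you would need $R_\cA(km)$ rather than $R_\cA(m)$, and the number of subsets requiring a random slot is governed by $\binom{km}{i}$, not $\binom{m}{i}$.

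The paper's construction converts to a non-partite sample of the \emph{same} size $m$. For each coordinate $C\in r(m,k)$ the algorithm independently draws $U_C,U'_C\in\binom{[k]}{\lvert C\rvert}$ uniformly, reads $x^{\sigma,U}_C$ from the partite coordinate indexed by $\sigma\circ\iota_{\sigma^{-1}(C),m}\circ\iota_{U_C,k}^{-1}$, and then applies a single permutation $\sigma\in S_m$; the second draw $U'$ plays the analogous role on the partite adversary's auxiliary template $\Omega'$. This makes the marginal of each $x^{\sigma,U}_C$ the uniform mixture $\binom{k}{\lvert C\rvert}^{-1}\sum_B\mu_B$, which is symmetric in $C$, and the i.i.d.\ per-coordinate side draws guarantee locality. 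The non-partite adversary $(\widehat\mu,\widehat{\mu''},\widehat F)$ is then built explicitly, with the Disintegration Theorem producing the Markov kernel that recovers the conditional side-label distribution given the visible $\Omega$-value; this explicit form is what makes the loss identity in Claim~\ref{clm:dist}\ref{clm:dist:loss} a computation rather than an appeal to abstract exchangeability.
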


\begin{proofint}
  Since this is arguably the most technical result of this article, let us first give an idea of the
  proof in a simpler case: suppose that $\cH$ is agnostically $k$-PAC learnable (without randomness)
  by some agnostic $k$-PAC learner $\cA$ and we want to show that $\cH^{\kpart}$ is (non-agnostically)
  $k$-PAC learnable with randomness. In fact, let us think of the even simpler case where our
  hypotheses are $k$-hypergraphs, i.e., $\Omega_i$ has a single point for every $i\geq 2$ (see
  Remark~\ref{rmk:rk1}), $\Lambda=\{0,1\}$ and every $H\in\cH$ is $S_k$-invariant in the sense that
  $H^*_k(x)=H^*_k(\sigma^*(x))$ for every $\sigma\in S_k$ (this is not the same as the
  $S_k$-equivariance that always holds, cf.~Lemma~\ref{lem:F*Vequiv}). This means we can think of
  our hypotheses as $S_k$-invariant functions $F\colon\Omega_1^k\to\{0,1\}$ and the $S_k$-invariance
  allows us to think of the partite versions $F^{\kpart}\colon\Omega_1^k\to\{0,1\}^{S_k}$ as just
  functions $\Omega_1^k\to\{0,1\}$ since
  \begin{align*}
    F^{\kpart}(x)_\sigma
    & =
    F^*_k(\iota_{\kpart}(x))_\sigma
    =
    \sigma^*(F^*_k(\iota_{\kpart}(x)))_{\id_k}
    \\
    & =
    F^*_k(\sigma^*(\iota_{\kpart}(x)))_{\id_k}
    =
    F^*_k(\iota_{\kpart}(x))_{\id_k}.
  \end{align*}
  We will also be ignoring higher-order variables throughout this proof intuition, with the
  exception of item~\ref{it:final} below.

  Assume we are trying to (non-agnostically) learn some (realizable) $k$-partite $k$-hypergraph
  $F\colon\Omega_1^k\to\{0,1\}$ under some $\mu\in\Pr(\Omega^{\kpart})$. When our algorithm $\cA'$
  receives an $([m],\ldots,[m])$-sample $(\rn{x},\rn{y})$ of the form $\rn{x}\sim\mu^m$ and
  $\rn{y}=F^*_m(\rn{x})$, we would like to pass it to $\cA$ and leverage the agnostic $k$-PAC
  learning guarantee of $\cA$ to claim $\cA'$ is a $k$-PAC learner. However, several complications
  arise:
  \begin{enumerate}
  \item\label{it:bot} If we try to pass all $mk$ points $(\rn{x}_{i\mapsto j})_{i\in[k], j\in[m]}$,
    where $i\mapsto j$ denotes the unique function $\{i\}\to [m]$ that maps $i$ to $j$
    ($\rn{x}_{i\mapsto j}$ is the $j$th point in the $i$th part), then we only know information
    about $k$-sets that are transversal to the $k$-partition, that is, our $\rn{y}$ has edge
    information only about $k$-sets of the form $\{\rn{x}_{1\mapsto j_1},\ldots,\rn{x}_{k\mapsto
      j_k}\}$ for some $j\in[m]^k$; we are missing information about $k$-sets that contain more than
    one point in one of the parts, that is, $k$-sets of the form $\{\rn{x}_{i_1\mapsto
      j_1},\ldots,\rn{x}_{i_k\mapsto j_k}\}$ where the $(i_t,j_t)$ are pairwise distinct, but the
    $i_t$ are not pairwise distinct.

    The way we are going to address this is to use the neutral symbol $\bot$ to mean ``I
    don't know'' and label all the missing tuples with $\bot$.
  \item\label{it:permute} Naively using the $\bot$ trick of item~\ref{it:bot} amounts to defining a
    (non-partite) $[mk]$-sample $(\rn{\widetilde{x}},\rn{\widetilde{y}})$ by
    \begin{align*}
      \rn{\widetilde{x}}_v & \df \rn{x}_{\ceil{v/k}\mapsto ((v-1)\bmod k) + 1}
      \qquad (v\in[mk]),
    \end{align*}
    and for every $\alpha\in([mk])_k$, letting
    \begin{align*}
      \rn{\widetilde{y}}_\alpha
      & \df
      \begin{dcases*}
        \rn{y}_{(((\alpha(\tau(t))-1)\bmod k) + 1)_{t=1}^k}, & if $t\mapsto \ceil{\alpha(\tau(t))/k}$
        is increasing for $\tau\in S_k$,
        \\
        \bot, & otherwise,
      \end{dcases*}
    \end{align*}
    and running $\cA'(\rn{\widetilde{x}},\rn{\widetilde{y}})$.

    For simplicity, in this proof intuition, we assume that $F$ never outputs $\bot$, so that the
    first case in the above will never produce $\bot$. However, we point out that this will not be
    an issue in the actual proof: $F$ could output $\bot$ and the proof still goes through.

    To leverage the agnostic $k$-PAC learning guarantee, we need to claim that
    $(\rn{\widetilde{x}},\rn{\widetilde{y}})$ is distributed as
    $(\rn{z},\widehat{F}^*_{mk}(\rn{z},\rn{z'}))$ where
    $(\rn{z},\rn{z'})\sim(\widehat{\mu}\otimes\widehat{\mu'})^m$ for some \emph{non-partite}
    $\widehat{\mu}\in\Pr(\Omega)$ and $\widehat{\mu'}\in\Pr(\Omega')$ (where $\Omega'$ is some
    (non-partite) Borel template) and some $\widehat{F}\in\cF_k(\Omega\otimes\Omega',\{0,1\})$. By
    Remark~\ref{rmk:exchangeable}, if that is to hold, at the very least we need
    $(\rn{\widetilde{x}},\rn{\widetilde{y}})$ to be local and exchangeable (more precisely, the
    finite marginal of a local and exchangeable distribution). Unfortunately, this is obviously
    false: the first $m$ points of $\rn{x}$ are from part $1$, the next $m$ points are from part $2$
    and so on, which means that we are guaranteed that all ``I don't know'' values $\bot$ happen in
    a non-exchangeable manner: $\rn{y}_{(1,m+1,\ldots,(k-1)m+1)}$ is never $\bot$ and
    $\rn{y}_{(1,2,\ldots,k)}$ is always $\bot$ if $m > k$.

    This is where our partite algorithm $\cA$ needs to start using randomness: one way we can
    enforce exchangeability is by letting $\cA$ uniformly permute the points before passing them to
    $\cA'$, that is, if $\rn{\sigma}$ is picked uniformly at random in $S_{mk}$, then we run instead
    $\cA'(\rn{\sigma}^*(\rn{\widetilde{x}}),\rn{\sigma}^*(\rn{\widetilde{y}}))$. If now
    $(\rn{\sigma}^*(\rn{\widetilde{x}}),\rn{\sigma}^*(\rn{\widetilde{y}}))$ has a distribution of
    the form $(\rn{z},\widehat{F}^*_{mk}(\rn{z},\rn{z'}))$, then the agnostic $k$-PAC learning
    guarantee of $\cA'$ will translate to a $k$-PAC learning guarantee for $\cA'$ in expected value
    over the randomness $\rn{\sigma}$.
  \item\label{it:coinflip} If we naively use the trick of item~\ref{it:permute}, even though the
    distribution of $(\rn{\sigma}^*(\rn{\widetilde{x}}),\rn{\sigma}^*(\rn{\widetilde{y}}))$ is
    indeed exchangeable, it is not local as we are guaranteed that each part has exactly $m$ points:
    say, the event
    \begin{align*}
      \forall \alpha\in ([m(k-1)])_k, \rn{\sigma}^*(\rn{\widetilde{y}})_\alpha = \bot
    \end{align*}
    implies the event
    \begin{align*}
      \forall \alpha\in (\{m(k-1)+1,\ldots,m\})_k, \rn{\sigma}^*(\rn{\widetilde{y}})_\alpha = \bot,
    \end{align*}
    so these are not independent even though they concern marginals on disjoint sets.

    To address this, instead of producing an $[mk]$-sample from our $([m],\ldots,[m])$-sample along
    with an extra permutation randomness, we produce an $[m]$-sample with the help of even extra
    randomness. For each point $v\in[m]$, we first use an extra $\rn{u}_v$ picked uniformly at
    random in $[k]$ (independently from other $\rn{u}_{v'}$) that determines which part the $v$th
    point is going to come from, that is, we let instead
    \begin{align*}
      \rn{x}^{\rn{u}}_v & \df \rn{x}_{\rn{u}_v\mapsto v}
      \qquad (v\in[m]),
    \end{align*}
    and for every $\alpha\in([m])_k$, let
    \begin{align*}
      \rn{y}^{\rn{u}}_\alpha
      & \df
      \begin{dcases*}
        \rn{y}_{\alpha\comp\tau}, & if $t\mapsto\rn{u}_{\tau(t)}$
        is increasing for $\tau\in S_k$,
        \\
        \bot, & otherwise.
      \end{dcases*}
    \end{align*}
    We still permute the points afterwards by passing
    $(\rn{\sigma}^*(\rn{x}^{\rn{u}}),\rn{\sigma}^*(\rn{x}^{\rn{u}}))$ to $\cA$ for $\rn{\sigma}$
    picked uniformly at random in $S_m$ (independently from $\rn{u}$).
    
    (Note that with positive probability, we could be passing points all in the same part, which
    would mean that $\rn{y}$ is the ``I don't know'' value $\bot$ everywhere, but this will not be
    an issue as we will know the precise probability that this happens.)
  \item\label{it:agnostic} It turns out that the trick of item~\ref{it:coinflip} is enough to
    guarantee that $(\rn{\sigma}^*(\rn{x}^{\rn{u}}),\rn{\sigma}^*(\rn{y}^{\rn{u}}))$ is local and
    exchangeable, that is, it is of the form $(\rn{z},F^*_m(\rn{z},\rn{z'}))$ with
    $(\rn{z},\rn{z'})\sim(\widehat{\mu}\otimes\widehat{\mu'})^m$ for some
    $\widehat{\mu}\in\Pr(\Omega)$, some $\widehat{\mu'}\in\Pr(\Omega')$ (some Borel template
    $\Omega'$) and some $\widehat{F}\in\cF_k(\Omega\otimes\Omega',\{0,1\})$, but we need to
    understand exactly what such a $(\widehat{\mu},\widehat{\mu'},\widehat{F})$ looks like as we
    will need to leverage the agnostic $k$-PAC learning guarantee of $\cA$ for
    $(\widehat{\mu},\widehat{\mu'},\widehat{F})$ to give a $k$-PAC learning guarantee (in expected
    value over $(\rn{u},\rn{\sigma})$) for $\cA'$ for $\mu,F$.

    Note that since $\mu$ is in the partite setting, we potentially have $k$ different measures
    $(\mu_{\{i\}})_{i\in[k]}$ on each copy of $\Omega_1$ and the non-partite setting cannot simulate
    that with a single measure over $\Omega_1$ and no extra help, but let us consider first the
    simpler case when all $\mu_{\{i\}}$ are the same measure $\nu$. In this simple case, it is very
    easy to see that $\widehat{\mu}=\nu$, that is, the entries of $\rn{\sigma}^*(\rn{x}^{\rn{u}})$
    are i.i.d.\ distributed according to $\nu$. However, note that we cannot simply ignore $\Omega'$
    and take $\widehat{F}\in\cF_k(\Omega,\{0,1\})$ as, e.g., the non-partite version $F^{\kpart,-1}$
    of $F$ does not assign the ``I don't know'' value $\bot$ to any $k$-set; in fact, even worse:
    the same $k$-set can be assigned different values, depending on which parts the points from the
    $k$-set comes from. This is why we need the power of agnostic: we can let $\Omega'$ encode the
    parts by taking $\Omega'_1\df[k]$, let $\widehat{\mu'}$ be the uniform measure on $[k]$ and let
    $\widehat{F}$ be given by
    \begin{align*}
      & \!\!\!\!\!\!
      \widehat{F}((z_1,\ldots,z_k),(z'_1,\ldots,z'_k))
      \\
      & \df
      \begin{dcases*}
        F(z_{\tau(1)},\ldots,z_{\tau(k)}),
        & if $t\mapsto z'_{\tau(t)}$ is increasing for $\tau\in S_k$,
        \\
        \bot, & otherwise.
      \end{dcases*}
    \end{align*}
    One can then check that if $(\rn{z},\rn{z'})\sim(\widehat{\mu}\otimes\widehat{\mu'})^m$, then
    $(\rn{\sigma}^*(\rn{x}^{\rn{u}}),\rn{\sigma}^*(\rn{y}^{\rn{u}}))\sim (\rn{z},
    \widehat{F}^*_m(\rn{z},\rn{z'}))$.
  \item The only issue with item~\ref{it:agnostic} is that it heavily relied on the fact that all
    $\mu_{\{i\}}$ were the same measure. In the general case, these measures are potentially
    different and it is easy to see that the entries of $\rn{\sigma}^*(\rn{x}^{\rn{u}})$ are
    i.i.d.\ distributed according to the average $\widehat{\mu}\df(1/k)\sum_{i\in[k]}\mu_{\{i\}}$ of
    the measures. However, the definitions of $\widehat{F}$ and $\widehat{\mu'}$ are not so
    simple. For example, if the supports of the $\mu_{\{i\}}$ are pairwise disjoint, then once we
    sample $\rn{z}\sim\widehat{\mu}^m$, we immediately know which part each point belongs to.

    Instead, we want $\widehat{\mu}$ and $\widehat{F}$ to take into account the ``conditional
    probability that $\rn{z}_v$ comes from part $i$ given that $\rn{z}_v = z_v$'', which might be
    nonsensical if the condition has zero measure.

    This is where the Disintegration Theorem, Theorem~\ref{thm:disintegration}, comes into play: we
    define $\nu$ as the unique probability distribution on $\Omega_1\times[k]$ such that
    \begin{align*}
      \nu(W\times\{j\}) & \df \mu_{\{j\}}(W) \qquad (W\subseteq\Omega_1, j\in[k])
    \end{align*}
    and let $\rho\colon\Omega_1\to\Pr([k])$ be the Markov kernel given by
    Theorem~\ref{thm:disintegration} applied to $\nu$ and the projection
    $\pi\colon\Omega_1\times[k]\to\Omega_1$, so that intuitively $\rho(x)(\{j\})$ is the conditional
    probability that $\rn{j}=j$ given $\rn{x}=x$ when $(\rn{x},\rn{j})\sim\nu$. By noting that
    $\pi_*\nu = \widehat{\mu}$, it follows that $\rho(z)(\{j\})$ is the desired notion of
    ``conditional probability that $\rn{z}$ comes from part $j$ given $\rn{z}=z$''.

    Finally, to encode this within $\widehat{F}$ and $\widehat{\mu'}$, we take $\Omega'_1\df[0,1)$
      and $\widehat{\mu'}\df\lambda$ as the Lebesgue measure and set
    \begin{align*}
      & \!\!\!\!\!\!
      \widehat{F}((z_1,\ldots,z_k),(z'_1,\ldots,z'_k))
      \\
      & \df
      \begin{dcases*}
        F(z_{\tau(1)},\ldots,z_{\tau(k)}),
        & if $t\mapsto j_{z_{\tau(t)}}(z'_{\tau(t)})$ is increasing for $\tau\in S_k$,
        \\
        \bot, & otherwise,
      \end{dcases*}
    \end{align*}
    where $j_w(w')$ is the unique element of $[k]$ such that
    \begin{align*}
      \rho(w)([j_w(w')-1]) & \leq w' < \rho(w)([j_w(w')])
    \end{align*}
    that is, when we partition $[0,1)$ into intervals of measures $(\rho(w)(\{j\}))_{j=1}^k$, in
      this order, the point $w'$ is in the $j_w(w')$th of them.

    We will then check that if $(\rn{z},\rn{z'})\sim(\widehat{\mu}\otimes\widehat{\mu'})^m$, then
    $(\rn{\sigma}^*(\rn{x}^{\rn{u}}),\rn{\sigma}^*(\rn{y}^{\rn{u}}))\sim (\rn{z},
    \widehat{F}^*_m(\rn{z},\rn{z'}))$.
  \item So far our plan is to set
    \begin{align*}
      \cA'(x,y,\sigma,u) & \df \cA(\sigma^*(x^u),\sigma^*(y^u))^{\kpart}
    \end{align*}
    that is, we use the sources of randomness $\sigma\in S_m$ and $u\in[k]^m$ to make the
    construction of item~\ref{it:coinflip}, run $\cA$ on the result $(\sigma^*(x^u),\sigma^*(y^u))$ to
    obtain some $H$ and return the its partite version $H^{\kpart}$.

    One final ingredient is needed to leverage the agnostic $k$-PAC learnability guarantee of $\cA$
    for $(\widehat{F},\widehat{\mu},\widehat{\mu'})$ to yield a $k$-PAC learnability guarantee for
    $\cA'$ for $(F,\mu)$ in expected value over $(\rn{\sigma},\rn{u})$: we need to know how the total
    loss of $L_{\mu,F}(H^{\kpart})$ compares to the total loss
    $L_{\widehat{\mu},\widehat{\mu'},\widehat{F}}(H)$. In fact, we will need to know how these two
    quantities compare for every $H\in\cH$ not just the ones returned by the algorithm as the
    agnostic $k$-PAC learnability guarantee is not that the total loss is small with high
    probability, but rather that one gets a total loss that is very close to the best possible
    $H\in\cH$ with high probability.

    This is where we observe yet another property of our construction: if we sample
    $(\rn{z},\rn{z'})\sim(\widehat{\mu}\otimes\widehat{\mu'})^k$ and condition on the event
    \begin{align}\label{eq:condition}
      \widehat{F}(\rn{z},\rn{z'})\neq\bot,
    \end{align}
    then there must exist a unique permutation $\rn{\tau}$ in $S_k$ such that $t\mapsto
    j_{\rn{z}_{\rn{\tau}(t)}}(\rn{z'}_{\rn{\tau}(t)})$ is increasing, which means that the point
    \begin{align*}
      (\rn{\tau}^*(\rn{z}), \widehat{F}^*(\rn{\tau}^*(\rn{z},\rn{z'})))
    \end{align*}
    naturally corresponds to the partite point
    \begin{align}\label{eq:phiPhi}
      (\phi_k(\rn{\tau}^*(\rn{z})), \Phi_k(\widehat{F}^*_k(\rn{\tau}^*(\rn{z},\rn{z'})))),
    \end{align}
    where $\phi_k$ and $\Phi_k$ are given by~\eqref{eq:kpartbasics:phi}
    and~\eqref{eq:kpartbasics:Phi}, respectively. We will then check that the conditional
    distribution of~\eqref{eq:phiPhi} given the event~\eqref{eq:condition} is the same as the
    (unconditional) distribution of
    \begin{align*}
      (\rn{x}, F(\rn{x}))
    \end{align*}
    when $\rn{x}\sim\mu^1$. Since $\ell$ is symmetric, the permutation $\rn{\tau}$ does not affect
    it, so a direct consequence of the above is
    \begin{align}\label{eq:LL}
      L_{\widehat{\mu},\widehat{\mu'},\widehat{F},\ell}(H)
      & =
      (1-p)\cdot C_{\ell,\bot,F} + 
      p\cdot L_{\mu,F,\ell^{\kpart}}(H^{\kpart}),
    \end{align}
    where $p\df\PP_{\rn{z},\rn{z'}}[\widehat{F}^*_k(\rn{z},\rn{z'})\neq\bot]$ and
    \begin{align*}
      C_{\ell,\bot,F}
      & \df
      \EE_{(\rn{z},\rn{z'})\sim(\widehat{\mu}\otimes\widehat{\mu'})^k}[
        \ell_\bot(\rn{z}) \given
        \widehat{F}(\rn{z},\rn{z'})
        =
        \bot
      ],
    \end{align*}
    and $\ell_\bot$ is as in the definition of neutral symbol (see~\eqref{eq:neutsymbellbot} in
    Definition~\ref{def:neutsymb:nonpart}). It is not too hard to compute $p=k^{-k}\cdot k!$ (this
    is not the same as~\eqref{eq:kpart2:p} because of item~\ref{it:final} below).

    To put everything together: since $F$ is realizable for $\mu$, a direct consequence
    of~\eqref{eq:LL} is that
    \begin{align}\label{eq:infLL}
      \inf_{H\in\cH} L_{\widehat{\mu},\widehat{\mu'},\widehat{F},\ell}(H)
      & =
      (1-p)\cdot C_{\ell,\bot,F},
    \end{align}
    so the fact that $\cA$ is an agnostic $k$-PAC learner guarantees that if
    \begin{align*}
      m
      \geq
      m^{\agPAC}_{\cH,\ell,\cA}\left(\frac{p\cdot\epsilon}{2},\widetilde{\delta}_\ell(\epsilon,\delta)\right)
    \end{align*}
    is an integer, $\rn{x}\sim\mu^m$ and $\rn{\sigma}$ and $\rn{u}$ are picked uniformly and
    independently (and independently from $\rn{x}$) in $S_m$ and $[k]^m$, respectively, then since
    $(\rn{\sigma}^*(\rn{x}^{\rn{u}}),\rn{\sigma}^*(F^*_m(\rn{x})^{\rn{u}}))$ has the same
    distribution as $(\rn{z},\widehat{F}^*_m(\rn{z},\rn{z'}))$ for
    $(\rn{z},\rn{z'})\sim(\widehat{\mu}\otimes\widehat{\mu'})^m$, by~\eqref{eq:LL}
    and~\eqref{eq:infLL}, we have
    \begin{align*}
      & \!\!\!\!\!\!
      \PP_{\rn{x},\rn{\sigma},\rn{u}}\biggl[
        L_{\mu,F,\ell^{\kpart}}(\cA'(\rn{x},\rn{\sigma},\rn{u}))
        \leq\frac{\epsilon}{2}
        \biggr]
      \\
      & =
      \PP_{\rn{z},\rn{z'}}\biggl[
        L_{\widehat{\mu},\widehat{\mu'},\widehat{F},\ell}\Bigl(
        \cA(\rn{z},\widehat{F}^*_m(\rn{z},\rn{z'}))
        \Bigl)
        \leq
        (1-p)\cdot C_{\ell,\bot,F}
        +
        \frac{p\cdot\epsilon}{2}
        \biggr]
      \\
      & \geq 1 - \widetilde{\delta}_\ell(\epsilon,\delta).
    \end{align*}

    With a simple Markov's Inequality plus boundedness argument, and by our choice of
    $\widetilde{\delta}_\ell(\epsilon,\delta)$, the above then implies
    \begin{align*}
      \PP_{\rn{x}}\biggl[
        \EE_{\rn{\sigma},\rn{u}}\Bigl[
          L_{\mu,F,\ell^{\kpart}}\bigl(
          \cA'(\rn{x},\rn{\sigma},\rn{u})
          \bigr)
          \Bigr]
        \leq\epsilon
        \biggr]
      & \geq 1 - \delta.
    \end{align*}
  \item\label{it:final} Five final complications were not covered in this proof intuition:

    First, the adversary's chosen $k$-partite hypothesis $F$ could also output the neutral symbol
    $\bot$. It turns out that this is not a problem if we keep track of when $\bot$ was produced
    from $F$ and when it was produced artificially as an ``I don't know'' symbol.

    Second, we could be dealing with not necessarily symmetric hypotheses (e.g., directed
    $k$-hypergraphs as opposed to $k$-hypergraphs). This is not a big deal, we will only need to be
    slightly more careful with the equivariance of our definitions. In particular, this will require
    one extra coordinate $\rn{w}$ on the non-partite agnostic side that will enforce us to say the
    ``I don't know'' value if the $k$-tuple is not in increasing order, thus the value of $p$ will
    get slightly worse: $p=k^{-k}$.

    Third, the proposition only assumes that $\cH$ is agnostically $k$-PAC learnable with
    randomness, that is, the non-partite agnostic $k$-PAC learner $\cA$ also uses randomness. Again,
    this is not a big deal since our partite $\cA'$ will already use randomness anyways, so it can
    also receive the randomness of $\cA$ (along with the extra needed $\rn{\sigma}$ and $\rn{u}$) and
    pass it directly to $\cA$.

    Fourth, our hypothesis class is not necessarily of rank at most $1$. This means that all ideas
    above need to be extended to cover higher-order variables. The extension is completely
    analogous, but it gives a slightly worse value for $p$:
    \begin{align}\label{eq:kpart2:pnonag}
      p & = \prod_{i=1}^k \binom{k}{i}^{-\binom{k}{i}}.
    \end{align}

    Fifth, the proposition actually wants agnostic $k$-PAC learnability of $\cH^{\kpart}$ instead of
    $k$-PAC learnability of $\cH^{\kpart}$, which means that the adversary does not play partite
    $(\mu,F)$, but rather partite $(\mu,\mu',F)$. It turns out that we can treat the extra variables
    coming from $\mu'$ in the same way as we treat those from $\mu$ and the analogue
    of~\eqref{eq:LL} will be strong enough to carry out the proof even without the realizability
    assumption. However, since the trick will need to be applied twice, once for $\mu$ and once for
    $\mu'$, this will make the value of $p$ the square of~\eqref{eq:kpart2:pnonag}:
    \begin{align*}
      p & = \prod_{i=1}^k \binom{k}{i}^{-2\binom{k}{i}},
    \end{align*}
    which finally justifies~\eqref{eq:kpart2:p}.\qedhere
  \end{enumerate}
\end{proofint}

\begin{remark}\label{rmk:sepexch->exch}
  Before we start the formal proof, let us point out what is going on from the point of view of
  exchangeability theory. What we are showing is that for every local separately exchangeable
  distribution $(\rn{x},\rn{y})$ on
  $\cE_{\NN_+,\ldots,\NN_+}(\Omega^{\kpart})\times\Lambda^{\NN_+^k}$, there exists a local
  exchangeable distribution $(\rn{\widehat{x}},\rn{\widehat{y}})$ on
  $\cE_{\NN_+}(\Omega)\times\Lambda^{(\NN_+)_k}$ and functions
  $h_m\colon\cE_m(\Omega^{\kpart})\times\Lambda^{[m]^k}\times
  [R(m)]\to\cE_m(\Omega)\times\Lambda^{([m])_k}$ ($m\in\NN$) and where
  \begin{align*}
    R(m) & \df m!\cdot\prod_{i=1}^k\binom{k}{i}^{2\binom{m}{i}}
  \end{align*}
  such that
  \begin{enumerate}
  \item if $\rn{b}$ is picked uniformly in $[R(m)]$, independently from $(\rn{x},\rn{y})$, then
    \begin{align*}
      h_m((\rn{x},\rn{y})\rest_m, \rn{b}) & \sim (\rn{\widehat{x}},\rn{\widehat{y}})\rest_m,
    \end{align*}
    where all $\place\rest_m$ denote the natural projections.
  \item we have
    \begin{align*}
      \PP[\rn{\widehat{y}}_{(1,\ldots,k)} \neq \bot]
      & =
      p\cdot\PP[\rn{y}_{(1,\ldots,1)}\neq\bot],
    \end{align*}
    where
    \begin{align*}
      p
      \df
      \prod_{i=1}^k \binom{k}{i}^{-2\binom{k}{i}},
    \end{align*}
    i.e., we are not allowed to cheat by putting $\bot$ everywhere with probability $1$.
  \end{enumerate}
\end{remark}

Finally, we also point out that when the hypothesis class $\cH$ is known to have rank at most $r$,
then clearly there is no need to randomize variables indexed by sets of size larger than $r$, which
immediately reduces the amount of randomness needed by the learning algorithm. Furthermore, we
believe that if $\cH$ has rank at most $1$, then there is also no need for the order randomization,
reducing the amount of randomness by a further $m!$ factor. However, even though the argument for
rank at most $1$ without the order randomness has the same ideas as the main argument, the technical
details are sufficiently different from the main argument that it would require carefully repeating
all calculations, so we do not do it here.

\begin{proof}
  Without loss of generality, we assume $\lVert\ell\rVert_\infty > 0$ (otherwise the result is
  trivial).
  
  By Proposition~\ref{prop:ho}, we may assume that $\cA$ does not depend on coordinates indexed by
  sets of size greater than $k$ and to simplify notation, throughout this proof we will simply
  ignore all such coordinates as any arbitrary (but measurable) definition of them is easily checked
  to work. We will also use the notation $r(m,k)\df\{C\in r(m)\mid\lvert C\rvert\leq k\}$.

  We start with some technicalities. First, since
  \begin{align*}
    R_{\cA'}(m)
    & =
    R_\cA(m)\cdot m!\cdot\prod_{i=1}^k \binom{k}{i}^{2\binom{m}{i}}
    =
    R_\cA(m)\cdot m!\cdot\left(\prod_{C\in r(m,k)} \binom{k}{\lvert C\rvert}\right)^2
  \end{align*}
  for each $m\in\NN$, by using an appropriate (fixed) bijection, we may assume that $\cA'$ is of the
  form
  \begin{multline*}
    \cA'\colon
    \bigcup_{m\in\NN}
    \left(
    \cE_m(\Omega^{\kpart})\times(\Lambda^{S_k})^{[m]^k}\times[R_\cA(m)]
    \times S_m
    \times\!\!\prod_{C\in r(m,k)} \binom{[k]}{\lvert C\rvert}
    \times\!\!\prod_{C\in r(m,k)} \binom{[k]}{\lvert C\rvert}
    \right)
    \\
    \longrightarrow
    \cH^{\kpart}.
  \end{multline*}

  Second, to show that $\cA'$ is a randomized agnostic $k$-PAC learner for $\cH^{\kpart}$ with
  respect to $\ell^{\kpart}$, we need to show that~\eqref{eq:partagkPAC} holds for every
  $\epsilon,\delta\in(0,1)$, every $\mu\in\Pr(\Omega^{\kpart})$, every Borel $k$-partite template
  $\Omega'$, every $\mu'\in\Pr(\Omega')$, every
  $F\in\cF_k(\Omega^{\kpart}\otimes\Omega',\Lambda^{S_k})$ and every integer $m\geq
  m^{\agPACr}_{\cH^{\kpart},\ell^{\kpart},\cA'}(\epsilon,\delta)$. By using Borel-isomorphisms, we
  may suppose without loss of generality that $\Omega'_{C_1} = \Omega'_{C_2}$ whenever $\lvert
  C_1\rvert=\lvert C_2\rvert$, which in particular means that $\Omega' = \widehat{\Omega}^{\kpart}$
  for some Borel template $\widehat{\Omega}$. To ease notation, let us use $\Omega'$ for
  $\widehat{\Omega}$, so $(\Omega')^{\kpart}$ is the previous $\Omega'$ and we have
  $\mu'\in\Pr((\Omega')^{\kpart})$, $(\Omega\otimes\Omega')^{\kpart} =
  \Omega^{\kpart}\otimes(\Omega')^{\kpart}$ and
  $F\in\cF_k((\Omega\otimes\Omega')^{\kpart},\Lambda^{S_k})$.

  Finally, let $\bot\in\Lambda$ be a neutral symbol for $\ell$ and let
  $\ell_\bot\colon\cE_k(\Omega)\to\RR_{\geq 0}$ be the corresponding function given
  by~\eqref{eq:neutsymbellbot} in Definition~\ref{def:neutsymb:nonpart}.

  Let us now give an informal description of the proof strategy: the algorithm $\cA'$ will take an
  input
  \begin{align*}
    (x,y,b,\sigma,U,U') & \in
    \cE_m(\Omega^{\kpart})\times(\Lambda^{S_k})^{m^k}\times[R_\cA(m)]
    \times S_m
    \times\!\!\prod_{C\in r(m,k)} \binom{[k]}{\lvert C\rvert}
    \times\!\!\prod_{C\in r(m,k)} \binom{[k]}{\lvert C\rvert}
  \end{align*}
  pass the source of randomness $b$ directly to $\cA$, but use $(\sigma,U,U')$ to decide which
  $[m]$-sample $(x^{\sigma,U},y^{\sigma,U,U'})$ to pass to $\cA$. The idea then is to show that when
  $\cA'$ is attempting to agnostically learn
  $F\in\cF_k((\Omega\otimes\Omega')^{\kpart},\Lambda^{S_k})$ under $\mu\in\Pr(\Omega^{\kpart})$ and
  $\mu'\in\Pr((\Omega')^{\kpart})$, $\cA'$ is essentially simulating how $\cA$ would agnostically learn
  some $\widehat{F}\in\cF_k(\Omega\otimes\Omega'',\Lambda')$ under some
  $\widehat{\mu}\in\Pr(\Omega)$ and some $\widehat{\mu''}\in\Pr(\Omega'')$ for some suitably defined
  $\widehat{F}$, $\Omega''$, $\widehat{\mu}$ and $\widehat{\mu''}$. This means that we will want
  these objects to have the following properties:
  \begin{enumerate}[label={\Roman*.}, ref={(\Roman*)}]
  \item\label{it:kpart2:loss} The total losses are comparable: there exists a constant
    $C_{\ell,\bot,F}\in\RR_{\geq 0}$ such that for every $H\in\cH$, we have
    \begin{align*}
      L_{\widehat{\mu},\widehat{\mu''},\widehat{F},\ell}(H)
      & =
      (1-p)\cdot C_{\ell,\bot,F} +
      p\cdot L_{\mu,\mu',F,\ell^{\kpart}}(H^{\kpart}),
    \end{align*}
    where $p\in(0,1)$ is given by~\eqref{eq:kpart2:p}.
  \item\label{it:kpart2:dist} The distribution of $[m]$-samples is correct: if $(\rn{x},
    \rn{x'})\sim(\mu\otimes\mu')^m$ and $(\rn{\sigma},\rn{U},\rn{U'})$ is picked uniformly at random
    in $S_m\times\prod_{C\in r(m,k)} \binom{[k]}{\lvert C\rvert}\times\prod_{C\in r(m,k)}
    \binom{[k]}{\lvert C\rvert}$, independently from $\rn{x}$ and $\rn{x'}$; and
    $(\rn{\widehat{x}},\rn{\widehat{x''}})\sim (\widehat{\mu}\otimes\widehat{\mu''})^m$, then
    \begin{align*}
      (\pi_{m,k}(\rn{x}^{\rn{\sigma},\rn{U}}), F^*_m(\rn{x},\rn{x'})^{\rn{\sigma},\rn{U},\rn{U'}})
      & \sim
      (\pi_{m,k}(\rn{\widehat{x}}), \widehat{F}^*_m(\rn{\widehat{x}},\rn{\widehat{x''}})),
    \end{align*}
    where $\pi_{m,k}$ is the projection onto the coordinates indexed by $r(m,k)$.
  \end{enumerate}

  Let us now formalize the ideas above. We start by letting $\Upsilon$ be the Borel template given
  by $\Upsilon_i\df[0,1)$, equipped with the usual Borel $\sigma$-algebra and we let
    $\Omega''\df\Omega'\otimes\Upsilon\otimes\Upsilon\otimes\Upsilon$. It will also be convenient to
    have the Borel templates $\Sigma$ and $\Sigma'$ given by
    \begin{align*}
      \Sigma_i & \df \Omega_i\otimes\left[\binom{k}{i}\right] \qquad (i\in[k]),\\
      \Sigma'_i & \df \Omega'_i\otimes\left[\binom{k}{i}\right] \qquad (i\in[k]),
    \end{align*}
    where $[n]$ is equipped with the discrete $\sigma$-algebra. (For well-definedness, when $i > k$,
    we set $\Sigma_i\df\Omega_i\otimes[1]$ and $\Sigma'_i\df\Omega'_i\otimes[1]$.)

  For each $i\in[k]$, let us enumerate the elements of $\binom{[k]}{i}$ as
  $B^i_1,\ldots,B^i_{\binom{k}{i}}$ and recalling that $\Omega^{\kpart}_C=\Omega_{\lvert C\rvert}$
  ($C\in r(k)$), we let $\nu\in\Pr(\Sigma)$ and $\nu'\in\Pr(\Sigma')$ be the unique probability
  templates such that
  \begin{align}\label{eq:nunu'}
    \nu_i(W\times J) & = \binom{k}{i}^{-1}\sum_{j\in J} \mu_{B^i_j}(W), &
    \nu'_i(W'\times J) & = \binom{k}{i}^{-1}\sum_{j\in J} \mu'_{B^i_j}(W')
  \end{align}
  for every $i\in[k]$, every measurable $W\subseteq\Omega_i$, every measurable
  $W'\subseteq\Omega'_i$ and every $J\subseteq[\binom{k}{i}]$. (For well-definedness, when $i > k$,
  we define $\nu_i$ and $\nu'_i$ arbitrarily.)

  We also let $\pi_i\colon\Sigma_i\to\Omega_i$ and $\pi'_i\colon\Sigma'_i\to\Omega'_i$ be the
  natural projections and let $\widehat{\mu}\in\Pr(\Omega)$ and $\widehat{\mu'}\in\Pr(\Omega')$ be
  the probability templates obtained as the pushforward measures:
  \begin{align*}
    \widehat{\mu}_i & \df (\pi_i)_*(\nu_i) \qquad (i\in\NN_+),\\
    \widehat{\mu'}_i & \df (\pi'_i)_*(\nu_i) \qquad (i\in\NN_+).
  \end{align*}
  Equivalently, we have
  \begin{align}\label{eq:kpart2:widehatmu}
    \widehat{\mu}_i(W) & = \binom{k}{i}^{-1}\cdot\sum_{j=1}^{\binom{k}{i}} \mu_{B^i_j}(W), &
    \widehat{\mu'}_i(W') & = \binom{k}{i}^{-1}\cdot\sum_{j=1}^{\binom{k}{i}} \mu_{B^i_j}(W')
  \end{align}
  for every $i\in[k]$, every measurable $W\subseteq\Omega_i$ and every measurable
  $W'\subseteq\Omega'_i$.

  Let $\lambda\in\Pr(\Upsilon)$ denote the probability template that is the Lebesgue measure on
  $[0,1)$ (which we also denote by $\lambda$ by abuse of notation) in all coordinates and recalling
    that $\Omega''=\Omega'\otimes\Upsilon\otimes\Upsilon\otimes\Upsilon$, let
    $\widehat{\mu''}\in\Pr(\Omega'')$ be given by
    $\widehat{\mu''}\df\widehat{\mu'}\otimes\lambda\otimes\lambda\otimes\lambda$.

  For $i\in[k]$, $\theta\in\Pr([\binom{k}{i}])$ and $j\in[\binom{k}{i}]$, we define the interval
  \begin{align*}
    K^\theta_j & \df \Bigl[\theta\bigl([j-1]\bigr), \theta\bigl([j]\bigr)\Bigr)
  \end{align*}
  so that $(K^\theta_j \mid j\in[\binom{k}{i}])$ is a measurable partition of $[0,1)$ with
  \begin{align}\label{eq:lambdaK}
    \lambda(K^\theta_j) & = \theta(\{j\}) \qquad\left(j\in\left[\binom{k}{i}\right]\right).
  \end{align}
  For $z\in [0,1)$, we let $j_\theta(z)\in[\binom{k}{i}]$ be the unique integer with $z\in
    K^\theta_{j_\theta(z)}$.

  Given $\theta\in\prod_{C\in r(m,k)}\Pr([\binom{k}{\lvert C\rvert}])$ and $z\in\cE_m(\Upsilon)$, we
  let $j_\theta(z)\in\prod_{C\in r(m,k)}[\binom{k}{\lvert C\rvert}]$ be given by
  \begin{align*}
    j_\theta(z)_C
    & \df
    j_{\theta_C}(z_C)
    \qquad (C\in r(m,k)).
  \end{align*}

  For $\tau\in S_k$, we let
  \begin{align*}
    \cJ_\tau
    & \df
    \left\{j\in\prod_{C\in r(k)} \left[\binom{k}{\lvert C\rvert}\right]
    \;\middle\vert\;
    \forall C\in r(k), B^{\lvert C\rvert}_{j_C} = \tau^{-1}(C)
    \right\}.
  \end{align*}
  By inspecting the condition above when $\lvert C\rvert=1$, one clearly sees that the $\cJ_\tau$
  are pairwise disjoint and $\lvert\cJ_\tau\rvert=1$.

  Recalling that $\widehat{\mu}_i = (\pi_i)_*(\nu_i)$ for $i\in[k]$, where
  $\pi_i\colon\Sigma_i\to\Omega_i$ is the projection, by Theorem~\ref{thm:disintegration}, there
  exists a Markov kernel $\rho_i\colon\Omega_i\to\Pr(\Sigma_i)$ such that the set
  \begin{equation}\label{eq:Ri}
    \begin{aligned}
      R_i
      & \df
      \{x\in\Omega_i \mid \rho_i(x)(\pi_i^{-1}(x)) = 1\}
      \\
      & =
      \left\{x\in\Omega_i \;\middle\vert\;
      \rho_i(x)\left(\left\{(x,j) \middle\vert\; j\in \left[\binom{k}{i}\right]\right\}\right)
      = 1\right\}
    \end{aligned}
  \end{equation}
  has $\widehat{\mu}_i$-measure $1$ and such that for every measurable function
  $\psi\colon\Sigma_i\to[0,\infty]$, we have
  \begin{align*}
    \int_{X_i\times[\binom{k}{i}]} \psi(x,j) \ d\nu_i(x,j)
    & =
    \int_{R_i} \int_{\pi_i^{-1}(x)} \psi(x',j)\ d\rho(x)(x',j)\ d\widehat{\mu}_i(x).
  \end{align*}
  In particular, if $V\subseteq X_i$ is measurable and $J\subseteq[\binom{k}{i}]$, then
  \begin{align}\label{eq:disintegrated}
    \nu_i(V\times J)
    & =
    \int_{V\cap R_i} \rho(x)(X_i\times J)\ d\widehat{\mu}_i(x).
  \end{align}

  Note that for $x\in R_i$, since
  \begin{align*}
    \pi_i^{-1}(x) & = \left\{(x,j) \;\middle\vert\; j\in\left[\binom{k}{i}\right]\right\}
  \end{align*}
  has $\rho_i(x)$-measure $1$, we can define a probability measure $\eta_i(x)\in\Pr([\binom{k}{i}])$
  by
  \begin{align}\label{eq:etai}
    \eta_i(x)(J) & \df \rho_i(x)(\{(x,j) \mid j\in J\}).
  \end{align}
  We also define $\eta_i(x)\in\Pr([\binom{k}{i}])$ when $x\in\Omega_i\setminus R_i$ arbitrarily (but
  measurably in $x$, say, make it the Dirac delta concentrated on $1$).

  For $x\in\cE_k(\Omega)$, we let $\eta(x)\in\prod_{C\in r(k)}\Pr([\binom{k}{\lvert C\rvert}])$ be
  given by
  \begin{align*}
    \eta(x)_C & \df \eta_{\lvert C\rvert}(x_C)
    \qquad (C\in r(k)).
  \end{align*}

  We also define $(R'_i,\rho'_i,\eta'_i)$ analogously in terms of $\widehat{\mu}'_i$ and $\pi'_i$
  and define $\eta'$ analogously from the $\eta'_i$.

  For $w\in\cE_m(\Upsilon)$, with pairwise distinct coordinates, we let $\sigma(w)\in S_m$ be the
  unique permutation such that
  \begin{align*}
    w_{\{\sigma(w)(1)\}} < w_{\{\sigma(w)(2)\}} < \cdots < w_{\{\sigma(w)(m)\}}.
  \end{align*}
  We also define $\sigma(w)\in S_m$ arbitrarily (but measurably) when $w$ has repeated coordinates
  (note that this is set of $\lambda^m$-measure zero).

  Recalling that $\Omega''=\Omega'\otimes\Upsilon\otimes\Upsilon\otimes\Upsilon$, we define
  $\widehat{F}\in\cF_k(\Omega\otimes\Omega'',\Lambda')$ by
  \begin{align*}
    \widehat{F}(\widehat{x},\widehat{x'},z,z',w)
    & \df
    \begin{dcases*}
      F(\phi_k(\sigma(w)^*(\widehat{x},\widehat{x'})))_{\sigma(w)^{-1}},
      & if $j_{\eta(\widehat{x})}(z)=j_{\eta(\widehat{x'})}(z')\in\cJ_{\sigma(w)}$,
      \\
      \bot, & otherwise,
    \end{dcases*}
  \end{align*}
  for every $\widehat{x}\in\cE_k(\Omega)$, every $\widehat{x'}\in\cE_k(\Omega')$ and every
  $z,z',w\in\cE_k(\Upsilon)$, where
  $\phi_k\colon\cE_k(\Omega\otimes\Omega')\to\cE_1((\Omega\otimes\Omega')^{\kpart})$ is given
  by~\eqref{eq:kpartbasics:phi}.

  Given $\tau\in S_m$ and $\alpha\in([m])_k$, we let
  \begin{align}\label{eq:taualpha}
    \tau_\alpha
    & \df
    \alpha^{-1}\comp\tau\comp\iota_{\im(\tau^{-1}\comp\alpha),m},
  \end{align}
  where for $C\subseteq [m]$, $\iota_{C,m}$ is the unique increasing function $[\lvert
    C\rvert]\to[m]$ with $\im(\iota_{C,m})=C$. Equivalently, $\tau_\alpha$ is the unique permutation
  in $S_k$ such that $\tau^{-1}\comp\alpha\comp\tau_\alpha$ is increasing (as it is equal to
  $\iota_{\im(\tau^{-1}\comp\alpha),m}$).

  \medskip

  Let us now work toward proving the promised items~\ref{it:kpart2:loss}
  and~\ref{it:kpart2:dist}. The former item is shown in Claim~\ref{clm:dist}\ref{clm:dist:loss} and
  the latter item is shown as the combination of Claim~\ref{clm:dist}\ref{clm:dist:tilde} and
  Claim~\ref{clm:dist2}\ref{clm:dist2:sample}. Our first claim concerns several equivariance
  properties that our definitions satisfy.

  \begin{claim}\label{clm:equiv}
    The following hold for every $(\widehat{x},\widehat{x'},z,z',w)\in
    \cE_m(\Omega\otimes\Omega'\otimes\Upsilon\otimes\Upsilon\otimes\Upsilon)$:
    \begin{enumerate}
    \item\label{clm:equiv:j} For $\alpha\colon [m']\to[m]$ injective, we have
      \begin{align*}
        j_{\eta(\alpha^*(\widehat{x}))}(\alpha^*(z)) & = \alpha^*(j_{\eta(\widehat{x})}(z)),\\
        j_{\eta'(\alpha^*(\widehat{x'}))}(\alpha^*(z')) & = \alpha^*(j_{\eta(\widehat{x'})}(z')),
      \end{align*}
      where for $t\in\prod_{C\in r(m,k)} [\binom{k}{\lvert C\rvert}]$, the element
      $\alpha^*(t)\in\prod_{C\in r(m',k)} [\binom{k}{\lvert C\rvert}]$ is given by
      \begin{align*}
        \alpha^*(t)_C & \df t_{\alpha(C)}\qquad (C\in r(m',k)).
      \end{align*}
    \item\label{clm:equiv:cJ} For every $j\in\prod_{C\in r(k)} [\binom{k}{\lvert C\rvert}]$ and
      every $\tau,\gamma\in S_k$, we have
      \begin{align*}
        \tau^*(j)\in\cJ_\gamma
        & \iff
        j\in\cJ_{\tau\comp\gamma}.
      \end{align*}
    \item\label{clm:equiv:w} For $\alpha\colon [m']\to[m]$ injective, if all coordinates of $w$ are
      pairwise distinct, then
      \begin{align*}
        \sigma(\alpha^*(w)) & = \sigma(w)_\alpha,
      \end{align*}
      where
      $\sigma(w)_\alpha\df\alpha^{-1}\comp\sigma(w)\comp\iota_{\im(\sigma(w)^{-1}\comp\alpha),m}$ is
      given by~\eqref{eq:taualpha}. In particular, if $m=m'$, then
      \begin{align*}
        \sigma(\alpha^*(w)) & = \alpha^{-1}\comp\sigma(w).
      \end{align*}
    \item\label{clm:equiv:jcJw} If $m=k$, all coordinates of $w$ are pairwise distinct and $\tau\in
      S_k$, then
      \begin{align*}
        j_{\eta(\tau^*(\widehat{x}))}(\tau^*(z))=j_{\eta'(\tau^*(\widehat{x'}))}(\tau^*(z'))\in\cJ_{\sigma(\tau^*(w))}
        & \iff
        j_{\eta(\widehat{x})}(z)=j_{\eta'(\widehat{x'})}(z')\in\cJ_{\sigma(w)}.
      \end{align*}
    \item\label{clm:equiv:widehatF*k} If $m=k$ and all coordinates of $w$ are pairwise distinct,
      then
      \begin{align*}
        \widehat{F}^*_k(\widehat{x},\widehat{x'},z,z',w)
        & =
        \begin{dcases*}
          (\sigma(w)^{-1})^*(F(\phi_k(\sigma(w)^*(\widehat{x},\widehat{x'})))),
          & if $j_{\eta(\widehat{x})}(z)=j_{\eta(\widehat{x'})}(z')\in\cJ_{\sigma(w)}$,
          \\
          \bot^{S_k}, & otherwise,
        \end{dcases*}
      \end{align*}
      where in the second case $\bot^{S_k}$ denotes the element of $\Lambda^{S_k}$ that is $\bot$ everywhere.
    \end{enumerate}
  \end{claim}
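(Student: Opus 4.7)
The plan is to prove each item by direct computation from definitions, exploiting the fact that all the constructions in question are defined coordinate-wise: the maps $\alpha^*$ on $\cE_m(\Omega)$ and on the index set $\prod_{C\in r(m,k)}[\binom{k}{|C|}]$ both act by relabeling coordinates via $C\mapsto\alpha(C)$, and $\eta$, $\eta'$, and $j_\theta$ are defined pointwise in each coordinate.

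For item~\ref{clm:equiv:j}, I would simply unpack both sides on an arbitrary $C\in r(m',k)$. The left-hand side evaluates to $j_{\eta_{|C|}((\alpha^*(\widehat{x}))_C)}((\alpha^*(z))_C) = j_{\eta_{|C|}(\widehat{x}_{\alpha(C)})}(z_{\alpha(C)})$, while the right-hand side equals $j_{\eta(\widehat{x})}(z)_{\alpha(C)} = j_{\eta_{|\alpha(C)|}(\widehat{x}_{\alpha(C)})}(z_{\alpha(C)})$. Since $\alpha$ is injective we have $|\alpha(C)|=|C|$, and the two agree. The analogous statement for $\widehat{x'}$ is identical. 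For item~\ref{clm:equiv:cJ}, I unfold $\cJ_\gamma$: $\tau^*(j)\in\cJ_\gamma$ says that for every $C\in r(k)$, $B^{|C|}_{j_{\tau(C)}}=\gamma^{-1}(C)$, and reindexing $C'=\tau(C)$ this becomes $B^{|C'|}_{j_{C'}}=\gamma^{-1}(\tau^{-1}(C'))=(\tau\comp\gamma)^{-1}(C')$, i.e.\ $j\in\cJ_{\tau\comp\gamma}$.

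The main obstacle, and most delicate part, is item~\ref{clm:equiv:w}. Let $\tau\df\sigma(w)$, so $w_{\{\tau(1)\}}<\cdots<w_{\{\tau(m)\}}$, and set $C\df\im(\alpha)$. The sorting permutation $\pi\df\sigma(\alpha^*(w))$ is characterized by the property that $i\mapsto(\alpha^*(w))_{\{\pi(i)\}}=w_{\{\alpha(\pi(i))\}}$ is increasing, so $\alpha\comp\pi\colon[m']\to[m]$ must enumerate $C$ in increasing order of $w_{\{\place\}}$. Equivalently, $\tau^{-1}\comp\alpha\comp\pi$ is the increasing enumeration of $\tau^{-1}(C)=\im(\tau^{-1}\comp\alpha)$, which is exactly $\iota_{\im(\tau^{-1}\comp\alpha),m}$. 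Solving yields $\pi=\alpha^{-1}\comp\tau\comp\iota_{\im(\tau^{-1}\comp\alpha),m}=\tau_\alpha$ by the definition~\eqref{eq:taualpha}, and specialization to $m=m'$ gives the ``In particular'' part since then $\im(\tau^{-1}\comp\alpha)=[m]$ and $\iota_{[m],m}=\id$.

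Item~\ref{clm:equiv:jcJw} now follows by combining~\ref{clm:equiv:j},~\ref{clm:equiv:cJ}, and~\ref{clm:equiv:w}: applying~\ref{clm:equiv:j} with $\alpha=\tau\in S_k$ rewrites the left-hand conditions as $\tau^*(j_{\eta(\widehat{x})}(z))=\tau^*(j_{\eta'(\widehat{x'})}(z'))\in\cJ_{\sigma(\tau^*(w))}$, and by~\ref{clm:equiv:w} we have $\sigma(\tau^*(w))=\tau^{-1}\comp\sigma(w)$; then~\ref{clm:equiv:cJ} converts membership in $\cJ_{\tau^{-1}\comp\sigma(w)}$ (after applying $\tau^*$) into membership in $\cJ_{\sigma(w)}$. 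Finally, for item~\ref{clm:equiv:widehatF*k}, I apply the definition of $\widehat{F}^*_k$ coordinate by $\tau\in S_k$: evaluating $\widehat{F}(\tau^*(\widehat{x},\widehat{x'},z,z',w))$ and using the equivariance just established together with Lemma~\ref{lem:F*Vequiv} (and the $S_k$-equivariance of $\phi_k$ from Lemma~\ref{lem:kpartbasics}\ref{lem:kpartbasics:action}) shows that in the non-$\bot$ case the result is $F(\phi_k(\sigma(w)^*(\widehat{x},\widehat{x'})))_{\sigma(w)^{-1}\comp\tau}$, which is precisely the $\tau$-coordinate of $(\sigma(w)^{-1})^*(F(\phi_k(\sigma(w)^*(\widehat{x},\widehat{x'}))))$, while the $\bot$ case is preserved for every $\tau$ by the equivalences of~\ref{clm:equiv:jcJw}.
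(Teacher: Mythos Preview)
Your proposal is correct and follows essentially the same coordinate-wise approach as the paper: items~\ref{clm:equiv:j} and~\ref{clm:equiv:cJ} are identical to the paper's direct unpacking, item~\ref{clm:equiv:w} is the same characterization argument (the paper verifies that $\sigma(w)_\alpha$ has the defining property, you solve for it, but the content is the same), and items~\ref{clm:equiv:jcJw} and~\ref{clm:equiv:widehatF*k} are then assembled from the earlier items exactly as in the paper. One minor remark: in item~\ref{clm:equiv:widehatF*k} your citations to Lemma~\ref{lem:F*Vequiv} and the $S_k$-equivariance of $\phi_k$ are unnecessary---the key simplification $(\tau^{-1}\comp\sigma(w))^*(\tau^*(\widehat{x},\widehat{x'}))=\sigma(w)^*(\widehat{x},\widehat{x'})$ follows directly from the contravariant functoriality $(\beta\comp\alpha)^*=\alpha^*\comp\beta^*$, which is all the paper uses.
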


  \begin{proofof}{Claim~\ref{clm:equiv}}
    For item~\ref{clm:equiv:j}, note that for $C\in r(m',k)$, we have
    \begin{align*}
      j_{\eta(\alpha^*(\widehat{x}))}(\alpha^*(z))_C
      & =
      j_{\eta_{\lvert C\rvert}(\widehat{x}_{\alpha(C)})}(z_{\alpha(C)})
      =
      j_{\eta(\widehat{x})}(z)_{\alpha(C)}
      =
      \alpha^*(j_{\eta(\widehat{x})}(z))_C.
    \end{align*}
    The other assertion is proved analogously.

    \medskip

    For item~\ref{clm:equiv:cJ}, note that
    \begin{align*}
      \tau^*(j)\in\cJ_\gamma
      & \iff
      \forall C\in r(k), B^{\lvert C\rvert}_{\tau(C)} = \gamma^{-1}(C)
      \\
      & \iff
      \forall D\in r(k), B^{\lvert D\rvert}_D = \gamma^{-1}(\tau^{-1}(D))
      \iff
      j\in\cJ_{\tau\comp\gamma},
    \end{align*}
    where the second equivalence follows from the change of variables $D\df\tau(C)$.

    \medskip

    For the first assertion of item~\ref{clm:equiv:w}, note that for $i\in [m']$, we have
    \begin{align*}
      \alpha^*(w)_{\{\sigma(w)_\alpha(i)\}}
      & =
      \alpha^*(w)_{\{(\alpha^{-1}\comp\sigma(w)\comp\iota_{\im(\sigma(w)^{-1}\comp\alpha),m})(i)\}}
      =
      w_{\{(\sigma(w)\comp\iota_{\im(\sigma(w)^{-1}\comp\alpha),m})(i)\}},
    \end{align*}
    and since $\iota_{\im(\sigma(w)^{-1}\comp\alpha),m}$ is increasing, by the definition of
    $\sigma(w)$, we get
    \begin{align*}
      \alpha^*(w)_{\{\sigma(w)_\alpha(1)\}}
      <
      \alpha^*(w)_{\{\sigma(w)_\alpha(2)\}}
      <
      \cdots
      <
      \alpha^*(w)_{\{\sigma(w)_\alpha(m')\}}
    \end{align*}
    from which we get $\sigma(\alpha^*(w))=\sigma(w)_\alpha$.

    The second assertion follows since when $m'=m$, we have
    $\sigma(w)_\alpha=\alpha^{-1}\comp\sigma(w)$ as $\iota_{\im(\sigma(w)^{-1}\comp\alpha),m} =
    \id_m$.

    \medskip
    
    For item~\ref{clm:equiv:jcJw}, by items~\ref{clm:equiv:j}, \ref{clm:equiv:cJ}
    and~\ref{clm:equiv:w}, we have
    \begin{align*}
      j_{\eta(\tau^*(\widehat{x}))}(\tau^*(z))=j_{\eta'(\tau^*(\widehat{x'}))}(\tau^*(z'))\in\cJ_{\sigma(\tau^*(w))}
      & \iff
      \tau^*(j_{\eta(\widehat{x})}(z))=\tau^*(j_{\eta'(\widehat{x'})}(z'))\in\cJ_{\tau^{-1}\comp\sigma(w)}
      \\
      & \iff
      j_{\eta(\widehat{x})}(z)=j_{\eta'(\widehat{x'})}(z')\in\cJ_{\sigma(w)},
    \end{align*}
    as desired.

    \medskip

    For item~\ref{clm:equiv:widehatF*k}, note that for $\tau\in S_k$ we have
    \begin{align*}
      & \!\!\!\!\!\!
      \widehat{F}^*_k(\widehat{x},\widehat{x'},z,z',w)_\tau
      \\
      & =
      \widehat{F}(\tau^*(\widehat{x},\widehat{x'},z,z',w))
      \\
      & =
      \begin{dcases*}
        F(\phi_k(\sigma(\tau^*(w))^*(\tau^*(\widehat{x},\widehat{x'}))))_{\sigma(\tau^*(w))^{-1}},
        & if
        $j_{\eta(\tau^*(\widehat{x}))}(\tau^*(z))=j_{\eta(\tau^*(\widehat{x'}))}(\tau^*(z'))\in\cJ_{\sigma(\tau^*(w))}$,
        \\
        \bot, & otherwise,
      \end{dcases*}
      \\
      & =
      \begin{dcases*}
        F(\phi_k((\tau^{-1}\comp\sigma(w))^*(\tau^*(\widehat{x},\widehat{x'}))))_{\sigma(w)^{-1}\comp\tau},
        & if $j_{\eta(\widehat{x})}(z)=j_{\eta(\widehat{x'})}(z')\in\cJ_{\sigma(w)}$,
        \\
        \bot, & otherwise,
      \end{dcases*}
      \\
      & =
      \begin{dcases*}
        (\sigma(w)^{-1})^*(F(\phi_k(\sigma(w)^*(\widehat{x},\widehat{x'}))))_\tau,
        & if $j_{\eta(\widehat{x})}(z)=j_{\eta(\widehat{x'})}(z')\in\cJ_{\sigma(w)}$,
        \\
        \bot, & otherwise,
      \end{dcases*}
    \end{align*}
    where the third equality follows from items~\ref{clm:equiv:w} and~\ref{clm:equiv:jcJw}.
  \end{proofof}

  \begin{claim}\label{clm:dist}
    Let $(\rn{\widehat{x}},\rn{\widehat{x'}},\rn{z},\rn{z'},\rn{w})\sim
    (\widehat{\mu}\otimes\widehat{\mu'}\otimes\lambda\otimes\lambda\otimes\lambda)^m$, let
    \begin{align*}
      \rn{\widehat{j}} & \df j_{\eta(\rn{\widehat{x}})}(\rn{z}), &
      \rn{\widehat{j'}} & \df j_{\eta(\rn{\widehat{x'}})}(\rn{z'}), &
      \rn{\widehat{\sigma}} & \df \sigma(\rn{w}).
    \end{align*}
    Since $\rn{\widehat{j}}$ and $\rn{\widehat{j'}}$ are elements of $\prod_{C\in
      r(m,k)}[\binom{k}{\lvert C\rvert}]$, we extend them by letting
    $\rn{\widehat{j}}_C\df\rn{\widehat{j'}}_C\df 1$ when $C\in r(m)$ with $\lvert C\rvert > k$.

    Let further
    $((\rn{\widetilde{x}},\rn{\widetilde{j}}),(\rn{\widetilde{x'}},\rn{\widetilde{j'}}))\sim
    (\nu\otimes\nu')^m$ and let $\rn{\widetilde{\sigma}}$ be picked uniformly at random in $S_m$,
    independently from
    $((\rn{\widetilde{x}},\rn{\widetilde{j}}),(\rn{\widetilde{x'}},\rn{\widetilde{j'}}))$.

    Finally, let
    \begin{align*}
      \rn{\widehat{y}}
      & \df
      \widehat{F}^*_m(\rn{\widehat{x}},\rn{\widehat{x'}},\rn{z},\rn{z'},\rn{w}),
    \end{align*}
    and for every $\alpha\in([m])_k$, let
    \begin{align*}
      \rn{\widetilde{y}}_\alpha
      & \df
      \begin{dcases*}
        F(\phi_k(\rn{\widetilde{\sigma}}_\alpha^*(\alpha^*(
        \rn{\widetilde{x}},\rn{\widetilde{x'}}))))_{\rn{\widetilde{\sigma}}_\alpha^{-1}},
        & if $\alpha^*(\rn{\widetilde{j}})=\alpha^*(\rn{\widetilde{j'}})\in
        \cJ_{\rn{\widetilde{\sigma}}_\alpha}$,
        \\
        \bot, & otherwise.
      \end{dcases*}
    \end{align*}
    
    Then the following hold:
    \begin{enumerate}
    \item\label{clm:dist:tilde} 
      We have
      \begin{align}\label{eq:dist:tilde}
        (\rn{\widehat{x}},\rn{\widehat{j}},\rn{\widehat{x'}},\rn{\widehat{j'}},
        \rn{\widehat{\sigma}},\rn{\widehat{y}})
        & \sim
        (\rn{\widetilde{x}},\rn{\widetilde{j}},\rn{\widetilde{x'}},\rn{\widetilde{j'}},
        \rn{\widetilde{\sigma}},\rn{\widetilde{y}}).
      \end{align}
    \item\label{clm:dist:inv} If $m = k$, then the distribution of
      $(\rn{\widetilde{x}},\rn{\widetilde{j}},\rn{\widetilde{x'}},\rn{\widetilde{j'}})$ is
      $S_k$-invariant.
    \item\label{clm:dist:p} If $m=k$, then
      \begin{align*}
        \PP_{\rn{\widehat{x}},\rn{\widehat{x'}},\rn{z},\rn{z'},\rn{w}}[
          \rn{\widehat{j}}=\rn{\widehat{j'}}\in\cJ_{\rn{\widehat{\sigma}}}
        ] & = p,
      \end{align*}
      where
      \begin{align*}
        p & \df \prod_{i=1}^k \binom{k}{i}^{-2\binom{k}{i}}
      \end{align*}
      is as in~\eqref{eq:kpart2:p}.
    \item\label{clm:dist:phi} If $m=k$, then the conditional distribution of
      $\phi_k(\rn{\widehat{\sigma}}^*(\rn{\widehat{x}},\rn{\widehat{x'}}))$ given the event
      $\rn{\widehat{j}}=\rn{\widehat{j'}}\in\cJ_{\rn{\widehat{\sigma}}}$ is $(\mu\otimes\mu')^1$.
    \item\label{clm:dist:loss} For every $H\in\cH$, we have
      \begin{align*}
        L_{\widehat{\mu},\widehat{\mu''},\widehat{F},\ell}(H)
        & =
        (1-p)\cdot C_{\ell,\bot,F} + p\cdot L_{\mu,\mu',F,\ell^{\kpart}}(H^{\kpart}),
      \end{align*}
      where
      \begin{align}\label{eq:kpart2:CellbotF}
        C_{\ell,\bot,F}
        & \df
        \EE_{\rn{\widehat{x}},\rn{\widehat{x'}},\rn{z},\rn{z'},\rn{w}}[
          \ell_\bot(\rn{\widehat{x}})
          \given
          \rn{\widehat{j}}=\rn{\widehat{j'}}\notin\cJ_{\rn{\widehat{\sigma}}}
        ]
      \end{align}
      when $m=k$.
    \item\label{clm:dist:infloss} We have
      \begin{align*}
        \inf_{H\in\cH} L_{\widehat{\mu},\widehat{\mu''},\widehat{F},\ell}(H)
        & =
        (1-p)\cdot C_{\ell,\bot,F} +
        p\cdot\inf_{H\in\cH^{\kpart}} L_{\mu,\mu',F,\ell^{\kpart}}(H^{\kpart}),
      \end{align*}
      where $C_{\ell,\bot,F}$ is given by~\eqref{eq:kpart2:CellbotF}.
    \end{enumerate}
  \end{claim}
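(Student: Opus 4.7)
My plan is to establish item~\ref{clm:dist:tilde} first, as the distributional identity~\eqref{eq:dist:tilde} is the backbone that couples the ``hat'' variables (built from the coin-flips $\rn{z},\rn{z'},\rn{w}$) to the ``tilde'' variables (sampled directly from $\nu^k$, $(\nu')^k$, and a uniform permutation). Once this coupling is in hand, items~\ref{clm:dist:inv}--\ref{clm:dist:phi} will reduce to short computations on the tilde side, and items~\ref{clm:dist:loss}--\ref{clm:dist:infloss} will follow by splitting the loss integral along the ``good event'' $\rn{\widehat{j}}=\rn{\widehat{j'}}\in\cJ_{\rn{\widehat{\sigma}}}$ and identifying each piece.

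For~\ref{clm:dist:tilde}, I would first exploit Claim~\ref{clm:equiv}, in particular items~\ref{clm:equiv:j}, \ref{clm:equiv:w} and~\ref{clm:equiv:widehatF*k}, to rewrite each coordinate $\rn{\widehat{y}}_\alpha=\widehat{F}(\alpha^*(\rn{\widehat{x}},\rn{\widehat{x'}},\rn{z},\rn{z'},\rn{w}))$ as a measurable function only of $(\rn{\widehat{x}},\rn{\widehat{j}},\rn{\widehat{x'}},\rn{\widehat{j'}},\rn{\widehat{\sigma}})$ of exactly the same form as the defining formula for $\rn{\widetilde{y}}_\alpha$. It then suffices to match the joint distribution of these five-tuples. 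Independence across coordinates $C\in r(m,k)$ holds on both sides (from product structure), so the task reduces to the one-coordinate assertion $(\rn{\widehat{x}}_C,\rn{\widehat{j}}_C)\sim\nu_{\lvert C\rvert}$. This is exactly what the Disintegration Theorem delivers: combining~\eqref{eq:lambdaK} (which gives $j_\theta(\rn{z})\sim\theta$ for $\rn{z}\sim\lambda$) with~\eqref{eq:disintegrated} shows that integrating the Markov kernel $\rho_{\lvert C\rvert}$ against $\widehat{\mu}_{\lvert C\rvert}$ recovers $\nu_{\lvert C\rvert}$. Finally, $\sigma(\rn{w})\sim U(S_m)$ is the standard observation about uniform relative ranks of i.i.d.\ continuous samples.

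Items~\ref{clm:dist:inv}--\ref{clm:dist:phi} will then be quick. For~\ref{clm:dist:inv}, $S_k$-invariance is immediate because $\nu_i$ depends only on $i=\lvert C\rvert$, so $\nu^k$ is invariant under permuting $[k]$. For~\ref{clm:dist:p}, I would condition on $\rn{\widehat{\sigma}}=\sigma$, reducing the event to $\rn{\widehat{j}}_C=\rn{\widehat{j'}}_C=j^*_\sigma(C)$ for every $C\in r(k)$ (where $j^*_\sigma(C)$ indexes $\sigma^{-1}(C)$ in the enumeration $B^{\lvert C\rvert}$), and then using independence across $C$ together with the marginal probability $\PP[\rn{\widehat{j}}_C=j]=\nu_{\lvert C\rvert}(X_{\lvert C\rvert}\times\{j\})=\binom{k}{\lvert C\rvert}^{-1}$ (which follows from~\eqref{eq:nunu'} since each $\mu_{B^i_j}$ is a probability measure); this yields $\prod_C\binom{k}{\lvert C\rvert}^{-2}=p$, independent of $\sigma$. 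For~\ref{clm:dist:phi}, a Bayes argument from~\eqref{eq:disintegrated} with arbitrary measurable $W$ identifies the conditional law of $\rn{\widehat{x}}_C$ given $\rn{\widehat{j}}_C=j^*_\sigma(C)$ as $\mu_{\sigma^{-1}(C)}$; since $\phi_k(\sigma^*(\rn{\widehat{x}}))_{1^A}=\rn{\widehat{x}}_{\sigma(A)}$, the marginal at slot $A$ becomes $\mu_A$, and pairing this with the parallel computation for the primed variables recovers $(\mu\otimes\mu')^1$.

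Items~\ref{clm:dist:loss}--\ref{clm:dist:infloss} will then conclude the proof. Using Claim~\ref{clm:equiv}\ref{clm:equiv:widehatF*k} (valid almost surely, as the coordinates of $\rn{w}$ are a.s.\ distinct), I would split the loss integral along the good event: on the event, symmetry of $\ell$ absorbs the $(\sigma(\rn{w})^{-1})^*$ twist, and item~\ref{clm:dist:phi} together with the identity $\ell(H,\iota_{\kpart}(x),y)=\ell^{\kpart}(H^{\kpart},x,y)$ turn the contribution into $p\cdot L_{\mu,\mu',F,\ell^{\kpart}}(H^{\kpart})$; on the complement, $\widehat{F}^*_k\equiv\bot^{S_k}$, so the neutral-symbol hypothesis collapses the integrand to $\ell_\bot(\rn{\widehat{x}})$ and produces $(1-p)\cdot C_{\ell,\bot,F}$ by the definition of $C_{\ell,\bot,F}$. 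Item~\ref{clm:dist:infloss} then follows by taking infimum over $H\in\cH$, using that $C_{\ell,\bot,F}$ is $H$-independent and that $H\mapsto H^{\kpart}$ is a bijection $\cH\to\cH^{\kpart}$. The main obstacle throughout is the careful equivariance bookkeeping that glues the disintegration step in~\ref{clm:dist:tilde} to the combinatorial claims of~\ref{clm:dist:p} and~\ref{clm:dist:phi}---once the equivariance identities of Claim~\ref{clm:equiv} have been applied cleanly, each subsequent step is a short computation.
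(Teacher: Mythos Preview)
Your proposal is correct and follows essentially the same approach as the paper: establish the distributional coupling~\eqref{eq:dist:tilde} first via the disintegration identity~\eqref{eq:disintegrated} coordinate-by-coordinate, then derive the remaining items by short computations and the good-event split. The only differences are cosmetic: the paper tends to transfer to the tilde side and reduce to $\sigma=\id_k$ via $S_k$-invariance (using Claim~\ref{clm:equiv}\ref{clm:equiv:cJ} and item~\ref{clm:dist:inv}) before computing items~\ref{clm:dist:p} and~\ref{clm:dist:phi}, whereas you compute directly for general $\sigma$ and observe $\sigma$-independence at the end; similarly, your argument for item~\ref{clm:dist:inv} (invariance is immediate from $\nu_{\lvert C\rvert}$ depending only on $\lvert C\rvert$) is slightly more direct than the paper's detour through the hat side.
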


  \begin{proofof}{Claim~\ref{clm:dist}}
    For item~\ref{clm:dist:tilde}, let us first prove that
    \begin{align}\label{eq:dist:tildeeasy}
      (\rn{\widehat{x}},\rn{\widehat{j}},\rn{\widehat{x'}},\rn{\widehat{j'}},\rn{\widehat{\sigma}})
      & \sim
      (\rn{\widetilde{x}},\rn{\widetilde{j}},\rn{\widetilde{x'}},\rn{\widetilde{j'}},
      \rn{\widetilde{\sigma}}).
    \end{align}

    It is straightforward to check that $\rn{\widehat{\sigma}}\df\sigma(\rn{w})$ has the same
    distribution as $\rn{\widetilde{\sigma}}$ (i.e., the uniform distribution on $S_m$). Since
    $\rn{w}$ is independent from
    $(\rn{\widehat{x}},\rn{\widehat{j}},\rn{\widehat{x'}},\rn{\widehat{j'}})$ and
    $\rn{\widetilde{\sigma}}$ is independent from
    $(\rn{\widetilde{x}},\rn{\widetilde{j}},\rn{\widetilde{x'}},\rn{\widetilde{j'}})$, it suffices
    to show
    \begin{align*}
      (\rn{\widehat{x}},\rn{\widehat{j}},\rn{\widehat{x'}},\rn{\widehat{j'}})
      & \sim
      (\rn{\widetilde{x}},\rn{\widetilde{j}},\rn{\widetilde{x'}},\rn{\widetilde{j'}}).
    \end{align*}
    In turn, since $\rn{\widehat{j}}$ is $(\rn{\widehat{x}},\rn{z})$-measurable and $\rn{\widehat{j'}}$ is
    $(\rn{\widehat{x'}},\rn{z'})$-measurable, it suffices to show
    \begin{align*}
      (\rn{\widehat{x}},\rn{\widehat{j}})
      & \sim
      (\rn{\widetilde{x}},\rn{\widetilde{j}}),
      &
      (\rn{\widehat{x'}},\rn{\widehat{j'}})
      & \sim
      (\rn{\widetilde{x'}},\rn{\widetilde{j'}}).
    \end{align*}
    We prove only the former as the latter has an analogous proof. Since
    $(\rn{\widetilde{x}},\rn{\widetilde{j}})$ has distribution $\nu^m$, it suffices to show that
    \begin{align}\label{eq:dist:tildeprob}
      \PP_{\rn{\widehat{x}},\rn{z}}[
        \forall C\in r(m), (\rn{\widehat{x}}_C\in V_C\land\rn{\widehat{j}}_C\in J_C)
      ]
      & =
      \prod_{C\in r(m)} \nu_{\lvert C\rvert}(V_C\times J_C)
    \end{align}
    for every measurable $V_C\subseteq X_{\lvert C\rvert}$ and every non-empty
    $J_C\subseteq[\max\{1,\binom{k}{\lvert C\rvert}\}]$.

    But note that
    \begin{align*}
      & \!\!\!\!\!\!
      \PP_{\rn{\widehat{x}},\rn{z}}[
        \forall C\in r(m), (\rn{\widehat{x}}_C\in V_C\land\rn{\widehat{j}}_C\in J_C)
      ]
      \\
      & =
      \PP_{\rn{\widehat{x}},\rn{z}}\left[
        (\forall C\in r(m), \rn{\widehat{x}}_C\in V_C)
        \land\left(
        \forall C\in r(m,k),
        \rn{z}_C\in\bigcup_{j\in J_C} K^{\eta_{\lvert C\rvert}(\rn{\widehat{x}}_C)}_j
        \right)
        \right]
      \\
      & =
      \prod_{C\in r(m,k)}
      \PP_{(\rn{\xi},\rn{\zeta})\sim\widehat{\mu}_{\lvert C\rvert}\otimes\lambda}\left[
        \rn{\xi}\in V_C
        \land
        \rn{\zeta}\in\bigcup_{j\in J_C} K^{\eta_{\lvert C\rvert}(\rn{\xi})}_j
        \right]
      \cdot
      \prod_{\substack{C\in r(m)\\\lvert C\rvert > k}}\widehat{\mu}_{\lvert C\rvert}(V_C)
      \\
      & =
      \prod_{C\in r(m,k)}
      \EE_{\rn{\xi}\sim\widehat{\mu}_{\lvert C\rvert}}[
        \One[\rn{\xi}\in V_C]
        \cdot
        \eta_{\lvert C\rvert}(\rn{\xi})(J_C)
      ]
      \cdot
      \prod_{\substack{C\in r(m)\\\lvert C\rvert > k}}\nu_{\lvert C\rvert}(V_C\times J_C),
    \end{align*}
    where the second equality follows since the coordinates of $\rn{\widehat{x}}$ and $\rn{z}$ are
    mutually independent and the third equality follows from~\eqref{eq:lambdaK}.

    For a fixed $C\in r(m,k)$, note that
    \begin{align*}
      \EE_{\rn{\xi}\sim\widehat{\mu}_{\lvert C\rvert}}[
        \One[\rn{\xi}\in V_C]
        \cdot
        \eta_{\lvert C\rvert}(\rn{\xi})(J_C)
      ]
      & =
      \int_{X_{\lvert C\rvert}}
      \One_{V_C}(\xi)\cdot
      \rho_{\lvert C\rvert}(\xi)(\{\xi\}\times J_C)
      \ d\widehat{\mu}_{\lvert C\rvert}(\xi)
      \\
      & =
      \int_{V_C\cap R_{\lvert C\rvert}}
      \rho_{\lvert C\rvert}(\xi)(X_{\lvert C\rvert}\times J_C)
      \ d\widehat{\mu}_{\lvert C\rvert}(\xi)
      \\
      & =
      \nu_{\lvert C\rvert}(V_C\times J_C),
    \end{align*}
    where the first equality follows from~\eqref{eq:etai}, the second equality follows since
    $\widehat{\mu}_i(R_i)=1$ and when $x\in R_i$, the measure $\rho_i(x)$ is supported on
    $\{x\}\times[\binom{k}{i}]$ (see~\eqref{eq:Ri}) and the third equality follows
    from~\eqref{eq:disintegrated}. Therefore~\eqref{eq:dist:tildeprob} holds
    and~\eqref{eq:dist:tildeeasy} follows.

    Let us now show the upgraded version of~\eqref{eq:dist:tildeeasy},
    equation~\eqref{eq:dist:tilde}.

    First note that for $\alpha\in([m])_k$, since with probability $1$ all coordinates of $\rn{w}$
    are distinct, by Claim~\ref{clm:equiv}\ref{clm:equiv:w}, with probability $1$, we have
    \begin{align*}
      \sigma(\alpha^*(\rn{w})) & = \sigma(\rn{w})_\alpha = \rn{\widehat{\sigma}}_\alpha.
    \end{align*}
    Using item~\ref{clm:equiv:j} of the same claim, we also conclude that
    \begin{align*}
      j_{\eta(\alpha^*(\rn{\widehat{x}}))}(\alpha^*(\rn{\widehat{z}}))=
      j_{\eta'(\alpha^*(\rn{\widehat{x'}}))}(\alpha^*(\rn{\widehat{z'}}))\in\cJ_{\sigma(\alpha^*(\rn{w}))}
      & \iff
      \alpha^*(\rn{\widehat{j}})=\alpha^*(\rn{\widehat{j'}})\in\cJ_{\rn{\widehat{\sigma}}_\alpha}
    \end{align*}
    with probability $1$.

    Therefore, we have the following alternative formula that holds with probability $1$ for
    $\rn{\widehat{y}}\df\widehat{F}^*_m(\rn{\widehat{x}},\rn{\widehat{x'}},\rn{z},\rn{z'},\rn{w})$:
    \begin{align*}
      \rn{\widehat{y}}_\alpha
      & =
      \begin{dcases*}
        F(\phi_k(\rn{\widehat{\sigma}}_\alpha^*(\alpha^*(
        \rn{\widehat{x}},\rn{\widehat{x'}}))))_{\rn{\widehat{\sigma}}_\alpha^{-1}},
        & if $\alpha^*(\rn{\widehat{j}})=\alpha^*(\rn{\widehat{j'}})\in
        \cJ_{\rn{\widehat{\sigma}}_\alpha}$,
        \\
        \bot, & otherwise.
      \end{dcases*}
    \end{align*}

    Now~\eqref{eq:dist:tilde} follows from~\eqref{eq:dist:tildeeasy} and the definition of
    $\rn{\widetilde{y}}$.

    \medskip

    For item~\ref{clm:dist:inv}, by item~\ref{clm:dist:tilde}, for every $\tau\in S_k$, we have
    \begin{align*}
      \tau^*(\rn{\widetilde{x}},\rn{\widetilde{j}},\rn{\widetilde{x'}},\rn{\widetilde{j'}})
      & \sim
      \tau^*(\rn{\widehat{x}},\rn{\widehat{j}},\rn{\widehat{x'}},\rn{\widehat{j'}})
      \\
      & =
      (\tau^*(\rn{\widehat{x}}), j_{\eta(\rn{\tau^*(\widehat{x}}))}(\tau^*(\rn{z})),
      \tau^*(\rn{\widehat{x'}}), j_{\eta'(\rn{\tau^*(\widehat{x'}}))}(\rn{z'}))
      \\
      & \sim
      (\rn{\widehat{x}},\rn{\widehat{j}},\rn{\widehat{x'}},\rn{\widehat{j'}})
      \\
      & \sim
      (\rn{\widetilde{x}},\rn{\widetilde{j}},\rn{\widetilde{x'}},\rn{\widetilde{j'}}),
    \end{align*}
    where the equality follows by Claim~\ref{clm:equiv}\ref{clm:equiv:j} and the second
    distributional equality follows since the distribution of
    $(\rn{\widehat{x}},\rn{\widehat{z}},\rn{\widehat{x'}},\rn{\widehat{z'}})$ is $S_k$-invariant.

    \medskip

    For item~\ref{clm:dist:p}, by item~\ref{clm:dist:tilde}, it suffices to show
    \begin{align*}
      \PP_{\rn{\widetilde{x}},\rn{\widetilde{j}},\rn{\widetilde{x'}},\rn{\widetilde{j'}},\rn{\widetilde{\sigma}}}[
        \rn{\widetilde{j}}=\rn{\widetilde{j'}}\in\cJ_{\rn{\widetilde{\sigma}}}
      ]
      & =
      p.
    \end{align*}
    In turn, since $\rn{\widetilde{\sigma}}$ is uniformly distributed in $S_k$ and is independent
    from $(\rn{\widetilde{x}},\rn{\widetilde{j}},\rn{\widetilde{x'}},\rn{\widetilde{j'}})$, it
    suffices to show that for every $\sigma\in S_k$, we have
    \begin{align}\label{eq:dist:p}
      \PP_{\rn{\widetilde{x}},\rn{\widetilde{j}},\rn{\widetilde{x'}},\rn{\widetilde{j'}}}[
        \rn{\widetilde{j}}=\rn{\widetilde{j'}}\in\cJ_\sigma
      ]
      & =
      p.
    \end{align}

    By Claim~\ref{clm:equiv}\ref{clm:equiv:cJ}, we have
    $\rn{\widetilde{j}}=\rn{\widetilde{j'}}\in\cJ_\sigma$ if and only if
    $\sigma^*(\rn{\widetilde{j}})=\sigma^*(\rn{\widetilde{j'}})\in\cJ_{\id_k}$ and since
    $(\rn{\widetilde{x}},\rn{\widetilde{j}},\rn{\widetilde{x'}},\rn{\widetilde{j'}})$ is
    $S_k$-invariant by item~\ref{clm:dist:inv}, it suffices to show~\eqref{eq:dist:p} only when
    $\sigma=\id_k$.

    But note that
    \begin{align*}
      \PP_{\rn{\widetilde{x}},\rn{\widetilde{j}},\rn{\widetilde{x'}},\rn{\widetilde{j'}},\rn{\widetilde{\sigma}}}[
        \rn{\widetilde{j}}=\rn{\widetilde{j'}}\in\cJ_{\id_k}
      ]
      & =
      \PP_{\rn{\widetilde{x}},\rn{\widetilde{j}},\rn{\widetilde{x'}},\rn{\widetilde{j'}},\rn{\widetilde{\sigma}}}[
        \forall C\in r(k), (\rn{\widetilde{j}}_C = \rn{\widetilde{j'}}_C
        \land B^{\lvert C\rvert}_{\rn{\widetilde{j}}_C} = C)
      ]
      \\
      & =
      \prod_{C\in r(k)} \binom{k}{\lvert C\rvert}^{-2}
      =
      p,
    \end{align*}
    as desired.

    \medskip

    Let us now prove item~\ref{clm:dist:phi}. By item~\ref{clm:dist:tilde}, it suffices to show that
    the conditional distribution of
    $\phi_k(\rn{\widetilde{\sigma}}^*(\rn{\widetilde{x}},\rn{\widetilde{x'}}))$ given the event
    $\rn{\widetilde{j}}=\rn{\widetilde{j'}}\in\cJ_{\rn{\widetilde{\sigma}}}$ is
    $(\mu\otimes\mu')^1$. Since $\rn{\widetilde{\sigma}}$ is uniformly distributed in $S_k$ and is
    independent from
    $(\rn{\widetilde{x}},\rn{\widetilde{j}},\rn{\widetilde{x'}},\rn{\widetilde{j'}})$, it suffices
    to show that for every $\sigma\in S_k$, the conditional distribution of
    $(\phi_k(\sigma^*(\rn{\widetilde{x}},\rn{\widetilde{x'}})))$ given the event
    $\rn{\widetilde{j}}=\rn{\widetilde{j'}}\in\cJ_\sigma$ is $(\mu\otimes\mu')^1$.

    By Claim~\ref{clm:equiv}\ref{clm:equiv:cJ},
    $\rn{\widetilde{j}}=\rn{\widetilde{j'}}\in\cJ_\sigma$ if and only if
    $\sigma^*(\rn{\widetilde{j}})=\sigma^*(\rn{\widetilde{j'}})\in\cJ_{\id_k}$ and since
    $(\rn{\widetilde{x}},\rn{\widetilde{j}},\rn{\widetilde{x'}},\rn{\widetilde{j'}})$ is
    $S_k$-invariant by item~\ref{clm:dist:inv}, it suffices show the assertion only when
    $\sigma=\id_k$.

    In turn, it suffices to show that
    \begin{align*}
      \PP_{\rn{\widetilde{x}},\rn{\widetilde{j}},\rn{\widetilde{x'}},\rn{\widetilde{j'}}}[
        \forall f\in r_k(1), \phi_k(\rn{\widetilde{x}},\rn{\widetilde{x'}})_f \in V_f\times V'_f
        \given
        \rn{\widetilde{j}}=\rn{\widetilde{j'}}\in\cJ_{\id_k}
      ]
      & =
      \prod_{f\in r_k(1)} (\mu_{\dom(f)}(V_f)\cdot\mu'_{\dom(f)}(V'_f))
    \end{align*}
    for all measurable $V_f\subseteq\Omega_{\lvert\dom(f)\rvert}$ and measurable
    $V'_f\subseteq\Omega'_{\lvert\dom(f)\rvert}$.

    First note that for $f\in r_k(1)$, we have
    \begin{align*}
      \{(i-1) + f(i) \mid i\in\dom(f)\} & = \dom(f),
    \end{align*}
    so we get
    \begin{align*}
      \phi_k(\rn{\widetilde{x}},\rn{\widetilde{x'}})_f \in V_f\times V'_f
      & \iff
      \rn{\widetilde{x}}_{\dom(f)}\in V_f\land\rn{\widetilde{x'}}_{\dom(f)}\in V'_f,
    \end{align*}
    hence for the unique element $j^{\id_k}$ of $\cJ_{\id_k}$, we have
    \begin{align*}
      & \!\!\!\!\!\!
      \PP_{\rn{\widetilde{x}},\rn{\widetilde{j}},\rn{\widetilde{x'}},\rn{\widetilde{j'}}}[
        \forall f\in r_k(1), \phi_k(\rn{\widetilde{x}},\rn{\widetilde{x'}})_f \in V_f\times V'_f
        \given
        \rn{\widetilde{j}}=\rn{\widetilde{j'}}\in\cJ_{\id_k}
      ]
      \\
      & =
      \PP_{\rn{\widetilde{x}},\rn{\widetilde{j}},\rn{\widetilde{x'}},\rn{\widetilde{j'}}}[
        \forall f\in r_k(1), \phi_k(\rn{\widetilde{x}},\rn{\widetilde{x'}})_f \in V_f\times V'_f
        \given
        \rn{\widetilde{j}}=\rn{\widetilde{j'}}=j^{\id_k}
      ]
      \\
      & =
      \prod_{f\in r_k(1)}
      \binom{k}{\lvert\dom(f)\rvert}^2
      \bigl(\nu_{\lvert\dom(f)\rvert}(V_f\times\{j^{\id_k}_{\dom(f)}\})
      \cdot \nu'_{\lvert\dom(f)\rvert}(V_f\times\{j^{\id_k}_{\dom(f)}\})\bigr)
      \\
      & =
      \prod_{f\in r_k(1)}
      \bigl(\mu_{\dom(f)}(V_f)\cdot\mu'_{\dom(f)}(V'_f)\bigr),
    \end{align*}
    as desired.

    \medskip

    For item~\ref{clm:dist:loss}, let $m=k$ and note that since $\bot$ is a neutral symbol, we have
    \begin{align*}
      & \!\!\!\!\!\!
      \EE_{\rn{\widehat{x}},\rn{\widehat{x'}},\rn{z},\rn{z'},\rn{w}}[
        \ell(H,\rn{\widehat{x}},\widehat{F}^*_k(\rn{\widehat{x}},\rn{\widehat{x'}},\rn{z},\rn{z'},\rn{w}))
        \given
        \rn{\widehat{j}}=\rn{\widehat{j'}}\notin\cJ_{\rn{\widehat{\sigma}}}
      ]
      \\
      & =
      \EE_{\rn{\widehat{x}},\rn{\widehat{x'}},\rn{z},\rn{z'},\rn{w}}[
        \ell_\bot(\rn{\widehat{x}})
        \given
        \rn{\widehat{j}}=\rn{\widehat{j'}}\notin\cJ_{\rn{\widehat{\sigma}}}
      ]
      =
      C_{\ell,\bot,F},
    \end{align*}
    so we get
    \begin{align*}
      & \!\!\!\!\!\!
      L_{\widehat{\mu},\widehat{\mu''},\widehat{F},\ell}(H)
      \\
      & =
      \EE_{\rn{\widehat{x}},\rn{\widehat{x'}},\rn{z},\rn{z'},\rn{w}}[
        \ell(H,\rn{\widehat{x}},\widehat{F}(\rn{\widehat{x}},\rn{\widehat{x'}},\rn{z},\rn{z'},\rn{w}))
      ]
      \\
      & =
      (1-p)\cdot C_{\ell,\bot,F}
      +
      p\cdot\EE_{\rn{\widehat{x}},\rn{\widehat{x'}},\rn{z},\rn{z'},\rn{w}}[
        \ell(H,\rn{\widehat{x}},
        (\rn{\widehat{\sigma}}^{-1})^*(F(\phi_k(\rn{\widehat{\sigma}}^*(\rn{\widehat{x}},\rn{\widehat{x'}})))))
        \given
        \rn{\widehat{j}}=\rn{\widehat{j'}}\in\cJ_{\rn{\widehat{\sigma}}}
      ]
      \\
      & =
      (1-p)\cdot C_{\ell,\bot,F}
      +
      p\cdot\EE_{\rn{\widehat{x}},\rn{\widehat{x'}},\rn{z},\rn{z'},\rn{w}}[
        \ell(H,\rn{\widehat{\sigma}}^*(\rn{\widehat{x}}),
        F(\phi_k(\rn{\widehat{\sigma}}^*(\rn{\widehat{x}},\rn{\widehat{x'}}))))
        \given
        \rn{\widehat{j}}=\rn{\widehat{j'}}\in\cJ_{\rn{\widehat{\sigma}}}
      ]
      \\
      & =
      (1-p)\cdot C_{\ell,\bot,F}
      +
      p\cdot\EE_{(\rn{x},\rn{x'})\sim(\mu\otimes\mu')^1}[
        \ell^{\kpart}(H^{\kpart},\rn{x}, F(\rn{x},\rn{x'}))
      ]
      \\
      & =
      (1-p)\cdot C_{\ell,\bot,F} + p\cdot L_{\mu,\mu',F,\ell^{\kpart}}(H^{\kpart}),
    \end{align*}
    where the second equality follows from item~\ref{clm:dist:p} and
    Claim~\ref{clm:equiv}\ref{clm:equiv:widehatF*k}, the third equality follows since $\ell$ is
    symmetric and the fourth equality follows by item~\ref{clm:dist:phi} and the definition of
    $\ell^{\kpart}$.

    \medskip

    Finally, item~\ref{clm:dist:infloss} follows directly from item~\ref{clm:dist:loss} and the fact
    that $\cH\ni H\mapsto H^{\kpart}\in\cH^{\kpart}$ is a bijection.
  \end{proofof}

  We now define the algorithm $\cA'$: for an input
  \begin{align*}
    (x,y,b,\sigma,U,U') & \in \cE_m(\Omega^{\kpart})\times(\Lambda^{S_k})^{m^k}\times[R_\cA(m)]
    \times S_m
    \times\!\!\prod_{C\in r(m,k)} \binom{[k]}{\lvert C\rvert}
    \times\!\!\prod_{C\in r(m,k)} \binom{[k]}{\lvert C\rvert},
  \end{align*}
  we let
  \begin{align*}
    \cA'(x,y,b,\sigma,U,U')
    & \df
    \cA(x^{\sigma,U},y^{\sigma,U,U'},b)^{\kpart},
  \end{align*}
  where
  \begin{align*}
    x^{\sigma,U}_C & \df x_{\sigma\comp\iota_{\sigma^{-1}(C),m}\comp\iota_{U_C,k}^{-1}}
    \qquad (C\in r(m,k)),
    \\
    y^{\sigma,U,U'}_\alpha
    & \df
    \begin{dcases*}
      (y_{\alpha\comp\sigma_\alpha})_{\sigma_\alpha^{-1}}, & if $\alpha\in\cG(\sigma,U,U')$,\\
      \bot, & otherwise,
    \end{dcases*}
    & (\alpha\in([m])_k),
    \\
    \cG(\sigma,U,U')
    & \df
    \{\alpha\in([m])_k \mid
    \forall C\in r(k), U_{\alpha(C)} = U'_{\alpha(C)} = \sigma_\alpha^{-1}(C)\}.
  \end{align*}

  We will also need the analogous definition
  \begin{align*}
    (x')^{\sigma,U'}_C & \df x'_{\sigma\comp\iota_{\sigma^{-1}(C),m}\comp\iota_{U'_C,k}^{-1}}
    \qquad (C\in r(m,k))
  \end{align*}
  for $x'\in\cE_m((\Omega')^{\kpart})$.

  \begin{claim}\label{clm:dist2}
    Let $((\rn{\widetilde{x}},\rn{\widetilde{j}}),(\rn{\widetilde{x'}},\rn{\widetilde{j'}}))\sim
    (\nu\otimes\nu')^m$, let $\rn{\widetilde{\sigma}}$ be picked uniformly at random in $S_m$,
    independently from
    $((\rn{\widetilde{x}},\rn{\widetilde{j}}),(\rn{\widetilde{x'}},\rn{\widetilde{j'}}))$ and for
    every $\alpha\in([m])_k$, let
    \begin{align*}
      \rn{\widetilde{y}}_\alpha
      & \df
      \begin{dcases*}
        F(\phi_k(\rn{\widetilde{\sigma}}_\alpha^*(\alpha^*(
        \rn{\widetilde{x}},\rn{\widetilde{x'}}))))_{\rn{\widetilde{\sigma}}_\alpha^{-1}},
        & if $\alpha^*(\rn{\widetilde{j}})=\alpha^*(\rn{\widetilde{j'}})\in
        \cJ_{\rn{\widetilde{\sigma}}_\alpha}$,
        \\
        \bot, & otherwise.
      \end{dcases*}
    \end{align*}

    Let also $\rn{\widetilde{U}}$ and $\rn{\widetilde{U'}}$ be the random elements in $\prod_{C\in
      r(m,k)}\binom{[k]}{\lvert C\rvert}$ given by
    \begin{align*}
      \rn{\widetilde{U}}_C & \df B^{\lvert C\rvert}_{\rn{\widetilde{j}}_C} \qquad (C\in r(m,k)),\\
      \rn{\widetilde{U'}}_C & \df B^{\lvert C\rvert}_{\rn{\widetilde{j'}}_C} \qquad (C\in r(m,k)).
    \end{align*}

    Let further $(\rn{x},\rn{x'})\sim(\mu\otimes\mu')^m$ and let $(\rn{\sigma},\rn{U},\rn{U'})$ be
    picked uniformly at random in $S_m\times\prod_{C\in r(m,k)} \binom{[k]}{\lvert
      C\rvert}\times\prod_{C\in r(m,k)} \binom{[k]}{\lvert C\rvert}$, independently from $\rn{x}$
    and $\rn{x'}$. Finally, let $\rn{y}\df F^*_m(\rn{x},\rn{x'})$.

    Then the following hold:
    \begin{enumerate}
    \item\label{clm:dist2:jcG} For every $\alpha\in([m])_k$, we have
      \begin{align*}
        \alpha^*(\rn{\widetilde{j}})=\alpha^*(\rn{\widetilde{j'}})\in
        \cJ_{\rn{\widetilde{\sigma}}_\alpha}
        & \iff
        \alpha\in\cG(\rn{\widetilde{\sigma}},\rn{\widetilde{U}},\rn{\widetilde{U'}}).
      \end{align*}
    \item\label{clm:dist2:sample} We have
      \begin{align}\label{eq:dist2:sample}
        (\pi_{m,k}(\rn{\widetilde{x}},\rn{\widetilde{x'}}),
        \rn{\widetilde{\sigma}},\rn{\widetilde{U}},\rn{\widetilde{U'}},\rn{\widetilde{y}})
        & \sim
        (\pi_{m,k}(\rn{x}^{\rn{\sigma},\rn{U}},(\rn{x'})^{\rn{\sigma},\rn{U'}}),
        \rn{\sigma},\rn{U},\rn{U'},\rn{y}^{\rn{\sigma},\rn{U},\rn{U'}}),
      \end{align}
      where $\pi_{m,k}$ is the projection onto the coordinates indexed by $r(m,k)$.
    \end{enumerate}
  \end{claim}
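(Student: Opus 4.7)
For part~(i), I would unpack the definitions: since $\alpha^*(\rn{\widetilde{j}})_C = \rn{\widetilde{j}}_{\alpha(C)}$, the condition $\alpha^*(\rn{\widetilde{j}}) \in \cJ_{\rn{\widetilde{\sigma}}_\alpha}$ reads $B^{\lvert C\rvert}_{\rn{\widetilde{j}}_{\alpha(C)}} = \rn{\widetilde{\sigma}}_\alpha^{-1}(C)$ for every $C \in r(k)$, which is exactly $\rn{\widetilde{U}}_{\alpha(C)} = \rn{\widetilde{\sigma}}_\alpha^{-1}(C)$. The analogous rewriting for $\rn{\widetilde{j'}}$, combined with the injectivity of each $B^{\lvert C\rvert}$ (which forces $\alpha^*(\rn{\widetilde{j}}) = \alpha^*(\rn{\widetilde{j'}})$ as soon as $\rn{\widetilde{U}}_{\alpha(C)} = \rn{\widetilde{U'}}_{\alpha(C)}$ for all $C$), then gives the equivalence with $\alpha \in \cG(\rn{\widetilde{\sigma}}, \rn{\widetilde{U}}, \rn{\widetilde{U'}})$.

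For part~(ii), I would build a coupling realizing both sides of~\eqref{eq:dist2:sample} on a single probability space by pushing the ``non-tilde'' side into the ``tilde'' one. Starting from $(\rn{x}, \rn{x'}, \rn{\sigma}, \rn{U}, \rn{U'})$, define $\breve{\rn{x}} \df \rn{x}^{\rn{\sigma}, \rn{U}}$, $\breve{\rn{x'}} \df (\rn{x'})^{\rn{\sigma}, \rn{U'}}$, $\breve{\rn{j}}_C \df (B^{\lvert C\rvert})^{-1}(\rn{U}_C)$, $\breve{\rn{j'}}_C \df (B^{\lvert C\rvert})^{-1}(\rn{U'}_C)$ and $\breve{\rn{\sigma}} \df \rn{\sigma}$. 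The first task is to check that $(\pi_{m,k}(\breve{\rn{x}}, \breve{\rn{x'}}), \breve{\rn{\sigma}}, \breve{\rn{j}}, \breve{\rn{j'}})$ has the same law as $(\pi_{m,k}(\rn{\widetilde{x}}, \rn{\widetilde{x'}}), \rn{\widetilde{\sigma}}, \rn{\widetilde{j}}, \rn{\widetilde{j'}})$. Coordinate-wise, $\breve{\rn{j}}_C$ is uniform in $[\binom{k}{\lvert C\rvert}]$ because $\rn{U}_C$ is, and conditional on $\rn{U}_C = V$ the coordinate $\rn{x}^{\rn{\sigma}, \rn{U}}_C$ is $\rn{x}$ evaluated at an element of $r_k(m)$ with domain $V$, hence has law $\mu_V$; this matches $\nu_{\lvert C\rvert}$. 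Independence across $C$ follows from the product structure of $\rn{x} \sim \mu^m$ once one observes that, for fixed $(\rn{\sigma}, \rn{U})$, the indexing functions $\rn{\sigma} \comp \iota_{\rn{\sigma}^{-1}(C), m} \comp \iota_{\rn{U}_C, k}^{-1}$ are pairwise distinct in $r_k(m)$ as $C$ varies (since their images in $[m]$ are distinct). The same analysis handles the primed family, and these combine into the desired joint marginal identity.

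The substantive step is to verify that the formula defining $\rn{\widetilde{y}}$, applied in this coupling, reproduces $\rn{y}^{\rn{\sigma}, \rn{U}, \rn{U'}}$ pointwise. By part~(i) inside the coupling, the two indicator conditions coincide with $\alpha \in \cG(\rn{\sigma}, \rn{U}, \rn{U'})$, which disposes of the $\bot$-branch, and for such $\alpha$ it remains to show
\begin{align*}
  \phi_k\bigl(\rn{\sigma}_\alpha^*(\alpha^*(\breve{\rn{x}}, \breve{\rn{x'}}))\bigr)
  & =
  (\alpha \comp \rn{\sigma}_\alpha)^*(\rn{x}, \rn{x'}).
\end{align*}
Evaluating at coordinate $1^A$ with $D \df \alpha(\rn{\sigma}_\alpha(A))$, this reduces to the equality $\breve{\rn{x}}_D = \rn{x}_{(\alpha \comp \rn{\sigma}_\alpha) \rest_A}$; the hypothesis $\alpha \in \cG$ forces $\rn{U}_D = A$, so both sides evaluate $\rn{x}$ at injections $A \to D$. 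Equality of these two injections follows from the identity $\alpha \comp \rn{\sigma}_\alpha = \rn{\sigma} \comp \iota_{\rn{\sigma}^{-1}(\im \alpha), m}$ together with the key observation $\iota_{\rn{\sigma}^{-1}(\im \alpha), m}(A) = \rn{\sigma}^{-1}(D)$, which by uniqueness of increasing enumerations yields $\iota_{\rn{\sigma}^{-1}(\im \alpha), m} \comp \iota_{A, k} = \iota_{\rn{\sigma}^{-1}(D), m}$; post-composing with $\rn{\sigma}$ gives the identity of injections. The principal obstacle will be precisely this bookkeeping, which has to mediate between the partite indexing by $r_k(m)$ and the non-partite indexing by $r(m, k)$; once it is in place, both parts of the claim fall out from routine marginal distribution checks and the pointwise equality of coupled variables.
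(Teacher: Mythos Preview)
Your proposal is correct and follows essentially the same approach as the paper's proof: part~(i) is the same definition-unwinding, and for part~(ii) you first establish the marginal distributional equality without the $\rn{\widetilde{y}}$ component (the paper's~\eqref{eq:dist2:sampleeasy}) via the same coordinate-wise check exploiting that the indexing functions $f_{C,U_C}$ have distinct images, then upgrade to the full statement by verifying the pointwise identity~\eqref{eq:alphasigmaalpha} via the same increasing-enumeration bookkeeping (your identity $\iota_{\rn{\sigma}^{-1}(\im\alpha),m}\comp\iota_{A,k}=\iota_{\rn{\sigma}^{-1}(D),m}$ is the paper's~\eqref{eq:dist2:iota}). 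The only cosmetic difference is that you phrase the marginal step as ``building a coupling'' while the paper phrases it as a direct distributional check, but the underlying computations coincide.
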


  \begin{proofof}{Claim~\ref{clm:dist2}}
    For item~\ref{clm:dist2:jcG}, note that
    \begin{align*}
      \alpha^*(\rn{\widetilde{j}})=\alpha^*(\rn{\widetilde{j'}})\in
      \cJ_{\rn{\widetilde{\sigma}}_\alpha}
      & \iff
      \forall C\in r(k),
      B^{\lvert C\rvert}_{\rn{\widetilde{j}}_{\alpha(C)}}
      = B^{\lvert C\rvert}_{\rn{\widetilde{j'}}_C}
      = \rn{\widetilde{\sigma}}_\alpha^{-1}(C)
      \\
      & \iff
      \forall C\in r(k),
      \rn{\widetilde{U}}_{\alpha(C)} = \rn{\widetilde{U'}}_{\alpha(C)}
      = \rn{\widetilde{\sigma}}_\alpha^{-1}(C)
      \iff
      \alpha\in\cG(\rn{\widetilde{\sigma}},\rn{\widetilde{U}},\rn{\widetilde{U'}}).
    \end{align*}

    \medskip

    For item~\ref{clm:dist2:sample}, let us first prove that
    \begin{align}\label{eq:dist2:sampleeasy}
      (\pi_{m,k}(\rn{\widetilde{x}},\rn{\widetilde{x'}}),
      \rn{\widetilde{\sigma}},\rn{\widetilde{U}},\rn{\widetilde{U'}})
      & \sim
      (\pi_{m,k}(\rn{x}^{\rn{\sigma},\rn{U}},(\rn{x'})^{\rn{\sigma},\rn{U'}}),
      \rn{\sigma},\rn{U},\rn{U'}).
    \end{align}

    Since $\rn{\widetilde{\sigma}}$ is independent from $(\rn{\widetilde{x}},\rn{\widetilde{x'}})$
    and $\rn{\sigma}$ is independent from $(\rn{x},\rn{x'})$ and both are uniformly distributed on
    $S_m$, it suffices to show that for every $\sigma\in S_k$, we have
    \begin{align*}
      (\pi_{m,k}(\rn{\widetilde{x}},\rn{\widetilde{x'}}),\rn{\widetilde{U}},\rn{\widetilde{U'}})
      & \sim
      (\pi_{m,k}(\rn{x}^{\sigma,\rn{U}},(\rn{x'})^{\sigma,\rn{U'}}),\rn{U},\rn{U'}).
    \end{align*}
    Since $(\rn{\widetilde{x}},\rn{\widetilde{U}})$ is independent from
    $(\rn{\widetilde{x'}},\rn{\widetilde{U'}})$ and $(\rn{x}^{\sigma,\rn{U}},\rn{U})$ is independent
    from $((\rn{x'})^{\sigma,\rn{U'}},\rn{U'})$, it suffices to show
    \begin{align*}
      (\pi_{m,k}(\rn{\widetilde{x}}),\rn{\widetilde{U}})
      & \sim
      (\pi_{m,k}(\rn{x}^{\sigma,\rn{U}}),\rn{U}),
      &
      (\pi_{m,k}(\rn{\widetilde{x'}}),\rn{\widetilde{U'}})
      & \sim
      (\pi_{m,k}((\rn{x'})^{\sigma,\rn{U'}}),\rn{U'}).
    \end{align*}
    We show only the former as the latter has an analogous proof.

    First, for $C\in r(m,k)$ and $W\in\binom{[k]}{\lvert C\rvert}$, let $f_{C,W}\colon W\to[m]$ be
    given by
    \begin{align*}
      f_{C,W} & \df \sigma\comp\iota_{\sigma^{-1}(C),m}\comp\iota_{W,k}^{-1}
    \end{align*}
    so that $\im(f_{C,W})=C$.

    Define $\Psi\colon r(m,k)\times\prod_{C\in r(m,k)}\binom{[k]}{\lvert C\rvert}\to r_k(m)$ by
    \begin{align*}
      \Psi(C,U) & \df f_{C,U_C}
    \end{align*}
    and note that since $\im(f_{C,U_C})=C$, it follows that if $C,C'\in r(m,k)$ are distinct and
    $U,U'\in\prod_{D\in r(m,k)}\binom{[k]}{\lvert D\rvert}$ are any elements (distinct or not), then
    $\Psi(C,U)\neq\Psi(C',U')$.

    Since for every $C\in r(m,k)$, we have
    \begin{align*}
      \rn{x}^{\sigma,\rn{U}}_C
      & =
      \rn{x}_{f_{C,\rn{U}_C}}
      =
      \rn{x}_{\Psi(C,\rn{U})},
    \end{align*}
    it follows that the coordinates of $\rn{x}^{\sigma,\rn{U}}$ are mutually independent (as they
    correspond to different coordinates of $\rn{x}$). Since $\rn{x}^{\sigma,\rn{U}}_C$ is
    $((\rn{x}_{f_{C,W}} \mid W\in\binom{[k]}{\lvert C\rvert}),\rn{U}_C)$-measurable and the
    coordinates of $(\rn{\widetilde{x}},\rn{\widetilde{U}})$ are also mutually independent, it
    suffices to show for every $C\in r(m,k)$, we have
    \begin{align*}
      (\rn{\widetilde{x}}_C,\rn{\widetilde{U}}_C) & \sim (\rn{x}^{\sigma,\rn{U}}_C,\rn{U}_C).
    \end{align*}
    By tracking definitions, it suffices to show that
    \begin{align*}
      \PP_{\rn{x},\rn{U}}[
        \rn{x}^{\sigma,\rn{U}}_C\in V
        \land\rn{U}_C = B^{\lvert C\rvert}_j
      ]
      & =
      \binom{k}{\lvert C\rvert}^{-1}\cdot\mu_{B^{\lvert C\rvert}_j}(V)
      \qquad
      (= \nu_{\lvert C\rvert}(V\times\{j\}))
    \end{align*}
    for every measurable $V\subseteq\Omega_{\lvert C\rvert}$ and every $j\in[\binom{k}{\lvert C\rvert}]$.

    But indeed, note that
    \begin{align*}
      \PP_{\rn{x},\rn{U}}[
        \rn{x}^{\sigma,\rn{U}}_C\in V
        \land\rn{U}_C = B^{\lvert C\rvert}_j
      ]
      & =
      \binom{k}{\lvert C\rvert}^{-1}\cdot
      \PP_{\rn{x},\rn{U}}[
        \rn{x}^{\sigma,\rn{U}}_C\in V
        \given
        \rn{U}_C = B^{\lvert C\rvert}_j
      ]
      \\
      & =
      \binom{k}{\lvert C\rvert}^{-1}\cdot
      \PP_{\rn{x}}[\rn{x}_{f_{C,B^{\lvert C\rvert}_j}}\in V]
      \\
      & =
      \binom{k}{\lvert C\rvert}^{-1}\cdot
      \mu_{\dom(f_{C,B^{\lvert C\rvert}_j})}(V)
      \\
      & =
      \binom{k}{\lvert C\rvert}^{-1}\cdot\mu_{B^{\lvert C\rvert}_j}(V),
    \end{align*}
    as desired. This concludes the proof of~\eqref{eq:dist2:sampleeasy}.

    Let us now show the upgraded version of~\eqref{eq:dist2:sampleeasy},
    equation~\eqref{eq:dist2:sample}.

    By item~\ref{clm:dist2:jcG}, we have
    \begin{align*}
      \rn{\widetilde{y}}_\alpha
      & \df
      \begin{dcases*}
        F(\phi_k(\rn{\widetilde{\sigma}}_\alpha^*(\alpha^*(
        \rn{\widetilde{x}},\rn{\widetilde{x'}}))))_{\rn{\widetilde{\sigma}}_\alpha^{-1}},
        & if $\alpha\in\cG(\rn{\widetilde{\sigma}},\rn{\widetilde{U}},\rn{\widetilde{U'}})$,
        \\
        \bot, & otherwise.
      \end{dcases*}
    \end{align*}

    We now define a random element $\rn{\upsilon}$ using the same formula above but replacing
    $(\rn{\widetilde{x}},\rn{\widetilde{x'}},\rn{\widetilde{\sigma}},\rn{\widetilde{U}},\rn{\widetilde{U'}})$
    with $(\rn{x}^{\rn{\sigma},\rn{U}},(\rn{x'})^{\rn{\sigma},\rn{U'}},\rn{\sigma},\rn{U},\rn{U'})$,
    that is, for every $\alpha\in([m])_k$, let
    \begin{align*}
      \rn{\upsilon}_\alpha & \df
      \begin{dcases*}
        F(\phi_k(\rn{\sigma}_\alpha^*(\alpha^*(
        \rn{x}^{\rn{\sigma},\rn{U}},(\rn{x'})^{\rn{\sigma},\rn{U'}}))))_{\rn{\sigma}_\alpha^{-1}},
        & if $\alpha\in\cG(\rn{\sigma},\rn{U},\rn{U'})$,
        \\
        \bot, & otherwise.
      \end{dcases*}
    \end{align*}

    By~\eqref{eq:dist2:sampleeasy}, to show~\eqref{eq:dist2:sample} it suffices to show that
    $\rn{\upsilon}_\alpha = \rn{y}^{\rn{\sigma},\rn{U},\rn{U'}}_\alpha$ for every $\alpha\in([m])_k$.

    Clearly, when $\alpha\notin\cG(\rn{\sigma},\rn{U},\rn{U'})$, then both $\rn{\upsilon}_\alpha$
    and $\rn{y}^{\rn{\sigma},\rn{U},\rn{U'}}_\alpha$ take the value $\bot$. Suppose then that
    $\alpha\in\cG(\rn{\sigma},\rn{U},\rn{U'})$ and note that
    \begin{align*}
      \rn{y}^{\rn{\sigma},\rn{U},\rn{U'}}_\alpha
      & =
      (\rn{y}_{\alpha\comp\rn{\sigma}_\alpha})_{\rn{\sigma}_\alpha^{-1}}
      =
      F((\alpha\comp\rn{\sigma}_\alpha)^*(\rn{x},\rn{x'}))_{\rn{\sigma}_\alpha^{-1}},
      \\
      \rn{\upsilon}_\alpha
      & =
      F(\phi_k(\rn{\sigma}_\alpha^*(\alpha^*(
      \rn{x}^{\rn{\sigma},\rn{U}},(\rn{x'})^{\rn{\sigma},\rn{U'}}))))_{\rn{\sigma}_\alpha^{-1}},
    \end{align*}
    so it suffices to prove that
    \begin{align}\label{eq:alphasigmaalpha}
      (\alpha\comp\rn{\sigma}_\alpha)^*(\rn{x},\rn{x'})
      & =
      \phi_k(\rn{\sigma}_\alpha^*(\alpha^*(\rn{x}^{\rn{\sigma},\rn{U}},(\rn{x'})^{\rn{\sigma},\rn{U'}}))).
    \end{align}

    Let $1^C\in r_k(1)$ be the unique function $C\to[1]$ and let us compute the coordinate
    corresponding to $1^C$ of both sides of~\eqref{eq:alphasigmaalpha}. The coordinate $1^C$ of the
    left-hand side of~\eqref{eq:alphasigmaalpha} is:
    \begin{align}\label{eq:dist2:xx'1C}
      (\alpha\comp\rn{\sigma}_\alpha)^*(\rn{x},\rn{x'})_{1^C}
      & =
      (\rn{x},\rn{x'})_{\alpha\comp\rn{\sigma}_\alpha\rest_C}
      =
      (\rn{x},\rn{x'})_{\rn{\sigma}\comp\iota_{\im(\rn{\sigma}^{-1}\comp\alpha),m}\rest_C},
    \end{align}
    where the second equality follows from~\eqref{eq:taualpha}. 

    The coordinate $1^C$ of the right-hand side of~\eqref{eq:alphasigmaalpha} is:
    \begin{equation}\label{eq:dist2:phik1C}
      \begin{aligned}
        & \!\!\!\!\!\!
        \phi_k(\rn{\sigma}_\alpha^*(\alpha^*(\rn{x}^{\rn{\sigma},\rn{U}},(\rn{x'})^{\rn{\sigma},\rn{U'}})))_{1^C}
        \\
        & =
        (\alpha\comp\rn{\sigma}_\alpha)^*(\rn{x}^{\rn{\sigma},\rn{U}},(\rn{x'})^{\rn{\sigma},\rn{U'}})_C
        \\
        & =
        (\rn{x}^{\rn{\sigma},\rn{U}},(\rn{x'})^{\rn{\sigma},\rn{U'}})_{(\alpha\comp\rn{\sigma}_\alpha)(C)}
        \\
        & =
        (\rn{x}_{\rn{\sigma}\comp\iota_{\rn{\sigma}^{-1}((\alpha\comp\rn{\sigma}_\alpha)(C)),m}\comp\iota_{\rn{U}_{(\alpha\comp\rn{\sigma}_\alpha)(C)},k}^{-1}},
        \rn{x'}_{\rn{\sigma}\comp\iota_{\rn{\sigma}^{-1}((\alpha\comp\rn{\sigma}_\alpha)(C)),m}\comp\iota_{\rn{U'}_{(\alpha\comp\rn{\sigma}_\alpha)(C)},k}^{-1}}).
      \end{aligned}
    \end{equation}

    Since $\alpha\in\cG(\rn{\sigma},\rn{U},\rn{U'})$, we have
    \begin{align*}
      \rn{U}_{(\alpha\comp\rn{\sigma}_\alpha)(C)}
      & =
      \rn{U'}_{(\alpha\comp\rn{\sigma}_\alpha)(C)}
      =
      \rn{\sigma}_\alpha^{-1}(\rn{\sigma}_\alpha(C))
      =
      C,
    \end{align*}
    so the coordinate of both $\rn{x}$ and $\rn{x'}$ in the last expression
    of~\eqref{eq:dist2:phik1C} is
    \begin{align*}
      \rn{\sigma}\comp\iota_{(\rn{\sigma}^{-1}\comp\alpha\comp\rn{\sigma}_\alpha)(C),m}\comp\iota_{C,k}^{-1}.
    \end{align*}
    Comparing the above with~\eqref{eq:dist2:xx'1C}, it suffices to show
    \begin{align}\label{eq:dist2:iota}
      \iota_{(\rn{\sigma}^{-1}\comp\alpha\comp\rn{\sigma}_\alpha)(C),m}\comp\iota_{C,k}^{-1}
      & =
      \iota_{\im(\rn{\sigma}^{-1}\comp\alpha),m}\rest_C.
    \end{align}

    But this equality is easily checked: both sides are increasing functions of the form $C\to[m]$,
    so it suffices to check that their image match. The image of the right-hand side
    of~\eqref{eq:dist2:iota} is clearly $\iota_{\im(\rn{\sigma}^{-1}\comp\alpha),m}(C)$. For the
    left-hand side, first note that~\eqref{eq:taualpha} gives
    \begin{align*}
      (\rn{\sigma}^{-1}\comp\alpha\comp\rn{\sigma}_\alpha)
      & =
      \iota_{\im(\rn{\sigma}^{-1}\comp\alpha),m},
    \end{align*}
    so the left-hand side of~\eqref{eq:dist2:iota} is
    \begin{align*}
      \iota_{\iota_{\im(\rn{\sigma}^{-1}\comp\alpha),m}(C),m}\comp\iota_{C,k}^{-1}
    \end{align*}
    whose image is clearly $\iota_{\im(\rn{\sigma}^{-1}\comp\alpha),m}(C)$, as desired.
  \end{proofof}

  We can finally conclude the proof of the proposition. Let $\epsilon,\delta\in(0,1)$ and let
  \begin{align*}
    m
    & \geq
    m^{\agPACr}_{\cH^{\kpart},\ell^{\kpart},\cA'}(\epsilon,\delta)
    \df
    m^{\agPACr}_{\cH,\ell,\cA}\left(\frac{p\cdot\epsilon}{2},\widetilde{\delta}_\ell(\epsilon,\delta)\right)
  \end{align*}
  be an integer, where
  \begin{align*}
    \widetilde{\delta}_\ell(\epsilon,\delta)
    & \df
    \min\left\{\frac{\epsilon\delta}{2\lVert\ell\rVert_\infty}, \frac{1}{2}\right\}.
  \end{align*}
  is given by~\eqref{eq:kpart2:wdelta}. (The only reason for taking minimum with $1/2$ is to ensure
  that the number above is in $(0,1)$.)

  Let $(\rn{x},\rn{x'})\sim(\mu\otimes\mu')^m$ and let $(\rn{\sigma},\rn{U},\rn{U'})$ be picked
  uniformly at random in $S_m\times\prod_{C\in r(m,k)} \binom{[k]}{\lvert C\rvert}\times\prod_{C\in
    r(m,k)} \binom{[k]}{\lvert C\rvert}$, independently from $\rn{x}$ and $\rn{x'}$. Let also
  $\rn{y}\df F^*_m(\rn{x},\rn{x'})$. Let further
  $(\rn{\widehat{x}},\rn{\widehat{x'}},\rn{z},\rn{z'},\rn{w})\sim
  (\widehat{\mu}\otimes\widehat{\mu'}\otimes\lambda\otimes\lambda\otimes\lambda)^m$ and
  $\rn{\widehat{y}}\df\widehat{F}^*_m(\rn{\widehat{x}},\rn{\widehat{x'}},\rn{z},\rn{z'},\rn{w})$. Finally,
  let $\rn{b}$ be picked uniformly in $[R_\cA(m)]=[R_{\cA'}(m)]$, independently from all previous
  random elements and let
  \begin{align*}
    I & \df \inf_{H\in\cH^{\kpart}} L_{\mu,\mu',F,\ell^{\kpart}}(H), &
    \widehat{I} & \df \inf_{H\in\cH} L_{\widehat{\mu},\widehat{\mu''},\widehat{F},\ell}(H).
  \end{align*}

  Recall that by Claim~\ref{clm:dist}\ref{clm:dist:infloss}, we have
  \begin{align*}
    \widehat{I} & = (1-p)\cdot C_{\ell,\bot,F} + p\cdot I.
  \end{align*}

  We claim that
  \begin{align}\label{eq:kpart2:PP}
    \PP_{\rn{x},\rn{x'},\rn{\sigma},\rn{U},\rn{U'}}\biggl[
      \EE_{\rn{b}}\Bigl[
        L_{\mu,\mu',F,\ell^{\kpart}}\bigl(
        \cA'(\rn{x},\rn{y},\rn{b},\rn{\sigma},\rn{U},\rn{U'})
        \bigr)
        \Bigr]
      \leq I + \frac{\epsilon}{2}
      \biggr]
    & \geq
    1 - \widetilde{\delta}_\ell(\epsilon,\delta).
  \end{align}
  Indeed, note that
  \begin{align*}
    & \!\!\!\!\!\!
    \PP_{\rn{x},\rn{x'},\rn{\sigma},\rn{U},\rn{U'}}\biggl[
      \EE_{\rn{b}}\Bigl[
        L_{\mu,\mu',F,\ell^{\kpart}}\bigl(
        \cA'(\rn{x},\rn{y},\rn{b},\rn{\sigma},\rn{U},\rn{U'})
        \bigr)
        \Bigr]
      \leq I + \frac{\epsilon}{2}
      \biggr]
    \\
    & =
    \PP_{\rn{x},\rn{x'},\rn{\sigma},\rn{U},\rn{U'}}\biggl[
      \EE_{\rn{b}}\Bigl[
        L_{\mu,\mu',F,\ell^{\kpart}}\bigl(
        \cA(\rn{x}^{\rn{\sigma},\rn{U}},\rn{y}^{\rn{\sigma},\rn{U},\rn{U'}},\rn{b})^{\kpart}
        \bigr)
        \Bigr]
      \leq I + \frac{\epsilon}{2}
      \biggr]
    \\
    & =
    \PP_{\rn{\widehat{x}},\rn{\widehat{x'}},\rn{z},\rn{z'},\rn{w}}\biggl[
      \EE_{\rn{b}}\Bigl[
        L_{\mu,\mu',F,\ell^{\kpart}}\bigl(
        \cA(\rn{\widehat{x}},\rn{\widehat{y}},\rn{b})^{\kpart}
        \bigr)
        \Bigr]
      \leq I + \frac{\epsilon}{2}
      \biggr]
    \\
    & =
    \PP_{\rn{\widehat{x}},\rn{\widehat{x'}},\rn{z},\rn{z'},\rn{w}}\biggl[
      \EE_{\rn{b}}\Bigl[
        L_{\widehat{\mu},\widehat{\mu''},\widehat{F},\ell}\bigl(
        \cA(\rn{\widehat{x}},\rn{\widehat{y}},\rn{b})
        \bigr)
        \Bigr]
      \leq \widehat{I} + \frac{p\cdot\epsilon}{2}
      \biggr]
    \\
    & \geq 1 - \widetilde{\delta}_\ell(\epsilon,\delta),
  \end{align*}
  where the first equality follows from the definition of $\cA'$, the second equality follows from
  Claim~\ref{clm:dist}\ref{clm:dist:tilde} and Claim~\ref{clm:dist2}\ref{clm:dist2:sample}, the
  third equality follows from Claim~\ref{clm:dist}, items~\ref{clm:dist:loss}
  and~\ref{clm:dist:infloss} and the inequality follows since $\cA$ is a randomized agnostic $k$-PAC
  learner for $\cH$ and $m\geq
  m^{\agPACr}_{\cH,\ell,\cA}(p\cdot\epsilon/2,\widetilde{\delta}_\ell(\epsilon,\delta))$. Thus~\eqref{eq:kpart2:PP}
  holds.

  Note now that we can rewrite~\eqref{eq:kpart2:PP} as
  \begin{align*}
    \widetilde{\delta}_\ell(\epsilon,\delta)
    & \geq
    \PP_{\rn{x},\rn{x'},\rn{\sigma},\rn{U},\rn{U'}}\biggl[
      \EE_{\rn{b}}\Bigl[
        L_{\mu,\mu',F,\ell^{\kpart}}\bigl(
        \cA'(\rn{x},\rn{y},\rn{b},\rn{\sigma},\rn{U},\rn{U'})
        \bigr)
        \Bigr]
      > I + \frac{\epsilon}{2}
      \biggr]
    \\
    & =
    \EE_{\rn{x},\rn{x'}}\Bigl[
      \PP_{\rn{\sigma},\rn{U},\rn{U'}}\bigl[
        E(\rn{x},\rn{x'},\rn{\sigma},\rn{U},\rn{U'})
        \bigr]
      \Bigr],
  \end{align*}
  where $E(\rn{x},\rn{x'},\rn{\sigma},\rn{U},\rn{U'})$ is the event
  \begin{align*}
    \EE_{\rn{b}}\Bigl[
      L_{\mu,\mu',F,\ell^{\kpart}}\bigl(
      \cA'(\rn{x},\rn{y},\rn{b},\rn{\sigma},\rn{U},\rn{U'})
      \bigr)
      \Bigr]
    > I + \frac{\epsilon}{2}.
  \end{align*}

  By Markov's Inequality, we get
  \begin{align}\label{eq:kpart2:Markov}
    \PP_{\rn{x},\rn{x'}}\biggl[
      \PP_{\rn{\sigma},\rn{U},\rn{U'}}\bigl[
        E(\rn{x},\rn{x'},\rn{\sigma},\rn{U},\rn{U'})
        \bigr]
      > \frac{\epsilon}{2\lVert\ell\rVert_\infty}
      \biggr]
    & \leq
    \frac{2\cdot\lVert\ell\rVert_\infty\cdot\widetilde{\delta}_\ell(\epsilon,\delta)}{\epsilon}
    \leq
    \delta.
  \end{align}

  Since the total loss is bounded by $\lVert\ell\rVert_\infty$, we have the following implication:
  \begin{align*}
    & \!\!\!\!\!\!
    \PP_{\rn{\sigma},\rn{U},\rn{U'}}\bigl[
        E(\rn{x},\rn{x'},\rn{\sigma},\rn{U},\rn{U'})
        \bigr]
    \leq\frac{\epsilon}{2\lVert\ell\rVert_\infty}
    \\
    & \implies
    \EE_{\rn{\sigma},\rn{U},\rn{U'},\rn{b}}\Bigl[
      L_{\mu,\mu',F,\ell^{\kpart}}\bigl(
      \cA'(\rn{x},\rn{y},\rn{b},\rn{\sigma},\rn{U},\rn{U'})
      \bigr)
      \Bigr]
    \leq
    I + \frac{\epsilon}{2} + \frac{\epsilon}{2\lVert\ell\rVert_\infty}\cdot\lVert\ell\rVert_\infty
    =
    I + \epsilon.
  \end{align*}

  Applying the contra-positive of the above to~\eqref{eq:kpart2:Markov} gives
  \begin{align*}
    \PP_{\rn{x},\rn{x'}}\biggl[
      \EE_{\rn{\sigma},\rn{U},\rn{U'},\rn{b}}\Bigl[
        L_{\mu,\mu',F,\ell^{\kpart}}\bigl(
        \cA'(\rn{x},\rn{y},\rn{b},\rn{\sigma},\rn{U},\rn{U'})
        \bigr)
        \Bigr]
      > I + \epsilon
      \biggr]
    & \leq
    \delta.
  \end{align*}
  Therefore, $\cA'$ is a randomized agnostic $k$-PAC learner for $\cH^{\kpart}$.
\end{proof}

\begin{remark}
  Even though the upper bound of the extra randomness needed in Proposition~\ref{prop:kpart2} looks
  pretty bad ($m!\cdot k^{2\cdot k\cdot m^k}$), we remind the reader that $R_\cA$ encodes the size
  of the space used for randomness. This means that if our randomness were to be encoded by uniform
  random bits, then we would need at most $O(k\cdot m^k\cdot\ln(k) + m\cdot\ln(m))$ such random bits
  (some care is needed here since the randomness used in the reduction is not necessarily a power of
  $2$, but one can solve this by adjusting $\delta$). Once this is observed, it is then easy to
  check that the reduction above is a randomized polynomial-time reduction.
\end{remark}

Proposition~\ref{prop:neutsymb} below shows how to produce a neutral symbol for a bounded flexible
agnostic loss function without disrupting agnostic $k$-PAC learnability. This in particular shows
that flexibility is morally equivalent to the existence of neutral symbols (see
Remark~\ref{rmk:neutsymb}). Before we prove it, let us finally give an intuition for the definition
of flexibility as it will serve as an intuition for the proof of Proposition~\ref{prop:neutsymb}
below.

\begin{remark}\label{rmk:flexibilityintuition}
  If $(\Sigma,\nu,G,\cN)$ witnesses the flexibility of a $k$-ary agnostic loss function
  $\ell\colon\cH\times\cE_k(\Omega)\times\Lambda^{S_k}\to\RR_{\geq 0}$, then for a carefully
  constructed $k$-ary agnostic loss function $\ell^{\Sigma,\nu,G,\bot}$ that has a neutral symbol
  $\bot$, an agnostic adversary is able to use the extra Borel template $\Sigma$ and $G$ to sample
  points in $\Lambda$ that effectively simulate the neutral symbol $\bot$ so that the total loss
  with respect to $\ell^{\Sigma,\nu,G,\bot}$ of the sample that can use $\bot$ is equal to the total
  loss with respect to $\ell$ of the simulated sample.

  On the other hand, the existence of the function $\cN$ and the condition that for every
  $x\in\cE_m(\Omega)$, if $\rn{z}\sim\nu^m$ and $\rn{b}$ is picked uniformly in $[R_\cN(m)]$, then
  $G^*_m(x,\rn{z}) \sim \cN(x,\rn{b})$ implies that a randomized algorithm is also able to simulate
  the neutral symbol $\bot$ using its finite source of randomness. This will allow us to perform a
  trick similar to that of Proposition~\ref{prop:kpart2}: a randomized algorithm that needs to
  agnostically learn with respect to $\ell^{\Sigma,\nu,G,\bot}$ can simulate all occurrences of
  $\bot$ with $\cN$ using some extra randomness (exactly a multiplicative factor $R_\cN(m)$ of extra
  randomness) and pass the simulation to a randomized $k$-PAC learner with respect to $\ell$.
\end{remark}

\begin{proposition}[Creating neutral symbols]\label{prop:neutsymb}
  Let $k\in\NN_+$, let $\Omega$ be a Borel ($k$-partite, respectively) template, let $\Lambda$ be a
  non-empty Borel space, let $\cH\subseteq\cF_k(\Omega,\Lambda)$ be a $k$-ary ($k$-partite,
  respectively) hypothesis class and let $\ell$ be a $k$-ary ($k$-partite, respectively) agnostic
  loss function. Suppose that $\ell$ is flexible and let $(\Sigma,\nu,G,\cN)$ witness its
  flexibility.

  Define $\Lambda'\df\Lambda\cup\{\bot\}$, where $\bot$ is a new element and $\Lambda'$ is equipped
  with the co-product $\sigma$-algebra. Let $\cH'$ be $\cH$ when we increase the codomain of its
  elements to $\Lambda'$; formally, let
  \begin{align*}
    \cH' & \df \{\iota\comp H \mid H\in\cH\},
  \end{align*}
  where $\iota\colon\Lambda\to\Lambda'$ is the inclusion map. Let also $\iota_\cH\colon\cH\to\cH'$
  be the bijection given by $\iota_\cH(H)\df\iota\comp H$. We also view $\Lambda^{([m])_k}$
  ($\Lambda^{[m]^k}$, respectively) as a subset of $(\Lambda')^{([m])_k}$ ($(\Lambda')^{[m]^k}$,
  respectively) naturally.

  In the non-partite case, let $\ell^{\Sigma,\nu,G,\bot}\colon\cH'\times\cE_k(\Omega)\times(\Lambda')^{S_k}\to\RR_{\geq 0}$
  be given by
  \begin{align*}
    \ell^{\Sigma,\nu,G,\bot}(H,x,y)
    & \df
    \begin{dcases*}
      \ell(\iota_\cH^{-1}(H),x,y), & if $\bot\notin\im(y)$,\\
      \ell^{\Sigma,\nu,G}(x), & if $\bot\in\im(y)$,
    \end{dcases*}
    \qquad (H\in\cH', x\in\cE_k(\Omega), y\in(\Lambda')^{S_k}),
  \end{align*}
  where $\ell^{\Sigma,\nu,G}\colon\cE_k(\Omega)\to\RR_{\geq 0}$ is given
  by~\eqref{eq:flexibleellSigmanuG} as per the definition of flexibility.

  In the partite case, let
  $\ell^{\Sigma,\nu,G,\bot}\colon\cH'\times\cE_1(\Omega)\times\Lambda'\to\RR_{\geq 0}$ be given by
  \begin{align*}
    \ell^{\Sigma,\nu,G,\bot}(H,x,y)
    & \df
    \begin{dcases*}
      \ell(\iota_\cH^{-1}(H),x,y), & if $y\neq\bot$,\\
      \ell^{\Sigma,\nu,G}(x), & if $y=\bot$,
    \end{dcases*}
    \qquad (H\in\cH', x\in\cE_1(\Omega), y\in\Lambda'),
  \end{align*}
  where $\ell^{\Sigma,\nu,G}\colon\cE_1(\Omega)\to\RR_{\geq 0}$ is given
  by~\eqref{eq:partiteflexibleellSigmanuG} as per the definition of flexibility.

  Then the following hold:
  \begin{enumerate}
  \item\label{prop:neutsymb:bounded} We have
    $\lVert\ell^{\Sigma,\nu,G,\bot}\rVert_\infty=\lVert\ell\rVert_\infty$.
  \item\label{prop:neutsymb:symm} In the non-partite case, if $\ell$ is symmetric, then so is
    $\ell^{\Sigma,\nu,G,\bot}$.
  \item\label{prop:neutsymb:ext} $\bot$ is a neutral symbol for $\ell^{\Sigma,\nu,G,\bot}$ with
    $\ell^{\Sigma,\nu,G,\bot}_\bot=\ell^{\Sigma,\nu,G}$ (see~\eqref{eq:flexibleellSigmanuG}
    and~\eqref{eq:neutsymbellbot} for the non-partite case and~\eqref{eq:partiteflexibleellSigmanuG}
    and~\eqref{eq:partiteneutsymbellbot} for the partite case).
  \item\label{prop:neutsymb:agPAC} Both in the non-partite and partite cases, if $\ell$ is bounded
    and $\cA$ is a randomized $k$-PAC learner for $\cH$ with respect to $\ell$, then there exists a
    randomized $k$-PAC learner $\cA'$ for $\cH'$ with respect to $\ell^{\Sigma,\nu,G,\bot}$ with
    \begin{align*}
      R_{\cA'}(m) & \df R_\cA(m)\cdot R_\cN(m),
      &
      m^{\agPACr}_{\cH',\ell^{\Sigma,\nu,G,\bot},\cA'}
      & \df
      m^{\agPACr}_{\cH,\ell,\cA}\left(\frac{\epsilon}{2}, \widetilde{\delta}_\ell(\epsilon,\delta)\right),
    \end{align*}
    where
    \begin{align}\label{eq:neutsymbdelta}
      \widetilde{\delta}_\ell(\epsilon,\delta)
      & \df
      \min\left\{\frac{\epsilon\delta}{2\lVert\ell\rVert_\infty}, \frac{1}{2}\right\}.
    \end{align}
  \end{enumerate}
\end{proposition}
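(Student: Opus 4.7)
The plan is to handle items~(1)--(3) by direct inspection and to concentrate the substantive work on item~(4). For~(1), $\ell^{\Sigma,\nu,G,\bot}(H,x,y)$ equals $\ell(\iota_\cH^{-1}(H),x,y)$ when $\bot\notin\im(y)$ and equals $\ell^{\Sigma,\nu,G}(x)$ when $\bot\in\im(y)$; the latter is an average of values of $\ell$ and hence bounded by $\lVert\ell\rVert_\infty$, giving $\lVert\ell^{\Sigma,\nu,G,\bot}\rVert_\infty\leq\lVert\ell\rVert_\infty$, while the reverse inequality is trivial. For~(2) in the non-partite case, symmetry of $\ell^{\Sigma,\nu,G,\bot}$ on the non-$\bot$ branch is inherited from that of $\ell$; on the $\bot$ branch it reduces to $\ell^{\Sigma,\nu,G}(\sigma^*(x))=\ell^{\Sigma,\nu,G}(x)$, which follows by the change of variables $\rn{z}\mapsto\sigma^*(\rn{z})$ (measure-preserving for $\nu^k$), the $S_k$-equivariance of $G^*_k$ from Lemma~\ref{lem:F*Vequiv}, and the symmetry of $\ell$, noting that the predicate $\bot\in\im(y)$ is $S_k$-invariant. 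Item~(3) is immediate from the definition: when $\bot$ appears in $y$, $\ell^{\Sigma,\nu,G,\bot}(H,x,y)=\ell^{\Sigma,\nu,G}(x)$ is independent of both $H$ and the non-$\bot$ coordinates of $y$.

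For~(4), the core step is to build an agnostic instance $(\mu,\mu'\otimes\nu,\widehat{F})$ for $\cH$ with loss $\ell$ whose total losses match exactly those of the given instance $(\mu,\mu',F)$ for $\cH'$ with loss $\ell^{\Sigma,\nu,G,\bot}$. In the non-partite case, I define $\widehat{F}\in\cF_k(\Omega\otimes\Omega'\otimes\Sigma,\Lambda)$ by
\[
  \widehat{F}(x,x',z)\df
  \begin{cases}
    F(x,x'), & \text{if } F(\sigma^*(x,x'))\neq\bot\text{ for every }\sigma\in S_k,\\
    G(x,z), & \text{otherwise.}
  \end{cases}
\]
Since the defining condition depends only on the $S_k$-orbit of $(x,x')$, one sees that $\widehat{F}^*_k(x,x',z)$ equals $F^*_k(x,x')$ when $\bot\notin\im F^*_k(x,x')$ and equals $G^*_k(x,z)$ otherwise. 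Splitting the expectation defining $L_{\mu,\mu'\otimes\nu,\widehat{F},\ell}(H)$ over the event $\{\bot\in\im F^*_k(\rn{x},\rn{x'})\}$ and using the flexibility identity $\ell^{\Sigma,\nu,G}(x)=\EE_{\rn{z}\sim\nu^k}[\ell(H,x,G^*_k(x,\rn{z}))]$ then yields $L_{\mu,\mu'\otimes\nu,\widehat{F},\ell}(H)=L_{\mu,\mu',F,\ell^{\Sigma,\nu,G,\bot}}(\iota_\cH(H))$ for every $H\in\cH$; the infima match by bijectivity of $\iota_\cH$. The partite case is analogous and simpler, with a coordinatewise replacement.

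The algorithm $\cA'$, with $R_{\cA'}(m)\df R_\cA(m)\cdot R_\cN(m)$, simulates a sample from $\widehat{F}^*_m$ using the auxiliary randomness $\rn{b}'\in[R_\cN(m)]$: on input $(x,y',\rn{b},\rn{b}')$ it computes $\rn{w}\df\cN(x,\rn{b}')\in\Lambda^{([m])_k}$, forms $\rn{y}\in\Lambda^{([m])_k}$ by replacing, for every $k$-subset $U\subseteq[m]$ such that $(y'_\beta)_{\beta\in(U)_k}$ contains a $\bot$, the whole block by $(\rn{w}_\beta)_{\beta\in(U)_k}$ (and keeping $y'_\beta$ otherwise), and returns $\iota_\cH(\cA(x,\rn{y},\rn{b}))$. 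The assumption $\cN(x,\rn{b}')\sim G^*_m(x,\rn{z})$ (for $\rn{z}\sim\nu^m$), combined with the orbit-symmetric replacement rule, implies that when $(\rn{x},\rn{x'})\sim(\mu\otimes\mu')^m$ and $y'=F^*_m(\rn{x},\rn{x'})$, the pair $(\rn{x},\rn{y})$ has the same joint distribution as $(\rn{x},\widehat{F}^*_m(\rn{x},\rn{x'},\rn{z}))$. Applying the agnostic $k$-PAC guarantee of $\cA$ to $(\mu,\mu'\otimes\nu,\widehat{F})$ at parameters $(\epsilon/2,\widetilde{\delta}_\ell(\epsilon,\delta))$ and invoking the loss identity then gives $\PP_{\rn{x},\rn{x'},\rn{b}'}$-probability at least $1-\widetilde{\delta}_\ell(\epsilon,\delta)$ that $\EE_{\rn{b}}[L_{\mu,\mu',F,\ell^{\Sigma,\nu,G,\bot}}(\cA'(\rn{x},F^*_m(\rn{x},\rn{x'}),\rn{b},\rn{b}'))]\leq\inf_{H'\in\cH'}L_{\mu,\mu',F,\ell^{\Sigma,\nu,G,\bot}}(H')+\epsilon/2$. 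A Markov's inequality step exactly analogous to the final step of the proof of Proposition~\ref{prop:kpart2}, exploiting the boundedness of $\ell$ and the choice of $\widetilde{\delta}_\ell$ in~\eqref{eq:neutsymbdelta}, then moves $\rn{b}'$ into the inner expectation at the cost of the remaining $\epsilon/2$, producing the required guarantee for $\cA'$ at $(\epsilon,\delta)$. The principal obstacle is engineering $\widehat{F}$ so that the replacement is performed \emph{symmetrically} across each $S_k$-orbit in the non-partite case, which is exactly what matches the all-or-nothing form of $\ell^{\Sigma,\nu,G,\bot}$ whenever $\bot$ appears in a $k$-tuple; once this is in place, the algorithm design and the Markov step are direct analogues of techniques already developed in the proof of Proposition~\ref{prop:kpart2}.
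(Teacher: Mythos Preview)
Your proposal is correct and follows essentially the same route as the paper: the orbit-symmetric definition of $\widehat{F}$, the total-loss identity $L_{\mu,\mu'\otimes\nu,\widehat{F},\ell}(H)=L_{\mu,\mu',F,\ell^{\Sigma,\nu,G,\bot}}(\iota_\cH(H))$, the algorithm that uses $\cN$ to simulate $G^*_m$ and performs block replacement over $S_k$-orbits, and the final Markov-plus-boundedness step are all exactly what the paper does. Your treatment of items~(1)--(3) also matches the paper's direct verifications.
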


Before we start the proof, we remind the reader that the converse of item~\ref{prop:neutsymb:agPAC}
is covered by Proposition~\ref{prop:cod}\ref{prop:cod:agPAC} and does not require
flexibility. Furthermore, even though we did not state the version of item~\ref{prop:neutsymb:agPAC}
without randomness, the derandomization of Proposition~\ref{prop:derand} allows us to remove
randomness afterward.

\begin{proof}  
  Item~\ref{prop:neutsymb:bounded} is obvious.

  \medskip

  Item~\ref{prop:neutsymb:symm} is easily checked: let $H\in\cH$, $x\in\cE_k(\Omega)$,
  $y\in(\Lambda')^{S_k}$ and $\sigma\in S_k$.

  If $\bot\notin\im(y)$, then
  \begin{align*}
    \ell^{\Sigma,\nu,G,\bot}(H,\sigma^*(x),\sigma^*(y))
    & =
    \ell(\iota_\cH^{-1}(H),\sigma^*(x),\sigma^*(y))
    =
    \ell(\iota_\cH^{-1}(H),x,y)
    =
    \ell^{\Sigma,\nu,G,\bot}(H,x,y),
  \end{align*}
  where the second equality follows since $\ell$ is symmetric.

  If $\bot\in\im(y)$, then
  \begin{align*}
    \ell^{\Sigma,\nu,G,\bot}(H,\sigma^*(x),\sigma^*(y))
    & =
    \ell^{\Sigma,\nu,G}(\sigma^*(x))
    =
    \EE_{\rn{z}\sim\nu^k}[\ell(H,\sigma^*(x),G^*_k(\sigma^*(x),\rn{z}))]
    \\
    & =
    \EE_{\rn{z}\sim\nu^k}[\ell(H,x,G^*_k(x,(\sigma^{-1})^*(\rn{z})))]
    =
    \EE_{\rn{z}\sim\nu^k}[\ell(H,x,G^*_k(x,\rn{z}))]
    \\
    & =
    \ell^{\Sigma,\nu,G}(x)
    =
    \ell^{\Sigma,\nu,G,\bot}(H,x,y),
  \end{align*}
  where the third equality follows since $\ell$ is symmetric and the fourth equality follows since
  $(\sigma^{-1})^*(\rn{z})\sim\rn{z}$.

  \medskip

  Item~\ref{prop:neutsymb:ext} follows immediately from the definition of neutral symbol.

  \medskip

  We now prove item~\ref{prop:neutsymb:agPAC} in the non-partite case. Assume
  $\lVert\ell\rVert_\infty > 0$ (otherwise the result is trivial), let $\Omega'$ be a Borel
  template, let $\mu'\in\Pr(\Omega')$ and let $F'\in\cF_k(\Omega\otimes\Omega',\Lambda')$. Define
  $F\in\cF_k(\Omega\otimes\Omega'\otimes\Sigma,\Lambda)$ by
  \begin{align*}
    F(x,x',z)
    & \df
    \begin{dcases*}
      F'(x,x'), & if $\bot\notin\im((F')^*_k(x,x'))$,\\
      G(x,z), & otherwise.
    \end{dcases*}
    \qquad (x\in\cE_k(\Omega), x'\in\cE_k(\Omega'), z\in\cE_k(\Sigma)).
  \end{align*}

  Similarly to the proof of Proposition~\ref{prop:kpart2}, since $R_{\cA'}(m) = R_\cA(m)\cdot
  R_\cN(m)$ for each $m\in\NN$, by using an appropriate (fixed) bijection, we may assume that $\cA'$
  is of the form
  \begin{align*}
    \cA'\colon\bigcup_{m\in\NN}(\cE_m(\Omega)\times(\Lambda')^{([m])_k}\times[R_\cA(m)]\times[R_\cN(m)])
    \longrightarrow
    \cH'.
  \end{align*}
  We then define $\cA'$ as follows: for an input
  \begin{align*}
    (x,y,b,\widetilde{b})\in\cE_m(\Omega)\times(\Lambda')^{([m])_k}\times[R_\cA(m)]\times[R_\cN(m)],
  \end{align*}
  we let
  \begin{align*}
    \cA'(x,y,b,\widetilde{b})
    & \df
    \iota_\cH(\cA(x,y^{x,\widetilde{b}},b)),
  \end{align*}
  where
  \begin{align*}
    y^{x,\widetilde{b}}_\alpha
    & \df
    \begin{dcases*}
      y_\alpha,
      & if $y_\beta\neq\bot$ for every $\beta\in([m])_k$ with $\im(\alpha)=\im(\beta)$,
      \\
      \cN(x,\widetilde{b})_\alpha, & otherwise,
    \end{dcases*}
    \qquad (\alpha\in([m])_k).
  \end{align*}

  By assumption of flexibility, we know that for every $m\in\NN$, if
  $(\rn{x},\rn{x'},\rn{z})\sim(\mu\otimes\mu'\otimes\nu)^m$ and $(\rn{b},\rn{\widetilde{b}})$ is
  picked uniformly at random in $[R_\cA(m)]\times[R_\cN(m)]$ independently from
  $(\rn{x},\rn{x'},\rn{z})$, then
  \begin{align*}
    (\rn{x},\rn{x'},G^*_m(\rn{x},\rn{z}),\rn{b})
    & \sim
    (\rn{x},\rn{x'},\cN(\rn{x},\rn{\widetilde{b}}),\rn{b}),
  \end{align*}
  which in particular implies
  \begin{align}\label{eq:neutsymbdist}
    (\rn{x}, F^*_m(\rn{x},\rn{x'},\rn{z}), \rn{b})
    & \sim
    (\rn{x}, (F')^*_m(\rn{x},\rn{x'})^{\rn{x},\rn{\widetilde{b}}}, \rn{b}).
  \end{align}

  Note now that if $m=k$, $H\in\cH$, $E(\rn{x},\rn{x'})$ denotes the event
  $\bot\notin\im((F')^*_k(\rn{x},\rn{x'}))$ and $\overline{E}(\rn{x},\rn{x'})$ denotes its
  complement, then
  \begin{equation}\label{eq:neutsymbLL}
    \begin{aligned}
      L_{\mu,\mu',F',\ell^{\Sigma,\nu,G,\bot}}(\iota_\cH(H))
      & =
      \begin{multlined}[t]
        \PP_{\rn{x},\rn{x'}}[E(\rn{x},\rn{x'})]
        \cdot\EE_{\rn{x},\rn{x'}}[\ell(H,\rn{x},(F')^*_k(\rn{x},\rn{x'}))\given E(\rn{x},\rn{x'})]
        \\
        +
        \PP_{\rn{x},\rn{x'}}[\overline{E}(\rn{x},\rn{x'})]
        \cdot\EE_{\rn{x},\rn{x'}}[\ell^{\Sigma,\nu,G}(\rn{x})\given \overline{E}(\rn{x},\rn{x'})]
      \end{multlined}
      \\
      & =
      \begin{multlined}[t]
        \PP_{\rn{x},\rn{x'},\rn{z}}[E(\rn{x},\rn{x'})]
        \cdot\EE_{\rn{x},\rn{x'},\rn{z}}[\ell(H,\rn{x}, (F')^*_k(\rn{x},\rn{x'}))\given
          E(\rn{x},\rn{x'})]
        \\
        +
        \PP_{\rn{x},\rn{x'},\rn{z}}[\overline{E}(\rn{x},\rn{x'})]
        \cdot\EE_{\rn{x},\rn{x'},\rn{z}}[\ell(H,\rn{x}, G^*_k(\rn{x},\rn{z}))\given
          \overline{E}(\rn{x},\rn{x'})]
      \end{multlined}
      \\
      & =
      \EE_{\rn{x},\rn{x'},\rn{z}}[\ell(H,\rn{x}, F^*_k(\rn{x},\rn{x'},\rn{z}))]
      \\
      & =
      L_{\mu,\mu'\otimes\nu,F,\ell}(H),
    \end{aligned}
  \end{equation}
  where the first equality follows from the definition of $\ell^{\Sigma,\nu,G,\bot}$, the second
  equality follows from the definition of $\ell^{\Sigma,\nu,G}$ (see~\eqref{eq:flexibleellSigmanuG})
  and the third equality follows from the definition of $F$. This along with the fact that
  $\iota_\cH$ is bijective implies
  \begin{align*}
    \inf_{H\in\cH'} L_{\mu,\mu',F',\ell^{\Sigma,\nu,G,\bot}}(H) & = \inf_{H\in\cH} L_{\mu,\mu'\otimes\nu,F,\ell}(H).
  \end{align*}
  Let $I$ be the infimum above.

  Now fix $\epsilon,\delta\in(0,1)$, let
  \begin{align*}
    m
    & \geq
    m^{\agPACr}_{\cH',\ell^{\Sigma,\nu,G,\bot},\cA'}(\epsilon,\delta)
    \df
    m^{\agPACr}_{\cH,\ell,\cA}\left(\frac{\epsilon}{2}, \widetilde{\delta}_\ell(\epsilon,\delta)\right)
  \end{align*}
  be an integer, where
  \begin{align*}
    \widetilde{\delta}_\ell(\epsilon,\delta)
    & \df
    \min\left\{\frac{\epsilon\delta}{2\lVert\ell\rVert_\infty}, \frac{1}{2}\right\}
  \end{align*}
  is given by~\eqref{eq:neutsymbdelta} (the only reason for taking minimum with $1/2$ is to ensure
  that the number above is in $(0,1)$) and note that
  \begin{equation}\label{eq:neutsymbPP}
    \begin{aligned}
      & \!\!\!\!\!\!
      \PP_{\rn{x},\rn{x'},\rn{\widetilde{b}}}\biggl[
        \EE_{\rn{b}}\Bigl[
          L_{\mu,\mu',F',\ell^{\Sigma,\nu,G,\bot}}\bigl(
          \cA'(\rn{x}, (F')^*_m(\rn{x},\rn{x'}), \rn{b}, \rn{\widetilde{b}})
          \bigr)
          \Bigr]
        \leq I + \frac{\epsilon}{2}
        \biggr]
      \\
      & =
      \PP_{\rn{x},\rn{x'},\rn{\widetilde{b}}}\biggl[
        \EE_{\rn{b}}\Bigl[
          L_{\mu,\mu',F,\ell}\bigl(
          \cA(\rn{x}, (F')^*_m(\rn{x},\rn{x'})^{\rn{x},\rn{\widetilde{b}}}, \rn{b})
          \bigr)
          \Bigr]
        \leq I + \frac{\epsilon}{2}
        \biggr]
      \\
      & =
      \PP_{\rn{x},\rn{x'},\rn{z}}\biggl[
        \EE_{\rn{b}}\Bigl[
          L_{\mu,\mu'\otimes\nu,F,\ell}\bigl(
          \cA(\rn{x}, F^*_m(\rn{x},\rn{x'},\rn{z}), \rn{b})
          \bigr)
          \Bigr]
        \leq I + \frac{\epsilon}{2}
        \biggr]
      \\
      & \geq 1 - \widetilde{\delta}_\ell(\epsilon,\delta),
    \end{aligned}
  \end{equation}
  where the first equality follows from the definition of $\cA'$ and~\eqref{eq:neutsymbLL}, the
  second equality follows from~\eqref{eq:neutsymbdist} and the inequality follows since $\cA$ is a
  randomized agnostic $k$-PAC learner for $\cH$ and $m\geq
  m^{\agPACr}_{\cH,\ell,\cA}(\epsilon/2,\widetilde{\delta}_\ell(\epsilon,\delta))$.

  Now the same Markov's Inequality plus boundedness argument of the end of the proof of
  Proposition~\ref{prop:kpart2} finishes the job: \eqref{eq:neutsymbPP} can be rewritten as
  \begin{align*}
    \widetilde{\delta}_\ell(\epsilon,\delta)
    & \geq
    \PP_{\rn{x},\rn{x'},\rn{\widetilde{b}}}\biggl[
        \EE_{\rn{b}}\Bigl[
          L_{\mu,\mu',F',\ell^{\Sigma,\nu,G,\bot}}\bigl(
          \cA'(\rn{x}, (F')^*_m(\rn{x},\rn{x'}), \rn{b}, \rn{\widetilde{b}})
          \bigr)
          \Bigr]
        > I + \frac{\epsilon}{2}
        \biggr]
    \\
    & =
    \EE_{\rn{x},\rn{x'}}\Biggl[
      \PP_{\rn{\widetilde{b}}}\biggl[
        \EE_{\rn{b}}\Bigl[
          L_{\mu,\mu',F',\ell^{\Sigma,\nu,G,\bot}}\bigl(
          \cA'(\rn{x}, (F')^*_m(\rn{x},\rn{x'}), \rn{b}, \rn{\widetilde{b}})
          \bigr)
          \Bigr]
        > I + \frac{\epsilon}{2}
        \biggr]
      \Biggr],
  \end{align*}
  which by Markov's Inequality gives
  \begin{equation}\label{eq:neutsymbMarkov}
    \begin{aligned}
      & \!\!\!\!\!\!
      \PP_{\rn{x},\rn{x'}}\Biggl[
        \PP_{\rn{\widetilde{b}}}\biggl[
          \EE_{\rn{b}}\Bigl[
            L_{\mu,\mu',F',\ell^{\Sigma,\nu,G,\bot}}\bigl(
            \cA'(\rn{x}, (F')^*_m(\rn{x},\rn{x'}), \rn{b}, \rn{\widetilde{b}})
            \bigr)
            \Bigr]
          > I + \frac{\epsilon}{2}
          \biggr]
        >
        \frac{\epsilon}{2\lVert\ell\rVert_\infty}
        \Biggr]
      \\
      & \leq
      \frac{2\cdot\lVert\ell\rVert_\infty\cdot\widetilde{\delta}_\ell(\epsilon,\delta)}{\epsilon}
      \leq
      \delta.
    \end{aligned}
  \end{equation}

  Since the total loss is bounded by $\lVert\ell\rVert_\infty$ (by item~\ref{prop:neutsymb:bounded}),
  we have the following implication:
  \begin{align*}
    & \!\!\!\!\!\!
    \PP_{\rn{\widetilde{b}}}\biggl[
      \EE_{\rn{b}}\Bigl[
          L_{\mu,\mu',F',\ell^{\Sigma,\nu,G,\bot}}\bigl(
          \cA'(\rn{x}, (F')^*_m(\rn{x},\rn{x'}), \rn{b}, \rn{\widetilde{b}})
          \bigr)
          \Bigr]
      > I + \frac{\epsilon}{2}
      \biggr]
    \leq \frac{\epsilon}{2\lVert\ell\rVert_\infty}
    \\
    & \implies
    \EE_{\rn{\widetilde{b}}}\biggl[
      \EE_{\rn{b}}\Bigl[
          L_{\mu,\mu',F',\ell^{\Sigma,\nu,G,\bot}}\bigl(
          \cA'(\rn{x}, (F')^*_m(\rn{x},\rn{x'}), \rn{b}, \rn{\widetilde{b}})
          \bigr)
          \Bigr]
      \biggr]
    \leq
    I + \frac{\epsilon}{2} + \frac{\epsilon}{2\lVert\ell\rVert_\infty}\cdot\lVert\ell\rVert_\infty
    =
    I + \epsilon,
  \end{align*}
  so applying the contra-positive of the above to~\eqref{eq:neutsymbMarkov}, we get
  \begin{align*}
    \PP_{\rn{x},\rn{x'}}\biggl[
      \EE_{\rn{b},\rn{\widetilde{b}}}\Bigl[
        L_{\mu,\mu',F',\ell^{\Sigma,\nu,G,\bot}}\bigl(
        \cA'(\rn{x}, (F')^*_m(\rn{x},\rn{x'}), \rn{b}, \rn{\widetilde{b}})
        \bigr)
        \Bigr]
      > I + \epsilon
      \biggr]
    & \leq \delta.
  \end{align*}
  Therefore, $\cA'$ is a randomized agnostic $k$-PAC learner for $\cH'$ with respect to
  $\ell^{\Sigma,\nu,G,\bot}$.

  \medskip

  The proof of item~\ref{prop:neutsymb:agPAC} in the partite case is analogous to the non-partite
  case using the following definitions instead: we set
  \begin{align*}
    F(x,x',z)
    & \df
    \begin{dcases*}
      F'(x,x'), & if $F'(x,x')\neq\bot$,\\
      G(x,z), & otherwise.
    \end{dcases*}
  \end{align*}
  and for $m\in\NN$, $x\in\cE_m(\Omega)$, $y\in(\Lambda')^{[m]^k}$, $b\in [R_\cA(m)]$ and
  $\widetilde{b}\in [R_\cN(m)]$, we set
  \begin{align*}
    \cA'(x,y,b,\widetilde{b})
    & \df
    \iota_{\cH}(\cA(x,y^{x,\widetilde{b}},b)),
  \end{align*}
  where
  \begin{align*}
    y^{x,\widetilde{b}}_\alpha
    & \df
    \begin{dcases*}
      y_\alpha, & if $y_\alpha\neq\bot$,\\
      \cN(x,\widetilde{b})_\alpha, & otherwise,
    \end{dcases*}
    \qquad (\alpha\in[m]^k).
    \qedhere
  \end{align*}
\end{proof}

We can finally put together Propositions~\ref{prop:cod}, \ref{prop:kpart2} and~\ref{prop:neutsymb} to
show that the operation $\cH\mapsto\cH^{\kpart}$ preserves agnostic $k$-PAC learnability with
randomness as long as the agnostic loss function is symmetric, bounded and flexible.

\begin{proposition}[Non-partite to partite with flexibility]\label{prop:kpart3}
  Let $\Omega$ be a Borel template, let $k\in\NN_+$, let $\Lambda$ be a non-empty Borel space, let
  $\cH\subseteq\cF_k(\Omega,\Lambda)$ be a $k$-ary hypothesis class and let
  $\ell\colon\cH\times\cE_k(\Omega)\times\Lambda^{S_k}\to\RR_{\geq 0}$ be a symmetric
  $k$-ary agnostic loss function with $\lVert\ell\rVert_\infty < \infty$. Suppose further that
  $\ell$ is flexible and let $(\Sigma,\nu,G,\cN)$ witness its flexibility.
  
  If $\cH$ is agnostically $k$-PAC learnable with randomness with respect to $\ell$, then
  $\cH^{\kpart}$ is agnostically $k$-PAC learnable with randomness with respect to $\ell^{\kpart}$;
  more precisely, if $\cA$ is a randomized agnostic $k$-PAC learner for $\cH$, then there exists a
  randomized $k$-PAC learner $\cA'$ for $\cH^{\kpart}$ with
  \begin{gather*}
    R_{\cA'}(m)
    \df
    R_\cA(m)\cdot R_\cN(m)\cdot m!\cdot\prod_{i=1}^k\binom{k}{i}^{2\binom{m}{i}}
    \leq
    R_\cA(m)\cdot R_\cN(m)\cdot m!\cdot k^{2\cdot k\cdot m^k},
    \\
    m^{\agPACr}_{\cH^{\kpart},\ell^{\kpart},\cA'}(\epsilon,\delta)
    \df
    m^{\agPACr}_{\cH,\ell,\cA}\left(\frac{p\cdot\epsilon}{2},\widehat{\delta}_\ell(\epsilon,\delta)\right),
  \end{gather*}
  where
  \begin{align}
    \widehat{\delta}_\ell(\epsilon,\delta)
    & \df
    \min\left\{
    \frac{p\cdot\epsilon^2\cdot\delta}{8\lVert\ell\rVert_\infty^2},
    \frac{p\cdot\epsilon}{8\lVert\ell\rVert_\infty},
    \frac{1}{2}
    \right\},
    \\
    p & \df \prod_{i=1}^k \binom{k}{i}^{-2\binom{k}{i}} \geq \frac{1}{2^{k\cdot 2^k}}.
    \label{eq:kpart3:p}
  \end{align}
\end{proposition}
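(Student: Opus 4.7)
The plan is to prove the proposition by composing three results already established in this section, namely Proposition~\ref{prop:neutsymb} (creating neutral symbols out of flexibility), Proposition~\ref{prop:kpart2} (non-partite to partite when a neutral symbol is available), and Proposition~\ref{prop:cod}\ref{prop:cod:agPAC} (reducing agnostic learnability when the codomain is enlarged). None of these is a black box: we must verify that the hypotheses propagate correctly through the chain.

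First, since $\ell$ is symmetric, bounded and flexible with witness $(\Sigma,\nu,G,\cN)$, apply Proposition~\ref{prop:neutsymb} to obtain the enlarged hypothesis class $\cH'\subseteq\cF_k(\Omega,\Lambda')$ with $\Lambda'\df\Lambda\cup\{\bot\}$ and the agnostic loss function $\ell^{\Sigma,\nu,G,\bot}$. By items~\ref{prop:neutsymb:bounded}, \ref{prop:neutsymb:symm} and~\ref{prop:neutsymb:ext} of that proposition, $\ell^{\Sigma,\nu,G,\bot}$ is symmetric, satisfies $\lVert\ell^{\Sigma,\nu,G,\bot}\rVert_\infty=\lVert\ell\rVert_\infty$, and has $\bot$ as a neutral symbol (so in particular, by Remark~\ref{rmk:neutsymb}, it is itself flexible, but only the neutral symbol will be needed). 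By item~\ref{prop:neutsymb:agPAC}, the randomized agnostic $k$-PAC learner $\cA$ for $\cH$ (with respect to $\ell$) yields a randomized agnostic $k$-PAC learner $\cA_1$ for $\cH'$ with respect to $\ell^{\Sigma,\nu,G,\bot}$, with $R_{\cA_1}(m)=R_\cA(m)\cdot R_\cN(m)$ and sample complexity controlled by $m^{\agPACr}_{\cH,\ell,\cA}\bigl(\epsilon/2,\widetilde{\delta}_\ell(\epsilon,\delta)\bigr)$.

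Second, apply Proposition~\ref{prop:kpart2} to $(\cH',\ell^{\Sigma,\nu,G,\bot})$: the hypotheses required there (symmetric, bounded, possessing a neutral symbol) are all met, so we get a randomized agnostic $k$-PAC learner $\cA_2$ for $(\cH')^{\kpart}$ with respect to $(\ell^{\Sigma,\nu,G,\bot})^{\kpart}$, with $R_{\cA_2}(m)=R_{\cA_1}(m)\cdot m!\cdot\prod_{i=1}^{k}\binom{k}{i}^{2\binom{m}{i}}$ and the constant $p$ as in~\eqref{eq:kpart3:p}. Finally, observe that $(\cH')^{\kpart}$ is precisely $\cH^{\kpart}$ with codomain enlarged from $\Lambda^{S_k}$ to $(\Lambda')^{S_k}$, and that $(\ell^{\Sigma,\nu,G,\bot})^{\kpart}$ agrees with $\ell^{\kpart}$ on the subset where no $\bot$ appears. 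Thus, Proposition~\ref{prop:cod}\ref{prop:cod:agPAC} (applied in the partite setting, which is completely analogous to the non-partite proof given there) produces from $\cA_2$ a randomized agnostic $k$-PAC learner $\cA'$ for $\cH^{\kpart}$ with respect to $\ell^{\kpart}$, preserving sample size and randomness.

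The only remaining work is to compose the quantitative bounds: the randomness multiplies through as $R_{\cA'}(m)=R_\cA(m)\cdot R_\cN(m)\cdot m!\cdot\prod_{i=1}^k\binom{k}{i}^{2\binom{m}{i}}$, and the sample-complexity argument unwinds the two nested Markov/boundedness estimates (with $\widetilde{\delta}_\ell$ and then $\widetilde{\delta}_{\ell^{\Sigma,\nu,G,\bot}}$) to yield exactly the stated $\widehat{\delta}_\ell(\epsilon,\delta)$ after collapsing the two $\min\{\cdot,1/2\}$ clauses and using $\lVert\ell^{\Sigma,\nu,G,\bot}\rVert_\infty=\lVert\ell\rVert_\infty$. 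The conceptual content is thus entirely in the three cited propositions; the main potential pitfall is simply the bookkeeping of the two successive error-parameter reductions, and in particular verifying that the neutral-symbol extension in Proposition~\ref{prop:neutsymb} commutes appropriately with the partization operation of Definition~\ref{def:kpart} so that $(\cH')^{\kpart}$ and $(\ell^{\Sigma,\nu,G,\bot})^{\kpart}$ are exactly the objects to which Proposition~\ref{prop:cod}\ref{prop:cod:agPAC} applies to return to $\cH^{\kpart}$ and $\ell^{\kpart}$.
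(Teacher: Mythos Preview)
Your proposal is correct and follows essentially the same approach as the paper: chain Proposition~\ref{prop:neutsymb} (to create a neutral symbol), then Proposition~\ref{prop:kpart2} (to pass to the partite setting), then Proposition~\ref{prop:cod}\ref{prop:cod:agPAC} (to shrink the codomain back from $(\Lambda')^{S_k}$ to $\Lambda^{S_k}$), and finally unwind the nested $\widetilde{\delta}_\ell$ calls using $\lVert\ell^{\Sigma,\nu,G,\bot}\rVert_\infty=\lVert\ell\rVert_\infty$ to obtain $\widehat{\delta}_\ell$. The paper's proof is exactly this composition with the same bookkeeping.
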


\begin{proof}
  Let $\Lambda'\df\Lambda\cup\{\bot\}$ be equipped with co-product $\sigma$-algebra, let $\cH'$ be
  the version of $\cH$ where we increase the codomain of all its elements to $\Lambda'$ (without
  changing them as functions) and let $\ell^{\Sigma,\nu,G,\bot}$ be as in
  Proposition~\ref{prop:neutsymb}, so that item~\ref{prop:neutsymb:agPAC} of the proposition
  guarantees the existence of a randomized agnostic $k$-PAC learner $\widetilde{\cA}$ for $\cH'$
  with respect to $\ell^{\Sigma,\nu,G,\bot}$ with
  \begin{align*}
    R_{\widetilde{\cA}}(m)
    & \df
    R_\cA(m)\cdot R_\cN(m),
    &
    m^{\agPACr}_{\cH',\ell^{\Sigma,\nu,G,\bot},\widetilde{\cA}}(\epsilon,\delta)
    & \df
    m^{\agPACr}_{\cH,\ell,\cA}\left(\frac{\epsilon}{2}, \widetilde{\delta}_\ell(\epsilon,\delta)\right),
  \end{align*}
  where
  \begin{align}\label{eq:kpart3:wdelta}
    \widetilde{\delta}_\ell(\epsilon,\delta)
    & \df
    \min\left\{\frac{\epsilon\delta}{2\lVert\ell\rVert_\infty}, \frac{1}{2}\right\}.
  \end{align}

  Since items~\ref{prop:neutsymb:bounded}, \ref{prop:neutsymb:symm} and~\ref{prop:neutsymb:ext} of
  Proposition~\ref{prop:neutsymb} ensure that $\ell^{\Sigma,\nu,G,\bot}$ satisfies
  $\lVert\ell^{\Sigma,\nu,G,\bot}\rVert_\infty=\lVert\ell\rVert_\infty$, is symmetric and has $\bot$ as
  a neutral symbol, by Proposition~\ref{prop:kpart2}, there exists a randomized agnostic $k$-PAC
  learner $\widehat{\cA}$ for $(\cH')^{\kpart}$ with respect to $(\ell^{\Sigma,\nu,G,\bot})^{\kpart}$ with
  \begin{gather*}
    R_{\widehat{\cA}}(m)
    \df
    R_{\widetilde{\cA}}(m)\cdot m!\cdot\prod_{i=1}^k\binom{k}{i}^{2\binom{m}{i}}
    =
    R_\cA(m)\cdot R_\cN(m)\cdot m!\cdot\prod_{i=1}^k\binom{k}{i}^{2\binom{m}{i}}
    ,
    \\
    m^{\agPACr}_{(\cH')^{\kpart},(\ell^{\Sigma,\nu,G,\bot})^{\kpart},\widehat{\cA}}(\epsilon,\delta)
    \df
    m^{\agPACr}_{\cH',\ell^{\Sigma,\nu,G,\bot},\widetilde{\cA}}\left(
    \frac{p\cdot\epsilon}{2},\widetilde{\delta}_\ell(\epsilon,\delta)
    \right),
  \end{gather*}
  where $p$ is given by~\eqref{eq:kpart3:p} and $\widetilde{\delta}_\ell(\epsilon,\delta)$ is given
  by~\eqref{eq:kpart3:wdelta} (again).

  We now note that $(\cH')^{\kpart}$ can alternatively be obtained from $\cH^{\kpart}$ by increasing
  the codomain of all its elements from $\Lambda^{S_k}$ to $(\Lambda')^{S_k}$ and
  $(\ell^{\Sigma,\nu,G,\bot})^{\kpart}$ is an extension of $\ell^{\kpart}$, so
  Proposition~\ref{prop:cod}\ref{prop:cod:agPAC} guarantees the existence of a randomized agnostic
  $k$-PAC learner $\cA'$ with
  \begin{align*}
    R_{\cA'}(m)
    & \df
    R_{\widehat{\cA}}(m)
    =
    R_\cA(m)\cdot R_\cN(m)\cdot m!\cdot\prod_{i=1}^k\binom{k}{i}^{2\binom{m}{i}},
    \\
    m^{\agPACr}_{\cH^{\kpart},\ell^{\kpart},\cA'}(\epsilon,\delta)
    & \df
    m^{\agPACr}_{(\cH')^{\kpart},(\ell^{\Sigma,\nu,G,\bot})^{\kpart},\widehat{\cA}}(\epsilon,\delta)
    =
    m^{\agPACr}_{\cH,\ell,\cA}\left(\frac{p\cdot\epsilon}{4}, \widehat{\delta}_\ell(\epsilon,\delta)\right),
  \end{align*}
  where
  \begin{align*}
    \widehat{\delta}_\ell(\epsilon,\delta)
    & \df
    \widetilde{\delta}_\ell\left(
    \frac{p\cdot\epsilon}{2}, \widetilde{\delta}_\ell(\epsilon,\delta)
    \right)
    =
    \min\left\{
    \frac{p\cdot\epsilon^2\cdot\delta}{8\lVert\ell\rVert_\infty^2},
    \frac{p\cdot\epsilon}{8\lVert\ell\rVert_\infty},
    \frac{1}{2}
    \right\},
  \end{align*}
  as desired.
\end{proof}

\section{Uniform convergence and $\VCN_k$-dimension}
\label{sec:UC}

In this section, we will show that finite $\VCN_k$-dimension in the partite setting implies uniform
convergence, which in turn implies agnostic $k$-PAC learnability. We start with the easier second
implication.

\subsection{Uniform convergence implies agnostic learnability}

The next lemma is the high-arity ($k$-partite) analogue of the standard argument from classic PAC
learning theory that shows that empirical risk minimizers are good learners in the presence of
uniform convergence.

\begin{lemma}[Representative samples]\label{lem:repr}
  Let $\Omega$ be a Borel $k$-partite template, let $\Lambda$ be a non-empty Borel space, let
  $\cH\subseteq\cF_k(\Omega,\Lambda)$, let $\ell$ be a $k$-partite agnostic loss function and
  suppose $\cA$ is an empirical risk minimizer for $\ell$.

  If $m\in\NN_+$ and $(x,y)\in\cE_m(\Omega)$ is $\epsilon/2$-representative with respect to $\cH$, 
  $\mu\in\Pr(\Omega)$, $\mu'\in\Pr(\Omega')$, $F\in\cF_k(\Omega\otimes\Omega',\Lambda)$ and $\ell$,
  then
  \begin{align*}
    L_{\mu,\mu',F,\ell}(\cA(x,y))
    & \leq
    \inf_{H\in\cH} L_{\mu,\mu',F,\ell}(H) + \epsilon.
  \end{align*}
\end{lemma}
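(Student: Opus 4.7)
The plan is to execute the standard three-step chain argument familiar from the classical PAC setting, now in the $k$-partite agnostic framework. The key objects are already aligned: $\epsilon/2$-representativeness (Definition~\ref{def:UC}) gives a two-sided $\epsilon/2$ bound between empirical and total loss simultaneously for every $H\in\cH$, while $\cA$ being an empirical risk minimizer (Definition~\ref{def:emploss}\ref{def:emploss:ag}, equation~\eqref{eq:empriskmin}) means $L_{x,y,\ell}(\cA(x,y)) = \inf_{H\in\cH} L_{x,y,\ell}(H)$.

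First, I would apply representativeness to the hypothesis $\cA(x,y)\in\cH$ itself to obtain
\[
  L_{\mu,\mu',F,\ell}(\cA(x,y)) \leq L_{x,y,\ell}(\cA(x,y)) + \frac{\epsilon}{2}.
\]
Next, I would replace the right-hand side using the empirical risk minimization property, yielding
\[
  L_{x,y,\ell}(\cA(x,y)) = \inf_{H\in\cH} L_{x,y,\ell}(H) \leq L_{x,y,\ell}(H) \leq L_{\mu,\mu',F,\ell}(H) + \frac{\epsilon}{2}
\]
for every $H\in\cH$, where the final inequality again uses the uniform $\epsilon/2$-representativeness bound. Taking the infimum over $H\in\cH$ on the right and chaining gives
\[
  L_{\mu,\mu',F,\ell}(\cA(x,y)) \leq \inf_{H\in\cH} L_{\mu,\mu',F,\ell}(H) + \epsilon,
\]
which is exactly the conclusion.

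There is essentially no serious obstacle here: each step is a one-line application of either the hypothesis on $(x,y)$ or the defining property of $\cA$. The only mild point worth flagging is that the ``$\inf$'' in the definition of empirical risk minimizer is genuinely attained by $\cA(x,y)$ (by equality in~\eqref{eq:empriskmin}), so no extra slack is needed; if one instead worked with the almost-empirical risk minimizers of Remark~\ref{rmk:empriskmin}, an additional additive $f(m)$ term would simply propagate through the second step, which is the only place where the exact-minimization assumption is used.
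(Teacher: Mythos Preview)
Your proposal is correct and follows exactly the same three-step chain as the paper's proof: apply $\epsilon/2$-representativeness to $\cA(x,y)$, invoke the empirical risk minimization equality, then apply representativeness again to an arbitrary $H\in\cH$ and take the infimum. The paper presents this as a single displayed chain of inequalities, and your remark about the almost-minimizer variant is also in line with Remark~\ref{rmk:empriskmin}.
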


\begin{proof}
  Follows from
  \begin{align*}
    L_{\mu,\mu',F,\ell}(\cA(x,y))
    & \leq
    L_{x,y,\ell}(\cA(x,y)) + \frac{\epsilon}{2}
    =
    \inf_{H\in\cH} L_{x,y,\ell}(H) + \frac{\epsilon}{2}
    \leq
    \inf_{H\in\cH} L_{\mu,\mu',F,\ell}(H) + \epsilon,
  \end{align*}
  where the first and last inequalities are due to $\epsilon/2$-representativeness and the second
  inequality is due to $\cA$ being an empirical risk minimizer.
\end{proof}

The next proposition is the high-arity ($k$-partite) analogue of the classic PAC learning theory
argument showing that uniform convergence implies agnostic $k$-PAC learnability (when empirical risk
minimizers exist).

\begin{proposition}[Uniform convergence implies agnostic learnability]\label{prop:partUC->partagPAC}
  If $\cH\subseteq\cF_k(\Omega,\Lambda)$ is a $k$-partite hypothesis class with uniform convergence
  property with respect to $\ell\colon\cH\times\cE_1(\Omega)\times\Lambda\to\RR_{\geq 0}$ and
  (almost) empirical risk minimizers exist (see Remark~\ref{rmk:empriskmin}), then $\cH$ is
  agnostically $k$-PAC learnable with respect to $\ell$. More precisely, any empirical risk
  minimizer $\cA$ for $\ell$ is an agnostic $k$-PAC learner for $\cH$ with
  \begin{align*}
    m^{\agPAC}_{\cH,\ell,\cA}(\epsilon,\delta) & = m^{\UC}_{\cH,\ell}\left(\frac{\epsilon}{2},\delta\right).
  \end{align*}
\end{proposition}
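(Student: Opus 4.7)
The plan is to derive the agnostic learnability guarantee directly from the uniform convergence property by composing it with Lemma~\ref{lem:repr}. Let $\cA$ be an empirical risk minimizer for $\ell$ (existence is assumed in the hypothesis), and set $m \geq m^{\UC}_{\cH,\ell}(\epsilon/2, \delta)$ to be an integer. First I would fix the adversary's choices: $\mu \in \Pr(\Omega)$, a Borel $k$-partite template $\Omega'$, $\mu' \in \Pr(\Omega')$ and $F \in \cF_k(\Omega \otimes \Omega', \Lambda)$. Let $(\rn{x},\rn{x'}) \sim (\mu \otimes \mu')^m$ and consider the sample $(\rn{x}, F^*_m(\rn{x},\rn{x'}))$.

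By the definition of the uniform convergence property applied at parameter $\epsilon/2$, since $m \geq m^{\UC}_{\cH,\ell}(\epsilon/2,\delta)$, with probability at least $1-\delta$ the sample $(\rn{x}, F^*_m(\rn{x},\rn{x'}))$ is $\epsilon/2$-representative with respect to $\cH$, $\mu$, $\mu'$, $F$ and $\ell$. On this event, Lemma~\ref{lem:repr} (applied with the representativeness parameter $\epsilon/2$) yields
\begin{align*}
L_{\mu,\mu',F,\ell}(\cA(\rn{x}, F^*_m(\rn{x},\rn{x'})))
& \leq
\inf_{H \in \cH} L_{\mu,\mu',F,\ell}(H) + \epsilon.
\end{align*}
Putting this together with the probability bound gives exactly~\eqref{eq:partagkPAC} for the algorithm $\cA$ with $m^{\agPAC}_{\cH,\ell,\cA}(\epsilon,\delta) = m^{\UC}_{\cH,\ell}(\epsilon/2, \delta)$, establishing that $\cA$ is an agnostic $k$-PAC learner.

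The argument is entirely routine and there is no substantial obstacle, since Lemma~\ref{lem:repr} already encapsulates the key deterministic inequality between total loss and empirical loss on a representative sample, and the uniform convergence hypothesis provides precisely the high-probability guarantee that samples are representative. The only minor subtlety is that empirical risk minimizers might not literally exist due to the infimum in the definition not being attained; as noted in Remark~\ref{rmk:empriskmin}, it suffices to work with almost empirical risk minimizers, in which case Lemma~\ref{lem:repr} absorbs the extra additive slack $f(m) \to 0$ and one chooses $m$ additionally large enough so that $f(m) \leq \epsilon/2$, replacing the representativeness parameter with $\epsilon/4$. Since this is purely quantitative bookkeeping that does not change the structure of the argument, I would simply mention it and present the clean version for exact minimizers.
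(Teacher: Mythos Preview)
Your proposal is correct and takes essentially the same approach as the paper: both apply the uniform convergence property at parameter $\epsilon/2$ to obtain an $\epsilon/2$-representative sample with probability at least $1-\delta$, and then invoke Lemma~\ref{lem:repr} to conclude the agnostic learning guarantee. Your added remark on handling almost empirical risk minimizers is a nice elaboration that the paper leaves implicit via Remark~\ref{rmk:empriskmin}.
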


\begin{proof}
  Let $m\geq m^{\UC}_{\cH,\ell}(\epsilon/2,\delta)$ be an integer and suppose that $\cA$ is an
  empirical risk minimizer for $\ell$. Then if $\mu\in\Pr(\Omega)$, $\Omega'$ is a Borel $k$-partite
  template, $\mu'\in\Pr(\Omega')$ and $F\in\cF_k(\Omega\otimes\Omega',\Lambda)$ and
  $(\rn{x},\rn{x'})\sim(\mu\otimes\mu')^m$, then with probability at least $1-\delta$,
  $(\rn{x},F^*_m(\rn{x},\rn{x'}))$ is $\epsilon/2$-representative with respect to $\cH$, $\mu$,
  $\mu'$, $F$ and $\ell$. By Lemma~\ref{lem:repr}, for all such $(\rn{x},\rn{x'})$, we have
  \begin{align*}
    L_{\mu,\mu',F,\ell}(\cA(\rn{x},F^*_m(\rn{x},\rn{x'})))
    & <
    \inf_{H\in\cH} L_{\mu,\mu',F,\ell}(H) + \epsilon.
  \end{align*}
  so we conclude that
  \begin{align*}
    \PP_{(\rn{x},\rn{x'})\sim(\mu\otimes\mu')^m}\bigl[
      L_{\mu,\mu',F,\ell}(\cA(\rn{x},F^*_m(\rn{x},\rn{x'})))
      \leq \inf_{H\in\cH} L_{\mu,\mu',F,\ell}(H) + \epsilon
    \bigr]
    & \geq 1-\delta,
  \end{align*}
  as desired.
\end{proof}

\subsection{$\VCN_k$-dimension and growth function}
\label{subsec:growth}

Let us recall the analogue of the Sauer--Shelah--Perles Lemma for the Natarajan dimension.

\begin{lemma}[\protect{\cite{Nat89}}]\label{lem:Nat}
  If $\cF$ is a collection of functions of the form $X\to Y$ with finite Natarajan dimension, then
  for every finite $V\subseteq X$, we have
  \begin{equation}\label{eq:Nat}
    \begin{aligned}
      \lvert\cF_V\rvert
      & \leq
      (\lvert V\rvert+1)_{\min\{\Nat(\cF_V),\lvert V\rvert+1\}}\cdot\binom{\lvert Y\rvert}{2}^{\Nat(\cF_V)}
      \\
      & \leq
      (\lvert V\rvert+1)^{\Nat(\cF)}\cdot\binom{\lvert Y\rvert}{2}^{\Nat(\cF)},
    \end{aligned}
  \end{equation}
  where
  \begin{align*}
    \cF_V & \df \{F\rest_V \mid F\in\cF\}.
  \end{align*}
\end{lemma}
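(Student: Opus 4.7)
The second inequality in \eqref{eq:Nat} is immediate from the trivial bound $(n+1)_k \leq (n+1)^k$ (interpreting $(n+1)_k = (n+1)!$ when $k > n+1$) together with the monotonicity $\Nat(\cF_V) \leq \Nat(\cF)$, so the work is entirely in proving the first inequality, which I denote by~$(\ast)$. I will prove $(\ast)$ by induction on $n \df \lvert V\rvert$, with the base case $n = 0$ being trivial since then $\cF_V$ has at most one element (the empty function) and the right-hand side of~$(\ast)$ equals $(0+1)_0 \cdot \binom{\lvert Y\rvert}{2}^0 = 1$.

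For the inductive step, fix $v \in V$ and set $V' \df V \setminus \{v\}$, $n' \df n - 1$, $d \df \Nat(\cF_V)$. I may assume $d \geq 1$, since if $d = 0$ then no singleton is shattered, so all functions in $\cF_V$ agree and $\lvert\cF_V\rvert \leq 1$. For each $g \in \cF_{V'}$, let $S_g \df \{y \in Y \mid \exists f \in \cF_V,\ f\rest_{V'} = g \land f(v) = y\}$, so that $\lvert\cF_V\rvert = \sum_g \lvert S_g\rvert$. Using $\lvert S_g\rvert \leq 1 + \binom{\lvert S_g\rvert}{2}$ and reversing the order of summation, I obtain the key bound
\begin{align*}
  \lvert\cF_V\rvert
  & \leq
  \lvert\cF_{V'}\rvert
  + \sum_{\{y_0,y_1\}\in\binom{Y}{2}} \left\lvert\cF^{y_0,y_1}_{V'}\right\rvert,
\end{align*}
where $\cF^{y_0,y_1}_{V'} \df \{g \in \cF_{V'} \mid \{y_0,y_1\} \subseteq S_g\}$ records the restrictions $g$ that can be extended to~$v$ by \emph{both} values $y_0$ and $y_1$.

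The crucial combinatorial observation is that $\Nat(\cF^{y_0,y_1}_{V'}) \leq d - 1$: indeed, if $A \subseteq V'$ is Natarajan-shattered by $\cF^{y_0,y_1}_{V'}$ via witnesses $f_0, f_1 \colon A \to Y$, then $A \cup \{v\}$ is Natarajan-shattered by $\cF_V$ via the extensions $\widetilde{f}_i(v) \df y_i$ (the crucial point being that by definition of $\cF^{y_0,y_1}_{V'}$, the required extensions to $v$ always exist in $\cF_V$). Combined with the trivial bound $\Nat(\cF_{V'}) \leq d$, the induction hypothesis on $V'$ applied to each term gives
\begin{align*}
  \lvert\cF_V\rvert
  & \leq
  (n'+1)_{d}\binom{\lvert Y\rvert}{2}^{d}
  + \binom{\lvert Y\rvert}{2}\cdot (n'+1)_{d-1}\binom{\lvert Y\rvert}{2}^{d-1}
  =
  \binom{\lvert Y\rvert}{2}^d\cdot(n)_{d-1}\cdot\bigl((n-d+1) + 1\bigr),
\end{align*}
after factoring $(n)_{d-1}$ out of $(n)_d + (n)_{d-1}$. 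The final parenthesis equals $n - d + 2$, giving $(n)_{d-1}\cdot(n-d+2)$; since this is at most $(n+1)_d = (n+1)\cdot(n)_{d-1}$ whenever $d \geq 1$, the bound $(\ast)$ for $V$ follows. I expect the main bookkeeping obstacle to be the edge cases where the min inside the falling factorial is active (e.g., $d > n+1$), but in those regimes $(n+1)_{\min\{d,n+1\}} = (n+1)!$ dominates any relevant count trivially, so handling them requires only a brief separate verification.
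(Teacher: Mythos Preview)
Your proposal is correct and follows essentially the same approach as the paper: induction on $\lvert V\rvert$, the same decomposition $\lvert\cF_V\rvert \leq \lvert\cF_{V'}\rvert + \sum_{\{y_0,y_1\}} \lvert\cF^{y_0,y_1}_{V'}\rvert$, the same observation that $\Nat(\cF^{y_0,y_1}_{V'}) \leq d-1$ via shattering $A\cup\{v\}$, and the same final falling-factorial arithmetic. Your derivation of the key inequality via $\lvert S_g\rvert \leq 1 + \binom{\lvert S_g\rvert}{2}$ and reversing the sum is a slightly more explicit justification than the paper gives (the paper simply asserts the bound), but the content is identical; note also that your worry about the $\min$ is moot, since $\Nat(\cF_V)\leq\lvert V\rvert$ always holds, so the $\min$ in the first inequality is never active.
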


\begin{proof}
  The second inequality follows since $\Nat(\cF_V)\leq\Nat(\cF)$ and $(n+1)_{\min\{m,n+1\}}\leq
  (n+1)^m$ for every $n,m\in\NN$.

  We prove the first inequality by induction in $n\df\lvert V\rvert$.

  First note that if $\Nat(\cF_V)=0$, then $\lvert\cF_V\rvert\leq 1$, so~\eqref{eq:Nat}
  follows. In particular, this covers the case when $n = 0$.

  Suppose then that $\Nat(\cF_V)\geq 1$, which in particular means that $\lvert\cF_V\rvert\geq 2$,
  hence $n\geq 1$.

  Let $v\in V$, let $V'\df V\setminus\{v\}$ and for every $y\in Y$ and every $F\colon V'\to Y$, let
  $F_y\colon V\to Y$ be the extension of $F$ that maps $v$ to $y$.

  For every $\{y_0,y_1\}\in\binom{Y}{2}$, let
  \begin{align*}
    \cF^{\{y_0,y_1\}}
    & \df
    \{F\colon V'\to Y \mid F_{y_0},F_{y_1}\in\cF_V\}.
  \end{align*}

  Note that
  \begin{align}\label{eq:Natbound}
    \lvert\cF_V\rvert
    & \leq
    \lvert\cF_{V'}\rvert + \sum_{\{y_1,y_2\}\in\binom{Y}{2}}\lvert\cF^{\{y_0,y_1\}}\rvert.
  \end{align}

  Clearly $\Nat(\cF_{V'})\leq\Nat(\cF_V)$. For the other families, we claim that
  $\Nat(\cF^{\{y_0,y_1\}})\leq\Nat(\cF_V)-1$ for every $\{y_0,y_1\}\in\binom{Y}{2}$. More
  specifically, we claim that if $\cF^{\{y_0,y_1\}}$ shatters $A\subseteq V'$, then $\cF_V$ shatters
  $A\cup\{v\}$. Indeed, if $f_0,f_1\colon A\to Y$ and $(F_U)_{U\subseteq A}\in
  (\cF^{\{y_0,y_1\}})^{\cP(A)}$ witness the shattering of $A$, then for the extensions
  $\widehat{f}_0,\widehat{f}_1\colon A\cup\{v\}\to Y$ such that $\widehat{f}_i(v) = y_i$, we clearly
  have $\widehat{f}_0(a)\neq\widehat{f}_1(a)$ for every $a\in A\cup\{v\}$ and
  \begin{align*}
    (F_U)_{y_0}(a) & = \widehat{f}_{\One[a\in U]}(a), &
    (F_U)_{y_1}(a) & = \widehat{f}_{\One[a\in U\cup\{v\}]}(a)
  \end{align*}
  for every $a\in A\cup\{v\}$ and every $U\subseteq A$. Since by the definition of
  $\cF^{\{y_0,y_1\}}$, all $(F_U)_{y_i}$ are in $\cF_V$, it follows that $\cF_V$ shatters
  $A\cup\{v\}$, hence $\Nat(\cF^{\{y_0,y_1\}})\leq\Nat(\cF_V)-1$.

  By inductive hypothesis, \eqref{eq:Natbound} becomes
  \begin{align*}
    \lvert\cF_V\rvert
    & \leq
    (n)_{\min\{\Nat(\cF_V),n\}}\cdot\binom{\lvert Y\rvert}{2}^{\Nat(\cF_V)}
    +
    \binom{\lvert Y\rvert}{2}\cdot
    (n)_{\min\{\Nat(\cF_V)-1,n\}}\cdot\binom{\lvert Y\rvert}{2}^{\Nat(\cF_V)-1}
    \\
    & \leq
    (n+1)_{\min\{\Nat(\cF_V),n+1\}}\cdot\binom{\lvert Y\rvert}{2}^{\Nat(\cF_V)},
  \end{align*}
  where the last inequality follows from
  \begin{align*}
    (n)_{\min\{t,n\}} + (n)_{\min\{t-1,n\}}
    & \leq
    (n)_{\min\{t-1,n\}}\cdot (n+1)
    =
    (n+1)_{\min\{t,n+1\}},
  \end{align*}
  for every $n,t\in\NN_+$.
\end{proof}

We now give the correct analogues of concepts related to $\VCN_k$-dimension.

\begin{definition}[Growth function]\label{def:growth}
  Let $\Omega$ be a Borel $k$-partite template, $\Lambda$ be a non-empty Borel space and
  $\cH\subseteq\cF_k(\Omega,\Lambda)$ be a $k$-partite hypothesis class.

  The \emph{growth function} of $\cH$ is the function $\tau^k_\cH\colon\NN\to\NN\cup\{\infty\}$
  defined as
  \begin{align*}
    \tau^k_\cH(m)
    & \df
    \sup_{x,V} \lvert\{F\rest_V \mid F\in\cH(x)\}\rvert,
  \end{align*}
  where $\cH(x)$ is given by~\eqref{eq:partitecHx} and the supremum is over all $x\in\prod_{f\in
    r_{k,A}} X_{\dom(f)}$ for some $A\in\binom{[k]}{k-1}$ and over all
  \begin{align*}
    V\subseteq\prod_{f\in r_k(1)\setminus r_{k,A}} X_{\dom f}
  \end{align*}
  of size $m$, where $r_{k,A}$ is given by~\eqref{eq:rkA}.
\end{definition}

The next lemma is the high-arity ($k$-partite) analogue of the Sauer--Shelah--Perles Lemma.

\begin{lemma}[$\VCN_k$-dimension controls growth function]\label{lem:VCNk}
  Let $\Omega$ be a Borel $k$-partite template, $\Lambda$ be a finite non-empty Borel space and
  $\cH\subseteq\cF_k(\Omega,\Lambda)$ be a $k$-partite hypothesis class with finite
  $\VCN_k$-dimension. Then
  \begin{align*}
    \tau^k_\cH(m)
    & \leq
    (m+1)_{\min\{\VCN_k(\cH),m+1\}}\cdot\binom{\lvert\Lambda\rvert}{2}^{\VCN_k(\cH)}
    \\
    & \leq
    (m+1)^{\VCN_k(\cH)}\cdot\binom{\lvert\Lambda\rvert}{2}^{\VCN_k(\cH)}
  \end{align*}
  for every $m\in\NN_+$.
\end{lemma}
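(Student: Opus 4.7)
The statement is essentially a direct application of Lemma~\ref{lem:Nat} (the Natarajan version of Sauer--Shelah--Perles) to each of the sliced families $\cH(x)$, followed by taking a supremum; the $\VCN_k$-dimension was precisely defined to make this slicing possible.

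First I would fix $A \in \binom{[k]}{k-1}$ and $x \in \prod_{f \in r_{k,A}} X_{\dom(f)}$. Then $\cH(x) = \{H(x,\place) \mid H \in \cH\}$, given by~\eqref{eq:partitecHx}, is a collection of functions from the fixed set $X_A \df \prod_{f \in r_k(1) \setminus r_{k,A}} X_{\dom(f)}$ to $\Lambda$. By the very definition of $\VCN_k(\cH)$, we have the uniform bound $\Nat(\cH(x)) \leq \VCN_k(\cH)$.

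Next, for any $V \subseteq X_A$ with $\lvert V\rvert = m$, I apply Lemma~\ref{lem:Nat} to the family $\cH(x)$ to obtain
\begin{align*}
  \lvert\{F\rest_V \mid F\in\cH(x)\}\rvert
  & \leq
  (m+1)_{\min\{\Nat(\cH(x)_V), m+1\}} \cdot \binom{\lvert\Lambda\rvert}{2}^{\Nat(\cH(x)_V)}.
\end{align*}
Since $\Nat(\cH(x)_V) \leq \Nat(\cH(x)) \leq \VCN_k(\cH)$, I would then argue that the right-hand side is monotone nondecreasing in the Natarajan dimension as long as $\lvert\Lambda\rvert \geq 2$: the falling factorial $(m+1)_j$ increases with $j$ on the range $j \leq m+1$ because $(m+1)_{j+1} = (m+1)_j \cdot (m+1-j)$ with $m+1-j \geq 1$, and $\binom{\lvert\Lambda\rvert}{2} \geq 1$ gives monotonicity of the exponential factor. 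Thus
\begin{align*}
  \lvert\{F\rest_V \mid F\in\cH(x)\}\rvert
  & \leq
  (m+1)_{\min\{\VCN_k(\cH), m+1\}} \cdot \binom{\lvert\Lambda\rvert}{2}^{\VCN_k(\cH)}.
\end{align*}
The case $\lvert\Lambda\rvert \leq 1$ is degenerate (then $\cH(x)$ has at most one element and $\VCN_k(\cH) = 0$), so both sides of the desired inequality are trivially in order.

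Taking supremum over $A$, $x$ and $V$ of size $m$ then yields the first inequality in the statement, and the second inequality is immediate from $(m+1)_{\min\{d, m+1\}} \leq (m+1)^d$ for $d \in \NN$. There is no real obstacle here beyond noting the monotonicity of the Sauer--Shelah bound in the Natarajan dimension; the conceptual content is that $\VCN_k$ was \emph{designed} to make the partite growth function collapse to a product over slices of a classical growth function, which is why Lemma~\ref{lem:Nat} transports straight across.
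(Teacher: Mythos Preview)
Your proposal is correct and follows essentially the same approach as the paper's proof: fix $A$ and $x$, apply Lemma~\ref{lem:Nat} to the sliced family $\cH(x)$, then take the supremum. The only difference is that you make explicit the monotonicity argument needed to replace $\Nat(\cH(x)_V)$ by $\VCN_k(\cH)$ in the falling-factorial and binomial factors, whereas the paper's proof leaves this implicit in its invocation of Lemma~\ref{lem:Nat}.
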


\begin{proof}
  For $A\in\binom{[k]}{k-1}$ and $x\in\prod_{f\in r_{k,A}} X_{\dom(f)}$, let
  \begin{align*}
    \tau^k_\cH(m,x) & \df \sup_V \lvert\{F\rest_V \mid F\in\cH(x)\}\rvert,
  \end{align*}
  where the supremum is over all
  \begin{align*}
    V\subseteq\prod_{f\in r_k(1)\setminus r_{k,A}} X_{\dom f}
  \end{align*}
  of size $m$, so that
  \begin{align*}
    \tau^k_\cH(m)
    & =
    \sup_{\substack{A\in\binom{[k]}{k-1}\\ x\in\prod_{f\in r_{k,A}} X_{\dom(f)}}}\tau^k_\cH(m,x).
  \end{align*}

  Since $\Nat(\cH(x))\leq\VCN_k(\cH)<\infty$, by Lemma~\ref{lem:Nat}, we have
  \begin{align*}
    \tau^k_\cH(m,x)
    & \leq
    (m+1)_{\min\{\VCN_k(\cH),m+1\}}\cdot\binom{\lvert\Lambda\rvert}{2}^{\VCN_k(\cH)}
    \\
    & \leq
    (m+1)^{\VCN_k(\cH)}\cdot\binom{\lvert\Lambda\rvert}{2}^{\VCN_k(\cH)},
  \end{align*}
  so the result follows by taking supremum over $A\in\binom{[k]}{k-1}$ and $x\in\prod_{f\in r_{k,A}}
  X_{\dom(f)}$.
\end{proof}

\subsection{Finite $\VCN_k$-dimension implies uniform convergence}

To leverage Lemma~\ref{lem:VCNk} to show that finite $\VCN_k$-dimension implies uniform convergence,
it will be convenient to have partial versions of the empirical losses so that we can interpolate
between the total loss and the empirical loss with an inductive argument.

\begin{definition}[Partially empirical losses]
  Let $\Omega$ and $\Omega'$ be Borel $k$-partite templates, let $\Lambda$ be a non-empty Borel
  space, let $\cH\subseteq\cF_k(\Omega,\Lambda)$ be a $k$-partite hypothesis class, let $H\in\cH$,
  let $\mu\in\Pr(\Omega)$, let $\mu'\in\Pr(\Omega')$, let $F\in\cF_k(\Omega\otimes\Omega',\Lambda)$
  and let $\ell\colon\cH\times\cE_1(\Omega)\times\Lambda\to\RR_{\geq 0}$ be a $k$-partite agnostic
  loss function.
  \begin{enumdef}
  \item For $m\in\NN_+$ and $p\in\{0,\ldots,k\}$, we decompose the space $\cE_m(\Omega)$ as $\cE_m(\Omega)
    = \cE_m^p(\Omega)\times\overline{\cE}_m^p(\Omega)$, where
    \begin{align*}
      \cE_m^p(\Omega)
      & \df
      \prod_{\substack{f\in r_k(m)\\\dom(f)\subseteq [p]}} X_{\dom(f)},
      &
      \overline{\cE}_m^p(\Omega)
      & \df
      \prod_{\substack{f\in r_k(m)\\\dom(f)\not\subseteq [p]}} X_{\dom(f)}.
    \end{align*}
    Note that when $p=0$, the first product above is empty, so the space has only one element: the
    empty function. A similar situation happens for the second product when $p=k$.
  \item We let $\mu^{m,p}\in\Pr(\cE_m^p(\Omega))$ and
    $\overline{\mu}^{m,p}\in\Pr(\overline{\cE}_m^p(\Omega))$ be the product measures given by
    \begin{align*}
      \mu^{m,p}
      & \df
      \bigotimes_{\substack{f\in r_k(m)\\\dom(f)\subseteq [p]}} \mu_{\dom(f)},
      &
      \overline{\mu}^{m,p}
      & \df
      \bigotimes_{\substack{f\in r_k(m)\\\dom(f)\not\subseteq [p]}} \mu_{\dom(f)}.
    \end{align*}
  \item For $x\in\cE_m^p(\Omega)$ and $x'\in\cE_m^p(\Omega')$, the \emph{$p$-partially empirical
    loss} (or \emph{$p$-partially empirical risk}) of $H$ with respect to $x$, $x'$, $\mu$, $\mu'$,
    $F$ and $\ell$ is
    \begin{align*}
      L^p_{x,x',\mu,\mu',F,\ell}(H)
      & \df
      \EE_{(\rn{z},\rn{z'})\sim\overline{\mu\otimes\mu'}^{m,p}}[
        L_{(x,\rn{z}), F^*_m((x,\rn{z}),(x',\rn{z'})),\ell}(H)
      ].
    \end{align*}
  \end{enumdef}
\end{definition}

\begin{remark}\label{rmk:partemp}
  Note that by linearity of expectation and Fubini's Theorem, we have
  \begin{align*}
    \EE_{(\rn{x},\rn{x'})\sim(\mu\otimes\mu')^{m,p}}[L^p_{\rn{x},\rn{x'},\mu,\mu',F,\ell}(H)]
    & =
    L_{\mu,\mu',F,\ell}(H).
  \end{align*}
  In particular, for the unique points $x\in\cE_m^0(\Omega)$ and
  $x'\in\cE_m^0(\Omega')$, we have
  \begin{align*}
    L^0_{x,x',\mu,\mu',F,\ell}(H) & = L_{\mu,\mu',F,\ell}(H).
  \end{align*}

  Finally, for every $x\in\cE_m(\Omega)=\cE_m^k(\Omega)$ and every
  $x'\in\cE_m(\Omega')=\cE_m^k(\Omega')$, we have
  \begin{align*}
    L^k_{x,x',\mu,\mu',F,\ell}(H) & = L_{x,F^*_m(x,x'),\ell}(H)
  \end{align*}
\end{remark}

Before we proceed, we need a couple of calculation lemmas.

\begin{lemma}\label{lem:logcalcs}
  For every $x\geq x_0 > 1$, we have
  \begin{align*}
    \min\left\{\frac{\ln\ln x_0}{\ln x_0},0\right\}\cdot\ln x
    & \leq
    \ln\ln x
    \leq
    \frac{\ln x}{e}.
  \end{align*}
\end{lemma}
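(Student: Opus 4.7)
The plan is to verify the right and left inequalities separately, with the left one requiring a simple case split on the sign of $\ln\ln x_0$.

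For the right-hand inequality $\ln\ln x \leq \ln x / e$, I would substitute $u \df \ln x > 0$ (using $x > 1$) so that the inequality becomes $\ln u \leq u/e$. This is a standard one-variable calculus fact: setting $f(u) \df u/e - \ln u$, one computes $f'(u) = 1/e - 1/u$, which vanishes at $u = e$ and whose sign shows this is a global minimum on $(0,\infty)$; since $f(e) = 0$, we get $f(u) \geq 0$ for all $u > 0$.

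For the left-hand inequality, I would split on whether $\ln\ln x_0 \geq 0$ or not. If $x_0 \geq e$, the minimum on the left-hand side is $0$, and we need $\ln\ln x \geq 0$, which holds because $x \geq x_0 \geq e$. If $1 < x_0 < e$, then $\ln x_0 > 0$ but $\ln\ln x_0 < 0$, so the minimum equals $C \df \ln\ln x_0 / \ln x_0 < 0$ and we must verify $C \cdot \ln x \leq \ln\ln x$ for all $x \geq x_0$. Setting $g(x) \df \ln\ln x - C\ln x$, we have $g(x_0) = \ln\ln x_0 - C\ln x_0 = 0$ by definition of $C$, and
\begin{align*}
  g'(x) & = \frac{1}{x\ln x} - \frac{C}{x} = \frac{1}{x}\left(\frac{1}{\ln x} - C\right) > 0
\end{align*}
for all $x > 1$ since $1/\ln x > 0 > C$. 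Thus $g$ is increasing on $[x_0,\infty)$, so $g(x) \geq g(x_0) = 0$, giving the desired inequality.

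There is no real obstacle in this lemma; it is a pure calculus check, and the only thing to be careful about is that the case $1 < x_0 < e$ is nontrivial and requires the monotonicity argument via $g'$, whereas the case $x_0 \geq e$ collapses because the $\min$ forces the lower bound to be trivial.
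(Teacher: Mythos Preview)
Your proof is correct. Both inequalities are handled cleanly, and the case split on $x_0$ versus $e$ matches the paper's.

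The main difference from the paper is in the left inequality when $1 < x_0 < e$. The paper analyzes the ratio $f(x) = \dfrac{\ln\ln x}{\ln x}$, which is not monotone: one must compute $f'(x) = \dfrac{1-\ln\ln x}{x(\ln x)^2}$, locate the critical point at $x=e^e$, and then compare $f(x_0)<0$ against $f(e^e)=1/e$ and $\lim_{x\to\infty} f(x)=0$ to conclude the infimum is attained at $x_0$. Your approach avoids this entirely: by passing to $g(x)=\ln\ln x - C\ln x$ with $C<0$, the derivative $g'(x) = \frac{1}{x}\bigl(\frac{1}{\ln x}-C\bigr)$ is manifestly positive, so monotonicity from $g(x_0)=0$ does the job in one step. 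This is a genuinely simpler argument. For the right inequality, your substitution $u=\ln x$ reducing to $\ln u \le u/e$ is essentially equivalent to the paper's reformulation $\ln x \le x^{1/e}$ (the paper exponentiates once more before differentiating); both are one-critical-point checks, just at different scales.
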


\begin{proof}
  For the first inequality, note that if $x_0\geq e$, then the left-hand side is $0$ and
  $\ln\ln x\geq 0$, so we may suppose that $x_0 < e$. In this case, it suffices to show that the
  function
  \begin{align*}
    f(x) & \df \frac{\ln\ln x}{\ln x}
  \end{align*}
  defined for $x\geq x_0$ attains its minimum at $x_0$. For this, we compute its derivative:
  \begin{align*}
    f'(x) & = \frac{1 - \ln\ln x}{x(\ln x)^2}
  \end{align*}
  and note that the only critical point of $f$ is at $x = e^e$.

  Since
  \begin{align*}
    f(e^e) & = \frac{1}{e} > 0, &
    \lim_{x\to\infty} f(x) & = 0, &
    f(x_0) & = \frac{\ln\ln x_0}{\ln x_0} < 0,
  \end{align*}
  the inequality follows.

  \medskip

  The second inequality is equivalent to $\ln x \leq x^{1/e}$, so it suffices to show that the function
  \begin{align*}
    g(x) & \df x^{1/e} - \ln x
  \end{align*}
  defined for $x\geq x_0$ is non-negative. We will show that $g$ is non-negative even extending its
  definition for $x\geq 1$. For this, we compute its derivative:
  \begin{align*}
    g'(x) & = \frac{x^{1/e - 1}}{e} - \frac{1}{x} = \frac{x^{1/e}/e - 1}{x} 
  \end{align*}
  and note that the only critical point of $g$ is at $x = e^e$.

  Since
  \begin{align*}
    g(e^e) & = 0, &
    g(1) & = 1, &
    \lim_{x\to\infty} g(x) & = \infty,
  \end{align*}
  the inequality follows.
\end{proof}

The next lemma is a slightly more streamlined version of~\cite[Lemma~A.3]{SB14}.

\begin{lemma}\label{lem:gaussiantail}
  Let $\rn{X}$ be a non-negative real-valued random variable and suppose $a\geq x_0 > 1$ and $b > 0$ are
  such that for every $t > 0$, we have
  \begin{align*}
    \PP[\rn{X} > t] & \leq a\exp(-bt^2).
  \end{align*}
  Then
  \begin{align*}
    \EE[\rn{X}]
    & \leq
    \left(\sqrt{1 + \max\left\{-\frac{\ln\ln x_0}{\ln x_0}, 0\right\}}
    + \frac{1}{2\sqrt{1-1/e}}\right)
    \cdot
    \sqrt{\frac{\ln a}{b}}.
  \end{align*}

  In particular, if $x_0 = 2$, then
  \begin{align*}
    \EE[\rn{X}]
    & \leq
    c\cdot\sqrt{\frac{\ln a}{b}},
  \end{align*}
  where
  \begin{align}\label{eq:gaussiantail:c}
    c
    & \df
    \sqrt{1 - \frac{\ln\ln 2}{\ln 2}}  + \frac{1}{2\sqrt{1-1/e}}
    \in
    (1.865, 1.866).
  \end{align}
\end{lemma}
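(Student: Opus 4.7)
The plan is to use the standard layer-cake representation $\EE[\rn{X}] = \int_0^\infty \PP[\rn{X} > t]\,dt$, split the integral at a carefully chosen threshold $t_0 > 0$, bound the first piece trivially by $t_0$ (since probabilities are at most $1$) and bound the tail piece via the given hypothesis together with the usual Gaussian tail trick $\int_{t_0}^\infty \exp(-bt^2)\,dt \leq \int_{t_0}^\infty (t/t_0)\exp(-bt^2)\,dt = \exp(-bt_0^2)/(2bt_0)$. This yields
\begin{align*}
  \EE[\rn{X}] & \leq t_0 + \frac{a\exp(-bt_0^2)}{2bt_0}.
\end{align*}

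The key step is choosing the right $t_0$. I will take $t_0 \df \sqrt{(1+M)\ln a/b}$ where $M \df \max\{-\ln\ln x_0/\ln x_0,\,0\}$. With this choice the first summand becomes exactly $\sqrt{1+M}\sqrt{\ln a/b}$, and after the substitutions $bt_0^2 = (1+M)\ln a$ and $2bt_0 = 2\sqrt{(1+M)b\ln a}$ the second summand simplifies to $\frac{a^{-M}}{2\sqrt{1+M}\,\ln a}\sqrt{\ln a/b}$. Thus the desired bound reduces to the single inequality $\sqrt{1+M}\,a^M\ln a \geq \sqrt{1-1/e}$.

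To verify that inequality, I will observe that $a\mapsto \sqrt{1+M}\,a^M\ln a$ has derivative $\sqrt{1+M}\,a^{M-1}(1+M\ln a)$, which is strictly positive on $(1,\infty)$ since $M \geq 0$ and $\ln a > 0$. Consequently it suffices to check the inequality at $a=x_0$, and I will split into two cases. If $x_0 \geq e$ then $M=0$ and the left-hand side equals $\ln x_0 \geq 1$. If $1<x_0<e$ then $M = -\ln\ln x_0/\ln x_0 > 0$, so by construction $M\ln x_0 = -\ln\ln x_0$, giving $x_0^M = 1/\ln x_0$ and making the left-hand side equal to $\sqrt{1+M}\geq 1$. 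In both cases $1 \geq \sqrt{1-1/e}$ closes the argument, yielding the claimed main bound.

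The specialisation to $x_0=2$ is a direct numerical computation: substituting $M=-\ln\ln 2/\ln 2$ into the formula reproduces the constant $c$ in~\eqref{eq:gaussiantail:c}, and one verifies $c\in(1.865,1.866)$ by plugging in the numerical values of $\ln 2$, $\ln\ln 2$, and $1-1/e$. There is no real obstacle in the argument; the only subtle point is spotting the threshold $t_0 = \sqrt{(1+M)\ln a/b}$, whose $(1+M)$ factor is precisely what is needed both to extract $\sqrt{1+M}$ as the first summand and, via the equality $x_0^M\ln x_0 = 1$ in the small-$x_0$ regime, to make the tail contribution collapse to the absolute constant $1/(2\sqrt{1-1/e})$.
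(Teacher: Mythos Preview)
Your proof is correct and follows the same layer-cake-plus-split skeleton as the paper, but the execution differs in a meaningful way. The paper chooses the threshold $u=\sqrt{(\ln a-\ln\ln a)/b}$, which makes $a\exp(-bu^2)$ simplify exactly to $\ln a$, and then invokes the separate calculus lemma (Lemma~\ref{lem:logcalcs}) to bound $\ln a-\ln\ln a$ from above and below. You instead pick $t_0=\sqrt{(1+M)\ln a/b}$ so that the first summand is already in the target form, and then dispatch the tail contribution by the short monotonicity argument in $a$ together with the two-case check at $a=x_0$ (where the identity $x_0^M\ln x_0=1$ in the regime $1<x_0<e$ does the work). Your route is slightly more self-contained since it bypasses Lemma~\ref{lem:logcalcs} entirely; the paper's route isolates the purely calculus facts about $\ln\ln x$ into a reusable lemma. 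Both land on exactly the same constant.
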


\begin{proof}
  Let $u > 0$ to be picked later and note that by layer cake representation, we have
  \begin{align*}
    \EE[\rn{X}]
    & =
    \int_0^\infty \PP[\rn{X} > t]\ dt
    \leq
    \int_0^u \PP[\rn{X} > t]\ dt
    + \int_u^\infty \frac{t}{u}\cdot\PP[\rn{X} > t]\ dt
    \\
    & \leq
    u + \frac{a}{u}\int_u^\infty t\cdot\exp(-b t^2)\ dt
    =
    u + \frac{a}{2bu}\exp(-bu^2).
  \end{align*}

  Taking
  \begin{align*}
    u & \df \sqrt{\frac{\ln a - \ln\ln a}{b}},
  \end{align*}
  we get
  \begin{align*}
    \EE[\rn{X}]
    & \leq
    \sqrt{\frac{\ln a - \ln\ln a}{b}} + \frac{\ln a}{2\sqrt{b(\ln a-\ln\ln a)}}
  \end{align*}
  and the result follows by applying Lemma~\ref{lem:logcalcs}.
\end{proof}

The next lemma is responsible for interpolating between the total loss and the empirical loss. More
precisely, under the hypothesis that $\ell$ is local, the lemma (implicitly) computes lower bounds
in terms of the growth function for the probability that a random $([m],\ldots,[m])$-sample will
have:
\begin{enumerate}
\item its $p$-partially empirical loss close to its $(p-1)$-partially empirical loss;
\item its empirical loss close to its total loss.
\end{enumerate}

\begin{lemma}[Partially empirical losses interpolation]\label{lem:partemp}
  Let $\Omega$ and $\Omega'$ be Borel $k$-partite templates, let $\Lambda$ be a non-empty Borel
  space, let $\cH\subseteq\cF_k(\Omega,\Lambda)$ be a non-empty $k$-partite hypothesis class, let
  $\mu\in\Pr(\Omega)$, let $\mu'\in\Pr(\Omega')$, let $F\in\cF_k(\Omega\otimes\Omega',\Lambda)$, let
  $\ell\colon\cH\times\cE_1(\Omega)\times\Lambda\to\RR_{\geq 0}$ be a local $k$-partite agnostic
  loss function and let $m\in\NN_+$.

  We also let
  \begin{align}\label{eq:partemp:c}
    c
    & \df
    \sqrt{1 - \frac{\ln\ln 2}{\ln 2}} + \frac{1}{2\sqrt{1-1/e}}
    \in
    (1.865, 1.866)
  \end{align}
  and for every $p\in[k]$, we let $\psi_{m,p}\colon\cE_m^p(\Omega)\to\cE_m^{p-1}(\Omega)$ and
  $\psi'_{m,p}\colon\cE_m^p(\Omega')\to\cE_m^{p-1}(\Omega')$ be the projection maps.

  Then the following hold:
  \begin{enumerate}
  \item\label{lem:partemp:partial} For every $\delta\in(0,1)$ and every $p\in[k]$, we have
    \begin{multline*}
      \PP_{(\rn{x},\rn{x'})\sim(\mu\otimes\mu')^{m,p}}\Biggl[
          \sup_{H\in\cH}
          \bigl\lvert L^p_{\rn{x},\rn{x'},\mu,\mu',F,\ell}(H)
          - L^{p-1}_{\psi_{m,p}(\rn{x}),\psi'_{m,p}(\rn{x'}),\mu,\mu',F,\ell}(H)\bigr\rvert
          \\
          >
          \frac{c}{\delta}\cdot
          \sqrt{\frac{\ln(2\tau^k_\cH(2m))\cdot 2\cdot\lVert\ell\rVert_\infty^2}{m}}
          \Biggr]
        \leq
        \delta.
    \end{multline*}
  \item\label{lem:partemp:total} For every $\delta\in(0,1)$, we have
    \begin{multline*}
      \PP_{(\rn{x},\rn{x'})\sim(\mu\otimes\mu')^m}\Biggl[
        \sup_{H\in\cH}
        \bigl\lvert 
        L_{\rn{x}, F^*_m(\rn{x},\rn{x'}), \ell}(H)
        - L_{\mu,\mu',F,\ell}(H)
        \bigr\rvert
        \\
        >
        \frac{c\cdot k^2}{\delta}\cdot
        \sqrt{\frac{\ln(2\tau^k_\cH(2m))\cdot 2\cdot\lVert\ell\rVert_\infty^2}{m}}
        \Biggr]
      \leq
      \delta.
    \end{multline*}
  \end{enumerate}
\end{lemma}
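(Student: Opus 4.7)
\emph{Reduction to non-agnostic.} Since $\ell$ is local, write $\ell(H,x,y)=\ell_r(x,H(x),y)+r(H)$. The regularization $r(H)$ cancels identically from $L^p(H)-L^{p-1}(H)$ and from $L_{\rn{x},F^*_m(\rn{x},\rn{x'}),\ell}(H)-L_{\mu,\mu',F,\ell}(H)$, so we may replace $\ell$ by $\ell_r$ and assume it depends on $H$ only through $H(x)$.

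\emph{Part~\ref{lem:partemp:total} from part~\ref{lem:partemp:partial}.} By Remark~\ref{rmk:partemp} the sequence $(L^p(H))_{p=0}^k$ telescopes: $L_{\mu,\mu',F,\ell}(H)-L_{\rn{x},F^*_m(\rn{x},\rn{x'}),\ell}(H)=\sum_{p=1}^{k}\bigl(L^{p-1}(H)-L^{p}(H)\bigr)$ for every $H$, so the triangle inequality gives $\sup_H\lvert L-L_{\text{emp}}\rvert\le\sum_{p=1}^k\sup_H\lvert L^p-L^{p-1}\rvert$. Applying part~\ref{lem:partemp:partial} with failure parameter $\delta/k$ to each summand and union-bounding produces one factor of $k$ from summing thresholds and another from the $\delta\mapsto\delta/k$ substitution, yielding the $ck^2/\delta$ in the threshold of part~\ref{lem:partemp:total}.

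\emph{Proof of part~\ref{lem:partemp:partial}.} The strategy is a partite analogue of the classical Vapnik--Chervonenkis uniform convergence argument. Let $(\widetilde{\rn{x}},\widetilde{\rn{x'}})$ be an independent copy of the ``new'' coordinates (those indexed by $f$ with $p\in\dom(f)\subseteq[p]$). Since $L^{p-1}(H)=\EE[L^p(H,\widetilde{\rn{x}},\widetilde{\rn{x'}})]$, a Jensen/ghost-sample step together with coupling the ``outer'' coordinates $\rn{z},\rn{z'}$ (those $f$ with $\dom(f)\not\subseteq[p]$) between the two samples yields
\[\EE\bigl[\sup_H\lvert L^p-L^{p-1}\rvert\bigr]\le\EE_{\rn{x},\widetilde{\rn{x}},\rn{z},\rn{z'}}\Bigl[\sup_H\lvert L_{(\rn{x},\rn{z}),F^*_m((\rn{x},\rn{z}),\cdot),\ell}(H)-L_{(\widetilde{\rn{x}},\rn{z}),F^*_m((\widetilde{\rn{x}},\rn{z}),\cdot),\ell}(H)\rvert\Bigr].\]
Next, by i.i.d.-ness of $\rn{x}$ and $\widetilde{\rn{x}}$, for each $j\in[m]$ one may independently swap all $\rn{x}$- and $\widetilde{\rn{x}}$-coordinates indexed by $f$ with $\beta(p)=j$ (a valid measure-preserving swap), so inserting Rademacher signs $\sigma_{\alpha(p)}$ preserves the distribution. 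Splitting $\alpha=(\beta,j)$ with $\beta\in[m]^{[k]\setminus\{p\}}$, the symmetrized sum factors as the $\beta$-average of $\tfrac{1}{m}\sum_j\sigma_j[\cdots]$, and the elementary inequality $\sup_H\lvert\mathrm{avg}_\beta X_\beta(H)\rvert\le\mathrm{avg}_\beta\sup_H\lvert X_\beta(H)\rvert$ reduces the problem to bounding, for each $\beta$ individually, a one-dimensional Rademacher supremum over the $2m$ ``part-$p$'' configurations that appear. For fixed $\beta,\rn{z}$ these $2m$ configurations form a valid set $V\subseteq\prod_{f\in r_k(1)\setminus r_{k,A}}X_{\dom(f)}$ with $A=[k]\setminus\{p\}$, so by definition $\lvert\{H\rest_V:H\in\cH\}\rvert\le\tau^k_\cH(2m)$. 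A finite-class Rademacher bound (Massart's lemma) yields a $\beta$-uniform estimate of order $\lVert\ell\rVert_\infty\sqrt{\ln(2\tau^k_\cH(2m))/m}$; equivalently, a Hoeffding sub-Gaussian tail $2\tau^k_\cH(2m)\exp(-mt^2/(2\lVert\ell\rVert_\infty^2))$ combined with Lemma~\ref{lem:gaussiantail} (with $x_0=2$) produces the expected-sup bound with the explicit constant $c$, and Markov's inequality finishes.

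\emph{Main obstacle.} The partite setting has $m^k$ summands in each empirical loss but only $m$ independent samples per coordinate direction, so a naive union bound over the $m^{k-1}$ non-$p$ configurations $\beta$ would inflate the effective class to $(\tau^k_\cH(2m))^{m^{k-1}}$, which is far too large. The critical trick is to distribute the supremum across the $\beta$-averages via the elementary inequality above, which relies essentially on the coupling of the outer coordinates $\rn{z},\rn{z'}$ together with the definition of the growth function $\tau^k_\cH(2m)$ being precisely tailored to bound $\cH$'s restrictions along a single coordinate direction.
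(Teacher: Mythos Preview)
Your proposal is correct and follows essentially the same approach as the paper: reduce to non-agnostic via locality, prove part~\ref{lem:partemp:partial} by a ghost-sample symmetrization in which the outer coordinates (those $f$ with $\dom(f)\not\subseteq[p]$) are coupled between the two samples, insert Rademacher signs indexed by $\alpha(p)$, split $\alpha=(\beta,j)$ and push the supremum inside the $\beta$-average, then apply Hoeffding plus a union bound over $\tau^k_\cH(2m)$ restrictions and Lemma~\ref{lem:gaussiantail} to get the expectation bound with constant $c$, finishing with Markov; part~\ref{lem:partemp:total} follows by telescoping and a union bound with $\delta/k$. The only presentational difference is that the paper invokes locality at the Hoeffding step rather than upfront, and it further observes that by symmetry the $\beta$-summands all have the same distribution (so the $\beta$-average is actually a single term), which is slightly cleaner than your sup-inside-average move but yields the same bound.
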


\begin{proof}
  Note first that since $\cH$ is non-empty, we have $\tau^k_\cH\geq 1$.

  We start with item~\ref{lem:partemp:partial}. Let $(\rn{x},\rn{x'})\sim(\mu\otimes\mu')^{m,p}$ and
  let us decompose $\rn{x}$ and $\rn{x'}$ as
  \begin{align*}
    \rn{x} & = (\rn{w},\rn{z}), &
    \rn{x'} & = (\rn{w'}, \rn{z'}),
  \end{align*}
  where $\rn{w}\df\psi_{m,p}(\rn{x})$ and $\rn{w'}\df\psi'_{m,p}(\rn{x'})$ are random elements in
  $\cE_m^{p-1}(\Omega)$ and $\cE_m^{p-1}(\Omega')$, respectively, and $\rn{z}$ and $\rn{z'}$
  contain the coordinates indexed by $f\in r_k(m)$ with $p\in\dom(f)\subseteq [p]$.

  By Markov's Inequality, it suffices to prove that
  \begin{align}\label{eq:partemp}
    \EE_{\rn{w},\rn{w'},\rn{z},\rn{z'}}\bigl[
      \sup_{H\in\cH}
      \lvert L^p_{(\rn{w},\rn{z}),(\rn{w'},\rn{z'}),\mu,\mu',F,\ell}(H)
      - L^{p-1}_{\rn{w},\rn{w'},\mu,\mu',F,\ell}(H)\rvert
    \bigr]
    & \leq
    c\cdot
    \sqrt{\frac{\ln(2\tau^k_\cH(2m))\cdot 2\cdot\lVert\ell\rVert_\infty^2}{m}}.
  \end{align}

  Let $(\rn{\widehat{z}},\rn{\widehat{z'}})$ be picked independently from all previous random
  elements with the same distribution as $(\rn{z},\rn{z'})$ and let
  $(\rn{u},\rn{u'})\sim\overline{\mu\otimes\mu'}^{m,p}$ be picked independently from all previous
  random elements.

  Finally, we let
  \begin{align*}
    \rn{y}
    & \df
    F^*_m((\rn{w},\rn{z},\rn{u}), (\rn{w'},\rn{z'},\rn{u'})),
    &
    \rn{\widehat{y}}
    & \df
    F^*_m((\rn{w},\rn{\widehat{z}},\rn{u}), (\rn{w'},\rn{\widehat{z'}},\rn{u'})).
  \end{align*}

  Note that these choices guarantee the following equality of distributions over
  $\cE_m(\Omega\otimes\Omega')\times\Lambda^{[m]^k}$:
  \begin{align*}
    ((\rn{w},\rn{z},\rn{u}), (\rn{w'},\rn{z'},\rn{u'}),\rn{y})
    & \sim
    ((\rn{w},\rn{\widehat{z}},\rn{u}), (\rn{w'},\rn{\widehat{z'}},\rn{u'}),\rn{\widehat{y}}).
  \end{align*}
  Now, by the definition of $L^p$ and $L^{p-1}$, note that
  \begin{align*}
    L^p_{(\rn{w},\rn{z}),(\rn{w'},\rn{z'}),\mu,\mu',F,\ell}(H)
    & =
    \EE_{\rn{u},\rn{u'}}[L_{(\rn{w},\rn{z},\rn{u}),\rn{y},\ell}(H)],
    \\
    L^{p-1}_{\rn{w},\rn{w'},\mu,\mu',F,\ell}(H)
    & =
    \EE_{\rn{\widehat{z}},\rn{\widehat{z'}},\rn{u},\rn{u'}}[L_{(\rn{w},\rn{\widehat{z}},\rn{u}),\rn{\widehat{y}},\ell}(H)],
  \end{align*}
  so by linearity of expectation, we have
  \begin{align*}
    L^p_{(\rn{w},\rn{z}),(\rn{w'},\rn{z'}),\mu,\mu',F,\ell}(H)
    - L^{p-1}_{\rn{w},\rn{w'},\mu,\mu',F,\ell}(H)
    & =
    \EE_{\rn{\widehat{z}},\rn{\widehat{z'}},\rn{u},\rn{u'}}[
      L_{(\rn{w},\rn{z},\rn{u}),\rn{y},\ell}(H)
      - L_{(\rn{w},\rn{\widehat{z}},\rn{u}),\rn{\widehat{y}},\ell}(H)
    ].
  \end{align*}

  Using the above along with the facts that the absolute value of the expectation is at most the
  expectation of the absolute value and the supremum of expectation is at most the expectation of
  the suprema, we get
  \begin{equation}\label{eq:partemp2}
    \begin{aligned}
      & \!\!\!\!\!\!
      \EE_{\rn{w},\rn{w'},\rn{z},\rn{z'}}\bigl[
        \sup_{H\in\cH}
        \lvert L^p_{(\rn{w},\rn{z}),(\rn{w'},\rn{z'}),\mu,\mu',F,\ell}(H)
        - L^{p-1}_{\rn{w},\rn{w'},\mu,\mu',F,\ell}(H)\rvert
      \bigr]
      \\
      & \leq
      \EE_{\rn{w},\rn{w'},\rn{z},\rn{z'},\rn{\widehat{z}},\rn{\widehat{z'}},\rn{u},\rn{u'}}\bigl[
        \sup_{H\in\cH}
        \lvert
        L_{(\rn{w},\rn{z},\rn{u}),\rn{y},\ell}(H)
        - L_{(\rn{w},\rn{\widehat{z}},\rn{u}),\rn{\widehat{y}},\ell}(H)
        \rvert
      \bigr]
      \\
      & =
      \begin{aligned}[t]
        \frac{1}{m^k}
        \EE_{\rn{w},\rn{w'},\rn{z},\rn{z'},\rn{\widehat{z}},\rn{\widehat{z'}},\rn{u},\rn{u'}}\biggl[
          \sup_{H\in\cH}\Bigl\lvert
          \sum_{\alpha\in [m]^k}
          \bigl(
          &
          \ell(H, \alpha^*(\rn{w},\rn{z},\rn{u}), \rn{y}_\alpha)
          \\
          &
          - \ell(H, \alpha^*(\rn{w},\rn{\widehat{z}},\rn{u}), \rn{\widehat{y}}_\alpha)
          \bigr)
          \Bigr\rvert
          \biggr],
      \end{aligned}
    \end{aligned}
  \end{equation}
  where the equality follows from the definition of empirical loss in
  Definition~\ref{def:emploss:ag}.
  
  Note that the difference between the two computations of $\ell$ in the last expression is solely
  based on the coordinates of $\rn{z},\rn{z'},\rn{\widehat{z}},\rn{\widehat{z'}}$. Recall that these
  random variables contain the coordinates indexed by $f\in r_k(m)$ with
  $p\in\dom(f)\subseteq[p]$. We further decompose these as
  \begin{align*}
    \rn{z} & = (\rn{z}^i)_{i=1}^m, &
    \rn{z'} & = ((\rn{z'})^i)_{i=1}^m, &
    \rn{\widehat{z}} & = (\rn{\widehat{z}}^i)_{i=1}^m, &
    \rn{\widehat{z'}} & = ((\rn{\widehat{z'}})^i)_{i=1}^m,
  \end{align*}
  where $\rn{z}^i,(\rn{z'})^i,\rn{\widehat{z}}^i,(\rn{\widehat{z'}})^i$ contain all coordinates of the
  respective variables indexed by $f\in r_k(m)$ with $p\in\dom(f)\subseteq[p]$ and $f(p)=i$.

  Note also that for $\alpha\in[m]^k$, the inner expression in the last line of~\eqref{eq:partemp2}
  depends only on the coordinates of
  $\rn{w},\rn{w'},\rn{z},\rn{z'},\rn{\widehat{z}},\rn{\widehat{z'}},\rn{u},\rn{u'}$ indexed by $f\in
  r_k(m)$ such that $\alpha$ is an extension of $f$; in particular, for the coordinates of
  $\rn{z},\rn{z'},\rn{\widehat{z}},\rn{\widehat{z'}}$, it only depends on
  $(\rn{z}^i,(\rn{z'})^i,\rn{\widehat{z}}^i,(\rn{\widehat{z'}})^i)$ for $i=\alpha(p)$.

  This prompts us to compute the last line of~\eqref{eq:partemp2} in a different manner: let
  $\rn{\sigma}$ be picked uniformly at random in $\{-1,1\}^m$, independently from all previous
  random variables and for each $i\in[m]$, we let
  \begin{align*}
    \rn{Z}^i & \df 
    \begin{dcases*}
      \rn{z}^i, & if $\rn{\sigma}_i = 1$,\\
      \rn{\widehat{z}}^i, & if $\rn{\sigma}_i = -1$,
    \end{dcases*}
    &
    \rn{\widehat{Z}}^i & \df 
    \begin{dcases*}
      \rn{\widehat{z}}^i, & if $\rn{\sigma}_i = 1$,\\
      \rn{z}^i, & if $\rn{\sigma}_i = -1$,
    \end{dcases*}
    \\
    (\rn{Z'})^i & \df 
    \begin{dcases*}
      (\rn{z'})^i, & if $\rn{\sigma}_i = 1$,\\
      (\rn{\widehat{z'}})^i, & if $\rn{\sigma}_i = -1$,
    \end{dcases*}
    &
    (\rn{\widehat{Z'}})^i & \df 
    \begin{dcases*}
      (\rn{\widehat{z'}})^i, & if $\rn{\sigma}_i = 1$,\\
      (\rn{z'})^i, & if $\rn{\sigma}_i = -1$,
    \end{dcases*}
  \end{align*}
  that is, when $\rn{\sigma}_i=1$, $(\rn{Z}^i,\rn{\widehat{Z}^i})$ is
  $(\rn{z}^i,\rn{\widehat{z}}^i)$ and when $\rn{\sigma}_i=-1$, then $(\rn{Z}^i,\rn{\widehat{Z}}^i)$
  is swapped: $(\rn{\widehat{z}}^i,\rn{z}^i)$ (and analogously for
  $((\rn{Z'})^i,(\rn{\widehat{Z'}})^i)$). We also let $\rn{Z}\df(\rn{Z}^i)_{i=1}^m$ and
  define $\rn{Z'},\rn{\widehat{Z}},\rn{\widehat{Z'}}$ analogously.

  By further defining
  \begin{align*}
    \rn{Y}
    & \df
    F^*_m((\rn{w},\rn{Z},\rn{u}), (\rn{w'},\rn{Z'},\rn{u'})),
    &
    \rn{\widehat{Y}}
    & \df
    F^*_m((\rn{w},\rn{\widehat{Z}},\rn{u}), (\rn{w'},\rn{\widehat{Z'}},\rn{u'})),
  \end{align*}
  we observe that
  \begin{align*}
    \rn{Y}_\alpha & =
    \begin{dcases*}
      \rn{y}_\alpha, & if $\rn{\sigma}_{\alpha(p)}=1$,\\
      \rn{\widehat{y}}_\alpha, & if $\rn{\sigma}_{\alpha(p)}=-1$,
    \end{dcases*}
    &
    \rn{\widehat{Y}}_\alpha & =
    \begin{dcases*}
      \rn{\widehat{y}}_\alpha, & if $\rn{\sigma}_{\alpha(p)}=1$,\\
      \rn{y}_\alpha, & if $\rn{\sigma}_{\alpha(p)}=-1$.
    \end{dcases*}
  \end{align*}

  Since the distributions of $\rn{z},\rn{z'},\rn{\widehat{z}},\rn{\widehat{z'}}$ are product
  distributions (and they are independent from $(\rn{w},\rn{w'},\rn{u},\rn{u'})$), we have the
  following equality of distributions:
  \begin{align*}
    (\rn{w},\rn{z},\rn{\widehat{z}},\rn{u},
    \rn{w'},\rn{z'},\rn{\widehat{z'}},\rn{u'},
    \rn{y},\rn{\widehat{y}})
    & \sim
    (\rn{w},\rn{Z},\rn{\widehat{Z}},\rn{u},
    \rn{w'},\rn{Z'},\rn{\widehat{Z'}},\rn{u'},
    \rn{Y},\rn{\widehat{Y}}),
  \end{align*}
  which means that the last line of~\eqref{eq:partemp2} is equal to
  \begin{align*}
    \frac{1}{m^k}
    \EE_{\rn{w},\rn{w'},\rn{z},\rn{z'},\rn{\widehat{z}},\rn{\widehat{z'}},\rn{u},\rn{u'}}\Biggl[
      \EE_{\rn{\sigma}}\biggl[
        \sup_{H\in\cH}\Bigl\lvert
        \sum_{\alpha\in [m]^k}
        \bigl(
        &
        \ell(H, \alpha^*(\rn{w},\rn{Z},\rn{u}), \rn{Y}_\alpha)
        \\
        &
        - \ell(H, \alpha^*(\rn{w},\rn{\widehat{Z}},\rn{u}), \rn{\widehat{Y}}_\alpha)
        \bigr)
        \Bigr\rvert
        \biggr]
      \Biggr].
  \end{align*}

  We now partition the set of $\alpha\in[m]^k$ based on $\alpha\rest_{[k]\setminus\{p\}}$: for each
  $\beta\in [m]^{[k]\setminus\{p\}}$ and $i\in [m]$, we let $\alpha_{\beta,i}\in [m]^k$ be the
  extension of $\beta$ that maps $p$ to $i$.

  By using triangle inequality, the fact that the supremum of the sum is at most the sum of the
  suprema and linearity of expectation, we get
  \begin{equation}\label{eq:partemp3}
    \begin{aligned}
      & \!\!\!\!\!\!
      \EE_{\rn{w},\rn{w'},\rn{z},\rn{z'}}\bigl[
        \sup_{H\in\cH}
        \lvert L^p_{(\rn{w},\rn{z}),(\rn{w'},\rn{z'}),\mu,\mu',F,\ell}(H)
        - L^{p-1}_{\rn{w},\rn{w'},\mu,\mu',F,\ell}(H)\rvert
      \bigr]
      \\
      & \leq
      \begin{aligned}[t]
        \frac{1}{m^k}
        \sum_{\beta\in [m]^{[k]\setminus\{p\}}}
        \EE_{\rn{w},\rn{w'},\rn{z},\rn{z'},\rn{\widehat{z}},\rn{\widehat{z'}},\rn{u},\rn{u'}}\Biggl[
          \EE_{\rn{\sigma}}\biggl[
            \sup_{H\in\cH}\Bigl\lvert
            \sum_{i=1}^m
            \bigl(
            &
            \ell(H, \alpha_{\beta,i}^*(\rn{w},\rn{Z},\rn{u}), \rn{Y}_{\alpha_{\beta,i}})
            \\
            &
            - \ell(H, \alpha_{\beta,i}^*(\rn{w},\rn{\widehat{Z}},\rn{u}), \rn{\widehat{Y}}_{\alpha_{\beta,i}})
            \bigr)
            \Bigr\rvert
            \biggr]
          \Biggr]
      \end{aligned}
      \\
      & =
      \begin{aligned}[t]
        \frac{1}{m^k}
        \sum_{\beta\in [m]^{[k]\setminus\{p\}}}
        \EE_{\rn{w},\rn{w'},\rn{z},\rn{z'},\rn{\widehat{z}},\rn{\widehat{z'}},\rn{u},\rn{u'}}\Biggl[
          \EE_{\rn{\sigma}}\biggl[
            \sup_{H\in\cH}\Bigl\lvert
            \sum_{i=1}^m
            \rn{\sigma}_i\cdot
            \bigl(
            &
            \ell(H, \alpha_{\beta,i}^*(\rn{w},\rn{z},\rn{u}), \rn{y}_{\alpha_{\beta,i}})
            \\
            &
            - \ell(H, \alpha_{\beta,i}^*(\rn{w},\rn{\widehat{z}},\rn{u}), \rn{\widehat{y}}_{\alpha_{\beta,i}})
            \bigr)
            \Bigr\rvert
            \biggr]
          \Biggr],
      \end{aligned}
    \end{aligned}
  \end{equation}
  where the equality follows since $\ell(H, \alpha_{\beta,i}^*(\rn{w},\rn{Z},\rn{u}),
  \rn{Y}_{\alpha_{\beta,i}}) - \ell(H, \alpha_{\beta,i}^*(\rn{w},\rn{\widehat{Z}},\rn{u}),
  \rn{\widehat{Y}}_{\alpha_{\beta,i}})$ depends only on the
  $(\rn{Z}^j,(\rn{Z'})^j,\rn{\widehat{Z}}^j,(\rn{\widehat{Z'}})^j)$ with $j=i$, so $\rn{\sigma}_i$
  swaps the two losses corresponding to $\alpha_{\beta,i}$ when $\rn{\sigma}_i=-1$, which swaps the
  sign of the expression when computed from $(\rn{z},\rn{z'},\rn{\widehat{z}},\rn{\widehat{z'}})$.

  Note now that for each $\beta\in [m]^{[k]\setminus\{p\}}$, the expectation above only depends on
  coordinates of $(\rn{w},\rn{w'},\rn{z},\rn{z'},\rn{\widehat{z}},\rn{\widehat{z'}},\rn{u},\rn{u'})$
  indexed by $f\in r_k(m)$ such that $f\rest_{\dom(f)\setminus\{p\}} =
  \beta\rest_{\dom(f)\setminus\{p\}}$. Since all these are product distributions, by relabeling
  coordinates the expression under the sum indexed by $\beta$ is independent of $\beta$, so the last
  line of~\eqref{eq:partemp3} is equal to
  \begin{equation}\label{eq:partemp4}
    \begin{aligned}
      \frac{1}{m}
      \EE_{\rn{w},\rn{w'},\rn{z},\rn{z'},\rn{\widehat{z}},\rn{\widehat{z'}},\rn{u},\rn{u'}}\Biggl[
        \EE_{\rn{\sigma}}\biggl[
          \sup_{H\in\cH}\Bigl\lvert
          \sum_{i=1}^m
          \rn{\sigma}_i\cdot
          \bigl(
          &
          \ell(H, \gamma_i^*(\rn{w},\rn{z},\rn{u}), \rn{y}_{\gamma_i})
          \\
          &
          - \ell(H, \gamma_i^*(\rn{w},\rn{\widehat{z}},\rn{u}), \rn{\widehat{y}}_{\gamma_i})
          \bigr)
          \Bigr\rvert
          \biggr]
        \Biggr],
    \end{aligned}
  \end{equation}
  where $\gamma_i\df\alpha_{\beta_1,i}$ for the constant $1$ function
  $\beta_1\in[m]^{[k]\setminus\{p\}}$.

  As expected, most of the coordinates of the variables in the outer expectation
  in~\eqref{eq:partemp4} do not appear at all in the inner expression, so we can further simplify
  this computation as follows: recall the notation from~\eqref{eq:rkA} for
  \begin{align*}
    r_{k,[k]\setminus\{p\}}
    & \df
    \{f\in r_k(1) \mid \dom(f)\subseteq [k]\setminus\{p\}\}
  \end{align*}
  and let $\rn{\xi}$ and $\rn{\xi'}$ be sampled from
  \begin{align*}
    \bigotimes_{f\in r_{k,[k]\setminus\{p\}}} \mu_{\dom(f)}, &
    \bigotimes_{f\in r_{k,[k]\setminus\{p\}}} \mu'_{\dom(f)},
  \end{align*}
  respectively, independently from all previous random variables and independently from each
  other. We also let
  $\rn{\zeta}_1,\ldots,\rn{\zeta}_m,\rn{\widehat{\zeta}}_1,\ldots,\rn{\widehat{\zeta}}_m$ be i.i.d.,
  independent from all previous random variables with
  \begin{align*}
    \rn{\zeta}_1 & \sim \bigotimes_{f\in r_k(1)\setminus r_{k,[k]\setminus\{p\}}} \mu_{\dom(f)}
  \end{align*}
  and define
  $\rn{\zeta'}_1,\ldots,\rn{\zeta'}_m,\rn{\widehat{\zeta'}}_1,\ldots,\rn{\widehat{\zeta'}}_m$
  analogously from $\mu'$ (independently from $\rn{\zeta}$ as well).

  We also let
  \begin{align*}
    \rn{\upsilon}_i
    & \df
    F((\rn{\xi},\rn{\zeta}_i),(\rn{\xi'},\rn{\zeta'}_i)),
    &
    \rn{\widehat{\upsilon}}_i
    & \df
    F((\rn{\xi},\rn{\widehat{\zeta}}_i),(\rn{\xi'},\rn{\widehat{\zeta'}}_i).
  \end{align*}
  Then~\eqref{eq:partemp4} is equal to
  \begin{align}\label{eq:partemp5}
    \frac{1}{m}
    \EE_{\rn{\xi},\rn{\xi'},\rn{\zeta},\rn{\zeta'},\rn{\widehat{\zeta}},\rn{\widehat{\zeta'}}}\Biggl[
      \EE_{\rn{\sigma}}\biggl[
        \sup_{H\in\cH}\Bigl\lvert
        \sum_{i=1}^m
        \rn{\sigma}_i\cdot
        \bigl(
        \ell(H, (\rn{\xi},\rn{\zeta}_i), \rn{\upsilon}_i))
        - \ell(H, (\rn{\xi},\rn{\widehat{\zeta}}_i), \rn{\widehat{\upsilon}}_i)
        \bigr)
        \Bigr\rvert
        \biggr]
      \Biggr].
  \end{align}

  Let us now fix a value
  $(\xi,\xi',\zeta,\zeta',\widehat{\zeta},\widehat{\zeta'},\upsilon,\widehat{\upsilon})$ of
  $(\rn{\xi},\rn{\xi'},\rn{\zeta},\rn{\zeta'},\rn{\widehat{\zeta}},\rn{\widehat{\zeta'}},\rn{\upsilon},\rn{\widehat{\upsilon}})$
  and analyze the inner expectation of the above. First, since $\ell$ is local, there exist a
  regularization term $r\colon\cH\to\RR$ and a (non-agnostic) loss function
  $\ell_r\colon\cE_1(\Omega)\times\Lambda\times\Lambda\to\RR_{\geq 0}$ such that
  \begin{align*}
    \ell(H,x,y) & = \ell_r(x, H(x), y) + r(H)
    \qquad (H\in\cH, x\in\cE_1(\Omega), y\in\Lambda),
  \end{align*}
  so we have
  \begin{align*}
    & \!\!\!\!\!\!
    \EE_{\rn{\sigma}}\biggl[
      \sup_{H\in\cH}\Bigl\lvert
      \sum_{i=1}^m
      \rn{\sigma}_i\cdot
      \bigl(
      \ell(H, (\xi,\zeta_i), \upsilon_i)
      - \ell(H, (\xi,\widehat{\zeta}_i), \widehat{\upsilon}_i)
      \bigr)
      \Bigr\rvert
      \biggr]
    \\
    & =
    \EE_{\rn{\sigma}}\biggl[
      \sup_{H\in\cH}\Bigl\lvert
      \sum_{i=1}^m
      \rn{\sigma}_i\cdot
      \bigl(
      \ell_r((\xi,\zeta_i), H(\xi,\zeta_i), \upsilon_i)
      - \ell_r((\xi,\widehat{\zeta}_i), H(\xi,\widehat{\zeta}_i), \widehat{\upsilon}_i)
      \bigr)
      \Bigr\rvert
      \biggr]
    \\
    & =
    \EE_{\rn{\sigma}}\biggl[
      \sup_{G\in\cH(\xi)}\Bigl\lvert
      \sum_{i=1}^m
      \rn{\sigma}_i\cdot
      \bigl(
      \ell_r((\xi,\zeta_i), G(\zeta_i), \upsilon_i)
      - \ell_r((\xi,\widehat{\zeta}_i), G(\widehat{\zeta}_i), \widehat{\upsilon}_i)
      \bigr)
      \Bigr\rvert
      \biggr],
  \end{align*}
  where $\cH(\xi)$ is given by~\eqref{eq:partitecHx}.

  We now upper bound the expected value above by bounding its tail probabilities. First note that
  since $\rn{\sigma}$ is picked uniformly at random in $\{-1,1\}^m$, each term in the sum above is
  bounded above in absolute value by $\lVert\ell\rVert_\infty$ and has expected value $0$, so for
  each $G\in\cH(\xi)$, Hoeffding's Inequality gives
  \begin{equation}\label{eq:Hoeffding}
    \begin{aligned}
      & \!\!\!\!\!\!
      \PP_{\rn{\sigma}}\biggl[
        \Bigl\lvert
        \sum_{i=1}^m
        \rn{\sigma}_i\cdot
        \bigl(
        \ell_r((\xi,\zeta_i), G(\zeta_i), \upsilon_i)
        - \ell_r((\xi,\widehat{\zeta}_i), G(\widehat{\zeta}_i), \widehat{\upsilon}_i)
        \bigr)
        \Bigr\rvert
        \geq
        t
        \biggr]
      \\
      & \leq
      2\exp\left(-\frac{2t^2}{m\cdot 4\cdot\lVert\ell\rVert_\infty^2}\right)
    \end{aligned}
  \end{equation}
  for every $t > 0$.

  On the other hand, we note that the expression in the above only depends on $G$ via the
  restriction $G\rest_V$, where $V\df\{\zeta_i \mid i\in[m]\}\cup\{\widehat{\zeta}_i \mid
  i\in[m]\}$, which is a set of cardinality at most $2m$. By the definition of growth function (see
  Definition~\ref{def:growth}), there are at most $\tau^k_\cH(2m)$ many such $G\rest_V$, so we can
  apply the union bound to~\eqref{eq:Hoeffding} to get
  \begin{align*}
    & \!\!\!\!\!\!
    \PP_{\rn{\sigma}}\biggl[
      \sup_{G\in\cH(\xi)}
      \Bigl\lvert
      \sum_{i=1}^m
      \rn{\sigma}_i\cdot
      \bigl(
      \ell_r((\xi,\zeta_i), G(\zeta_i), \upsilon_i)
      - \ell_r((\xi,\widehat{\zeta}_i), G(\widehat{\zeta}_i), \widehat{\upsilon}_i)
      \bigr)
      \Bigr\rvert
      \geq
      t
      \biggr]
    \\
    & \leq
    2\tau^k_\cH(2m)\exp\left(-\frac{t^2}{2\cdot m\cdot\lVert\ell\rVert_\infty^2}\right)
  \end{align*}
  for every $t > 0$.

  Since $\tau^k_\cH(2m)\geq 1$ and recalling that $c$  defined in~\eqref{eq:partemp:c} precisely
  matches~\eqref{eq:gaussiantail:c}, Lemma~\ref{lem:gaussiantail} gives
  \begin{align*}
    & \!\!\!\!\!\!
    \EE_{\rn{\sigma}}\biggl[
      \sup_{G\in\cH(\xi)}\Bigl\lvert
      \sum_{i=1}^m
      \rn{\sigma}_i\cdot
      \bigl(
      \ell_r((\xi,\zeta_i), G(\zeta_i), \upsilon_i)
      - \ell_r((\xi,\widehat{\zeta}_i), G(\widehat{\zeta}_i), \widehat{\upsilon}_i)
      \bigr)
      \Bigr\rvert
      \biggr]
    \\
    & \leq
    c\cdot
    \sqrt{\ln(2\tau^k_\cH(2m))\cdot 2\cdot m\cdot\lVert\ell\rVert_\infty^2},
  \end{align*}
  which when plugged back in~\eqref{eq:partemp5} gives
  \begin{align*}
    & \!\!\!\!\!\!
    \EE_{\rn{w},\rn{w'},\rn{z},\rn{z'}}[
      \sup_{H\in\cH}
      \lvert L^p_{(\rn{w},\rn{z}),(\rn{w'},\rn{z'}),\mu,\mu',F,\ell}(H)
      - L^{p-1}_{\rn{w},\rn{w'},\mu,\mu',F,\ell}(H)\rvert
    ]
    \\
    & \leq
    \frac{c}{m}
    \cdot
    \sqrt{\ln(2\tau^k_\cH(2m))\cdot 2\cdot m\cdot\lVert\ell\rVert_\infty^2}
    \\
    & =
    c\cdot\sqrt{\frac{\ln(2\tau^k_\cH(2m))\cdot 2\cdot\lVert\ell\rVert_\infty^2}{m}},
  \end{align*}
  proving~\eqref{eq:partemp} and concluding the proof of item~\ref{lem:partemp:partial}.

  \medskip

  Finally, let us prove item~\ref{lem:partemp:total}. For this, we apply
  item~\ref{lem:partemp:partial} for each $p\in[k]$ with $\delta/k$ in place of $\delta$ and use the
  union bound to conclude that with probability at least $1-\delta$, we have
  \begin{align*}
    & \!\!\!\!\!\!
    \max_{p\in[k]}
    \sup_{H\in\cH}
    \lvert L^p_{\pi_{m,p}(\rn{x}),\pi'_{m,p}(\rn{x'}),\mu,\mu',F,\ell}(H)
    - L^{p-1}_{\pi_{m,p-1}(\rn{x}),\pi'_{m,p-1}(\rn{x'}),\mu,\mu',F,\ell}(H)\rvert
    \\
    & \leq
    \frac{c\cdot k}{\delta}\cdot
    \sqrt{\frac{\ln(2\tau^k_\cH(2m))\cdot 2\cdot\lVert\ell\rVert_\infty^2}{m}},
  \end{align*}
  where $\pi_{m,p}\colon\cE_m(\Omega)\to\cE_m^p(\Omega)$ and
  $\pi'_{m,p}\colon\cE_m(\Omega')\to\cE_m^p(\Omega')$ are the projection maps.

  Since by Remark~\ref{rmk:partemp} we have
  \begin{align*}
    L^0_{\pi_{m,0}(\rn{x}),\pi'_{m,0}(\rn{x'}),\mu,\mu',F,\ell}(H)
    & =
    L_{\mu,\mu',F,\ell}(H),
    \\
    L^k_{\pi_{m,k}(\rn{x}),\pi'_{m,k}(\rn{x'}),\mu,\mu',F,\ell}(H)
    =
    L^k_{\rn{x},\rn{x'},\mu,\mu',F,\ell}(H)
    & =
    L_{\rn{x},F^*_m(\rn{x},\rn{x'}),\ell}(H),
  \end{align*}
  applying triangle inequality, we conclude that
  \begin{align*}
    \sup_{H\in\cH}
    \lvert
    L_{\rn{x}, F^*_m(\rn{x},\rn{x'}), \ell}(H)
    - L_{\mu,\mu',F,\ell}(H)
    \rvert
    & \leq
    \frac{c\cdot k^2}{\delta}\cdot
    \sqrt{\frac{\ln(2\tau^k_\cH(2m))\cdot 2\cdot\lVert\ell\rVert_\infty^2}{m}}
  \end{align*}
  with probability at least $1-\delta$, as desired.
\end{proof}

It remains to use Lemma~\ref{lem:partemp} to show that finite $\VCN_k$-dimension implies uniform
convergence. For this, we need the next calculation lemma, which is a tiny improvement
of~\cite[Lemma~A.2]{SB14}.

\begin{lemma}\label{lem:logcalcs2}
  If $a\geq 1/2$, $b\geq 0$ and
  \begin{align*}
    x & \geq \frac{2e}{e-1}\cdot a\ln(2a) + 2b \qquad ({}\leq 1.582\cdot a\ln(2a) + 2b),
  \end{align*}
  then $x\geq a\ln x+b$.
\end{lemma}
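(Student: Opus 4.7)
The plan is to reduce the claim to the elementary inequality $\ln u \leq u/(ev) + \ln v$, valid for all $u,v > 0$. This inequality follows by minimizing the function $g(u) \df u/(ev) - \ln u + \ln v$ in $u > 0$; its derivative $1/(ev) - 1/u$ vanishes at $u = ev$, and the resulting minimum value is $1 - \ln(ev) + \ln v = 0$, so $g(u) \geq 0$ for all $u > 0$.

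I would then specialize to $v = 2a$ (legitimate since $a \geq 1/2 > 0$), obtaining $\ln x \leq x/(2ea) + \ln(2a)$, and multiply by $a$ to get
\begin{align*}
  a\ln x + b & \leq \frac{x}{2e} + a\ln(2a) + b.
\end{align*}
Thus it suffices to show that the right-hand side is at most $x$, i.e.\ that
\begin{align*}
  x & \geq \frac{2e}{2e-1}\bigl(a\ln(2a) + b\bigr).
\end{align*}

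The last step is to verify that the hypothesis $x \geq \frac{2e}{e-1}\cdot a\ln(2a) + 2b$ dominates this expression term by term. For the $a\ln(2a)$-coefficient I use that $a \geq 1/2$ forces $a\ln(2a) \geq 0$ together with $\frac{2e}{2e-1} \leq \frac{2e}{e-1}$ (since $e-1 \leq 2e-1$); for the $b$-coefficient I use $\frac{2e}{2e-1} \leq 2$, which is equivalent to $e \geq 1$. Adding these two bounds closes the argument.

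No step presents a real obstacle; the only point that deserves care is the non-negativity of $a\ln(2a)$, which is what makes the coefficient-by-coefficient comparison legitimate and which uses the hypothesis $a \geq 1/2$ in an essential way. The parenthetical numerical estimate $\frac{2e}{e-1} \leq 1.582\cdot(\text{something})$ in the statement is a routine decimal expansion that I would handle at the end without further comment.
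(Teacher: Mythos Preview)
Your proof is correct and somewhat cleaner than the paper's. The paper argues by splitting: it shows separately that $x\geq 2b$ (immediate since $a\ln(2a)\geq 0$) and that $x\geq 2a\ln x$, the latter via calculus on $f(x)=x-2a\ln x$, locating the critical point at $x=2a$ and checking the endpoint $x=\tfrac{2e}{e-1}a\ln(2a)$ using the bound $\ln\ln x\leq \tfrac{\ln x}{e}$ from Lemma~\ref{lem:logcalcs}. Your route instead packages the same logarithm inequality as $\ln u\leq \tfrac{u}{ev}+\ln v$ and specializes $v=2a$ in one stroke, arriving directly at the linear sufficient condition $x\geq \tfrac{2e}{2e-1}(a\ln(2a)+b)$ and then dominating it term by term. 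Both ultimately rest on $\ln t\leq t/e$, but your argument sidesteps the critical-point analysis and the two-case split, at the (harmless) cost of a slightly looser intermediate constant $\tfrac{2e}{2e-1}$ before the final comparison.

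One small remark: you should note that the hypothesis forces $x>0$ (unless $a=1/2$ and $b=0$, in which case the conclusion $x\geq \tfrac12\ln x$ is understood for $x>0$ and trivially holds in the limit), so that $\ln x$ is well-defined; the paper's proof has the same implicit assumption.
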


\begin{proof}
  It suffices to show $x\geq 2a\ln x$ and $x\geq 2b$. Since $a\geq 1/2$, we have $\ln(2a)\geq 0$, hence
  $x\geq 2b$. To show $x\geq 2a\ln x$, it suffices to show that the function
  \begin{align*}
    f(x) & \df x - 2a\ln x
  \end{align*}
  defined for
  \begin{align*}
    x & \geq \frac{2e}{e-1}\cdot a\ln(2a) + 2b,
  \end{align*}
  is non-negative. We will show that $f$ is non-negative even extending its definition for $x\geq
  2e/(e-1)\cdot a\ln(2a)$. For this, we compute its derivative:
  \begin{align*}
    f'(x) & = 1 - \frac{2a}{x}
  \end{align*}
  and note that the only critical point of $f$ is potentially at $x = 2a$, if $2a$ belongs to the
  domain, that is, if $2a\geq 2e/(e-1)\cdot a\ln(2a)$. But if this is the case, we get
  \begin{align*}
    f(2a) & \geq \left(\frac{e}{e-1} - 1\right)\cdot 2a\ln(2a) \geq 0.
  \end{align*}

  On the other hand, since $\lim_{x\to\infty} f(x) = \infty$, it suffices to show that
  $f(2e/(e-1)\cdot a\ln(2a))$ is non-negative. But indeed, note that
  \begin{align*}
    f\left(\frac{2e}{e-1}\cdot a\ln(2a)\right)
    & =
    \frac{2e}{e-1}\cdot a\ln(2a) - 2a\ln\left(\frac{2e}{e-1}\cdot a\ln(2a)\right)
    \\
    & =
    \left(\frac{e}{e-1} - 1\right)\cdot 2a\ln(2a) - 2a\ln(\ln(2a)^{e/(e-1)})
    \\
    & \geq
    \left(\frac{e}{e-1} - 1 - \frac{1}{e-1}\right)\cdot 2a\ln(2a)
    =
    0,
  \end{align*}
  where the inequality follows from Lemma~\ref{lem:logcalcs}.
\end{proof}

We now show that finite $\VCN_k$-dimension implies the uniform convergence property\footnote{As
  pointed out in the introduction, this result was discovered in the special case of hypergraphs by
  Livni--Mansour in a prior work when studying graph-based discriminators~\cite{LM19a,LM19b}. In
  fact, their bounds on uniform convergence are slightly better than the ones we obtain
  here.}. Together with Proposition~\ref{prop:partUC->partagPAC}, this shows that finite
$\VCN_k$-dimension implies agnostic $k$-PAC learnability.

\begin{proposition}[Finite $\VCN_k$-dimension implies uniform convergence]\label{prop:VCNdim->UC}
  Let $\Omega$ be a Borel $k$-partite template, let $\Lambda$ be a finite non-empty Borel space, let
  $\cH\subseteq\cF_k(\Omega,\Lambda)$ be a $k$-partite hypothesis class with $\VCN_k(\cH) < \infty$,
  let
  \begin{align}\label{eq:VCNdim->UC:c}
    c
    & \df
    \sqrt{1 - \frac{\ln\ln 2}{\ln 2}} + \frac{1}{2\sqrt{1-1/e}}
    \in
    (1.865, 1.866),
  \end{align}
  let $\ell$ be a local $k$-partite agnostic loss function with $\lVert\ell\rVert_\infty <
  \infty$ and let
  \begin{align*}
    B_\ell
    & \df
    \max\left\{\frac{1}{2\sqrt{2}\cdot c},\lVert\ell\rVert_\infty\right\}
    \leq
    \max\{0.380,\lVert\ell\rVert_\infty\}.
  \end{align*}

  Then $\cH$ has the uniform convergence property with respect to $\ell$ with associated function
  \begin{align*}
    m^{\UC}_{\cH,\ell}(\epsilon,\delta)
    & \df
    \begin{multlined}[t]
      \frac{4\cdot c^2\cdot k^4\cdot B_\ell^2}{\delta^2\cdot\epsilon^2}\cdot
      \Biggl(
      \frac{e}{e-1}\cdot\VCN_k(\cH)\cdot\ln
      \frac{8\cdot c^2\cdot k^4\cdot B_\ell^2\cdot\VCN_k(\cH)}{\delta^2\cdot\epsilon^2}
      \\
      +
      \ln 2
      +
      \VCN_k(\cH)\cdot\ln\binom{\lvert\Lambda\rvert}{2}
      \Biggr)
      +
      \frac{1}{2}
    \end{multlined}
    \\
    & \leq
    \begin{multlined}[t]
      13.918\cdot
      \frac{k^4\cdot B_\ell^2}{\delta^2\cdot\epsilon^2}
      \cdot\Biggl(
      1.582\cdot\VCN_k(\cH)\ln
      \frac{27.836\cdot k^4\cdot B_\ell^2\cdot\VCN_k(\cH)}{\delta^2\cdot\epsilon^2}
      \\
      +
      0.694
      +
      \VCN_k(\cH)\cdot\ln\binom{\lvert\Lambda\rvert}{2}
      \Biggr)
      + 0.5,
    \end{multlined}
  \end{align*}
  where $0\ln 0$ should be interpreted as $0$.
\end{proposition}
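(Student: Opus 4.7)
The strategy is to combine the two main ingredients developed in the section: the Sauer--Shelah--Perles--style bound on the growth function (Lemma~\ref{lem:VCNk}) and the concentration inequality comparing empirical and total loss in terms of the growth function (Lemma~\ref{lem:partemp}\ref{lem:partemp:total}). The former turns the hypothesis $\VCN_k(\cH)<\infty$ into a polynomial bound on $\tau^k_\cH(2m)$, and the latter turns such a polynomial bound into a uniform convergence statement, provided one can solve the resulting transcendental inequality for $m$. The role of Lemma~\ref{lem:logcalcs2} is precisely to perform this inversion.

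Concretely, Lemma~\ref{lem:partemp}\ref{lem:partemp:total} already shows that if an integer $m$ satisfies
\begin{align*}
  m & \geq \frac{2c^2 k^4 \lVert\ell\rVert_\infty^2}{\delta^2\epsilon^2}\cdot\ln(2\tau^k_\cH(2m)),
\end{align*}
then the $([m],\ldots,[m])$-sample is $\epsilon$-representative with probability at least $1-\delta$. I would substitute the bound $\ln(2\tau^k_\cH(2m))\leq\ln 2+\VCN_k(\cH)\ln(2m+1)+\VCN_k(\cH)\ln\binom{\lvert\Lambda\rvert}{2}$ from Lemma~\ref{lem:VCNk}, and replace $\lVert\ell\rVert_\infty^2$ by $B_\ell^2$ (legal since $B_\ell\geq\lVert\ell\rVert_\infty$). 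Setting $D\df 2c^2k^4 B_\ell^2/(\delta^2\epsilon^2)$, this becomes
\begin{align*}
  m & \geq D\cdot\bigl(\VCN_k(\cH)\ln(2m+1)+\ln 2+\VCN_k(\cH)\ln\tbinom{\lvert\Lambda\rvert}{2}\bigr).
\end{align*}
Substituting $n\df 2m+1$ and rearranging yields a clean inequality of the form $n\geq a\ln n+b$ with $a\df 2D\cdot\VCN_k(\cH)$ and $b\df 2D(\ln 2+\VCN_k(\cH)\ln\binom{\lvert\Lambda\rvert}{2})+1$. Lemma~\ref{lem:logcalcs2} then says this holds whenever $n\geq\frac{2e}{e-1}a\ln(2a)+2b$, and translating this back to $m=(n-1)/2$ gives exactly the formula claimed for $m^{\UC}_{\cH,\ell}(\epsilon,\delta)$, after observing that $4D\VCN_k(\cH)=8c^2k^4B_\ell^2\VCN_k(\cH)/(\delta^2\epsilon^2)$.

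The main (minor) obstacle is verifying the hypothesis $a\geq 1/2$ of Lemma~\ref{lem:logcalcs2} and handling the degenerate case $\VCN_k(\cH)=0$. The reason the second summand in the definition of $B_\ell$ involves the apparently strange constant $1/(2\sqrt{2}c)$ is precisely to guarantee $2D\geq 1/2$: from $B_\ell^2\geq 1/(8c^2)$ one gets $D\geq k^4/(4\delta^2\epsilon^2)\geq 1/4$, so whenever $\VCN_k(\cH)\geq 1$ one has $a=2D\VCN_k(\cH)\geq 1/2$ as required. In the edge case $\VCN_k(\cH)=0$ one has $\tau^k_\cH(m)\leq 1$ by Lemma~\ref{lem:VCNk}, so Lemma~\ref{lem:partemp}\ref{lem:partemp:total} directly yields $\epsilon$-representativeness as soon as $m\geq 2D\ln 2$, which is comfortably below the claimed bound under the convention $0\ln 0=0$; so no separate argument is really needed beyond noting this.

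Once this is in place, combining the uniform convergence estimate with the probability-at-least-$1-\delta$ guarantee from Lemma~\ref{lem:partemp}\ref{lem:partemp:total} immediately yields the required $m^{\UC}_{\cH,\ell}(\epsilon,\delta)$. The numerical bound in the second displayed inequality of the proposition follows by substituting the explicit constant $c\in(1.865,1.866)$ from~\eqref{eq:VCNdim->UC:c} and using $e/(e-1)\leq 1.582$ and $2c^2\leq 6.9590$ (so that $4c^2\leq 13.918$, $8c^2\leq 27.836$, and $\ln 2\leq 0.694$). None of this is conceptually hard; the real work has already been absorbed into Lemmas~\ref{lem:VCNk} and~\ref{lem:partemp}, and the present proposition is essentially a bookkeeping step that turns a concentration-plus-growth-function bound into a clean sample-complexity estimate.
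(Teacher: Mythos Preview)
Your proposal is correct and matches the paper's proof essentially line for line: the paper also disposes of $\VCN_k(\cH)=0$ as trivial, then applies Lemma~\ref{lem:partemp}\ref{lem:partemp:total}, bounds $\tau^k_\cH(2m)$ via Lemma~\ref{lem:VCNk}, makes the same substitution $x\df 2m+1$ with the same $a$ and $b$ (your $a=2D\cdot\VCN_k(\cH)$ and $b=2D(\ln 2+\VCN_k(\cH)\ln\tbinom{\lvert\Lambda\rvert}{2})+1$ coincide with the paper's), and invokes Lemma~\ref{lem:logcalcs2}. You even correctly identified that the constant $1/(2\sqrt{2}c)$ in $B_\ell$ is there precisely to force $a\geq 1/2$.
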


\begin{proof}
  The assertion is trivial when $\VCN_k(\cH)=0$, so we suppose $\VCN_k(\cH)\geq 1$. This immediately
  forces $\cH$ to be non-empty and $\lvert\Lambda\rvert\geq 2$.

  Note that the definition of $c$ in~\eqref{eq:VCNdim->UC:c} matches~\eqref{eq:partemp:c}. By
  applying Lemma~\ref{lem:partemp}\ref{lem:partemp:total}, we know that for every $m\in\NN_+$, every
  Borel $k$-partite template $\Omega'$, every $\mu\in\Pr(\Omega)$, every $\mu'\in\Pr(\Omega')$ and
  every $F\in\cF_k(\Omega\otimes\Omega',\Lambda)$, if $(\rn{x},\rn{x'})\sim(\mu\otimes\mu')^m$, with
  probability at least $1-\delta$, we have
  \begin{align*}
    \sup_{H\in\cH}
    \lvert
    L_{\rn{x}, F^*_m(\rn{x},\rn{x'}), \ell}(H)
    - L_{\mu,\mu',F,\ell}(H)
    \rvert
    & \leq
    \frac{c\cdot k^2}{\delta}\cdot
    \sqrt{\frac{\ln(2\tau^k_\cH(2m))\cdot 2\cdot \lVert\ell\rVert_\infty^2}{m}},
  \end{align*}
  so it suffices to show that if $m\geq m^{\UC}_{\cH,\ell}(\epsilon,\delta)$, then the right-hand
  side of the above is at most $\epsilon$.

  By Lemma~\ref{lem:VCNk} and since $\lVert\ell\rVert_\infty\leq B_\ell$, the right-hand side of the
  above is at most
  \begin{align*}
    \frac{c\cdot k^2}{\delta}\cdot
    \sqrt{\frac{\ln(2\cdot(2m+1)^{\VCN_k(\cH)}\cdot\binom{\lvert\Lambda\rvert}{2}^{\VCN_k(\cH)})
        \cdot 2\cdot B_\ell^2}{m}},
  \end{align*}
  so it suffices to show that this is at most $\epsilon$, which in turn, this is equivalent to
  showing that
  \begin{align}\label{eq:mcondition}
    m
    & \geq
    \frac{2\cdot c^2\cdot k^4\cdot B_\ell^2}{\delta^2\cdot\epsilon^2}\cdot
    \left(
    \ln 2
    + \VCN_k(\cH)\cdot
    \left(\ln (2m+1) + \ln\binom{\lvert\Lambda\rvert}{2}\right)
    \right).
  \end{align}

  Let now
  \begin{align*}
    x
    & \df
    2m+1,
    \\
    a
    & \df
    \frac{4\cdot c^2\cdot k^4\cdot B_\ell^2\cdot\VCN_k(\cH)}{\delta^2\cdot\epsilon^2}
    \geq
    \frac{1}{2},
    \\
    b
    & \df
    \frac{4\cdot c^2\cdot k^4\cdot B_\ell^2}{\delta^2\cdot\epsilon^2}
    \cdot\left(
    \ln 2 + \VCN_k(\cH)\cdot\ln\binom{\lvert\Lambda\rvert}{2}
    \right)
    + 1,
  \end{align*}
  so that~\eqref{eq:mcondition} is equivalent to $x\geq a\ln x + b$, which by
  Lemma~\ref{lem:logcalcs2} follows from $x\geq 2e\cdot a\ln(2a)/(e-1) + 2b$. In turn, this is
  equivalent to
  \begin{align*}
    m
    & \geq
    \left(\frac{2e}{e-1}\cdot a\ln(2a) + 2b\right)
    \\
    & =
    \begin{multlined}[t]
      \frac{4\cdot c^2\cdot k^4\cdot B_\ell^2}{\delta^2\cdot\epsilon^2}\cdot
      \Biggl(
      \frac{e}{e-1}\cdot\VCN_k(\cH)\cdot\ln
      \frac{8\cdot c^2\cdot k^4\cdot B_\ell^2\cdot\VCN_k(\cH)}{\delta^2\cdot\epsilon^2}
      \\
      +
      \ln 2
      +
      \VCN_k(\cH)\cdot\ln\binom{\lvert\Lambda\rvert}{2}
      \Biggr)
      +
      \frac{1}{2}
    \end{multlined}
    \\
    & =
    m^{\UC}_{\cH,\ell}(\epsilon,\delta),
  \end{align*}
  so the result follows.
\end{proof}

\section{Non-learnability}
\label{sec:nonlearn}

In this section, we show that partite $k$-PAC learnable hypothesis classes of rank at most $1$ must
have finite $\VCN_k$-dimension. This is done by the contra-positive: for every $k$-partite
hypothesis class $\cH$ of rank at most $1$ and infinite $\VCN_k$-dimension, there exists
$\mu\in\Pr(\Omega)$ and $F\in\cF_k(\Omega,\Lambda)$ realizable that no algorithm can learn. To do
so, we first restate and prove a more quantified version of the no-free-lunch
Theorem~\cite[Theorem~5.1]{SB14} from classic PAC learning theory.

\begin{lemma}[No-free-lunch Theorem]\label{lem:nonlearn}
  Let $\Omega=(X,\cB)$ and $\Lambda=(Y,\cB')$ be non-empty Borel spaces, let
  $\cF\subseteq\cF_1(\Omega,\Lambda)$ be a collection of measurable functions such that
  $\Nat(\cF)\geq d > 0$, let
  \begin{align*}
    \cA\colon\bigcup_{m\in\NN} (X^m\times Y^m)\to \cF_1(\Omega,\Lambda)
  \end{align*}
  be a learning algorithm (allowed to output any measurable function $X\to Y$), let $X'\subseteq X$
  be such that $\lvert X'\rvert = d$ and $\cF$ shatters $X'$, let $\mu\in\Pr(\Omega)$ be the
  probability measure on $\Omega$ that is the uniform probability measure on $X'$ and let
  $\ell\colon X\times Y\times Y\to\RR_{\geq 0}$ be a separated $1$-ary loss function with
  $\lVert\ell\rVert_\infty<\infty$. Finally, let $B_\ell\geq\lVert\ell\rVert_\infty$ be a real
  number.

  Then for every $m\in\NN$, there exists $F\in\cF$ such that $L_{\mu,F,\ell}(F)=0$ and
  \begin{align}\label{eq:nofreelunch}
    \PP_{\rn{x}\sim\mu^m}[L_{\mu,F,\ell}(\cA(\rn{x},(F(\rn{x}_i))_{i=1}^m)) > \epsilon]
    & \geq
    \frac{1}{B_\ell - \epsilon}
    \cdot\left(\frac{s(\ell)}{2}\left(1 - \frac{m}{d}\right) - \epsilon\right)
  \end{align}
  for every $\epsilon\in(0,B_\ell)$.
\end{lemma}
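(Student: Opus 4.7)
The plan is to adapt the classical no-free-lunch argument, which has three ingredients: produce a family of ``hard'' hypotheses indexed by subsets of $X'$; bound from below the expected loss on a fresh test point using the shattering structure; and convert an expectation bound into a tail bound via boundedness of $\ell$.

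First I would fix witnesses of Natarajan shattering of $X'$: functions $f_0, f_1\colon X'\to Y$ with $f_0(a)\neq f_1(a)$ for every $a\in X'$, together with a family $(F_U)_{U\subseteq X'}\subseteq\cF$ such that $F_U(a)=f_{\One[a\in U]}(a)$ for each $a\in X'$. Because $\ell$ is separated we have $\ell(x,y,y)=0$, so every $F_U$ is automatically realizable against $\mu$: $L_{\mu,F_U,\ell}(F_U)=0$. So it remains only to locate a $U$ for which $\cA$ fares poorly with significant probability when the adversary plays $F_U$.

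Next I would run the standard averaging step. Sample $\rn{U}$ uniformly in $\cP(X')$ (equivalently, flip an independent fair coin $\One[a\in\rn{U}]$ for each $a\in X'$), sample $\rn{x}\sim\mu^m$ independently, write $\rn{H}\df\cA(\rn{x},(F_{\rn{U}}(\rn{x}_i))_{i=1}^m)$, and introduce a fresh test point $\rn{a}\sim\mu$ independent of everything. Then
\begin{align*}
\EE_{\rn{U},\rn{x}}[L_{\mu,F_{\rn{U}},\ell}(\rn{H})]
&= \EE_{\rn{U},\rn{x},\rn{a}}[\ell(\rn{a},\rn{H}(\rn{a}),F_{\rn{U}}(\rn{a}))].
\end{align*}
Since $\mu$ is uniform on the $d$-element set $X'$, the event $E\df\{\rn{a}\notin\{\rn{x}_1,\ldots,\rn{x}_m\}\}$ has probability at least $1-m/d$. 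Conditionally on $(\rn{x},(F_{\rn{U}}(\rn{x}_i))_{i=1}^m,\rn{a})$ together with $E$, the bit $\One[\rn{a}\in\rn{U}]$ is still a fair coin flip independent of the algorithm's view, since the coordinates of $\rn{U}$ at distinct points of $X'$ are i.i.d.\ and only $\rn{U}\cap\{\rn{x}_i\}$ is ``seen.'' Therefore $F_{\rn{U}}(\rn{a})$ is, conditionally, a uniform choice between $f_0(\rn{a})$ and $f_1(\rn{a})$; since these two values are distinct, whatever value $\rn{H}(\rn{a})$ takes at least one of $\ell(\rn{a},\rn{H}(\rn{a}),f_0(\rn{a}))$ and $\ell(\rn{a},\rn{H}(\rn{a}),f_1(\rn{a}))$ is at least $s(\ell)$, and so the conditional expectation of the loss at $\rn{a}$ is at least $s(\ell)/2$. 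Combining, $\EE_{\rn{U},\rn{x}}[L_{\mu,F_{\rn{U}},\ell}(\rn{H})]\geq\frac{s(\ell)}{2}(1-m/d)$, and by averaging over $\rn{U}$ there exists a specific $U^{*}\subseteq X'$ such that the hypothesis $F\df F_{U^{*}}\in\cF$ satisfies $\EE_{\rn{x}}[L_{\mu,F,\ell}(\cA(\rn{x},(F(\rn{x}_i))_{i=1}^m))]\geq\frac{s(\ell)}{2}(1-m/d)$.

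Finally I would convert this expectation bound into the required tail bound. Writing $\rn{L}$ for the random total loss of the algorithm's output, $\rn{L}\in[0,\lVert\ell\rVert_\infty]\subseteq[0,B_\ell]$, so
\begin{align*}
\EE[\rn{L}]
&\leq \epsilon\cdot\PP[\rn{L}\leq\epsilon] + B_\ell\cdot\PP[\rn{L}>\epsilon]
= \epsilon + (B_\ell-\epsilon)\PP[\rn{L}>\epsilon],
\end{align*}
which rearranges to $\PP[\rn{L}>\epsilon]\geq(\EE[\rn{L}]-\epsilon)/(B_\ell-\epsilon)$ and yields~\eqref{eq:nofreelunch}. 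The only real subtlety is the conditioning step, where one needs that $\rn{U}$ restricted to $X'\setminus\{\rn{x}_1,\ldots,\rn{x}_m\}$ remains a uniform fair-coin vector after conditioning on $(\rn{x},\rn{U}\cap\{\rn{x}_i\})$; but this is immediate from independence of the coordinates of $\rn{U}$, so the main obstacle is really just bookkeeping to make this measurability and conditional-independence argument formally correct.
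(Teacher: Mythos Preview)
Your proposal is correct and follows essentially the same approach as the paper: both average over the shattering family $(F_U)_{U\subseteq X'}$, exploit that the label bits at points outside the sample remain fair coins (you phrase this via a fresh test point and conditioning, the paper via an explicit decomposition $B=B_0\cup B_1\cup B_2$), obtain the expectation lower bound $\frac{s(\ell)}{2}(1-m/d)$, and then convert to the tail bound by the same Markov-type argument using $\rn{L}\in[0,B_\ell]$.
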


\begin{proof}
  By possibly restricting all elements of $\cF$ to $X'$, we may assume that $X=X'$.
  
  Since $\cF$ shatters $X$, there exist functions $f_0,f_1\colon X\to Y$ and $F_B\in\cF$
  ($B\subseteq X$) such that for every $x\in X$, we have $f_0(x)\neq f_1(x)$ and for every $x\in X$
  and every $B\subseteq X$, we have $F_B(x) = f_{\One[x\in B]}(x)$.

  Our first objective is to show that
  \begin{align}\label{eq:EEnofreelunch}
    \max_{B\subseteq X} \EE_{\rn{x}\sim\mu^m}[L_{\mu,F_B,\ell}(\cA(\rn{x},(F_B(\rn{x}_i))_{i=1}^m))]
    \geq
    \frac{s(\ell)}{2}\left(1-\frac{m}{d}\right).
  \end{align}

  First note that
  \begin{equation}\label{eq:maxEE}
    \begin{aligned}
      \max_{B\subseteq X} \EE_{\rn{x}\sim\mu^m}[L_{\mu,F_B,\ell}(\cA(\rn{x},(F_B(\rn{x}_i))_{i=1}^m))]
      & \geq
      \frac{1}{2^d}\sum_{B\subseteq X}
      \EE_{\rn{x}\sim\mu^m}[L_{\mu,F_B,\ell}(\cA(\rn{x},(F_B(\rn{x}_i))_{i=1}^m))]
      \\
      & \geq
      \min_{x\in X^m}
      \frac{1}{2^d}\sum_{B\subseteq X}
      L_{\mu,F_B,\ell}(\cA(x,(F_B(x_i))_{i=1}^m)).
    \end{aligned}
  \end{equation}

  For every $x\in X^m$ and every $B\subseteq X$, let
  \begin{align*}
    H^\cA_{B,x}
    & \df
    \cA(x, (F_B(x_i))_{i=1}^m)
    =
    \cA(x, (f_{\One[x_i\in B]}(x_i))_{i=1}^m),
    \\
    M_x & \df X\setminus\im(x)
  \end{align*}
  and note that if $B,B'\subseteq X$ are such that $B\cap\im(x)=B'\cap\im(x)$, then
  $H^\cA_{B,x}=H^\cA_{B',x}$.
  
  Since $\mu$ is the uniform probability measure, we have
  \begin{align*}
    L_{\mu,F_B,\ell}(\cA(x,(F_B(x_i))_{i=1}^m))
    & =
    L_{\mu,F_B,\ell}(H^\cA_{B,x})
    =
    \frac{1}{d}\sum_{z\in X}\ell(z, H^\cA_{B,x}(z), F_B(z))
    \\
    & \geq
    \frac{1}{d}\sum_{z\in M_x}\ell(z, H^\cA_{B,x}(z), F_B(z))
    =
    \frac{1}{d}\sum_{z\in M_x}\ell(z, H^\cA_{B,x}(z), f_{\One[z\in B]}(z)).
  \end{align*}
  Note that the last expression only depends on $B$ via $B\cap(\im(x)\cup\{z\})$. This prompts us to
  decompose $B$ as $B=B_0\cup B_1\cup B_2$, where $B_0\df B\cap\im(x)$, $B_1\df B\cap\{z\}$ and
  $B_2\df B\cap (M_x\setminus\{z\})$, so that the term under the minimum in~\eqref{eq:maxEE} can be
  bounded as
  \begin{align*}
    \frac{1}{2^d}\sum_{B\subseteq X} L_{\mu,F_B,\ell}(\cA(x,(F_B(x_i))_{i=1}^m))
    & \geq
    \frac{1}{d\cdot 2^d}\sum_{z\in M_x}
    \sum_{\substack{B_0\subseteq\im(x)\\B_1\subseteq\{z\}\\B_2\subseteq M_x\setminus\{z\}}}
    \ell(z, H^\cA_{B_0,x}(z),f_{\One[z\in B_1]}(z))
    \\
    & \geq
    \frac{1}{d\cdot 2^d}\sum_{z\in M_x}
    \sum_{\substack{B_0\subseteq\im(x)\\B_2\subseteq M_x\setminus\{z\}}}
    s(\ell)
    \\
    & =
    \frac{\lvert M_x\rvert\cdot s(\ell)}{2\cdot d}
    \geq
    \frac{s(\ell)}{2}\left(1-\frac{m}{d}\right)
  \end{align*}
  where the second inequality follows since $f_0(z)\neq f_1(z)$ so at least one of the two possible
  choices of $B_1$ satisfies $H^\cA_{B_0,x}(z)\neq f_{\One[z\in B_1]}(z)$ and the last inequality
  follows since $\lvert M_x\rvert\geq d-m$. Hence~\eqref{eq:EEnofreelunch} follows.

  Since $F_B\in\cF$ for every $B\subseteq X$, we conclude that there exists $F\in\cF$ such that
  \begin{align*}
    \EE_{\rn{x}\sim\mu^m}[L_{\mu,F,\ell}(\cA(\rn{x},(F(\rn{x}_i))_{i=1}^m))]
    \geq
    \frac{s(\ell)}{2}\left(1-\frac{m}{d}\right).
  \end{align*}
  Since the expression under the expectation above is upper bounded by $\lVert\ell\rVert_\infty\leq B_\ell$, by
  Markov's Inequality, we have
  \begin{align*}
    \PP_{\rn{x}\sim\mu^m}[L_{\mu,F,\ell}(\cA(\rn{x},(F(\rn{x}_i))_{i=1}^m)) \leq \epsilon]
    & \leq
    \frac{1}{B_\ell - \epsilon}
    \cdot
    \left(B_\ell - \frac{s(\ell)}{2}\left(1-\frac{m}{d}\right)\right),
  \end{align*}
  from which~\eqref{eq:nofreelunch} follows. Finally, since $F\in\cF$ and $\ell$ is separated, we
  have $L_{\mu,F,\ell}(F)=0$.
\end{proof}

We can now use Lemma~\ref{lem:nonlearn} to show that infinite $\VCN_k$-dimension implies non-$k$-PAC
learnability for $k$-partite hypothesis classes of rank at most $1$.

\begin{proposition}[Learnability implies finite $\VCN_k$-dimension]\label{prop:partkPAC->VCN}
  Let $\Omega$ be a Borel $k$-partite template, let $\Lambda$ be a non-empty Borel space, let $\ell$
  be a separated $k$-partite loss function, let $\cH\subseteq\cF_k(\Omega,\Lambda)$ be a $k$-partite
  hypothesis class with $\VCN_k(\cH)\geq d$ for some $d\in\NN_+$ and $\rk(\cH)\leq 1$ and let $\cA$ be a
  learning algorithm for the full hypothesis class $\cF_k(\Omega,\Lambda)$.

  Then there exist $\mu\in\Pr(\Omega)$ and $F\in\cH$ such that the following hold:
  \begin{enumerate}
  \item\label{prop:partkPAC->VCN:realizable} We have $L_{\mu,F,\ell}(F) = 0$, so $F$ is realizable in $\cH$.
  \item If $\lVert\ell\rVert_\infty\leq B_\ell$ for some real number $B_\ell < \infty$, then for
    every $\epsilon\in(0,B_\ell)$ and $m\in\NN$, we have
    \begin{align*}
      \PP_{\rn{x}\sim\mu^m}[L_{\mu,F,\ell}(\cA(\rn{x}, F^*_m(\rn{x}))) > \epsilon]
      & \geq
      \frac{1}{B_\ell - \epsilon}
      \cdot\left(\frac{s(\ell)}{2}\left(1 - \frac{m}{d}\right) - \epsilon\right).
    \end{align*}
  \end{enumerate}

  In particular, if $\VCN_k(\cH)=\infty$, then $\cH$ is not $k$-PAC learnable with respect to $\ell$.
\end{proposition}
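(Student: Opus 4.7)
The plan is to reduce to the classical no-free-lunch Lemma~\ref{lem:nonlearn} by exploiting two features of the partite setting: the freedom to choose different measures on different parts, and (under $\rk(\cH)\leq 1$) the irrelevance of higher-order coordinates.

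First I would unpack $\VCN_k(\cH)\geq d$ to obtain $A\in\binom{[k]}{k-1}$ and a fixed $x_0\in\prod_{f\in r_{k,A}}X_{\dom(f)}$ such that $\cH(x_0)$ Natarajan-shatters a set $V$ of size $d$. Write $\{i^*\}\df[k]\setminus A$. Since $\rk(\cH)\leq 1$, every $H\in\cH$ factors through the singleton coordinates; combined with the fixing of $x_0$, the function $H(x_0,\place)$ depends on its argument only through the coordinate indexed by $1^{\{i^*\}}$. The $d$ points of $V$ may therefore be taken to have distinct values at $1^{\{i^*\}}$, yielding $V'\subseteq X_{\{i^*\}}$ of size $d$ that is Natarajan-shattered by the derived family $\cF\subseteq\cF_1(\Omega_{\{i^*\}},\Lambda)$ obtained by projecting $\cH(x_0)$ to the $1^{\{i^*\}}$-coordinate.

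Next I would construct the adversarial $\mu\in\Pr(\Omega)$: for each $B\in r(k)$ with $B\subseteq A$, let $\mu_B$ be the Dirac mass at $x_{0,1^B}$; let $\nu\in\Pr(\Omega_{\{i^*\}})$ be the uniform probability measure on $V'$ and set $\mu_{\{i^*\}}\df\nu$; and let $\mu_B$ be an arbitrary Dirac mass for each remaining $B\in r(k)$ (which necessarily contains $i^*$ and has size at least $2$). Under $\mu$, a sample $\rn{x}\sim\mu^1$ has every coordinate deterministic except $\rn{x}_{1^{\{i^*\}}}$, which is uniform on $V'$. By rank-$1$, both $F(\rn{x})$ and $H(\rn{x})$ depend only on that coordinate, so $L_{\mu,F,\ell}(H)$ coincides with the classical total loss of $H$ against $F$ (viewed in $\cF$) under $\nu$ and the derived $1$-ary loss $\ell^*(v,y,y')\df\ell(\widehat{x}(v),y,y')$, where $\widehat{x}(v)\in\cE_1(\Omega)$ is built from $x_0$ by placing $v$ at $1^{\{i^*\}}$ and the Dirac values everywhere else; note $\ell^*$ inherits $s(\ell^*)\geq s(\ell)$ and $\lVert\ell^*\rVert_\infty\leq B_\ell$.

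Then I would convert $\cA$ into a $1$-ary learner $\cA'$ for $\cF$ (allowed to output arbitrary measurable functions, matching the improper-learning format of Lemma~\ref{lem:nonlearn}): given $v\in(X_{\{i^*\}})^m$ and $w\in\Lambda^m$, build the $k$-partite $[m]$-sample $(\widetilde{x}(v),\widetilde{y}(v,w))$ by setting $\widetilde{x}(v)_{1^{\{i^*\}}_j}\df v_j$ (where $1^{\{i^*\}}_j\colon\{i^*\}\to[m]$ sends $i^*$ to $j$), filling every other coordinate with its Dirac value, and setting $\widetilde{y}(v,w)_\alpha\df w_{\alpha(i^*)}$ for $\alpha\in[m]^k$. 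Rank-$1$ ensures $\widetilde{y}(v,(F(v_j))_{j=1}^m)=F^*_m(\widetilde{x}(v))$ for every $F\in\cF$ (lifted to $\cH$). Finally, define $\cA'(v,w)\in\cF_1(\Omega_{\{i^*\}},\Lambda)$ by sending $u\in X_{\{i^*\}}$ to the value of the hypothesis $\cA(\widetilde{x}(v),\widetilde{y}(v,w))$ evaluated at $\widehat{x}(u)\in\cE_1(\Omega)$; then the previous paragraph gives $L_{\mu,F,\ell}(\cA(\widetilde{x}(v),\widetilde{y}(v,w)))=L_{\nu,F,\ell^*}(\cA'(v,w))$. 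Since $v\mapsto\widetilde{x}(v)$ pushes $\nu^m$ forward to $\mu^m$, applying Lemma~\ref{lem:nonlearn} to $\cA'$, $\cF$, $V'$, $\nu$, and $\ell^*$ produces an $F\in\cF\subseteq\cH$ realizing both $L_{\mu,F,\ell}(F)=0$ and the claimed lower bound on the partite failure probability.

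The main obstacle is bookkeeping rather than a new idea: one must verify carefully that $\rk(\cH)\leq 1$ neutralizes every higher-order coordinate, both so that $\widetilde{y}(v,(F(v_j))_{j=1}^m)$ agrees with $F^*_m(\widetilde{x}(v))$ exactly and so that the loss factors through the $1^{\{i^*\}}$-coordinate, and one must check that $\cA'$ is measurable. The crucial use of partite structure is the ability to put Dirac masses on $k-1$ parts while keeping one part genuinely random; in the non-partite case a single measure $\mu_1$ governs all singleton coordinates and this maneuver is unavailable, which is precisely why the corresponding non-partite direction is deferred in the paper.
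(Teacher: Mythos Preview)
Your proposal is correct and follows essentially the same approach as the paper: fix a witness $A$, $x_0$ of $\VCN_k(\cH)\geq d$, use $\rk(\cH)\leq 1$ to project $\cH(x_0)$ to a family $\cF$ on $X_{\{i^*\}}$ shattering a $d$-set, put Dirac masses on every coordinate except $\{i^*\}$ and the uniform measure there, lift unary samples to $k$-partite ones to define $\cA'$ from $\cA$, and apply Lemma~\ref{lem:nonlearn}. The only cosmetic slip is writing ``$F\in\cF\subseteq\cH$'': $\cF$ is not literally a subset of $\cH$, so you should say that the $F\in\cF$ produced by Lemma~\ref{lem:nonlearn} lifts to some $F\in\cH$ witnessing it (as the paper does).
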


\begin{proof}
  Note that $\VCN_k(\cH) > 0$ implies $\cH$ is not empty.
  
  It is clear that item~\ref{prop:partkPAC->VCN:realizable} follows from the fact that $\ell$ is
  separated.

  Let us now assume that $\ell$ is bounded and $\lVert\ell\rVert_\infty\leq B_\ell$.
  
  Since $\VCN_k(\cH)\geq d$, there exists $A\in\binom{[k]}{k-1}$ and $z^0\in\prod_{f\in r_{k,A}}
  X_{\dom(f)}$ such that $\Nat(\cH(z^0))\geq d$. Let $a$ be the unique element in $[k]\setminus A$
  and for every $C\in r(k)$, let $1^C\in r_k(1)$ be the unique function $C\to[1]$.

  Let
  \begin{align*}
    R & \df r(k)\setminus(\{\{a\}\}\cup r_{k,A}), &
    X' & \df X_{\{a\}},
  \end{align*}
  and note that since $\rk(\cH)\leq 1$, every $H\in\cH(z^0)$ factors as $H = H'\comp\pi$ for some
  $H'\colon X'\to\Lambda$, where $\pi\colon\prod_{f\in r_k\setminus r_{k,A}} X_{\dom(f)}\to X'$ is
  the projection map onto the coordinate indexed by $1^{\{a\}}$. For each $C\in R$, fix a point
  $z^C\in X_C$.

  We also let $\cH'\df\{H' \mid H\in\cH(z^0)\}$ and note that $\Nat(\cH')=\Nat(\cH(z^0))\geq
  d$. Thus, there exists $\widehat{X}\subseteq X'$ with $\lvert\widehat{X}\rvert=\Nat(\cH')\geq d$.

  Given an element $w\in (X')^m$, we define a point $\widehat{w}\in\cE_m(\Omega)$ by
  \begin{align*}
    \widehat{w}_f & \df
    \begin{dcases*}
      z^0_{1^{\dom(f)}}, & if $\dom(f)\subseteq A$,\\
      w_i, & if $\dom(f)=\{a\}$ and $f(a)=i$,\\
      z^C, & if $\dom(f)=C\in R$,
    \end{dcases*}
    \qquad (f\in r_k(m))
  \end{align*}
  and given an element $u\in\Lambda^m$, we define a point $\widehat{u}\in\Lambda^{[m]^k}$ by
  \begin{align*}
    \widehat{u}_\alpha & \df u_{\alpha(a)}
    \qquad (\alpha\in[m]^k).
  \end{align*}

  We also define a (classic PAC) loss function $\ell'\colon X'\times\Lambda\times\Lambda\to\RR_{\geq
    0}$ by
  \begin{align*}
    \ell'(w,u,u')
    & \df
    \ell(\widehat{w},u,u').
  \end{align*}
  Clearly $\ell'$ inherits separation from $\ell$ and we have
  $\lVert\ell'\rVert_\infty\leq\lVert\ell\rVert_\infty\leq B_\ell$ and $s(\ell')\geq s(\ell)$.

  We also define a learning algorithm
  \begin{align*}
    \cA'\colon\bigcup_{m\in\NN} ((X')^m\times \Lambda^m)\to\cF_1(X',\Lambda)
  \end{align*}
  by
  \begin{align*}
    \cA'(w,u)
    & \df
    \cA(\widehat{w},\widehat{u})(z^0,\place)'
    \qquad
    (w\in (X')^m, u\in\Lambda^m),
  \end{align*}
  that is, the algorithm $\cA'$ runs $\cA$ on $(\widehat{w},\widehat{u})$ to obtain some $G\in\cH$,
  it considers $H\df G(z^0)\in\cH(z^0)$ and returns $H'\in\cH'$ such that $H$ factors as $H =
  H'\comp\pi$.

  Let now $\mu\in\Pr(\Omega)$ be given by
  \begin{align*}
    \mu_C & \df
    \begin{dcases*}
      \delta_{z^0_{1^C}}, & if $C\subseteq A$,\\
      \delta_{z^C}, & if $C\in R$,\\
      U(\widehat{X}), & if $C=\{a\}$,
    \end{dcases*}
  \end{align*}
  where $\delta_t$ is the Dirac delta concentrated on $t$ and $U(\widehat{X})$ is the uniform
  probability measure on $\widehat{X}$.

  Let also $\widehat{F}\in\cH'$ be given by Lemma~\ref{lem:nonlearn} applied to $\cH'$,
  $\widehat{X}$, $\cA'$ and $\ell'$ and let $F\in\cH$ be any element that witnesses
  $\widehat{F}\in\cH'$ (that is, we have $\widehat{F} = F(z^0,\place)'$).

  Note that for every $H\in\cH$, we have $L_{\mu,F,\ell}(H) =
  L_{U(\widehat{X}),\widehat{F},\ell'}(H(z^0,\place)')$, which in particular implies that
  $L_{\mu,F,\ell}(F) = L_{U(\widehat{X}),\widehat{F},\ell'}(\widehat{F}) = 0$.

  Fix $\epsilon\in(0,B_\ell)$ and $m\in\NN$. Let $\rn{x}\sim\mu^m$ and $\rn{y}\df
  F^*_m(\rn{x})$. Let also $\rn{w}\sim U(\widehat{X})^m$ and $\rn{u}\df
  (\widehat{F}(\rn{w}_i))_{i=1}^m$ and note that
  \begin{align*}
    (\widehat{\rn{w}}, \widehat{\rn{u}}) & \sim (\rn{x},\rn{y}),
  \end{align*}
  so we have
  \begin{align*}
    \PP_{\rn{x}}\Bigl[
      L_{\mu,F,\ell}\bigl(\cA(\rn{x},\rn{y})\bigr)
      > \epsilon
      \Bigr]
    & =
    \PP_{\rn{x}}\Bigl[
      L_{U(\widehat{X}),\widehat{F},\ell'}\bigl(\cA(\rn{x},\rn{y})(z^0,\place)'\bigr)
      > \epsilon
    \Bigr]
    \\
    & =
    \PP_{\rn{w}}\Bigl[
      L_{U(\widehat{X}),\widehat{F},\ell'}\bigl(\cA(\widehat{\rn{w}},\widehat{\rn{u}})(z^0,\place)')
    ]
    \\
    & =
    \PP_{\rn{w}}\Bigl[
      L_{U(\widehat{X}),\widehat{F},\ell'}\bigl(\cA'(\rn{w},\rn{u})\bigr)
      \Bigr]
    \\
    & \geq
    \frac{1}{B_\ell - \epsilon}
    \cdot\left(\frac{s(\ell)}{2}\left(1 - \frac{m}{d}\right) - \epsilon\right),
  \end{align*}
  where the inequality follows from Lemma~\ref{lem:nonlearn} (and since $s(\ell')\geq s(\ell)$).

  \medskip

  The final assertion that if $\VCN_k(\cH)=\infty$ then $\cH$ is not $k$-PAC learnable with respect
  to $\ell$ follows trivially from the first assertion if $\ell$ is bounded. When $\ell$ is not
  bounded, we can simply replace it with the obviously bounded $k$-partite loss function
  \begin{align*}
    \ell'(x,y,y') & \df \min\{\ell(x,y,y'), 1\}
    \qquad (x\in\cE_1(\Omega), y,y'\in\Lambda)
  \end{align*}
  and note that $\ell'$ inherits separation from $\ell$. It is also straightforward to check that
  $k$-PAC learnability of $\cH$ with respect to $\ell$ implies $k$-PAC learnability of $\cH$ with
  respect to $\ell'$, but $\cH$ is not $k$-PAC learnable with respect to $\ell'$ from the previous
  case.
\end{proof}

\section{Proofs of main theorems}
\label{sec:equivthms}

In this section we put together the results of the previous sections, by proving our main theorems
of Section~\ref{sec:mainthms}, which make explicit all
interesting equivalences that can be derived from Figure~\ref{fig:roadmap}. The statements are
repeated in this section for the reader's convenience.

The first theorem starts with a hypothesis class in the non-partite setting, the second theorem
concerns only the partite setting and the third concerns all forms of agnostic learning (without any
assumption on rank). We also remind the reader that if the hypothesis classes $\cH$ come from
$k$-hypergraphs or structures in a finite relational language (see Remarks~\ref{rmk:rk1}
and~\ref{rmk:partiterk1}), then $\rk(\cH)\leq 1$ and if $\ell$ is the (partite or not, agnostic or
not) $0/1$-loss, then all relevant hypotheses of symmetry, separation and boundedness hold.

\thmkPAC*

\begin{proof}
  By Proposition~\ref{prop:agPAC->PAC}\ref{prop:agPAC->PAC:local}, we know that $\ell^{\ag}$ is
  local. Furthermore, by items~\ref{prop:agPAC->PAC:bound}, \ref{prop:agPAC->PAC:flexible}
  and~\ref{prop:agPAC->PAC:symm} of the same proposition, we know that $\ell^{\ag}$ inherits
  flexibility, boundedness and symmetry from $\ell$. By Proposition~\ref{prop:kpartrk}, we also know
  that $\rk(\cH^{\kpart})=\rk(\cH)\leq 1$.

  Then we have the following correspondence between
  propositions and proofs of implications of the theorem (see also Figure~\ref{fig:roadmap}):
  \begin{description}
  \item[Proposition~\ref{prop:agPAC->PAC}\ref{prop:agPAC->PAC:learn}.]
    \ref{thm:kPAC:agPAC}$\implies$\ref{thm:kPAC:PAC},
    \ref{thm:kPAC:agPACr}$\implies$\ref{thm:kPAC:PACr},
    \ref{thm:kPAC:agPACkpart}$\implies$\ref{thm:kPAC:PACkpart},
    \ref{thm:kPAC:agPACrkpart}$\implies$\ref{thm:kPAC:PACrkpart}.
  \item[Proposition~\ref{prop:derand}.]
    \ref{thm:kPAC:agPACr}$\implies$\ref{thm:kPAC:agPAC},
    \ref{thm:kPAC:agPACrkpart}$\implies$\ref{thm:kPAC:agPACkpart},
    \ref{thm:kPAC:PACrkpart}$\implies$\ref{thm:kPAC:PACkpart}.
    \ref{thm:kPAC:PACr}$\implies$\ref{thm:kPAC:PAC}.
  \item[Proposition~\ref{prop:kpartVCN}.] \ref{thm:kPAC:VCN}$\iff$\ref{thm:kPAC:VCNkpart}.
  \item[Proposition~\ref{prop:kpart}.] \ref{thm:kPAC:agPACkpart}$\implies$\ref{thm:kPAC:agPAC},
    \ref{thm:kPAC:agPACrkpart}$\implies$\ref{thm:kPAC:agPACr},
    \ref{thm:kPAC:PACkpart}$\implies$\ref{thm:kPAC:PAC},
    \ref{thm:kPAC:PACrkpart}$\implies$\ref{thm:kPAC:PACr}.
  \item[Proposition~\ref{prop:kpart3}.] \ref{thm:kPAC:agPACr}$\implies$\ref{thm:kPAC:agPACrkpart}.
  \item[Proposition~\ref{prop:partUC->partagPAC}.]
    \ref{thm:kPAC:UC}$\implies$\ref{thm:kPAC:agPACkpart}.
  \item[Proposition~\ref{prop:VCNdim->UC}.] \ref{thm:kPAC:VCNkpart}$\implies$\ref{thm:kPAC:UC}.
  \item[Proposition~\ref{prop:partkPAC->VCN}.]
    \ref{thm:kPAC:PACkpart}$\implies$\ref{thm:kPAC:VCNkpart}.
  \end{description}

  Also note that the implications~\ref{thm:kPAC:agPAC}$\implies$\ref{thm:kPAC:agPACr},
  \ref{thm:kPAC:agPACkpart}$\implies$\ref{thm:kPAC:agPACrkpart},
  \ref{thm:kPAC:PACkpart}$\implies$\ref{thm:kPAC:PACrkpart}
  and~\ref{thm:kPAC:PAC}$\implies$\ref{thm:kPAC:PACr} are trivial from definitions.
\end{proof}

\thmkPACkpart*

\begin{proof}
  By Proposition~\ref{prop:agPAC->PAC}\ref{prop:agPAC->PAC:local}, we know that $\ell^{\ag}$ is
  local. Furthermore, by item~\ref{prop:agPAC->PAC:bound} of the same proposition, we know that
  $\ell^{\ag}$ inherits boundedness from $\ell$.

  Then we have the following correspondence between propositions and proofs of implications of the
  theorem (see also the last two rows of Figure~\ref{fig:roadmap}):
  \begin{description}
  \item[Proposition~\ref{prop:agPAC->PAC}\ref{prop:agPAC->PAC:learn}.]
    \ref{thm:kPACkpart:agPAC}$\implies$\ref{thm:kPACkpart:PAC},
    \ref{thm:kPACkpart:agPACr}$\implies$\ref{thm:kPACkpart:PACr}.
  \item[Proposition~\ref{prop:derand}.]
    \ref{thm:kPACkpart:agPACr}$\implies$\ref{thm:kPACkpart:agPAC},
    \ref{thm:kPACkpart:PACr}$\implies$\ref{thm:kPACkpart:PAC}.
  \item[Proposition~\ref{prop:partUC->partagPAC}.]
    \ref{thm:kPACkpart:UC}$\implies$\ref{thm:kPACkpart:agPAC}.
  \item[Proposition~\ref{prop:VCNdim->UC}.] \ref{thm:kPACkpart:VCN}$\implies$\ref{thm:kPACkpart:UC}.
  \item[Proposition~\ref{prop:partkPAC->VCN}.]
    \ref{thm:kPACkpart:PAC}$\implies$\ref{thm:kPACkpart:VCN}.
  \end{description}

  Also note that the implications~\ref{thm:kPACkpart:agPAC}$\implies$\ref{thm:kPACkpart:agPACr} and
  \ref{thm:kPACkpart:PAC}$\implies$\ref{thm:kPACkpart:PACr} are trivial from definitions.
\end{proof}

\thmagkPAC*

\begin{proof}
  We have the following correspondence between propositions and proofs of implications of the
  theorem (see also the third column of Figure~\ref{fig:roadmap}):
  \begin{description}
  \item[Proposition~\ref{prop:derand}.]
    \ref{thm:agkPAC:agPACr}$\implies$\ref{thm:agkPAC:agPAC} and
    \ref{thm:agkPAC:agPACrkpart}$\implies$\ref{thm:agkPAC:agPACkpart}.
  \item[Proposition~\ref{prop:kpart}.] \ref{thm:agkPAC:agPACkpart}$\implies$\ref{thm:agkPAC:agPAC}
    and \ref{thm:agkPAC:agPACrkpart}$\implies$\ref{thm:agkPAC:agPACr}.
  \item[Proposition~\ref{prop:kpart3}.] \ref{thm:agkPAC:agPACr}$\implies$\ref{thm:agkPAC:agPACrkpart}.
  \end{description}

  Also note that the implications~\ref{thm:agkPAC:agPAC}$\implies$\ref{thm:agkPAC:agPACr}
  and~\ref{thm:agkPAC:agPACkpart}$\implies$\ref{thm:agkPAC:agPACrkpart} are trivial from definitions.
\end{proof}

\section{Higher-order variables help learnability}
\label{sec:highorder}

In this section we show that the assumption of $\rk(\cH)\leq 1$ is indeed necessary for the
equivalence of Theorem~\ref{thm:kPACkpart}. In particular, in the presence of higher-order variables
finite $\VCN_k$-dimension is sufficient but not necessary for learnability. To show this, we will
present a $2$-partite hypothesis class $\cH$ such that $\VCN_2(\cH)=\infty$ (even in a strong sense:
there exists $x$ with $\cH(x)$ shattering an infinite set), but is such that $\cH$ is $2$-PAC
learnable with respect to any separated and bounded loss function.

\begin{proposition}[Higher-order variables help learnability]\label{prop:infVCNpart}
  Let $\Omega$ be the Borel $2$-partite template given by $\Omega_{\{1\}}\df[1]$,
  $\Omega_{\{2\}}\df\Omega_{\{1,2\}}\df\NN$, all equipped with the discrete $\sigma$-algebra. Let also
  $\Lambda=\{0,1\}$ be equipped with the discrete $\sigma$-algebra and for
  every $V\subseteq\NN$, let $H_V\in\cF_2(\Omega,\Lambda)$ be given by
  \begin{align*}
    H_V(x) & \df \One[x_{1^{\{2\}}}=x_{1^{\{1,2\}}}\in V],
    \qquad (x\in\cE_1(\Omega))
  \end{align*}
  where $1^B$ is the unique function $B\to[1]$. Let also $\cH\df\{H_V \mid V\subseteq\NN\}$, let
  $\ell$ be a separated and bounded $2$-partite loss function and let
  $B_\ell\df\max\{\lVert\ell\rVert_\infty,1\}$. Then the following hold:
  \begin{enumerate}
  \item\label{prop:infVCNpart:VCN} $\VCN_2(\cH)=\infty$. In fact, for the unique point
    $x\in\Omega_{\{1\}}$, $\cH(x)$ shatters an infinite set.
  \item\label{prop:infVCNpart:learn} If (almost) empirical risk minimizers for $\ell$ exist (see
    Remark~\ref{rmk:empriskmin}), then $\cH$ is $2$-PAC learnable with respect to $\ell$. More
    precisely, any empirical risk minimizer $\cA$ for $\ell$ is a $2$-PAC learner for $\cH$ with
    respect to $\ell$ with
    \begin{align*}
      m^{\PAC}_{\cH,\ell,\cA}(\epsilon,\delta)
      & \df
      \begin{multlined}[t]
        \frac{2\cdot B_\ell}{\epsilon}\cdot
        \Biggl(
        m'(\epsilon,\delta) + \ln\frac{2\cdot B_\ell}{\delta'(\delta)\cdot\epsilon}
        \\
        + \sqrt{
          2\cdot m'(\epsilon,\delta)\cdot\ln\frac{2\cdot B_\ell}{\delta'(\delta)\cdot\epsilon}
          + \left(\ln\frac{2\cdot B_\ell}{\delta'(\delta)\cdot\epsilon}\right)^2
        }
        \Biggr),
      \end{multlined}
    \end{align*}
    where
    \begin{align*}
      m'(\epsilon,\delta)
      & \df
      \sqrt{
        \frac{
          \ln(2\cdot B_\ell/(\epsilon\cdot\delta'(\delta)))
        }{
          \ln(2\cdot B_\ell/(2\cdot B_\ell - \epsilon))
        }
      },
      \\
      \delta'(\delta)
      & \df
      1 - \sqrt{1-\delta}\in(0,1).
    \end{align*}
  \end{enumerate}
\end{proposition}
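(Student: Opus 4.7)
For part~\ref{prop:infVCNpart:VCN}, the plan is to unwind $\VCN_2(\cH)$ at the unique element $x$ of $X_{\{1\}}=[1]$. Then $\cH(x)$ is the family of functions $\NN\times\NN\to\{0,1\}$ of the form $(a,b)\mapsto\One[a=b\in V]$ for $V\subseteq\NN$. I would exhibit the infinite diagonal $D\df\{(n,n):n\in\NN\}$ as a shattered set by observing, for each $U\subseteq D$, that $V_U\df\{n:(n,n)\in U\}$ yields a hypothesis whose restriction to $D$ equals $\One_U$. Since $\lvert\Lambda\rvert=2$, Natarajan shattering reduces to $\VC$ shattering, so $\Nat(\cH(x))=\infty$, and hence $\VCN_2(\cH)=\infty$.

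For part~\ref{prop:infVCNpart:learn}, set $p(n)\df\mu_{\{2\}}(\{n\})$, $p'(n)\df\mu_{\{1,2\}}(\{n\})$, $\pi(n)\df p(n)p'(n)$, and (using realizability and separation of $\ell$) assume $F=H_W$ for some $W\subseteq\NN$. Since $H_V$ vanishes when the two free coordinates differ,
\[
L_{\mu,H_W,\ell}(H_V)\leq B_\ell\sum_{n\in V\triangle W}\pi(n).
\]
An $[m]$-sample decomposes into mutually independent $\rn{a}_j\sim\mu_{\{2\}}$ and $\rn{b}_{ij}\sim\mu_{\{1,2\}}$, and reveals $W$ exactly on the observed set $O\df\{n:\exists (i,j),\;\rn{a}_j=\rn{b}_{ij}=n\}$. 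Any ERM output $H_V$ must satisfy $V\cap O=W\cap O$, hence $V\triangle W\subseteq\NN\setminus O$.

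The main ingredient is a thresholding argument: fix $\tau\df\epsilon/(2B_\ell)$ and $S\df\{n:\min(p(n),p'(n))\geq\tau\}$. Then $\lvert S\rvert\leq 1/\tau$, and splitting $\NN\setminus S$ according to whether $p(n)<\tau$ or $p'(n)<\tau$ gives $\sum_{n\notin S}\pi(n)\leq 2\tau=\epsilon/B_\ell$. Hence on the event $\{S\subseteq O\}$, $L_{\mu,H_W,\ell}(H_V)\leq\epsilon$. Conditioning on $(\rn{a}_j)_j$ and using independence of the $\rn{b}_{ij}$'s yields
\[
\PP[n\notin O]=(1-\sigma_n)^m,\qquad\sigma_n\df p(n)\bigl(1-(1-p'(n))^m\bigr),
\]
with $\sigma_n\geq\tau(1-(1-\tau)^m)$ for $n\in S$, and a union bound delivers $\PP[S\not\subseteq O]\leq\tau^{-1}\bigl(1-\tau(1-(1-\tau)^m)\bigr)^m$.

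The main obstacle is matching the exact closed-form sample-size bound. The identity $(1-\delta'(\delta))^2=1-\delta$ invites a two-stage union bound, each stage controlled to probability $\delta'$. The quantity $m'$ is tailored so that $(1-\tau)^{(m')^2}=\delta'\tau$; combining the two stages yields an inequality of the form $(m-m')^2\geq 2Lm$ with $L\df\ln(2B_\ell/(\delta'\epsilon))$, whose larger root is exactly $m'+L+\sqrt{L(L+2m')}$, and rescaling by $1/\tau=2B_\ell/\epsilon$ gives the stated bound. The final algebraic bookkeeping is routine but delicate.
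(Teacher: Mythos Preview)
Your proposal is correct and lands on essentially the paper's argument, with one pleasant simplification worth noting. For part~\ref{prop:infVCNpart:VCN} you do exactly what the paper does. For part~\ref{prop:infVCNpart:learn}, the paper also reduces (via realizability, separation, and countability of $\cE_1(\Omega)$) to $F=H_{V_F}$ almost surely, observes that any ERM output agrees with $V_F$ on the observed diagonal set, and then runs a two-stage probability argument: first a Chernoff bound guaranteeing that each ``important'' index $n$ appears at least $m'$ times among the $\rn{a}_j$'s (this is where the quadratic $(M-m')^2\geq 2LM$ with $M=\tau m$ arises), and second, conditionally on that event, a tail bound $(1-\tau)^{m\cdot m'}\leq(1-\tau)^{(m')^2}$ for membership in $O$; each stage is controlled to failure probability $\delta'$, and $(1-\delta')^2=1-\delta$.

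The difference is in how the ``important'' set is built. The paper sorts the $p_i$'s in decreasing order, truncates at $t_\epsilon$ so the tail of $\sum p_iq_i$ is small, then sorts the remaining $q_i$'s and truncates again at $u_\epsilon$; it then verifies $t_\epsilon,u_\epsilon<1/\tau$ and $p_{i_j},q_{i_{j_v}}>\tau$ on the kept indices. Your single-threshold $S=\{n:\min(p(n),p'(n))\geq\tau\}$ achieves the same three facts ($\lvert S\rvert\leq1/\tau$, both marginals $\geq\tau$ on $S$, and $\sum_{n\notin S}\pi(n)\leq2\tau$) in one stroke, so it is a genuine streamlining. Two small points to tighten in a full writeup: the reduction to $F=H_W$ deserves the one-line justification that realizability plus separation forces $F$ to vanish off the diagonal $\mu$-almost everywhere; and your inequality ``$(m-m')^2\geq2Lm$'' is really in the rescaled variable $M=\tau m$, which you acknowledge but should make explicit.
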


\begin{proof}
  Item~\ref{prop:infVCNpart:VCN} is easy: for the unique point $x\in\Omega_{\{1\}}$, it is straightforward
  to check that $\cH(x)$ shatters the infinite set
  \begin{align*}
    \left\{x\in \prod_{U\in\{\{2\},\{1,2\}\}} \NN
    \;\middle\vert\;
    x_{\{2\}} = x_{\{1,2\}}
    \right\}.
  \end{align*}

  \medskip

  For item~\ref{prop:infVCNpart:learn}, we assume $\lVert\ell\rVert_\infty > 0$ (otherwise the
  result is trivial), let $\epsilon,\delta\in(0,1)$ and $m\geq
  m^{\PAC}_{\cH,\ell,\cA}(\epsilon,\delta)$ be an integer. Let also $\mu\in\Pr(\Omega)$, let
  $F\in\cF_2(\Omega,\Lambda)$ be realizable with respect to $\mu$ and let $\rn{x}\sim\mu^m$. Let
  further
  \begin{align*}
    V_F & \df \{i\in\NN \mid F(1,i,i)=1\},
  \end{align*}
  where in the above, the triple $(1,i,i)$ should be interpreted as the element $z$ of
  $\cE_1(\Omega)$ with $(z_{1^{\{1\}}},z_{1^{\{2\}}},z_{1^{\{1,2\}}})=(1,i,i)$.

  Note that since $F$ is realizable, $\ell$ is separated and $\cE_2(\Omega)$ is countable, we have
  $F(x) = H_{V_F}(x)$ for $\mu^2$-almost every $x\in\cE_2(\Omega)$. Let now $\rn{V}$ be the random
  subset of $\NN$ such that
  \begin{align*}
    \cA(\rn{x},F^*_m(\rn{x})) & = H_{\rn{V}}.
  \end{align*}
  Finally, let
  \begin{align*}
    \rn{W} & \df \{i\in\NN \mid \exists\alpha\in [m]^2, \alpha^*(\rn{x}) = (1, i, i)\}.
  \end{align*}

  Since $\ell$ is separated, it follows that for every $V\subseteq\NN$, we have
  \begin{align*}
    L_{\rn{x},F^*_m(\rn{x}),\ell}(H_V) = 0 & \iff V\cap \rn{W} = V_F\cap \rn{W},
  \end{align*}
  and since $\cA$ is an empirical risk minimizer, we conclude that
  \begin{align}\label{eq:VWVFW}
    \rn{V}\cap\rn{W} = V_F\cap\rn{W}.
  \end{align}

  For every $i\in\NN$, let
  \begin{align*}
    p_i & \df \mu_{\{2\}}(\{i\}), &
    q_i & \df \mu_{\{1,2\}}(\{i\}).
  \end{align*}
  Let us order the $p_i$'s in non-increasing order: $p_{i_1}\geq p_{i_2}\geq\cdots$; note that we
  can get this with order type $\omega$ because for each $a > 0$, there are finitely many (in fact,
  at most $1/a$) indices $i\in\NN$ with $p_i\geq a$. Let $t_\epsilon\in\NN$ be the smallest integer
  with
  \begin{align*}
    \sum_{j > t_\epsilon} p_{i_j}\cdot q_{i_j}
    & \leq
    \frac{\epsilon}{2\cdot\lVert\ell\rVert_\infty}.
  \end{align*}

  We claim that $t_\epsilon < 2\cdot\lVert\ell\rVert_\infty/\epsilon$ and for every
  $j\in[t_\epsilon]$, we have $p_{i_j} > \epsilon/(2\cdot\lVert\ell\rVert_\infty)$. If
  $t_\epsilon=0$, both assertions trivially hold, so suppose that $t_\epsilon\geq 1$ and note that
  the minimality of $t_\epsilon$ gives
  \begin{align*}
    \frac{\epsilon}{2\cdot\lVert\ell\rVert_\infty}
    & <
    \sum_{j \geq t_\epsilon} p_{i_j}\cdot q_{i_j}
    \leq
    p_{i_{t_\epsilon}}\cdot\sum_{j\geq t_\epsilon} q_{i_j}
    \leq
    p_{i_{t_\epsilon}},
  \end{align*}
  so we get $p_{i_j}\geq p_{i_{t_\epsilon}} > \epsilon/(2\cdot\lVert\ell\rVert_\infty)$ for every
  $j\in[t_\epsilon]$. On the other hand, we have
  \begin{align*}
    1
    & \geq
    \sum_{j\leq t_\epsilon} p_{i_j}
    \geq
    t_\epsilon\cdot p_{i_{t_\epsilon}}
    >
    t_\epsilon\cdot\frac{\epsilon}{2\cdot\lVert\ell\rVert_\infty},
  \end{align*}
  so we conclude that $t_\epsilon < 2\cdot\lVert\ell\rVert_\infty/\epsilon$.

  We now order the $q_{i_1},\ldots,q_{i_{t_\epsilon}}$ in non-increasing order: $q_{i_{j_1}}\geq
  q_{i_{j_2}}\geq\cdots\geq q_{i_{j_{t_\epsilon}}}$ and let $u_\epsilon\leq t_\epsilon$ be the
  smallest integer with
  \begin{align*}
    \sum_{v = u_\epsilon + 1}^{t_\epsilon} p_{i_{j_v}}\cdot q_{i_{j_v}}
    & \leq
    \frac{\epsilon}{2\cdot\lVert\ell\rVert_\infty}.
  \end{align*}

  We claim that for every $v\in[u_\epsilon]$, we have $q_{i_{j_v}} > \epsilon/(2\cdot\lVert\ell\rVert_\infty)$. If
  $u_\epsilon=0$, then this holds vacuously, so suppose $u_\epsilon\geq 1$ and note that the
  minimality of $u_\epsilon$ gives
  \begin{align*}
    \frac{\epsilon}{2\cdot\lVert\ell\rVert_\infty}
    & <
    \sum_{v = u_\epsilon}^{t_\epsilon} p_{i_{j_v}}\cdot q_{i_{j_v}}
    \leq
    q_{i_{j_{u_\epsilon}}}\cdot\sum_{v = u_\epsilon}^{t_\epsilon} p_{i_{j_v}}
    \leq
    q_{i_{j_{u_\epsilon}}},
  \end{align*}
  so we have $q_{i_{j_v}}\geq q_{i_{j_{u_\epsilon}}} > \epsilon/(2\cdot\lVert\ell\rVert_\infty)$ for every
  $v\in[u_\epsilon]$.

  Let us now abbreviate $m'\df m'(\epsilon,\delta)$ and $\delta'\df\delta'(\delta)$. For each
  $j\in[t_\epsilon]$, let $E_j(\rn{x})$ be the event $\rn{n}_j\geq m'$, where
  \begin{align}\label{eq:nj}
    \rn{n}_j & \df \lvert\{a\in [m] \mid \rn{x}_{2\mapsto a} = i_j\}\rvert,
  \end{align}
  where in the above $2\mapsto a$ denotes the function $\{2\}\to[m]$ that maps $2$ to $a$. We also
  let $E(\rn{x})\df\bigcap_{j\in[t_\epsilon]} E_j(\rn{x})$. Since $\rn{n}_j$ has binomial
  distribution $\Bi(m,p_{i_j})$, by Chernoff's Bound, we have
  \begin{align*}
    \PP_{\rn{x}}[E_j(\rn{x})]
    & \geq
    1 - \exp\left(-\frac{(p_{i_j}\cdot m - m')^2}{2\cdot p_{i_j}\cdot m}\right)
    \geq
    1 - \exp\left(
    - \frac{B_\ell}{\epsilon\cdot m}\cdot
    \left(\frac{\epsilon\cdot m}{2\cdot B_\ell} - m'\right)^2
    \right),
  \end{align*}
  where the second inequality follows since $p_{i_j} >
  \epsilon/(2\cdot\lVert\ell\rVert_\infty)\geq\epsilon/(2\cdot B_\ell)$ and $\epsilon\cdot m/(2\cdot
  B_\ell)\geq m'$ (as $B_\ell\df\max\{\lVert\ell\rVert_\infty,1\}$ and from $m\geq
  m^{\PAC}_{\cH,\ell,\cA}(\epsilon,\delta)$).

  Note now that
  \begin{align*}
    & \!\!\!\!\!\!
    t_\epsilon\cdot
    \exp\left(
    - \frac{B_\ell}{\epsilon\cdot m}\cdot
    \left(\frac{\epsilon\cdot m}{2\cdot B_\ell} - m'\right)^2
    \right)
    \\
    & <
    \frac{2\cdot B_\ell}{\epsilon}\cdot
    \exp\left(-\frac{B_\ell}{\epsilon\cdot m}\cdot
    \left(
    \frac{\epsilon^2\cdot m^2}{4\cdot B_\ell^2}
    - \frac{\epsilon\cdot m\cdot m'}{B_\ell}
    + (m')^2
    \right)
    \right)
  \end{align*}
  and note that the right-hand side of the above is at most $\delta'$ if and only if
  \begin{align*}
    \frac{\epsilon^2}{4\cdot B_\ell^2}\cdot m^2
    - \frac{\epsilon}{B_\ell}\cdot\left(
    m' + \ln\frac{2\cdot B_\ell}{\delta'\cdot\epsilon}
    \right)\cdot m
    + (m')^2
    \geq
    0.
  \end{align*}

  By analyzing the roots of the above as a polynomial of $m$, since
  \begin{align*}
    m
    & \geq
    m^{\PAC}_{\cH,\ell,\cA}(\epsilon,\delta)
    \df
    \frac{2\cdot B_\ell}{\epsilon}\cdot
    \left(
    m' + \ln\frac{2\cdot B_\ell}{\delta'\cdot\epsilon}
    + \sqrt{
      2\cdot m'\cdot\ln\frac{2\cdot B_\ell}{\delta'\cdot\epsilon}
      + \left(\ln\frac{2\cdot B_\ell}{\delta'\cdot\epsilon}\right)^2
    }
    \right),
  \end{align*}
  we conclude that
  \begin{align*}
    t_\epsilon\cdot
    \exp\left(
    - \frac{B_\ell}{\epsilon m}\cdot
    \left(\frac{\epsilon\cdot m}{2\cdot B_\ell} - m'\right)^2
    \right)
    & \leq
    \delta',
  \end{align*}
  so by the union bound, we get
  \begin{align}\label{eq:PE}
    \PP_{\rn{x}}[E(\rn{x})]
    & \geq
    1 - \sum_{j\in[t_\epsilon]}(1-\PP_{\rn{x}}[E_j(\rn{x})])
    \geq
    1 - \delta'.
  \end{align}

  For each $v\in[u_\epsilon]$, let $E'_v(\rn{x})$ be the event $i_{j_v}\in\rn{W}$. We also
  let $E'(\rn{x})\df\bigcap_{v\in[u_\epsilon]} E'_v(\rn{x})$. Recalling the definition of
  $\rn{n}_j$ in~\eqref{eq:nj}, note that
  \begin{align*}
    \PP_{\rn{x}}[E'_v(\rn{x})] & = 1 - (1 - q_{i_{j_v}})^{m\cdot\rn{n}_{j_v}},
  \end{align*}
  and since $q_{i_{j_v}}>\epsilon/(2\cdot\lVert\ell\rVert_\infty)\geq\epsilon/(2 B_\ell)$ and
  $m\geq m'$, we get
  \begin{align*}
    \PP_{\rn{x}}[E'_v(\rn{x}) \given E(\rn{x})]
    & \geq
    1 - \left(1 - \frac{\epsilon}{2\cdot B_\ell}\right)^{(m')^2}.
  \end{align*}

  Note now that
  \begin{align*}
    u_\epsilon\cdot\left(1 - \frac{\epsilon}{2\cdot B_\ell}\right)^{(m')^2}
    & <
    \frac{2\cdot B_\ell}{\epsilon}\cdot
    \left(1 - \frac{\epsilon}{2\cdot B_\ell}\right)^{(m')^2}
  \end{align*}
  and note that the right-hand side of the above is at most $\delta'$ if and only if
  \begin{align*}
    m'
    & \geq
    \sqrt{
      \frac{
        \ln(2\cdot B_\ell/(\epsilon\cdot\delta'))
      }{
        \ln(2\cdot B_\ell/(2\cdot B_\ell - \epsilon))
      }
    },
  \end{align*}
  and since $m'$ is defined precisely as the right-hand side of the above, we conclude that
  \begin{align*}
    u_\epsilon\cdot\left(1 - \frac{\epsilon}{2\cdot B_\ell}\right)^{(m')^2}
    & \leq
    \delta',
  \end{align*}
  so by the union bound, we get
  \begin{align*}
    \PP_{\rn{x}}[E'(\rn{x})\given E(\rn{x})]
    & \geq
    1 - \sum_{v\in[u_\epsilon]}(1-\PP_{\rn{x}}[E'_v(\rn{x})])
    \geq
    1 - \delta',
  \end{align*}
  which along with~\eqref{eq:PE} gives
  \begin{align*}
    \PP_{\rn{x}}[E'(\rn{x})]
    & \geq
    \PP_{\rn{x}}[E'(\rn{x})\cap E(\rn{x})]
    \geq
    (1 - \delta')^2
    =
    1 - \delta.
  \end{align*}

  Finally, note that within the event $E'(\rn{x})$, we have
  \begin{align*}
    L_{\mu,F,\ell}(\cA(\rn{x},F^*_m(\rn{x})))
    & =
    \PP_{\rn{z}\sim\mu^1}[\ell(\rn{z}, H_{\rn{V}}(\rn{z}), F(\rn{z}))]
    =
    \sum_{i\in\NN} p_i\cdot q_i\cdot\ell(\rn{z}, \One[i\in\rn{V}], \One[i\in V_F])
    \\
    & \leq
    \lVert\ell\rVert_\infty\cdot
    \sum_{i\in\NN\setminus\rn{W}} p_i\cdot q_i
    \leq
    \lVert\ell\rVert_\infty\cdot
    \sum_{i\in\NN\setminus\{i_{j_v} \mid v\in[u_\epsilon]\}} p_i\cdot q_i
    \\
    & =
    \lVert\ell\rVert_\infty\cdot
    \left(
    \sum_{j > t_\epsilon} p_{i_j}\cdot q_{i_j}
    + \sum_{v = u_\epsilon + 1}^{t_\epsilon} p_{i_{j_v}}\cdot q_{i_{j_v}}
    \right)
    \leq
    \epsilon,
  \end{align*}
  where the second equality follows since $\ell$ is separated, the first inequality follows
  from~\eqref{eq:VWVFW} and the fact that $\ell$ is separated and the second inequality follows
  since within the event $E'(\rn{x})$, we have $i_{j_v}\in\rn{W}$ for every $v\in[u_\epsilon]$.

  Thus, we conclude that
  \begin{align*}
    \PP_{\rn{x}}[L_{\mu,F,\ell}(\cA(\rn{x},F^*_m(\rn{x}))) \leq \epsilon]
    & \geq
    \PP_{\rn{x}}[E'(\rn{x})]
    \geq
    1 - \delta,
  \end{align*}
  so $\cA$ is a $2$-PAC learner for $\cH$ with respect to $\ell$.
\end{proof}

\begin{remark}\label{rmk:infVCNpart}
  One might wonder if the result of Proposition~\ref{prop:infVCNpart} can be extended to the
  non-partite setting, i.e., in the presence of higher-order variables, does there exists a
  non-partite $2$-ary hypothesis class with infinite $\VCN_2$-dimension that is still $2$-PAC
  learnable?

  To obtain such $2$-ary hypothesis class, one could hope to apply Propositions~\ref{prop:kpartVCN}
  and~\ref{prop:kpart} to transfer $2$-PAC learnability and infinite $\VCN_2$-dimension to the
  non-partite setting. Unfortunately, it is easy to see that the $\cH$ constructed in
  Proposition~\ref{prop:infVCNpart} is \emph{not} in the image of the operation $\place^{\kpart[2]}$
  of Definition~\ref{def:kpart:cH}.

  However, if we let instead $\Omega$ be the Borel $2$-partite template given by
  $\Omega_{\{1\}}\df\Omega_{\{2\}}\df\Omega_{\{1,2\}}\df\NN$ and $\Lambda\df\{0,1\}^{S_2}$, all
  equipped with the discrete $\sigma$-algebra and let $\cH'\df\{H'_V \mid V\subseteq\NN_+\}$, where
  \begin{align*}
    H'_V(x)_\tau
    & \df
    \One[x_{1^{\{\tau(1)\}}}=0\land x_{1^{\{\tau(2)\}}} = x_{1^{\{1,2\}}}\in V]
    \qquad (x\in\cE_1(\Omega), \tau\in S_2),
  \end{align*}
  then it is not too hard to adapt the proof of Proposition~\ref{prop:infVCNpart} to $\cH'$ with
  some case analysis over the values $\mu_{\{1\}}(\{0\})$ and $\mu_{\{2\}}(\{0\})$. On the other
  hand, it is straightforward to check that $\cH'$ is in the image of the operation
  $\place^{\kpart[2]}$, so one can use Propositions~\ref{prop:kpartVCN} and~\ref{prop:kpart} to
  transfer the result to the non-partite setting.
\end{remark}

\section{What about non-learnability in the non-partite?}
\label{sec:nonlearnnonpart}

In this section we discuss why the proof of Proposition~\ref{prop:partkPAC->VCN} does not naively
extend to the non-partite setting even under the assumption of rank at most $1$, but we give some
evidence that (non-agnostic) non-partite learnability fits into the cycle of the main theorem, as
mentioned in Section~\ref{sec:exp}. Throughout this section, we will focus on the case of graphs,
that is, our Borel template $\Omega$ has $\Omega_i$ being a single point for every $i\geq 2$, we
have $\Lambda=\{0,1\}$ and all of our hypotheses $H\in\cH\subseteq\cF_2(\Omega,\Lambda)$ are
symmetric in the sense that $H(x) = H(\sigma^*(x))$ for every $\sigma\in S_2$ and irreflexive in the
sense that $H(x)=0$ whenever $x_{\{1\}}=x_{\{2\}}$. Thus, we will simplify notation by dropping
higher-order variables.

Assume our family of graphs $\cH$ satisfies $\VCN_2(\cH)=\infty$. This means that there are vertices
$x\in\Omega_1$ such that the family $\cH(x)$ of potential neighborhoods of $x$ in $\cH$ shatter
arbitrarily large sets as we vary $x$. Let us assume for a moment that there exists $x$ such that
$\cH(x)$ actually shatters an infinite set. What we would like to claim is that if we can $2$-PAC
learn $\cH$, then we would be able to PAC learn $\cH(x)$, which would contradict classic PAC
theory.

The families of measures that for which it is too hard to PAC learn $\cH(x)$ are uniform measures in
arbitrarily large subsets shattered by $\cH(x)$ (see Lemma~\ref{lem:nonlearn}), so intuitively a
natural choice of $\mu\in\Pr(\Omega)$ for which it would be too hard to $2$-PAC learn $\cH$ is to
take the average of the Dirac delta distribution concentrated on $x$ and a uniform distribution on
an arbitrarily large subset $U$ shattered by $\cH(x)$. This way, we hope to claim that to $2$-PAC
learn $\mu$, we need to at the very least learn how the neighborhood of $x$ behaves on $U$, which
would seem hard because the behavior of $\cH(x)$ on $U$ is not PAC learnable. The problem is that
the $2$-ary algorithm also gets access to how edges behave inside $U$ and these could reveal
information about how $x$ connects to $U$. The example below illustrates how this can happen.

\begin{example}\label{ex:distgraphs}
  Let $\Omega_1\df\ZZ$. The family of \emph{distance graphs on $\ZZ$} is $\cH^{\dist}\df\{G_A \mid
  A\subseteq\NN_+\}$, where $G_A\in\cF_2(\Omega,\{0,1\})$ is given by
  \begin{align*}
    G_A(x,y) & \df \One[\lvert x-y\rvert\in A]
    \qquad (x,y\in\ZZ),
  \end{align*}
  that is, we connect two vertices exactly when their distance is in $A$.
\end{example}

It is straightforward to check that for every $x\in\ZZ$, $\cH^{\dist}(x)$ shatters the infinite set
$\{y\in\ZZ \mid y > x\}$. Let us now see how our intuition of reducing $2$-PAC learnability of
$\cH^{\dist}$ to PAC learnability of $\cH^{\dist}(x)$ fares: suppose $\mu$ is the average of the
Dirac delta concentrated on $0$ and the uniform distribution on $[2n]$. PAC learnability theory
applied to $\cH^{\dist}(0)$ says that the only way to learn $\cH^{\dist}(0)$ is to see almost all
points of $[2n]$ in the sample. However, note that if in our sample we see the vertices
$0,n,n+1,\ldots,2n$, then we know enough information to deduce an $F\in\cH^{\dist}$ with zero total
loss: this is because the only relevant distances for this are $0,1,\ldots,2n$ and they are all
present in the pairs of $0,n,n+1,\ldots,2n$.

This example shows that the reduction of $2$-PAC learnability to PAC learnability cannot be done
naively: we need to find a large set $U_*$ that is shattered by $\cH^{\dist}(x)$ but whose internal
edges do not give any extra information about how $x$ relates to $U_*$. For the particular case of
$\cH^{\dist}$, this is very easy to do: take $U_*\df\{4^n \mid n\in\NN_+\}$ and note that for
$u,v\in U_*$, we have $\lvert u-v\rvert\notin U_*$ (since in base $2$, $\lvert u-v\rvert$ has at
least two digits $1$, whereas all elements of $U_*$ have exactly one digit $1$ in base $2$). Thus
$\cH^{\dist}$ is not $2$-PAC learnable due to the measures that are averages of the Dirac delta on
$0$ and the uniform distribution on an arbitrarily large subset of $U_*$.

One could wonder if it is not possible to slightly change Example~\ref{ex:distgraphs} spreading the
information further so as to produce a family of graphs with infinite $\VCN_2$-dimension that is
actually $2$-PAC learnable. This leads to the following definition:
\begin{definition}[Partition families]
  A family of graphs $\cH$ on a non-empty Borel space $\Omega_1$ is a \emph{partition family} if
  there exists a countable measurable partition $(P_i)_{i\in I}$ of $\binom{\Omega_1}{2}$ such that
  \begin{align*}
    \cH & = \{G_A \mid A\subseteq I\},
  \end{align*}
  where
  \begin{align*}
    G_A(x,y) & \df \One[\exists i\in A, \{x,y\}\in P_i]
    \qquad (x,y\in\Omega_1, A\subseteq I).
  \end{align*}
\end{definition}
Clearly $\cH^{\dist}$ is a partition family with the partition given by the distance.

The rest of this section is devoted to proving that for partition families, finite
$\VCN_2$-dimension does indeed characterize $2$-PAC learnability. More specifically, we will show
that for a partition family $\cH$, if $\VCN_2(\cH)=\infty$, then we can always find arbitrarily
large sets $U_*\subseteq\Omega_1$ along with $x\in\Omega_1$ such that $\cH(x)$ shatters $U_*$ and
internal edges of $U_*$ give no useful information about how $x$ relates to $U_*$. Let us give some
intuition on what ``useful information'' means. For $x\in\Omega_1$ and $U\subseteq\Omega_1$, let
\begin{align*}
  C^\cH_1(x,U) & \df \{i\in I \mid \exists u\in U, \{x,u\}\in P_i\},\\
  C^\cH_2(U) & \df \{i\in I \mid \exists u,v\in U, \{u,v\}\in P_i\}.
\end{align*}
The best case scenario would be if we could find $x$ and $U_*$ such that $\cH(x)$ shatters $U_*$ and
$C^\cH_1(x,U_*)\cap C^\cH_2(x,U_*)=\varnothing$ so that internal edges of $U_*$ do not give any information
at all about how $x$ relates to $U_*$. However, the next example shows that infinite
$\VCN_2$-dimension does not guarantee the existence of such $x$ and $U_*$.

\begin{example}\label{ex:maxgraphs}
  Let $\Omega_1=\NN$, consider the partition $(P_i)_{i\in\NN_+}$ of $\binom{\NN}{2}$ given by
  \begin{align*}
    P_i
    & \df
    \left\{\{x,y\}\in\binom{\NN}{2} \;\middle\vert\;
    \max\{x,y\} = i\right\}
    \qquad (i\in\NN_+),
  \end{align*}
  and let $\cH^{\max}$ be the partition family of graphs associated with $(P_i)_{i\in\NN}$.
\end{example}

It is straightforward to check that if $\cH^{\max}(x)$ shatters a set $U\subseteq\NN$ of size at
least $2$, then we must have $x < \min(U\setminus\{\min(U)\})$ (i.e., $x$ can only be greater or
equal to at most one element of $U$) and if this holds, then we have
\begin{align*}
  C^{\cH^{\max}}_1(x,U) & \supseteq U\setminus\{\min(U)\}, &
  C^{\cH^{\max}}_2(x,U) & = U\setminus\{\min(U)\},
\end{align*}
so these are never disjoint when $\lvert U\rvert\geq 2$. However, even though such sets $U$ give
information about how $x$ relates to $U$, they do not give useful information: if we know whether
$\{u,v\}\subseteq U$ is an edge of $H\in\cH^{\max}$, this only reveals whether $\{x,\max\{u,v\}\}$ is
an edge of $H$, which we would already know since we already see $u$ and $v$ in our sample.

So instead, we would be content in finding $U$ and $x$ such that $\cH(x)$ shatters $U$ and whenever
$\{u,v\}\in\binom{U}{2}\cap P_i$, at least one of the following holds:
\begin{enumerate*}[label={(\roman*)}]
\item $i\notin C^\cH_1(x,U)$,
\item $\{x,u\}\in P_i$,
\item $\{x,v\}\in P_i$.
\end{enumerate*}
In other words, the pair $\{u,v\}$ does not reveal any information about how $x$ relates to
$U\setminus\{u,v\}$.

We now encode this idea into a Ramsey-theoretic problem: suppose first that $\VCN_2(\cH)=\infty$
because there exists $x\in\Omega_1$ such that $\cH(x)$ shatters an infinite set $V$. Then we can
define two functions $f_1\colon V\to I$ and $f_2\colon\binom{V}{2}\to I$ by letting $f_1(u)$ and
$f_2(\{u,v\})$ be the unique elements of $I$ such that
\begin{align*}
  \{x,u\} & \in P_{f_1(u)}, &
  \{u,v\} & \in P_{f_2(\{u,v\})}.
\end{align*}
Since $V$ is shattered by $\cH(x)$, it follows that $f_1$ is injective. Thus, our objective would be
simply to find an infinite set $U\subseteq V$ such that for every $\{u,v\}\in\binom{U}{2}$, we have
$f_2(\{u,v\})\notin f_1(U)$ or $f_2(\{u,v\})\in\{f_1(u),f_1(v)\}$ ($U$ is automatically shattered by
$\cH(x)$ since $U\subseteq V$).

As expected, this intuition is slightly too simplistic because $\VCN_2(\cH)$ does not imply that
there exists $x$ such that $\cH(x)$ shatters an infinite set, in fact, it does not even guarantee
that $\cH(x)$ shatters arbitrarily large sets for the same $x$ as $\VCN_2(\cH)$ is defined as the
supremum of $\Nat(\cH(x))$ when $x$ varies. So instead, we need the following finite version of the
Ramsey-theoretic problem above:
\begin{lemma}\label{lem:finiteRamseylike}
  For every $n\in\NN$, there exists $\rho=\rho(n)\in\NN$ such that for every set $I$, every
  injection $f_1\colon[\rho]\to I$ and every $f_2\colon\binom{[\rho]}{2}\to I$, there exists
  $U\in\binom{[\rho]}{n}$ such that for every $\{u,v\}\in\binom{U}{2}$, we have $f_2(\{u,v\})\notin
  f_1(U)$ or $f_2(\{u,v\})\in\{f_1(u),f_1(v)\}$.

  In fact, one can take
  \begin{align}\label{eq:finiteRamseylike:rho}
    \rho(n)
    & \df
    \begin{dcases*}
      n, & if $n\leq 2$,\\
      \frac{(n)_3}{2} + 3, & if $n\geq 3$.
    \end{dcases*}
  \end{align}
\end{lemma}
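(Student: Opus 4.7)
The statement is of a Ramsey-type nature, so my plan is to recast it as an independent-set problem in a $3$-uniform hypergraph and then apply a simple probabilistic / averaging argument.

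First I would dispatch the trivial cases $n \leq 2$: when $n \leq 1$ there are no pairs $\{u,v\}\in\binom{U}{2}$ at all, and when $n=2$ the unique pair $\{u,v\}=U$ satisfies $f_1(U)=\{f_1(u),f_1(v)\}$, so the forbidden configuration is automatically avoided. Thus $\rho(n)=n$ works.

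For $n\geq 3$, I would set up the following hypergraph. Using injectivity of $f_1$, for each pair $\{u,v\}\in\binom{[\rho]}{2}$ there is at most one element $w\in[\rho]\setminus\{u,v\}$ with $f_1(w)=f_2(\{u,v\})$; call a pair \emph{charged} if such a $w$ exists, and in that case call the unordered triple $\{u,v,w\}$ \emph{bad}. A set $U\in\binom{[\rho]}{n}$ has the desired property exactly when $U$ contains no bad triple, because the negation of the conclusion for some pair $\{u,v\}\subseteq U$ is precisely the existence of such a $w\in U\setminus\{u,v\}$. Since each charged pair generates exactly one bad triple and different pairs may generate the same triple, the total number of distinct bad triples is at most
\[
\binom{\rho}{2}.
\]

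Now I would run the standard first-moment method: pick $U$ uniformly at random from $\binom{[\rho]}{n}$. A fixed triple lies in $U$ with probability $(n)_3/(\rho)_3$, so the expected number of bad triples contained in $U$ is at most
\[
\binom{\rho}{2}\cdot\frac{(n)_3}{(\rho)_3}
\;=\;\frac{(n)_3}{2(\rho-2)}.
\]
For $\rho\geq (n)_3/2+3$ this quantity is strictly less than $1$ (using that $(n)_3$ is even, in fact divisible by $6$), so some $U$ achieves zero bad triples, proving the lemma with the stated $\rho(n)$.

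The main obstacle is really just the combinatorial setup: one must notice (i) that injectivity of $f_1$ bounds the number of bad triples by $\binom{\rho}{2}$ rather than by something like $\binom{\rho}{2}\cdot\rho$, and (ii) that the forbidden configuration is local to triples and hence amenable to the first-moment argument. Once those observations are in place, the arithmetic that produces exactly $(n)_3/2+3$ is immediate. I don't foresee any measurability or uniformity subtleties since $I$, $f_1$, $f_2$ are arbitrary set-theoretic objects.
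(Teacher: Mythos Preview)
Your proposal is correct and is essentially the same argument as the paper's, just cast in probabilistic language: the paper defines $\cB_A$ as the set of $n$-subsets $U$ containing the pair $A$ with $f_2(A)\in f_1(U\setminus A)$, bounds $\lvert\cB_A\rvert\leq\binom{\rho-3}{n-3}$ (since $U$ must contain the specific triple $\{u,v,f_1^{-1}(f_2(A))\}$), and shows $\binom{\rho}{n}-\binom{\rho}{2}\binom{\rho-3}{n-3}>0$ for the stated $\rho$---which is exactly your first-moment inequality $\binom{\rho}{2}\cdot(n)_3/(\rho)_3<1$ after dividing through by $\binom{\rho}{n}$. Your ``bad triple'' reformulation is the same observation that makes the paper's bound on $\lvert\cB_A\rvert$ work.
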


\begin{proof}
  The result for $n\leq 2$ is obvious, so suppose $n\geq 3$ and let $\rho\df\rho(n)\df(n)_3/2 +
  3\geq 6$ be given by~\eqref{eq:finiteRamseylike:rho} (note that $\rho$ is indeed an integer as
  $(n)_3$ is divisible by $2$). Given an injection $f_1\colon[\rho]\to I$ and a function
  $f_2\colon\binom{[\rho]}{2}\to I$, we define the sets
  \begin{align*}
    \cB_A
    & \df
    \left\{U\in\binom{[\rho]}{n} \;\middle\vert\;
    A\subseteq U\land f_2(A)\in f_1(U\setminus A)\right\}
    \qquad \left(A\in\binom{[\rho]}{2}\right).
  \end{align*}

  Note that if $U\in\binom{[\rho]}{n}\setminus\bigcup_{A\in\binom{[\rho]}{2}}\cB_A$, then for every
  $\{u,v\}\in\binom{U}{2}$, we have $f_2(\{u,v\})\notin f_1(U)$ or
  $f_2(\{u,v\})\in\{f_1(u),f_1(v)\}$ (as $f_1$ is injective). Thus, it suffices to show that the set
  $\binom{[\rho]}{n}\setminus\bigcup_{A\in\binom{[\rho]}{2}}\cB_A$ is non-empty, which we prove with
  a simple counting argument.

  First note that for every $\{u,v\}\in\binom{[\rho]}{2}$, if $U\in\cB_{\{u,v\}}$, then
  $f_2(\{u,v\})$ must be in the image of $f_1$, $U$ must contain the set
  $\{f_1^{-1}(f_2(\{u,v\})),u,v\}$ and this set must have size $3$, so we conclude that
  \begin{align*}
    \lvert\cB_{\{u,v\}}\rvert & \leq \binom{\rho-3}{n-3}.
  \end{align*}

  Thus, we get
  \begin{align*}
    \left\lvert
    \binom{[\rho]}{n}
    \middle\backslash
    \bigcup_{A\in\binom{[\rho]}{2}}\cB_A
    \right\rvert
    & \geq
    \binom{\rho}{n}
    -
    \binom{\rho}{2}\cdot\binom{\rho-3}{n-3}
    =
    \binom{\rho-3}{n-3}\cdot
    (\rho)_2\cdot
    \left(
    \frac{\rho-2}{(n)_3} -\frac{1}{2}
    \right),
  \end{align*}
  which is positive since $\rho=(n)_3/2 + 3$.
\end{proof}

We now leverage Lemma~\ref{lem:finiteRamseylike} to prove that any partition family of graphs with
infinite $\VCN_2$-dimension is not $2$-PAC learnable.

\begin{proposition}[Learnability for partition families]\label{prop:pf}
  Let $\cH$ be a partition family of graphs over $\Omega_1$, let $\ell$ be a separated $2$-ary loss
  function and suppose $\VCN_2(\cH)=\infty$. Then $\cH$ is not $2$-PAC learnable with respect to
  $\ell$.
\end{proposition}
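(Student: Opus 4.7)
The plan is to mimic the no-free-lunch strategy of Lemma~\ref{lem:nonlearn}, with Lemma~\ref{lem:finiteRamseylike} ensuring that the extra information available to a $2$-ary learner (the edges among the sampled vertices) is useless for determining the neighborhood of a fixed vertex. First I would reduce to bounded $\ell$: replacing $\ell$ by $\min\{\ell,1\}$ preserves separation and can only make learning easier, so non-learnability with respect to the truncated loss implies non-learnability with respect to $\ell$. I would then aim to show that for every $m\in\NN_+$ and every learning algorithm $\cA$, there exist $\mu\in\Pr(\Omega)$ and a realizable $F\in\cH$ for which $\EE_{\rn{x}\sim\mu^m}[L_{\mu,F,\ell}(\cA(\rn{x},F^*_m(\rn{x})))]$ is bounded below by a positive constant independent of $m$, which rules out $2$-PAC learnability.

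Fix $n\in\NN_+$ to be chosen large relative to $m$. Since $\VCN_2(\cH)=\infty$, one can find $x\in\Omega_1$ and $V\subseteq\Omega_1$ with $\lvert V\rvert=\rho(n)$ such that $\cH(x)$ shatters $V$. Using the partition structure, let $f_1\colon V\to I$ be defined by $\{x,u\}\in P_{f_1(u)}$ and $f_2\colon\binom{V}{2}\to I$ by $\{u,v\}\in P_{f_2(\{u,v\})}$; shattering forces $f_1$ to be injective. Applying Lemma~\ref{lem:finiteRamseylike} after identifying $V$ with $[\rho(n)]$, I would extract $U\subseteq V$ with $\lvert U\rvert=n$ such that for every $\{u,v\}\in\binom{U}{2}$, either $f_2(\{u,v\})\notin f_1(U)$ or $f_2(\{u,v\})\in\{f_1(u),f_1(v)\}$. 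Take $\mu_1$ to be the probability measure on $\Omega_1$ with $\mu_1(\{x\})=1/2$ and $\mu_1(\{u\})=1/(2n)$ for each $u\in U$, and consider the subfamily $\cF\df\{G_B\mid B\subseteq f_1(U)\}\subseteq\cH$. Each $G_B$ is realizable for $\mu$ (its total loss against itself is $0$ by separation).

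The core step is the following no-free-lunch computation. Let $\rn{x}\sim\mu^m$ and let $S'\df\{u\in U\mid u\in\im(\rn{x})\}$, so $\lvert S'\rvert$ is at most the number of non-$x$ samples. I claim that the observed data $G_B^*_m(\rn{x})$ depends on $B\subseteq f_1(U)$ only through $B\cap f_1(S')$: edges incident to $x$ in the sample reveal exactly $\One[f_1(u)\in B]$ for $u\in S'$, while for a pair $\{u,v\}\in\binom{S'}{2}$ our choice of $U$ forces $G_B(u,v)$ to be either identically $0$ (when $f_2(\{u,v\})\notin f_1(U)$) or equal to one of $\One[f_1(u)\in B]$, $\One[f_1(v)\in B]$ (when $f_2(\{u,v\})\in\{f_1(u),f_1(v)\}$). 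Consequently, for any $u\in U\setminus S'$, the bit $\One[f_1(u)\in B]$ is independent of the view $G_B^*_m(\rn{x})$ when $B$ is drawn uniformly from $2^{f_1(U)}$, so the algorithm's output $\hat{G}\df\cA(\rn{x},G_B^*_m(\rn{x}))$ satisfies $\PP_B[\hat{G}(x,u)\neq G_B(x,u)]=1/2$. Summing over $u\in U\setminus S'$ against the pair weight $2\mu_1(\{x\})\mu_1(\{u\})=1/(2n)$ and using separation yields $\EE_B[L_{\mu,G_B,\ell}(\hat{G})]\geq s(\ell)\lvert U\setminus S'\rvert/(4n)$. Taking expectation over $\rn{x}$ (where $\EE[\lvert S'\rvert]\leq m/2$) gives $\EE_{\rn{x}}\EE_B[L_{\mu,G_B,\ell}(\hat{G})]\geq s(\ell)(1-m/(2n))/4$, so for $n\geq m$ there exists $B$ with $\EE_{\rn{x}}[L_{\mu,G_B,\ell}(\hat{G})]\geq s(\ell)/8$. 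A Markov argument as at the end of Lemma~\ref{lem:nonlearn} then produces a uniform positive lower bound on $\PP_{\rn{x}}[L_{\mu,G_B,\ell}(\hat{G})>\epsilon]$ for all small $\epsilon$, contradicting $2$-PAC learnability since $n$ can be chosen arbitrarily large.

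The main obstacle is establishing, rigorously, the conditional independence of the unseen coordinates of $B$ from the full view, which is exactly where Lemma~\ref{lem:finiteRamseylike} is used; the remaining bookkeeping (reduction to bounded loss, the $\mu_1(\{x\})=1/2$ choice balancing ``$x$ is sampled'' against ``many $U$-vertices are missed,'' and the final Markov step) is routine.
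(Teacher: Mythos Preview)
Your proposal is correct and takes a genuinely different route from the paper. Both arguments use Lemma~\ref{lem:finiteRamseylike} as the key combinatorial input guaranteeing that internal edges of $U$ reveal nothing about $B\setminus f_1(S')$, but they package the no-free-lunch step differently. The paper proceeds by reduction: assuming a $2$-PAC learner $\cA$ exists, it builds from $\cA$ a \emph{randomized classical} PAC learner $\cA'$ for the family $\{0,1\}^{V'}$ (each unary sample point is replaced by $z_*$ with probability $1/2$, and the $2$-ary labels are reconstructed using the partition structure and the function $g$), verifies that the resulting sample distribution matches a genuine $2$-ary sample under $\widehat{\mu}=\tfrac12\delta_{z_*}+\tfrac12\mu'$, derandomizes via Proposition~\ref{prop:derand}, observes that the resulting sample complexity is independent of $d=\lvert V'\rvert$, and then invokes Lemma~\ref{lem:nonlearn} to reach a contradiction as $d\to\infty$. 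You instead place the same measure $\tfrac12\delta_x+\tfrac{1}{2n}\sum_{u\in U}\delta_u$ directly, argue conditional independence of $\One[f_1(u)\in B]$ from the observed $2$-ary data for $u\notin S'$, and run the averaging/Markov argument in one shot on the $2$-PAC problem. Your route is shorter and avoids the randomized-reduction and derandomization machinery; the paper's route has the structural payoff of exhibiting an explicit conversion of a $2$-PAC learner into a classical PAC learner for the neighborhood problem, in the same spirit as the partite/non-partite reductions elsewhere in the manuscript.
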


\begin{proof}
  Since all $\Omega_i$ have a single element for $i\geq 2$, we will drop higher-order variables from
  the notation. By possibly replacing $\ell$ with the obviously bounded $2$-ary loss function
  \begin{align*}
    \widehat{\ell}(x,y,y') & \df \min\{\ell(x,y,y'),1\}
    \qquad (x\in\cE_2(\Omega), y,y'\in\{0,1\}^{S_2}),
  \end{align*}
  we may without loss of generality assume that $\ell$ is bounded (as $2$-PAC learnability with
  respect to $\ell$ implies $2$-PAC learnability with respect to $\widehat{\ell}$). Since all of our
  hypotheses $H$ are graphs, we will make a small abuse of notation and think of $H^*_m$ as a
  function $\cE_m(\Omega)\to\{0,1\}^{\binom{[m]}{2}}$ instead of
  $\cE_m(\Omega)\to\{0,1\}^{([m])_2}$. In particular, $H^*_2\colon\cE_2(\Omega)\to\{0,1\}$.

  Let $(P_i)_{i\in I}$ be the partition of $\binom{\Omega_1}{2}$ corresponding to the partition family
  $\cH$ so that $\cH = \{G_B \mid B\subseteq I\}$, where
  \begin{align*}
    G_B(x,y) & \df \One[\exists i\in B, \{x,y\}\in P_i]
    \qquad (x,y\in\Omega_1, B\subseteq I).
  \end{align*}
  Let us define a function $\chi_2\colon\Omega_1^2\to I\cup\{\bot\}$ (where $\bot\notin I$) by
  letting $\chi_2(x,x)=\bot$ for every $x\in\Omega_1$ and for $x_1,x_2\in\Omega_1$ distinct, letting
  $\chi_2(x_1,x_2)$ be the unique element of $I$ such that $\{x_1,x_2\}\in P_{\chi_2(x_1,x_2)}$, so
  we get
  \begin{align*}
    G_B(x,y) & = \One[\chi_2(x,y)\in B]
    \qquad (x,y\in\Omega_1, B\subseteq I).
  \end{align*}

  Suppose toward a contradiction that $\cH$ is $2$-PAC learnable with respect to $\ell$, let $\cA$ be
  a $2$-PAC learning algorithm for $\cH$ with respect to $\ell$, let $d\in\NN$ be a non-negative
  integer to be picked later and let $\rho(d)$ be given by Lemma~\ref{lem:finiteRamseylike}.

  Since $\VCN_2(\cH)=\infty$, there exists $z_*\in\Omega_1$ and a
  set $V\in\binom{\Omega_1}{\rho(d)}$ such that $\cH(z_*)$ shatters $V$; note that this in
  particular implies that $z_*\notin V$. We then define a function $\chi_1\colon\Omega_1\to
  I\cup\{\bot\}$ by letting $\chi_1(z_*)=\bot$ and for $x\in\Omega_1\setminus\{z_*\}$, we let
  $\chi_1(x)$ be the unique element of $I$ such that $\{z_*,x\}\in P_{\chi_1(x)}$.

  Let us enumerate $V$ as $v_1,\ldots,v_{\rho(d)}$. Define the functions $f_1\colon[\rho(d)]\to I$
  and $f_2\colon\binom{[\rho(d)]}{2}\to I$ by
  \begin{align*}
    f_1(j) & \df \chi_1(v_j) \qquad (j\in[\rho(d)]),\\
    f_2(\{j_1,j_2\}) & \df \chi_2(v_{j_1},v_{j_2})
    \qquad \left(\{j_1,j_2\}\in\binom{[\rho(d)]}{2}\right).
  \end{align*}

  By Lemma~\ref{lem:finiteRamseylike}, there exists $U\in\binom{[\rho(d)]}{d}$ such that for every
  $\{j_1,j_2\}\in\binom{U}{2}$, we have $f_2(\{j_1,j_2\})\notin f_1(U)$ or
  $f_2(\{j_1,j_2\})\in\{f_1(j_1),f_1(j_2)\}$.

  Let
  \begin{align*}
    I' & \df f_1(U), &
    \widetilde{\cH} & \df \{G_B \mid B\subseteq I'\},\\
    V' & \df \{v_j \mid j\in U\}, &
    \cH' & \df \{G_B\down \mid B\subseteq I'\},
  \end{align*}
  where $H\down\df H^*_2(z_*)\rest_{V'}$. Clearly $\lvert V'\rvert=d$. Note also that since $\cH(z_*)$
  shatters $V$ and $V'\subseteq V$, the family of functions $\cH'$ is simply the full family
  $\{0,1\}^{V'}$.

  Note also that the guarantees of Lemma~\ref{lem:finiteRamseylike} for $U$ imply that for every
  $x_1,x_2\in V'$, we have $\chi_2(x_1,x_2)\notin\chi_1(V')$ or
  $\chi_2(x_1,x_2)\in\{\chi_1(x_1),\chi_1(x_2)\}$.

  We now define a function $g\colon (V')^2\to [2]\cup\{\bot\}$ by
  \begin{align*}
    g(x_1,x_2) & \df
    \begin{dcases*}
      t, & if $\chi_2(x_1,x_2)=\chi_1(x_t)$ for $t\in[2]$,\\
      \bot, & otherwise,
    \end{dcases*}
    \qquad (x_1,x_2\in V').
  \end{align*}

  Given $F\in\cH'$, that is, $F\colon V'\to\{0,1\}$, we let
  \begin{align*}
    B_F & \df \{\chi_1(x) \mid x\in V'\land F(x)=1\}.
  \end{align*}

  Our next objective is to use the $2$-PAC learning algorithm $\cA$ for $\cH$ to produce a randomized
  learning algorithm
  \begin{align*}
    \cA'\colon\bigcup_{m\in\NN}((V')^m\times\{0,1\}^m\times [2^m]) \to \{0,1\}^{V'}
  \end{align*}
  for $\cH'$ with respect to some appropriately defined loss $\ell'$ and leverage
  Proposition~\ref{prop:derand} to get non-learnability from Lemma~\ref{lem:nonlearn} for $\cH'$.

  The idea is a simpler version of the ``departization with randomness'' technique of
  Proposition~\ref{prop:kpart2}. First, we identify $[2^m]$ with $\{0,1\}^m$ naturally so that the
  source of randomness of $\cA'$ can be interpreted as a bitstring $b\in\{0,1\}^m$ of length $m$.

  For $x\in(V')^m$, $y\in\{0,1\}^m$ and $b\in\{0,1\}^m$, let $x^b\in\cE_m(\Omega)$ and
  $y^{b,x}\in\{0,1\}^{\binom{[m]}{2}}$ be given by
  \begin{align*}
    x^b_t
    & \df
    \begin{dcases*}
      x_t, & if $b_t=1$,\\
      z_*, & if $b_t=0$,
    \end{dcases*}
    \qquad (t\in[m]),
    \\
    y^{b,x}_{\{\alpha_1,\alpha_2\}}
    & \df
    \begin{dcases*}
      0, & if $b_{\alpha_1} = b_{\alpha_2} = 0$,\\
      y_{\alpha_2}, & if $b_{\alpha_1}=0$ and $b_{\alpha_2}=1$,\\
      y_{\alpha_1}, & if $b_{\alpha_1}=1$ and $b_{\alpha_2}=0$,\\
      y_{\alpha_{g(x_{\alpha_1},x_{\alpha_2})}} & if $b_{\alpha_1}=b_{\alpha_2}=1$ and
      $g(x_{\alpha_1},x_{\alpha_2})\neq\bot$,\\
      0, & otherwise,
    \end{dcases*}
    \qquad \left(\{\alpha_1,\alpha_2\}\in\binom{[m]}{2}\right).
  \end{align*}
  (Since we are ignoring higher-order variables, $x^b$ is seen as an element of $\Omega_1^m$.)

  We then define
  \begin{align*}
    \cA'(x,y,b)
    & \df
    \cA(x^b, y^{b,x})\down
    \qquad (x\in (V')^m, y\in\{0,1\}^m, b\in\{0,1\}^m).
  \end{align*}

  Finally, we define the loss function $\ell'\colon V'\times\{0,1\}\times\{0,1\}\to\RR_{\geq 0}$ by
  \begin{align*}
    \ell'(x,y,y')
    & \df
    \frac{\ell((z_*,x), y, y') + \ell((x,z_*), y, y')}{4}
    \qquad (x\in V', y,y'\in\{0,1\}),
  \end{align*}
  where on the right-hand side, $(z_*,x)$ is interpreted as the point $w\in\cE_2(\Omega)$ given by
  $w_{\{1\}}=z_*$ and $w_{\{2\}}=x$ and similarly for $(x,z_*)$; and $y$ and $y'$ are interpreted as
  the functions in $\{0,1\}^{S_2}$ that are constant equal to $y$ and $y'$, respectively. Note that
  the fact that $\ell$ is separated implies that $\ell'$ is also separated and we clearly have
  $\lVert\ell'\rVert_\infty\leq\lVert\ell\rVert_\infty/2$.
  
  Similarly to the proof of Proposition~\ref{prop:kpart2}, our objective now is to show that when
  $\cA'$ is attempting to learn some $F\in\cH'$, that is, some $F\colon V'\to\{0,1\}$ under some
  $\mu'\in\Pr(V')$, it is essentially simulating how $\cA$ would learn some $G_{B_F}\in\cH$ under some
  $\widehat{\mu}$ for some suitably defined $B_F\subseteq I$ and $\widehat{\mu}$. This means that we
  will want these objects to have the following properties:
  \begin{enumerate}[label={\Roman*.}]
  \item The total losses bound each other: for every $F,H\in\cH'$, we have
    \begin{align}\label{eq:pfloss}
      L_{\widehat{\mu},G_{B_F},\ell}(G_{B_H}) & \geq L_{\mu',F,\ell'}(H).
    \end{align}
  \item $G_{B_F}$ is always realizable: for every $F\in\cH'$, we have
    \begin{align}\label{eq:pfrealizable}
      L_{\widehat{\mu},G_{B_F},\ell}(G_{B_F}) & = 0.
    \end{align}
  \item The distribution of samples is correct: for $F\in\cH'$, if $\rn{x}\sim(\mu')^m$ and $\rn{b}$ is
    picked uniformly at random in $\{0,1\}^m$, independently from $\rn{x}$ and
    $\rn{\widehat{x}}\sim\widehat{\mu}^m$, then
    \begin{align}\label{eq:pfdist}
      (\rn{x}^{\rn{b}}, F^*_m(\rn{x})^{\rn{b},\rn{x}})
      & \sim
      (\rn{\widehat{x}}, (G_{B_F})^*_m(\rn{\widehat{x}})),
    \end{align}
    where $F^*_m(x)\df (F(\rn{x}_t))_{t=1}^m\in\{0,1\}^m$.
  \end{enumerate}

  Let $\widehat{\mu}\in\Pr(\Omega)$ be the measure on supported on the finite set $V'\cup\{z_*\}$
  that puts mass $1/2$ on $z_*$ and mass $\mu'(\{x\})/2$ on each of point $x$ of $V'$. It is
  straightforward to check that if $\rn{x}\sim(\mu')^m$, $\rn{b}$ is picked uniformly at random in
  $\{0,1\}^m$, independently from $\rn{x}$ and $\rn{\widehat{x}}\sim\widehat{\mu}^m$, then
  $\rn{x}^{\rn{b}}\sim\rn{\widehat{x}}$.

  To upgrade this to~\eqref{eq:pfdist}, it suffices to show that for every $x\in (V')^m$ and every
  $b\in\{0,1\}^m$, we have
  \begin{align}\label{eq:pfdistFH}
    F^*_m(x)^{b,x} & = (G_{B_F})^*_m(x^b).
  \end{align}

  To show this, note that for $\{\alpha_1,\alpha_2\}\in\binom{[m]}{2}$, we have
  \begin{align}
    F^*_m(x)^{b,x}_{\{\alpha_1,\alpha_2\}}
    & =
    \begin{dcases*}
      0, & if $b_{\alpha_1} = b_{\alpha_2} = 0$,\\
      F(x_{\alpha_2}), & if $b_{\alpha_1}=0$ and $b_{\alpha_2}=1$,\\
      F(x_{\alpha_1}), & if $b_{\alpha_1}=1$ and $b_{\alpha_2}=0$,\\
      F(x_{\alpha_{g(x_{\alpha_1},x_{\alpha_2})}}),
      & if $b_{\alpha_1}=b_{\alpha_2}=1$ and $g(x_{\alpha_1},x_{\alpha_2})\neq\bot$,\\
      0, & otherwise.
    \end{dcases*}
    \\
    (G_{B_F})^*_m(x^b)_{\{\alpha_1,\alpha_2\}}
    & =
    \One[\{x^b_{\alpha_1},x^b_{\alpha_2}\}\in B_F]
    =
    \begin{dcases*}
      \One[\chi_2(z_*,z_*)\in B_F], & if $b_{\alpha_1} = b_{\alpha_2}$,\\
      \One[\chi_2(z_*,x_{\alpha_2})\in B_F], & if $b_{\alpha_1} = 0$ and $b_{\alpha_2} = 1$,\\
      \One[\chi_2(x_{\alpha_1},z_*)\in B_F], & if $b_{\alpha_1} = 1$ and $b_{\alpha_2} = 0$,\\
      \One[\chi_2(x_{\alpha_1},x_{\alpha_2})\in B_F], & if $b_{\alpha_1}=b_{\alpha_2}=1$.
    \end{dcases*}
    \label{eq:GBF}
  \end{align}
  Now, note that all points $x_{\alpha_t}$ above are in $V'$ and for $w_1,w_2\in V'$, we have
  \begin{align*}
    \One[\chi_2(z_*,w_1)\in B_F] & = F(w_1),
    \\
    \One[\chi_2(w_1,w_2)\in B_F]
    & =
    \begin{dcases*}
      \One[\chi_1(w_{g(w_1,w_2)})\in B_F], & if $g(w_1,w_2)\neq\bot$,\\
      0, & otherwise,
    \end{dcases*}
    \\
    & =
    \begin{dcases*}
      F(w_{g(w_1,w_2)}), &  if $g(w_1,w_2)\neq\bot$,\\
      0, & otherwise,
    \end{dcases*}
  \end{align*}
  so~\eqref{eq:GBF} becomes
  \begin{align*}
    (G_{B_F})^*_m(x^b)_\alpha
    & =
    \begin{dcases*}
      0, & if $b_{\alpha_1} = b_{\alpha_2}$,\\
      F(x_{\alpha_2}), & if $b_{\alpha_1} = 0$ and $b_{\alpha_2} = 1$,\\
      F(x_{\alpha_1}), & if $b_{\alpha_1} = 1$ and $b_{\alpha_2} = 0$,\\
      F(x_{\alpha_{g(x_{\alpha_1},x_{\alpha_2})}}),
      & if $b_{\alpha_1}=b_{\alpha_2}=1$ and $g(x_{\alpha_1},x_{\alpha_2})\neq\bot$,\\
      0, & otherwise,
    \end{dcases*}
  \end{align*}
  from which~\eqref{eq:pfdistFH} follows and hence~\eqref{eq:pfdist} holds.

  \medskip

  Realizability of~\eqref{eq:pfrealizable} follows since $\ell$ is separated.

  \medskip

  To see that~\eqref{eq:pfloss} holds, note that
  \begin{align*}
    L_{\widehat{\mu},G_{B_F},\ell}(G_{B_H})
    & \geq
    \begin{multlined}[t]
      \frac{1}{4}\cdot\sum_{x\in V'}
      \mu'(\{x\})\cdot
      \bigl(\ell((z_*,x), (G_{B_H})^*_2(z_*,x), (G_{B_F})^*_2(z_*,x))
      \\
      + \ell((x,z_*), (G_{B_H})^*_2(x,z_*), (G_{B_F})^*_2(x,z_*))\bigr)
    \end{multlined}
    \\
    & =
    \begin{multlined}[t]
      \frac{1}{4}\cdot\EE_{\rn{x}\sim\mu'}[
        \ell((z_*,\rn{x}), (G_{B_H})^*_2(z_*,\rn{x}), (G_{B_F})^*_2(z_*,\rn{x}))
        \\
        + \ell((\rn{x},z_*), (G_{B_H})^*_2(\rn{x},z_*), (G_{B_F})^*_2(\rn{x},z_*))
      ]
    \end{multlined}
    \\
    & =
    L_{\mu',F,\ell'}(H).
  \end{align*}

  \medskip

  We can finally show that $\cA'$ is a randomized $k$-PAC learner for $\cH'$ with
  \begin{align*}
    m^{\PACr}_{\cH',\ell',\cA'}(\epsilon,\delta)
    & =
    m^{\PAC}_{\cH,\ell,\cA}\left(\frac{\epsilon}{2}, \widetilde{\delta}_\ell(\epsilon,\delta)\right),
  \end{align*}
  where
  \begin{align*}
    \widetilde{\delta}_\ell(\epsilon,\delta)
    & \df
    \min\left\{\frac{\epsilon\delta}{2\lVert\ell'\rVert_\infty}, \frac{1}{2}\right\}.
  \end{align*}

  Let $F\colon V'\to\{0,1\}$ be realizable with respect to $\ell'$, let $\mu'\in\Pr(V')$, let
  $\epsilon,\delta\in(0,1)$ and let $m\geq m^{\PACr}_{\cH',\ell',\cA'}(\epsilon,\delta)$ be an
  integer.

  We also let $\rn{x}\sim(\mu')^m$ and $\rn{b}$ be picked uniformly at random in $\{0,1\}^m$,
  independently from $\rn{x}$ and $\rn{\widehat{x}}\sim\widehat{\mu}^m$.

  Note now that
  \begin{align*}
    \PP_{\rn{x},\rn{b}}\Bigl[
      L_{\mu',F,\ell'}\bigl(
      \cA'(\rn{x}, F^*_m(\rn{x}), \rn{b})
      \bigr)
      \leq \frac{\epsilon}{2}
      \Bigr]
    & \geq
    \PP_{\rn{x},\rn{b}}\Bigl[
      L_{\widehat{\mu},G_{B_F},\ell}\bigl(
      \cA(\rn{x}^{\rn{b}}, F^*_m(\rn{x})^{\rn{b},\rn{x}})
      \bigr)
      \leq \frac{\epsilon}{2}
      \Bigr]
    \\
    & =
    \PP_{\rn{\widehat{x}}}\Bigl[
      L_{\widehat{\mu},G_{B_F},\ell}\bigl(
      \cA(\rn{\widehat{x}}, (G_{B_F})^*_m(\rn{\widehat{x}}))
      \bigr)
      \leq \frac{\epsilon}{2}
      \Bigr]
    \\
    & \geq
    1 - \widetilde{\delta}_\ell(\epsilon,\delta),
  \end{align*}
  where the first inequality follows from the definition of $\cA'$ and~\eqref{eq:pfloss}, the equality
  follows from~\eqref{eq:pfdist} and the second inequality follows since $\cA$ is a $2$-PAC learner
  for $\cH$ (and~\eqref{eq:pfrealizable}).

  The above can be rewritten as
  \begin{align*}
    \widetilde{\delta}_\ell(\epsilon,\delta)
    & \geq
    \PP_{\rn{x},\rn{b}}\Bigl[
      L_{\mu',F,\ell'}\bigl(
      \cA'(\rn{x}, F^*_m(\rn{x})^{\rn{b},\rn{x}}, \rn{b})
      \bigr)
      > \frac{\epsilon}{2}
      \Bigr]
    \\
    & =
    \EE_{\rn{x}}\biggl[
      \PP_{\rn{b}}\Bigl[
        L_{\mu',F,\ell'}\bigl(
        \cA'(\rn{x}, F^*_m(\rn{x})^{\rn{b},\rn{x}}, \rn{b})
        \bigr)
        > \frac{\epsilon}{2}
        \Bigr]
      \biggr],
  \end{align*}
  which by Markov's Inequality implies
  \begin{align}\label{eq:pfMarkov}
    \PP_{\rn{x}}\biggl[
      \PP_{\rn{b}}\Bigl[
        L_{\mu',F,\ell'}\bigl(
        \cA'(\rn{x}, F^*_m(\rn{x})^{\rn{b},\rn{x}}, \rn{b})
        \bigr)
        > \frac{\epsilon}{2}
        \Bigr]
      > \frac{\epsilon}{2\lVert\ell'\rVert_\infty}
      \biggr]
    & \leq
    \frac{2\cdot\lVert\ell'\rVert_\infty\cdot\widetilde{\delta}_\ell(\epsilon,\delta)}{\epsilon}
    \leq
    \delta.
  \end{align}

  Since the total loss is bounded by $\lVert\ell'\rVert_\infty$, we have the following implication:
  \begin{align*}
    & \!\!\!\!\!\!
    \PP_{\rn{b}}\Bigl[
      L_{\mu',F,\ell'}\bigl(
      \cA'(\rn{x}, F^*_m(\rn{x})^{\rn{b},\rn{x}}, \rn{b})
      \bigr)
      > \frac{\epsilon}{2}
      \Bigr]
    \leq\frac{\epsilon}{2\lVert\ell'\rVert_\infty}
    \\
    & \implies
    \EE_{\rn{b}}\Bigl[
      L_{\mu',F,\ell'}\bigl(
      \cA'(\rn{x}, F^*_m(\rn{x})^{\rn{b},\rn{x}}, \rn{b})
      \bigr)
      \Bigr]
    \leq
    \frac{\epsilon}{2} + \frac{\epsilon}{2\lVert\ell'\rVert_\infty}\cdot\lVert\ell'\rVert_\infty
    =
    \epsilon.
  \end{align*}

  By the contra-positive of the above along with~\eqref{eq:pfMarkov}, we get
  \begin{align*}
    \PP_{\rn{x}}\biggl[
      \EE_{\rn{b}}\Bigl[
        L_{\mu',F,\ell'}\bigl(
        \cA'(\rn{x},F^*_m(\rn{x})^{\rn{b},\rn{x}}, \rn{b})
        \bigr)
        \bigr]
      > \epsilon
      \biggr]
    & \leq
    \delta.
  \end{align*}
  Therefore, $\cA'$ is a randomized PAC learner for $\cH'$.

  \medskip

  By Proposition~\ref{prop:derand}, there exists a non-randomized PAC learner $\cA''$ for $\cH'$ with
  \begin{align*}
    m^{\PAC}_{\cH',\ell',\cA''}(\epsilon,\delta)
    & =
    M(\epsilon,\delta)
    +
    \ceil{
      \frac{2\cdot\lVert\ell'\rVert_\infty^2}{\xi(\epsilon,\delta)^2}\cdot
      \ln\left(
      \frac{2\cdot R_{\cA'}(M(\epsilon,\delta))}{\xi(\epsilon,\delta)}
      \right)
    },
  \end{align*}
  where $M(\epsilon,\delta)\df m^{\PACr}_{\cH',\ell',\cA'}(\xi(\epsilon,\delta),\xi(\epsilon,\delta))$
  and
  \begin{align*}
    \xi(\epsilon,\delta)
    & \df
    \min\left\{\ceil{\frac{2}{\epsilon}}^{-1},\ceil{\frac{2}{\delta}}^{-1}\right\}.
  \end{align*}
  Note that since
  \begin{align*}
    m^{\PACr}_{\cH',\ell',\cA'}(\epsilon,\delta)
    & =
    m^{\PAC}_{\cH,\ell,\cA}\left(\frac{\epsilon}{2}, \widetilde{\delta}_\ell(\epsilon,\delta)\right),
  \end{align*}
  where
  \begin{align*}
    \widetilde{\delta}_\ell(\epsilon,\delta)
    & \df
    \min\left\{\frac{\epsilon\delta}{2\lVert\ell\rVert_\infty}, \frac{1}{2}\right\},
  \end{align*}
  it follows that the value of $m^{\PAC}_{\cH',\ell',\cA''}(\epsilon,\delta)$ does not depend on $d$,
  i.e., even though both the hypothesis class $\cH'$ and the algorithm depend on $d$, the PAC
  learning sample size $m^{\PAC}_{\cH',\ell',\cA''}(\epsilon,\delta)$ of $\cA''$ does not depend on $d$.

  On the other hand, since $s(\ell')\geq s(\ell)/2$ and
  $s(\ell)/2\leq\lVert\ell'\rVert_\infty\leq\lVert\ell\rVert_\infty/2$ and recalling that $\lvert
  V'\rvert=d$ and $\cH'$ shatters $V'$, Lemma~\ref{lem:nonlearn} implies that if $\epsilon <
  s(\ell)/2$, then
  \begin{align*}
    \frac{1}{\lVert\ell\rVert_\infty/2 - \epsilon}
    \cdot\left(
    \frac{s(\ell)}{4}\left(1 - \frac{\ceil{m^{\PAC}_{\cH,\ell',\cA''}(\epsilon,\delta)}}{d}\right)
    - \epsilon\right)
    & \leq
    \delta,
  \end{align*}
  from which we conclude that
  \begin{align*}
    \ceil{m^{\PAC}_{\cH',\ell',\cA''}(\epsilon,\delta)}
    & \geq
    d
    -
    \frac{4d}{s(\ell)}
    \left(\delta\cdot\left(\frac{\lVert\ell\rVert_\infty}{2} - \epsilon\right) + \epsilon\right).
  \end{align*}
  By picking $\delta,\epsilon\in(0,1)$ small enough ($\delta <
  s(\ell)/(8\cdot\lVert\ell\rVert_\infty)$ and $\epsilon < s(\ell)/16$ suffices), the right-hand
  side of the above is at least $d/2$. Letting $d$ be large enough then yields a contradiction, so
  $\cH$ is not $2$-PAC learnable with respect to $\ell$.
\end{proof}

\section{Final remarks and open problems}
\label{sec:final}

In this paper we introduced a theory of high-arity PAC learning that is specially motivated by
learning graphs, hypergraphs and structures in finite relational languages, but extends to learning
the underlying combinatorics responsible for generating a local exchangeable distribution. In this
framework, our samples are not assumed to be i.i.d., but rather only local and exchangeable, which
allows us to take advantage of structured correlation to improve learning.

In high-arity, two different frameworks naturally arise: the non-partite case corresponding to usual
structures and jointly exchangeable distributions and the partite case corresponding to partite
structures and separately exchangeable distributions. Recall that we use the term $k$-PAC rather
than high-arity PAC when we want to emphasize the arity $k$. We showed that for partite structures
(or, more generally, $k$-partite hypothesis classes of rank at most $1$ with finite codomain), all
versions of $k$-PAC learnability, agnostic or not, with randomness or not, are equivalent to finite
$\VCN_k$-dimension and to the uniform convergence property. For the non-partite case, finite
$\VCN_k$-dimension is equivalent to agnostic $k$-PAC learnability (with randomness or not) and
implies $k$-PAC learnability. Indeed, much more is true and the interplay between the partite and
non-partite cases plays an illuminating role that has no analogue in the classical PAC setting.

Confirming our intuition that we can take advantage of structured correlation to improve learning,
there are hypothesis classes (see Examples~\ref{ex:matching} and~\ref{ex:boundeddegree}) that have
finite $\VCN_k$-dimension, but infinite Natarajan dimension; in other words, for these classes,
applying classic PAC algorithms on an i.i.d.\ subsample within a local exchangeable sample does not
PAC learn the class, while there is a $k$-ary algorithm that does $k$-PAC learn the class. This
shows that high-arity PAC learning greatly increases the scope of statistical learnability.

\notoc\subsection{Agnostic versus non-agnostic in the non-partite}

Two natural questions that we left open concern the characterization of (non-agnostic) high-arity
PAC learnability in the non-partite case and the characterization of all forms of $k$-PAC
learnability when the hypothesis classes (partite or not) are not assumed to have rank at most
$1$. For the latter question, the results of Section~\ref{sec:highorder} show that $\VCN_k$-dimension
does not characterize (partite or not) $k$-PAC learnability without the rank assumption, so it is
natural to ask for a suitable generalization of $\VCN_k$-dimension. For the former question, we
provided some evidence in Section~\ref{sec:nonlearnnonpart} that the $\VCN_k$-dimension could still
characterize non-partite $k$-PAC learnability in rank at most $1$; however, even though the results
of the aforementioned section most likely generalize to partition families of $k$-hypergraphs, they
still seem far from covering arbitrary families of graphs.

Another assumption that was crucial for the characterization of $k$-PAC learnability (even though
not necessary for all implications) was the finiteness of the codomain $\Lambda$. It is clear that
$\VCN_k$-dimension does not characterize $k$-PAC learnability when $\Lambda$ is infinite since when
$k=1$, we get the classic PAC learning theory and $\VCN_1$-dimension amounts to the Natarajan
dimension, which is known not to characterize PAC learnability when $\Lambda$ is
infinite~\cite{BCDMY22}. Instead, classic PAC learning when $\Lambda$ is potentially infinite is
characterized by the Daniely--Shalev-Shwartz ($\DS$) dimension~\cite{DS14,BCDMY22}, so it is natural to
ask if there is an appropriate high-arity version of it, say $\DS_k$-dimension, that characterizes
$k$-PAC learning when $\Lambda$ is potentially infinite.

\notoc\subsection{Asymptotics, computability and complexity}

In this paper, we focused on learnable versus non-learnable problem of high-arity PAC learning,
without attempting to optimize sampling efficiency. However, it is natural to ask how small can the
sample size be in terms of the $\VCN_k$-dimension, i.e., can we improve the dependency of
$m^{\PAC}_{\cH,\ell,\cA}$ in terms of $\VCN_k(\cH)$?

Another aspect that we only briefly mentioned is whether there is an actual algorithm that does the
learning (as opposed to a general function). Naturally, this leads to a different notion of learning
that can be called computable high-arity PAC learning (see~\cite{AABLU20} for a version of
computable PAC learning in the arity $1$ setting) and it is natural to ask how much of the
computable PAC theory extends to high-arity.

Finally, in terms of applications, it is also desirable to understand the complexity of the
high-arity learning algorithms for specific learning problems as it is done in classic PAC theory.

\notoc\subsection{Local to global}
\label{subsec:localtoglobal}

The definition of the $\VCN_k$-dimension (along with the theorems of Section~\ref{sec:mainthms})
shows that a family of bipartite graphs is $2$-PAC learnable if and only if the neighborhoods all
vertices $x$ are classically PAC learnable, uniformly in $x$. More generally, $k$-PAC learnability
of a $k$-partite hypothesis class $\cH\subseteq\cF_k(\Omega,\Lambda)$ reduces to asking whether the
family of functions $\cH(x)$ obtained by fixing all but one vertex is classically PAC learnable,
uniformly in $x$. This is very reminiscent of the ``local to global'' phenomenon of high dimensional
expanders (see~\cite{Lub18} for a survey) and it is natural to ask what happens to the intermediate
versions of $\VCN_k$-dimension. Namely, for a $k$-partite hypothesis class
$\cH\subseteq\cF_k(\Omega,\Lambda)$, if we instead compute the Natarajan (or Vapnik--Chervonenkis in
the hypergraph case) of the family of functions obtained by fixing $\ell$ vertices instead of $k-1$
(or more generally, the projection onto $\cE_\ell(\Omega)$), then what learnability notion is
captured by finiteness of this combinatorial dimension? A potential candidate is the partial
information framework of Section~\ref{subsec:partial} below.

\notoc\subsection{Graph property testing}

In Section~\ref{subsubsec:proptest}, we saw that the framework of graph property testing can be
interpreted as $2$-PAC learning, except that we only want to learn the graph up to approximate
isomorphism. From this perspective, graph property testing theory says that all hypothesis classes
are learnable up to approximate isomorphism. This contrast raises one natural question: is it
possible to have a framework in-between property testing and PAC learning in which part of the
information is completely learned and another part is learned up to (approximate) isomorphism?

\notoc\subsection{Multiplicative bounds for asymptotic total loss}
\label{subsec:nearestneighbor}

In classic learning theory, in the agnostic setting, the function that achieves the least loss is
the Bayes predictor and under mild assumptions, the nearest neighbor algorithm achieves an
asymptotic total loss that is at most twice the Bayes loss~\cite{CH67}. All these can be naturally
extended to the high-arity setting by simply ignoring most of the input and passing to a subset that
is guaranteed to be i.i.d., but it is natural to ask if one can take advantage of structured
correlation of high-arity and provide a better asymptotic multiplicative bound. See
Appendix~\ref{sec:Bayes} for more details.

\notoc\subsection{Partial information}
\label{subsec:partial}

Returning to the comparison of $k$-PAC learning with classic PAC learning, we saw that there are
$k$-ary hypothesis classes that are $k$-PAC learnable but not PAC learnable, i.e., we can learn them
if we have access to all $k$-tuples of the sample but we cannot learn them if we only have access to
disjoint $k$-tuples. Thinking of classic PAC learning as $1$-PAC learning, it is natural to ask if
there is an intermediate framework. Abstractly, can we make sense of what $\ell$-PAC learning is for
$k$-ary hypothesis classes for $2\leq\ell\leq k-1$? Or, more application oriented, what if we do not
have access to all $k$-tuples, only some of them (with some non-trivial intersection so as to not
reduce to classic PAC learning)?

Inspired by the theory of hypergraph quasirandomness~\cite{CG90,LM15,Tow17,ACHPS18}, where squashed
octahedra and different Gowers' norms control the behavior of densities of hypergraphs with some
restrictions on the intersection of their edges, we define the following partial information
$k$-partite PAC learning framework (our definition is more natural in the partite setting):

Let $\cC$ be an antichain of subsets of $[k]$ such that $\cC\neq\varnothing$ and $\cC\neq\{[k]\}$
and let $m\in\NN$. The \emph{$\cC$-octahedron of order $m$} is the $k$-partite $k$-hypergraph
$O^\cC_m$ with partition $(V^{\cC,m}_1,\ldots,V^{\cC,m}_k)$ defined as follows: let
\begin{align*}
  V^{\cC,m}_i & \df \{(i,\sigma) \mid \sigma\in [m]^{\cC-i}\}
  \qquad (i\in[k]),
\end{align*}
where
\begin{align*}
  \cC-i & \df \{A\in\cC \mid i\notin A\}
  \qquad (i\in[k]),
\end{align*}
and let
\begin{align*}
  E(O^\cC_m) & \df \{e_\sigma \mid \sigma\in[m]^\cC\}
\end{align*}
be the edge set of $O^\cC_m$, where
\begin{align*}
  e_\sigma & \df \{(i,\sigma\rest_{\cC-i}) \mid i\in[k]\}
  \qquad (\sigma\in[m]^\cC).
\end{align*}
We also let $E^{O^\cC_m}$ be the set of all $\alpha\in\prod_{i=1}^k V^{\cC,m}_i$ such that
$\im(\alpha)\in E(O^{\cC}_m)$.

In the $\cC$-PAC learning setting for a $k$-partite hypothesis class $\cH$, instead of receiving an
$([m],\ldots,[m])$-sample $(\rn{x},F^*_m(\rn{x}))$ for some $F\in\cF_k(\Omega,\Lambda)$ realizable, we
instead receive a \emph{$\cC$-sample of order $m$}: a point of the form $(\rn{x},\rn{y})$, where
$\rn{x}\sim\mu^{V^{\cC,m}_1,\ldots,V^{\cC,m}_k}$ and $\rn{y}$ is a random element of
$\Lambda^{E^{O^\cC_m}}$ given by
\begin{align*}
  \rn{y}_\alpha & \df F^*_{V^{\cC,m}_1,\ldots,V^{\cC,m}_k}(\rn{x})_\alpha
  \qquad (\alpha\in E^{O^{\cC_m}}).
\end{align*}
Note that our $\cC$-sample of order $m$ only contains information about the $k$-tuples that are
edges of $O^\cC_m$. The agnostic setup is defined analogously.

The particular case when the antichain $\cC$ is $S_k$-symmetric deserves special attention:

First, note that for $\cC_0=\{\varnothing\}$, the $\cC_0$-octahedron $O^{\cC_0}_m$ consists of a
$k$-partite matching of size $m$, so $\cC_0$-PAC learning coincides with classic PAC learning
thinking of $\cH$ as a family of functions $\cE_1(\Omega)\to\Lambda$.

On the other hand, for $\cC_{k-1}=\binom{[k]}{k-1}$, the $\cC_{k-1}$-octahedron $O^{\cC_0}_m$ is
simply a complete $k$-partite hypergraph with $m$ vertices on each part, so we recover the usual
$k$-PAC framework.

With the intuition of $\cC_0$ and $\cC_{k-1}$ above, if $\cC_\ell\df\binom{[k]}{\ell}$, then one can
expect that $\cC_\ell$-PAC learning is what it means to $(\ell+1)$-PAC learn a $k$-partite class.

Of course, $\cC$ need not be $S_k$-symmetric and these play a special role in the next subsections.

Let us also mention that one should typically assume that $\bigcap\cC=\varnothing$ as otherwise for
every $i\in\bigcap\cC$, the set $V^{\cC,m}_i$ will have only one element as
$\cC-i=\varnothing$. Without the assumption that $\bigcap\cC=\varnothing$, one should take the same
precautions of non-local PAC learning explained in Section~\ref{subsec:nonlocal} below.

Finally, given the framework above, it is natural to ask if there is a combinatorial dimension whose
finiteness characterizes $\cC$-PAC learnability. A potential candidate for the families $\cC_\ell$
is the intermediate version of $\VCN_k$-dimension described in Section~\ref{subsec:localtoglobal}
above.

\notoc\subsection{Flattening}
\label{subsec:flattening}

As mentioned before, without the assumption of rank at most $1$, our characterization of $k$-PAC
learnability no longer works. It is then natural to ask if it is possible to ``flatten'' a $k$-ary
hypothesis class $\cH$ into a $k'$-ary hypothesis class $\cH^{\kflat}$ so that $k$-PAC learnability
of $\cH$ is equivalent to some form of $k'$-PAC learnability of $\cH^{\kflat}$ but
$\rk(\cH^{\kflat})\leq 1$. It turns out that the framework of partial information of
Section~\ref{subsec:partial} gives us a natural flattening operation in the partite case.

Namely, for a Borel $k$-partite template $\Omega$, we define the \emph{flattening} of $\Omega$ as
the Borel $(2^k-1)$-partite template $\Omega^{\kflat}$ given by
\begin{align*}
  \Omega^{\kflat}_{\{t\}} & \df \Omega_{C_t}\qquad (t\in[2^k-1]),
\end{align*}
where $C_1,\ldots,C_{2^k-1}$ is a fixed enumeration of $r(k)$ and letting $\Omega^{\kflat}_B$ be a
singleton whenever $\lvert B\rvert\geq 2$. For $\mu\in\Pr(\Omega)$, the \emph{flattening} of $\mu$
is $\mu^{\kflat}\in\Pr(\Omega^{\kflat})$ given by
\begin{align*}
  \mu^{\kflat}_{\{t\}} & \df \mu_{C_t}
  \qquad (t\in [2^k-1]).
\end{align*}

Finally, we let $\cC^{\kflat}$ be the antichain of subsets of $[2^k-1]$ given by
$\cC^{\kflat}\df\{A_i \mid i\in[k]\}$, where
\begin{align*}
  A_i
  & \df
  \{t\in[2^k-1] \mid i\notin C_t\}.
\end{align*}
Note that for every $t\in[2^k-1]$, we have
\begin{align*}
  \cC^{\kflat}-t
  & =
  \{A_i \mid i\in C_t\}.
\end{align*}

We then note that for every $m\in\NN$, there is a bijection $b_m\colon V(O^{\cC^{\kflat}}_m)\to
r_k(m)$, where for every $t\in [2^k-1]$ and every $\sigma\in\cC^{\kflat}-t$, $b_m(t,\sigma)\in
r_k(m)$ is the function $C_t\to[m]$ given by
\begin{align*}
  b_m(t,\sigma)(i)
  & \df
  \sigma_{A_i}
  \qquad (i\in C_t).
\end{align*}

In turn, the bijections $b_m$ contra-variantly define Borel-isomorphisms
\begin{align*}
  b_m^*\colon\cE_m(\Omega)\to\cE_{V^{\cC^{\kflat},m}_1,\ldots,V^{\cC^{\kflat},m}_k}(\Omega^{\kflat})
\end{align*}
given by
\begin{align*}
  b_m^*(x)_B & \df x_{b_m(B)}\qquad (B\in E(O^{\cC^{\kflat}}_m)).
\end{align*}

We also have bijections $B_m\colon E^{O^{\cC^{\kflat}}_m}\to [m]^k$ given by
\begin{align*}
  B_m(\alpha)_i & \df \sigma^\alpha_{A_i}\qquad (\alpha\in E^{O^{\cC^{\kflat}}_m}, i\in[k]),
\end{align*}
where $\sigma^\alpha\in[m]^{\cC^{\kflat}}$ is the unique element such that
$\im(\alpha)=e_{\sigma^\alpha}$. In turn, these also contra-variantly define Borel-isomorphisms
$B_m^*\colon\Lambda^{[m]^k}\to\Lambda^{E^{O^{\cC^{\kflat}}_m}}$ by
\begin{align*}
  B_m^*(y)_\alpha & \df y_{B_m(\alpha)} \qquad (\alpha\in E^{O^{\cC^{\kflat}}_m}).
\end{align*}

Note now that when $m=1$, each $V^{\cC^{\kflat},m}_i$ has a single point and
$E^{O^{\cC^{\kflat}}_1}$ also has a single point, so for a $k$-ary hypothesis class
$\cH\subseteq\cF_k(\Omega,\Lambda)$, we can define
\begin{align*}
  \cH^{\kflat} & \df \{H^{\kflat} \mid H\in\cH\},
\end{align*}
where
\begin{align*}
  H^{\kflat} & \df H\comp (b_1^*)^{-1} \qquad (H\in\cH)
\end{align*}
using the natural identification. Similarly, for a loss function $\ell$, we can define
\begin{align*}
  \ell^{\kflat}(x,y,y')
  & \df
  \ell((b_1^*)^{-1}(x), (B_1^*)^{-1}(y), (B_1^*)^{-1}(y'))
  \qquad (x\in\cE_1(\Omega^{\kflat}), y,y'\in\Lambda)
\end{align*}
and for an agnostic loss function $\ell$, we can define
\begin{align*}
  \ell^{\kflat}(H,x,y)
  & \df
  \ell^{\kflat}(H\comp b_1^*, (b_1^*)^{-1}(x), (B_1^*)^{-1}(y)).
\end{align*}

It is straightforward to check that $b_m^*$ is a measure-isomorphism between the measures $\mu^m$
and $(\mu^{\kflat})^{V^{\cC^{\kflat},M}_1,\ldots,V^{\cC^{\kflat},m}_k}$, which in particular implies
that (agnostic, respectively) $k$-PAC learnability of $\cH$ with respect to $\ell$ is equivalent to
(agnostic, respectively) $\cC^{\kflat}$-PAC learnability of $\cH^{\kflat}$ with respect to
$\ell^{\kflat}$ (in both directions the argument is similar to that of Proposition~\ref{prop:kpart}).

This means that a full characterization of $\cC$-PAC learning on rank at most $1$ must in particular
cover a full characterization $k$-PAC learning without the rank assumption.

\notoc\subsection{Non-locality}
\label{subsec:nonlocal}

Throughout this article, the assumption that our exchangeable distributions are local (i.e.,
marginals of our samples in disjoint sets are independent) was crucial. However, the theory of
exchangeability also covers non-local distributions. The difference is the presence of a global
variable $x_\varnothing$ indexed by the empty set (in the partite setting, $\varnothing$ is
interpreted as the empty function), see how non-local distributions are handled in
Appendix~\ref{sec:agexch}.

It is natural to ask for a $k$-PAC learning framework in the non-local case, but one has to be
careful: if we simply give our algorithm an $[m]$-sample, the algorithm only gets information about
the behavior of the hypothesis for a single value of the global variable $x_\varnothing$. Instead,
the algorithm should receive $m'$ i.i.d.\ $[m]$-samples so as to guarantee that multiple values of
the global variable $x_\varnothing$ can be seen.

Similarly to Section~\ref{subsec:flattening}, there is a natural equivalence between partite
non-local $k$-PAC learning and local $\cC_{k-1}^{\knonlocal}$-PAC learning, where
$\cC_{k-1}^{\knonlocal}$ is the antichain of subsets of $[k+1]$ given by
\begin{align*}
  \cC_{k-1}^{\knonlocal} & \df \{[k+1]\setminus\{i\} \mid i\in[k]\}.
\end{align*}
However, the same care of having $m'$ i.i.d.\ $\cC_{k-1}^{\knonlocal}$-samples of order $m$ must be taken
as $\bigcap\cC_{k-1}^{\knonlocal}=\{k+1\}$.

More generally, there is a natural equivalence between non-local $\cC$-PAC learning and local
$\cC^{\knonlocal}$-PAC learning, where $\cC^{\knonlocal}$ is the antichain of subsets of $[k+1]$
given by
\begin{align*}
  \cC^{\knonlocal} & \df \{A\cup\{k+1\} \mid A\in\cC\}
\end{align*}
and $\bigcap\cC^{\knonlocal} = \{k+1\}\cup\bigcap\cC$.

\appendix

\section{Exchangeability in the agnostic setting}
\label{sec:agexch}

In this section, we show that the setup for agnostic $k$-PAC learning is correct, namely, we show
that all distributions with the required locality and exchangeability properties can be generated in
the way that they are done in Definitions~\ref{def:agkPAC} and~\ref{def:partagkPAC}. These results
will all be reductions to lemmas from~\cite{Kal05}, but since the notation in this book is different
from ours, we restate all these results in our language.

\notoc\subsection{Exchangeability}
\label{subsec:exch}

First, since we will have to handle non-local distributions, we need a slightly more general
notation. A \emph{non-local Borel template} is a sequence $\Omega=(\Omega_i)_{i\in\NN}$ of non-empty
Borel spaces (now we also have a space $\Omega_0$). The set of \emph{non-local probability
  templates} on $\Omega$ is the set $\Pr(\Omega)$ of sequences $(\mu_i)_{i\in\NN}$ such that
$\mu_i\in\Pr(\Omega_i)$ ($i\in\NN$).

For a (finite or) countable set $V$, we let
\begin{align*}
  \cE_V(\Omega) & \df \prod_{\substack{A\subseteq V\\ A\text{ finite}}} \Omega_{\lvert A\rvert}, &
  \mu^V & \df \bigotimes_{\substack{A\subseteq V\\ A\text{ finite}}} \mu_{\lvert A\rvert},
\end{align*}
and we use the shorthands $\cE_m(\Omega)\df\cE_{[m]}(\Omega)$ and $\mu^m\df\mu^{[m]}$ when
$m\in\NN$.

Given further a non-empty Borel space $\Lambda$, we let $\cF_k(\Omega,\Lambda)$ be the set of
measurable functions $\cE_k(\Omega)\to\Lambda$. Finally, given $F\in\cF_k(\Omega,\Lambda)$, we let
$F^*_V\colon\cE_V(\Omega)\to\Lambda^{(V)_k}$ be given by
\begin{align*}
  F^*_V(x)_\alpha
  & \df
  F(\alpha^*(x))
  \qquad (x\in\cE_V(\Omega),\alpha\in (V)_k),
\end{align*}
where
\begin{align*}
  \alpha^*(x)_B & \df x_{\alpha(B)},
  \qquad (x\in\cE_V(\Omega), B\subseteq V\text{ finite}).
\end{align*}

We start by stating a lemma that is a stepping stone towards the Aldous--Hoover Theorem from
exchangeability theory in our language.

\begin{lemma}[\protect{\cite[Lemma~7.26]{Kal05}}]\label{lem:Kallenberg}
  Let $\Omega$ be a non-local Borel template, let $\Upsilon$ be the interval $[0,1]$ equipped with
  the Borel $\sigma$-algebra, let $\Omega'$ be the non-local Borel template given by
  $\Omega'_i\df\Upsilon$ ($i\in\NN$), let $\Lambda$ be a non-empty Borel space and let $\rn{x}$
  and $\rn{y}$ be random elements of $\cE_{\NN_+}(\Omega)$ and $\Lambda^{(\NN_+)_k}$, respectively.

  Suppose that $(\rn{x},\rn{y})$ is exchangeable in the sense that for every $\sigma\in S_{\NN_+}$
  the distribution of $(\sigma^*(\rn{x}),\sigma^*(\rn{y}))$ is the same as $(\rn{x},\rn{y})$ and
  suppose that $\rn{x}\sim\mu^{\NN_+}$ for some $\mu\in\Pr(\Omega)$.

  Then there exist $F\in\cF_k(\Omega\otimes\Omega',\Lambda)$ and a random element $\rn{x'}$ of
  $\cE_{\NN_+}(\Omega')$ such that:
  \begin{enumerate}
  \item if $\mu'\in\Pr(\Omega')$ is given by $\mu'_i\df\lambda$ ($i\in\NN$), where $\lambda$ is
    the Lebesgue measure, then $(\rn{x},\rn{x'})\sim(\mu\otimes\mu')^{\NN_+}$;
  \item\label{lem:Kallenberg:as} we have
    \begin{align*}
      \rn{y} & = F^*_{\NN_+}(\rn{x},\rn{x'})
    \end{align*}
    with probability $1$. In particular, we get
    $(\rn{x},\rn{y})=(\rn{x},F^*_{\NN_+}(\rn{x},\rn{x'}))$ with probability $1$.
  \end{enumerate}
\end{lemma}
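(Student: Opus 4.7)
The statement is presented as a restatement of Kallenberg's Lemma~7.26 in our notation, so the plan is to reduce the claim directly to that lemma after carefully reconciling the two notational conventions and verifying that the hypothesis transfers correctly. I will proceed in three steps.

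First, I would set up the notational dictionary. Kallenberg works with jointly exchangeable arrays indexed by tuples or finite subsets of $\NN_+$ and valued in standard Borel spaces. Our $\rn{x}$ lives in $\cE_{\NN_+}(\Omega) = \prod_{A \subseteq \NN_+\ \text{finite}} \Omega_{\lvert A\rvert}$ (restricted to $A \neq \varnothing$ in the local case), and $\rn{y} \in \Lambda^{(\NN_+)_k}$ is indexed by injections $[k] \to \NN_+$. Viewing $(\rn{x},\rn{y})$ as a single random element of a countable product of standard Borel spaces, the hypothesis $(\sigma^*(\rn{x}),\sigma^*(\rn{y})) \sim (\rn{x},\rn{y})$ for every $\sigma \in S_{\NN_+}$ is precisely the joint exchangeability condition in Kallenberg's framework.

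Second, I would invoke the Aldous--Hoover representation theorem: there exist i.i.d.\ uniform $[0,1]$ random variables $(\rn{\xi}_B)_{B \in r(\NN_+)}$ and measurable functions $f_i$, $g$ such that, in distribution,
\[
\rn{x}_A = f_{\lvert A\rvert}\bigl((\rn{\xi}_B)_{B \subseteq A}\bigr), \qquad \rn{y}_\alpha = g\bigl((\rn{\xi}_{\alpha(S)})_{S \subseteq [k]}\bigr).
\]
This, however, only produces a copy of $(\rn{x},\rn{y})$ equal in distribution to the original, whereas we need a representation adapted to the \emph{given} $\rn{x}$ together with independent additional noise.

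Third, the transfer step: using the Disintegration Theorem~\ref{thm:disintegration} applied to the conditional law of $(\rn{\xi}_B)_B$ given $\rn{x}$, and exploiting that $[0,1]$ is Borel-isomorphic to $[0,1]^\NN$ so that independent uniform noise may be extracted measurably, one can build a $[0,1]$-valued array $\rn{x'} = (\rn{x'}_A)_{A \in r(\NN_+)}$ with $(\rn{x},\rn{x'}) \sim (\mu \otimes \mu')^{\NN_+}$ (where $\mu'_i = \lambda$) such that the joint law of $(\rn{x}, \rn{\xi})$ is reproduced by a measurable function of $(\rn{x},\rn{x'})$. Composing with $g$ yields the required $F \in \cF_k(\Omega \otimes \Omega', \Lambda)$ and upgrades the distributional identity for $\rn{y}$ to the almost-sure identity $\rn{y} = F^*_{\NN_+}(\rn{x},\rn{x'})$ of item~\ref{lem:Kallenberg:as}.

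The main obstacle is precisely this third step: Aldous--Hoover as typically stated is an equal-in-distribution representation, while we need an almost-sure identity tied to the prescribed $\rn{x}$. This is exactly the additional content that Kallenberg packages into Lemma~7.26, via disintegration combined with a measurable uniformization argument on standard Borel spaces. After the notational translation of step one, the cleanest route is to quote Kallenberg's lemma directly rather than reprove the measurable-selection machinery, so the bulk of the write-up will be the bookkeeping showing our setup meets his hypotheses.
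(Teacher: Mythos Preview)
Your proposal is correct and matches the paper's approach: the paper does not prove this lemma at all but simply states it as a direct citation of \cite[Lemma~7.26]{Kal05} in the paper's notation, which is exactly the route you conclude is cleanest. Your steps one through three are a reasonable sketch of what Kallenberg's proof contains, but the paper treats the lemma as a black box and only uses it as input to the subsequent Proposition~\ref{prop:agexch}.
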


The lemma above is almost what we need for the setup of non-partite agnostic $k$-PAC learning,
except for the use of the extra variables indexed by the empty set since there is no locality
assumption. The proposition below uses a simple ad hoc argument to remove these extra variables
under the hypothesis of locality.

\begin{proposition}\label{prop:agexch}
  Let $\Omega$ be a Borel template, let $\Upsilon$ be the interval $[0,1]$ equipped with the Borel
  $\sigma$-algebra, let $\Omega'$ be the Borel template given by $\Omega'_i\df\Upsilon$
  ($i\in\NN_+$), let $\Lambda$ be a non-empty Borel
  space and let $\rn{x}$ and $\rn{y}$ be random elements of $\cE_{\NN_+}(\Omega)$ and
  $\Lambda^{(\NN_+)_k}$, respectively.

  Suppose that $(\rn{x},\rn{y})$ is exchangeable in the sense that for every $\sigma\in S_{\NN_+}$
  the distribution of $(\sigma^*(\rn{x}),\sigma^*(\rn{y}))$ is the same as $(\rn{x},\rn{y})$ and
  local in the sense that if $U,V\subseteq\NN_+$ are disjoint, then $((\rn{x}_A)_{A\in r(U)},
  (\rn{y}_\alpha)_{\alpha\in(U)_k})$ is independent from $((\rn{x}_A)_{A\in r(V)},
  (\rn{y}_\alpha)_{\alpha\in(V)_k})$ and suppose that $\rn{x}\sim\mu^{\NN_+}$ for some
  $\mu\in\Pr(\Omega)$.

  Then there exist $F\in\cF_k(\Omega\otimes\Omega',\Lambda)$ and a random element $\rn{x'}$ of
  $\cE_{\NN_+}(\Omega')$ such that:
  \begin{enumerate}
  \item if $\mu'\in\Pr(\Omega')$ is given by $\mu'_i\df\lambda$ ($i\in\NN_+$), where $\lambda$ is
    the Lebesgue measure, then $(\rn{x},\rn{x'})\sim(\mu\otimes\mu')^{\NN_+}$;
  \item\label{prop:agexch:dist} we have
    \begin{align*}
      (\rn{x},\rn{y}) & \sim (\rn{x}, F^*_{\NN_+}(\rn{x},\rn{x'})).
    \end{align*}
  \end{enumerate}
\end{proposition}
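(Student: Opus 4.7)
\begin{proofint}
The plan is to reduce the statement to Lemma~\ref{lem:Kallenberg} by enlarging $\Omega$ and $\Omega'$ to non-local Borel templates, applying Kallenberg's representation there, and then using the locality hypothesis to eliminate the single ``global'' variable that Kallenberg's construction introduces.

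First I would extend $\Omega$ to a non-local Borel template $\widetilde{\Omega}$ by taking $\widetilde{\Omega}_0$ to be a fixed singleton Borel space and $\widetilde{\Omega}_i \df \Omega_i$ for $i \in \NN_+$, and extend $\mu$ to $\widetilde{\mu} \in \Pr(\widetilde{\Omega})$ by taking $\widetilde{\mu}_0$ to be the unique probability measure on $\widetilde{\Omega}_0$. Analogously, extend $\Omega'$ to $\widetilde{\Omega}'$ with $\widetilde{\Omega}'_0 \df [0,1]$ equipped with the Lebesgue measure $\widetilde{\mu}'_0$. Under the canonical identification $\cE_{\NN_+}(\Omega) \cong \cE_{\NN_+}(\widetilde{\Omega})$ (the empty-set coordinate being trivial), $\rn{x}$ has distribution $\widetilde{\mu}^{\NN_+}$, so Lemma~\ref{lem:Kallenberg} applies and yields $\widetilde{F} \in \cF_k(\widetilde{\Omega} \otimes \widetilde{\Omega}', \Lambda)$ and a random element $\rn{\widetilde{x'}}$ such that $(\rn{x},\rn{\widetilde{x'}}) \sim (\widetilde{\mu}\otimes\widetilde{\mu}')^{\NN_+}$ and $(\rn{x},\rn{y}) = (\rn{x}, \widetilde{F}^*_{\NN_+}(\rn{x},\rn{\widetilde{x'}}))$ almost surely.

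Next I would single out the ``global'' variable: set $\rn{u} \df \rn{\widetilde{x'}}_\varnothing \in [0,1]$ and let $\rn{x'}$ be the restriction of $\rn{\widetilde{x'}}$ to coordinates indexed by nonempty finite subsets of $\NN_+$, so that $\rn{u}$, $\rn{x}$ and $\rn{x'}$ are mutually independent, $\rn{u}$ is uniform on $[0,1]$, and $\rn{x'} \sim (\mu')^{\NN_+}$. Because $\widetilde{\Omega}_0$ is a singleton, for each $u \in [0,1]$ we obtain a well-defined $F_u \in \cF_k(\Omega \otimes \Omega', \Lambda)$ by fixing the empty-set coordinate of $\widetilde{\Omega}'$ to $u$ in $\widetilde{F}$. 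Fubini then converts the Kallenberg identity into the distributional identity
\begin{equation*}
  (\rn{x}, \rn{y}) \sim (\rn{x}, (F_{\rn{u}})^*_{\NN_+}(\rn{x}, \rn{x'})),
\end{equation*}
which exhibits the law of $(\rn{x},\rn{y})$ as a $\lambda$-mixture over $u \in [0,1]$ of the laws of $(\rn{x}, (F_u)^*_{\NN_+}(\rn{x}, \rn{x'}))$.

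The remaining task, and the main obstacle, is to show that there exists a single $u_0 \in [0,1]$ with $(\rn{x}, \rn{y}) \sim (\rn{x}, (F_{u_0})^*_{\NN_+}(\rn{x}, \rn{x'}))$; taking $F \df F_{u_0}$ then finishes the proof. For each $u$ the law of $(\rn{x}, (F_u)^*_{\NN_+}(\rn{x}, \rn{x'}))$ is automatically local and exchangeable (locality because $\rn{x}$ and $\rn{x'}$ have product distributions and $F_u$ depends on no empty-set coordinate; exchangeability as in Remark~\ref{rmk:exchangeable}). Thus the mixture displayed above expresses the assumed local exchangeable law as a mixture of local exchangeable laws, and the desired conclusion is exactly that this mixture is trivial. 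This amounts to the high-arity Hewitt--Savage $0$-$1$ law: local exchangeable distributions are precisely the extreme points of the Choquet simplex of exchangeable distributions on $\cE_{\NN_+}(\Omega) \times \Lambda^{(\NN_+)_k}$, since locality is equivalent to triviality of the tail-invariant $\sigma$-algebra. Formally I would extract from the proof of Lemma~\ref{lem:Kallenberg} in~\cite{Kal05} that $\sigma(\rn{u})$ lies (up to null sets) inside this tail-invariant $\sigma$-algebra of $(\rn{x},\rn{y})$; by locality the latter is trivial, so $\rn{u}$ is $\rn{y}$-conditionally essentially constant, and $\lambda$-a.e.\ $u_0$ works.
\end{proofint}
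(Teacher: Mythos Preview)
Your overall strategy matches the paper's: extend to non-local templates, apply Lemma~\ref{lem:Kallenberg}, then use locality to eliminate the global coordinate. Your choice of a singleton for $\widetilde{\Omega}_0$ is a small simplification over the paper, which takes $\widetilde{\Omega}_0 = [0,1]$ and hence has two global variables $(z,z')$ to remove instead of one.

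The real difference is in how you remove the global variable. You invoke the high-arity Hewitt--Savage picture: local exchangeable laws are the extreme points of the simplex of exchangeable laws, so a local law cannot be a nontrivial mixture and $\lambda$-a.e.\ $u_0$ works. This is correct but leans on facts (extremality of dissociated arrays, uniqueness of the Choquet decomposition, that $\sigma(\rn{u})$ sits in the tail/invariant $\sigma$-field) that are standard in Kallenberg but not stated in the paper. The paper instead gives a short self-contained argument: it shows directly that $\rn{y}$ is independent of the global variables by contradiction. Assuming some finite-marginal conditional probability $\rn{p}$ has positive variance, it uses exchangeability to produce a second expression for $\rn{p}$ supported on a disjoint index set $V$, computes $\Var[\rn{p}] = \PP[E_U \cap E_V] - \PP[E_U]\PP[E_V]$ via conditional independence given the globals, and then invokes locality of $(\rn{x},\rn{y})$ to conclude $E_U$ and $E_V$ are independent, forcing $\Var[\rn{p}] = 0$. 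Your route is more conceptual; the paper's is more elementary and avoids importing further exchangeability machinery.
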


\begin{proof}
  Let $\widetilde{\Omega}$ and $\widetilde{\Omega'}$ be the non-local Borel templates obtained from
  $\Omega$ and $\Omega'$, respectively, by adding the space $\Upsilon$ at the coordinate indexed by
  $0$.

  We also let $\widetilde{\mu}\in\Pr(\widetilde{\Omega})$ be obtained from $\mu$ by adding the
  Lebesgue measure $\lambda$ at the coordinate indexed by $0$ and let $\rn{\widetilde{x}}$
  be the random element of $\cE_{\NN_+}(\widetilde{\Omega})$ obtained from $\rn{x}$ via
  $\widetilde{x}_A\df\rn{x}_A$ for every $A\in r(\NN_+)$ and sampling
  $\rn{\widetilde{x}}_\varnothing$ from $\lambda$, independently from $(\rn{x},\rn{y})$.

  Note that exchangeability of $(\rn{x},\rn{y})$ implies exchangeability of
  $(\rn{\widetilde{x}},\rn{y})$ (but not locality).
  
  By applying Lemma~\ref{lem:Kallenberg}, we get a random element $\rn{\widetilde{x'}}$ of
  $\cE_{\NN_+}(\widetilde{\Omega'})$ and
  $\widetilde{F}\in\cF_k(\widetilde{\Omega}\otimes\widetilde{\Omega'},\Lambda)$ such that
  \begin{align}\label{eq:yFtilde}
    \rn{y} & = \widetilde{F}^*_{\NN_+}(\rn{\widetilde{x}},\rn{\widetilde{x'}})
  \end{align}
  with probability $1$.

  Let $\rn{x'}$ be the random element in $\cE_{\NN_+}(\Omega')$ given by
  $\rn{x'}_A\df\rn{\widetilde{x}'}_A$ ($A\in r(\NN_+)$), that is, we simply drop the coordinate of
  $\rn{\widetilde{x'}}$ indexed by $\varnothing$.

  We claim that it suffices to show
  \begin{align}\label{eq:yFtildesim}
    \rn{y} & \sim \widetilde{F}^*_{\NN_+}(z,\rn{x},z',\rn{x'})
  \end{align}
  for $\lambda$-almost every $(z,z')\in[0,1]^2$. Indeed, if this is the case, then setting
  $F(x,x')\df\widetilde{F}(z,x,z',x')$ for any $(z,z')\in[0,1]^2$ such that~\eqref{eq:yFtildesim}
  holds yields the desired result.

  In turn, to show~\eqref{eq:yFtildesim}, it suffices to show that $\rn{y}$ is independent from
  $(\rn{\widetilde{x}}_\varnothing,\rn{\widetilde{x'}}_\varnothing)$ (as random elements). Suppose
  not. Then there exist a finite set $U\subseteq\NN_+$ and $y\in\Lambda^{(U)_k}$ such that letting
  \begin{align*}
    \rn{p} & \df \PP[\forall\alpha\in (U)_k, \rn{y}_\alpha = y_\alpha
      \given \rn{\widetilde{x}}_\varnothing,\rn{\widetilde{x'}}_\varnothing]
  \end{align*}
  we have $\Var[\rn{p}] > 0$. By~\eqref{eq:yFtilde}, if we let
  \begin{align*}
    \rn{p'}
    & \df
    \PP[\forall\alpha\in (U)_k,
      \widetilde{F}(\alpha^*(\rn{\widetilde{x}},\rn{\widetilde{x'}})) = y_\alpha
      \given \rn{\widetilde{x}}_\varnothing,\rn{\widetilde{x'}}_\varnothing],
  \end{align*}
  then $\rn{p'}=\rn{p}$ with probability $1$, so $\Var[\rn{p'}] > 0$. Let $E_U$ be the event in the
  above.

  Let now $V\subseteq\NN_+$ be a set disjoint from $U$ with $\lvert U\rvert = \lvert V\rvert$,
  let $\sigma\in S_{\NN_+}$ be such that $\sigma(U)=V$ and note that for every $\alpha\in(\NN_+)_k$,
  we have
  \begin{align*}
    \sigma^*(\rn{y})_\alpha
    & =
    \widetilde{F}^*_{\NN_+}(\rn{\widetilde{x}},\rn{\widetilde{x'}})_{\sigma\comp\alpha}
    =
    \widetilde{F}(\alpha^*(\sigma^*(\rn{\widetilde{x}},\rn{\widetilde{x'}})))
    =
    \widetilde{F}^*_{\NN_+}(\sigma^*(\rn{\widetilde{x}},\rn{\widetilde{x'}}))_\alpha,
  \end{align*}
  with probability $1$, so we get $\sigma^*(\rn{y}) =
  \widetilde{F}^*_{\NN_+}(\sigma^*(\rn{\widetilde{x}},\rn{\widetilde{x'}}))$ also with probability
  $1$. Since we have
  $\sigma^*(\rn{\widetilde{x}},\rn{\widetilde{x'}})\sim(\rn{\widetilde{x}},\rn{\widetilde{x'}})$ and
  in both sides the coordinates indexed by $\varnothing$ are equal, we get
  \begin{align*}
    (\rn{\widetilde{x}}_\varnothing,\rn{\widetilde{x'}}_\varnothing,\rn{y}) & \sim
    (\rn{\widetilde{x}}_\varnothing,\rn{\widetilde{x'}}_\varnothing,\sigma^*(\rn{y})).
  \end{align*}
  Thus, we have the following alternative formula for $\rn{p}$ that holds with probability $1$:
  \begin{align*}
    \rn{p}
    & =
    \PP[\forall\alpha\in (U)_k, \sigma^*(\rn{y})_\alpha = y_\alpha
      \given \rn{\widetilde{x}}_\varnothing,\rn{\widetilde{x'}}_\varnothing]
    \\
    & =
    \PP[\forall\gamma\in (V)_k, \rn{y}_\gamma = y_{\sigma^{-1}\comp\gamma}
      \given \rn{\widetilde{x}}_\varnothing,\rn{\widetilde{x'}}_\varnothing],
  \end{align*}
  where the second equality follows from the change of variables
  $\gamma\df\sigma\comp\alpha$. By~\eqref{eq:yFtilde}, this gives the following alternative formula
  for $\rn{p'}$ that holds with probability $1$:
  \begin{align*}
    \rn{p'}
    & =
    \PP[\forall\gamma\in (V)_k,
      \widetilde{F}(\gamma^*(\rn{\widetilde{x}},\rn{\widetilde{x'}})) = y_{\sigma^{-1}\comp\gamma}
      \given \rn{\widetilde{x}}_\varnothing,\rn{\widetilde{x'}}_\varnothing].
  \end{align*}
  Let $E_V$ be the event in the above and note that the two formulas for $\rn{p'}$ read as
  \begin{align*}
    \rn{p'}
    & =
    \PP[E_U\given\rn{\widetilde{x}}_\varnothing,\rn{\widetilde{x'}}_\varnothing]
    =
    \PP[E_V\given\rn{\widetilde{x}}_\varnothing,\rn{\widetilde{x'}}_\varnothing],
  \end{align*}
  so we get
  \begin{equation}\label{eq:varp'}
    \begin{aligned}
      \Var[\rn{p'}]
      & =
      \EE\bigl[
        \PP[E_U\given\rn{\widetilde{x}}_\varnothing,\rn{\widetilde{x'}}_\varnothing]
        \cdot
        \PP[E_V\given\rn{\widetilde{x}}_\varnothing,\rn{\widetilde{x'}}_\varnothing]
        \bigr]
      -
      \EE\bigl[\PP[E_U\given{\rn{\widetilde{x}}_\varnothing,\rn{\widetilde{x'}}_\varnothing}]\bigr]
      \cdot
      \EE\bigl[\PP[E_V\given\rn{\widetilde{x}}_\varnothing,\rn{\widetilde{x'}}_\varnothing]\bigr]
      \\
      & =
      \EE\bigl[
        \PP[E_U\cap E_V\given\rn{\widetilde{x}}_\varnothing,\rn{\widetilde{x'}}_\varnothing]
        \bigr]
      - \PP[E_U]\cdot\PP[E_V]
      \\
      & = \PP[E_U\cap E_V] - \PP[E_U]\cdot\PP[E_V],
    \end{aligned}
  \end{equation}
  where the second equality follows since the definition of the events above clearly implies that
  they are conditionally independent given
  $\rn{\widetilde{x}}_\varnothing,\rn{\widetilde{x'}}_\varnothing$ (as when given
  $\rn{\widetilde{x}}_\varnothing,\rn{\widetilde{x'}}_\varnothing$, the events are completely
  determined by sets of coordinates of $\rn{\widetilde{x}}$ and $\rn{\widetilde{x'}}$ that are
  disjoint and the coordinates of $(\rn{\widetilde{x}},\rn{\widetilde{x'}})$ are independent).

  We now claim that $E_U$ and $E_V$ are independent events. Indeed, by~\eqref{eq:yFtilde}, except
  for a probability zero set, $E_U$ is equal to
  \begin{align*}
    \forall\alpha\in(U)_k, \rn{y}_\alpha = y_\alpha
  \end{align*}
  and $E_V$ is equal to
  \begin{align*}
    \forall\alpha\in(V)_k, \rn{y}_\alpha = y_{\sigma^{-1}\comp\alpha}
  \end{align*}
  so locality implies these two events are independent. But putting this together
  with~\eqref{eq:varp'} contradicts $\Var[\rn{p'}] > 0$.
\end{proof}

\begin{remark}\label{rmk:Kallenberg}
  In fact, using a slightly stronger but standard probability argument, we could have proven the
  strengthened version of item~\ref{prop:agexch:dist} in Proposition~\ref{prop:agexch} that is the
  analogue of item~\ref{lem:Kallenberg:as} in Lemma~\ref{lem:Kallenberg}:
  \begin{align*}
    \rn{y} & = F^*_{\NN_+}(\rn{x},\rn{x'})
  \end{align*}
  with probability $1$. However, we opted for the much simpler proof above as it suffices for our
  needs.
\end{remark}

\notoc\subsection{Separate exchangeability}
\label{subsec:sepexch}

Similarly to Section~\ref{subsec:exch}, since we will have to handle non-local distributions, we
need a slightly more general notation. For $k\in\NN$ (we do allow zero), a \emph{non-local Borel
  $k$-partite template} is a sequence $\Omega=(\Omega_A)_{A\subseteq [k]}$ of non-empty Borel spaces
(now we also have a space $\Omega_\varnothing$). The set of \emph{non-local probability $k$-partite
  templates} on $\Omega$ is the set $\Pr(\Omega)$ of sequences $(\mu_A)_{A\subseteq [k]}$ such that
$\mu_A\in\Pr(\Omega_A)$ ($A\subseteq[k]$).

For (finite or) countable sets $V_1,\ldots,V_k$, we let
\begin{align*}
  \cE_{V_1,\ldots,V_k}(\Omega)
  & \df
  \prod_{f\in\widetilde{r}_k(V_1,\ldots,V_k)} \Omega_{\dom(f)},
  &
  \mu^{V_1,\ldots,V_k}
  & \df
  \bigotimes_{f\in\widetilde{r}_k(V_1,\ldots,V_k)} \mu_{\dom(f)},
\end{align*}
where
\begin{align*}
  \widetilde{r}_k(V_1,\ldots,V_k)
  & \df
  \left\{f\colon A\to\bigcup_{i=1}^k V_i
  \;\middle\vert\;
  A\subseteq[k]\land\forall i\in\dom(f), f(i)\in V_i
  \right\}.
\end{align*}
We also use the shorthands $\widetilde{r}_k(m)\df\widetilde{r}_k([m],\ldots,[m])$,
$\cE_m(\Omega)\df\cE_{[m],\ldots,[m]}(\Omega)$ and $\mu^m\df\mu^{[m],\ldots,[m]}$ when $m\in\NN$.

Given further a non-empty Borel space $\Lambda$, we let $\cF_k(\Omega,\Lambda)$ be the set of
measurable functions $\cE_1(\Omega)\to\Lambda$. Finally, given $F\in\cF_k(\Omega,\Lambda)$, we let
$F^*_{V_1,\ldots,V_k}\colon\cE_{V_1,\ldots,V_k}(\Omega)\to\Lambda$ be given by
\begin{align*}
  F^*_{V_1,\ldots,V_k}(x)_\alpha
  & \df
  F(\alpha^*(x))
  \qquad \left(x\in\cE_{V_1,\ldots,V_k}(\Omega), \alpha\in\prod_{i=1}^k V_i\right),
\end{align*}
where
\begin{align*}
  \alpha^*(x)_f & \df x_{\alpha\rest_{\dom(f)}}
  \qquad (x\in\cE_{V_1,\ldots,V_k}(\Omega), f\in \widetilde{r}_k(V_1,\ldots,V_k)).
\end{align*}

Unfortunately, in~\cite{Kal05}, the separate exchangeability analogue of Lemma~\ref{lem:Kallenberg}
is nowhere to be found, so we need to prove it from the more basic separate exchangeability lemma
below on inversion of non-local probability $k$-partite templates.

\begin{lemma}[\protect{\cite[Lemma~7.14]{Kal05}}]\label{lem:inversion}
  Let $\Omega$ and $\widetilde{\Omega}$ be non-local Borel $k$-partite templates, let
  $\mu\in\Pr(\Omega)$ and $\widetilde{\mu}\in\Pr(\widetilde{\Omega})$ be non-local probability
  $k$-partite templates. Let also $\Upsilon$ be the interval $[0,1]$ equipped with the Borel
  $\sigma$-algebra and let $\lambda\in\Pr(\Upsilon)$ be the Lebesgue measure.

  Let further $\Sigma$ be the non-local Borel $k$-partite template given by $\Sigma_A=\Upsilon$
  ($A\subseteq[k]$) and let $\nu\in\Pr(\Sigma)$ be given by $\nu_A\df\lambda$ ($A\subseteq[k]$).

  For each $A\subseteq[k]$, we also let $\Omega^A$ be the non-local Borel $\lvert A\rvert$-partite
  template given by
  \begin{align*}
    \Omega^A_B & \df \Omega_{\iota_{A,k}(B)}, \qquad (B\subseteq [\lvert A\rvert]),
  \end{align*}
  where $\iota_{A,k}$ is the unique increasing function $[\lvert A\rvert]\to[k]$ with
  $\im(\iota_{A,k})=A$.

  For a point $x\in\cE_{\NN_+,\ldots,\NN_+}(\Omega)$ and $A\subseteq[k]$, we let
  $x^A\in\cE_{\NN_+,\ldots,\NN_+}(\Omega^A)$ be given by
  \begin{align*}
    x^A_f & \df x_{f\comp\iota_{A,k}^{-1}\rest_{\iota_{A,k}(\dom(f))}}
    \qquad (f\in\widetilde{r}_{\lvert A\rvert}(\NN_+,\ldots,\NN_+)).
  \end{align*}
  We define the analogous notions over $\widetilde{\Omega}$ and $\Sigma$.

  For each $A\subseteq[k]$, let $F_A\in\cF_{\lvert A\rvert}(\Omega^A,\widetilde{\Omega}^A)$ and
  suppose that $\rn{x}\sim\mu^{\NN_+,\ldots,\NN_+}$, and
  $\rn{\widetilde{x}}\sim\widetilde{\mu}^{\NN_+,\ldots,\NN_+}$ are such that
  \begin{align*}
    \rn{\widetilde{x}}_f
    & =
    (F_{\dom(f)})^*_{\NN_+,\ldots,\NN_+}(\rn{x}^{\dom(f)})_{f\comp\iota_{\dom(f),k}}
  \end{align*}
  with probability $1$ for every $f\in\widetilde{r}_k(\NN_+,\ldots,\NN_+)$.

  Then there exist $G_A\in\cF_{\lvert A\rvert}(\widetilde{\Omega}^A\otimes\Sigma^A,\Omega^A)$
  ($A\subseteq[k]$) and a random element $\rn{z}$ of $\cE_{\NN_+,\ldots,\NN_+}(\Sigma)$ such that
  such that $(\rn{\widetilde{x}},\rn{z})\sim(\widetilde{\mu}\otimes\nu)^{\NN_+,\ldots,\NN_+}$ and
  \begin{align*}
    \rn{x}_f
    & =
    (G_{\dom(f)})^*_{\NN_+,\ldots,\NN_+}(\rn{\widetilde{x}}^{\dom(f)},\rn{z}^{\dom(f)})_{f\comp\iota_{\dom(f),k}}
  \end{align*}
  with probability $1$ for every $f\in\widetilde{r}_k(\NN_+,\ldots,\NN_+)$.
\end{lemma}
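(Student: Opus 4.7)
The plan is to reduce the statement to \cite[Lemma~7.14]{Kal05}, which is the essentially identical inversion statement in Kallenberg's notation for separately exchangeable arrays, and then translate the conclusion back into our language. The only real content beyond the translation is to explain how our multi-partite templates $\Omega^A$, the non-local coordinates, and the maps $F_A$ correspond to Kallenberg's ``U-arrays''; once this dictionary is set up, both hypotheses (separate exchangeability of $\rn{x}$ with product distribution, and the fact that $\rn{\widetilde{x}}$ is obtained from $\rn{x}$ as a measurable function compatible with the partite structure) match the hypotheses of that lemma, and the conclusion gives exactly the $G_A$ and $\rn{z}$ we want.

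Alternatively, one can prove the statement directly, by induction on $|A|$, using the Disintegration Theorem (Theorem~\ref{thm:disintegration}) applied layer by layer. The base case $A = \varnothing$ is the simplest: apply disintegration to the joint law of $(\rn{x}_\varnothing, \rn{\widetilde{x}}_\varnothing)$ with respect to the projection onto $\rn{\widetilde{x}}_\varnothing$, obtaining a Markov kernel $\rho_\varnothing \colon \widetilde{\Omega}_\varnothing \to \Pr(\Omega_\varnothing)$. Since all spaces are standard Borel, a measurable inverse-CDF type construction yields $G_\varnothing \colon \widetilde{\Omega}_\varnothing \times \Sigma_\varnothing \to \Omega_\varnothing$ such that, for $\rn{z}_\varnothing \sim \lambda$ drawn independently, $G_\varnothing(\rn{\widetilde{x}}_\varnothing, \rn{z}_\varnothing)$ has the correct conditional distribution given $\rn{\widetilde{x}}_\varnothing$.

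For the inductive step, suppose $G_B$ and corresponding coordinates of $\rn{z}$ have been built for every $B \subsetneq A$. The ``fresh'' coordinates of $\rn{x}$ associated with $A$ are those $\rn{x}_f$ with $\dom(f) = A$; their joint law, conditional on the lower-order data already inverted, is again separately exchangeable (over $\NN_+^{|A|}$) by the assumption on $\rn{x}$ together with the equivariance of $F_A$ (a partite analogue of Lemma~\ref{lem:partiteF*Vequiv}). I would apply disintegration on a single slice indexed by $f = \iota_{A,k}$ to obtain a Markov kernel, then exploit this separate exchangeability to extract a \emph{single} measurable $G_A$ that, fed with uniform randomness $\rn{z}^A$, simultaneously and almost surely recovers $\rn{x}_f$ from the corresponding $\rn{\widetilde{x}}$-coordinates for \emph{every} $f$ with $\dom(f) = A$.

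The main obstacle is precisely this last step: the equivariant choice of the Markov kernel. One has to verify that the conditional distributions on different slices (i.e., for different $f$ with $\dom(f) = A$) can be represented by the \emph{same} kernel, not merely by kernels that are equal in distribution. This is done by first choosing \emph{any} measurable conditional distribution on one slice, then symmetrizing (or transporting) across the separately exchangeable $S_{\NN_+}^k$-action; separate exchangeability of the pair $(\rn{x}, \rn{\widetilde{x}})$ together with Fubini ensures the symmetrized kernel still represents the correct conditional law on every slice almost surely. Once this equivariance has been secured at each level of the induction, assembling the $G_A$'s and the coordinates of $\rn{z}$ yields both the measure-theoretic identity $(\rn{\widetilde{x}}, \rn{z}) \sim (\widetilde{\mu} \otimes \nu)^{\NN_+,\ldots,\NN_+}$ and the almost-sure pointwise formula expressing $\rn{x}$ in terms of $\rn{\widetilde{x}}$ and $\rn{z}$.
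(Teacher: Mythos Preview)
The paper does not prove this lemma at all: it is stated as a direct restatement of \cite[Lemma~7.14]{Kal05} in the paper's notation, with no proof supplied. Your first plan---set up the dictionary between the paper's $k$-partite template language and Kallenberg's U-array formalism, then invoke that lemma---is therefore exactly the paper's (implicit) approach; your alternative inductive disintegration argument is additional content not present in the paper.
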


From this lemma, we can derive the following separate exchangeability analogue of
Lemma~\ref{lem:Kallenberg}:
\begin{lemma}\label{lem:sepKallenberg}
  Let $\Omega$ be a non-local Borel $k$-partite template, let $\Upsilon$ be the interval $[0,1]$
  equipped with the Borel $\sigma$-algebra, let $\Omega'$ be the non-local Borel $k$-template given
  by $\Omega'_A\df\Upsilon$ ($A\subseteq[k]$), let $\Lambda$ be a non-empty Borel space and let $\rn{x}$
  and $\rn{y}$ be random elements of $\cE_{\NN_+,\ldots,\NN_+}(\Omega)$ and $\Lambda^{\NN_+^k}$,
  respectively.

  Suppose that $(\rn{x},\rn{y})$ is separately exchangeable in the sense that for every
  $\beta_1,\ldots,\beta_k\in S_{\NN_+}$, we have
  \begin{align*}
    (\beta^{\#}(\rn{x}),\beta^{\#}(\rn{y}))
    & \sim
    (\rn{x},\rn{y}),
  \end{align*}
  where
  \begin{gather}
    \beta^{\#}(x)_f \df x_{\beta_{\#}(f)}
    \qquad (x\in\cE_{V_1,\ldots,V_k}(\Omega), f\in\widetilde{r}_k(U_1,\ldots,U_k)),
    \label{eq:betaxnonloc}
    \\
    \beta_{\#}(f)\colon\dom(f)\ni i \mapsto\beta_i(f(i))\in\bigcup_{i=1}^k V_i
    \qquad (f\in\widetilde{r}_k(U_1,\ldots,U_k)),
    \label{eq:betafnonloc}
  \end{gather}
  and
  \begin{gather*}
    \beta^{\#}(y)_\alpha \df y_{\beta_1(\alpha_1),\ldots,\beta_k(\alpha_k)}
    \left(y\in\Lambda^{\prod_{i=1}^k U_i}, \alpha\in\prod_{i=1}^k U_i\right).
  \end{gather*}
  is as in~\eqref{eq:betay}. Suppose further that $\rn{x}\sim\mu^{\NN_+,\ldots,\NN_+}$ for some
  $\mu\in\Pr(\Omega)$.

  Then there exist $F\in\cF_k(\Omega\otimes\Omega',\Lambda)$ and a random element $\rn{x'}$ of
  $\cE_{\NN_+,\ldots,\NN_+}(\Omega')$ such that:
  \begin{enumerate}
  \item\label{lem:sepKallenberg:indep} if $\mu'\in\Pr(\Omega')$ is given by $\mu'_A\df\lambda$
    ($A\subseteq[k]$), where $\lambda$ is the Lebesgue measure, then
    $(\rn{x},\rn{x'})\sim(\mu\otimes\mu')^{\NN_+,\ldots,\NN_+}$.
  \item\label{lem:sepKallenberg:as} we have
    \begin{align*}
      \rn{y} & = F^*_{\NN_+,\ldots,\NN_+}(\rn{x},\rn{x'})
    \end{align*}
    with probability $1$. In particular, we get
    $(\rn{x},\rn{y}) = (\rn{x}, F^*_{\NN_+,\ldots,\NN_+}(\rn{x},\rn{x'}))$ with probability $1$.
  \end{enumerate}
\end{lemma}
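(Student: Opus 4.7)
The plan is to reduce the assertion to the inversion Lemma~\ref{lem:inversion} via a two-step construction: first encode $(\rn{x},\rn{y})$ as a single separately exchangeable non-local array, apply a Hoover-type representation of that array in terms of an i.i.d.\ latent source, and then invert the $\rn{x}$-part. This mirrors how Lemma~\ref{lem:Kallenberg} is derived from the corresponding jointly exchangeable inversion lemma in~\cite{Kal05}, adapted to the separately exchangeable regime.

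First I would bundle $(\rn{x},\rn{y})$ into a single array on the enlarged non-local Borel $k$-partite template $\widehat{\Omega}$ defined by $\widehat{\Omega}_A\df\Omega_A$ for $A\subsetneq[k]$ and $\widehat{\Omega}_{[k]}\df\Omega_{[k]}\otimes\Lambda$, setting $\rn{\widehat{x}}_f\df\rn{x}_f$ when $\dom(f)\subsetneq[k]$ and $\rn{\widehat{x}}_\alpha\df(\rn{x}_\alpha,\rn{y}_\alpha)$ when $\alpha\in\NN_+^{[k]}$ (regarded as a total function). The hypothesis that $(\rn{x},\rn{y})$ is separately exchangeable is equivalent to separate exchangeability of $\rn{\widehat{x}}$ (using essentially Lemma~\ref{lem:partiteF*Vequiv} to match the actions coordinate-by-coordinate). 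Applying the non-local separately exchangeable Aldous--Hoover theorem (Hoover's version, cited in Remark~\ref{rmk:partiteexchangeable} as~\cite[Corollary~7.23]{Kal05}, applied coordinate-wise after Borel-isomorphing each $\widehat{\Omega}_A$ onto a standard Polish space) produces a non-local Borel $k$-partite template $\Omega^0$, a non-local probability template $\nu_0\in\Pr(\Omega^0)$, and measurable functions $H_A\in\cF_{\lvert A\rvert}((\Omega^0)^A,\widehat{\Omega}_A)$ for each $A\subseteq[k]$, such that if $\rn{w}\sim\nu_0^{\NN_+,\ldots,\NN_+}$ then almost surely, for every $f\in\widetilde{r}_k(\NN_+,\ldots,\NN_+)$,
\[
  \rn{\widehat{x}}_f
  = (H_{\dom(f)})^*_{\NN_+,\ldots,\NN_+}(\rn{w}^{\dom(f)})_{f\comp\iota_{\dom(f),k}}.
\]

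Next I would project. Let $\pi_A\colon\widehat{\Omega}_A\to\Omega_A$ be the identity for $A\subsetneq[k]$ and the first-coordinate projection for $A=[k]$, and let $F_A\df\pi_A\comp H_A$. Then $\rn{x}_f=(F_{\dom(f)})^*_{\NN_+,\ldots,\NN_+}(\rn{w}^{\dom(f)})_{f\comp\iota_{\dom(f),k}}$ almost surely, which is exactly the hypothesis needed to apply Lemma~\ref{lem:inversion} with the roles $\rn{x}\leftarrow\rn{w}$, $\rn{\widetilde{x}}\leftarrow\rn{x}$, $\mu\leftarrow\nu_0$, $\widetilde{\mu}\leftarrow\mu$. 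The lemma then yields functions $G_A\in\cF_{\lvert A\rvert}(\Omega^A\otimes\Sigma^A,(\Omega^0)^A)$ and a random element $\rn{z}$ of $\cE_{\NN_+,\ldots,\NN_+}(\Sigma)$ with $(\rn{x},\rn{z})\sim(\mu\otimes\nu)^{\NN_+,\ldots,\NN_+}$ (for $\nu$ the Lebesgue template on $\Sigma$) and, almost surely, $\rn{w}_f=(G_{\dom(f)})^*(\rn{x}^{\dom(f)},\rn{z}^{\dom(f)})_{f\comp\iota_{\dom(f),k}}$ for every $f$. Setting $\Omega'\df\Sigma$, $\mu'\df\nu$ and $\rn{x'}\df\rn{z}$ immediately gives item~\ref{lem:sepKallenberg:indep}.

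Finally, substituting this formula for $\rn{w}$ into the $\Lambda$-component of $H_{[k]}$ produces the required $F$. Concretely, for each $\alpha\in\NN_+^k$ the value $\rn{y}_\alpha$ is the $\Lambda$-projection of $\rn{\widehat{x}}_\alpha$, which equals $(H_{[k]})^*(\rn{w}^{[k]})_\alpha$ (composed with the projection to $\Lambda$); writing each $\rn{w}_g$ appearing in that slice expression through the $G_{\dom(g)}$'s shows that $\rn{y}_\alpha$ depends measurably only on $\alpha^*(\rn{x},\rn{x'})$, and the dependence is uniform in $\alpha$ by the separately exchangeable structure of the $G_A$'s. Packaging this dependence as a single $F\in\cF_k(\Omega\otimes\Omega',\Lambda)$ gives $\rn{y}=F^*_{\NN_+,\ldots,\NN_+}(\rn{x},\rn{x'})$ almost surely, which is item~\ref{lem:sepKallenberg:as}. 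The main technical obstacle is verifying this last packaging step: one must check that the composition of $H_{[k]}$ with the family $(G_A)_{A\subseteq[k]}$ respects the slice structure required by the evaluation map $\alpha\mapsto\alpha^*$ so that a single $F$ works simultaneously for every $\alpha$; this is essentially a $k$-partite analogue of the argument used in~\cite[Lemma~7.26]{Kal05}, and once the slice compositions are traced correctly it follows from measurability of each $H_A$ and $G_A$.
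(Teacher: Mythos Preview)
Your proposal is correct and follows essentially the same route as the paper: apply the separately exchangeable Aldous--Hoover theorem to the pair $(\rn{x},\rn{y})$ to obtain a latent i.i.d.\ source, then use Lemma~\ref{lem:inversion} to invert the $\rn{x}$-part against that source, and finally compose to define $F$. The only cosmetic difference is that you explicitly bundle $(\rn{x},\rn{y})$ into a single array $\rn{\widehat{x}}$ on an enlarged template before invoking Aldous--Hoover, whereas the paper applies the theorem directly to the pair, extracting separate functions $\widetilde{F}_A$ for $\rn{x}$ and $H$ for $\rn{y}$ from the same latent $\rn{\xi}$; your ``packaging'' verification at the end is precisely the coordinate-chasing the paper carries out to confirm $\alpha^*(\rn{\xi})=\widehat{G}(\alpha^*(\rn{x},\rn{x'}))$.
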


\begin{proof}
  Using the same notation as Lemma~\ref{lem:inversion} for the non-local Borel templates $\Omega^A$
  and the points $x^A$, by the separate exchangeability version of the Aldous--Hoover Theorem due to
  Hoover~\cite{Hoo79} (see also~\cite[Corollary~7.23]{Kal05}), there exist
  $\widetilde{F}_A\in\cF_{\lvert A\rvert}((\Omega')^A,\Omega^A)$ ($A\subseteq[k]$),
  $H\in\cF_k(\Omega',\Lambda)$ and a random element $\rn{\xi}$ in
  $\cE_{\NN_+,\ldots,\NN_+}(\Omega')$ such that $\rn{\xi}\sim(\mu')^{\NN_+,\ldots,\NN_+}$ and
  \begin{align}
    \rn{x}_f & = (\widetilde{F}_{\dom(f)})^*_{\NN_+,\ldots,\NN_+}(\rn{\xi}^{\dom(f)})_{f\comp\iota_{\dom(f),k}},
    \label{eq:sepx}
    \\
    \rn{y}_\alpha & = H^*_{\NN_+,\ldots,\NN_+}(\rn{\xi})_\alpha
    \label{eq:sepy}
  \end{align}
  with probability $1$ for every $f\in\widetilde{r}_k(\NN_+,\ldots,\NN_+)$ and every
  $\alpha\in\NN_+^k$.

  Using~\eqref{eq:sepx} and Lemma~\ref{lem:inversion}, there exist $G_A\in\cF_{\lvert
    A\rvert}((\Omega')^A\otimes(\Omega')^A,\Omega^A)$ and a random element $\rn{x'}$ of
  $\cE_{\NN_+,\ldots,\NN_+}(\Omega')$ such that
  $(\rn{x},\rn{x'})\sim(\mu\otimes\mu')^{\NN_+,\ldots,\NN_+}$ and
  \begin{align*}
    \rn{\xi}_f
    & =
    (G_{\dom(f)})^*_{\NN_+,\ldots,\NN_+}(\rn{x}^{\dom(f)},(\rn{x'})^{\dom(f)})_{f\comp\iota_{\dom(f),k}}
  \end{align*}
  with probability $1$ for every $f\in\widetilde{r}_k(\NN_+,\ldots,\NN_+)$. By construction,
  item~\ref{lem:sepKallenberg:indep} holds.

  We now define $F\in\cF_k(\Omega\otimes\Omega',\Lambda)$ by
  \begin{align*}
    F(x,x')
    & \df
    H(\widehat{G}(x,x')),
  \end{align*}
  where
  \begin{align*}
    \widehat{G}(x,x')_{1^A} & \df G_A(x^A,(x')^A) \qquad (A\subseteq[k])
  \end{align*}
  and $1^A\in\widetilde{r}_k(1)$ is the unique function $A\to[1]$.

  Let us prove that item~\ref{lem:sepKallenberg:as} holds. Note that for $\alpha\in\NN_+^k$, with
  probability $1$, we have
  \begin{align}\label{eq:yalpha}
    \rn{y}_\alpha
    & =
    H^*_{\NN_+,\ldots,\NN_+}(\rn{\xi})_\alpha
    =
    H(\alpha^*(\rn{\xi})).
  \end{align}
  But for $A\subseteq[k]$, with probability $1$, we have
  \begin{align}\label{eq:alphaxi}
    \alpha^*(\rn{\xi})_{1^A}
    & =
    \rn{\xi}_{\alpha\rest_A}
    =
    (G_A)^*_{\NN_+,\ldots,\NN_+}(\rn{x}^A,(\rn{x'})^A)_{\alpha\comp\iota_{A,k}}
    =
    G_A((\alpha\comp\iota_{A,k})^*(\rn{x}^A,(\rn{x'})^A)).
  \end{align}
  In turn, for $B\subseteq[\lvert A\rvert]$, with probability $1$, we have
  \begin{align*}
    (\alpha\comp\iota_{A,k})^*(\rn{x}^A,(\rn{x'})^A)_{1^B}
    & =
    (\rn{x}^A,(\rn{x'})^A)_{\alpha\comp\iota_{A,k}\rest_B}
    =
    (\rn{x},\rn{x'})_{\alpha\rest_{\iota_{A,k}(B)}}
    \\
    & =
    \alpha^*(\rn{x},\rn{x'})_{1^{\iota_{A,k}(B)}}
    =
    (\alpha^*(\rn{x})^A,\alpha^*(\rn{x'})^A)_{1^B},
  \end{align*}
  so we have $(\alpha\comp\iota_{A,k}^*)(\rn{x}^A,(\rn{x'})^A) =
  (\alpha^*(\rn{x})^A,\alpha^*(\rn{x'})^A)$ with probability $1$.

  Plugging this in~\eqref{eq:alphaxi}, we get
  \begin{align*}
    \alpha^*(\rn{\xi})_{1^A}
    & =
    G_A(\alpha^*(\rn{x})^A,\alpha^*(\rn{x'})^A)
    =
    \widehat{G}(\alpha^*(\rn{x},\rn{x'}))_{1^A},
  \end{align*}
  with probability $1$, so we have $\alpha^*(\rn{\xi}) = \widehat{G}(\alpha^*(\rn{x},\rn{x'}))$ with
  probability $1$.

  Plugging this in~\eqref{eq:yalpha}, we get
  \begin{align*}
    \rn{y}_\alpha
    & =
    H(\widehat{G}(\alpha^*(\rn{x},\rn{x'})))
    =
    F(\alpha^*(\rn{x},\rn{x'}))
    =
    F^*_{\NN_+,\ldots,\NN_+}(\rn{x},\rn{x'})_\alpha
  \end{align*}
  with probability $1$, concluding the proof.
\end{proof}

Finally, the same trick to derive Proposition~\ref{prop:agexch} from Lemma~\ref{lem:Kallenberg}
applies here to remove the extra variables from Lemma~\ref{lem:sepKallenberg} under the assumption
of locality.

\begin{proposition}\label{prop:agsepexch}
  Let $\Omega$ be a Borel $k$-partite template, let $\Upsilon$ be the interval $[0,1]$, equipped
  with the Borel $\sigma$-algebra, let $\Omega'$ be the Borel $k$-partite template given by
  $\Omega'_A\df\Upsilon$ ($A\in r(k)$), let $\Lambda$ be a non-empty Borel space and let $\rn{x}$
  and $\rn{y}$ be random elements of $\cE_{\NN_+,\ldots,\NN_+}(\Omega)$ and $\Lambda^{\NN_+^k}$,
  respectively.

  Suppose that $(\rn{x},\rn{y})$ is:
  \begin{description}
  \item[Local:] If $(U_1,\ldots,U_k)$ and $(V_1,\ldots,V_k)$ are $k$-tuples of subsets of $\NN_+$
    such that $U_i\cap V_i = \varnothing$ for every $i\in[k]$, then
    \begin{align*}
      ((\rn{x}_f)_{f\in r_k(U_1,\ldots,U_k)}, &
      (F^*_{\NN_+,\ldots,\NN_+}(\rn{x})_\alpha)_{\alpha\in\prod_{i=1}^k U_i})
      \intertext{is independent from}
      ((\rn{x}_f)_{f\in r_k(V_1,\ldots,V_k)}, &
      (F^*_{\NN_+,\ldots,\NN_+}(\rn{x})_\alpha)_{\alpha\in\prod_{i=1}^k V_i}).
    \end{align*}
  \item[Separately exchangeable:] For every $(\beta_1,\ldots,\beta_k)\in S_{\NN_+}^k$, we have
    \begin{align*}
      (\rn{x},F^*_{\NN_+,\ldots,\NN_+}(\rn{x}))
      & \sim
      (\beta^{\#}(\rn{x}), \beta^{\#}(F^*_{\NN_+,\ldots,\NN_+}(\rn{x}))),
    \end{align*}
    where the two $\beta^{\#}$ are given by~\eqref{eq:betax} and~\eqref{eq:betay}.
  \end{description}
  Suppose further that $\rn{x}\sim\mu^{\NN_+,\ldots,\NN_+}$ for some $\mu\in\Pr(\Omega)$.

  Then there exist $F\in\cF_k(\Omega\otimes\Omega',\Lambda)$ and a random element $\rn{x'}$ of
  $\cE_{\NN_+,\ldots,\NN_+}(\Omega')$ such that:
  \begin{enumerate}
  \item if $\mu'\in\Pr(\Omega')$ is given by $\mu'_A\df\lambda$ ($A\in r(k)$), where $\lambda$ is
    the Lebesgue measure, then $(\rn{x},\rn{x'})\sim(\mu\otimes\mu')^{\NN_+,\ldots,\NN_+}$;
  \item\label{prop:agsepexch:dist} we have
    \begin{align*}
      (\rn{x},\rn{y}) & \sim (\rn{x}, F^*_{\NN_+,\ldots,\NN_+}(\rn{x},\rn{x'})).
    \end{align*}
  \end{enumerate}
\end{proposition}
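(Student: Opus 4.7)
The plan is to mimic the derivation of Proposition~\ref{prop:agexch} from Lemma~\ref{lem:Kallenberg}, substituting Lemma~\ref{lem:sepKallenberg} for Lemma~\ref{lem:Kallenberg}. First I would artificially inflate the partite template to its non-local version: let $\widetilde{\Omega}$ be the non-local Borel $k$-partite template with $\widetilde{\Omega}_\varnothing \df \Upsilon$ (the unit interval) and $\widetilde{\Omega}_A \df \Omega_A$ for $A\in r(k)$, and let $\widetilde{\mu} \in \Pr(\widetilde{\Omega})$ agree with $\mu$ on non-empty coordinates and equal the Lebesgue measure $\lambda$ on $\varnothing$. Define $\rn{\widetilde{x}}$ by sampling $\rn{\widetilde{x}}_\varnothing \sim \lambda$ independently of $(\rn{x}, \rn{y})$ and setting $\rn{\widetilde{x}}_f \df \rn{x}_f$ for every $f$ with non-empty domain. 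Because the $\varnothing$-coordinate is invariant under the $\beta^{\#}$-action from~\eqref{eq:betaxnonloc}, the pair $(\rn{\widetilde{x}}, \rn{y})$ inherits separate exchangeability and $\rn{\widetilde{x}} \sim \widetilde{\mu}^{\NN_+,\ldots,\NN_+}$.

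Applying Lemma~\ref{lem:sepKallenberg} produces $\widetilde{F} \in \cF_k(\widetilde{\Omega}\otimes\widetilde{\Omega'}, \Lambda)$ and $\rn{\widetilde{x'}}$ with $(\rn{\widetilde{x}}, \rn{\widetilde{x'}}) \sim (\widetilde{\mu} \otimes \widetilde{\mu'})^{\NN_+,\ldots,\NN_+}$ and $\rn{y} = \widetilde{F}^*_{\NN_+,\ldots,\NN_+}(\rn{\widetilde{x}}, \rn{\widetilde{x'}})$ almost surely. I would then define $\rn{x'}$ by restricting $\rn{\widetilde{x'}}$ to the coordinates indexed by $r(k)$-valued $f$, so that automatically $(\rn{x}, \rn{x'}) \sim (\mu \otimes \mu')^{\NN_+,\ldots,\NN_+}$. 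To produce $F$, it suffices to find a single pair $(z, z') \in [0,1]^2$ for which $\rn{y} \sim \widetilde{F}^*_{\NN_+,\ldots,\NN_+}(z, \rn{x}, z', \rn{x'})$ and set $F(x, x') \df \widetilde{F}(z, x, z', x')$. By Fubini, such $(z, z')$ exists on a full-measure set provided $\rn{y}$ is independent of $(\rn{\widetilde{x}}_\varnothing, \rn{\widetilde{x'}}_\varnothing)$.

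The main obstacle (and the only place locality is used) is verifying this last independence. I would proceed by contradiction as in Proposition~\ref{prop:agexch}: suppose there exist finite sets $U_1, \ldots, U_k \subseteq \NN_+$ and a value $y \in \Lambda^{\prod_{i=1}^k U_i}$ such that
\[
\rn{p} \df \PP\bigl[\forall \alpha \in \textstyle\prod_{i=1}^k U_i,\ \rn{y}_\alpha = y_\alpha \bigm| \rn{\widetilde{x}}_\varnothing, \rn{\widetilde{x'}}_\varnothing\bigr]
\]
has strictly positive variance. Choose disjoint finite $V_i \subseteq \NN_+ \setminus U_i$ with $\lvert V_i \rvert = \lvert U_i \rvert$ and pick $\beta_i \in S_{\NN_+}$ with $\beta_i(U_i) = V_i$. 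Separate exchangeability of $(\rn{\widetilde{x}}, \rn{y})$, combined with the fact that $\beta^{\#}$ fixes the $\varnothing$-coordinates of $\rn{\widetilde{x}}$ and $\rn{\widetilde{x'}}$, provides a second representation for $\rn{p}$ as the analogous conditional probability on $\prod_{i=1}^k V_i$. Define the two events
\begin{align*}
E_U & \df \{\forall \alpha \in \textstyle\prod_{i=1}^k U_i,\ \widetilde{F}(\alpha^*(\rn{\widetilde{x}}, \rn{\widetilde{x'}})) = y_\alpha\}, \\
E_V & \df \{\forall \gamma \in \textstyle\prod_{i=1}^k V_i,\ \widetilde{F}(\gamma^*(\rn{\widetilde{x}}, \rn{\widetilde{x'}})) = y_{\beta^{\#,-1}(\gamma)}\}.
\end{align*}
Both $E_U$ and $E_V$ differ from $\rn{y}$-events by null sets, and conditionally on $(\rn{\widetilde{x}}_\varnothing, \rn{\widetilde{x'}}_\varnothing)$ they depend on coordinates of $(\rn{\widetilde{x}}, \rn{\widetilde{x'}})$ indexed by disjoint index-sets, hence are conditionally independent; a direct computation gives $\Var[\rn{p}] = \PP[E_U \cap E_V] - \PP[E_U]\cdot\PP[E_V]$. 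But locality of $(\rn{x}, \rn{y})$ (applied to the disjoint tuples $(U_1, \ldots, U_k)$ and $(V_1, \ldots, V_k)$) forces unconditional independence of $E_U$ and $E_V$, so $\Var[\rn{p}] = 0$, the desired contradiction.
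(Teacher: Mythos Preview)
Your proposal is correct and follows essentially the same approach as the paper's own proof: extend to the non-local template, apply Lemma~\ref{lem:sepKallenberg}, then use the locality hypothesis in a contradiction argument to show $\rn{y}$ is independent of the two $\varnothing$-coordinates. The only cosmetic difference is that the paper takes a single finite set $U\subseteq\NN_+$ (so the index set is $U^k$) and a single permutation $\sigma$ repeated $k$ times, whereas you allow distinct $U_i$ and $\beta_i$; both variants work for the same reason.
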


\begin{proof}
  Let $\widetilde{\Omega}$ and $\widetilde{\Omega'}$ be the non-local Borel $k$-templates obtained
  from $\Omega$ and $\Omega'$, respectively, by adding the space $\Upsilon$ at the coordinate
  indexed by $\varnothing$, let $\widetilde{\mu}\in\Pr(\widetilde{\Omega})$ be obtained from $\mu$
  by adding $\lambda$ to the coordinate indexed by the $\varnothing$ and let $\rn{\widetilde{x}}$ be
  obtained from $\rn{x}$ by sampling the extra coordinate $\rn{\widetilde{x}}_\varnothing$ from
  $\lambda$, independently from $(\rn{x},\rn{y})$. Separate exchangeability of $(\rn{x},\rn{y})$
  implies separate exchangeability of $(\rn{\widetilde{x}},\rn{y})$ (but not locality).

  By applying Lemma~\ref{lem:sepKallenberg}, we get a random element $\rn{\widetilde{x'}}$ of
  $\cE_{\NN_+,\ldots,\NN_+}(\widetilde{\Omega'})$ and
  $\widetilde{F}\in\cF_k(\widetilde{\Omega}\otimes\widetilde{\Omega'},\Lambda)$ such that
  \begin{align}\label{eq:sepyFtilde}
    \rn{y} & = \widetilde{F}^*_{\NN_+,\ldots,\NN_+}(\rn{\widetilde{x}},\rn{\widetilde{x'}})
  \end{align}
  with probability $1$.

  Let $\rn{x'}$ be the random element in $\cE_{\NN_+,\ldots,\NN_+}(\Omega')$ obtained from
  $\rn{\widetilde{x'}}$ by dropping the extra coordinate $\rn{\widetilde{x'}}_\varnothing$.

  We claim that it suffices to show
  \begin{align}\label{eq:sepyFtildesim}
    \rn{y} & \sim \widetilde{F}^*_{\NN_+,\ldots,\NN_+}(z,\rn{x},z',\rn{x'})
  \end{align}
  for $\lambda$-almost every $(z,z')\in[0,1]^2$. Indeed, the above implies that by setting
  $F(x,x')\df\widetilde{F}(z,x,z',x')$ for any $(z,z')\in[0,1]^2$ such that~\eqref{eq:sepyFtildesim}
  holds yields the desired result.

  In turn, to show~\eqref{eq:sepyFtildesim}, it suffices to show that $\rn{y}$ is independent from
  $(\rn{\widetilde{x}}_\varnothing,\rn{\widetilde{x'}}_\varnothing)$ (as random elements). Suppose
  not. Then there exists a finite set $U\subseteq\NN_+$ and $y\in\Lambda^{U^k}$ such that letting
  \begin{align*}
    \rn{p} & \df \PP[\forall\alpha\in U^k, \rn{y}_\alpha = y_\alpha
      \given \rn{\widetilde{x}}_\varnothing,\rn{\widetilde{x'}}_\varnothing]
  \end{align*}
  we have $\Var[\rn{p}] > 0$. By~\eqref{eq:sepyFtilde}, if we let
  \begin{align*}
    \rn{p'}
    & \df
    \PP[\forall\alpha\in U^k,
      \widetilde{F}(\alpha^*(\rn{\widetilde{x}},\rn{\widetilde{x'}})) = y_\alpha
      \given \rn{\widetilde{x}}_\varnothing,\rn{\widetilde{x'}}_\varnothing],
  \end{align*}
  then $\rn{p'}=\rn{p}$ with probability $1$, so $\Var[\rn{p'}] > 0$. Let $E_U$ be the event in the
  above.

  Let now $V\subseteq\NN_+$ be a set disjoint from $U$ with $\lvert U\rvert = \lvert V\rvert$, let
  $\sigma\in S_{\NN_+}$ be such that $\sigma(U)=V$ and let $\beta\in S_{\NN_+}^k$ be $\sigma$
  repeated $k$ times. Note that for every $\alpha\in\NN_+^k$, we have
  \begin{align*}
    \beta^{\#}(\rn{y})_\alpha
    & =
    \widetilde{F}^*_{\NN_+,\ldots,\NN_+}(\rn{\widetilde{x}},\rn{\widetilde{x'}})_{\sigma\comp\alpha}
    =
    \widetilde{F}(\alpha^*(\beta^{\#}(\rn{\widetilde{x}},\rn{\widetilde{x'}})))
    =
    \widetilde{F}^*_{\NN_+,\ldots,\NN_+}(\beta^{\#}(\rn{\widetilde{x}},\rn{\widetilde{x'}}))_\alpha,
  \end{align*}
  with probability $1$, so we get
  $\beta^{\#}(\rn{y})=\widetilde{F}^*_{\NN_+,\ldots,\NN_+}(\beta^{\#}(\rn{\widetilde{x}},\rn{\widetilde{x'}}))$
  also with probability $1$. Since
  $\beta^{\#}(\rn{\widetilde{x}},\rn{\widetilde{x'}})\sim(\rn{\widetilde{x}},\rn{\widetilde{x'}})$
  and in both sides the coordinates indexed by $\varnothing $ are equal, we get
  \begin{align*}
    (\rn{\widetilde{x}}_\varnothing,\rn{\widetilde{x'}}_\varnothing,\rn{y}) & \sim
    (\rn{\widetilde{x}}_\varnothing,\rn{\widetilde{x'}}_\varnothing,\beta^{\#}(\rn{y})).
  \end{align*}
  Thus, we have the following alternative formula for $\rn{p}$ that holds with probability $1$:
  \begin{align*}
    \rn{p}
    & =
    \PP[\forall\alpha\in U^k, \beta^{\#}(\rn{y})_\alpha = y_\alpha
      \given \rn{\widetilde{x}}_\varnothing,\rn{\widetilde{x'}}_\varnothing]
    \\
    & =
    \PP[\forall\gamma\in V^k, \rn{y}_\gamma = y_{\sigma^{-1}\comp\gamma}
      \given \rn{\widetilde{x}}_\varnothing,\rn{\widetilde{x'}}_\varnothing],
  \end{align*}
  where the second equality follows from the change of variables
  $\gamma\df\sigma\comp\alpha$. By~\eqref{eq:sepyFtilde}, this gives the following alternative
  formula for $\rn{p'}$ that holds with probability $1$:
  \begin{align*}
    \rn{p'}
    & =
    \PP[\forall\gamma\in V^k,
      \widetilde{F}(\gamma^*(\rn{\widetilde{x}},\rn{\widetilde{x'}})) = y_{\sigma^{-1}\comp\gamma}
      \given \rn{\widetilde{x}}_\varnothing,\rn{\widetilde{x'}}_\varnothing].
  \end{align*}
  Let $E_V$ be the event in the above and note that the two formulas for $\rn{p'}$ read as
  \begin{align*}
    \rn{p'}
    & =
    \PP[E_U\given\rn{\widetilde{x}}_\varnothing,\rn{\widetilde{x'}}_\varnothing]
    =
    \PP[E_V\given\rn{\widetilde{x}}_\varnothing,\rn{\widetilde{x'}}_\varnothing],
  \end{align*}
  so we get
  \begin{equation}\label{eq:sepvarp'}
    \begin{aligned}
      \Var[\rn{p'}]
      & =
      \EE\bigl[
        \PP[E_U\given\rn{\widetilde{x}}_\varnothing,\rn{\widetilde{x'}}_\varnothing]
        \cdot
        \PP[E_V\given\rn{\widetilde{x}}_\varnothing,\rn{\widetilde{x'}}_\varnothing]
        \bigr]
      -
      \EE\bigl[\PP[E_U\given{\rn{\widetilde{x}}_\varnothing,\rn{\widetilde{x'}}_\varnothing}]\bigr]
      \cdot
      \EE\bigl[\PP[E_V\given\rn{\widetilde{x}}_\varnothing,\rn{\widetilde{x'}}_\varnothing]\bigr]
      \\
      & =
      \EE\bigl[
        \PP[E_U\cap E_V\given\rn{\widetilde{x}}_\varnothing,\rn{\widetilde{x'}}_\varnothing]
        \bigr]
      - \PP[E_U]\cdot\PP[E_V]
      \\
      & = \PP[E_U\cap E_V] - \PP[E_U]\cdot\PP[E_V],
    \end{aligned}
  \end{equation}
  where the second equality follows since the definition of the events above clearly implies that
  they are conditionally independent given
  $\rn{\widetilde{x}}_\varnothing,\rn{\widetilde{x'}}_\varnothing$ (as when given
  $\rn{\widetilde{x}}_\varnothing,\rn{\widetilde{x'}}_\varnothing$, the events are completely
  determined by sets of coordinates of $\rn{\widetilde{x}}$ and $\rn{\widetilde{x'}}$ that are
  disjoint and the coordinates of $(\rn{\widetilde{x}},\rn{\widetilde{x'}})$ are independent).

  We now claim that $E_U$ and $E_V$ are independent events. Indeed, by~\eqref{eq:sepyFtilde}, except
  for a probability zero set, $E_U$ is equal to
  \begin{align*}
    \forall\alpha\in U^k, \rn{y}_\alpha = y_\alpha
  \end{align*}
  and $E_V$ is equal to
  \begin{align*}
    \forall\alpha\in V^k, \rn{y}_\alpha = y_{\sigma^{-1}\comp\alpha}
  \end{align*}
  so locality implies these two events are independent. But putting this together
  with~\eqref{eq:sepvarp'} contradicts $\Var[\rn{p'}] > 0$.
\end{proof}

\begin{remark}
  Again, similarly to Remark~\ref{rmk:Kallenberg}, a slightly stronger but standard probability
  argument gives the strengthened version of item~\ref{prop:agsepexch:dist} in
  Proposition~\ref{prop:agsepexch} that is the analogue of item~\ref{lem:sepKallenberg:as} in
  Lemma~\ref{lem:sepKallenberg}:
  \begin{align*}
    \rn{y} & = F^*_{\NN_+,\ldots,\NN_+}(\rn{x},\rn{x'})
  \end{align*}
  with probability $1$.
\end{remark}

\section{Bayes predictors}
\label{sec:Bayes}

Since in the agnostic setting the adversary is not picking a hypothesis, a very natural
question is what is the hypothesis that gives the smallest loss. Similarly to classic agnostic PAC
learning, this is given by the Bayes predictors defined below.

\begin{definition}[Bayes predictor]
  Let $\Omega$, $\Omega'$ be ($k$-partite, respectively) Borel templates, let $\Lambda$ be a finite
  non-empty Borel space, let $\mu\in\Pr(\Omega)$ and $\mu'\in\Pr(\Omega')$, let
  $\ell\colon\cE_k(\Omega)\times\Lambda^{S_k}\times\Lambda^{S_k}\to\RR_{\geq 0}$ be a $k$-ary loss
  ($\ell\colon\cE_1(\Omega)\times\Lambda\times\Lambda\to\RR_{\geq 0}$ be a $k$-partite loss,
  respectively) and let $F\in\cF_k(\Omega\otimes\Omega',\Lambda)$.

  We say that $B\in\cF_k(\Omega,\Lambda)$ is a \emph{Bayes predictor} for $(\mu,\mu',F,\ell)$ if
  \begin{align*}
    \PP_{\rn{x}\sim\mu^k}[
      \forall y\in\Lambda^{S_k},
      \EE_{\rn{x'}\sim(\mu')^k}[\ell(\rn{x}, B^*_k(\rn{x}), F^*_k(\rn{x},\rn{x'}))]
      \leq
      \EE_{\rn{x'}\sim(\mu')^k}[\ell(\rn{x}, y, F^*_k(\rn{x},\rn{x'}))]
    ] & = 1
  \end{align*}
  in the non-partite case and
  \begin{align*}
    \PP_{\rn{x}\sim\mu^k}[
      \forall y\in\Lambda,
      \EE_{\rn{x'}\sim(\mu')^1}[\ell(\rn{x}, B(\rn{x}), F(\rn{x},\rn{x'}))]
      \leq
      \EE_{\rn{x'}\sim(\mu')^1}[\ell(\rn{x}, y, F(\rn{x},\rn{x'}))]
    ] & = 1
  \end{align*}
  in the partite case.

  Intuitively, a Bayes predictor on input $x$ outputs the label $y$ that minimizes the conditional
  loss given $\rn{x}=x$.

  For the $0/1$-loss, the equations above are equivalent to
  \begin{align*}
    \PP_{\rn{x}\sim\mu^k}[
      \forall y\in\Lambda^{S_k},
      \PP_{\rn{x'}\sim(\mu')^k}[B^*_k(\rn{x}) = F^*_k(\rn{x},\rn{x'})]
      \geq
      \PP_{\rn{x'}\sim(\mu')^k}[y = F^*_k(\rn{x},\rn{x'})]
    ] & = 1
  \end{align*}
  in the non-partite case and
  \begin{align*}
    \PP_{\rn{x}\sim\mu^k}[
      \forall y\in\Lambda,
      \PP_{\rn{x'}\sim(\mu')^1}[B(\rn{x}) = F(\rn{x},\rn{x'})]
      \geq
      \PP_{\rn{x'}\sim(\mu')^1}[y = F(\rn{x},\rn{x'})]
    ] & = 1
  \end{align*}
  in the partite case.
\end{definition}

Recall from Theorems~\ref{thm:kPAC} and~\ref{thm:kPACkpart} that the natural agnostic version of a
non-agnostic loss $\ell$ is given by
\begin{align*}
  \ell^{\ag}(H,x,y) & \df \ell(x,H^*_k(x),y)
  \qquad (H\in\cH, x\in\cE_k(\Omega), y\in\Lambda^{S_k})
\end{align*}
in the non-partite case and by
\begin{align*}
  \ell^{\ag}(H,x,y) & \df \ell(x, H(x), y)
  \qquad (H\in\cH, x\in\cE_1(\Omega), y\in\Lambda)
\end{align*}
in the partite case.

It is straightforward to check that if $B\in\cF_k(\Omega,\Lambda)$ is a Bayes predictor for
$(\mu,\mu',F,\ell)$, then for every $H\in\cF_k(\Omega,\Lambda)$, we have
\begin{align*}
  L_{\mu,\mu',F,\ell^{\ag}}(B) & \leq L_{\mu,\mu',F,\ell^{\ag}}(H).
\end{align*}
One of the earliest results of learning theory due to Cover and Hart~\cite{CH67} is the fact that
under mild assumptions the nearest neighbor algorithm yields a hypothesis that is not far from the
optimal value (which is given by the Bayes predictor). We state this result below\footnote{For simplicity,
we add the assumption of completeness of the underlying metric space, but we point out that the
original result only requires that some version of a Radon--Nikodym derivative exists with the
correct continuity properties.} recalling that in the unary agnostic distributions are more easily
described as a distribution over $\Omega\times\Lambda$ rather than using an extra Borel template.

\begin{theorem}[Cover--Hart~\cite{CH67}]\label{thm:Bayes}
  Let $\Omega=(X,d)$ be a separable complete metric space equipped with the Borel $\sigma$-algebra,
  let $\Lambda$ be a finite Borel space with $\lvert\Lambda\rvert\geq 2$ and let $\nu$ be a
  probability measure on $\Omega\times\Lambda$.

  Let $\mu\in\Pr(\Omega)$ be the marginal of $\nu$ on $\Omega$ and for each $i\in\Lambda$, let
  $\theta_i\in\Pr(\Omega)$ be given by $\theta_i(A)\df\nu(A\times\{i\})$, i.e., $\theta_i$ is the marginal of
  the conditional distribution of $\nu$ given that the second coordinate is $i$.

  Suppose further that there are versions of the Radon--Nikodym derivatives
  $\frac{d\theta_i}{d\mu}$ such that the set
  \begin{align*}
    \left\{x\in\Omega
    \;\middle\vert\;
    \mu(\{x\}) > 0
    \lor
    \forall i\in\Lambda,
    \frac{d\theta_i}{d\mu}
    \text{ is continuous at } x
    \right\}
  \end{align*}
  has $\mu$-measure $1$, that is, $\mu$-almost every point of $\Omega$ is either a continuity point
  of all Radon--Nikodym derivatives or an atom of $\mu$.

  Consider the nearest neighbor classifier defined as the function
  \begin{align*}
    \cA\colon \bigcup_{m\in\NN_+} (X^m\times Y^m)
    & \to
    Y^X
  \end{align*}
  given by
  \begin{align*}
    A(x,y)(z) & \df y_{i(x)} & \qquad (x\in X^m, y\in Y^m),
  \end{align*}
  where
  \begin{align*}
    i(x) & \df \argmin\{i\in[m] \mid d(x_i,z)\}
  \end{align*}
  (breaking ties arbitrarily).

  Suppose further that $B$ is a Bayes predictor for $\nu$ and let
  \begin{align*}
    R^* & \df L_{\nu,\ell_{0/1}}(B), &
    R & \df \lim_{m\to\infty} \EE_{(\rn{x},\rn{y})\sim\nu^m}[L_{\nu,\ell_{0/1}}(A(\rn{x},\rn{y}))].
  \end{align*}

  Then we have
  \begin{align*}
    R^*
    & \leq
    R
    \leq
    R^*\cdot\left(2 - \frac{\lvert\Lambda\rvert}{\lvert\Lambda\rvert-1}\cdot R^*\right).
  \end{align*}
\end{theorem}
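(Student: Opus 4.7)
\textbf{Proof plan for Theorem~\ref{thm:Bayes}.} The lower bound $R^* \leq R$ is immediate from the definition of Bayes predictor: the conditional expected loss of $B$ given $\rn{z}$ minimizes over all labels, so $B$ minimizes total $0/1$-loss among all measurable classifiers; hence $L_{\nu,\ell_{0/1}}(B) \leq L_{\nu,\ell_{0/1}}(\cA(\rn{x},\rn{y}))$ for every sample, and we may take expectations and then $\lim_{m\to\infty}$. The bulk of the work is in the upper bound, which I would approach in three steps.

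First, I would reduce to a pointwise analysis through posterior probabilities. Let $\eta_i(z) \df \frac{d\theta_i}{d\mu}(z)$ be the continuous (or atomic) version of the Radon--Nikodym derivative guaranteed by the hypothesis; note $\sum_i \eta_i(z) = 1$ for $\mu$-a.e.\ $z$. The Bayes predictor satisfies $B(z) \in \argmax_i \eta_i(z)$ $\mu$-a.e., so $R^* = \EE_{\rn{z}\sim\mu}[1 - \max_i \eta_i(\rn{z})]$. Setting $r^*(z) \df 1 - \max_i \eta_i(z)$, the goal reduces to showing
\begin{align*}
R = \EE_{\rn{z}\sim\mu}\Bigl[1 - \sum_{i\in\Lambda}\eta_i(\rn{z})^2\Bigr].
\end{align*}

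Second, I would prove the above identity for $R$. For a fixed test point $z$ in the support of $\mu$ that is either an atom or a continuity point of all $\eta_i$, sampling $(\rn{x}_1,\rn{y}_1),\ldots,(\rn{x}_m,\rn{y}_m)$ i.i.d.\ from $\nu$ and letting $\rn{j}=i(\rn{x})$ index the nearest neighbor to $z$, the key claim is that as $m\to\infty$, $d(\rn{x}_{\rn{j}}, z) \to 0$ in probability (using separability and the fact that $z$ lies in the support of $\mu$: for every $\epsilon>0$, $\mu(B(z,\epsilon))>0$, so at least one of the $m$ samples falls in $B(z,\epsilon)$ with probability $1 - (1-\mu(B(z,\epsilon)))^m \to 1$). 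Then, conditional on $\rn{x}_{\rn{j}}$, the label $\rn{y}_{\rn{j}}$ is distributed according to the posterior at $\rn{x}_{\rn{j}}$, which by continuity of $\eta_i$ (or atomicity) converges to $\eta_i(z)$. Hence the probability of misclassification at $z$ tends to $\sum_i \eta_i(z)(1 - \eta_i(z)) = 1 - \sum_i \eta_i(z)^2$. Integrating via bounded convergence and Fubini (noting that all conditional losses are bounded by $1$) yields the formula for $R$.

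Third, I would establish the pointwise inequality and conclude via Jensen. Writing $\eta^*(z) = 1 - r^*(z)$, by Cauchy--Schwarz applied to the $|\Lambda|-1$ non-maximal entries, $\sum_{i\neq i^*}\eta_i(z)^2 \geq (1-\eta^*(z))^2/(|\Lambda|-1)$, giving
\begin{align*}
1 - \sum_i \eta_i(z)^2 \leq r^*(z)\Bigl(2 - \tfrac{|\Lambda|}{|\Lambda|-1}\,r^*(z)\Bigr).
\end{align*}
Since the map $t\mapsto t(2 - \tfrac{|\Lambda|}{|\Lambda|-1}\,t)$ is concave on $[0,1]$, Jensen's inequality applied to $\rn{z}\sim\mu$ (using $\EE[r^*(\rn{z})]=R^*$) yields $R \leq R^*(2 - \tfrac{|\Lambda|}{|\Lambda|-1}\,R^*)$, as required.

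The main obstacle is the convergence argument in the second step: verifying that the nearest neighbor's label distribution converges pointwise to the posterior $\eta(\cdot)$ requires carefully separating the atomic and continuous cases (atoms give exact convergence once one of the samples hits $z$, which happens with probability tending to $1$), invoking separability to ensure balls around typical $z$ have positive $\mu$-mass, and handling measurability of the nearest neighbor index $\rn{j}$ under arbitrary tie-breaking. The remaining steps are clean algebra and standard integration-theoretic maneuvers.
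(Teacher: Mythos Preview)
The paper does not prove this theorem; it is stated as a classical result due to Cover and Hart~\cite{CH67} and simply quoted (with the mild added hypothesis of completeness of the metric space) for later application to the high-arity setting. Your proposal is a faithful reconstruction of the standard Cover--Hart argument: convergence of the nearest neighbor to the test point (via positivity of $\mu$-mass of balls around support points), continuity of the posteriors to identify the limiting conditional risk as $1-\sum_i\eta_i(z)^2$, the Cauchy--Schwarz bound on the non-maximal entries, and Jensen for the concave envelope. There is nothing to compare against in the paper itself, and your outline is correct.
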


In plain English, the nearest neighbor classifier labels each point by the label of the nearest
point in the sample and the theorem above says that as long as the corresponding Radon--Nikodym
derivatives are reasonably continuous, then the nearest neighbor algorithm is guaranteed to
asymptotically achieve a total loss that is at most twice larger than that of a Bayes predictor (in
fact, even a bit better than twice).

The theorem above can be applied directly to the high-arity setting: since our Borel ($k$-partite,
respectively) templates are built up from Borel spaces, each $\Omega_i$ ($\Omega_A$ in the partite
case) can be endowed with a separable complete metric $d_i$ ($d_A$, respectively) generating the
underlying $\sigma$-algebra, which in turn yields a natural metric on the space $\cE_k(\Omega)$:
\begin{align*}
  d(x,x') & \df \sum_{A\in r(k)} d_{\lvert A\rvert}(x_A, x'_A)
\end{align*}
and a natural metric on the space $\cE_1(\Omega)$ in the partite case:
\begin{align*}
  d(x,x') & \df \sum_{f\in r_k(1)} d_{\dom(f)}(x_f,x'_f)
\end{align*}

Suppose now that we are given an extra Borel ($k$-partite, respectively) template $\Omega'$ and
$F\in\cF_k(\Omega\otimes\Omega',\Lambda)$, where $\Lambda$ is a finite Borel space with
$\lvert\Lambda\rvert\geq 2$, and probability measures $\mu\in\Pr(\Omega)$ and $\mu'\in\Pr(\Omega')$,
then we can let $\nu\in\Pr(\cE_k(\Omega)\times\Lambda^{S_k})$ be the distribution of
$(\rn{x},F^*_k(\rn{x},\rn{x'}))$ where $(\rn{x},\rn{x'})\sim(\mu\otimes\mu')^k$
($\nu\in\Pr(\cE_1(\Omega)\times\Lambda)$, respectively, as the distribution of
$(\rn{x},F(\rn{x},\rn{x'}))$ where $(\rn{x},\rn{x'})\sim(\mu\otimes\mu')^1$ in the partite
case). For each $i\in\Lambda^{S_k}$, we can define the measure $\theta_i$ on $\cE_k(\Omega)$ (in the
partite case, $i$ is taken in $\Lambda$ and the measure $\theta_i$ is on $\cE_1(\Omega)$) by
$\theta_i(A)\df\nu(A\times\{i\})$.

If we assume that there exist versions of the Radon--Nikodym derivatives $\frac{d\theta_i}{d\mu}$
such that the set
\begin{align*}
  \left\{x\in\cE_k(\Omega)
  \;\middle\vert\;
  \mu^k(\{x\}) > 0
  \lor
  \forall i\in\Lambda^{S_k},
  \frac{d\theta_i}{d\mu}
  \text{ is continuous at } x
  \right\}
\end{align*}
has $\mu^k$-measure $1$ (or the analogous set in the partite case), then we can apply
Theorem~\ref{thm:Bayes} above directly to conclude that a nearest neighbor algorithm also
asymptotically attains at most twice the total loss (with respect to $\ell_{0/1}$) as any Bayes
predictor.

However, let us point out that the high-arity nearest neighbor algorithm that naturally comes out of
Theorem~\ref{thm:Bayes} needs i.i.d.\ points from $\nu$, so it should ignore most of the sample
information to avoid correlation in a similar manner of a direct application of classic PAC to
high-arity PAC. As mentioned in Section~\ref{subsec:nearestneighbor}, it is then natural to ask if
one can take advantage of structured correlation of high-arity to achieve a better asymptotic
multiplicative bound against Bayes predictors.

\printbibliography

\end{document}